\newtheorem{theorem}{Theorem}
\newtheorem{proposition}{Proposition}
\newtheorem{definition}{Definition}
\newtheorem{lem}{Lemma}
\newtheorem{conj}{Conjecture}
\newtheorem{corollary}{Corollary}
\newtheorem{rem}{Remark}
\definecolor{dkgreen}{rgb}{0,0.6,0}
\definecolor{gray}{rgb}{0.5,0.5,0.5}
\definecolor{mauve}{rgb}{0.58,0,0.82}
\def\ci{\perp\!\!\!\perp}
\tiny\color{gray},  
\newcommand{\HRule}{\rule{\linewidth}{0.5mm}} 
\begin{document}
\begin{titlepage}

\begin{center}



\HRule \\[0.4cm]
{ \huge \bfseries Incorporating Type II Error Probabilities from Independence Tests into Score-Based Learning of Bayesian Network Structure}\\[0.4cm]

\HRule \\[1.5cm]


\newenvironment{texto}{%
    \begin{varwidth}[t]{0.8\textwidth}
	  }{%
	    \end{varwidth}
		  }
\hskip -1.2in
\begin{texto}
\large
\emph{Authors:}\\
Eliot \textsc{Brenner}, eliotpbrenner@gmail.com\\
David \textsc{Sontag}, dsontag@cs.nyu.edu
\end{texto}

\vspace{3\baselineskip}
\begin{texto}
\textsc{Abstract.}  We give a new consistent scoring function for structure learning of Bayesian
networks.
In contrast to traditional approaches to score-based structure
learning, such as BDeu or MDL, the complexity penalty that we propose
is data-dependent and is given by the probability that a conditional
independence test correctly shows that an edge cannot exist.
What really distinguishes this new scoring function from earlier
work is that it has the property of becoming computationally
{\em easier} to maximize as the amount of data increases.
We prove a polynomial sample complexity result, showing that maximizing
this score is guaranteed to correctly learn a structure
with no false edges and a distribution close to the generating distribution, 
whenever there exists a
Bayesian network which is a perfect map for the data generating
distribution. 
Although the new score can be used with any search algorithm, in \cite{brennerSontagUAI} 
we have given empirical results showing that it is particularly effective
when used together with a linear programming relaxation approach to
Bayesian network structure learning.  The present paper contains all details of the proofs
of the finite-sample complexity results in \cite{brennerSontagUAI} as well as detailed
explanation of the computation of the certain error probabilities called $\beta$-values,
whose precomputation and tabulation is necessary for the implementation of the algorithm
in \cite{brennerSontagUAI}.
\end{texto}


\vfill

{\large \today}

\end{center}

\end{titlepage}

\title[Learning Bayesian Network Structure Using $\beta$-values]{Incorporating $\beta-values from Independence Tests into Score-Based Learning of Bayesian Network Structure}

\tableofcontents

\cleardoublepage
\listoffigures

\printnomenclature[1in]

\chapter{Introduction}  
Bayesian networks are graphical representations
which encapsulate relations of causation and correlation between different observables
or hidden factors of a complex system.  The purpose of such a representation is to facilitate both conceptual
understanding and automated probabilistic reasoning (prediction) concerning
such systems.  

Let $\mathcal{X}=(X_1,X_2,\ldots, X_n)$ be a collection
of random variables representing such observables or hidden factors.
We will concern ourselves with the case when each $X_i$ is discrete,
that is takes values in some finite set $\mathrm{Val}(X_i)$.  For any probability
distribution $P$ over $\mathcal{X}$ we can write the joint probability as a product
of conditional probabilities
\begin{equation}\label{eqn:chainRule}
P(X_1=x_1,\dots, X_n=x_n) = P(X_1 = x_1)\prod_{\nu =2}^n P(X_\nu = x_\nu | X_{1}=x_1,
\ldots, X_{\nu-1}=x_{\nu-1}).
\end{equation}
The factorization \eqref{eqn:chainRule} is called the \textbf{chain rule}.
In the most general probabilistic model for $\mathcal{X}$, the number of parameters involved is exponential
in the number of variables $n$.  Since it is
common to have $n$ in the hundreds or thousands, this makes basic tasks such as parameter estimation impractical.  
Perhaps more fundamentally, such an overly complex model gives no insight
into the structural relationships between the factors represented by the $X_i$.
In many real-world systems modeled by collections of random variables---especially those considered in artificial 
intelligence such as speech recognition, natural language processing, biological
and medical networks---even when the total number of factors (variables)
is large (e.g. in the thousands), each factor interacts \textit{directly} with only a handful of other factors in the system.  Thus,
each factor turns out to be (at least conditionally) independent of the vast majority of other
factors in the system, and a sparse representation of the system is preferable.  

Bayesian network representations of such systems
simultaneously capture these conditional independence relationships and make
the inference tasks we wish to carry out more tractable.  They do this by eliminating
a large proportion of the conditioning variables within each the conditional probability distribution factor appearing the chain rule \eqref{eqn:chainRule}.   

Formally, we may define a Bayesian
network over $\mathcal{X}$ as a pair $(G,P)$ where $G=(V,E)$ is a directed
acyclic graph (DAG), satisfying the following conditions: the nodes $V$ correspond to the $X_i\in\mathcal{X}$
and the directed edges $E$ correspond to direct influence among the nodes; 
$P$ is a probability distribution over $\mathcal{X}$ such that
all the correlations observed in $P$ are ``represented" by the influence
relations shown in $G$.  We have to give a more precise explanation of the statement about $P$ and $G$.
The most simple-minded interpretation of the DAG representation $G$ would be that pairs of random variables
not directly connected by an edge in $G$ are independent in $P$.  This interpretation is \textit{not}
correct. Rather, the correct interpretation of the DAG in terms of conditional independence
is given by the so-called \textbf{local Markov property}, which we state as follows:
\begin{quotation}
 \textbf{Local Markov Property of a Bayesian Network:} every variable
is \textit{conditionally} independent of its \textit{non-descendants} given its \textit{parents}.
\end{quotation}
It is not difficult
to see that the following is equivalent to the local Markov property: if
$X_1,\ldots, X_n$ represents topological ordering of the $X_i$
with respect to $G$, we may rewrite the chain rule \eqref{eqn:chainRule}, omitting
all conditioning factors $X_j$ except those which are parents of $X_{\nu}$:
\begin{equation}\label{eqn:chainRuleParents}
P(X_1=x_1,\dots, X_n=x_n) = \prod_{\nu =1}^n P(X_\nu = x_\nu | X_j=x_j\;\text{for all}\,
X_j\in\mathrm{Pa}_G(X_\nu)).
\end{equation}
Here $\mathrm{Pa}_G(X_\nu)$ denotes the parents of $X_\nu$
in the DAG $G$.

In general, a BN G is called an \textbf{independence map} (or \textbf{I-map}) for a distribution $P$ if all the (conditional) independence relationships
implied by $G$ are present in $P$ (\textit{i.e.}, $G$ is consistent with $P$); 
in that case case, $G$ is also called a \textbf{perfect map} (or \textbf{P-map}) for $P$
if the conditional independence relationships implied by $G$ are the only ones present in $P$ ($G$ is consistent with $P$ and
no model simpler than $G$ is consistent with $P$). 

We are concerned with the problem of learning the structure of Bayesian networks.
Chapters 16--18 of \cite{koller} give a very general framework for considering
this problem and an overview of the progress in this already well-studied problem.
We will not attempt to summarize all of the relevant literature, 
but here and in the comments below concerning
relations of our results to the literature, we will explain why
the problem is both difficult and interesting. The simplest case in which the problem of
learning the BN structure from data can arise is when we have two random variables,
which we will call $X_A$ and $X_B$.  It is not difficult to see that there are only two
nonequivalent BN structures:
\[
\begin{aligned}
G_0:& X_A\qquad X_B\; \text{(``disconnected")},\\
G_1:& X_A\longrightarrow X_B\; \text{(``connected")}.
\end{aligned}
\]
A word about the notion of equivalence of BN structures: 
changing the direction of the arrow changes the DAG $G_1$, but not the probability
distributions which can be represented by $G_1$.  This substantiates the statements that
all connected BN structures on $V=\{X_A,X_B\}$ are equivalent to the one, $G_1$, which we have depicted.
The problem of learning the structure from data in this instance is as follows:
devise a decision procedure which, given a sequence $\omega_N$ of $N$ independent samples (i.e., observations) 
of the joint distribution, returns $G_0$ in the case that $X_A$ and $X_B$ are independent
in the generating distribution, and returns $G_1$ in the case that $X_A$ and $X_B$ are dependent
in the distribution.   As in all learning problems, we cannot avoid the possibility of error,
but seek to bound the error probability by some $\delta>0$.  In other words,
in this case, the structure learning problem is identical to one case of \textit{hypothesis
testing}, a well-studied and classic problem in statistics.  

Before discussing the methods in any detail we note that even in this simplest of cases
a number of the subtleties of the situation are evident.  First, there are two different types of
errors: learning the network $G_1$ when the underlying network in $G_0$ and learning
the network $G_0$ when the underlying network is $G_1$.  Note that the first situation
deserves to be treated differently from the second, because whereas the independence relationships
implied by $G_1$ (namely, $\emptyset$) are \textit{consistent} with the model $G_0$, the independence
relationships implied by $G_0$ ($\{X_A\ci X_B\}$) are \textit{not consistent} with the model $G_1$.   In the terminology
of Bayesian networks, in the case the underlying network is $G_0$ and we learn a dependent distribution
(corresponding to $G_1$), we have learned an I-map of $G_0$ which is not a P-map.  This is
strictly speaking a model which is consistent with the data, but nevertheless a model which is more complex than necessary
(violating Occam's Razor).   In the case the underlying network is $G_1$ and we learn an independent distribution (corresponding to $G_0$) we have learned a distribution which is not even an I-map of the true network, which is a more clear-cut case of error.
In light of these consideration we clarify the scope of the problem considered in this paper:
\begin{quotation}
\textit{In this paper, we assume $\omega_N$ is a sequence of observations
of the random variables $\mathcal{X}$ generated i.i.d.\ from an unknown
Bayesian network $(G,P)$, with $P$ a probability distribution over $\mathcal{X}$
(no hidden variables or unobserved factors).  We interpret the problem of Bayesian
structure learning as recovering from $\omega_N$ a perfect map $G$ for $P$. } 
\end{quotation}
Thus, learning a model which is an I-map but not a P-map of the underlying generating Bayesian network (e.g. $G_1$
in the case the samples are being generated from $G_0$) is counted as an error in our framework.  Depending on the practical application, the experimenter may choose
to count this as a less ``costly" error than the error of learning a network which is not even an I-map.

So far, by considering only the very simplest case of a two-node network, we have avoided confronting
a major source of complexity in the BN structure learning task.  In the two-node case the Bayesian
network structure is determined (up to equivalence of BN's) entirely by the \textit{skeleton} $\mathrm{Skel}(G)$ (undirected graph
obtained by making all edges of the DAG $G$ undirected).  Already in the case of three nodes, this is no longer true.
For example, the following structures, 
\[
G: X_A\longleftarrow X_B\longrightarrow X_C\; \text{(``common cause")}
\]
and
\[
G': X_A\longrightarrow X_B  \longleftarrow X_C \; \text{(``common effect")},
\]
share the same (connected) skeleton $X_A\text{---}X_B\text{---}X_C$.  Yet they are definitely
\textit{not} equivalent because in the case of case of the ``common cause" network, we have
$X_A\not\ci X_C$, but $X_A\ci X_C | X_B$, whereas for the ``common effect" network, the situation is exactly reversed, which
is to say that $X_A\not\ci X_C | X_B$ but $X_A\ci X_C$.  (Intuitively, when $B$ is a common cause of $A$ and $C$,
observing $B$ decouples, or renders independent, the effects $A$ and $C$ in our probability estimates; when $B$ is a common effect of independent causes $A$ and $C$,
observing $B$ couples, or renders dependent, the causes $A$ and $C$ in our probability estimates). So even if we had an ``oracle"
which handed us the $\mathrm{Skel}(G)$, we would still need to perform additional (conditional) independence tests to distinguish
$G$ from the nonequivalent network $G'$.  Some methods break down the structure learning
problem into the problem of learning
the $\mathrm{Skel}(G)$, followed by the problem of orienting the edges of the DAG correctly, but we do not. 
Terminologically, the structure seen in $G'$ is called (for obvious reasons) a ``v-structure''.
The possible appearance of ``v-structures" at many locations of the DAG $G$ is responsible for much of the complexity of 
the edge-orientation task in this learning scheme.
 
By concentrating on these small-$n$ examples, we would not want to give the reader the impression that a scalable strategy for learning the structure
of Bayesian networks is to enumerate \textit{all} the possible DAG's on the vertices and to sift through them in succession until we find the one best-fitting the data.
For one thing, there are too many such structures: the number $a_n$ of DAG's on $n$ vertices is given \cite{robinson1977counting} by the recurrence relation
\[
a_n=\sum_{k=1}^n(-1)^{k-1} \binom {n} {k}2^{k(n-k)}a_{n-k},
\]
and $a_n$ has exponential growth in $n$.  In fact it can be shown (\cite{Chickering96lns} and \cite{Dasgupta:UAI99}) that the problem of BN structure learning is NP-complete, 
and moreover this remains true even if we restrict the space of BN's to be considered to those of bounded in-degree.  
Further, the result of a single independence test may constrain the space of possible BN's
in a rather complicated way, and the naive way of iterating through the structures to look for the right one does not
take advantage of this.  With a more sophisticated strategy one can achieve a running time of much better than exponential in most cases (at least if the
goal of learning the ``correct structure" is relaxed in certain ways, as we will describe).

Structure learning in Bayesian networks is already a well-studied problem,
and there are two classes of techniques that are frequently used.
The first approach is to construct a scoring function which assigns
a value to every possible structure, and then to find the structure which
maximizes the score \cite{Lam_MDL94,heckerman}. The scores measure
how well a Bayesian network with a given structure can fit the data.
The scores also include a complexity penalty which biases toward structures
with fewer edges. The simplest type of score built out of these general principles is the 
BIC (Bayesian Information Criterion) score:
\begin{equation}\label{eqn:BICScore}
S_{\psi_{1}}(\omega_{N},G)=LL(\omega_{N}|G)-\psi_{1}(N)\cdot|G|.
\end{equation}
Here $LL(\omega_N|G)$ is the log-likelihood
of the data given $G$, $|G|$ is the number of parameters and the most frequent choice of \textit{weighting} function is \[\psi_1(N):=\frac{\log N}{2},\]
in which case the score is also known as the \textit{minimum description length} (MDL) score.
This choice of $\psi_1(N)$ has a theoretical justification in terms of Bayesian probability (see, e.g., p. 802 of 
\cite{koller}).  If no assumptions are made about the data generating
distribution, finding the structure which maximizes the score is known
to be NP-hard \cite{Chickering96lns, Chickering:JMLR04, Dasgupta:UAI99}.
Many heuristic algorithms have been proposed for maximizing this score,
such as greedy hill-climbing \cite{Chickering02JMLRa, friedman99} and, 
more recently, by formulating the structure learning problem
as an integer linear program and solving using branch-and-cut \cite{Cussens11, JaaSonGloMei_aistats10}.

The second approach is based on statistical hypothesis tests, where
one seeks conditional independencies that appear to hold true in the
observed data. Using these, one can constrain the space of Bayesian
networks that could have generated the data.  Such
approaches are commonly referred to as \textit{constraint-based approaches}:
the results of the tests of conditional independence provide the logical
``constraints" on the model.  If we make the key assumption that each variable has at most a fixed number of parents $d$, then this approach can
yield an algorithm for structure learning, see \cite{spirtes,PearlVerma},
which runs in polynomial time in the parameter $n=\mathrm{Card}(V)$.
However, this approach has a number of drawbacks: difficulty setting
thresholds, propagation of errors, and inconsistencies. It is futhermore
difficult to use this approach together with other prior knowledge
about the network structure that we are attempting to learn.

Our approach combines the two main approaches to structure learning of \linebreak Bayesian
networks in a novel way.  In keeping with the approach
via statistical independence testing, we view the structure
learning problem as \emph{statistical recovery}, that of recovering
the true underlying structure from samples from its distribution.
In keeping with the approach via score functions, we incorporate the statistical
hypothesis tests into our framework via certain \textit{additional terms} added onto the BIC
score \eqref{eqn:BICScore}. The \textit{additional terms} help the score function
rule out incorrect skeletons and hone in more quickly on the correct skeleton 
than the BIC score (see \cite{brennerSontagUAI} for experimental evidence).  In contrast, the log-likelihood
and complexity penalty terms from the BIC score act as a Bayesian prior: 
with small sample size, they prevent the sparsity boost from
having undue influence, and with large sample size, they select a model which is
``close" to the true $G$ among those models sharing $\mathrm{Skel}(G)$ (the \textit{additional terms}
are not sensitive to the orientation of DAG edges).
\section{Objective function with data-driven sparsity boost}
Our algorithm consists of computing and optimizing the following objective
function, $S_{\psi_{1},\psi_{2}}$, which is parameterized by two
``weighting'' functions $\psi_{1}$ and $\psi_{2}$ and depends
on the graph structure $G$ and the data in the form of a sequence
$\omega_{N}$ of independent samples from the underlying distribution:
\[
\begin{split}
S_{\eta,\psi_{1},\psi_{2}}(\omega_{N},G)&=LL(\omega_{N}|G)-\psi_{1}(N)\cdot|G|
\\ &+\psi_{2}(N)\sum_{(A,B)\notin G}\max_{S\in S_{A,B}(G) }
\min_{s\in\mathrm{val}(S)}-\ln\left[\beta_{N}^{p^{\eta}}\tau(p(\omega_N, A,B|s))\right].
\end{split}
\]
Here $\mathrm{LL}(\omega_{N}|G)$ is the maximum log likelihood of observing
the data given the graph structure $G$, while $|G|$ is the number
of parameters in the model represented by $G$, so that the first
two terms of $S_{\psi_{1},\psi_{2}}$ are a BIC score with function
$\psi_{1}$ weighting the complexity penalty $|G|$. The third term,
which is our novel contribution to the scoring function, is perhaps
better thought of as a \emph{sparsity boost} (as opposed to a \emph{complexity
penalty}), which rewards $G$ for each \emph{nonexistent edge}. 

The
reward going to an $(A,B)\notin E(G)$ increases as evidence accumulates
that there is some set $S$ of conditioning variables which renders
random variables $A$ and $B$ \emph{independent.}  It is well known
(Section 3.4.3.1 of \cite{koller}) that among all possible parent
sets that could render nodes $A$ and $B$ independent, it suffices
to consider only the two sets $\mathrm{Pa_{G}}A\backslash B$ and
$\mathrm{Pa_{G}}B\backslash A$. In this paper, we generally, assume that in all the networks we consider
the number of parents of any node is bounded by a constant $d$.
So, if we wish to iterate algorithmically over all the ``witnesses" $S$
to the possible independence of $A$ and $B$, there are two natural possibilities:
first pair of possibilities
 $\{\mathrm{Pa_{G}}A\backslash B,\mathrm{Pa_{G}}B\backslash A \}$,
 and second the collection of all subsets of cardinality $d$ of $V^{\times 2}\backslash
 \{A,B\}$.

Now conditioning on each possible witness set $S$ on each assignment
$s\in\mathrm{Val}(S)$ of the variables in $S$, we obtain a potentially different
sparsity boost: how do we produce \textit{one single} sparsity
boost for $(A,B)$?
On the one hand, the edge $(A,B)$ should not
exist in the graph if conditioning on any \emph{one} of the relevant
$S$ renders them independent, so we take the\emph{ maximum }sparsity
boost over all relevant $S$. One the other hand, for the conditioning
set $S$ to render $A$ and $B$ independent, \emph{all }of the conditional
joint distributions of $A$ and $B$, for the various $s\in \mathrm{Val}(S)$, have
to be independent, so for fixed $S$ we take the minimum sparsity boost over all
$s\in\mathrm{Val}(S)$. 

Intuitively, we can explain the sparsity boost, $-\ln\beta_{N}^{p^{\eta}}\tau(\omega_{N})$, as follows:
$\tau=\tau(\omega_N)$ is the (conditional) mutual independence of the two variables calculated
on the basis of the evidence $\omega_N$. With infinite data, if the two variables really were independent, $\tau$
should be zero, that is $\tau(\omega_N)\rightarrow 0$ as $N\rightarrow\infty$. 
Suppose we had some quantitative notion, $\eta>0$, of the minimal amount of dependence that we can expect for any edge that truly exists in the Bayesian network. 
We want to ensure that we do not confuse this minimally dependent edge with independence. 
If the two variables really are independent, then with enough data, we would see that $\tau$
goes to zero whereas the mutual information of the minimally dependent edge goes to $\eta$. 
Even though $\tau(\omega_N)\rightarrow 0$ as $N\rightarrow\infty$, a minimally dependent edge can produce an empirical $\tau$ which is smaller than $\tau(\omega_N)$.
The distribution $p^\eta$ is a ``reference" distribution with certain minimal dependence
$\eta:=\tau(p^\eta)$.   Since for two independent variables $\tau(\omega_N)\rightarrow 0$
as $N\rightarrow \infty$, \textit{the event that the reference distribution $p^\eta$
emits a $Y_N$ with $\tau(Y_N)<\tau(\omega_N)$ has probability shrinking
to zero as $N\rightarrow\infty$. The quantity $\beta_{N}^{p^{\eta}}\tau(\omega_{N})$ inside the logarithm is precisely the probability of this event.  In case of true independence
of the variables the probability of this event should approach $0$ and its $-\ln$ should approach $\infty$ as $N\rightarrow\infty$; in the case of dependence of the variables
of strength at least $\eta$, it should remain bounded away from $0$ (because of the assumption of a minimal dependence), and its $-\ln$ should remain bounded from above.}
  
The connection to hypothesis testing may be expressed as follows: the test statistic 
$\tau$ is a Neyman-Pearson
test statistic for the decision between null hypothesis $H_{0}$, independence,
and the alternative hypothesis $H_{1}$.  This is because 
$\tau(\omega_N)$ is the log of the likelihood ratio of observing
$\omega_N$ under hypothesis $H_0$ versus $H_1$. 
We compute from $\tau(\omega_N)$ the probability 
$\beta_{N}^{p^{\eta}}\tau(\omega_{N})$ of Type II
error associated with the Neyman-Pearson test with threshhold $\tau(\omega_N)$.   
In the objective function, we use not $\tau$ itself, but the CDF
of $\tau$, considered as a random variable
on the set of outcomes $\left\{\omega_N\right\}$.  
The justification for using the CDF of $\tau$, namely,
 $\beta_{N}^{p^{\eta}}\tau(\omega_{N})$ is that it is the
 probability of accepting $H_{0}$ under the Neyman-Pearson test
with threshold $\tau(\omega_{N})$, when the data
$Y_{N}$ is drawn from a distribution $p^{\eta}$ whose true mutual information (namely $\eta$) is unknown to the experimenter
but is really $\eta$ (for some $\eta>0$).  The reason for using $\tau(\omega_N)$ is that it is the smallest threshold for which the Neyman-Pearson test 
with that threshold returns $H_0$
as its decision given the evidence $\omega_N$ as input.  Therefore, $\tau(\omega_N)$
can also be categorized as the threshold of the Neyman-Pearson 
test with smallest $\beta$ which accepts $H_0$ from the evidence $\omega_N$.  Conventionally, in statistics, one denotes the 
complement of the Type II error probability as the \textbf{power of the hypothesis test}, so that we have relationship:
\[
\textbf{Power} = 1-\beta.
\]   
From this we obtain the most powerful explanation of the use of $\tau(\omega_N)$: $\tau(\omega_N)$ is the threshold of the most \textit{powerful} Neyman-Pearson test which would classify the pair $A,B$, based on the empirical evidence, as independent.
As the power increases to $1$, the minus log of its complement---the sparsity boost 
$-\ln\beta(\tau(\omega_N))$---increases to $\infty$.

The Type II error $\beta_N^{p^\eta}$ should be contrasted to the Type I
error associated with the test.  In this context, the Type I error is defined as probability
$\alpha_{N}^{p^{\eta}}\tau(\omega_{N})$ of the Neyman-Pearson test of threshold $\tau(\omega_N)$
accepting $H_1$ when the data $Y_N$ is drawn from a \textit{product} distribution $p_0$.
We use $\beta_N^{p^\eta}$ instead of $\alpha_N^{p^\eta}$ because 
we wish to produce a \textit{sparsity boost for absent edges}, rather than 
a \textit{complexity
penalty for parameters}, in the candidate network.
Formally, $\beta_{N}^{p^{\eta}}\tau(\omega_{N})$ is a ``Type II error 
probability associated to the most powerful test with Type I error
$\alpha_{N}^{p^\eta}\tau(\omega_N)$'', but henceforth we 
refer to  $\beta_N^{p^\eta}\tau(\omega_N)$ more concisely as a ``$\beta$-value".

\section{Description of main results}
As one of our main results, we obtain finite sample complexity results, Theorems \ref{thm:twoNodeCaseOptimalAsymptotics},
\ref{thm:nNodeCase}, and \ref{thm:nNodeCaseSanov}.   
In the PAC learning framework (see e.g. \cite{mohri2012foundations})
a finite sample complexity result refers to a formula for a function
$N: (0,1)\rightarrow \mathbf{Z}$ (or in some cases an asymptotically equivalent elementary function) 
such that the algorithm learns the correct \textit{model} from data $\omega_{N}$
with $N=N(\delta)$ datapoints, with probability $1-\delta$.  There are a few caveats about applying the PAC
model to our results.  First, our finite sample complexity result is about learning not necessarily the correct
Bayesian network $(G,P)$, but about one ``close" to the correct network, in a certain sense.  Our result is stated in two parts: the first part is about the speed at which the algorithm learns a Bayesian network $(G',P')$ for which $P'$ is ``close''
to $P$ in a sense which we will specify below.  The second part is about
the speed at which the algorithm learns a network $(G',P')$ for which $G'$
has no \textit{false edges}, meaning edges of the skeleton of $G'$
which do not appear in the skeleton of $G$.  The reason for doing this is that the second part is independent of the problem-parameter $\zeta$,
whereas the first part is dependent on $\zeta$. Ultimately we will derive a result (Theorem \ref{thm:nNodeCaseAsymptotic})
which gives the rate at which the algorithm learns a network $(G',P')$ for which $\mathrm{Skel}(G')\subseteq \mathrm{Skel}(G)$
\textit{and} $\mathrm{d}(P',P)\leq \zeta$.  This result will, of course, depend on $\zeta$.

In addition to the parameter $\zeta=\mathrm{d}(P',P)$ just mentioned, the result depends on certain parameters
of the hidden underlying network $(G,P)$.  
First we need to assume that the contingency table representing any conditional joint distribution
 occurring in $G$, if not a product distribution, has a minimial separation
from the set of product distributions. Similarly to the practice in 
\cite{zhang2002strong}, this minimal separation is expressed
as a KL-divergence $\epsilon$ of the contingency table from the product
distribution sharing the same marginals.  Second, we need to assume
that certain marginalizations of $P$ stay away from the walls of the probability simplex.  This separation-from-the-walls assumption
 is expressed as a pair of integers $m, \hat{m}$.  First, $m$ is an integer such that the reciprocals of all entries
 of all contingency tables obtained by marginalizing over \textit{all possible parent sets} of size $d$
 are less than $m$.  Second, $\hat{m}$ is an integer such that the reciprocals of all entries
 of all contingency tables obtained by marginalizing over a certain
 restricted subset of the possible parent sets of size $d$ are less than $\hat{m}$.
 This restricted subset only has to be large enough that it contains
 a \textit{certificate of independence} for each of the pairs of vertices \textit{not}
 connected by an edge.  For a more complete explanation of \textit{separating sets for independence},
 see \ref{subsec:separatingSets}, below.  For now, we just note
 that $\hat{m}$ is in general smaller than $m$, potentially substantially smaller.  
 Then there is the error-tolerance parameter $\zeta$
 which is unlike $\epsilon \,\&\, m$ because it is under the control of the experimenter.
 Finally there are certain hyperparameters: first $\eta$ and $\kappa$
 which, while constrained to lie in certain intervals, are ``free," in the sense
 that otherwise they are up to the experimenter.
 Second, there are hyperparameters $\theta, \Theta,\lambda,\mu$,
 which can be freely chosen in the unit interval $(0,1)$,
 and which do not appear in the objective function, but only in the formula for $N$
 in the finite sample complexity result. In order to separate the different variables by role, we write the function $N$
 in the finite sample complexity, as
 \[
 N(\epsilon,m,\hat{m},n;\delta;\zeta;\eta;\kappa,\Theta,\theta,\lambda,\mu).
 \]
 Sometimes, for the sake of readability of formulas, we
 drop the hyperparameters ($\eta,\kappa,\Theta,\theta,\lambda,\mu$) from our notation for $N$.  
Then our main theoretical result, Theorem \ref{thm:nNodeCase}(a) and \ref{thm:nNodeCaseSanov} is a formula for $N$.  As a corollary of the main formula, we readily deduce (Theorem \ref{thm:nNodeCaseAsymptotic} (a) and (c)) that
\begin{multline*}
  N(\epsilon,m,\hat{m},n;\delta;\zeta)\in \tilde{O}\left(\max\left(
\frac{\log(n) m}{\epsilon^2},\left(\frac{n}{\zeta}\right)^2,\left(\frac{\hat{m}n}{\epsilon}\right)^2\right)\cdot
\log\frac{1}{\delta}
\right),\\ \;\text{as}\; \epsilon,\delta,\zeta\rightarrow 0^+,\;\text{and}\; m,\hat{m},n\rightarrow\infty.
\end{multline*}%
\nomenclature{$f(n) = \tilde{O}(g(n))$}{$f(n) = O(g(n) \log^k g(n))$ for some $k$}%
for $\psi_1(N)=\kappa\log (N)$ and $\psi_2\equiv 1$ and only binary random variables in the networks considered.
The $\psi_i$ would be relatively easy to change, but the binary random variable
assumption seems a bit trickier to remove.  We do, however, conjecture
that a similar result holds for networks with multivalued discrete variables.

Our other main results are computational.  In order to turn the formula
for the objective function into the basis of a practical algorithm,
we have to have access to a table of values of $\beta_N^{p^\eta}(\gamma)$, allowing
us to look up values for the $\eta, \gamma=\tau(\omega_N)$ that we compute
from the data.  Since a table can contain $\beta$ values for only
finitely many $\eta, \gamma,N$, our software library uses not only a look-up, but also 
an interpolation algorithm to obtain an accurate approximation for  $\beta_N^{p^\eta}(\gamma)$.  In Chapter \ref{sec:computationOfBeta}, we describe the essential ideas used in our computation and tabulation of (approximations for)
a large number of $\beta^{p^\eta}_N(\gamma)$, as well as the interpolation used
by the algorithm to generate its approximations.  We have
made publicly available both the source code and the tables of $\beta$'s generated by the code in \cite{Brenner:2013:Online}.

\vspace*{0.3cm}
\noindent \textbf{Relation to results in the literature.}  
Our results are limited to the problem of learning a Bayesian network \textit{without any hidden variables}, 
and from a data
series \textit{without any missing values}.  In other words, we assume that every point of $\omega_N$ is a $|V|$-vector, where $V$ is the number of variables in the generating network.  In contrast, our results
do deal with the issue of \textit{generalization error} in that Theorems \ref{thm:nNodeCase}(a) and 
\ref{thm:nNodeCaseAsymptotic}(a) and (c) guarantee that the learned distribution $P'$
lies $\zeta$-close (in KL-divergence) to the generating distribution $P$. 

For the benefit
of readers who are familiar with the results on the BIC score in the literature,
we make a few preliminary comments relating our results to these results.  There
are three ``representative'' types of results, those of \cite{hoffgen1993learning}, 
those of \cite{Friedman:UAI96}, and those of \cite{DBLP:conf/uai/ZukMD06} (in \cite{DBLP:conf/uai/2006}).  Our result is most like that of  \cite{hoffgen1993learning}
in that we put a bound on the in-degree of the learned network and achieved a result which is polynomial in
the total number of network variables.  This may be contrasted to \cite{Friedman:UAI96} where
the bound is exponential in the number of variables, and may be contrasted to \cite{DBLP:conf/uai/ZukMD06}
in that the finite sample complexity holds for the family of competing
networks simultaneously rather than for individual competitors.
Unlike \cite{hoffgen1993learning} though, we do recover (with high probability) the correct skeleton with the finite sample.
We will discuss these matters further, and also make a comparison between our approach and the previous ``hybrid" approaches to the learning problem in the literature, in Section \ref{sec:discussionLiterature} of Chapter \ref{chap:Discussion}, ``Discussion".  Also in that section, we will briefly discuss existing hybrid approaches such as \textsc{Relax} of \cite{fast2010learning}
and MMHC of \cite{mmhc}, and the points of differentiation of our approach from their
approaches.
\vspace{0.3cm}

\noindent \textit{Acknowledgments.}  The authors acknowledge Profs.\  Mark Tygert, Keivan Mallahi-Karai, Nicholas Loehr, and Francisco-Javier Sayas for
useful discussions.  The user ``Bernikov'' on Mathoverflow provided the proof of Lemma \ref{lem:LambertWasymptotics}.

\chapter{Finite Sample Complexity of the Scoring Function}
\label{chap:finitesamplecomplexity}
In the next section we give a concise overview of the notation and definitions needed to understand the finite sample complexity
result.  Also, in the course of the proof that follows we will invoke, without detailed explanations,
several classic results from information theory, statistics, and the Theory
of Large Deviations.  Readers wishing a more leisurely introduction to any of these
topics are referred to the following works: for general notions concerning Bayesian
networks and probabilistic graphical models, the first few chapters of \cite{koller}; for information
theory, \cite{cover2006elements}, particularly Chapter 11; for the Theory of Large Deviations, \cite{dembo2009large}, particularly
Sections 2.1 and 3.6; for hypothesis testing in general, Chapter 9 of \cite{degrootprobability}; and for hypothesis
testing with composite hypotheses \cite{hoeffding1965asymptotically}.
\section{Notation and Preliminaries}
\label{sec:preliminaries}
In all cases, the problem of learning Bayesian network structure involves examining a contingency
table representing a (possibly conditional)
probability distribution over the (Cartesian) product $X=X_A\times X_B$ of two random variables, $X_A$ and $X_B$.   
For any of these random variables, say $X$, we denote by $|X|$ or $\mathrm{card}(X)$ the cardinality
of $\mathrm{val}(X)$.  In the case when the Bayesian network being learned is a structure over discrete
variables, which is the case we are examining in this paper, all the numbers $|X|$ are finite.
By convention, we use $k$ to denote the cardinality
of $X_A$ and $l$ to denote the cardinality of $X_B$, and we use  
$\mathcal{P}=\mathcal{P}_{k,l}$ to denote 
the probability simplex, identified
with the collection of $k\times l$-contingency tables filled with nonnegative numbers summing to $1$. These quantities
are related by
\[
|X|=\mathrm{card}(X)=k\times l=\mathrm{dim}(\mathcal{P})+1.
\]  
In the case of two binary random variables,
to which we will often restrict ourselves in various proofs below, $k=l=2$, $|X|=4$, $\mathrm{dim}(\mathcal{P})=3$.%
\nomenclature{$X_A, X_B,\ldots$}{Random variables corresponding to nodes of a space of Bayesian networks}%
\nomenclature{$X$}{Product random variable $X_A\times X_B$}%
\nomenclature{$\mathrm{card}(X)$}{$\mathrm{card}(\mathrm{val}(X))$, assumed finite}%
\nomenclature{$\lvert X\rvert$}{$\mathrm{card}(X)$}%
\nomenclature{$k,\; l$}{$\lvert X_A\rvert,\; \lvert X_B\rvert$}%
\nomenclature{$\mathcal{P}=\mathcal{P}_{k,l}$}{Probability simplex, identified
with the collection of $k\times l$-contingency tables filled with nonnegative numbers summing to $1$.}%

By $p$ we will denote an element of $\mathcal{P}$, 
which the standard parameterization of $\mathcal{P}$ identifies with a set (or contingency table) 
of $|X|$ nonnegative numbers summing to $1$.  Considering a joint distribution $p\in\mathcal{P}$ as fixed, 
$p_A, p_B$ denote the first and second marginal distributions of $p$, identifiable as vectors whose entries equal the sums of the rows
(in the case of $A$, say) and the columns (in the case of $B$, say) of the entries in the contingency table $p$.
The entries of the contingency table, denoted $p_{i,j}$, $1\leq i\leq k$, $1\leq j\leq l$ 
are the probabilities of the atomic events $(i,j)$ under the joint distribution $p$.
Similarly, the entries of the vector $p_A$, denoted $pA_i$, are the probabilities 
of the atomic event $i$ for the marginal distribution $pA$; similarly for the entries $pB_j$ of $p_B$.
\nomenclature{$p$}{Element of $\mathcal{P}$; contingency table of $\lvert X\rvert$ nonnegative numbers summing to $1$}%
\nomenclature{$p_A, p_B$}{first and second marginal distributions of $p$}%
\nomenclature{$p_{i,j}$}{probability of the atomic event $(i,j)$ under the joint distribution $p$}%
\nomenclature{ $pA_i,\; pB_j$  }{$p(A=i)$, resp $p(B=j)$; probability of the atomic event $i$, resp. $j$, for the marginal distribution $pA$, resp. $pB$}%

We introduce notation for representing certain distributions of particular interest.  For example,
$\mathcal{P}_0$ denotes the submanifold (of dimension $k+l-2$) of $\mathcal{P}$ consisting of probability distributions 
which are products of their marginal distributions.
For $p_A$, $p_B$ as above $p_0(p_A,p_B)$ denotes the unique product distribution with first marginal distribution 
$p_A$ and second marginal distribution $p_B$.  In the case of binary marginal random variables, $p_A$ and $p_B$
are represented by a single parameter each, so we will write this as $p_0(x,y)$ for $x=p_{A,0}, y=p_{B,0}$, both in $[0,1]$.
For $x\in[0,1]$, by $p_0(x)=p_0(x,x)$, we denote the special case of a product distribution with equal marginals $pA=pB=x$.
Further, $p^0$ is the uniform distribution; $p^0=p_0(1/2,1/2)$ in the binary case.%
\nomenclature{$\mathcal{P}_0$}{submanifold of $P$ consisting of product distributions }%
\nomenclature{$p_0(p_A,p_B)$}{unique product distribution with first marginal distribution $p_A$ and second marginal distribution $p_B$}%
\nomenclature{$p_0(x,y)$}{$p_0(p_A,p_B)$ for $A$, $B$ binary and  $x=p_{A,0},\; y=p_{B,0}\in[0,1]$}%
\nomenclature{$p_0(x)$}{$p_0(x,x)$ for $x\in[0,1]$}%
\nomenclature{$p^0$}{uniform distribution, in binary case equals $p_0(1/2,1/2)$}

\subsection{Quantifying Edge Strength}\label{subsec:EdgeStrength}
One of the most fundamental functions to our approach is $\tau$, defined as the
\textbf{test statistic} ${\tau(p):=H(p\|\mathcal{P}_0)}$ of the \linebreak\mbox{(Hoeffding-)Neyman-Pearson} test between null hypothesis 
$\mathcal{P}_0$ and alternative hypothesis $\mathcal{P}-\mathcal{P}_0$.  
\begin{definition}  The mutual information (test statistic) $\tau(p):=H(p\| \mathcal{P}_0)$ is the Kullback-Leibler divergence from $p$
to $p_0$, where $p_0$ minimizes the KL-divergence $H(p\| p_0)$, subject to the constraint $p_0\in \mathcal{P}_0$.
\end{definition}
The distribution $p_0$ is also known as the $M$-projection of $p$ onto $\mathcal{P}_0$ and is known
(Chapter 8 of \cite{koller}) to be the unique product distribution sharing the same marginals as $p$.
While $\mathcal{P}_\gamma$ denotes the (disconnected) submanifold of $\mathcal{P}$ consisting of probability distributions $p$ such that 
${\tau(p)\geq \gamma}$, 
$A_\gamma^0$ denotes the (complementary) set of $p\in\mathcal{P}$ with $\tau(p)<\gamma$: we have
$\mathcal{P}_{\gamma}=\mathcal{P}-A_\gamma^0$. 
One of the parameters of the finite sample complexity, the \textit{strength} of an edge in the Bayesian network, denoted $\epsilon$,
is defined below in terms of the test statistic $\tau$.
\nomenclature{$\tau(p)$}{$H(p\lvert\lvert \mathcal{P}_0)$}
\nomenclature{$p_0$}{$M$-projection of $p$ onto $P_0$; $\mathrm{argmin}_{q\in P_0}H(p\lvert \rvert q)$}%
\nomenclature{$A_\gamma^0$}{ set of $p\in\mathcal{P}$ with $\tau(p)<\gamma$}%
\nomenclature{$\mathcal{P}_{\gamma}$}{$\mathcal{P}-A_\gamma^0$}%

Frequently we will consider, a distribution $p(t)$ lying on a \textit{path} in $\mathcal{P}$ based at $p$, consisting
of a one-parameter family of distributions, all sharing the same marginals.
Usually, but not always, the parameterization is based at a distribution $p\in \mathcal{P}_0$, meaning that $p(0)\in\mathcal{P}_0$.
In the case of binary random variables, there is up to reparameterization only one choice of $p(t)$,
and we will assume contingency tables of the following form:
\[
 p(t)=\begin{bmatrix}p_{00}+t & p_{01}-t\\
 p_{10}-t & p_{11}+t \end{bmatrix},\; t\in (-\min\left(p_{00},p_{11}\right), \min\left(p_{01},p_{10}\right)).
\]
In cases of $k$ or $l>2$, there are many possible choices for $p(t)$, which will have to be specified depending on the context.
For example, one choice is (in the below $i$ ranges only from $0$ to $2\lfloor \frac{k}{2}\rfloor-1$, and $j$ ranges only from $0$ to $2\lfloor \frac{l}{2}\rfloor-1$)
\[
 p(t)=
 p+\left[(-1)^{i+j}
 \right]_{i,j}t,\;
  t\in\left(
 -\min\left\{  p_{i,j}| i+j\;\mathrm{even} \right\}
 ,\min\left\{   p_{i,j}| i+j\;\mathrm{odd} \right\}
 \right).
\]
\nomenclature{$p(t)$}{ distribution belonging to a \textit{path} of distributions in $\mathcal{P}$ sharing the marginals of $p=p(0)$; such a one-parameter family of distributions
is said to be \boldmath \textbf{based at $p=p_0$} \unboldmath}%

Once the path $p(t)$ is defined (as it is now for the binary variables case), we can define
$p^\eta$, for $\eta>0$ a ``standard reference" probability distribution having test statistic $\eta$, in the following sense.  By definition,
\[
 p^\eta = p^0(t)\;\text{for the unique}\;t>0\;\text{such that} \;\tau(p^0(t))=\eta.
\]
In particular, $p^\eta$ has uniform marginals.  Of course, $p^\eta$ is defined only for the (interval of) \textit{realizable} values of $\eta$, depending on $k$ and $l$.
Also once we have defined a path and fixed a realizable KL-divergence $\gamma$,
$t_\gamma^+$, denotes the unique positive parameter for which $q^\gamma=q_0(t_{\gamma}^+)$ for 
$q^\gamma$ some other distribution (defined in the context) satisfying
$\tau(q^\gamma)=\gamma$ and $q_0\in \mathcal{P}_0$ sharing the marginals of $q^\gamma$.
When the ``standard reference" distribution $p^{\gamma}$ is defined and it is clear no other path $q(t)$ is meant, $t_{\gamma}^+$
denotes the unique positive parameter value for which $p^\gamma=p^0(t_\gamma^+)$.

The \textit{test statistic} gives us \textit{one} of the main tools that we need to quantify edge strength.  The \textit{other} is \textit{separating
collections for independence}, the subject of Section \ref{subsec:separatingSets}, where we will complete the quantitative definition
of edge strength.
\nomenclature{$p^\eta$}{$p^0(t)$ for the unique $t>0$ such that $\tau(p^0(t))=\eta$}%
\nomenclature{$t_\gamma^+$}{the unique positive parameter for which $q^\gamma=q_0(t_{\gamma}^+)$ where $q_0\in\mathcal{P}_0$}%

\subsection{A $\beta$-value for Independence Tests}\label{subsec:pValue}
We now define the most important quantity of all, $\beta^{p^\eta}_N(\gamma)$, which we call for the sake of concision \boldmath
 the \textbf{$\beta$-value}\unboldmath.  It is more accurate, though not feasible, to call this quantity 
the \boldmath \textbf{probability of type II error of the Hoeffding-Neyman-Pearson test with null hypothesis $P_0$ and alternative hypothesis $p^\eta$},
\unboldmath
for it is defined as
\[
 \beta^{p^\eta}_N(\gamma):=\mathrm{Pr}_{\omega_N\sim p^\eta}\;\{\tau(p_{\omega_N})\leq \gamma\}.
\]
In order to apply these ideas of hypothesis testing to the setting of Bayesian
networks, we have to recall operations on joint probability distributions that
are considered frequently in all parts of the study of BN's.  
In the most general setting we are given a Bayesian network $(G,P)$, 
with $G$ the directed acyclic graph (DAG) $(V,E)$.  Let 
$S\subset V$, and $s\in \mathrm{Val}(S)$, $A,B\in V-S$.  
An empirical sequence $\omega_N$ of $N$ samples from the network, without any missing data,
is a set of $|V|$-tuples of values, with the coordinate element in the $|V|$-tuple corresponding to $A$ (say) coming from $\mathrm{Val}(X_A)$. 
Taking frequency counts
of these $|V|$-tuples, and normalizing by the sample size, we obtain the empirical joint probability distribution $p_{\omega_N}$ (also written as $p(\omega_N)$).  
Let $p$ be a joint distribution over the $|V|$ variables in $G$, for example, arising from frequency counts in the manner just described.
Then $p_{A,B}$ denotes the probability distribution over the pair of variables $X_A$, $X_B$, obtained by marginalizing out
all other variables (if any) in the network.  Further, for a joint assignment $s$ of the variables $S$,
we use $p_s$ to denote the probability distribution obtained by restriction, i.e., conditioning on this joint assignment.
Then, starting with joint distribution $p=p_{\omega_N}$ over \textit{all} the variables,
$(p_{A,B})_s$, also written as $p(A,B|s)$, denotes the probability distribution associated to the contingency table of the
variables $X_A$ and $X_B$,
conditioned on the joint assignment $s$ to the variables in $S$.  Since the multiple subscripts involved
in writing all this out are too cumbersome, in practice, when we consider several of these
operations applied in succession to $\omega_N$, we write them inside parentheses at normal text levels,
so that $p(\omega_N,A,B|s)$ is the joint distribution of $X_A$ and $X_B$ derived from taking the normalized
frequency counts of the empirical sequence $\omega_N$, conditioning
on $s$, marginalizing out variables other than $X_A$, $X_B$.%
\nomenclature{$\beta^{p^\eta}_N(\gamma)$}{$\mathrm{Pr}_{\omega_N\sim p^\eta}\;\{\tau(p_{\omega_N})\leq \gamma\}$}%
\nomenclature{$(G,P)$}{Bayesian network (generating empirical distribution)}
\nomenclature{$(G',P')$}{Bayesian network (competing)}
\nomenclature{$(V,E)$}{Directed acyclic graph (DAG)}
\nomenclature{$A,B,\ldots$}{Vertices in $V$}
\nomenclature{ $\omega_N$}{Empirical sequence sampled from $(G,P)$}
\nomenclature{$p_{\omega_N}$}{Probability distribution over $V$ obtained by taking normalized frequency counts from $\omega_N$}
\nomenclature{$p(A,B\lvert s)$}{Probability distribution produced by conditioning $p$ (over all $V$) on $S=V-\{A,B\}$}
\nomenclature{$p(\omega_N,A,B\lvert s)$}{Distribution $p(A,B\lvert s)$ associated to $p=p_{\omega_N}$}

\subsection{Separating Sets for Independence}\label{subsec:separatingSets}
In order to formulate the objective function and the finite sample complexity result, we also must
introduce some notions related to \textit{families} of Bayesian networks.  Let $\mathcal{G}$
be any subset of all the possible Bayesian networks on a set of random
variables $X_V$ indexed by a vertex set $V$.  We refer to $\mathcal{G}$ as a \textbf{family
of Bayesian networks}.  We consider the functions $S_{A,B}$ on $\mathcal{G}$, indexed by pairs
$A,B\in V$, taking values in \textit{subsets of the power set of $V-\{A,B\}$}.  In other words, using $V^{\times 2}\backslash \Delta$
to denote the Cartesian square of the vertex set $V$ minus the diagonal, we have
\[
\mathscr{S}= \left\{S_{A,B}:\mathcal{G}\rightarrow 2^{2^{|V|-2}}\;|\; A,B\in V^{\times 2}\backslash \Delta\right\}
\]
is a mapping that takes a pair of \textit{distinct} vertices $A,B$ in the subscripted places, and an element $G\in \mathcal{G}$
in the remaining place, and outputs a subset of the power set of $V-\{A,B\}$.
It is usually more convenient to think of the parameters $A,B\in V$ ($A\neq B$) as fixed and consider one mapping,
which is denoted in a shorthand notation by
\[
 S_{A,B}:\mathcal{G}\rightarrow 2^{2^{|V|-2}},
\]
where $|V|-2$ is understood to mean $V-\{A,B\}$. 
Note that we informally call such a collection of mappings $\mathscr{S}$ a \boldmath\textbf{collection of eligible separating sets for $\mathcal{G}$.} \unboldmath
\nomenclature{$\mathcal{G}$}{a family of BNs, defined as any subset of all possible BNs on a set of random variables $X_V$}
\nomenclature{$V^{\times 2}\backslash \Delta$}{Cartesian product of $V$ with itself, minus the diagonal $\Delta$.}

Here is the context for families $\mathcal{G}$ of families of BN's and collections
of eligible separating sets for $\mathcal{G}$: suppose that we are given a family $\mathcal{G}$ and $\omega_N$ generated from 
the underlying, but unknown network $G\in\mathcal{G}$ and wish to learn $G$ (or a network close to $G$).
In particular we often consider $\mathcal{G}=\mathcal{G}^d$ consisting of all candidate networks such that
every vertex has at most $d$ parents, for some positive integer $d$. 
Let $S=\{S_{A,B}\;|\; (A,B)\in V^{\times 2}\backslash \Delta\}$ 
be a collection of eligible separating sets for $\mathcal{G}$.  Common examples of $S_{A,B}$, include
\begin{itemize}
 \item $S_{A,B}(G) = \{ S\subset V\backslash \{A,B\}\;|\; |S|\leq d \},$
 \item $S_{A,B}(G) = \{ \mathrm{Pa}_G(A)\backslash B,\,\mathrm{Pa}_G(B)\backslash A \}$.
\end{itemize}
Note that the first $S_{A,B}(G)$ is actually independent of $G$.  Now we can make the following very important definition.
\begin{definition}\label{defn:separatingSet}
A collection \[\mathscr{S}=\left\{S_{A,B}:\mathcal{G}\rightarrow 2^{2^{|V|-2}}\;|\; A,B\in V^{\times 2}\backslash \Delta\right\}\] 
is called \boldmath \textbf{a separating collection for $G$ in $\mathcal{G}$} \unboldmath provided that for all
$(A,B)\in V^2\backslash \Delta$ we have
\[
 (A,B)\notin G \;\text{if and only if there exists}\;S\in S_{A,B}(G)\;\text{such that}\; A\ci B\, | \,S. 
\]
Here, as usual ``$(A,B)\in G$'' is shorthand for ``$(A,B)$ belongs to the edge set $E(G)$ of G''.  In this case, for 
any $G'\in\mathcal{G}$, the value $S_{A,B}(G')$ (which is itself a subset of the power set of $V-\{A,B\}$) is called a \textbf{separating
set}.
\end{definition}
\nomenclature{$\mathscr{S}$}{\boldmath \textbf{a separating collection for $G$ in $\mathcal{G}$} \unboldmath as in Definition \ref{defn:separatingSet}}
\nomenclature{$\mathscr{S}(G')$}{\boldmath \textbf{a separating set for $G$ and $G'\in\mathcal{G}$} \unboldmath as in Definition \ref{defn:separatingSet}}

The intuition behind this definition is that a separating set $\mathscr{S}$ \textit{separates}
$G$ from competing \textit{undirected structures} $\mathrm{Skel}(G')$
for all $G'\neq G$ in $\mathcal{G}$.
A separating collection $\mathscr{S}$ for $G$ in $\mathcal{G}$ provides a \textit{certificate for statistical recovery of $G$}.
We may think of the quantity $\epsilon$ as the \textit{strength of this certificate}, or alternatively,
as the \textbf{margin of the separating set} \boldmath $\mathscr{S}$ \unboldmath for $G\in \mathscr{G}$:
\[
 \epsilon = \epsilon(G, \mathcal{G}, \mathscr{S}): = \min_{(A,B)\in G}\;\min_{S\in S_{A,B}(\mathcal{G})}\;\max_{s\in \mathrm{Val}(S)}
 \tau(p(A,B|s)).
\]
where
\[
 S_{A,B}(\mathcal{G})\;\text{denotes}\; \cup_{G\in \mathcal{G}}S_{A,B}(G).
\]
The smaller $\epsilon$ is, the more ``noise'' there is in the network $G$, 
and the more data is needed to distinguish $G$ from the networks in $\mathcal{G}$ which differ from $G$
by dropping one or more edges from $G$.  Put another way, the reciprocal of $\epsilon$ is a measure
of the ``noise'' in $\omega_G$ considered as evidence for the (undirected) structure $\mathrm{Skel}(G)$.  
We will refer to $\epsilon$ as the \boldmath\textbf{minimum edge strength of the network $G$ with respect to $\mathcal{G}$ and $\mathscr{S}$}.
\unboldmath  The minimum edge strength will be an important factor in the finite sample complexity result.
\nomenclature{$S_{A,B}(\mathcal{G})$}{$\cup_{G\in \mathcal{G}}S_{A,B}(G)$}
\nomenclature{$\epsilon(G, \mathcal{G}, S)$}{ $\min_{(A,B)\in G}\;\min_{S\in S_{A,B}(\mathcal{G})}\;\max_{s\in \mathrm{Val}(S)}$}
\nomenclature{$\mathrm{Skel}(G)$}{Skeleton of $G$; undirected graph corresponding to DAG $(V,E)$
underlying $G$}

In addition to $S_{A,G}(\mathcal{G})$, defined above, we also have the following derived quantities
defined in terms of $\mathscr{S}$.  First, we define
\[
\Sigma_{\mathscr{S}}(G) := \max_{(A,B)\in G}|S_{A,B}(G)|,
\] 
and 
\[
\Sigma_{\mathscr{S}}(\mathcal{G}) := \max_{(A,B)\in V^{\times 2}}|S_{A,B}(\mathcal{G})|,
\]
which are two related notions of the \textit{number of possible separating sets
for a given edge}.  We also define
\[
\sigma_{\mathscr{S}}(G):=\max_{(A,B)\in G}\max_{S\in S_{A,B}(G)}|S|,
\]
which is the maximal size of any separating set in the collection for $G$.
In our sample complexity results, we will make the assumption that 
$\sigma_{\mathscr{S}}(G)$ and $\Sigma_{\mathscr{S}}(\mathcal{G})$
are at most polynomial in $n$ of degree at most $d$ 
(typically, $\sigma_{\mathscr{S}}(G)\leq d$ and $\Sigma_{\mathscr{S}}(\mathcal{G})<\binom{n}{d}$).

\subsection{New Objective Function}\label{subsec:newObjective}
At this point, we have introduced all the components making
up $S_{\eta}(G,\omega_N)$, the score function of the network $G$ on the empirical sequence $\omega_N$, which we may
now define as,
\[
\begin{split}
S_{\eta,\psi_{1},\psi_{2}}(\omega_{N},G)&=LL(\omega_{N}|G)-\psi_{1}(N)\cdot|G|
\\ &+\psi_{2}(N)\sum_{(A,B)\notin G}\max{}_{S\in S_{A,B}(G) }
\min_{s\in\mathrm{val}(S)}-\ln\left[\beta_{N}^{p^{\eta}}\tau(p(\omega_N, A,B|s))\right].
\end{split}
\]
The first line of the definition
is the BIC score, and the second line is the sparsity boost.  It is generally assumed that $\psi_1(N)\rightarrow\infty$
but that $\frac{\psi(N)}{N}\rightarrow 0$ as $N\rightarrow\infty$.  We mainly consider the case when $\psi_2(N)\equiv 1$,
although other choices are possible.  When $s$ is fixed it is much more convenient to write
$\tau(\omega_N)$ in place of the more correct but cumbersome $\tau(p(\omega_N, A,B|s))$.

The finite sample complexity result will be stated in terms of the Lambert-W function, 
in particular in terms of the branch $W_{-1}$, which is uniquely characterized as the unique
functional inverse of $ze^z$ which takes real, negative values between $-1/e$ and $0$ (see \cite{corless1996lambertw} for details).
In order to extract a finite sample complexity in terms of more elementary functions, we need to make a further examination of the asymptotic behavior
of the expressions involving $\mathcal{W}$ defined by
\[
 \mathcal{W}(x):=-W_{-1}(-x),\;\text{for}\, x\in \left( 0, \frac{1}{\epsilon} \right).
\]
The main thing that the reader has to understand about $\mathcal{W}(x)$ for our purposes
can be summed up as follows
\begin{quote}
as $x\in\left(0,\frac{1}{\epsilon}\right)$ approaches $0$ from the right, $\mathcal{W}(x)$
is pretty much the same as $-\ln(x)$.
\end{quote}  For the precise statements that are needed in our Theorems, see Lemmas \ref{lem:LambertWasymptotics}
and \ref{lem:LamberWasymptoticsApplied} in Section \ref{sec:TechnicalLemmas}.

\subsection{Error tolerance}\label{subset:errorTolerance}
Since, similar to the situation in \cite{Friedman:UAI96}, our finite sample complexity
results will allow for the learning algorithm to return some structure ``$\zeta$-close to"
the generating structure $(G,P)$, we have to explain how to quantify the distance between BN structures.
First, fix $\zeta>0$.  What we say here could in principle be stated for any notion of distance, but we will
adopt the most common notion of distance between distributions, the KL-divergence, and thus $\zeta$
will always denote a KL-divergence between distributions.

Suppose that $\mathcal{X}$ is an arbitrary variable set (e.g. the set corresponding to the vertices $V$)
and $B$ is an arbitrary distribution over $X$ (typically $B$ is the underlying distribution).  In order
to specify the notion of a \textit{distance of a BN structure $(G',P')$ over $\mathcal{X}$ from $B$}, the main notion that we
need is the distribution $p_{G',B}$.  Those familiar with the field may already know $p_{G',B}$ as the ``entropy minimizing"
distribution among distributions which factor according to $G'$.  Alternatively, $p_{G',B}$ is the M-projection
of $B$ onto the set of distributions which factor according to $G'$ (have $G'$) as an I-map.  Those
wishing more background and context for $p_{G',B}$ may see Appendix \ref{app:relEntropyLogLikelihood}.
The characterization of $p_{G',B}$ as an M-projection is most useful for our purposes
because it points out that $p_{G',B}$ is the solution to a constrained minimization problem where $H(B | p)$
is the objective being minimized and where the constraint is that $G'$ is an I-map for $p$.  Thus, we define
the distance of $G'$ from $B$ as the value of this minimum, namely $H(B | p_{G',B})$.

In order to derive from this notion of distance a notion of the distance of $G'$ from a Bayesian network $(G,P)$,
we simply adapt the above definition by setting $B$ equal to a distribution such that $(G,B)$ satisfies the \textit{faithfulness assumption}:
in other words $G$ is a perfect map (P-map) of $B$:
\begin{definition}
Let $(G,B)$, $(G',P')$ denote a pair of Bayesian networks over a variable set $\mathcal{X}$ satisfying the faithfulness assumption,
that $G$ is a perfect map for $B$ and $G'$ is a perfect map for $P'$.
Then the \boldmath\textbf{distance of $G'$ from $G$}\unboldmath \hspace*{0.03cm} 
is defined as the divergence of $p_{G',B}$ from $B$,
namely $H(B | p_{G',B})$.  Fix $\zeta>0$.  We say that a \boldmath\textbf{learning algorithm or decision procedure returns a learning
algorithm $\zeta$-close to $(G,B)$}\unboldmath \hspace*{.03cm} if it returns a network $(G',P')$ whose distance from $(G,B)$ is less than $\zeta$.
\end{definition}
The quantity $\zeta$, not our $\epsilon$, corresponds to the $\epsilon$
of \cite{Friedman:UAI96}.  Our $\epsilon$ is more closely (though somewhat more general) than the minimum ``information content''
considered in \cite{DBLP:conf/uai/ZukMD06} and \cite{zhang2002strong}.

\section{Concentration of Entropy and Mutual Information}
Let $X(p)$ be a Bernoulli random variable with parameter $p\in[0,1]$.
Let $\tilde{p}_N$ be an empirical estimate of the parameter $p$ derived from $N$
samples drawn independently from $X(p)$.  The Law of Large Numbers and Theory
of Large Deviations assure us that $\tilde{p}_N\rightarrow p$ almost surely
as $N\rightarrow\infty$ and give us upper bounds on the probability
of $\tilde{p}_N$ being ``far" from $p$ for specific values of $N$. 
In all parts of the proof below, we will need similar, quantitative results concerning the (probable)
convergence of estimates to various quantities derived from such estimates of $p$, for example,
the empirical estimates of the entropy of a (possibly multinomial) distribution to the true entropy,
and the mutual information between two marginal distributions to the true mutual information.  
Since the basic term appearing in both the entropy and mutual information is of the form $p\log p$,
our first task will be to combine the standard results of the Theory of Large Deviations with some analysis
to develop an upper bound on the probability of $\tilde{p}_N\log \tilde{p}_N$ being ``far" from $p\log p$ for specific values of $N$.

The techniques we are now going to apply can be used
much more generally to transform the estimates from the Theory
of Large Deviations into  effective estimates
concerning the convergence of $f(\tilde{p}_N)$ to $f(p)$ for $f$
various specific bounded, continuous functions.  Naturally,
the case we are primarily interested in is the case $f(x)=x\log x$
so all our efforts will be focused on that case, leading up to Proposition 
\ref{prop:plogp}, below.

\begin{lem}\label{lem:ChernoffMultiplicative} \textbf{Multiplicative form of the Chernoff bounds}.  Let $p$ be the parameter
of a Bernoulli random variable $X(p)$.  Let $\tilde{p}_N$ be the empirical
estimate of $p$ derived from the frequency counts of a sequence of $N$ samples drawn i.i.d. from $X(p)$.
\begin{enumerate}
\item  The probability that $\tilde{p}_N$ exceeds the expected value $p$ is bounded as follows:
\[
\mathrm{Pr}\left\{ \frac{\tilde{p}_N}{p} \geq (1+\epsilon)  \right\}\leq e^{-Np\epsilon^2/2}.
\]
\item  The probability that $\tilde{p}_N$ falls short of the expected value $p$ is bounded as follows:
\[
\mathrm{Pr}\left\{ \frac{\tilde{p}_N}{p} \leq (1-\epsilon)  \right\}\leq e^{-Np\epsilon^2/3}.
\]
\end{enumerate}
\end{lem}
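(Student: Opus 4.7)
The plan is to prove both inequalities by the standard Chernoff moment-generating-function method, applied separately to the upper and lower tails, and then simplify the resulting exponential bounds using elementary analytic inequalities to recover the advertised quadratic exponents.

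First I would write $N\tilde{p}_N = \sum_{i=1}^N X_i$, where the $X_i$ are i.i.d. Bernoulli$(p)$. For the upper tail, fix $t>0$ and apply Markov's inequality to $e^{tN\tilde{p}_N}$. By independence and the bound $E[e^{tX_i}] = 1-p+pe^t \leq e^{p(e^t-1)}$ (itself an instance of $1+x\leq e^x$), one obtains
\[
\mathrm{Pr}\{\tilde{p}_N \geq (1+\epsilon)p\} \leq \exp\bigl(Np(e^t-1) - tNp(1+\epsilon)\bigr).
\]
Optimizing in $t$ at $t=\ln(1+\epsilon)$ yields the classical ``raw'' Chernoff expression $\bigl(e^{\epsilon}/(1+\epsilon)^{1+\epsilon}\bigr)^{Np}$. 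The lower tail is completely symmetric: apply the same argument with $t<0$ and optimize at $t=\ln(1-\epsilon)$, producing $\bigl(e^{-\epsilon}/(1-\epsilon)^{1-\epsilon}\bigr)^{Np}$.

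The second and main analytic step is to verify the elementary inequalities
\[
(1+\epsilon)\ln(1+\epsilon) - \epsilon \;\geq\; \tfrac{\epsilon^2}{2}, \qquad (1-\epsilon)\ln(1-\epsilon) + \epsilon \;\geq\; \tfrac{\epsilon^2}{3},
\]
valid in the relevant range of $\epsilon$; exponentiating $-Np$ times each gives the advertised bounds. Each inequality is proved in a routine way: let $f(\epsilon)$ denote the left side minus the right side, check that $f(0) = f'(0) = 0$, and then check that $f''(\epsilon) \geq 0$ on the interval of interest, so that $f \geq 0$ there.

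The only real subtlety is pairing the exponents with the correct tails: the sharper constant $1/2$ is normally associated with the lower tail (because $(1-\epsilon)\ln(1-\epsilon)+\epsilon$ dominates a larger quadratic than its upper counterpart near $\epsilon=0$), so after carrying out the optimization I would double-check that the inequalities above really do match the directions stated in the lemma, adjusting the ranges of $\epsilon$ (e.g., restricting to $\epsilon \in (0,1]$ for the lower tail) as necessary. Because this is the classical multiplicative Chernoff bound, no novel obstacle arises; the entire content is in the one-variable calculus lemma above.
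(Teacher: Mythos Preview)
The paper does not actually prove this lemma: its entire proof is a one-line citation to Theorem~9.2 of Kearns--Vazirani. So there is no argument to compare your approach against; your MGF outline is exactly the standard derivation one would give.

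However, your instinct in the last paragraph is correct, and you should not brush it aside as a minor double-check: the constants in the lemma are paired with the wrong tails. The inequality you wrote down for part~(1),
\[
(1+\epsilon)\ln(1+\epsilon) - \epsilon \;\geq\; \frac{\epsilon^2}{2},
\]
is false for every $\epsilon>0$. Indeed, setting $g(\epsilon)$ equal to the left side minus the right, one has $g(0)=g'(0)=0$ and $g''(\epsilon)=\frac{1}{1+\epsilon}-1<0$, so $g$ is strictly negative on $(0,\infty)$. This is not merely a defect of the Poisson-MGF route: the true large-deviation rate for $\{\tilde p_N\geq(1+\epsilon)p\}$ is the binary KL divergence $D\bigl((1+\epsilon)p\,\|\,p\bigr)$, and for small $p$ this tends to $p\bigl[(1+\epsilon)\ln(1+\epsilon)-\epsilon\bigr]$, which we just saw is strictly below $p\epsilon^2/2$. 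So part~(1) as stated is actually false, not just unprovable by your method. The correct pairing is $e^{-Np\epsilon^2/2}$ for the lower tail and $e^{-Np\epsilon^2/3}$ (or $e^{-Np\epsilon^2/(2+\epsilon)}$) for the upper tail; your second displayed inequality, for the lower tail, \emph{is} true and your $f''\geq 0$ argument proves it.

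For the paper's purposes this swap is mostly harmless: the two-sided Lemma~\ref{lem:ChernoffApplication} immediately discards the sharper constant and keeps only $e^{-Np\epsilon^2/3}$, and the data-fragmentation lemma uses the lower tail with the (unnecessarily weak) $1/3$ constant. The one place the erroneous upper-tail constant is invoked directly is in the appendix proof of Proposition~\ref{prop:plogpPositiveDeviation}, which is not used in the main results.
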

\begin{proof}
This is \textit{one} of the forms of the Chernoff inequality, given for example as
Theorem 9.2 in \cite{kearns1994introduction}.
\end{proof}
It is very easy to deduce a two-sided version from the one-sided Chernoff bound. 
\begin{lem}\label{lem:ChernoffApplication}
With the notation as above we have
\[
\mathrm{Pr}\left\{\left|  1-\frac{\tilde{p}_N}{p} \right|\geq \epsilon \right\}\leq 2e^{-Np\epsilon^2/3}.
\]
\end{lem}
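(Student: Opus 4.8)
The plan is to decompose the symmetric deviation event into its two one-sided pieces and apply the already-established one-sided bounds of Lemma \ref{lem:ChernoffMultiplicative}. The key observation is the set identity
\[
\left\{\left|1-\tfrac{\tilde{p}_N}{p}\right|\geq\epsilon\right\}=\left\{\tfrac{\tilde{p}_N}{p}\geq 1+\epsilon\right\}\;\cup\;\left\{\tfrac{\tilde{p}_N}{p}\leq 1-\epsilon\right\},
\]
so that, by the union bound, $\mathrm{Pr}\left\{\left|1-\tilde{p}_N/p\right|\geq\epsilon\right\}$ is at most the sum of the two one-tail probabilities.

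First I would apply part (1) of Lemma \ref{lem:ChernoffMultiplicative} to the upper tail, giving $\mathrm{Pr}\left\{\tilde{p}_N/p\geq 1+\epsilon\right\}\leq e^{-Np\epsilon^2/2}$, and part (2) to the lower tail, giving $\mathrm{Pr}\left\{\tilde{p}_N/p\leq 1-\epsilon\right\}\leq e^{-Np\epsilon^2/3}$. Then, since $1/2\geq 1/3$ and the exponents are nonpositive, we have $e^{-Np\epsilon^2/2}\leq e^{-Np\epsilon^2/3}$, so the sum of the two bounds is at most $2e^{-Np\epsilon^2/3}$, which is exactly the asserted inequality.

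There is really no obstacle here: the only ``move'' is to absorb the sharper upper-tail exponent ($\epsilon^2/2$) into the weaker lower-tail exponent ($\epsilon^2/3$) so that a single clean closed form covers both tails simultaneously. The mild loss incurred by reporting $\epsilon^2/3$ instead of a tail-dependent exponent is harmless for the downstream applications in the concentration results for $p\log p$, entropy, and mutual information.
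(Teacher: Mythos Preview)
Your proof is correct and follows essentially the same approach as the paper: split the two-sided event into its upper and lower tails, apply the two parts of Lemma~\ref{lem:ChernoffMultiplicative}, and then absorb the sharper $e^{-Np\epsilon^2/2}$ bound into the weaker $e^{-Np\epsilon^2/3}$ bound to obtain the single factor of $2e^{-Np\epsilon^2/3}$.
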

 For the proof of Lemma \ref{lem:ChernoffApplication} from
 Lemma \ref{lem:ChernoffMultiplicative}, see Section \ref{sec:TechnicalLemmas} below.

Our aim in the following is to prove an effective version of Lemma 1 from \cite{hoffgen1993learning}, which we will achieve in Proposition \ref{prop:plogp} 
below.  In order to do that, the next proposition will bound the probability of large deviations of $f(x)=x\log x$. 
The main techniques we will use in the proof are the Chernoff bounds (Lemma \ref{lem:ChernoffMultiplicative} above)
and the Mean-value Theorem.  Let us note that the principal difficulty in obtaining the result is the need to obtain
a bound which is independent of $p$, a task which is complicated by the fact that as $p\rightarrow 0^+$, we have $f'(p)\rightarrow-\infty$.  In Appendix \ref{app:relEntropyLogLikelihood}, we prove a much simpler ``one-sided" bound, Proposition \ref{prop:plogpPositiveDeviation}, which applies only when $\tilde{p}>p$.  The proof of that proposition makes use of the well-known inequality 
\begin{equation}\label{eqn:tangencyInequality}
\log(x)\leq x-1.
\end{equation}
(with equality at the point of tangency $x=1$); the multiplicative Chernoff bounds; and elementary analytic estimates.  The following
two-sided bound, is the only one we use in the sequel.

\begin{proposition}\label{prop:plogpNegativeDeviation}
Let $\epsilon\geq 0$, $p\in [0,1]$.  Denote by $\tilde{p}_N$ the empirical average of a sequence drawn from the Bernoulli distribution $X(p)$.  
Then we have 
 \[
  \mathrm{Pr}\left\{ 
\left|
\tilde{p}_N\log\tilde{p}_N-p\log p
\right|\geq \epsilon
 \right\}
\leq
3\mathrm{exp}\left(
-N\min\left[
\frac{ \epsilon^2}
{3(1+\log 2)^2},
\frac{1}{24},
\frac{\exp(W_{-1}(-\epsilon))}{12}
\right]
\right).
  \]
Note that this bound is \textit{independent of $p$}.
\end{proposition}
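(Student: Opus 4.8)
The plan is to reduce the estimation of $|\tilde p_N\log\tilde p_N-p\log p|$ to the scalar concentration of $\tilde p_N$ around $p$ and then feed this into the mean-value theorem, using the multiplicative Chernoff bounds of Lemmas~\ref{lem:ChernoffMultiplicative}--\ref{lem:ChernoffApplication} as the only probabilistic input. First I would collect the elementary properties of $f(x):=x\log x$ on $[0,1]$: $f(0)=f(1)=0$; $f'(x)=1+\log x$, so $f$ decreases on $(0,1/e)$ and increases on $(1/e,1)$, while $|f|\le 1/e$ throughout; $|f'(x)|\le 1+\log 2$ exactly on $[1/(2e^2),1]$; and, crucially, for $\epsilon\in(0,1/e)$ the equation $-x\log x=\epsilon$ has a unique small root, which by the substitution $x=e^{-t}$, $te^{-t}=\epsilon$, $t=-W_{-1}(-\epsilon)$ is exactly $\exp(W_{-1}(-\epsilon))$, and $|f|$ is increasing on $(0,1/e)$ past that root. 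Since $|f(\tilde p_N)-f(p)|\le 1/e$ always, the inequality is vacuous for $\epsilon\ge 1/e$, so I may assume $\epsilon\in(0,1/e)$.

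The core is a case split on the size of $p$. In the \emph{bulk} regime, where $p$ is bounded away from $0$ (roughly $p\ge 1/(2e^2)+\epsilon/(1+\log 2)$), I would apply the two-sided multiplicative Chernoff bound with relative deviation $\delta=\epsilon/((1+\log 2)p)$; one checks $\delta\le 1$ and that $[(1-\delta)p,(1+\delta)p]\subseteq[1/(2e^2),2]$, where $|f'|\le 1+\log 2$, so that off the deviation event the mean-value theorem gives $|f(\tilde p_N)-f(p)|\le(1+\log 2)\delta p=\epsilon$. The Chernoff exponent is $Np\delta^2/3=N\epsilon^2/\big(3(1+\log 2)^2p\big)$, and since $p\le 1$ this is at least $N\epsilon^2/\big(3(1+\log 2)^2\big)$ — the first term in the minimum — with prefactor $2$. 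In the \emph{tiny} regime, where $p\le\tfrac12\exp(W_{-1}(-\epsilon))$, I would argue by monotonicity instead: if also $\tilde p_N\le\exp(W_{-1}(-\epsilon))$ then $p$ and $\tilde p_N$ both lie in $(0,1/e)$ where $|f|$ is monotone, so $|f(\tilde p_N)-f(p)|\le|f(\exp(W_{-1}(-\epsilon)))|=\epsilon$; the complementary event $\{\tilde p_N\ge\exp(W_{-1}(-\epsilon))\}$ is a relative deviation of size at least $1$, which the one-sided Chernoff bound controls, the exponent being minimized at $p=\tfrac12\exp(W_{-1}(-\epsilon))$ and yielding at least $N\exp(W_{-1}(-\epsilon))/12$ — the third term — with prefactor $1$.

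The remaining, \emph{intermediate} regime of $p$ is where $f'$ is finite but blows up as $\epsilon\to 0$, and is again handled by Chernoff plus the mean-value theorem, now with the $p$-dependent Lipschitz bound $L=1+|\log(cp)|$, valid on a fixed multiplicative neighbourhood $[cp,c^{-1}p]$ of $p$ (a fixed $c\in(0,1)$), and relative deviation $\delta=\min(1-c,\;\epsilon/(Lp))$. The key algebraic point is that $L^2p=c^{-1}\big(1+|\log(cp)|\big)^2(cp)$ is a constant times at most $\sup_{q\in(0,1]}(1+|\log q|)^2q=4/e$ (attained at $q=1/e$), so the resulting exponent $N\epsilon^2/(3L^2p)$ stays above the first term of the minimum; when the truncation $\delta=1-c$ is active one instead invokes the lower bound $p>\tfrac12\exp(W_{-1}(-\epsilon))$ to keep the exponent comparable to the third term. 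The residual absolute constant $1/24$ in the statement is the exponent produced by the one bookkeeping case in which $p$ exceeds a fixed constant and a fixed relative deviation ($\delta=1/2$) is used, giving $Np\delta^2/3\ge N/24$. Finally, for any fixed $p$ the event $\{|f(\tilde p_N)-f(p)|\ge\epsilon\}$ is contained in a union of at most three of the deviation events above, so a union bound together with $2e^{-Na_1}+e^{-Na_2}+e^{-Na_3}\le 3\exp(-N\min(a_1,a_2,a_3))$ gives the claimed form, and the bound is independent of $p$ because each $a_i$ is.

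I expect the main obstacle to be the intermediate regime: choosing the cut-off between the ``derivative-bounded'' and ``monotonicity'' arguments so that neither Chernoff exponent degrades below the three target quantities, and, throughout, keeping every multiplicative Chernoff invocation at relative deviation $\le 1$ so that the stated form of the bound applies. The one sharp inequality that forces all the explicit constants is $\sup_{q\in(0,1]}(1+|\log q|)^2q=4/e$, which is precisely what must be weighed against $(1+\log 2)^2$.
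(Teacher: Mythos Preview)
Your bulk and tiny regimes are essentially sound (the tiny regime tacitly uses the stochastic monotonicity of $\mathrm{Pr}_p\{\tilde p_N\ge c\}$ in $p$, which is fine), but the intermediate regime does not close with the constants in the statement, and this is a genuine gap rather than bookkeeping. Concretely: in that regime you apply the mean-value theorem on $[cp,c^{-1}p]$ and Chernoff with $\delta=\min(1-c,\;\epsilon/(Lp))$. When $\delta=\epsilon/(Lp)$ you need $L^2p\le c^{-1}\cdot 4/e\le (1+\log 2)^2$, which forces $c\ge 4/\big(e(1+\log 2)^2\big)\approx 0.51$. When the truncation $\delta=1-c$ is active, the Chernoff exponent is $Np(1-c)^2/3$; invoking $p>\tfrac12\exp(W_{-1}(-\epsilon))$ and matching the third term $\exp(W_{-1}(-\epsilon))/12$ forces $(1-c)^2\ge 1/2$, i.e.\ $c\le 1-1/\sqrt 2\approx 0.29$. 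No $c$ satisfies both. You cannot rescue the truncated sub-case via the first or second term either: throughout the intermediate regime $p<1/(2e^2)+o(1)$, so $p(1-c)^2/3<1/24$; and near its lower boundary $p|\log p|\sim\epsilon$ while $p\ll\epsilon$, so $p(1-c)^2\ll\epsilon^2$.

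The paper sidesteps this by splitting not on the parameter $p$ but on the \emph{event} $\{\tilde p_N\ge p/2\}$ versus $\{\tilde p_N<p/2\}$, uniformly over all $p$. On $\{\tilde p_N\ge p/2\}$ it applies the mean-value theorem on the \emph{asymmetric} interval $[p/2,1]$, with Lipschitz bound $\max(1,|f'(p/2)|)$; the decisive step is a single global optimization of $g(p)=p\cdot\max\{1,|f'(p/2)|^2\}$ over $p\in(0,1]$, whose maximum the paper computes to be $(1+\log 2)^2$ --- this is where that constant actually originates, and it covers every $p$ at once, so no intermediate regime ever appears. On $\{\tilde p_N<p/2\}$, Chernoff at the fixed relative deviation $1/2$ gives exponent $Np/12$, and one then shows that if additionally $|f(\tilde p_N)-f(p)|\ge\epsilon$ and $p\le 1/2$, then necessarily $p\ge\exp(W_{-1}(-\epsilon))$ (not half of it), producing the third term; the case $p>1/2$ produces the $1/24$. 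What your symmetric window $[cp,c^{-1}p]$ loses is precisely the room that the asymmetric interval $[p/2,1]$ buys.
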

\begin{proof}
Let $f(x)=x\log x$.  Our task is to bound the probability under $\tilde{p}_N\sim X(p)$ of the event
\[
\left\{|f(\tilde{p}_N)-f(p)|\geq \epsilon\right\}.
\]
We consider two cases, first when $\tilde{p}_N\geq \frac{1}{2}p$ 
(we use the constant $\frac{1}{2}$ for convenience), and second when $\tilde{p}_N< \frac{1}{2}p$.  
The reason for breaking up the argument into these two cases is that if $\tilde{p}_N\geq \frac{1}{2}p$, then the derivative of $f(x)$, which is
\[
f'(x)=1+\log x,
\]
can be bounded, in absolute value, on the interval with endpoints $p$ and $\tilde{p}$, in terms of $p$, whereas when $\tilde{p}_N<\frac{1}{2}p$, this cannot be done because $f'(x)$
approaches negative infinity as $x\rightarrow 0^+$.

So assuming that $\tilde{p}_N\geq \frac{1}{2}p$, note that the interval with endpoints $\{\tilde{p}_N,p\}$ is a subinterval
of $(\frac{p}{2},1)$.  The derivative of $f'(x)$, that is $f''(x)$, equals $\frac{1}{x}$, which is positive on the entire unit interval.
Thus, the maximum of the absolute value $f'(x)$ on any subinterval of the entire unit interval can occur only at the 
endpoints.  That is to say, 
\[
|f'(p')|\leq \max\left\{  |f'(1)|,\left|f'\left(\frac{p}{2}\right)\right|  \right\}\;\text{for all}\;p'\in\left(\frac{p}{2},1\right).
\]
The mean value theorem says that there is a $p'\in\left(\frac{p}{2},1\right)$ satisfying the mean value property:
\[
f(\tilde{p})-f(p)=(\tilde{p}_N-p)f'(p').
\]
Now, take the absolute value of both sides, assume that $|f(\tilde{p}_N)-f(p)|\geq \epsilon$, and solve for $|\tilde{p}_N-p|$, to obtain
the inequality
\[
|p-\tilde{p}_N|\geq \frac{\epsilon}{\max(|f'(1)|, |f'(p/2)|)}=\frac{\epsilon}{\max(1, |1+\log p-\log 2|)}
\]
Dividing both sides by $p$, we obtain
\[
\left|\frac{\tilde{p}_N}{p}-1\right|\geq\frac{\epsilon}{p\max\{ 1,|1+\log p + \log 2| \}}.
\]
Thus far we have shown that
\[
|f(\tilde{p}_N)-f(p')|\geq \epsilon \Rightarrow \left| \frac{\tilde{p}_N}{p} -1 \right|
\geq \frac{\epsilon}{p\max\left\{ 1, |1+\log p + \log 2| \right\}}.
\]
Using Lemma \ref{lem:ChernoffApplication} (which applies independently of any assumption $\tilde{p}_N<p$!) 
we have the estimate
\begin{equation}\label{eqn:estimateptildelarge}
\mathrm{Pr}\left\{ |f(\tilde{p}_N)-f(p)|\geq \epsilon,\;\&\; \tilde{p}_N\geq \frac{1}{2}p \right\}\leq 2\mathrm{exp}\left(
\frac{-N\epsilon^2}
{3p\max\{1,|1+\log p+\log 2|^2\}}
\right).
\end{equation}
In order to obtain a bound independent of $p$, we have to find the global maximum of the function appearing
in the denominator, namely
\[
g(x)=x\max\{ 1,|1+\log x + \log 2|^2 \}
\]
on the unit interval.
By solving the equation $1+\log x + \log 2=1$, we eliminate the $\max$ and obtain that $g(x)$ is piecewise defined in terms of elementary functions by
\[
g(x)=\begin{cases}
x(1+\log x+\log 2)^2& 0<x<\frac{1}{2}e^{-2}\\
x                            & \frac{1}{2}e^{-2}\leq x<\frac{1}{2}\\
x(1+\log x+\log 2)^2 & \frac{1}{2}\leq x<1.
\end{cases}
\]
We calculate
\[
g'(x)=\begin{cases}
(\log(2) + \log(x) + 1)^2 + 2\log(2) + 2\log(x) + 2,& \;\text{if}\;0<x<\frac{1}{2}e^{-2}\\
 1,                            & \;\text{if}\;\frac{1}{2}e^{-2}\leq x<\frac{1}{2}\\
(\log(2) + \log(x) + 1)^2 + 2\log(2) + 2\log(x) + 2,& \;\text{if}\;\frac{1}{2}\leq x<1
\end{cases}.
\]
\begin{figure}\label{fig:gandgPrime}
\hspace*{-1.3cm}\mbox{\subfigure{\includegraphics[width=3in]{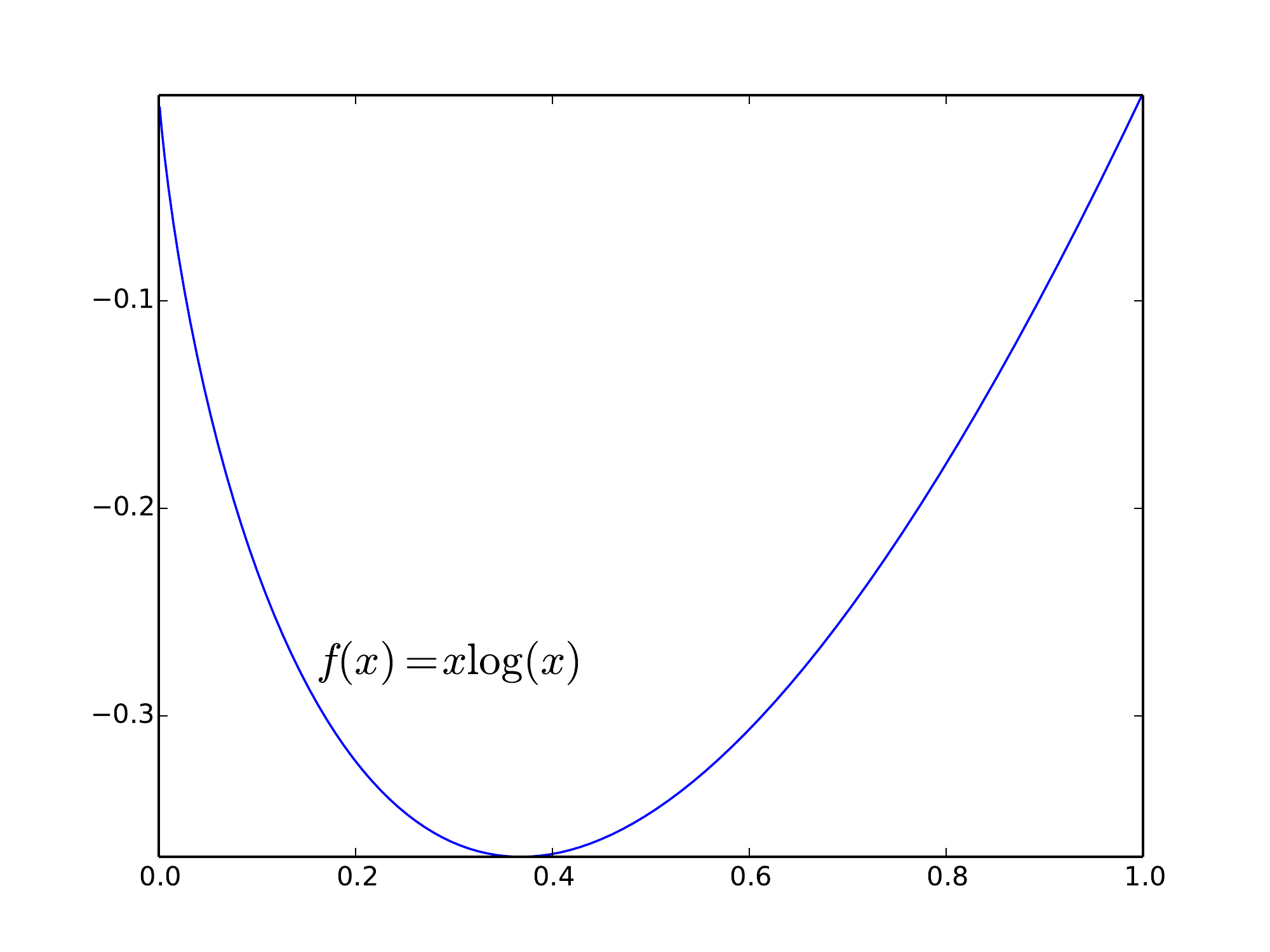}}\quad
\subfigure{\includegraphics[width=3in]{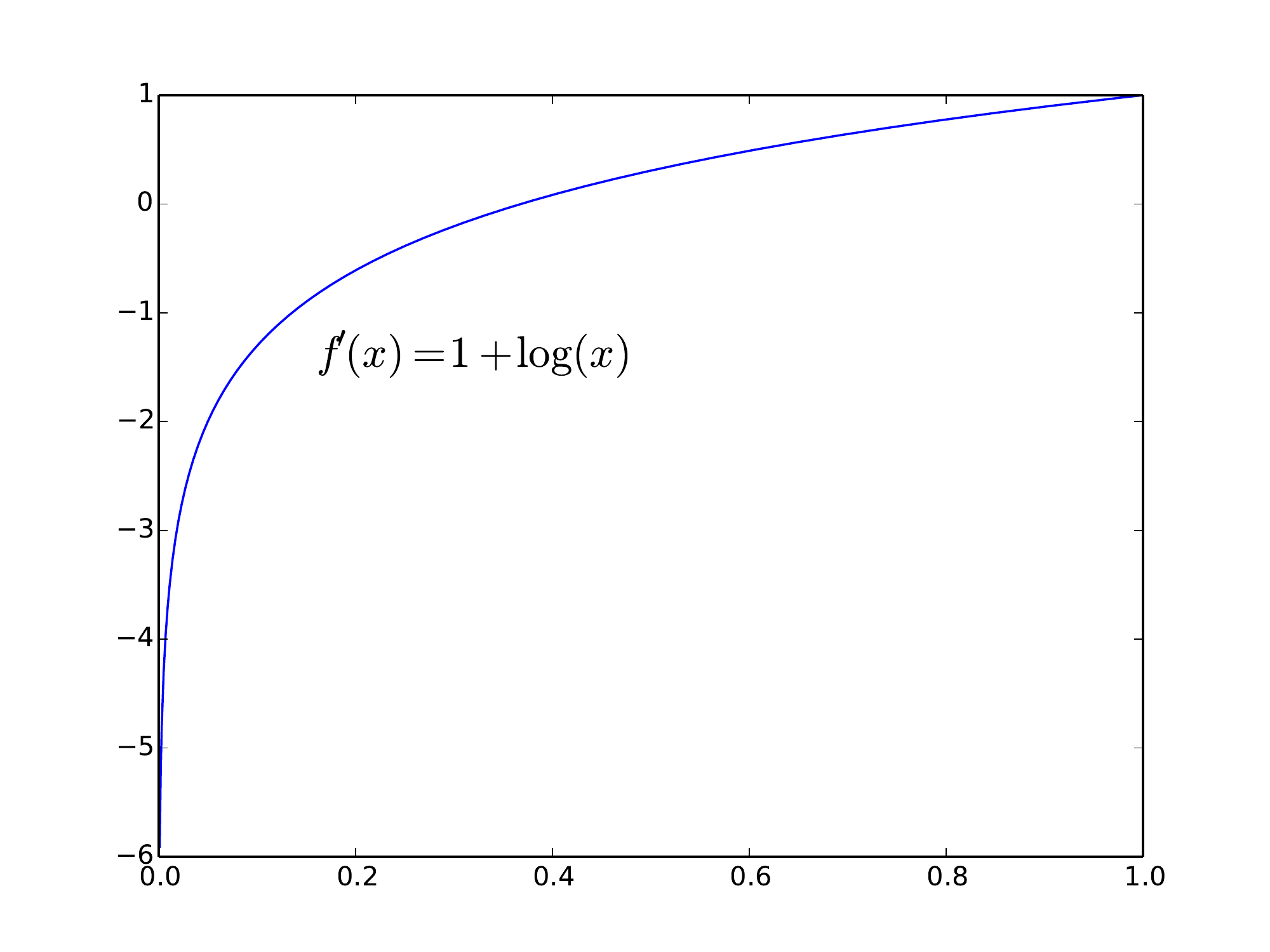} }}
\hspace*{-1.3cm}\mbox{\subfigure{\includegraphics[width=3in]{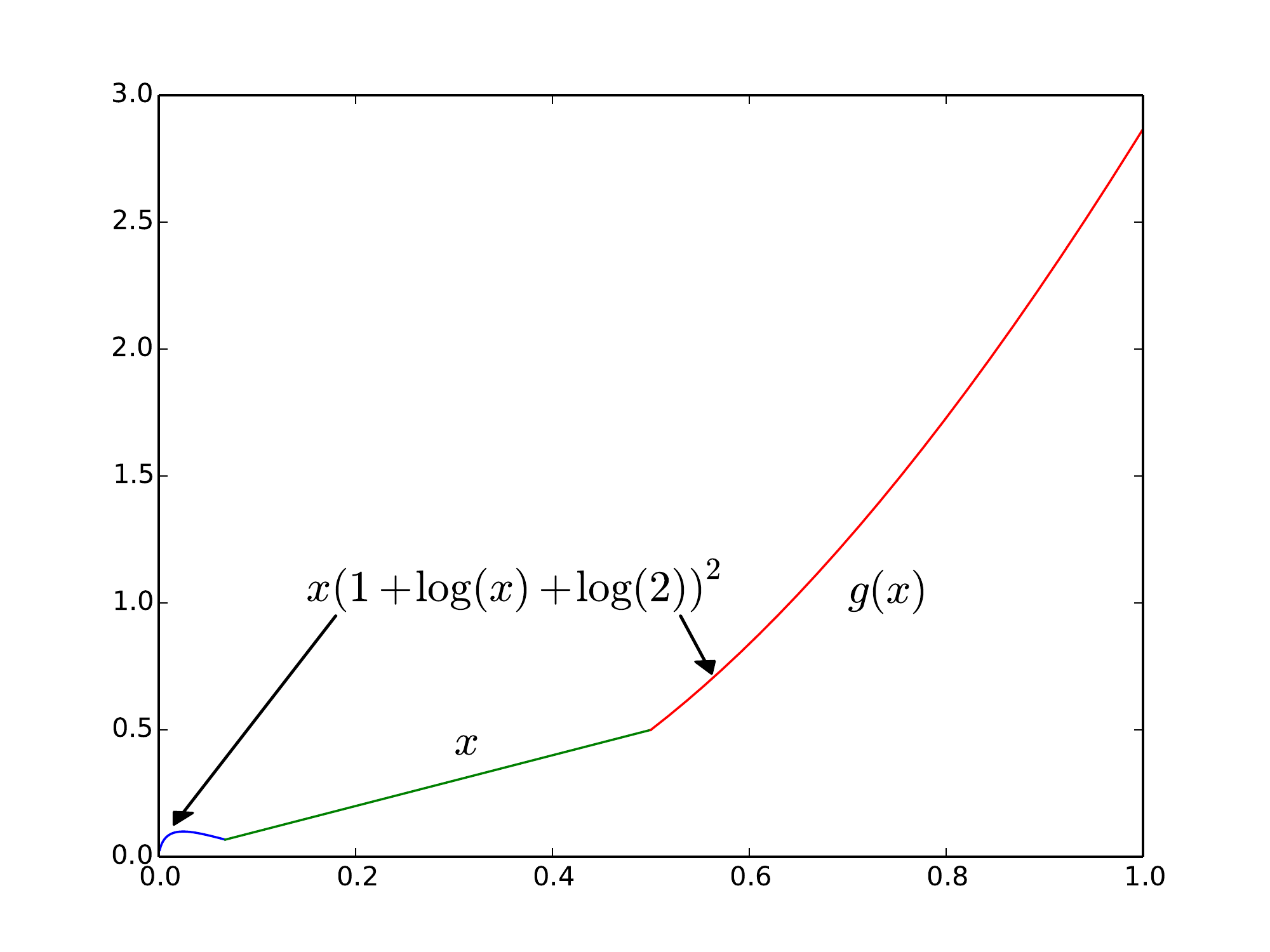}}\quad
\subfigure{\includegraphics[width=3in]{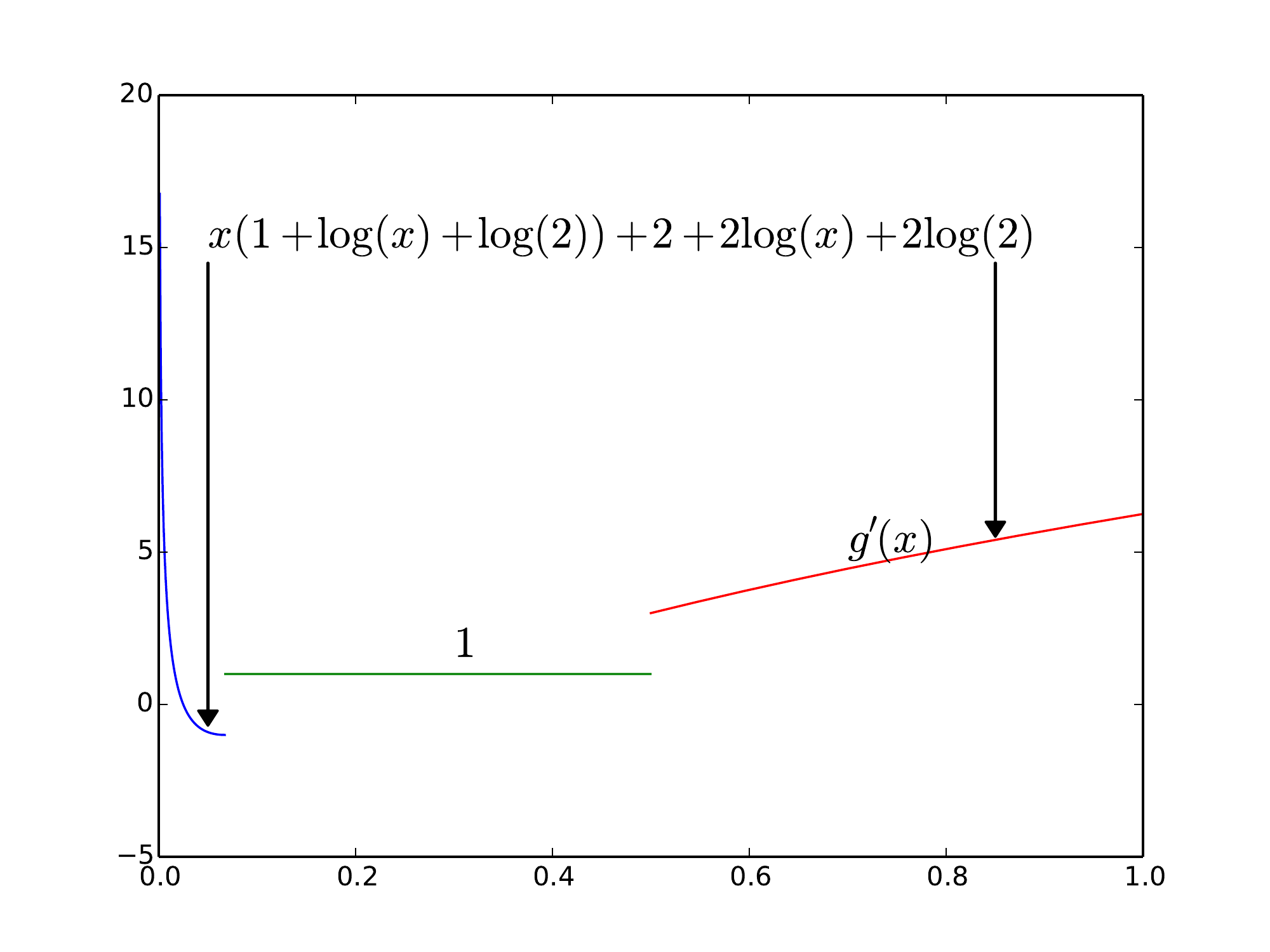} }}
\caption{\textit{Top left.}  Graph of $f$ illustrating that  if we fix $\tilde{p}_N=\frac{1}{2}p$, then $p=\frac{1}{2}$ is the unique value in the unit interval
at which $f(\tilde{p}_N)=f(p)$.  \textit{Top right.}  Graph of $f'$ illustrating its monotonicity.  \textit{Bottom left.}  
Graph of $g$ illustrating its continuity and the location of its unique global maximum on the unit interval at the right endpoint $1$.  \textit{Bottom right.}  
Graph of $g'$ illustrating that its only zero is at $\frac{1}{2}e^{-3}$. }
\end{figure}

The function $(\log(2) + \log(x) + 1)^2 + 2\log(2) + 2\log(x) + 2$ has zeros at $\frac{1}{2}e^{-3}$
and $\frac{1}{2}e^{-1}$.  Of these, only the first is a zero of $g'(x)$.  
By substituting in values of $x$ from the intervals, we see that $g'(x)$ is positive on $\left( 0,\frac{1}{2}e^{-3} \right)$,
negative on $\left( \frac{1}{2}e^{-3},\frac{1}{2}e^{-2} \right)$, and positive on $\left(\frac{1}{2}e^{-2},1\right)$.
Therefore, by one-variable differential calculus,
$g(x)$ is increasing on $(0,\frac{1}{2}e^{-3})$, decreasing on $(\frac{1}{2}e^{-3},\frac{1}{2}e^{-2})$, and increasing
on the intervals $(\frac{1}{2}e^{-2},1)$.  Further, note that whereas $g'(x)$ is not continuous,
$g(x)$ is continuous (which is clear from the definition of $g(x)$).  The continuity of $g(x)$,
combined with the preceding observation of the slope of $g(x)$ on the different intervals 
allows us to rule out $\frac{1}{2}$ as a possible local extremum of $g(x)$.
As a result, the only possible
points for the maximum of $g(x)$ on the interval are $\frac{1}{2}e^{-3}$ and $1$.  The
values of $g(x)$ at these points are $2e^{-3}$ and $(1+\log 2)^2$.  Of these two values, only
$(1+\log 2)^2>1$, so the maximum of $g(x)$ on the interval $[0,1]$ is $(1+\log 2)^2$, attained only at the right endpoint $1$.  
Using this maximum value to bound the denominator of \eqref{eqn:estimateptildelarge}, we obtain
\begin{equation}\label{eqn:estimateptildelargePindependent}
\mathrm{Pr}\left\{ |f(\tilde{p}_N)-f(p)|\geq \epsilon\;\&\; \tilde{p}\geq \frac{1}{2}p \right\}\leq 2\mathrm{exp}\left(
\frac{ - N\epsilon^2}
{3(1+\log 2)^2}
\right).
\end{equation}
Turning our attention now to the case $\tilde{p}_{N}<\frac{1}{2}p$, Chernoff's bound yields the estimate
\begin{equation}\label{eqn:estimateptildesmall}
\mathrm{Pr}\left\{\tilde{p}_N<\frac{1}{2}p \right\}\leq 
\mathrm{exp}\left(
\frac{-pN}{12}
\right).
\end{equation}
The problem is that the $p$ in the numerator of the exponent \eqref{eqn:estimateptildesmall} may render
this bound useless if $p$ approaches $0$.  In order to ensure that this does not happen, we use the following
claim:
\begin{quotation}\textit{Claim.}  Assuming that $p\leq \frac{1}{2}$ and 
$\tilde{p}_N\leq\frac{1}{2} p$, $|f(p)-f(\tilde{p}_N)|>\epsilon$ implies that $p\geq\exp(W_{-1}(-\epsilon))$, where
$W_{-1}$ is unique real non-principal branch of the Lambert W function on $(-\frac{1}{e},0)$.  
\end{quotation}
First, note that if we fix $\tilde{p}_N=\frac{1}{2}p$, then $p=\frac{1}{2}$ is the unique value in the unit interval
at which $f(\tilde{p}_N)=f(p)$.  Next, note that $f(x)$ is decreasing from $x=0$ to $x=e^{-1}$, where it attains minimum of $f(e^{-1})=-e^{-1}$, and
thereafter increases from $x=e^{-1}$ to $x=1$.  Consequently, continuing to enforce the equality $\tilde{p}_N=\frac{1}{2}p$
letting $p$ decrease from $\frac{1}{2}$ to $0$, we have $f(\tilde{p}_N)\geq f(p)$ for all $p$ between $0$ and $\frac{1}{2}$.  Finally, keeping $p$
fixed and letting $\tilde{p}_N$ vary in the interval $\left(0,\frac{1}{2}p\right)$, we also have $f(\tilde{p}_N)\geq f(p)$ for all $p\leq \frac{1}{2}$.  So if $|f(p)-f(\tilde{p}_N)|>\epsilon$,
then actually $f(\tilde{p}_N)-f(p)>0$, and so $f(\tilde{p}_N)>f(p)+\epsilon$.

Suppose, in order to obtain a contradiction, that $p< \exp(W_{-1}(-\epsilon))$.  Then $\log p>W_{-1}(-\epsilon)$, so that, applying the function
$ze^z$ (which is the functional inverse of $W_{-1}$) to both sides, we have $p\log p> -\epsilon$.  That is, $f(p)>-\epsilon$.
Consequently, since we are also assuming $f(\tilde{p}_N)>f(p)+\epsilon$, we have $f(\tilde{p}_N)>0$.  But this is impossible, since $f(\tilde{p}_N)<0$ for
$\tilde{p}_N\in (0,1)$.  So contrary to assumption, we have $p\geq \exp(W_{-1}(-\epsilon))$, and this completes the proof of the claim. 
  
From \eqref{eqn:estimateptildesmall}, the Claim, and the fact that the possibilities $p<\frac{1}{2}$ and $p\geq \frac{1}{2}$ are exhaustive,
mutually exclusive possibilities, we then obtain
\begin{equation}\label{eqn:estimateptildesmallPindependent}
\mathrm{Pr}\left\{|f(\tilde{p}_N)-f(p)|\geq \epsilon,\;\&\; \tilde{p}_N<\frac{1}{2}p  \right\}\leq 
\max\left(
\mathrm{exp}\left(
\frac{-N}{24}
\right),\mathrm{exp}\left(-
\frac{\exp(W_{-1}(-\epsilon))N}{12}
\right)
\right).
\end{equation}
Now, we have two bounds, both independent of $p$, for the probability of $\left\{ \tilde{p}_N<\frac{1}{2}p \right\}$
for the possibilities of $\tilde{p}_N\geq\frac{1}{2}p$, \eqref{eqn:estimateptildelargePindependent}, and  $\tilde{p}_N<\frac{1}{2}p$,
\eqref{eqn:estimateptildesmallPindependent}.  Since the two conditions on $\tilde{p}_N$, namely 
$\tilde{p}_N\geq \frac{1}{2}p$ and $\tilde{p}_N < \frac{1}{2}p$, are exhaustive,
\begin{multline}
\mathrm{Pr}\left\{|f(\tilde{p}_N)-f(p)|\geq \epsilon \right\}\leq \\
2\mathrm{exp}\left(
\frac{ - N\epsilon^2}
{3(1+\log 2)^2}
\right)+\max\left(
\mathrm{exp}\left(
\frac{-N}{24}
\right),
\mathrm{exp}\left(-
\frac{\exp(W_{-1}(-\epsilon))N}{12}
\right)
\right),
\end{multline}
an upper bound which is less than or equal to the upper bound given in the Proposition. 
\end{proof}

We are now able to derive an effective version of the bound in \cite{hoffgen1993learning}.
\begin{proposition}\label{prop:plogp}
Let $\epsilon, \delta>0$ be given.  Let
\[
N(\epsilon, \delta)=
 \max\left[
\frac{3(1+\log 2)^2}
{ \epsilon^2},24,
\frac{12}
{\exp(W_{-1}(-\epsilon))}
\right]\cdot 
\log\frac{3}{\delta}.
\]
Then 
\[
\mathrm{Pr}\left(|\tilde{p}_N\log\tilde{p}_N - p\log p|\geq \epsilon\right) \leq \delta.
\]
Note that $N=O((\frac{1}{\epsilon})^2\log\left(\frac{1}{\delta}\right))$ as $\epsilon,\delta\rightarrow 0$.
\end{proposition}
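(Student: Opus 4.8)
The plan is to obtain Proposition~\ref{prop:plogp} as an essentially immediate corollary of the two-sided concentration estimate already established in Proposition~\ref{prop:plogpNegativeDeviation}. That proposition supplies, for every $p\in[0,1]$,
\[
\mathrm{Pr}\left\{\left|\tilde{p}_N\log\tilde{p}_N-p\log p\right|\geq\epsilon\right\}\leq 3\exp\left(-N\,c(\epsilon)\right),\qquad c(\epsilon):=\min\left[\frac{\epsilon^2}{3(1+\log 2)^2},\ \frac{1}{24},\ \frac{\exp(W_{-1}(-\epsilon))}{12}\right],
\]
and since $\epsilon>0$ the quantity $c(\epsilon)$ is strictly positive. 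So the only thing left to do is to choose $N$ large enough that the right-hand side is at most $\delta$, i.e.\ so that $3\exp(-N c(\epsilon))\leq\delta$, which rearranges to $N\geq c(\epsilon)^{-1}\log(3/\delta)$.

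The key observation is then that the reciprocal of a minimum of positive numbers is the maximum of the reciprocals, so
\[
c(\epsilon)^{-1}=\max\left[\frac{3(1+\log 2)^2}{\epsilon^2},\ 24,\ \frac{12}{\exp(W_{-1}(-\epsilon))}\right],
\]
which is exactly the bracketed factor in the definition of $N(\epsilon,\delta)$. Substituting $N=N(\epsilon,\delta)=c(\epsilon)^{-1}\log(3/\delta)$ gives $3\exp(-N c(\epsilon))=3\exp(-\log(3/\delta))=\delta$, which proves the probability bound. The one place to be careful is getting the direction of the inequality right when inverting $3e^{-Nc}\leq\delta$; everything else is bookkeeping.

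For the closing $O(\cdot)$ remark I would unwind the third term in the maximum using the defining relation $W_{-1}(-\epsilon)\,e^{W_{-1}(-\epsilon)}=-\epsilon$: writing $w=W_{-1}(-\epsilon)<0$ this gives $e^{w}=\epsilon/|w|$, hence $12/\exp(W_{-1}(-\epsilon))=12|w|/\epsilon$. By the asymptotics of the non-principal branch (Lemma~\ref{lem:LambertWasymptotics}), $|W_{-1}(-\epsilon)|\sim\log(1/\epsilon)$ as $\epsilon\to0^+$, so this term is $\Theta\!\left(\epsilon^{-1}\log(1/\epsilon)\right)=o(\epsilon^{-2})$, while the constant $24$ is trivially negligible; thus for small $\epsilon$ the maximum is governed by $3(1+\log 2)^2/\epsilon^2$ and $N(\epsilon,\delta)=O\!\left(\epsilon^{-2}\log(1/\delta)\right)$ as $\epsilon,\delta\to0$. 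There is no genuinely difficult step here: the analytic work is entirely front-loaded into Proposition~\ref{prop:plogpNegativeDeviation}, and the remaining obstacle, if any, is simply invoking the correct branch and asymptotic expansion of the Lambert $W$ function for the last claim.
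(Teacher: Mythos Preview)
Your proposal is correct and follows essentially the same route as the paper: invoke Proposition~\ref{prop:plogpNegativeDeviation}, solve $3e^{-Nc(\epsilon)}\le\delta$ for $N$ (using that the reciprocal of a minimum is the maximum of reciprocals), and then argue via Lemma~\ref{lem:LambertWasymptotics} that the Lambert-$W$ term is $o(\epsilon^{-2})$ so the $\epsilon^{-2}$ term dominates. Your use of the defining relation $we^{w}=-\epsilon$ to rewrite $12/\exp(W_{-1}(-\epsilon))=12|w|/\epsilon$ is a slightly cleaner way to extract the asymptotic than the paper's ``exponentiate the lemma'' phrasing, but the substance is identical.
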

\begin{proof}
Solve for $N$ in the bound of Proposition \ref{prop:plogpNegativeDeviation} in order to obtain the bound on $N_n(\epsilon,\delta)$.
In terms of the notation we have introduced concerning the Lambert W function, the term involving the Lambert W 
may be written as $12e^{\mathcal{W}(\epsilon)}$.  By exponentiating Lemma \ref{lem:LambertWasymptotics}, we see that as $\epsilon\rightarrow 0^+$,
 $e^{\mathcal{W}(\epsilon)}$ goes to $\infty$ at the rate $O\left(\epsilon^{-1}\left(\log \epsilon^{-1}\right)\log\log\left(\epsilon^{-1}\right)\right)$.  Note that we actually obtain a tigher asymptotic for $N$
as $\epsilon,\delta\rightarrow 0$ than that stated in \cite{hoffgen1993learning}, which states the asymptotic
$N=O\left( (\frac{1}{\epsilon})^2\log\left( \frac{1}{\epsilon}\right)^2\log\left(\frac{1}{\delta}\right)\right)$.
\end{proof}

Now we are going to use Proposition \ref{prop:plogpNegativeDeviation} to estimate from below the quantity
\[
\beta^{p^\eta}_N(\gamma)\;\text{for}\; \gamma>\eta>0.
\]
The reason that the Proposition applies to estimate this quantity is that, setting
\[
\Delta = \gamma - \eta > 0,
\] 
we have
\begin{equation}\label{eqn:betaAsLargeDeviation}
\begin{aligned}
\beta^{p^\eta}(\gamma)&=\mathrm{Pr}_{Y_N\sim p^{\eta}}\{\tau(Y_N)<\gamma\} \\
                                     &=1-\mathrm{Pr}_{Y_N\sim p^{\eta}}\{\tau(Y_N)\geq\gamma\}\\
                                     &\geq 1- \mathrm{Pr}_{Y_N\sim p^{\eta}}\{  |\tau(Y_N) - \eta | \geq \Delta \}.
\end{aligned}
\end{equation}
There is a well-known relation expressing mutual information in terms of entropies (see e.g. (2.45), p. 21, of \cite{cover2006elements}),
which in our notation says that
\begin{equation}\label{eqn:mutualInformationInTermsOfOrdEntropy}
\tau(p) = H(p_A) + H(p_B) - H(p)\;\text{for all} \;p\in \mathcal{P}.
\end{equation}
In other words, the mutual information of the joint is the sum of the entropies of the marginals minus the entropy of the joint.
Further, each entropy on the right side of \eqref{eqn:mutualInformationInTermsOfOrdEntropy} can be expressed
as the sum of several terms of the form $p_i\log p_i$,  for $p_i$ an event in some Bernoulli distribution.  We can use this
observation concerning the entropy to prove the following.
\begin{lem}\label{lem:empiricalEntropyError}
Let $\gamma > \eta > 0$.  Let $\Delta:=\gamma-\eta$, so that $\Delta>0$.  Let $\mathcal{F}_N(\Delta)$ be the function of $\Delta$
defined as follows.  
\begin{equation}\label{eqn:scriptFNDefn}
\mathcal{F}_N(\Delta) := 24\mathrm{exp}\left(
-N\min\left[
\frac{ \Delta^2}
{3\cdot 64(1+\log 2)^2},\,
\frac{1}{24},\,
\frac{1}{12\exp(\mathcal{W}\left(\frac{\Delta}{8}
\right))}
\right]
\right).
\end{equation}
\nomenclature{$\mathcal{W}(\cdot)$}{$-W_{-1}(-\cdot)$, $W_{-1}$ the real non-principal branch of the Lambert W-function}%
Assuming that $p^\eta\in \mathcal{P}_{2,2}$ (i.e. both of the marginals of $p^\eta$ are \textit{binary} random variables), we have
\begin{equation}\label{eqn:LargeDeviationOfMIEstimate}
\mathrm{Pr}_{Y_N\sim p^{\eta}}\{  |\tau(Y_N) - \tau(p^\eta) | \geq \Delta \} \leq \mathcal{F}_N(\Delta).
\end{equation}
\end{lem}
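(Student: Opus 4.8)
The plan is to reduce the estimate to a single union bound over eight applications of Proposition \ref{prop:plogpNegativeDeviation}. The starting point is the identity \eqref{eqn:mutualInformationInTermsOfOrdEntropy}, $\tau(p) = H(p_A) + H(p_B) - H(p)$. Since $p^\eta \in \mathcal{P}_{2,2}$, every empirical distribution $p_{Y_N}$ arising from $Y_N \sim p^\eta$ also has binary marginals, so $H(p_A)$ and $H(p_B)$ each expand into two terms of the form $-f(q)$ and $H(p)$ into four such terms, where $f(x) = x\log x$; thus $\tau$ is a signed sum of exactly eight terms $\pm f(q)$, with the eight arguments $q$ being $pA_0, pA_1, pB_0, pB_1, p_{00}, p_{01}, p_{10}, p_{11}$.

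First I would note that, under $Y_N \sim p^\eta$, each of these eight probabilities is the empirical average of the indicator of an i.i.d. event (being in joint cell $(i,j)$, or having $X_A = i$, or $X_B = j$), hence is a Bernoulli empirical average in exactly the sense required by Proposition \ref{prop:plogpNegativeDeviation}. Next, by the triangle inequality applied to the eight-term signed sum, the event $\{|\tau(Y_N) - \tau(p^\eta)| \geq \Delta\}$ is contained in the union over the eight arguments $q$ of the events $\{|f(\tilde q_N) - f(q)| \geq \Delta/8\}$. A union bound then gives
\[
\mathrm{Pr}_{Y_N\sim p^\eta}\{|\tau(Y_N) - \tau(p^\eta)| \geq \Delta\} \;\leq\; 8 \max_{q} \mathrm{Pr}_{Y_N\sim p^\eta}\{|f(\tilde q_N) - f(q)| \geq \Delta/8\}.
\]

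Applying Proposition \ref{prop:plogpNegativeDeviation} with $\epsilon = \Delta/8$ bounds each summand by $3\exp(-N\min[\tfrac{(\Delta/8)^2}{3(1+\log 2)^2}, \tfrac1{24}, \tfrac{\exp(W_{-1}(-\Delta/8))}{12}])$. The factor $8$ turns the constant $3$ into $24$; the substitution $(\Delta/8)^2 = \Delta^2/64$ turns the first term into $\tfrac{\Delta^2}{3\cdot 64(1+\log 2)^2}$; and, using $\mathcal{W}(x) = -W_{-1}(-x)$, the third term becomes $\tfrac1{12\exp(\mathcal{W}(\Delta/8))}$. This is exactly $\mathcal{F}_N(\Delta)$ of \eqref{eqn:scriptFNDefn}, completing the argument.

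The only place requiring care --- and the step I expect to be the main obstacle to write cleanly rather than a source of real mathematical difficulty --- is the bookkeeping: one must verify that each empirical marginal cell really is an i.i.d. Bernoulli empirical average so that the hypothesis of Proposition \ref{prop:plogpNegativeDeviation} is met verbatim, and one must get the multiplicity exactly right (two marginal cells for $A$, two for $B$, four joint cells, for a total of eight), since an off-by-one there would propagate into the constants defining $\mathcal{F}_N$. Everything after that is a triangle inequality, a union bound, and the substitution $\epsilon = \Delta/8$.
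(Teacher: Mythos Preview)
Your proposal is correct and follows essentially the same approach as the paper: expand $\tau$ via \eqref{eqn:mutualInformationInTermsOfOrdEntropy} into eight signed $p\log p$ terms, apply Proposition~\ref{prop:plogpNegativeDeviation} with $\epsilon=\Delta/8$, and union-bound to turn the leading $3$ into $24$. Your write-up is in fact more explicit than the paper's, spelling out the triangle-inequality step and the Bernoulli verification that the paper leaves implicit.
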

\begin{proof}
Using \eqref{eqn:mutualInformationInTermsOfOrdEntropy} and the comments 
following that equation, we have that $\tau(p^\eta)$ (resp. $\tau(Y_N)$) is the sum, with appropriate signs, of a
total of $8$ terms of the form $p\log p$ (resp. $\tilde{p}_N\log\tilde{p}_N$).  In order to obtain the desired estimate, we must use the
function of Proposition \ref{prop:plogpNegativeDeviation} with $\epsilon = \frac{1}{8}\Delta$, so that we obtain
an upper bound on the probability of the event,
\begin{multline*}
\mathrm{Pr}\left\{ |\tilde{p}_N\log\tilde{p}_N - p\log p|\geq \frac{\Delta}{8}  \right\}\leq  \\
3\mathrm{exp}\left(
-N\min\left[
\frac{ \Delta^2}
{3\cdot 64(1+\log 2)^2},
\frac{1}{24},
\frac{\exp(W_{-1}(-\Delta/8))}{12}
\right]
\right).
\end{multline*}
We must also estimate the probability of $8$ events in this way, so from the union bound, we have a factor of $24$
outside the exponent.  
\end{proof}

\begin{proposition}\label{prop:mutualInformationLargeDeviation}  
Let $\gamma > \eta > 0$.  Let $\Delta:=\gamma-\eta$, so that $\Delta>0$.  Let $\mathcal{F}_N(\Delta)$ be the function of $\Delta$
defined as above.
Assuming that $p^\eta\in \mathcal{P}_{2,2}$ (i.e. both of the marginals of $p^\eta$ are \textit{binary} random variables), we have
\[
\beta_N^{p^\eta}(\gamma) \geq 1-\mathcal{F}_N(\Delta).
\]
\end{proposition}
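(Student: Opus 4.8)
The plan is to read off the claim by chaining the elementary set-inclusion bound \eqref{eqn:betaAsLargeDeviation} together with the large-deviation estimate of Lemma \ref{lem:empiricalEntropyError}. First I would recall that $\tau(p^\eta)=\eta$ by the very definition of the reference distribution $p^\eta$, so that the event whose probability we must control in \eqref{eqn:LargeDeviationOfMIEstimate} is exactly $\{|\tau(Y_N)-\eta|\geq\Delta\}$ for $Y_N\sim p^\eta$. Then, starting from the definition $\beta^{p^\eta}_N(\gamma)=\mathrm{Pr}_{Y_N\sim p^\eta}\{\tau(Y_N)\leq\gamma\}$, I would write $\beta^{p^\eta}_N(\gamma)=1-\mathrm{Pr}_{Y_N\sim p^\eta}\{\tau(Y_N)>\gamma\}$ and observe that $\{\tau(Y_N)>\gamma\}\subseteq\{\tau(Y_N)-\eta\geq\Delta\}\subseteq\{|\tau(Y_N)-\eta|\geq\Delta\}$, since $\Delta=\gamma-\eta>0$. (If one prefers to match the strict inequality used in \eqref{eqn:betaAsLargeDeviation} exactly, note that $\{\tau<\gamma\}\subseteq\{\tau\leq\gamma\}$ only strengthens the lower bound, so nothing is lost.)

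Combining these two observations gives
\[
\beta^{p^\eta}_N(\gamma)\;\geq\;1-\mathrm{Pr}_{Y_N\sim p^\eta}\{|\tau(Y_N)-\tau(p^\eta)|\geq\Delta\},
\]
and now the hypothesis $p^\eta\in\mathcal{P}_{2,2}$ lets me invoke Lemma \ref{lem:empiricalEntropyError}, whose right-hand side is precisely $\mathcal{F}_N(\Delta)$. Substituting yields $\beta^{p^\eta}_N(\gamma)\geq 1-\mathcal{F}_N(\Delta)$, which is the assertion.

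There is essentially no obstacle here: all of the analytic content — the reduction of $\tau$ to a signed sum of eight $p\log p$ terms via \eqref{eqn:mutualInformationInTermsOfOrdEntropy}, the application of Proposition \ref{prop:plogpNegativeDeviation} with $\epsilon=\Delta/8$, and the union bound producing the factor $24$ — was already carried out in the proof of Lemma \ref{lem:empiricalEntropyError}. The only points worth stating explicitly in the write-up are the identity $\tau(p^\eta)=\eta$ and the set inclusion $\{\tau(Y_N)>\gamma\}\subseteq\{|\tau(Y_N)-\eta|\geq\Delta\}$; everything else is quotation of the preceding results. I would therefore keep the proof to a few lines, presenting it as a direct corollary of \eqref{eqn:betaAsLargeDeviation} and Lemma \ref{lem:empiricalEntropyError}.
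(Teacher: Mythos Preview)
Your proposal is correct and follows essentially the same route as the paper: invoke the set-inclusion bound \eqref{eqn:betaAsLargeDeviation} to reduce to the large-deviation probability $\mathrm{Pr}_{Y_N\sim p^\eta}\{|\tau(Y_N)-\tau(p^\eta)|\geq\Delta\}$, then apply Lemma~\ref{lem:empiricalEntropyError} (i.e.\ \eqref{eqn:LargeDeviationOfMIEstimate}) to bound this by $\mathcal{F}_N(\Delta)$. Your write-up is, if anything, slightly more explicit than the paper's in spelling out $\tau(p^\eta)=\eta$ and the event inclusion, but the argument is identical.
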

\begin{proof}
Continuing from \eqref{eqn:betaAsLargeDeviation}, which bounds $\beta$ from below in terms of a large deviation, we have 
\begin{equation}\label{eqn:betaIntermsOfLargeDeviation}
\beta^{p^\eta}(\gamma)\geq1- \mathrm{Pr}_{Y_N\sim p^{\eta}}\{  |\tau(Y_N) - \tau(p^\eta) | \geq \Delta \}.
\end{equation}
In order to lower-bound $\beta_N^{p^\eta}(\gamma)$, we have to upper bound the probability of the empirical mutual information
$\tau(Y_N)$ deviating from the true value $\eta=\tau(p^\eta)$ by more than $\Delta$.  The Proposition follows immediately from  \eqref{eqn:LargeDeviationOfMIEstimate} and \eqref{eqn:betaIntermsOfLargeDeviation}.
\end{proof}
At this point we collect a number of useful facts concerning $\mathcal{F}_N(\Delta)$ and related
functions.  First, in order to simplify the formulas, make the abbreviation
\[
K_1=3\cdot 64(1+\log2)^2.
\]
\nomenclature{$K_1$}{$3\cdot 64(1+\log2)^2$}
Next, set
\[
F(\Delta)=\min\left[
\frac{\Delta^2}{K_1},\,
\frac{1}{24},\,
\frac{1}{12\exp\mathcal{W}\left(
\frac{\Delta}{8}
\right)}
\right].
\]
Further, set
\[
\tilde{F}(\Delta)=\min\left[
\frac{\Delta^2}{K_1},\,
\frac{1}{12\mathcal{W}\left(
\frac{\Delta}{8}
\right)}
\right].
\]
\nomenclature{$\tilde{F}(\Delta)$}{$\min\left[\frac{\Delta^2}{K_1},\,\frac{1}{12\mathcal{W}\left(\frac{\Delta}{8}\right)}\right]$}
Clearly we have the following relation
\[
F(\Delta) = \min\left[
\frac{1}{24},\,
\tilde{F}(\Delta)
\right],
\]
\nomenclature{$F(\Delta)$ }{$\min\left[\frac{1}{24},\,\tilde{F}(\Delta)\right]$}
and, by \eqref{eqn:scriptFNDefn}
\[
\mathcal{F}_N(\Delta)=24\exp(-NF(\Delta)).
\]
\nomenclature{$\mathcal{F}_N(\Delta)$}{$24\exp(-NF(\Delta)$}
and the positivities,
\[
F(\Delta), \tilde{F}(\Delta), \mathcal{F}_N(\Delta)>0.
\]
Further we define
\begin{equation}\label{eqn:scriptFtildedefn}
\tilde{\mathcal{F}}_N(\Delta):=24\exp(-N\tilde{F}(\Delta)).
\end{equation}
\nomenclature{$ \tilde{\mathcal{F}}_N(\Delta)$ }{$24\exp(-N\tilde{F}(\Delta))$  }
The point of this definition is that while $F_N(\Delta)$, and consequently $\mathcal{F}_N(\Delta)$ 
is constant as a function of $\Delta$ for large $\Delta$, $\tilde{F}_N(\Delta)$ and $\tilde{\mathcal{F}}_N(\Delta)$ are monotonic.
\begin{lem}\label{eqn:FunctionsMonotonicity}
On $\mathbf{R}^+$,  $\tilde{F}_N(\Delta)$ is increasing and $\tilde{\mathcal{F}}_N(\Delta)$ is decreasing. 
\end{lem}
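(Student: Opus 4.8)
The plan is to reduce the whole statement to the monotonicity of the two functions combined by the $\min$ in the definition of $\tilde{F}(\Delta)$, together with two elementary facts: a minimum of increasing functions is increasing, and post-composing an increasing function with a decreasing one gives a decreasing function.

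First I would note that $\Delta\mapsto \Delta^2/K_1$ is increasing on $\mathbf{R}^+$, since $K_1>0$ and its derivative $2\Delta/K_1$ is positive there. The substantive step is to show that $\Delta\mapsto \tfrac{1}{12\mathcal{W}(\Delta/8)}$ is increasing, and for this I would first establish that $\mathcal{W}$ itself is strictly decreasing, and stays $\geq 1$ (hence positive), on its domain. Recall $\mathcal{W}(x)=-W_{-1}(-x)$, where $W_{-1}$ is the branch of Lambert $W$ inverting $g(z)=ze^z$ on $z<-1$. Since $g'(z)=(1+z)e^z<0$ for $z<-1$, the map $g$ is a strictly decreasing bijection from $(-\infty,-1)$ onto $(-1/e,0)$, so its inverse $W_{-1}$ is strictly decreasing from $(-1/e,0)$ onto $(-\infty,-1)$; composing with the decreasing map $x\mapsto -x$ on the inside and $y\mapsto -y$ on the outside shows that $\mathcal{W}$ is strictly decreasing, with values in $[1,\infty)$. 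Consequently $1/\mathcal{W}$ is strictly increasing, and since $\Delta\mapsto\Delta/8$ is increasing, $\Delta\mapsto\tfrac{1}{12\mathcal{W}(\Delta/8)}$ is increasing.

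With both arguments of the $\min$ now increasing, $\tilde{F}(\Delta)$ is increasing: for $\Delta_1<\Delta_2$ each of the two terms at $\Delta_1$ is bounded above by the corresponding term at $\Delta_2$, hence so is their minimum. Finally, $\tilde{\mathcal{F}}_N(\Delta)=24\exp(-N\tilde{F}(\Delta))$ is the composition of the increasing function $\tilde{F}$ with the decreasing function $t\mapsto 24e^{-Nt}$ (using $N>0$), and is therefore decreasing.

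The only real subtlety I would flag is the domain: $\mathcal{W}(\Delta/8)$ is defined only for $\Delta/8\in(0,1/e]$, i.e.\ $\Delta\leq 8/e$, so "on $\mathbf{R}^+$" should be read as "on the natural domain of $\tilde{F}$" — which in the intended application always contains the relevant values $\Delta=\gamma-\eta<\log 2$ for binary marginals. Everything else is routine one-variable calculus, so I expect the Lambert-$W$ monotonicity to be the only nontrivial ingredient.
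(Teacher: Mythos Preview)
Your proof is correct and follows essentially the same route as the paper: show each of the two functions inside the $\min$ is increasing (the first by its derivative, the second via the monotonicity of $\mathcal{W}$), conclude that their minimum is increasing, and then that $\tilde{\mathcal{F}}_N=24\exp(-N\tilde F)$ is decreasing. The only differences are cosmetic: the paper computes the derivative of the second term explicitly (and, due to a typo in the displayed definition of $\tilde F$, actually differentiates $1/(12\exp\mathcal{W}(\Delta/8))$ rather than $1/(12\,\mathcal{W}(\Delta/8))$---your monotonicity argument covers both), whereas you argue more cleanly from the fact that $W_{-1}$ is the inverse of a strictly decreasing map; and you flag the domain restriction $\Delta\le 8/e$, which the paper leaves implicit.
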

\begin{proof}
We have that $\tilde{F}_N(\Delta)$ is the minimum of two functions and we show that these
two functions are increasing.  The derivatives of the two functions with respect to $\Delta$ are
\[
\frac{\partial}{\partial\Delta}\left(\frac{\Delta^2}{K_1}\right)=\frac{2\Delta}{K_1}>0,
\]
and 
\[
\frac{\partial}{\partial\Delta}\frac{1}{12\exp\mathcal{W}\left(
\frac{\Delta}{8}
\right)}=
-\frac{1}{8\cdot 12}\left[
\frac{1}{\exp\mathcal{W}\left(\frac{\Delta}{8}\right)}
\right]^2\exp\mathcal{W}\left(\frac{\Delta}{8}\right)\cdot
\mathcal{W}'\left(\frac{\Delta}{8}\right)>0.
\]
The reason for the second expression's being positive is that
$\mathcal{W}'$ is decreasing on $\mathbf{R}^+$, so that its derivative
is negative.

Thus we have that $\tilde{F}_N(\Delta)$ is the minimum of two increasing functions.
Further, except at the unique point at which these two functions are equal $\tilde{F}_N(\Delta)$
equals one of the two increasing functions, so is increasing.  At the unique point
at which the two functions are equal, $\tilde{F}_N(\Delta)$, though not differentiable,
is still continuous, so it is not difficult to see that $\tilde{F}_N(\Delta)$ is also increasing at that point.
From \eqref{eqn:scriptFtildedefn} $\tilde{\mathcal{F}}_N(\Delta)$
is decreasing,
\end{proof}
Therefore, $\tilde{\mathcal{F}}_N$ has a functional inverse $\tilde{\mathcal{F}}_N^{-1}$ satisfying
\[
\tilde{\mathcal{F}}_N^{-1}\left(
\tilde{\mathcal{F}}_N(\Delta)
\right)=
\tilde{\mathcal{F}}_N
\left(
\tilde{\mathcal{F}}_N^{-1}(\Delta)
\right)=\Delta.
\]
Further, we define the functions
\begin{equation}\label{eqn:ScriptGDefinition}
\mathcal{G}_\Delta(N)=\mathcal{F}_N(\Delta)>0,
\end{equation}
\nomenclature{$\mathcal{G}_\Delta(N)$}{$\mathcal{F}_N(\Delta)$ }
and
\[
\tilde{\mathcal{G}}_\Delta(N)=\tilde{\mathcal{F}}_N(\Delta)>0,
\]
\nomenclature{$\tilde{\mathcal{G}}_\Delta(N)$}{$\tilde{\mathcal{F}}_N(\Delta)$}
which are the functions $\mathcal{F}$ and $\tilde{\mathcal{F}}$
with the index and argument swapped.  Unlike $\mathcal{F}_N(\Delta)$, 
$\mathcal{G}_\Delta(N)$ is monotonic (decreasing) because
\[
\mathcal{G}_{\Delta}'(N) = -F(\Delta)\cdot \mathcal{G}_{\Delta}(N)<0.
\]
Therefore, there is a functional inverse $\mathcal{G}_{\Delta}^{-1}(\Gamma)$
such that
\[
\mathcal{G}_{\Delta}(\mathcal{G}_{\Delta}^{-1}(\Gamma)) = 
\mathcal{G}_{\Delta}^{-1}(\mathcal{G}_{\Delta}(\Gamma))=\Gamma.
\]
The functional inverses defined above are decreasing, a fact which we now record.
\begin{lem}\label{lem:functionalInversesDecreasing}
We have all three of $\tilde{\mathcal{F}}_{N}^{-1}(\Gamma)$ 
and $\mathcal{G}_{\Delta}^{-1}(\Gamma)$, and $\tilde{\mathcal{G}}_{\Delta}^{-1}(\Gamma)$ 
decreasing as a function of $\Gamma$.
Further, the latter two are given by the formula
\begin{equation}\label{eqn:Ginverse}
\mathcal{G}_{\Delta}^{-1}(\Gamma)=(\log 24 - \log\Gamma)\cdot \left[
F(\Delta)
\right]^{-1}
\end{equation}
\nomenclature{ $\mathcal{G}_{\Delta}^{-1}(\Gamma)$ }{ $(\log 24 - \log\Gamma)\cdot \left[F(\Delta)\right]^{-1}$  }
and
\begin{equation}\label{eqn:GTildeinverse}
\tilde{\mathcal{G}}_{\Delta}^{-1}(\Gamma)=(\log 24 - \log\Gamma)\cdot \left[
\tilde{F}(\Delta)
\right]^{-1}
\end{equation}
where 
\begin{equation}\label{eqn:FDeltaReciprocal}
 \left[
F(\Delta)
\right]^{-1}
=\max\left[
\frac{K_1}{\Delta^2},\,
24,\,
12\exp \mathcal{W}\left(
\frac{\Delta}{8}
\right)
\right].
\end{equation}
and
\begin{equation}\label{eqn:FDeltaTildeReciprocal}
 \left[
\tilde{F}(\Delta)
\right]^{-1}
=\max\left[
\frac{K_1}{\Delta^2},\,
12\exp \mathcal{W}\left(
\frac{\Delta}{8}
\right)
\right].
\end{equation}
\end{lem}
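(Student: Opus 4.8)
The plan is to establish the three claimed properties of the functional inverses by reducing everything to two already-available facts: the monotonicity of $\tilde{F}_N(\Delta)$ from Lemma \ref{eqn:FunctionsMonotonicity}, and the explicit closed form $\mathcal{F}_N(\Delta) = 24\exp(-NF(\Delta))$ together with the parallel formula for $\tilde{\mathcal{F}}_N$. First I would dispose of $\tilde{\mathcal{F}}_N^{-1}$: since Lemma \ref{eqn:FunctionsMonotonicity} tells us $\tilde{\mathcal{F}}_N(\Delta)$ is (strictly) decreasing on $\mathbf{R}^+$, its inverse exists and is automatically decreasing — the inverse of a decreasing bijection is decreasing. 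This needs no computation. The work is all in the other two.

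For $\mathcal{G}_\Delta^{-1}$, I would start from the identity $\mathcal{G}_\Delta(N) = \mathcal{F}_N(\Delta) = 24\exp(-NF(\Delta))$, which is a strictly decreasing function of $N$ because $F(\Delta) > 0$ (this positivity was recorded just before Lemma \ref{eqn:FunctionsMonotonicity}, and the derivative computation $\mathcal{G}_\Delta'(N) = -F(\Delta)\mathcal{G}_\Delta(N) < 0$ already appears in the excerpt). To invert, set $\Gamma = 24\exp(-NF(\Delta))$ and solve for $N$: take logarithms to get $\log\Gamma = \log 24 - NF(\Delta)$, hence $N = (\log 24 - \log\Gamma)[F(\Delta)]^{-1}$, which is exactly \eqref{eqn:Ginverse}. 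That this is decreasing in $\Gamma$ is then visible from the formula, since $[F(\Delta)]^{-1} > 0$ and $-\log\Gamma$ is decreasing. The computation of $\tilde{\mathcal{G}}_\Delta^{-1}$ giving \eqref{eqn:GTildeinverse} is identical word-for-word with $F$ replaced by $\tilde{F}$ and $\mathcal{F}_N$ replaced by $\tilde{\mathcal{F}}_N$, using the definition \eqref{eqn:scriptFtildedefn}.

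Finally, the two reciprocal formulas \eqref{eqn:FDeltaReciprocal} and \eqref{eqn:FDeltaTildeReciprocal} are just the elementary observation that the reciprocal of a minimum of positive quantities is the maximum of their reciprocals: from $F(\Delta) = \min\left[\frac{1}{24},\,\frac{\Delta^2}{K_1},\,\frac{1}{12\exp\mathcal{W}(\Delta/8)}\right]$ one reads off $[F(\Delta)]^{-1} = \max\left[24,\,\frac{K_1}{\Delta^2},\,12\exp\mathcal{W}(\Delta/8)\right]$, and similarly for $\tilde{F}$ with the $\frac{1}{24}$ term dropped. Substituting these into \eqref{eqn:Ginverse} and \eqref{eqn:GTildeinverse} completes the identification.

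I do not anticipate a genuine obstacle here — the statement is essentially bookkeeping once Lemma \ref{eqn:FunctionsMonotonicity} is in hand. The only point requiring a word of care is the logical structure: one should first verify each of $\mathcal{G}_\Delta$, $\tilde{\mathcal{G}}_\Delta$, $\tilde{\mathcal{F}}_N$ is a strict monotone bijection from $\mathbf{R}^+$ (or the appropriate realizable interval of $\Delta$) onto its image, so that the inverses are well defined in the first place, and only then read monotonicity off the explicit formulas. The mild subtlety is that $\mathcal{W}$ is defined only on $(0,1/\epsilon)$, so one should note that $\Delta/8$ stays in the admissible range for the $\Delta$ of interest; this is inherited from the standing hypotheses on $\gamma,\eta$ and needs no separate argument.
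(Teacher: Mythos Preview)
Your proposal is correct and follows essentially the same approach as the paper: invoke Lemma~\ref{eqn:FunctionsMonotonicity} for $\tilde{\mathcal{F}}_N^{-1}$, solve $\Gamma = 24\exp(-NF(\Delta))$ explicitly for $N$ to obtain \eqref{eqn:Ginverse} and \eqref{eqn:GTildeinverse}, and read off monotonicity from the formulas. The only cosmetic difference is that the paper verifies $\tilde{\mathcal{F}}_N^{-1}$ is decreasing by computing the derivative of the inverse (with a word about the single point of nondifferentiability), whereas you appeal directly to the general fact that the inverse of a strictly decreasing bijection is decreasing; these are equivalent.
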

\begin{proof}
For readability, we drop the subscripts $N$
from the proof.  All points at which $\tilde{\mathcal{F}}^{-1}$ is differentiable, the formula for the derivative
of a functional inverse says that
\[
\left(
\tilde{\mathcal{F}}^{-1}\right)'(\Gamma)=
\frac{1}{
\tilde{\mathcal{F}}' 
\left(
\tilde{\mathcal{F}}^{-1}(\Gamma)
\right)}
<0,
\]
since $\tilde{\mathcal{F}}_N$, by Lemma \ref{eqn:FunctionsMonotonicity}, is decreasing.  Further 
$\tilde{\mathcal{F}}^{-1}$ is differentiable at all but one point, namely the point $\tilde{\mathcal{F}}_N(\Delta)$ for $\Delta$ the point at which the two expressions defining $\tilde{\mathcal{F}}_N$ are equal.  At that point $\tilde{\mathcal{F}}^{-1}$ is still continuous.  Thus $\tilde{\mathcal{F}}^{-1}$ is decreasing at every point. 
 
As for $\mathcal{G}_{\Delta}^{-1}$, we derive the formula \eqref{eqn:Ginverse}
by solving for $N$ in $\mathcal{G}_{\Delta}(N)=\Gamma$,
 and observe that it is decreasing in $\Gamma$.  Similarly for  $\tilde{\mathcal{G}}_{\Delta}^{-1}$.
\end{proof}

We will use the following fact a number of times.
\begin{lem} \label{lem:FNDeltaLowerBoundInterpretation} The bound 
\begin{equation}\label{eqn:FNlowerBound}
\mathcal{F}_N(\Delta)>\Gamma
\end{equation}
is equivalent to one of the following conditions being true:
\[
N<24\log \frac{24}{\Gamma},\quad\text{or}\quad \Delta<\tilde{\mathcal{F}}_N^{-1}(\Gamma).
\]
\end{lem}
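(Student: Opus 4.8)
The plan is to unwind the definition $\mathcal{F}_N(\Delta)=24\exp(-NF(\Delta))$, reduce the inequality \eqref{eqn:FNlowerBound} to a statement purely about $F(\Delta)$, and then split on the two branches of the minimum defining $F$. Concretely, since $\exp$ and $\log$ are monotone and $N>0$, the inequality $24\exp(-NF(\Delta))>\Gamma$ is equivalent to $F(\Delta)<\tfrac{1}{N}\log\tfrac{24}{\Gamma}$. (This presumes $0<\Gamma<24$, which is the only interesting range, as $\mathcal{F}_N(\Delta)<24$ always; when $\Gamma\ge 24$ both sides of the claimed equivalence are vacuously false, so there is nothing to check.)

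Next I would invoke the relation $F(\Delta)=\min\left[\tfrac{1}{24},\,\tilde{F}(\Delta)\right]$ recorded above, together with the elementary fact that $\min(a,b)<c$ holds if and only if $a<c$ or $b<c$. Hence $F(\Delta)<\tfrac{1}{N}\log\tfrac{24}{\Gamma}$ is equivalent to the disjunction of $\tfrac{1}{24}<\tfrac{1}{N}\log\tfrac{24}{\Gamma}$ and $\tilde{F}(\Delta)<\tfrac{1}{N}\log\tfrac{24}{\Gamma}$. The first of these rearranges, again using $N>0$, to $N<24\log\tfrac{24}{\Gamma}$, which is exactly the first alternative in the statement.

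Finally I would identify the second disjunct as a statement about $\tilde{\mathcal{F}}_N$: applying the same logarithm manipulation to the definition \eqref{eqn:scriptFtildedefn}, $\tilde{F}(\Delta)<\tfrac{1}{N}\log\tfrac{24}{\Gamma}$ is equivalent to $\tilde{\mathcal{F}}_N(\Delta)>\Gamma$. By Lemma \ref{eqn:FunctionsMonotonicity}, $\tilde{\mathcal{F}}_N$ is decreasing on $\mathbf{R}^+$ and therefore has a decreasing inverse $\tilde{\mathcal{F}}_N^{-1}$, so $\tilde{\mathcal{F}}_N(\Delta)>\Gamma$ is equivalent to $\Delta<\tilde{\mathcal{F}}_N^{-1}(\Gamma)$. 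Chaining these equivalences gives the lemma.

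There is no genuine obstacle here; the argument is a short sequence of reversible manipulations. The only points demanding a little care are tracking the direction of each inequality when passing through the logarithm and through the decreasing map $\tilde{\mathcal{F}}_N$, and disposing of the trivial edge case $\Gamma\ge 24$.
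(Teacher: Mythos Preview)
Your proposal is correct and follows essentially the same route as the paper's proof: unwind $\mathcal{F}_N(\Delta)=24\exp(-NF(\Delta))$ to get $F(\Delta)<\tfrac{1}{N}\log\tfrac{24}{\Gamma}$, split on the two branches of $F(\Delta)=\min\bigl[\tfrac{1}{24},\tilde F(\Delta)\bigr]$, and then translate the $\tilde F$ branch back to $\tilde{\mathcal F}_N(\Delta)>\Gamma$ and invert using monotonicity. Your version is, if anything, slightly more explicit about the edge case $\Gamma\ge 24$ and about invoking Lemma~\ref{eqn:FunctionsMonotonicity} for the inversion step.
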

\begin{proof}
Solving the bound \eqref{eqn:FNlowerBound} for $F(\Delta)$, we obtain that it is equivalent to
\[
F(\Delta) < \frac{\log \frac{24}{\Gamma}}{N}.
\]
By the definition of $\tilde{F}(\Delta)$, this is equivalent to
\[
\min\left[
\frac{1}{24},\,
\tilde{F}(\Delta)
\right] < \frac{\log \frac{24}{\Gamma}}{N}.
\]
For the minimum to be less than the quantity on the right side is equivalent to one of the two quantities inside the minimum to be less than the quantity on the right side.  Thus we obtain
\[
N<24\log\frac{24}{\Gamma},\quad\text{or}\quad \tilde{F}(\Delta)< \frac{\log \frac{24}{\Gamma}}{N}.
\]
The latter alternative is (by multiplying by $-N$, then exponentiating) easily seen to be equivalent to 
$\tilde{\mathcal{F}}(\Delta)>\Gamma$.
We obtain the second condition given above by applying $\tilde{\mathcal{F}}_N^{-1}$ to
both sides of this inequality and using the fact that $\tilde{\mathcal{F}}_N^{-1}$ is a decreasing
function.
\end{proof}
Lemma \ref{lem:FNDeltaLowerBoundInterpretation} implies the following:
if $N\geq 24\log \frac{24}{\Gamma}$, then $\mathcal{F}_N(\Delta)>\Gamma$
implies that $\Delta < \tilde{\mathcal{F}}_N^{-1}(\Gamma)$.

\begin{corollary}\label{cor:mutualInformationLargeDeviation} 
Use the same notation as in Proposition \ref{prop:mutualInformationLargeDeviation}. Fix
$\Gamma\in(0,1)$, a lower bound which we wish to impose on $\beta^{p^\eta}_N(\gamma)$.  
Assume that
\begin{equation}\label{eqn:NConditionNotTooSmall}
N\geq 24\log\frac{24}{1-\Gamma}.
\end{equation}
Then we have the following equivalent
statements relating $\Gamma, \gamma, \eta, \Delta$.
\begin{itemize}
\item[(i).]  $\Delta\geq \tilde{\mathcal{F}}_N^{-1}(1-\Gamma)$ implies that $\beta^{p^\eta}_N(\eta+\Delta)\geq \Gamma$.
\item[(ii).]  $\beta_{N}^{p^\eta}(\eta+\Delta)\leq \Gamma$ implies that $\Delta\leq \tilde{\mathcal{F}}_N^{-1}(1-\Gamma)$.
\end{itemize}
\end{corollary}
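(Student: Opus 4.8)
The plan is to derive both halves of the corollary by composing Proposition \ref{prop:mutualInformationLargeDeviation} with the observation that was recorded immediately after Lemma \ref{lem:FNDeltaLowerBoundInterpretation}, specializing the generic threshold appearing there to the value $1-\Gamma$. The reason the substitution of $1-\Gamma$ for that threshold is the right move is that Proposition \ref{prop:mutualInformationLargeDeviation} bounds $\beta_N^{p^\eta}$ from \emph{below} by $1-\mathcal{F}_N(\Delta)$, so a \emph{lower} bound $\beta_N^{p^\eta}(\eta+\Delta)\geq\Gamma$ will be produced precisely by an \emph{upper} bound $\mathcal{F}_N(\Delta)\leq 1-\Gamma$, and it is the quantity $1-\Gamma$, not $\Gamma$, that must be fed into Lemma \ref{lem:FNDeltaLowerBoundInterpretation}.

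First I would check that hypothesis \eqref{eqn:NConditionNotTooSmall}, namely $N\geq 24\log(24/(1-\Gamma))$, is nothing but the $N$-hypothesis of that observation with its threshold set equal to $1-\Gamma$; I would also note in passing that $\tilde{\mathcal{F}}_N^{-1}(1-\Gamma)$ is well defined and positive, since by Lemma \ref{eqn:FunctionsMonotonicity} the map $\tilde{\mathcal{F}}_N$ is strictly decreasing with $\tilde{\mathcal{F}}_N(0^+)=24$ and $\tilde{\mathcal{F}}_N(+\infty)=0$, while $1-\Gamma\in(0,1)\subset(0,24)$. Then, for our $N$, the observation following Lemma \ref{lem:FNDeltaLowerBoundInterpretation} yields the implication $\mathcal{F}_N(\Delta)>1-\Gamma \Rightarrow \Delta<\tilde{\mathcal{F}}_N^{-1}(1-\Gamma)$; its contrapositive is $\Delta\geq\tilde{\mathcal{F}}_N^{-1}(1-\Gamma) \Rightarrow \mathcal{F}_N(\Delta)\leq 1-\Gamma$. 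Finally I would invoke Proposition \ref{prop:mutualInformationLargeDeviation}, which is legitimate because $p^\eta\in\mathcal{P}_{2,2}$ is assumed: taking $\gamma=\eta+\Delta$ in the regime $\Delta>0$ of that proposition, it reads $\beta_N^{p^\eta}(\eta+\Delta)\geq 1-\mathcal{F}_N(\Delta)$. Chaining this with the contrapositive just obtained gives $\beta_N^{p^\eta}(\eta+\Delta)\geq 1-(1-\Gamma)=\Gamma$, which is statement (i). Statement (ii) is then simply the contrapositive of (i); the harmless interchange of strict and non-strict inequalities at the single boundary value $\Delta=\tilde{\mathcal{F}}_N^{-1}(1-\Gamma)$ is absorbed using the continuity and strict monotonicity of $\tilde{\mathcal{F}}_N^{-1}$ recorded in Lemma \ref{lem:functionalInversesDecreasing}.

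I do not expect a genuine obstacle here: the argument is pure bookkeeping -- unwinding definitions and composing two facts already in hand. The only place demanding attention is the threshold substitution: one must consistently feed $1-\Gamma$ (not $\Gamma$) into Lemma \ref{lem:FNDeltaLowerBoundInterpretation} and the observation following it, and one must verify that \eqref{eqn:NConditionNotTooSmall} is exactly the image of that lemma's $N$-hypothesis under the same substitution, so that the two ingredients mesh with no loss. Apart from that, one should keep the faithfulness-type assumption $p^\eta\in\mathcal{P}_{2,2}$ visible, since it is what licenses the use of Proposition \ref{prop:mutualInformationLargeDeviation}.
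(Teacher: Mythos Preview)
Your proposal is correct and follows essentially the same approach as the paper: both combine Proposition~\ref{prop:mutualInformationLargeDeviation} with Lemma~\ref{lem:FNDeltaLowerBoundInterpretation} (via the observation following it), specialized to the threshold $1-\Gamma$. The only cosmetic difference is that the paper proves (ii) directly and deduces (i) as its contrapositive, whereas you prove (i) directly and deduce (ii); your version is, if anything, a bit more careful about the well-definedness of $\tilde{\mathcal{F}}_N^{-1}(1-\Gamma)$ and the strict/non-strict inequality boundary, which the paper glosses over.
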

\begin{proof}
The two statements are contrapositives of one another, so we only have to prove one of them.
We will prove (ii).  Suppose that $\beta_{N}^{p^\eta}(\eta+\Delta)\leq \Gamma$.
By Proposition \ref{prop:mutualInformationLargeDeviation}, with $\gamma=\eta+\Delta$, 
we have $1-\mathcal{F}_N(\Delta)<\Gamma$,
which is to say that $\mathcal{F}_N(\Delta)>1-\Gamma$.  Under the condition \eqref{eqn:NConditionNotTooSmall}, this implies that $\Delta<\tilde{\mathcal{F}}_N^{-1}(1-\Gamma)$,
according to Lemma \ref{lem:FNDeltaLowerBoundInterpretation}.
\end{proof}
Now, let $p_1^\epsilon\in\mathcal{P}$ satisfy $\tau(p_1^\epsilon)=\epsilon$.  Unlike with $p^\epsilon$,
we make no assumption that the marginals of the distribution $p_1^\epsilon$ are uniform.  This is very
important because in the applications $p_1^\epsilon$ will be a probability distribution in an unknown generating
network, so the experimenter/learner can have no control over the marginals of $p_1^\epsilon$.  
We are now going to apply Corollary \ref{cor:mutualInformationLargeDeviation} to estimate the probability
of achieving a \textit{smaller than expected} value $\beta_N^{p^\eta}(\tau(\omega_N))$, where $\omega_N$
is formed from $N$ samples from $p_1^{\epsilon}$.  Naturally, because $\omega_N$ is an empirical sequence, our proposed upper bound 
$\Gamma\in (0,1)$ on the probability of a large deviation must be a probabilistic one, meaning it holds only with a certain
probability (ideally as close to $1$ as possible).  
\begin{proposition}\label{prop:betaUpperProbableEstimate}
Let $\epsilon, \eta>0$ be given as above so that $\Delta:=\epsilon-\eta>0$.
Let $\Gamma\in(0,1)$.
Suppose that $N, \Delta$ and $\Gamma$ satisfy the conditions
\begin{equation}\label{eqn:betaUpperProbableEstimateConditions}
\tilde{\mathcal{F}}_N^{-1}(1-\Gamma)<\Delta, \quad N\geq 24\log\frac{24}{1-\Gamma}.
\end{equation}
Then
\[
\mathrm{Pr}_{\omega_N\sim p_1^\epsilon}\left\{ 
\beta_N^{p^\eta}(\tau(\omega_N)) < \Gamma
\right\}
\leq \mathcal{F}_N\left( \Delta-\mathcal{\tilde{F}}_N^{-1}(1-\Gamma)  \right).
\]
\end{proposition}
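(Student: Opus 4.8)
The plan is to reduce the event $\{\beta_N^{p^\eta}(\tau(\omega_N))<\Gamma\}$ to a two-sided large-deviation event for the empirical mutual information $\tau(\omega_N)$ about its true value $\tau(p_1^\epsilon)=\epsilon$, and then to apply the large-deviation estimate underlying Lemma~\ref{lem:empiricalEntropyError}. Throughout write $\rho:=\Delta-\tilde{\mathcal{F}}_N^{-1}(1-\Gamma)$; by the first inequality in \eqref{eqn:betaUpperProbableEstimateConditions} we have $\rho>0$, so $\mathcal{F}_N(\rho)$ is exactly the quantity on the right-hand side of the claimed bound.

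First I would establish the set inclusion $\{\beta_N^{p^\eta}(\tau(\omega_N))<\Gamma\}\subseteq\{|\tau(\omega_N)-\epsilon|\ge\rho\}$. Assume $\beta_N^{p^\eta}(\tau(\omega_N))<\Gamma$. If $\tau(\omega_N)>\eta$, write $\tau(\omega_N)=\eta+\Delta'$ with $\Delta'>0$; since also $N\ge 24\log\frac{24}{1-\Gamma}$ by the second condition in \eqref{eqn:betaUpperProbableEstimateConditions}, Corollary~\ref{cor:mutualInformationLargeDeviation}(ii) applies and yields $\Delta'\le\tilde{\mathcal{F}}_N^{-1}(1-\Gamma)$, hence $\tau(\omega_N)\le\eta+\tilde{\mathcal{F}}_N^{-1}(1-\Gamma)=\epsilon-\rho$. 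If instead $\tau(\omega_N)\le\eta$, then (using that $\tilde{\mathcal{F}}_N^{-1}(1-\Gamma)\ge 0$, since $\tilde{\mathcal{F}}_N$ is decreasing on $\mathbf{R}^+$ by Lemma~\ref{eqn:FunctionsMonotonicity}) we get $\epsilon-\tau(\omega_N)\ge\epsilon-\eta=\Delta\ge\rho$, and Corollary~\ref{cor:mutualInformationLargeDeviation} is not needed. In either case $\tau(\omega_N)\le\epsilon-\rho<\epsilon$, so $|\tau(\omega_N)-\epsilon|=\epsilon-\tau(\omega_N)\ge\rho$, proving the inclusion.

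Next I would observe that Lemma~\ref{lem:empiricalEntropyError} holds verbatim with $p^\eta$ replaced by the (possibly non-uniform-marginal) distribution $p_1^\epsilon$: its proof uses only that the distribution has binary marginals, so that $\tau$ decomposes via \eqref{eqn:mutualInformationInTermsOfOrdEntropy} into eight terms of the form $p\log p$ whose empirical estimates are Bernoulli sample means, together with the fact that the bound of Proposition~\ref{prop:plogpNegativeDeviation} is independent of $p$. Hence for every $\delta>0$ we have $\mathrm{Pr}_{\omega_N\sim p_1^\epsilon}\{|\tau(\omega_N)-\epsilon|\ge\delta\}\le\mathcal{F}_N(\delta)$. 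Combining this (with $\delta=\rho$) with the inclusion from the previous step gives
\[
\mathrm{Pr}_{\omega_N\sim p_1^\epsilon}\{\beta_N^{p^\eta}(\tau(\omega_N))<\Gamma\}\ \le\ \mathrm{Pr}_{\omega_N\sim p_1^\epsilon}\{|\tau(\omega_N)-\epsilon|\ge\rho\}\ \le\ \mathcal{F}_N(\rho)\ =\ \mathcal{F}_N\!\left(\Delta-\tilde{\mathcal{F}}_N^{-1}(1-\Gamma)\right),
\]
which is the assertion.

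I expect the only genuinely delicate point to be the case $\tau(\omega_N)\le\eta$ in the inclusion step, where Corollary~\ref{cor:mutualInformationLargeDeviation}(ii) cannot be invoked directly because it presupposes a strictly positive increment over $\eta$; this has to be handled separately by the elementary estimate above. The remaining work—noting that the earlier $p\log p$ bound is genuinely $p$-free, so that Lemma~\ref{lem:empiricalEntropyError} transfers to $p_1^\epsilon$—is routine but deserves an explicit remark, since $p_1^\epsilon$ is precisely where control over the marginals is lost in the intended application.
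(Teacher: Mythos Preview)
Your proof is correct and follows essentially the same route as the paper's: use Corollary~\ref{cor:mutualInformationLargeDeviation}(ii) to pass from $\beta_N^{p^\eta}(\tau(\omega_N))<\Gamma$ to $\tau(\omega_N)\le\epsilon-\rho$, then bound the resulting one-sided deviation by the two-sided bound $\mathcal{F}_N(\rho)$ via Lemma~\ref{lem:empiricalEntropyError}. In fact you are more careful than the paper on two points it leaves implicit: the case $\tau(\omega_N)\le\eta$ (where the Corollary's hypothesis $\Delta'>0$ fails and a direct argument is needed), and the observation that the $p$-independence in Proposition~\ref{prop:plogpNegativeDeviation} is what lets \eqref{eqn:LargeDeviationOfMIEstimate} transfer from $p^\eta$ to the non-uniform-marginal $p_1^\epsilon$.
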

\begin{proof}
Apply Corollary \ref{cor:mutualInformationLargeDeviation}(ii) 
with $\tau(\omega_N)=\eta+\Delta$, so that $\Delta=\tau(\omega_N)-\eta$.  Then
the corollary says that if the condition $N>24\log\frac{24}{1-\Gamma}$ holds, then 
\[
\beta_N^{p^\eta}(\tau(\omega_N))<\Gamma\;\text{implies}\; \tau(\omega_N)-\eta
\leq \tilde{\mathcal{F}}_N^{-1}(1-\Gamma).
\]
Using \eqref{eqn:LargeDeviationOfMIEstimate}, we calculate that, now with $\Delta:=\epsilon-\eta$
as in the hypothesis, 
\[
\begin{aligned}
\mathrm{Pr}_{\omega_N\sim p_1^{\epsilon}}\left\{
\beta_N^{p^\eta}(\tau(\omega_N))<\Gamma
\right\}
&\leq 
\mathrm{Pr}_{\omega_N\sim p_1^{\epsilon}}\left\{
\tau(\omega_N)-\eta<\tilde{\mathcal{F}}_N^{-1}(1-\Gamma)
\right\}\\
&=
\mathrm{Pr}_{\omega_N\sim p_1^{\epsilon}}\left\{
\tau(\omega_N)<\epsilon-\Delta+\tilde{\mathcal{F}}_N^{-1}(1-\Gamma)
\right\}\\
&\leq 
\mathrm{Pr}_{\omega_N\sim p_1^{\epsilon}}\left\{
|\tau(\omega_N)-\epsilon|\geq \Delta-\tilde{\mathcal{F}}_N^{-1}(1-\Gamma)
\right\}\\
&\leq
 \mathcal{F}_N\left( \Delta-\tilde{\mathcal{F}}_N^{-1}(1-\Gamma)  \right),
\end{aligned}
\]
where we have used the assumption, $\tilde{\mathcal{F}}_N^{-1}(1-\Gamma)<\Delta$,
in the form $\Delta-\tilde{\mathcal{F}}^{-1}_N(1-\Gamma)>0$ to obtain the next-to-last inequality.
\end{proof}

The condition in Proposition \ref{prop:betaUpperProbableEstimate} involves
$N$ in an implicit manner, but in most circumstances it will be preferable
to express the conditions on $N$ in an explicit manner.
\begin{lem}\label{lem:GetBackRomanF}
The condition first condition of \eqref{eqn:betaUpperProbableEstimateConditions}, 
namely that $\tilde{\mathcal{F}}_N^{-1}(1-\Gamma)<\Delta$, is equivalent to
\begin{equation}\label{eqn:NThetaDeltaConditionExplicit}
N>\left[
\tilde{F}(\Delta)
\right]^{-1}\log\frac{24}{1-\Gamma}.
\end{equation}
Further, the conjunction of the conditions found in \eqref{eqn:betaUpperProbableEstimateConditions} is equivalent to the condition
\[
N>\left[
F(\Delta)
\right]^{-1}\log\frac{24}{1-\Gamma}.
\]
\end{lem}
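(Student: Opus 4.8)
The plan is to unwind every definition until both stated conditions become inequalities purely between $N$ and an explicit real number, checking at each step that the reduction is reversible. The only structural ingredient needed is the monotonicity recorded in Lemma \ref{eqn:FunctionsMonotonicity}.

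First I would dispatch the first assertion. Since $\tilde{\mathcal{F}}_N$ is decreasing on $\mathbf{R}^+$ (Lemma \ref{eqn:FunctionsMonotonicity}), so is its functional inverse, and applying $\tilde{\mathcal{F}}_N$ to both sides of $\tilde{\mathcal{F}}_N^{-1}(1-\Gamma)<\Delta$ reverses the inequality, giving $1-\Gamma>\tilde{\mathcal{F}}_N(\Delta)$. Substituting $\tilde{\mathcal{F}}_N(\Delta)=24\exp(-N\tilde{F}(\Delta))$ and isolating the exponential turns this into $\exp(-N\tilde{F}(\Delta))<(1-\Gamma)/24$; taking logarithms (legitimate, both sides being positive), multiplying by $-1$, and dividing by $\tilde{F}(\Delta)>0$ produces exactly \eqref{eqn:NThetaDeltaConditionExplicit}. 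Every step is an equivalence, so the first claim is done.

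For the conjunction I would first put the second condition of \eqref{eqn:betaUpperProbableEstimateConditions} in the same shape: writing $24=\bigl[\tfrac{1}{24}\bigr]^{-1}$, it reads $N\geq\bigl[\tfrac{1}{24}\bigr]^{-1}\log\frac{24}{1-\Gamma}$. From $F(\Delta)=\min\bigl[\tfrac{1}{24},\tilde{F}(\Delta)\bigr]$ one has $[F(\Delta)]^{-1}=\max\bigl(24,[\tilde{F}(\Delta)]^{-1}\bigr)$, and since $1-\Gamma\in(0,1)$ forces $\log\frac{24}{1-\Gamma}>0$, the two conditions together say precisely that $N$ exceeds the larger of $[\tilde{F}(\Delta)]^{-1}\log\frac{24}{1-\Gamma}$ and $24\log\frac{24}{1-\Gamma}$, i.e. $N>[F(\Delta)]^{-1}\log\frac{24}{1-\Gamma}$. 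If desired, the explicit formulas \eqref{eqn:FDeltaReciprocal} and \eqref{eqn:FDeltaTildeReciprocal} can then be substituted to give a fully elementary statement.

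I do not expect a real obstacle; the one point that requires care is the interplay between the strict inequality $\tilde{\mathcal{F}}_N^{-1}(1-\Gamma)<\Delta$ and the non-strict bound $N\geq 24\log\frac{24}{1-\Gamma}$. Because $\log\frac{24}{1-\Gamma}$ is strictly positive, whichever of the two coefficients is larger governs the conjunction, and any mismatch between $>$ and $\geq$ is confined to a single threshold value of $N$. I would dispose of this either by absorbing that boundary point---no integer sample size is actually lost---or simply by adopting, as \eqref{eqn:betaUpperProbableEstimateConditions} already tacitly does, the convention that these real-valued thresholds are not attained, and then state the equivalence in the clean form of the Lemma.
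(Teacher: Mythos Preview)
Your argument is correct and is essentially the same as the paper's: apply the monotonicity of $\tilde{\mathcal{F}}_N$ to turn $\tilde{\mathcal{F}}_N^{-1}(1-\Gamma)<\Delta$ into $\tilde{\mathcal{F}}_N(\Delta)<1-\Gamma$, unwind the definition to isolate $N$, and then combine with the second condition via $[F(\Delta)]^{-1}=\max\bigl(24,[\tilde{F}(\Delta)]^{-1}\bigr)$. The only difference is cosmetic---the paper routes the first step through $\tilde{\mathcal{G}}_\Delta$ and the formula in Lemma~\ref{lem:functionalInversesDecreasing}, whereas you substitute $\tilde{\mathcal{F}}_N(\Delta)=24\exp(-N\tilde{F}(\Delta))$ directly---and your remark about the strict/non-strict boundary is a point the paper simply glosses over.
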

\begin{proof}
Because $\mathcal{F}_N^{-1}$ is decreasing, $\tilde{\mathcal{F}}_N^{-1}(1-\Gamma)<\Delta$ is
equivalent to $\tilde{\mathcal{F}}_N(\Delta)<1-\Gamma$.  This is equivalent to $\tilde{\mathcal{G}}_{\Delta}(N)<1-\Gamma$.  Solving for $N$ and using Lemma \ref{lem:functionalInversesDecreasing} we obtain
\[
N>\tilde{\mathcal{G}}_{\Delta}^{-1}(1-\Gamma)=\left[
\tilde{F}(\Delta)
\right]^{-1}\log\frac{24}{1-\Gamma}.
\]
For the second statement, \eqref{eqn:FDeltaReciprocal} and \eqref{eqn:FDeltaTildeReciprocal} together imply that
\[
\left[
F(\Delta)
\right]^{-1}=\max(24,\tilde{F}(\Delta)).
\]
\end{proof}
From Proposition \ref{prop:betaUpperProbableEstimate}
and Lemma \ref{lem:GetBackRomanF}
we obtain the following more usable form of Proposition
\ref{prop:betaUpperProbableEstimate}.
\begin{corollary}\label{cor:betaUpperProbableEstimate}
Let $\epsilon>\eta>0$, $\Delta:=\epsilon-\eta>0$, and $\Gamma\in(0,1)$.  Then 
for all $N$ large enough, specifically
\[
N>\left[
F(\Delta)
\right]^{-1}
\log\frac{24}{1-\Gamma},
\]
we have the following probable upper bound on $\beta_N^{p^\eta}$:
\[
\mathrm{Pr}_{\omega_N\sim p_1^\epsilon}\left\{ 
\beta_N^{p^\eta}(\tau(\omega_N)) < \Gamma
\right\}
\leq \mathcal{F}_N\left( \Delta-\mathcal{\tilde{F}}_N^{-1}(1-\Gamma)  \right).
\]
\end{corollary}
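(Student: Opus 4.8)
The plan is to obtain this corollary as a direct restatement of Proposition~\ref{prop:betaUpperProbableEstimate}, with its two implicit hypotheses on $N$ replaced by the single explicit inequality appearing here. The only ingredient needed to perform that translation is the second assertion of Lemma~\ref{lem:GetBackRomanF}.

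First I would record that the two conditions in \eqref{eqn:betaUpperProbableEstimateConditions}, namely $\tilde{\mathcal{F}}_N^{-1}(1-\Gamma)<\Delta$ together with $N\geq 24\log\frac{24}{1-\Gamma}$, are by Lemma~\ref{lem:GetBackRomanF} equivalent to the single condition
\[
N>\left[F(\Delta)\right]^{-1}\log\frac{24}{1-\Gamma}.
\]
Thus, under the hypothesis of the corollary, both hypotheses of Proposition~\ref{prop:betaUpperProbableEstimate} are in force. Then I would simply invoke that proposition: its conclusion
\[
\mathrm{Pr}_{\omega_N\sim p_1^\epsilon}\left\{\beta_N^{p^\eta}(\tau(\omega_N))<\Gamma\right\}\leq \mathcal{F}_N\left(\Delta-\tilde{\mathcal{F}}_N^{-1}(1-\Gamma)\right)
\]
holds verbatim, which is exactly the assertion to be proved. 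No new estimation is carried out; the argument is entirely a bookkeeping step that exposes the explicit $N$-threshold hidden inside Proposition~\ref{prop:betaUpperProbableEstimate}.

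The points that warrant a line of care — rather than a genuine obstacle — are the following. One should note that $\Delta-\tilde{\mathcal{F}}_N^{-1}(1-\Gamma)>0$, so that $\mathcal{F}_N$ is evaluated at a positive argument; this is precisely the first condition of \eqref{eqn:betaUpperProbableEstimateConditions}, which we have just shown is implied by the hypothesis. One should also keep in mind that the applications of $\tilde{\mathcal{F}}_N^{-1}$ and $\mathcal{G}_\Delta^{-1}$ used to pass through Lemma~\ref{lem:GetBackRomanF} are legitimate because of the monotonicity established in Lemma~\ref{eqn:FunctionsMonotonicity} and Lemma~\ref{lem:functionalInversesDecreasing}. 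The ``hard part'' is therefore purely to verify that the equivalence in Lemma~\ref{lem:GetBackRomanF} is used in the needed direction (it is two-sided, so this is automatic) and to check that the mild $\geq$-versus-$>$ discrepancy between the threshold $24\log\frac{24}{1-\Gamma}$ in \eqref{eqn:betaUpperProbableEstimateConditions} and the strict inequality written in the corollary does not affect the argument, which it does not since it only strengthens the hypothesis on $N$.
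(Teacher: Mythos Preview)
Your proposal is correct and follows exactly the route the paper takes: the corollary is stated immediately after the remark that it follows from Proposition~\ref{prop:betaUpperProbableEstimate} and Lemma~\ref{lem:GetBackRomanF}, and you have spelled out precisely that deduction. The additional observations you make about positivity of $\Delta-\tilde{\mathcal{F}}_N^{-1}(1-\Gamma)$ and the strict-versus-nonstrict inequality are accurate and only make the bookkeeping more explicit than the paper bothers to record.
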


\section{Statement of Finite Sample Complexity}\label{sec:finiteSampleComplexityStatement}
There are four problem parameters
in terms of which we express our finite sample complexity result.
We have already explained the minimum edge strength $\epsilon$.  The second is the error probability $\delta$.  Next is the minimum separation $m$ of certain
local probability distributions associated with $p$.  Two versions
of this parameter $m$ will be introduced in the statements of the Theorems
below.  We discuss the final parameter $\zeta$, which is only relevant in the case of more than two random variables,
before stating the result for $n$ variables.

Now we come to the hyperparameters, the first of which is $\kappa$.
For the finite sample complexity result, we concentrate on the case when the weighting function
$\psi_1(N) = \kappa\log N$ for $\kappa\geq 0$ a constant, and $\psi_2(N)\equiv 1$.  In this case we refer to $S_{\eta,\psi_1,\psi_2}$
more simply as $S_{\eta,\kappa}$. We make this choice because then, with the choice $\kappa=\frac{1}{2}$ the initial part of the objective
function (without the sparsity boost) becomes equal to the MDL objective function
considered by, among others, \cite{friedman99}.  Although \cite{friedman99}
considers a number of other complexity-penalty weighting functions $\psi_1$ ($\psi$ in their terminology),
such as $\psi_1=\kappa$ (constant) and $\psi_1=\kappa x^{\alpha}, 0\leq \alpha\leq 1$, we will postpone consideration
of these until later work.  We note here only that our proof of finite sample
complexity adapts, in the case $n=2$, in a straightforward manner to a proof of finite sample
complexity in the case of constant weighting function, when $\psi_1=\kappa$, which contrasts somewhat with the setting of \cite{friedman99}
because when the sparsity boost is not present, there is no known finite sample complexity result in the case of constant weighting function.
Because the proof is currently complete only for the case of networks consisting of binary variables, we state
it only in the case $k=l=2$, for each node.  We believe that analogous formulas can be derived for the case of $k,l>2$, but leave the technical details for future works.
In the two-node case, the remaining parameters are the hyper parameters $\mu,\Theta\in(0,1)$.
The choice of these hyper parameters affects the constants, but not the asymptotic
growth properties of $N(\epsilon,\delta; \eta,\kappa;\lambda; \Theta, \mu)$.
 
 \begin{theorem}\label{thm:finiteSampleComplexity} 
Let a set of nodes $V=\{ A,B \}$, with each node representing a \textit{binary} random variable in $\mathcal{X}:=\left\{ X_A,X_B\right\}$
so that $k=l=2$ and \[|X|=kl=4.\]  Consider the family $\mathcal{G}$ of Bayesian
networks $\left\{ G_0=(G_{\emptyset},P_0), \, G_1=(G_{A\rightarrow B},P_1) \right\}$, where
$G_0$ has no edge between $A$ and $B$ and $G_1$ has an edge with edge strength at least 
$\epsilon>0$.  Let $\delta>0$ be a given error probability. Set
$\psi_1(N)=\kappa\log(N)$ and $\psi_2(N) \equiv 1$, with $\kappa$ a positive constant.
Choose $\eta\in(0,\epsilon)$ so that $\Delta:=\epsilon-\eta>0$.
 Choose parameters $\Theta, \mu\in(0,1)$.
Denote by
\[
\tilde{F}(\Delta) := \min\left[
\frac{\Delta^2}{K_1},\,
\frac{1}{12\exp \mathcal{W}\left(
\frac{\Delta}{8}
\right)}
\right],\quad K_1:= 3\cdot 64(1+\log 2)^2,
\]
and
\[
F(\Delta) := \min\left(\frac{1}{24},\tilde{F}(\Delta)\right),
\]
so that
\[
\left[\tilde{F}(\Delta)\right]^{-1} = \max\left[
\frac{K_1}{\Delta^2},\,
12\exp \mathcal{W}\left(
\frac{\Delta}{8}
\right)
\right],
\]
and
\[
[F(\Delta)]^{-1} = \max(24,[\tilde{F}(\Delta)]^{-1}).
\]
Choose $\lambda\in(0,1)$ satisfying
\begin{equation}\label{eqn:LambdaUpperCondition}
\lambda < \frac{F(\mu\eta)}{\eta},\;\text{equivalently}, \; F(\mu\eta)-\lambda\eta>0.
\end{equation}
Let $N$ be larger than all the
following quantities.
\begin{equation}\label{eqn:muBalance}
\frac{\log 24}{F(\mu\eta)-\lambda\eta},\quad [F(\eta(1-\mu))]^{-1}\log\frac{48}{\delta},
\end{equation}
\begin{equation}\label{eqn:NInTermsOfLambdaCondition}
\max\left(\left[
F\left(\lambda\eta\right)
\right]^{-1}
,\,
\left[
F(\epsilon(1-\lambda))
\right]^{-1}
\right)
\log\frac{48}{\delta}
\end{equation}
\begin{equation}\label{eqn:NInTermsOfThetaCondition}
\left[
F(\Delta)
\right]^{-1}\log\frac{24}{1-\Theta},\,
\left[
\tilde{F}\left(\frac{\Delta}{2}\right)
\right]^{-1}
\log\frac{24}{\Theta}
\end{equation}
\begin{equation}
\left[
F\left(\frac{\Delta}{2}\right)
\right]^{-1}\log\frac{48}{\delta},
\end{equation}
\begin{equation}
\frac{\kappa}{\epsilon\lambda}\mathcal{W}\left(
\frac{\epsilon\lambda\Theta^{\frac{1}{\kappa}}}
{\kappa}
\right).
\end{equation}
Then with probability $1-\delta$, we have
\[
S_{\eta}(G_0,\omega_N) - S_{\eta}(G_1, \omega_N)\begin{cases}
>0&\text{if}\; G=G_0\\
<0&\text{if}\; G=G_1
\end{cases}.
\]
 \end{theorem}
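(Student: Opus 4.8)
The plan is to treat the two cases $G=G_0$ and $G=G_1$ separately, in each case reducing the sign of $S_{\eta}(G_0,\omega_N)-S_{\eta}(G_1,\omega_N)$ to a competition between the BIC part $LL(\omega_N|G_0)-LL(\omega_N|G_1)+\kappa\log(N)(|G_1|-|G_0|)$ and the single sparsity-boost term $-\ln\beta_N^{p^\eta}(\tau(\omega_N))$ attached to the absent edge $(A,B)\notin G_0$. Since on two binary nodes $|G_1|-|G_0|$ is a fixed positive constant and the log-likelihood difference equals $N$ times the empirical mutual information $\tau(\omega_N)$ (by the standard identity $LL(\omega_N|G_1)-LL(\omega_N|G_0)=N\tau(p_{\omega_N})$), everything comes down to comparing $N\tau(\omega_N)$, $\kappa\log N$, and $-\ln\beta_N^{p^\eta}(\tau(\omega_N))$, all of which are monotone functions of the single empirical quantity $\tau(\omega_N)$.

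First I would handle $G=G_0$. Here the data is drawn from a product distribution $p_0$, so $\tau(\omega_N)\to 0$; I would apply the concentration machinery (Lemma~\ref{lem:empiricalEntropyError}, i.e. the $p\log p$ bounds propagated through \eqref{eqn:mutualInformationInTermsOfOrdEntropy}) to show that with probability at least $1-\delta/2$ we have $\tau(\omega_N)<\lambda\eta$ or some similarly small bound dictated by the conditions \eqref{eqn:NInTermsOfLambdaCondition}. On this event, $\tau(\omega_N)<\eta$, so $\beta_N^{p^\eta}(\tau(\omega_N))$ is the probability under $p^\eta$ of an event of probability tending to $1$ — I would lower-bound it away from $0$ (using Proposition~\ref{prop:mutualInformationLargeDeviation} is not directly available since there $\gamma>\eta$, so instead I would argue $\beta_N^{p^\eta}(\gamma)\geq\beta_N^{p^\eta}(\eta/2)$ and bound the latter by concentration of $\tau(Y_N)$ around $\eta$ under $Y_N\sim p^\eta$, which is exactly \eqref{eqn:LargeDeviationOfMIEstimate} with $\Delta=\eta/2$, controlled by the second quantity in \eqref{eqn:muBalance} and by \eqref{eqn:NInTermsOfThetaCondition}). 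Then $-\ln\beta_N^{p^\eta}(\tau(\omega_N))$ is bounded \emph{above}, while $N\tau(\omega_N)$ is tiny and $\kappa\log N$ grows; the last condition, involving $\frac{\kappa}{\epsilon\lambda}\mathcal{W}(\cdots)$, is precisely what forces $\kappa\log N$ to dominate the bounded sparsity boost, giving $S_\eta(G_0,\omega_N)-S_\eta(G_1,\omega_N)>0$.

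For $G=G_1$ the data comes from $p_1^\epsilon$ with $\tau(p_1^\epsilon)=\epsilon>\eta$. By concentration (again via Lemma~\ref{lem:empiricalEntropyError} applied to $p_1^\epsilon$, using \eqref{eqn:NInTermsOfThetaCondition} with $\Delta/2$) I would show that with probability $1-\delta/2$ we have $\tau(\omega_N)>\epsilon-\Delta/2=\eta+\Delta/2>\eta$. On this event I need the sparsity boost to be large, i.e. $\beta_N^{p^\eta}(\tau(\omega_N))$ small: here Corollary~\ref{cor:betaUpperProbableEstimate} (equivalently Proposition~\ref{prop:betaUpperProbableEstimate}) applies — under the conditions \eqref{eqn:NInTermsOfThetaCondition} and \eqref{eqn:betaUpperProbableEstimateConditions} we get $\beta_N^{p^\eta}(\tau(\omega_N))<\Theta$ except on an event of probability $\mathcal{F}_N(\Delta-\tilde{\mathcal{F}}_N^{-1}(1-\Theta))$, which the conditions \eqref{eqn:NInTermsOfLambdaCondition}–\eqref{eqn:NInTermsOfThetaCondition} make smaller than a further $\delta/2$-type contribution. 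Then $-\ln\beta_N^{p^\eta}(\tau(\omega_N))>\ln(1/\Theta)$ — but that alone is only a constant, so more carefully I would also use $N\tau(\omega_N)>N(\eta+\Delta/2)$ together with the fact that on $G_1$ the log-likelihood term already favors $G_1$, and check that the sum $N\tau(\omega_N)+\ln(1/\beta)$ beats $\kappa\log N\cdot(|G_1|-|G_0|)$; since $N\tau(\omega_N)$ is linear in $N$ this is automatic for $N$ past the $\mathcal{W}$-type threshold. A union bound over the $O(1)$ bad events gives the total failure probability $\delta$.

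The main obstacle will be the case $G=G_1$: I have to simultaneously control \emph{two} random quantities on the same sample — the deviation of $\tau(\omega_N)$ from $\epsilon$ (needed to keep $\tau(\omega_N)$ above $\eta$ with a definite margin) and the resulting value of $\beta_N^{p^\eta}(\tau(\omega_N))$, which is itself a CDF evaluated at a random point — and the bound from Proposition~\ref{prop:betaUpperProbableEstimate} is not a clean closed form but involves the nested inverse $\tilde{\mathcal{F}}_N^{-1}(1-\Theta)$. Disentangling which of the many listed conditions on $N$ (the six displayed inequalities, plus the auxiliary constraints \eqref{eqn:LambdaUpperCondition}) is responsible for which inequality in the chain — in particular verifying that the choice of the free parameters $\lambda,\Theta,\mu$ keeps $\Delta-\tilde{\mathcal{F}}_N^{-1}(1-\Theta)$ and $F(\mu\eta)-\lambda\eta$ strictly positive so that the exponential bounds are meaningful — is where the bookkeeping is delicate; the rest is routine application of the already-established concentration and $\beta$-value estimates.
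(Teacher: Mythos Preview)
Your proposal has the behaviour of the sparsity boost $-\ln\beta_N^{p^\eta}(\tau(\omega_N))$ reversed in both cases, and this breaks the $G=G_0$ argument irreparably.

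Recall $\beta_N^{p^\eta}(\gamma)=\Pr_{Y_N\sim p^\eta}\{\tau(Y_N)<\gamma\}$ is increasing in $\gamma$. When $G=G_0$, $\tau(\omega_N)\to 0$, so $\beta_N^{p^\eta}(\tau(\omega_N))$ is the probability that samples from $p^\eta$ (which has true mutual information $\eta>0$) produce empirical mutual information below a tiny threshold --- a \emph{rare} event, not one of probability tending to $1$. Indeed the very estimate you cite, \eqref{eqn:LargeDeviationOfMIEstimate} with $\Delta=\eta/2$, gives $\beta_N^{p^\eta}(\eta/2)\le\mathcal F_N(\eta/2)\to 0$, the opposite of your claimed lower bound. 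So in the independent case $-\ln\beta_N^{p^\eta}(\tau(\omega_N))$ is \emph{not} bounded above; it grows linearly in $N$. This linear growth is essential: on the event $\tau(\omega_N)<\lambda\eta$ the negative term $-N\tau(\omega_N)$ can be as bad as $-N\lambda\eta$, which is linear and cannot be beaten by $\kappa\log N$. The paper's argument (Proposition~\ref{prop:betaUpperBoundIndependentNetwork} and Corollary~\ref{cor:sparsityBoostLowerBoundIndependentNetworkVer2}) shows $-\ln\beta\ge N\Gamma$ with $\Gamma\in(\lambda\eta,\Gamma^{\max}(N,\mu,\eta))$; the constraint \eqref{eqn:LambdaUpperCondition} together with the first bound in \eqref{eqn:muBalance} is precisely what makes this interval nonempty, so that $(\Gamma-\lambda\eta)N+\kappa\log N>0$. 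The $\mathcal W$-condition you attributed to this case is actually for the other one.

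Symmetrically, in the $G=G_1$ case Corollary~\ref{cor:betaUpperProbableEstimate} says $\Pr\{\beta_N^{p^\eta}(\tau(\omega_N))<\Theta\}$ is small, i.e.\ with high probability $\beta\ge\Theta$, so $-\ln\beta\le\log\Theta^{-1}$ --- the sparsity boost is \emph{bounded above} by a constant (this is Corollary~\ref{cor:complexityPenaltyUpperBound}). Combined with $\tau(\omega_N)>\lambda\epsilon$ one gets $\mathcal S_\eta(G_0,G_1,\omega_N)\le\log\Theta^{-1}-N\lambda\epsilon+\kappa\log N$, and the $\mathcal W$-condition makes this negative. Your final paragraph gets the arithmetic roughly right for $G=G_1$ despite the reversed inequality on $\beta$, but only by accident; the $G=G_0$ argument does not go through as written.
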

 In examining this result, one question that may arise is how to choose the ``free" hyperparameters 
$\Theta, \mu$, and in particular, is there obvious value, for example, very close
to $0$ or very close to $1$.  The answer is that there is no such obviously optimal value.
In order to see this, consider the case of $\Theta$ and the two conditions in 
\eqref{eqn:NInTermsOfThetaCondition}: in one $\Theta$ appears
in the denominator, and in the other $1-\Theta$ appears in the denominator.
Similar comments regarding $\mu$ can be made in relation to \eqref{eqn:muBalance}.
If it were important to determine the optimal values of these hyperparameters, we could
carry out a numerical optimization procedure to find the optimal value of, for example, $\Theta$,
as a function of $\Delta$.  However, we are primarily interested in the asymptotic
dependence of $N$ on problem parameters $\epsilon, \delta, \zeta$, rather than the specific value of 
$N$.  So we merely note that a choice of $\Theta=\mu=\frac{1}{2}$ will lead
to a specific sample size function $N=N(\epsilon, \delta; \eta, \kappa; \lambda)$
which has the properties stated in the Theorem.
 
Here is the corresponding asymptotic statement:
\begin{theorem}\label{thm:finiteSampleComplexityAsymptotics}
Let a set of nodes $V=\{ A,B \}$, with each node representing a \textit{binary} random variable, 
so that $k=l=2$ and \[|X|=kl=4.\]  Consider the family $\mathcal{G}$ of Bayesian
networks $\left\{ G_0=(G_{\emptyset},P_0), \, G_1=(G_{A\rightarrow B},P_1) \right\}$, where
$G_0$ has no edge between $A$ and $B$ and $G_1$ has an edge with edge strength at least $\epsilon>0$.  Let $\delta>0$ be a given error probability. Set
$\psi_1(N)=\kappa\log(N)$ and $\psi_2(N) \equiv 1$ with $\kappa$ a positive constant.
Choose $\eta\in (0,\epsilon)$ to be a fixed multiple of $\epsilon$, so that $\Delta:=\epsilon-\eta$
is also a fixed multiple of $\epsilon$.
 Choose parameters $\Theta, \mu\in(0,1)$.  Then there is a function  
$N(\epsilon, \delta; \eta, \kappa; \lambda; \Theta, \mu)$,
\[
N(\epsilon, \delta; \eta, \kappa; \lambda; \Theta, \mu) \in O\left(\frac{1}{\epsilon^4}\log\frac{1}{\delta}\right)\;\text{as}\; \epsilon, \delta\rightarrow 0^+.
\] 
such that with probability $1-\delta$, we have, for $N> N(\epsilon, \delta; \eta, \kappa; \lambda; \Theta, \mu)$,
\[
S_{\eta}(G_0,\omega_N) - S_{\eta}(G_1, \omega_N)\begin{cases}
>0&\text{if}\; G=G_0\\
<0&\text{if}\; G=G_1
\end{cases}.
\]
\end{theorem}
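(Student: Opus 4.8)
The plan is to obtain Theorem~\ref{thm:finiteSampleComplexityAsymptotics} as a corollary of the explicit bound of Theorem~\ref{thm:finiteSampleComplexity}, by pinning down the still-free quantities there and reading off the order of growth of each of the listed lower bounds \eqref{eqn:muBalance}--\eqref{eqn:NInTermsOfThetaCondition} and the two displays following them. First I would fix the hyperparameters $\Theta=\mu=\tfrac12$ (any values fixed in $(0,1)$ work, as the remark after Theorem~\ref{thm:finiteSampleComplexity} notes), and use the hypothesis that $\eta$, hence $\Delta=\epsilon-\eta$, is a fixed positive multiple of $\epsilon$. The one quantity that cannot be a fixed multiple of $\epsilon$ is $\lambda$: condition \eqref{eqn:LambdaUpperCondition} forces $\lambda<F(\mu\eta)/\eta$, and since for $\epsilon$ small one has $F(\mu\eta)=\tilde F(\mu\eta)=(\mu\eta)^2/K_1$, this means $F(\mu\eta)/\eta=\mu^2\eta/K_1$, which is itself of order $\epsilon$. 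I would therefore take $\lambda=\tfrac12 F(\mu\eta)/\eta$ for all small $\epsilon$; this makes \eqref{eqn:LambdaUpperCondition} hold with $F(\mu\eta)-\lambda\eta=\tfrac12 F(\mu\eta)$ of order $\epsilon^2$, and it forces $\lambda$ itself to be of order $\epsilon$.

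With these choices, every argument fed to $F(\cdot)$ or $\tilde F(\cdot)$ in the displays of Theorem~\ref{thm:finiteSampleComplexity} is either of order $\epsilon$ --- namely $\eta(1-\mu)$, $\epsilon(1-\lambda)$, $\Delta$, $\Delta/2$ --- or of order $\epsilon^2$ --- namely $\lambda\eta$ in \eqref{eqn:NInTermsOfLambdaCondition} and $\epsilon\lambda$ inside the last $\mathcal{W}$-term. For $x\to 0^+$ one has $[F(x)]^{-1}=[\tilde F(x)]^{-1}=K_1/x^2$ exactly (the alternatives $24$ and $12\exp\mathcal{W}(x/8)$ inside the two nested maxima being of order $1$ and $1/x$ respectively, hence dominated --- here Lemma~\ref{lem:LambertWasymptotics} is used to see $\exp\mathcal{W}(x/8)\sim 96/x$). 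Hence the order-$\epsilon$ arguments contribute $O\!\bigl(\epsilon^{-2}\log\tfrac1\delta\bigr)$; the order-$\epsilon^2$ argument $\lambda\eta$ contributes $[F(\lambda\eta)]^{-1}\log\tfrac{48}{\delta}=O\!\bigl(\epsilon^{-4}\log\tfrac1\delta\bigr)$; the term $\tfrac{\log 24}{F(\mu\eta)-\lambda\eta}$ is $O(\epsilon^{-2})$; and $\tfrac{\kappa}{\epsilon\lambda}\mathcal{W}\!\bigl(\tfrac{\epsilon\lambda\Theta^{1/\kappa}}{\kappa}\bigr)$, whose $\mathcal{W}$-argument is of order $\epsilon^2$, is $O\!\bigl(\epsilon^{-2}\log\tfrac1\epsilon\bigr)$, again by Lemma~\ref{lem:LambertWasymptotics}. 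Taking the maximum, the sample-size function of Theorem~\ref{thm:finiteSampleComplexity} (with $\lambda,\Theta,\mu$ chosen as above) lies in $O\!\bigl(\epsilon^{-4}\log\tfrac1\delta\bigr)$ as $\epsilon,\delta\to0^+$, and the score-separation conclusion is verbatim that of Theorem~\ref{thm:finiteSampleComplexity}.

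The one substantive point --- the rest being bookkeeping --- is to see why the exponent is $\epsilon^{-4}$ rather than the $\epsilon^{-2}$ a $p\log p$-style concentration bound would suggest: it is forced by \eqref{eqn:LambdaUpperCondition}, which makes $\lambda$ shrink like $\epsilon$, so the quantity $\lambda\eta$ entering the concentration estimate \eqref{eqn:NInTermsOfLambdaCondition} shrinks like $\epsilon^2$, and inverting $\tilde F$ at an argument of order $\epsilon^2$ yields $\epsilon^{-4}$. The remaining work is to check that with $\lambda$ of order $\epsilon$ all side conditions of Theorem~\ref{thm:finiteSampleComplexity} stay satisfiable and that every other listed quantity is of strictly lower order; for the $\mathcal{W}$-terms this uses Lemma~\ref{lem:LambertWasymptotics} (``$\mathcal{W}(x)$ is essentially $-\ln x$ near $0$''), which turns them into merely polylogarithmic factors that are absorbed into $O\!\bigl(\epsilon^{-4}\log\tfrac1\delta\bigr)$.
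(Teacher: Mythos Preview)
Your proposal is correct and follows essentially the same approach as the paper's own proof: both identify that condition \eqref{eqn:LambdaUpperCondition} forces $\lambda$ to be of order $\epsilon$ (since $F(\mu\eta)\sim\eta^2$), whence $\lambda\eta$ is of order $\epsilon^2$, and then the term $[F(\lambda\eta)]^{-1}\log\frac{48}{\delta}$ in \eqref{eqn:NInTermsOfLambdaCondition} dominates and yields the $\epsilon^{-4}\log(1/\delta)$ rate, with all other listed quantities checked to be of strictly lower order. Your write-up is in fact more explicit than the paper's (you pin down concrete choices $\Theta=\mu=\tfrac12$, $\lambda=\tfrac12 F(\mu\eta)/\eta$ and walk through every displayed bound); one tiny slip is that $\exp\mathcal{W}(x/8)=8\,\mathcal{W}(x/8)/x$ is $\tilde O(1/x)$ rather than exactly $\sim 96/x$, but this is immaterial since it is still dominated by $K_1/x^2$.
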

\begin{proof}
The asymptotic dependence on $\delta$ as $\delta\rightarrow 0^+$ follows from,
for example, \eqref{eqn:muBalance}, or any of the other conditions in which
the factor $\log\frac{48}{\delta}$ appears.  

The asymptotic dependence on $\epsilon$ is determined by \eqref{eqn:LambdaUpperCondition}
and \eqref{eqn:NInTermsOfLambdaCondition}.  First, note that, for the purposes
of determining asymptotic dependence as $\epsilon$, and therefore $\eta$ approaches
$0$, we may replace $F(\eta)$ with the term $\eta^2$.  Therefore, \eqref{eqn:LambdaUpperCondition}
says that $\lambda$ is of order $\eta^2/\eta=\eta$ as $\eta\rightarrow 0^+$.  Therefore,
the first element in the maximum in \eqref{eqn:NInTermsOfLambdaCondition} is order
$\left[ F(\eta^2)\right]^{-1}$.  As $\Delta\rightarrow 0^+$, for the purposes
of determining asymptotic dependence on $\Delta$, we may replace $\left[ F(\Delta)\right]^{-1}$
with $\Delta^{-2}$.  Consequently,  $\left[ F(\eta^2)\right]^{-1}$ is of order $(\eta^2)^{-2}=\frac{1}{\eta^4}$
as $\eta\rightarrow 0$.  Since $\eta$ and $\epsilon$ are of the same order, we
reach the conclusion that $N$ is of at least order $\frac{1}{\epsilon^4}$ as $\epsilon\rightarrow 0^+$.
All the remaining conditions are clearly $\tilde{O}(\frac{1}{\epsilon^k})$ for $k<4$.
\end{proof}

Note that the cause of the asymptotic dependence of $N$ on $\epsilon$
being $N=\tilde{O}\left(\frac{1}{\epsilon^4} \right)$ is because of the case when $G=G_0$.
The following Theorem shows that if we use a different technique,
namely Sanov's Theorem, instead of Chernoff's Theorem,
for the estimates in the $G=G_0$ case,
the dependence becomes $N=O\left( \frac{1}{\epsilon^2}\right)$.
\begin{theorem} \label{thm:TwoNodeComparisonIndependentCase}
Let $V, k,l, G_0, G_1, \psi_1, \psi_2, \epsilon,\delta$ be as in Theorem \ref{thm:finiteSampleComplexityAsymptotics}. 
 \boldmath \textbf{Assume that the underlying
network in $(G_0,p_0)$, so that $\omega_N$ is sampled from an independent
distribution over a pair of binary random variables.}\unboldmath 
 \hspace*{0.03cm} Choose $\eta\in (0,\epsilon)$.  
  \begin{itemize}
  \item[(a)]Let $\mu,\lambda\in (0,1)$ such that
\[
F(\mu\eta)>\lambda\eta.
\]
Then for any $N$ greater than
\[
N^{\rm C}(\eta,\delta,\kappa,\lambda,\mu):=\max\left[
\frac{\log 24}{F(\mu\eta)-\lambda\eta},\,
F\left(
\min(\lambda\eta,\eta(1-\mu))\right)\cdot
\log\frac{48}{\delta}
\right],
\]
we have $S_{\eta}(G_0,\omega_N)>S_{\eta}(G_1,\omega_N)$ with probability
at least $1-\delta$.
\item[(b)] Define 
\[
\eta_N^- := \left(
\frac{-\sqrt{\frac{\eta}{2\eta+1}} + \sqrt{  25\cdot\frac{\eta}{2\eta+1} -2400(|X-\kappa'|) \frac{\log N}{N}  }}
         {48}
\right)^2.
\]
 Let $\mu\in (0,1)$ be a free parameter.
Let the number of sample points be larger than
\[
N^{\rm S}(\eta,\delta,\kappa,\mu):=
\max\left[e^{\max\left(\frac{|X|}{\kappa\mu},\;\mathcal{W}\left(
\frac{\eta}
{96(2\eta+1)(|X|-\kappa(1-\mu))}
\right)\right)},\,
\frac{|X|}{\eta_N^-}
\mathcal{W}\left(
\frac{\eta_N^-\delta^{\frac{|X|}{\eta_N^-}}}
{|X|\exp\left(\frac{\eta_N^-}{|X|}\right)}
\right)
\right].
\]
Then with probability $1-\delta$, we have $S_{\eta}(G_0,\omega_N)>S_{\eta}(G_1,\omega_N)$.
\item[(c)]  The functions above have the following asymptotic
properties.  As $N\rightarrow\infty$,
\[
\eta_N^-\rightarrow \frac{\eta}{144(2\eta+1)}.
\]
As $\epsilon,\delta\rightarrow 0$,
\[
N^{\rm C}(\delta;\eta,\kappa;\lambda;\mu) =\tilde{O}\left(
\frac{1}{\epsilon^4}\log\frac{1}{\delta}
\right),
\]
and 
\[
N^{\rm S}(\delta;\eta,\kappa;\mu)=\tilde{O}\left(
\frac{1}{\epsilon^2}\log\frac{1}{\delta}
\right).
\]
\end{itemize}
\end{theorem}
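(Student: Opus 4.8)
The plan is to reduce $S_\eta(G_0,\omega_N)-S_\eta(G_1,\omega_N)$ to a single explicit function of $\tau(\omega_N)$ and then bound that function, using the Chernoff-type concentration of $\tau$ already developed (Lemma~\ref{lem:empiricalEntropyError}) for part~(a) and Sanov's theorem, whose large-deviation estimates are sharp rather than lossy, for part~(b). For the reduction, note that in the two-node family the skeleton carries all the information, so $|G_1|-|G_0|$ is a positive constant (equal to $1$ for binary variables), the sparsity-boost sum is empty for $G_1$ and reduces for $G_0$ to the single term coming from the only admissible witness $S=\emptyset$, and the standard identity between maximised log-likelihood and empirical mutual information (Appendix~\ref{app:relEntropyLogLikelihood}; in the two-node case it follows directly from \eqref{eqn:mutualInformationInTermsOfOrdEntropy}) gives $LL(\omega_N|G_1)-LL(\omega_N|G_0)=N\tau(\omega_N)$. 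Hence
\[
S_\eta(G_0,\omega_N)-S_\eta(G_1,\omega_N)=-N\tau(\omega_N)+\kappa\log N-\ln\beta_N^{p^\eta}(\tau(\omega_N)),
\]
and, since $\omega_N\sim p_0\in\mathcal P_0$, the goal is to show this is positive with probability at least $1-\delta$.

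The deterministic core is common to both parts. Fix a threshold $\gamma_0<\eta$ and work on the event $\{\tau(\omega_N)<\gamma_0\}$: there $-N\tau(\omega_N)>-N\gamma_0$, and, since $\beta_N^{p^\eta}(\cdot)$ and $F$ are non-decreasing (Lemma~\ref{eqn:FunctionsMonotonicity}), applying \eqref{eqn:LargeDeviationOfMIEstimate} to $\{\tau(Y_N)\le\gamma_0\}\subseteq\{|\tau(Y_N)-\eta|\ge\eta-\gamma_0\}$ gives $\beta_N^{p^\eta}(\tau(\omega_N))\le\beta_N^{p^\eta}(\gamma_0)\le\mathcal F_N(\eta-\gamma_0)$, hence $-\ln\beta_N^{p^\eta}(\tau(\omega_N))\ge NF(\eta-\gamma_0)-\log 24$; combining,
\[
S_\eta(G_0,\omega_N)-S_\eta(G_1,\omega_N)>N\,(F(\eta-\gamma_0)-\gamma_0)+\kappa\log N-\log 24 .
\]
For part~(a) I would take $\gamma_0=\min(\lambda\eta,\eta(1-\mu))$, so that $F(\eta-\gamma_0)\ge F(\mu\eta)$ and $\gamma_0\le\lambda\eta$: the hypothesis $F(\mu\eta)>\lambda\eta$ makes the coefficient of $N$ positive, and $N>\log 24/(F(\mu\eta)-\lambda\eta)$ together with $\kappa\log N\ge 0$ makes the right-hand side positive. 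The only surviving randomness is $\Pr_{\omega_N\sim p_0}\{\tau(\omega_N)\ge\gamma_0\}$, which I bound by applying \eqref{eqn:LargeDeviationOfMIEstimate} with the product distribution $p_0$ in place of $p^\eta$ (legitimate since its proof uses only the $p$-independent Proposition~\ref{prop:plogpNegativeDeviation}, and $\tau(p_0)=0$): this gives $\mathcal F_N(\gamma_0)=24e^{-NF(\gamma_0)}$, and forcing it $\le\delta$ yields $N\ge[F(\gamma_0)]^{-1}\log\tfrac{24}{\delta}$, the second ingredient of $N^{\rm C}$.

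Part~(b) runs the same template but sharpens the two $\mathcal F_N$ estimates (the one controlling $\beta_N^{p^\eta}$ from below, and the one controlling $\Pr_{\omega_N\sim p_0}\{\tau(\omega_N)\ge\gamma_0\}$) by Sanov-type large-deviation bounds. For $\omega_N\sim p_0\in\mathcal P_0$, the ``Pythagorean'' identity $H(q\,\|\,p_0)=\tau(q)+H(q_A\,\|\,(p_0)_A)+H(q_B\,\|\,(p_0)_B)\ge\tau(q)$ identifies the rate function of a closed $\tau$-level set at $p_0$, so the failure probability of the score inequality obeys a bound of the form $(\text{poly in }N)\cdot\exp(-N\cdot(\text{genuine relative entropy}))$ rather than the lossy $\exp(-NF(\cdot))$ with $F(\cdot)$ of order its argument squared; inverting the sharper bound — together with a matching refined estimate for $\beta_N^{p^\eta}$ — yields the larger, finite-$N$-corrected threshold $\eta_N^-$ in place of $\gamma_0$. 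The square-root form of $\eta_N^-$ reflects that a change of order $\eta_N^-$ in $\tau$ corresponds to fluctuations of order $\sqrt{\eta_N^-}$ in the empirical cell frequencies (the minor adjustment $\kappa\mapsto\kappa'$ being a standard bookkeeping correction between $\log N$ and $\log(N{+}1)$ in the penalties); the $e^{\max(\cdot)}$ ingredient of $N^{\rm S}$ is what forces $\eta_N^->0$ and makes $\kappa\log N$ dominate the polynomial prefactor, while the outer factor $\tfrac{|X|}{\eta_N^-}\mathcal W(\cdots)$ is exactly the closed-form solution of the resulting transcendental inequality for $N$ (handled as in Lemma~\ref{lem:functionalInversesDecreasing}).

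For the asymptotics in part~(c): letting $N\to\infty$ in the formula for $\eta_N^-$ one has $\kappa'\to\kappa$ and $\log N/N\to0$, so with $a=\eta/(2\eta+1)$ the expression tends to $((-\sqrt a+\sqrt{25a})/48)^2=(4\sqrt a/48)^2=a/144$, i.e.\ $\eta_N^-\to\eta/(144(2\eta+1))$. Writing $\eta=c\epsilon$ with $c$ fixed and replacing $F(\Delta)$ by its small-argument branch $\Delta^2/K_1$, the choice of $\lambda$ forces $\gamma_0\asymp\epsilon^2$, so the dominant ingredient of $N^{\rm C}$ is $[F(\gamma_0)]^{-1}\log\frac1\delta\asymp\epsilon^{-4}\log\frac1\delta$, giving $N^{\rm C}=\tilde O(\epsilon^{-4}\log\frac1\delta)$; while $\eta_N^-\asymp\epsilon$, so the dominant ingredient of $N^{\rm S}$ is $\frac{|X|}{\eta_N^-}\mathcal W(\cdots)\asymp\epsilon^{-1}\cdot\epsilon^{-1}\log\frac1\delta$ — the second $\epsilon^{-1}$ coming from the $\delta^{|X|/\eta_N^-}$ inside $\mathcal W$ — giving $N^{\rm S}=\tilde O(\epsilon^{-2}\log\frac1\delta)$. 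The main obstacle, and the real content of the theorem, is precisely this Chernoff-versus-Sanov step: one must verify that relative to a product distribution the exponent governing the bad event is a genuine relative entropy, exponentially tighter than the uniformly valid Chernoff surrogate $F(\cdot)$, and then carry that improvement through while keeping track of every finite-$N$ correction (to pin down $\eta_N^-$) and solving the transcendental bound for $N^{\rm S}$ in closed form via $\mathcal W$; that bookkeeping, rather than any single conceptual point, is where all the work lies.
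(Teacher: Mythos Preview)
Your reduction and part~(a) match the paper's approach: the single threshold $\gamma_0=\min(\lambda\eta,\eta(1-\mu))$ is a harmless repackaging of the paper's two-event union bound in Proposition~\ref{prop:sampleComplexity2NodeIndependentCase}, and your part~(c) computations agree with those in the paper.

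For part~(b), however, you identify the right mechanism only on one side. The Pythagorean inequality $H(q\|p_0)\ge\tau(q)$ does give, via Sanov, the sharp error bound $\Pr_{\omega_N\sim p_0}\{\tau(\omega_N)\ge\eta_N^-\}\le(N{+}1)^{|X|}e^{-N\eta_N^-}$ (this is the paper's Lemma~\ref{lem:IprojectionEstimateSanov} and Proposition~\ref{prop:SanovErrorEstimate}). But the phrase ``a matching refined estimate for $\beta_N^{p^\eta}$'' is where the real content lies, and you do not supply it. Sanov applied to $\beta_N^{p^\eta}(\gamma)=\Pr_{Y_N\sim p^\eta}\{\tau(Y_N)\le\gamma\}$ gives rate $H(q^\gamma\|p^\eta)$ where $q^\gamma$ is the I-projection of $p^\eta$ onto $A_\gamma^0$ --- an object not known in closed form, and not governed by any Pythagorean identity since $p^\eta\notin\mathcal P_0$. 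The paper's route is a specific chain: Pinsker converts $H(q^\gamma\|p^\eta)$ to $2\|q^\gamma-p^\eta\|_\infty^2$; Lemma~\ref{lem:LinfinityNormBoundedInTermsOf_t} converts this to $\tfrac{2}{25}(t_\eta^+-t_\gamma^+)^2$ in the path parameter; and a second-order Taylor expansion of $\tau(p(t))$ (Lemmas~\ref{lem:uniformMarginalsUpperLowerBounds}--\ref{lem:arbitraryMarginalsUpperBound}) bounds $t_\eta^+\ge\tfrac{1}{2\sqrt2}\sqrt{\eta/(2\eta+1)}$ and $t_\gamma^+\le\sqrt{\gamma/2}$. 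This chain is what produces the quadratic $\tfrac{1}{25}\bigl(\tfrac{1}{2}\sqrt{\eta/(2\eta+1)}-\sqrt{\gamma}\bigr)^2$ in the score lower bound, and solving that quadratic in $\sqrt{\gamma}$ (Proposition~\ref{prop:etaNminus}) is exactly what yields the stated $\eta_N^-$ with its constants $25$, $2400$, $48$. Your heuristic about $\sqrt{\eta_N^-}$-scale cell fluctuations has the right flavor but does not substitute for this mechanism; without it you cannot derive the form of $\eta_N^-$ or justify the claimed $N^{\rm S}$.
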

 Combining the estimates from the Sanov bounds in the independent case
 Chernoff bounds in the dependent case, we obtain result with the best
 possible asymptotics.
 \begin{theorem}\label{thm:twoNodeCaseOptimalAsymptotics}
 Let $\epsilon>\eta>0$ and $\Delta:=\epsilon-\eta>0$.
 Let $\lambda,\mu,\Theta\in(0,1)$ be free parameters.
 Let $N>0$.  Define 
\[
\eta_N^- := \left(
\frac{-\sqrt{\frac{\eta}{2\eta+1}} + \sqrt{  25\cdot\frac{\eta}{2\eta+1} -2400(|X-\kappa'|) \frac{\log N}{N}  }}
         {48}
\right)^2.
\]
Suppose that 
$N=N(\epsilon,\delta; \eta; \kappa, \lambda, \Theta,\mu)$ is larger than all of the following quantities
\[
\max\left[e^{\max\left(\frac{|X|}{\kappa\mu},\;\mathcal{W}\left(
\frac{\eta}
{96(2\eta+1)(|X|-\kappa(1-\mu))}
\right)\right)},\,
\frac{|X|}{\eta_N^-}
\mathcal{W}\left(
\frac{\eta_N^-\delta^{\frac{|X|}{\eta_N^-}}}
{|X|\exp\left(\frac{\eta_N^-}{|X|}\right)}
\right)
\right],
\]
\[
\left[
F(\Delta)
\right]^{-1}\log\frac{24}{1-\Theta},\,
\left[
\tilde{F}\left(\frac{\Delta}{2}\right)
\right]^{-1}
\log\frac{24}{\Theta},
\]
\[
\left[
F\left(\frac{\Delta}{2}\right)
\right]^{-1}\log\frac{48}{\delta},
\]
\[
\left[
F(\epsilon(1-\lambda))
\right]^{-1}
\log\frac{48}{\delta},\;
\frac{\kappa}{\epsilon\lambda}\mathcal{W}\left(
\frac{\epsilon\lambda\Theta^{\frac{1}{\kappa}}}
{\kappa}
\right).
\]
Then with probability $1-\delta$, we have for
\[
S_{\eta}(G_0,\omega_N) - S_{\eta}(G_1, \omega_N)\begin{cases}
>0&\text{if}\; G=G_0\\
<0&\text{if}\; G=G_1
\end{cases}.
\]
Further, 
\[
N(\epsilon,\delta; \eta; \kappa, \lambda, \Theta,\mu)=\tilde{O}\left(\frac{1}{\epsilon^2}
\log\frac{1}{\delta}
\right)\;\text{as}\;\epsilon,\delta\rightarrow 0^+.
\]
 \end{theorem}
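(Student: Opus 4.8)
The plan is to reduce Theorem~\ref{thm:twoNodeCaseOptimalAsymptotics} to two ingredients already in hand: the Sanov-based analysis of the independent case, Theorem~\ref{thm:TwoNodeComparisonIndependentCase}(b), and the Chernoff-based analysis of the dependent case contained in the $G=G_1$ branch of the proof of Theorem~\ref{thm:finiteSampleComplexity}. The underlying generating network is either $G_0$ or $G_1$, and these possibilities are mutually exclusive and exhaustive, so it suffices to bound, separately in each case by $\delta$, the conditional probability that $S_{\eta}(G_0,\omega_N)-S_{\eta}(G_1,\omega_N)$ has the wrong sign; no union bound is incurred across the two cases. The whole point of the split is that neither ingredient is subject to the $\tilde{O}(\epsilon^{-4})$ bottleneck: in Theorem~\ref{thm:finiteSampleComplexity} that bottleneck arises only from the $\mu$-conditions~\eqref{eqn:muBalance}, the compatibility constraint~\eqref{eqn:LambdaUpperCondition}, and the $\left[F(\lambda\eta)\right]^{-1}\log\frac{48}{\delta}$ half of~\eqref{eqn:NInTermsOfLambdaCondition}, all of which serve only the $G=G_0$ branch (where Chernoff concentration of $p\log p$ near $p=0$ is weak); replacing that branch by the Sanov estimate removes them, which is also why $\lambda$ may be left an unconstrained free parameter in $(0,1)$ here.

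First I would dispatch the case $G=G_0$. Then $\omega_N$ is an i.i.d.\ sample from the product distribution $p_0$, which is exactly the hypothesis of Theorem~\ref{thm:TwoNodeComparisonIndependentCase}. By assumption $N$ exceeds the quantity $N^{\mathrm{S}}(\eta,\delta,\kappa,\mu)$ of Theorem~\ref{thm:TwoNodeComparisonIndependentCase}(b), which is literally the first entry in the list defining $N(\epsilon,\delta;\eta;\kappa,\lambda,\Theta,\mu)$. Hence that theorem applies verbatim and yields $S_{\eta}(G_0,\omega_N)>S_{\eta}(G_1,\omega_N)$, i.e.\ $S_{\eta}(G_0,\omega_N)-S_{\eta}(G_1,\omega_N)>0$, with probability at least $1-\delta$.

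Next I would handle $G=G_1$, where $\omega_N$ is drawn from some $p_1^{\epsilon}$ with $\tau(p_1^{\epsilon})\ge\epsilon$ and arbitrary (non-uniform) marginals. Here I would re-run the $G=G_1$ portion of the proof of Theorem~\ref{thm:finiteSampleComplexity}. That sub-argument writes $LL(\omega_N\mid G_1)-LL(\omega_N\mid G_0)=N\,\tau(\omega_N)$ (using~\eqref{eqn:mutualInformationInTermsOfOrdEntropy}) and then controls, on the relevant high-probability event, three quantities: (i) a Chernoff large-deviation bound, through Corollary~\ref{cor:betaUpperProbableEstimate} and Lemma~\ref{lem:empiricalEntropyError}, keeps $\tau(\omega_N)$ within $\lambda\epsilon$ of $\epsilon$, hence $\tau(\omega_N)\ge\epsilon(1-\lambda)>\eta$ --- this is where $\left[F(\epsilon(1-\lambda))\right]^{-1}\log\frac{48}{\delta}$ and $\left[F(\Delta/2)\right]^{-1}\log\frac{48}{\delta}$ enter; (ii) since $\tau(\omega_N)-\eta$ then exceeds $\tilde{\mathcal{F}}_N^{-1}(1-\Theta)$, Corollary~\ref{cor:mutualInformationLargeDeviation}(i) forces $\beta_N^{p^{\eta}}(\tau(\omega_N))\ge\Theta$, so the single non-edge's sparsity boost $-\ln\beta_N^{p^{\eta}}(\tau(\omega_N))$ is at most $\ln\frac1{\Theta}$ --- this is where~\eqref{eqn:NInTermsOfThetaCondition} is used; and (iii) the condition $N>\tfrac{\kappa}{\epsilon\lambda}\mathcal{W}\!\left(\tfrac{\epsilon\lambda\Theta^{1/\kappa}}{\kappa}\right)$ guarantees that the log-likelihood gain $N\tau(\omega_N)\ge N\epsilon(1-\lambda)$ outgrows the extra BIC penalty $\kappa(|G_1|-|G_0|)\log N$ plus the bounded boost $\ln\frac1{\Theta}$. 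Summing the three contributions gives $S_{\eta}(G_1,\omega_N)-S_{\eta}(G_0,\omega_N)>0$, i.e.\ $S_{\eta}(G_0,\omega_N)-S_{\eta}(G_1,\omega_N)<0$. The key observation is that this sub-argument never invokes~\eqref{eqn:muBalance}, the constraint~\eqref{eqn:LambdaUpperCondition}, or the $\left[F(\lambda\eta)\right]^{-1}$ term --- those live entirely in the $G_0$ branch --- and that every condition it does invoke appears among the finitely many quantities $N$ is assumed to exceed. A union bound over the constantly many bad sub-events inside this branch keeps its failure probability at most $\delta$.

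Finally I would read off the asymptotics. Fix $\eta$, hence $\Delta=\epsilon-\eta$, to be a fixed fraction of $\epsilon$. The Sanov entry is $\tilde{O}\!\left(\epsilon^{-2}\log\frac1\delta\right)$ by Theorem~\ref{thm:TwoNodeComparisonIndependentCase}(c). The entries $\left[F(\Delta)\right]^{-1}$, $\left[\tilde{F}(\Delta/2)\right]^{-1}$, $\left[F(\Delta/2)\right]^{-1}$, and $\left[F(\epsilon(1-\lambda))\right]^{-1}$ are each $\tilde{O}(\epsilon^{-2})$, since each is the maximum of a term of order $\epsilon^{-2}$ and a term $12\exp\mathcal{W}(\,\cdot\,)$ which, by Lemma~\ref{lem:LambertWasymptotics}, is only $\tilde{O}(\epsilon^{-1})$; and $\tfrac{\kappa}{\epsilon\lambda}\mathcal{W}\!\left(\tfrac{\epsilon\lambda\Theta^{1/\kappa}}{\kappa}\right)$ is $\tilde{O}(\epsilon^{-1})$ because $\mathcal{W}(x)\sim\ln\frac1x$ as $x\to0^{+}$. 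The $\log\frac1\delta$ dependence sits in exactly the entries carrying a $\log\frac{48}{\delta}$ factor. The maximum of a fixed number of quantities each $\tilde{O}\!\left(\epsilon^{-2}\log\frac1\delta\right)$ is again $\tilde{O}\!\left(\epsilon^{-2}\log\frac1\delta\right)$, the claimed rate. The step I expect to demand the most care is the second: one must verify that the $G=G_1$ branch of the proof of Theorem~\ref{thm:finiteSampleComplexity} is genuinely self-contained and does not covertly rely on any of the $G_0$-only hypotheses we have deliberately discarded --- precisely because those discarded hypotheses are the source of the $\epsilon^{-4}$ rate being improved away.
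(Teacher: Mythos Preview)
Your proposal is correct and follows essentially the same route as the paper: invoke Theorem~\ref{thm:TwoNodeComparisonIndependentCase}(b) for $G=G_0$, invoke Proposition~\ref{prop:finiteSampleComplexityDependentCase} together with the calculations of Section~\ref{subsec:GeneralCase} for $G=G_1$, and observe that the $\epsilon^{-4}$-producing hypotheses \eqref{eqn:LambdaUpperCondition}, \eqref{eqn:muBalance}, and the $[F(\lambda\eta)]^{-1}$ half of \eqref{eqn:NInTermsOfLambdaCondition} belong exclusively to the discarded Chernoff analysis of $G_0$.

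Two minor expository points. First, in the paper's parametrization the event in your step~(i) is $\tau(\omega_N)>\lambda\epsilon$ (Proposition~\ref{prop:TauDependentCaseEstimate}), not $\tau(\omega_N)\ge\epsilon(1-\lambda)$; the condition $[F(\epsilon(1-\lambda))]^{-1}\log\frac{48}{\delta}$ controls the failure probability $\mathcal{F}_N((1-\lambda)\epsilon)$ of \emph{that} event. Second, the paper does not chain (i) into (ii) deterministically as you describe; rather, the bound $-\log\beta_N^{p^\eta}(\tau(\omega_N))\le\log\Theta^{-1}$ is a \emph{separate} high-probability event (Corollary~\ref{cor:complexityPenaltyUpperBound} via Proposition~\ref{prop:betaUpperProbableEstimate}), whose failure probability $\mathcal{F}_N(\Delta-\tilde{\mathcal{F}}_N^{-1}(1-\Theta))$ is what the conditions $[\tilde{F}(\Delta/2)]^{-1}\log\frac{24}{\Theta}$ and $[F(\Delta/2)]^{-1}\log\frac{48}{\delta}$ control (Lemma~\ref{lem:conditionsForDeltaInScriptFTerms}). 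These are bookkeeping corrections; the argument itself is the paper's.
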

Here is the finite sample complexity result we obtain for the case of $n$
nodes, using only the concentration of mutual information results derived
from Chernoff's bounds.
 \begin{theorem}\label{thm:nNodeCase}
Let $\delta>0$, $\mathcal{G}=\mathcal{G}^d$, and $\epsilon>0$ the minimum
edge strength of the network $G$ with respect to $\mathcal{G}$ and $\mathscr{S}$.
Let $\eta\in(0,\epsilon)$ and $\Delta:=\epsilon-\eta>0$.  Let $G\in\mathcal{G}^d$
be a perfect map for $P$.  Suppose that the separating collection $\mathscr{S}$
is defined by 
\[
S_{A,B}(G):=\{\text{All subsets of}\;V-\{A,B\}\;\text{of cardinality}\; d\},
\]
so that in particular
\[
\sigma(\mathcal{G}):=\max_{G\in\mathcal{G}}\sigma(G)\;\text{is bounded by a constant, i.e.,}\;
d;
\]
\[
\Sigma_{\mathscr{S}}(\mathcal{G}):=\left|
\bigcup_{\stackrel{G\in\mathcal{G}}{(A,B)\in V^2\backslash \Delta}}
S_{A,B}(G)
\right|\;\text{is bounded by a polynomial in $n$ of degree $d$, i.e.}\; \binom{n}{d}.
\]
Let $F$, $\tilde{F}$ be as defined as in Theorem \ref{thm:finiteSampleComplexity}, and let
\[
N_{n}(\epsilon,\delta)=   \max\left[
\frac{3(1+\log 2)^24^{d+2}n^2}
{ \epsilon^2},24,
12\exp\left(\mathcal{W}\left(\frac{\epsilon}{n2^{d+2}}\right)\right)
\right]\cdot 
\log\frac{3\cdot 2^{d+1}\binom{n}{d+1}(n-d)}{\delta(n-(2d+1))}.
\]
\nomenclature{$N_{n}(\epsilon,\delta)$}{
$ \max\left[
\frac{3(1+\log 2)^24^{d+2}n^2}
{ \epsilon^2},24,
12\exp\left(\mathcal{W}\left(\frac{\epsilon}{n2^{d+2}}\right)\right)
\right]\cdot 
\log\frac{3\cdot 2^{d+1}\binom{n}{d+1}(n-d)}{\delta(n-(2d+1))}$
}
Set the weighting functions $\psi_1(N):=\kappa\log N$ and $\psi_2(N):=1$, $\kappa$
a constant, e.g., $\kappa=\frac{1}{2}$.
\begin{itemize}
\item[(a)]   Let $\zeta>0$ be a given error parameter.  Let $\theta,\Theta\in(0,1)$
be free parameters.  Let
\[
m=m_p(G,\mathcal{G},\mathscr{S})=\max_{(A,B)\in G}m_p((A,B),\mathcal{G},\mathscr{S}),
\]
\nomenclature{$m=m_p(G,\mathcal{G},\mathscr{S})$}{$\max_{(A,B)\in G}m_p((A,B),\mathcal{G},\mathscr{S})$}%
where
\[
m_p((A,B),\mathcal{G},\mathscr{S}):=\max_{\stackrel{S\in S_{A,B}(\mathcal{G})}{s\in\mathrm{Val}(S)}}\left[
p(S=s)
\right]^{-1}.
\]
\nomenclature{$m_p((A,B),\mathcal{G},\mathscr{S})$}{ $\max_{\stackrel{S\in S_{A,B}(\mathcal{G})}{s\in\mathrm{Val}(S)}}\left[
p(S=s)
\right]^{-1}$}
Take $N$ larger than all of the following quantities:
\[
N_{n}\left(\frac{\zeta}{3},\frac{\delta}{6} \right),
\]
\[
m(1-\theta)^{-1}\left[
F(\Delta)
\right]^{-1}\log
\frac{24}{1-\Theta},\;
 \frac{3\kappa(|G|-n)}{\zeta}\mathcal{W}
\left(
\frac{\zeta\Theta^{\frac{|E(G)|}{\kappa(|G|-n)}}}{3\kappa(|G|-n)}
\right),
\]
\[
\frac{3}{(1-\theta)\theta^2}\log\frac{3|E(G)|\Sigma_{\mathscr{S}}(\mathcal{G})2^{\sigma(\mathcal{G})}}{\delta},
\]
and
\[
\frac{m}{1-\theta}
\max\left(
\left[
\tilde{F}\left(
\frac{\Delta}{2}
\right)
\right]^{-1}
\log \frac{24}{1-\Theta},\,
\left[
F\left(
\frac{\Delta}{2}
\right)
\right]^{-1}
\log \frac{72 |E(G)|\Sigma_{\mathscr{S}}(\mathcal{G})  }{\delta}
\right).
\]
Then with probability $1-\delta$,
\[
S_{\eta,N}(G,\omega_N)>S_{\eta,N}(G',\omega_N),\;\text{for all}\; G'\;\text{such that}\;
H(P,p_{G',P})>\zeta.
\]
\item[(b)]  Let $\mu,\Theta,\theta\in(0,1)$ be free parameters, and
let $L$ be a positive integer $L<\mathrm{card}(G'\backslash G)$.
Let
\begin{equation}
\hat{m}_p(G,\mathscr{S}):=\max_{(A,B)\in V^{\times 2}\backslash G} 
m_p(\hat{S}_p((A,B),G)),
\end{equation}
and
\[
\mbox{$\hat{S}:=\hat{S}_p((A,B),G)$}\in S_{A,B}(G)\;\text{such that}\; A\ci B|\hat{S}\;\text{and}\;m_p(\hat{S})\;\text{as small as possible}.
\]
For compactness of notation, we abbreviate as follows:
\begin{equation}\label{eqn:mHatInTheorems}
\hat{m} := \hat{m}_p(G,\mathscr{S}),\; m:=m_p(G,\mathcal{G},\mathscr{S}). 
\end{equation}
Let $N$ be larger than all the following quantities:
\begin{equation}\label{eqn:asymptoticControllingCondition}
\max\left[m(1-\theta)^{-1}\left[
F(\Delta)
\right]^{-1}\log\frac{24}{1-\Theta},\,
N_{n}\left(
 \frac{L(1-\theta)F(\mu\eta)}{4\hat{m}},
\frac{\delta}{10}
\right)
\right],
\end{equation}
\begin{equation}\label{eqn:nonAsymptoticControllingCondition}
\frac{3}{(1-\theta)\theta^2}\log\frac{5|E(G)|\Sigma_{\mathscr{S}}(\mathcal{G})2^{\sigma(G)}}{\delta},
\end{equation}
\begin{equation}\label{eqn:secondAsymptoticControllingCondition}
\frac{\hat{m}}{1-\theta}\cdot\max\left(
\left[
\tilde{F}\left(\frac{\Delta}{2}
\right)
\right]^{-1}
\log\frac{48}{F(\mu\eta)},\,
\left[
F\left(\frac{\Delta}{2}
\right)
\right]^{-1}\log\frac{120|E(G)|\Sigma_{\mathscr{S}}(\mathcal{G})}{\delta}
\right),
\end{equation}
\begin{equation}\label{eqn:firstSuperfluousCondition}
\frac{3}{(1-\theta)\theta^2}\log\frac{5L\Sigma_{\mathscr{S}}(G)2^{\sigma(G)}}{\delta},
\end{equation}
\begin{equation}
\frac{\hat{m}}{1-\theta}\cdot\log\left(
\frac{120 \sigma_{\mathscr{S}}(G)L}{\delta}\right)
\left[
F(\eta(1-\mu))
\right]^{-1}.
\end{equation}
\begin{equation}\label{eqn:NlargerthanFactorOfFmueta}
\frac{4\hat{m}\log 24}{F(\mu\eta)(1-\theta)},
\end{equation}
\begin{equation}\label{eqn:nNodeSampleComplexityWFunction}
\left(
\frac{4\hat{m}\kappa (|G|-n)}
{(1-\theta)L F(\mu\eta)}
\right)
\mathcal{W}\left(
\frac{(1-\theta)LF(\mu\eta)\Theta^{\frac{|E(G)|}{(|G|-n)\kappa}}}
{4\hat{m}\kappa(|G|-n)}
\right).
\end{equation}
Then with probability $1-\delta$, 
\[
S_{\eta,N}(G,\omega_N)>S_{\eta,N}(G',\omega_N),\;\text{for all}\; G'\;\text{such that}\;\mathrm{Skel}(G')\not\subset 
\mathrm{Skel}(G). 
\]
\end{itemize}
 \end{theorem}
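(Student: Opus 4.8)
The plan is to fix an arbitrary competitor $G'$ with $\mathrm{Skel}(G')\not\subset\mathrm{Skel}(G)$, let $\mathcal{E}_{+}$ be its nonempty set of \emph{false edges} (pairs adjacent in $G'$ but not in $G$) and $\mathcal{E}_{-}$ the set of edges of $G$ it omits, and to bound $S_{\eta}(G,\omega_N)-S_{\eta}(G',\omega_N)$ from below. Write $LL(H):=LL(\omega_N\,|\,H)$ and $b_H(A,B):=\max_{S\in S_{A,B}(H)}\min_{s}-\ln\beta_N^{p^\eta}\bigl(\tau(p(\omega_N,A,B|s))\bigr)$. Since the eligible separating collection here, $S_{A,B}(H)=\{$subsets of $V\setminus\{A,B\}$ of cardinality $d\}$, does \emph{not} depend on $H$, every pair that is a non-edge of both $G$ and $G'$ contributes the identical sparsity-boost term to both scores, so those cancel and
\[
S_{\eta}(G,\omega_N)-S_{\eta}(G',\omega_N)=\bigl(LL(G)-LL(G')\bigr)+\psi_1(N)\bigl(|G'|-|G|\bigr)+\sum_{e\in\mathcal{E}_{+}}b_G(e)-\sum_{e\in\mathcal{E}_{-}}b_{G'}(e).
\]
The whole proof is then to show that on one high-probability event the false-edge boosts $\sum_{\mathcal{E}_{+}}b_G$ are of size $\Theta(N)$ and overwhelm the other three groups.

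First I would construct that event as a union of \emph{polynomially many} sub-events indexed only by edges, candidate separating sets, and their assignments -- never by the (super-exponentially many) graphs $G'$ -- so that the union bound costs only the $\log\!\bigl(|E(G)|\,\Sigma_{\mathscr S}(\mathcal G)\,2^{\sigma(G)}/\delta\bigr)$-type factors that appear in the hypotheses. The sub-events are: (i) every slice count $\#\{i:\omega_N^{(i)}|_S=s\}$ is at least $(1-\theta)Np(S=s)$ (Chernoff, Lemma~\ref{lem:ChernoffMultiplicative}), which uses up the $\tfrac{1}{\theta^2}\log(\cdot/\delta)$ conditions \eqref{eqn:nonAsymptoticControllingCondition}, \eqref{eqn:firstSuperfluousCondition}; (ii) for every non-edge $(A,B)$ of $G$ and its certificate $\hat S_p((A,B),G)\in S_{A,B}(G)$, $\tau(p(\omega_N,A,B|s))\le\mu\eta$ for every $s$ (Lemma~\ref{lem:empiricalEntropyError} applied to the $2\times2$ conditional table on the $\ge(1-\theta)N/\hat m$ samples in each slice, which needs the $\tfrac{\hat m}{1-\theta}$-scaled conditions with $F(\mu\eta)$ and $F(\eta(1-\mu))$); (iii) for every true edge $(A,B)$, every $S\in S_{A,B}(\mathcal G)$, and the assignment $s^\star$ realizing $\tau(p(A,B|s^\star))\ge\epsilon$, one has $\tau(p(\omega_N,A,B|s^\star))\ge\epsilon-\tfrac{\Delta}{2}>\eta$ (Lemma~\ref{lem:empiricalEntropyError} on $\ge(1-\theta)N/m$ samples, needing the $[\tilde F(\tfrac\Delta2)]^{-1}$, $[F(\tfrac\Delta2)]^{-1}$, $[F(\Delta)]^{-1}$ conditions); and (iv) every local conditional entropy $\hat H(v\mid S)$ with $|S|\le d$ is within $\epsilon':=\tfrac{L(1-\theta)F(\mu\eta)}{4\hat m}$ of $H_P(v\mid S)$ (Proposition~\ref{prop:plogp}: each such entropy is a signed sum of at most $2^{d+1}$ terms of the form $x\log x$), which is exactly the role of the $N_n$-condition in \eqref{eqn:asymptoticControllingCondition}.

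On this event I would bound the four groups. \emph{False-edge boosts:} for $(A,B)\in\mathcal{E}_+$, $\hat S_p((A,B),G)$ is a genuine independence certificate, so $\tau(p(A,B|s))=0$ for all $s$; by (ii), $\max_s\tau(p(\omega_N,A,B|s))\le\mu\eta$, and since $-\ln\beta_N^{p^\eta}(\cdot)$ is decreasing while $\beta_N^{p^\eta}(\mu\eta)\le\mathcal{F}_N(\eta(1-\mu))=24e^{-NF(\eta(1-\mu))}$ (Lemma~\ref{lem:empiricalEntropyError}), we get $b_G(A,B)\ge NF(\eta(1-\mu))-\ln 24$, hence $\sum_{\mathcal{E}_+}b_G\ge|\mathcal{E}_+|\bigl(NF(\eta(1-\mu))-\ln 24\bigr)$. \emph{Omitted-true-edge boosts:} by the definition of minimum edge strength $\min_S\max_s\tau(p(A,B|s))\ge\epsilon$, so by (iii) the empirical $\min_S\max_s\tau$ exceeds $\epsilon-\tfrac\Delta2>\eta$ and Proposition~\ref{prop:mutualInformationLargeDeviation} gives $\beta_N^{p^\eta}\ge1-\mathcal{F}_N(\tfrac\Delta2)$, so $b_{G'}(A,B)\le-\ln\!\bigl(1-\mathcal{F}_N(\tfrac\Delta2)\bigr)$, negligible next to the false-edge boosts. \emph{Log-likelihood:} using $LL(H)=-N\sum_v\hat H(v\mid\mathrm{Pa}_H(v))$, the crucial structural fact is that, $G$ being a perfect (hence independence) map of $P$, $\sum_v H_P(v\mid\mathrm{Pa}_G(v))=H(P)$ whereas $\sum_v H_P(v\mid\mathrm{Pa}_{G'}(v))\ge H(P)$ for every DAG $G'$; thus the \emph{population} value of $\tfrac1N(LL(G)-LL(G'))$ is $\ge0$, and by (iv) and the triangle inequality over the (at most $2n$) local terms, $LL(G')-LL(G)\le 2nN\epsilon'$, which by the choice of $\epsilon'$ is only a small fraction of $\sum_{\mathcal{E}_+}b_G$. \emph{Complexity penalty:} $\psi_1(N)\bigl(|G'|-|G|\bigr)\ge-\kappa(\log N)\bigl(|G|-n\bigr)$. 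Putting the four bounds into the displayed identity, the $\Theta(N)$ term $\sum_{\mathcal{E}_+}b_G$ dominates $LL(G')-LL(G)=O(N\epsilon')$, the $O(\log N)$ penalty, and the negligible $\sum_{\mathcal{E}_-}b_{G'}$ as soon as $N$ exceeds the stated thresholds -- the linear-in-$N$ boost overtaking the $O(\log N)$ penalty is precisely the Lambert-$\mathcal{W}$ condition \eqref{eqn:nNodeSampleComplexityWFunction}, and the remaining conditions (e.g.\ \eqref{eqn:NlargerthanFactorOfFmueta}, \eqref{eqn:secondAsymptoticControllingCondition}) make the individual concentration estimates non-vacuous and close the union bound.

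The step I expect to be the main obstacle is the log-likelihood bound. The naive fear is that $LL(G')$ might exceed $LL(G)$ by $\Theta(N)$ -- e.g.\ when $G'$ reorients edges so that some node's parent set contains one of its $G$-descendants -- which would swamp the $O(NF(\mu\eta))$ false-edge boost. Overcoming this requires both halves of the argument above: the structural observation that the population value of $LL(G)-LL(G')$ is nonnegative (the chain-rule/entropy inequality $\sum_v H_P(v\mid\mathrm{Pa}_{G'}(v))\ge H(P)=\sum_v H_P(v\mid\mathrm{Pa}_G(v))$, valid because $G$ is a perfect map), together with quantitative control of the empirical fluctuation, which is possible only because the bounded in-degree $d$ keeps every local entropy over at most $2^{d+1}$ outcomes, so that Proposition~\ref{prop:plogp} applies with an $\epsilon'$ that can be taken to be a small fraction of the per-false-edge boost, at the cost of the $N_n(\cdot)$ term in the sample complexity. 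A secondary, pervasive difficulty is organizational: one must keep every quantity (the boosts $b_G$, $b_{G'}$ and the log-likelihood fluctuation) decomposed along edges or families, so that the good event is a union of only $\mathrm{poly}(n)$ sub-events and the union bound does not blow up over the $\exp(\Theta(n^2))$ candidate DAGs.
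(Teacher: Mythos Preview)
Your proposal treats only part~(b). Part~(a) needs a different driver: when $H(P\|p_{G',P})>\zeta$ there may be no false edges at all (e.g.\ $\mathrm{Skel}(G')\subsetneq\mathrm{Skel}(G)$), so the $\Theta(N)$ false-edge boost you rely on can vanish. The paper's argument for~(a) instead uses Proposition~\ref{prop:empiricalLLestimate}(b), which gives $LL(G)-LL(G')>N\zeta/3$ directly from the $\zeta$-separation; this positive linear term is what beats the logarithmic complexity penalty and the (at most constant) omitted-edge boosts, and is the source of the $\mathcal W$-condition with $\zeta$ in the statement.

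For part~(b) your outline is essentially the paper's proof (Propositions~\ref{prop:InitialEstimateScoreDifferenceNotContainedCase}, \ref{prop:scriptLSumEstimate}, \ref{prop:mostGeneralFormDenserCase}): bound the log-likelihood via Proposition~\ref{prop:empiricalLLestimate}(a), cap each omitted-true-edge boost using the edge-strength hypothesis, show linear growth of each false-edge boost via Corollary~\ref{cor:sparsityBoostLowerBoundIndependentNetworkVer2} on the conditioned data, and union-bound over edges and separating sets rather than over graphs. Three bookkeeping points would need adjustment to land exactly on the stated conditions. First, your roles of $\mu\eta$ and $\eta(1-\mu)$ are swapped relative to the paper: in Corollary~\ref{cor:sparsityBoostLowerBoundIndependentNetworkVer2} the boost rate is $\Gamma\approx F(\mu\eta)$ and the failure probability is $\mathcal F_N(\eta(1-\mu))$, which is why $F(\mu\eta)$ sits in \eqref{eqn:NlargerthanFactorOfFmueta}--\eqref{eqn:nNodeSampleComplexityWFunction} while $F(\eta(1-\mu))$ governs the $\sigma_{\mathscr S}(G)L$ condition. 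Second, the paper caps each omitted-edge boost by the constant $\log\Theta^{-1}$ (Lemma~\ref{lem:wholeSparsityBoostUpperEstimate}), not by $-\ln(1-\mathcal F_N(\Delta/2))$; that is where the $\Theta$-dependent conditions and the $\Theta^{|E(G)|/((|G|-n)\kappa)}$ inside~\eqref{eqn:nNodeSampleComplexityWFunction} come from. Third, in the paper $N_n(\epsilon,\delta)$ already delivers total-log-likelihood precision $N\epsilon$ uniformly over $\mathcal G^d$ (Proposition~\ref{prop:LLestimate}); if you define $\epsilon'$ as the \emph{per-local-entropy} error, the total fluctuation is $2nN\epsilon'$, so your identification of $\epsilon'$ with the first argument of $N_n$ in~\eqref{eqn:asymptoticControllingCondition} is off by a factor of order $n$.
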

 \begin{rem}  The reader is cautioned not to confuse the following two quantities which appear
 in Theorem \ref{thm:nNodeCase} and related discussions: first, $|G|$, the number of parameters of the Bayesian network associated
 to graph $G$, and second, $|E(G)|$, the number of edges in the graph $G$.  In particular, the bounds
 on these quantities for $G$ ranging over $\mathcal{G}^d$ are different.  On the one hand, we have $|E(G)|<nd$,
 and on the other hand, we have $|G|<n2^d$, for all $G\in \mathcal{G}^d$.
 \end{rem}
Using an analysis based on Sanov's Theorem in place of Chernoff's Theorem,
we can obtain a refinement of part (b) of Theorem \ref{thm:nNodeCase}
with improved asymptotics (in terms of $\epsilon$):
 \begin{theorem}\label{thm:nNodeCaseSanov}
Let $\delta>0$ be given.  Let $G\in\mathcal{G}^d$
and let $m$ be as given in \eqref{eqn:mHatInTheorems}.
Let $\epsilon$ be the minimum edge strength of $G$ in $\mathcal{G},\mathscr{S}$,
as above.  Let $\eta\in(0,\epsilon)$.  Let $\theta, \Theta\in(0,1)$
be free parameters and $L<\mathrm{card}(G'\backslash G)$
as above. Let $N$ be larger than the following quantities
\begin{equation}\label{eqn:ConditionControllingAsymptotics}
\max\left[m(1-\theta)^{-1}\left[
F(\Delta)
\right]^{-1}\log\frac{24}{1-\Theta},\,
N_{n}\left(\frac{1-\theta}{\hat{m}}\frac{L}{800}\frac{\eta}{2\eta+1},
\frac{\delta}{6}
\right)\right]\,
\end{equation}
where $N_{n}(\epsilon,\delta)$ is the function defined in Theorem \ref{thm:nNodeCase},
\[
\frac{3}{\theta^2(1-\theta)}
\log\frac{3|E(G)|\Sigma_{\mathscr{S}}(G)2^{\sigma(G)}}{\delta},
\]
\[
16\frac{|X|(2\eta+1)\hat{m}}{\eta(1-\theta)}
\mathcal{W}\left(
\frac{\eta}
{16(2\eta+1)|X|\exp\left( \frac{\eta}{16(2\eta+1)|X|} \right)\left(
\frac{\delta}{3L}
\right)^{1/|X|}}
\right).
\]
Then for
\begin{multline*}
N>\frac{800\hat{m}(2\eta+1)\left(\kappa(|G|-n)-L|X|\right)}{(1-\theta)L\eta}\times \\
\mathcal{W}\left(
\frac{(1-\theta)L\eta}{800\hat{m}(2\eta+1)(\left(\kappa(|G|-n)-L|X|\right))}
\Theta^{\frac{|E(G)|}{\left(\kappa(|G|-n)-L|X|\right)}}
\right),
\end{multline*}
we have with probability $1-\delta$,
\[
S_{\eta,N}(G,\omega_N)>S_{\eta,N}(G',\omega_N),\;\text{for all}\; G'\;\text{such that}\;\mathrm{Skel}(G')\not\subset 
\mathrm{Skel}(G). 
\]
 \end{theorem}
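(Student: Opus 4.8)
The plan is to follow the proof of Theorem~\ref{thm:nNodeCase}(b), replacing at one point its Chernoff-based lower bound on the sparsity boost of a genuine non-edge by a sharper Sanov-based bound; all other ingredients are bookkeeping. Writing $\mathrm{SB}(H)$ for the sparsity-boost summand of $S_\eta$ attached to a DAG $H$, decompose
\[
S_{\eta,N}(G,\omega_N)-S_{\eta,N}(G',\omega_N)=\bigl[LL(\omega_N|G)-LL(\omega_N|G')\bigr]-\psi_1(N)\bigl(|G|-|G'|\bigr)+\bigl[\mathrm{SB}(G)-\mathrm{SB}(G')\bigr].
\]
Because $\mathscr{S}$ here is the graph-independent collection (all $d$-subsets of $V-\{A,B\}$), the boost terms attached to pairs that are non-edges of \emph{both} $G$ and $G'$ are literally equal and cancel, leaving the sum of $\mathrm{boost}_G(A,B)$ over the $L\ge1$ false edges $(A,B)\in\mathrm{Skel}(G')\setminus\mathrm{Skel}(G)$ minus the sum of $\mathrm{boost}_{G'}(A,B)$ over the edges of $G$ that $G'$ drops. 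The whole estimate is carried out on one high-probability event assembled from polynomially many (in $n$, for fixed $d$) concentration sub-events, one for each triple $(A,B)$, $S\in S_{A,B}(\mathcal{G})$, $s\in\mathrm{Val}(S)$: that $p(\omega_N,S=s)$ is close to $p(S=s)$ (Lemma~\ref{lem:ChernoffApplication}) and that $\tau(p(\omega_N,A,B|s))$ is close to $\tau(p(A,B|s))$ (Lemma~\ref{lem:empiricalEntropyError} at the relevant conditional scale). Since this event never mentions $G'$, the conclusion then holds simultaneously for \emph{all} competitors at once, and the union bound over the triples (and over the $L$ false edges) is what produces the $\log(\cdots/\delta)$ factors.

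I would dispatch the easy terms first. The term $-\psi_1(N)(|G|-|G'|)$ is nonnegative whenever $G'$ has at least as many parameters as $G$, and otherwise is at least $-\kappa(|G|-n)\log N$. For the edges of $G$ dropped by $G'$: each such $(A,B)$ has edge strength at least $\epsilon>\eta$, so for \emph{every} separating set $S$ there is an $s^\ast$ with $\tau(p(A,B|s^\ast))\ge\epsilon$; on our event $\tau(p(\omega_N,A,B|s^\ast))$ stays above $\eta$, so Corollary~\ref{cor:mutualInformationLargeDeviation}(i) gives $\beta_N^{p^\eta}(\tau(p(\omega_N,A,B|s^\ast)))\ge\Theta$ and hence $\mathrm{boost}_{G'}(A,B)\le\log\frac1\Theta$, a bound \emph{independent of $N$}; this is the origin of the conditions carrying $[F(\Delta)]^{-1}\log\frac{24}{1-\Theta}$ and $[\tilde{F}(\Delta/2)]^{-1}$, scaled by $m/(1-\theta)$ to account for conditional sample sizes at least $N/m$. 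Finally, the standard log-likelihood decomposition (Appendix~\ref{app:relEntropyLogLikelihood}) together with \eqref{eqn:mutualInformationInTermsOfOrdEntropy} and Lemma~\ref{lem:empiricalEntropyError} shows that $LL(\omega_N|G')-LL(\omega_N|G)$ is at most $N$ times a sum of empirical conditional mutual informations whose true ($P$-)values vanish because $G$ is a P-map, hence at most roughly $NL\gamma_0$ where $\gamma_0$ is the concentration radius we are free to impose; when $G'$ instead drops a true edge this difference is negative, which only helps.

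The core is the Sanov lower bound on $\mathrm{boost}_G(A,B)$ for a false edge. Take the certificate $\hat S=\hat S_p((A,B),G)$ of Definition~\ref{defn:separatingSet}, so that $A\ci B\mid\hat S$; choosing $S=\hat S$ in the defining $\max$ and any $\hat s$ in the $\min$, the table $p(A,B|\hat s)$ is a product distribution, so $\tau(p(\omega_N,A,B|\hat s))$ concentrates at $0$ and on our event is at most $\gamma_0$, at an effective conditional sample size at least $N/\hat m$ (using $p(\hat S=\hat s)\ge1/\hat m$). I would then bound $\beta_N^{p^\eta}(\gamma_0)=\mathrm{Pr}_{Y_N\sim p^\eta}\{\tau(p_{Y_N})\le\gamma_0\}$ by Sanov's theorem,
\[
\beta_N^{p^\eta}(\gamma_0)\le(N+1)^{|X|}\exp\!\Bigl(-N\inf_{q:\ \tau(q)\le\gamma_0}H(q\|p^\eta)\Bigr),
\]
and lower-bound the rate function: this is exactly what $\eta_N^-$ encodes, being (up to the explicit $\tfrac{\log N}{N}$ finite-sample correction and the small $\kappa'$, $\mu$ adjustments) a lower bound for $\inf_{\tau(q)\le\gamma_0}H(q\|p^\eta)$, with $\eta_N^-\to\tfrac{\eta}{144(2\eta+1)}$. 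Thus $\mathrm{boost}_G(A,B)\ge N\eta_N^-/\hat m-|X|\log(N+1)$, which is \emph{linear in $N$ with rate of order $\eta$}, whereas the Chernoff estimate only delivers rate of order $\eta^2$ --- precisely the discrepancy responsible for the weaker $1/\epsilon^4$ exponent in Theorem~\ref{thm:nNodeCase}(b).

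Assembling, the false-edge contribution, of size at least $LN\eta_N^-/\hat m-L|X|\log(N+1)$, must dominate the complexity gap $\kappa(|G|-n)\log N$, the log-likelihood cost ($\approx NL\gamma_0$), and the bounded dropped-edge total ($\le|E(G)|\log\frac1\Theta$). Beating the $\log N$ term, after absorbing $|X|\log(N+1)$ into the rate, is an inequality of the form $c\,N\eta>\kappa(|G|-n)\log N$ whose solution is the Lambert-$\mathcal{W}$ condition at the end of the theorem (via Lemma~\ref{lem:LambertWasymptotics}), and the factor $\kappa(|G|-n)-L|X|$ appearing there records exactly that absorption; beating the log-likelihood cost forces $\gamma_0$ to be only of order $\eta/\hat m$ rather than $\eta^2/\hat m$, which is why the argument of $N_n(\cdot,\cdot)$ in \eqref{eqn:ConditionControllingAsymptotics} is linear (not quadratic) in $\eta$, giving the $\tilde{O}(1/\epsilon^2)$ rate; the remaining displayed quantities are just the concentration budgets for the sub-events above, with the $\delta^{1/|X|}$ and $\mathcal{W}$ combination in the third one coming from inverting the Sanov tail. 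The main obstacle I anticipate is twofold: establishing the sharp quantitative rate-function bound $\inf_{\tau(q)\le\gamma_0}H(q\|p^\eta)\ge\eta_N^-$ with the stated explicit finite-$N$ correction, and keeping the log-likelihood / complexity / boost accounting genuinely uniform over the (super-exponentially many) $G'$ --- especially in the mixed case where $G'$ simultaneously adds false edges and drops true ones --- so that a union bound over only the polynomially many conditioning triples is enough.
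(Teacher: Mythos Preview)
Your overall plan matches the paper's: decompose the score difference, bound the dropped-true-edge boosts by $\log\Theta^{-1}$ via Corollary~\ref{cor:complexityPenaltyUpperBound}, replace the Chernoff lower bound on the false-edge boosts by a Sanov one, and finish by inverting a linear-versus-log inequality with $\mathcal{W}$. Two technical points, however, are mis-identified.

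First, the rate-function bound is \emph{not} what $\eta_N^-$ encodes. In Proposition~\ref{prop:etaNminus}, $\eta_N^-$ is a threshold on $\gamma$ below which the two-node \emph{score difference} $\mathcal{S}_{\eta,N}(\gamma)$ is positive; it is not a lower bound for $\inf_{\tau(q)\le\gamma}H(q\|p^\eta)$. The quantitative rate bound the paper actually uses in the $n$-node Sanov refinement is Proposition~\ref{prop:SanovBetaBound} together with Lemmas~\ref{lem:LinfinityNormBoundedInTermsOf_t}, \ref{lem:Pinsker}, \ref{lem:uniformMarginalsUpperLowerBounds}, \ref{lem:arbitraryMarginalsUpperBound}: Sanov plus Pinsker plus the $t$-parameter estimates give
\[
-\log\beta_N^{p^\eta}(\gamma)\ \ge\ \frac{N}{25}\Bigl(\tfrac{1}{2}\sqrt{\tfrac{\eta}{2\eta+1}}-\sqrt{\gamma}\Bigr)^2 - |X|\log(N+1),
\]
which is Lemma~\ref{lem:SanovBoundOnSparsityBoost}. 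Plugging this into the false-edge sum (Proposition~\ref{prop:scriptLSumEstimateSanov}) and then choosing $\sqrt{\gamma}=\tfrac{1}{4}\sqrt{\eta/(2\eta+1)}$ and $\epsilon=\tfrac{(1-\theta)L}{800\hat m}\cdot\tfrac{\eta}{2\eta+1}$ is what produces the explicit constants $800$ and $16$ in the theorem. A second, separate Sanov application (via $H(\mathcal{P}_\gamma\|p_0)\ge\gamma$, Lemma~\ref{lem:IprojectionEstimateSanov}) bounds $\Pr_{Y_N\sim p_0}\{\tau(Y_N)>\gamma\}\le(N+1)^{|X|}e^{-\gamma N}$, and inverting this tail gives the third displayed $\mathcal{W}$-condition.

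Second, the log-likelihood term is not handled by bounding ``a sum of empirical conditional mutual informations'' directly; that route does not obviously yield a bound uniform over the super-exponentially many $G'$. The paper instead uses H\"offgen's polynomial-counting trick (Lemma~\ref{lem:polynomialNumberOfEntropyEstimates} and Proposition~\ref{prop:LLestimate}) to control $|LL(\omega_N,G)-LL^{(I)}(P,G)|$ for all $G\in\mathcal{G}^d$ simultaneously, and then Proposition~\ref{prop:empiricalLLestimate}(a) gives $LL(\omega_N,G)-LL(\omega_N,G')>-N\epsilon$. This is exactly the mechanism that resolves the uniformity-over-$G'$ obstacle you flag at the end, and it is why the argument of $N_n(\cdot,\cdot)$ in \eqref{eqn:ConditionControllingAsymptotics} is the chosen $\epsilon$ rather than a concentration radius $\gamma_0$ tied to the boost analysis.
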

We can sum up the asymptotics of all these functions as follows:
\begin{theorem}
\label{thm:nNodeCaseAsymptotic}
Let $(G,P)$, $\mathcal{G}^d, \epsilon, m, \hat{m}, m, \delta,$  be as in Theorem \ref{thm:nNodeCase}.
Let \linebreak $N(m,n;\delta;\zeta;\eta;\kappa,\Theta,\theta,L,\mu)$ be the function of part (a) of Theorem \ref{thm:nNodeCase}.
Define \linebreak $N^\mathrm{C}(\hat{m},m,n;\delta;\eta;\kappa, \theta,\Theta,L)$
to be the maximum of the functions bounding $N$ from below in Theorem
\ref{thm:nNodeCase}(b).
Define $N^\mathrm{S}(\hat{m},m,n;\delta;\eta;\kappa, \theta,\Theta,L)$ to be the maximum of the functions
bounding $N$ from below in Theorem \ref{thm:nNodeCaseSanov}.
Then we can sum up as follows the asymptotics of the functions appearing in Theorems
\ref{thm:nNodeCase} and \ref{thm:nNodeCaseSanov}, as 
\[
\zeta,\epsilon,\delta \rightarrow 0^+,\;\text{and}\quad m,\hat{m}\rightarrow\infty.
\]
\begin{itemize}
\item[(a)]
From part (a) of Theorem \ref{thm:nNodeCase},
\[
N(m,n;\delta;\zeta;\eta;\kappa,\Theta,\theta,L,\mu)\in\tilde{O}\left(\max\left(
\frac{\log(n) m}{\epsilon^2},\left(\frac{n}{\zeta}\right)^2
\right)\cdot
\log\frac{1}{\delta}\right)
\] 
\item[(b)]
From part (b) of Theorem \ref{thm:nNodeCase},
\[
N^\mathrm{C}(\hat{m},m,n;\delta;\eta;\kappa,\Theta,\theta,L,\mu) \in \tilde{O}\left(\max\left(
\frac{ m}{\epsilon^2},\left(\frac{n\hat{m}}{\epsilon^2}\right)^2\right)\cdot
\log\frac{1}{\delta}\right)
\]           
\item[(c)]
From Theorem \ref{thm:nNodeCaseSanov},
\[
N^\mathrm{S}(\hat{m},m,n;\delta;\eta;\kappa,\Theta,\theta,L)\in \tilde{O}\left(\max\left(
\frac{m}{\epsilon^2},\left(\frac{n\hat{m}}{\epsilon}\right)^2\right)\cdot
\log\frac{1}{\delta}\right)
\]
\end{itemize}
 \end{theorem}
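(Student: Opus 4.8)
The plan is to establish each of the three asymptotic statements by going through, one bound at a time, every explicit lower bound on $N$ listed in Theorems \ref{thm:nNodeCase} and \ref{thm:nNodeCaseSanov}, recording its order as the indicated parameters go to their limits, and then taking the maximum. Three elementary facts drive every estimate. First, by the exponentiated form of Lemma \ref{lem:LambertWasymptotics} used in the proof of Proposition \ref{prop:plogp}, $e^{\mathcal{W}(x)} = \tilde{O}(1/x)$ as $x\to 0^+$; hence every standalone $\mathcal{W}$-condition, including \eqref{eqn:nNodeSampleComplexityWFunction} and its Sanov analogue, contributes only a factor that is lower order in the $\tilde{O}$ sense. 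Second, from \eqref{eqn:FDeltaReciprocal} and \eqref{eqn:FDeltaTildeReciprocal}, both $[F(\Delta)]^{-1}$ and $[\tilde{F}(\Delta)]^{-1}$ are $\Theta(1/\Delta^2)$ as $\Delta\to 0^+$, since the term $K_1/\Delta^2$ eventually dominates the constant $24$ and the term $12\,e^{\mathcal{W}(\Delta/8)}=\tilde{O}(1/\Delta)$. Third, because $\eta$ — and hence $\Delta=\epsilon-\eta$, $\mu\eta$, $\eta(1-\mu)$ — is taken to be a fixed multiple of $\epsilon$, all of $[F(\Delta)]^{-1}$, $[F(\mu\eta)]^{-1}$, $F(\mu\eta)$, etc., are $\Theta(1/\epsilon^2)$ resp.\ $\Theta(\epsilon^2)$. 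Alongside these we use the combinatorial bounds valid on $\mathcal{G}^d$ for fixed $d$: $|E(G)|<nd$, $|G|-n<n2^d$, $\Sigma_{\mathscr{S}}(\mathcal{G})<\binom{n}{d}$, $\sigma_{\mathscr{S}}(G)\le d$, so that the logarithm of any such quantity is $O(\log n)$, which is absorbed into the $\tilde{O}$ together with $\log\frac1\delta$.

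With these in hand the three parts proceed in parallel. In Theorem \ref{thm:nNodeCase}(a) the term $N_n(\zeta/3,\delta/6)$ reduces to $\tilde{O}((n/\zeta)^2\log\frac1\delta)$ (its internal $\mathcal{W}$-term being only $\tilde{O}(n/\zeta)$), the $m$-dependent quantities $m(1-\theta)^{-1}[F(\Delta)]^{-1}\log\frac{24}{1-\Theta}$ and $\frac{m}{1-\theta}\max(\cdots)$ together give $\tilde{O}(\frac{\log(n)\,m}{\epsilon^2}\log\frac1\delta)$, and the residual $\frac{3\kappa(|G|-n)}{\zeta}\mathcal{W}(\cdots)$ and $\frac{3}{(1-\theta)\theta^2}\log(\cdots)$ terms are dominated, so the maximum is the claimed expression. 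Parts (b) and (c) differ only in the point at which $N_n$ is invoked. In Theorem \ref{thm:nNodeCase}(b) it appears as $N_n(\Theta(\epsilon^2/\hat{m}),\delta/10)$, whose dominant internal term is $\Theta(n^2\hat{m}^2/\epsilon^4)$ up to polylog, yielding the $(n\hat{m}/\epsilon^2)^2$ contribution, while the $m[F(\Delta)]^{-1}$-type term gives $\tilde{O}(m/\epsilon^2)$ and the $\hat{m}[\tilde{F}]^{-1}$- and $\hat{m}[F]^{-1}$-type terms, being $\tilde{O}(\frac{\hat m\log n}{\epsilon^2}\log\frac1\delta)$, are absorbed into the $(n\hat{m}/\epsilon^2)^2$ term. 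In Theorem \ref{thm:nNodeCaseSanov}, the Sanov analysis replaces $F(\mu\eta)=\Theta(\epsilon^2)$ by $\frac{\eta}{2\eta+1}=\Theta(\epsilon)$, so $N_n$ is invoked at $\Theta(\epsilon/\hat{m})$, whose dominant internal term is $\Theta(n^2\hat{m}^2/\epsilon^2)$, yielding the improved $(n\hat{m}/\epsilon)^2$ contribution; the remaining terms are analyzed exactly as before and are lower order. Taking maxima gives statements (a), (b), (c).

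The main obstacle is the bookkeeping in precisely this ``which term wins'' step: one must keep straight the exponent on $\epsilon$ when the argument of $N_n$ is itself a function of $\epsilon$ and $\hat m$, since it is this substitution — $\Theta(\epsilon^2/\hat m)$ in the Chernoff case versus $\Theta(\epsilon/\hat m)$ in the Sanov case — that is responsible for the $\epsilon^{-4}$ behaviour in part (b) degrading to $\epsilon^{-2}$ in part (c), and one must verify that the nested $\mathcal{W}$ hidden inside $N_n$ is genuinely polylogarithmic relative to the leading polynomial power. A secondary point, which the write-up should state explicitly, is that several limits ($\zeta,\epsilon,\delta\to 0^+$ and $m,\hat m,n\to\infty$) are taken at once; since each bound is a maximum of products, each factor depending on only a subset of the parameters and the parameters being unlinked apart from the fixed-multiple relation tying $\Delta$ and $\mu\eta$ to $\epsilon$, one may analyze each factor in isolation before recombining, which is what legitimizes the term-by-term reduction above.
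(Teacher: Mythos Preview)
Your proposal is correct and follows essentially the same approach as the paper's own proof: both begin by recording the asymptotics $N_n(\epsilon,\delta)=\tilde O((n/\epsilon)^2\log\frac1\delta)$, $[F(\Delta)]^{-1}=\tilde O(1/\epsilon^2)$, and $\log\Sigma_{\mathscr{S}}(\mathcal G)=O(\log n)$, and then go term-by-term through the lists in Theorems~\ref{thm:nNodeCase} and~\ref{thm:nNodeCaseSanov}, identifying in each part the one or two dominating terms (in particular, the $N_n$ term evaluated at $\Theta(\epsilon^2/\hat m)$ versus $\Theta(\epsilon/\hat m)$ is what distinguishes (b) from (c)). Your write-up is, if anything, a bit more explicit than the paper's about why the subsidiary $\mathcal W$- and $\hat m[F]^{-1}$-type conditions are absorbed, and about the legitimacy of treating the several limits independently.
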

 \begin{proof}
 We begin by collecting some elementary observations that will be used in the proof of all parts.
 First, let the function $N_{n}(\epsilon,\delta)$ defined in Theorem \ref{thm:nNodeCase}.
 Then,
 \begin{equation}\label{eqn:NsubscriptnAsymptotics}
 N_n(\epsilon,\delta)=\tilde{O}\left(\left(\frac{n}{\epsilon}\right)^2\log\frac{1}{\delta}\right),\;\text{as}\; n\rightarrow\infty,\, \epsilon,\,\delta\rightarrow 0^+.
 \end{equation}
 Second, from the definition of $[F(\Delta)]^{-1}$ as the maximum of a constant factor times $\Delta^{-2}$, a constant,
 and a constant factor times $\exp \mathcal{W}\left(\frac{\Delta}{8}\right)$, which is by Lemma \ref{lem:LambertWasymptotics},
 $\tilde{O}\left(\frac{1}{\Delta}\right)$ as $\Delta\rightarrow 0^+$, we have
 \[
 \left[F(\Delta)\right]^{-1}, \left[\tilde{F}(\Delta)\right]^{-1}=\tilde{O}\left(\frac{1}{\Delta^2}\right),\;\text{as}\; \Delta\rightarrow 0^+.
 \]
 Since $\Delta:=\epsilon-\eta$ and we may choose $\eta$ to be a constant multiple (in $(0,1)$)
 of $\epsilon$ as $\epsilon\rightarrow 0^+$, the asymptotics of any function $f(\Delta)$ (or of $f(\eta)$) as $\epsilon\rightarrow 0^+$
 are the same as the asymptotics of $f(\epsilon)$ as $\epsilon\rightarrow 0^+$.  Therefore, we may actually write the second observation as
 \begin{equation}\label{eqn:FasymptoticsInEspilon}
 \left[F(\Delta)\right]^{-1}, \left[\tilde{F}(\Delta)\right]^{-1}, \left[F(\eta)\right]^{-1}, \left[\tilde{F}(\eta)\right]^{-1}=\tilde{O}\left(\frac{1}{\epsilon^2}\right),\;\text{as}\; \epsilon\rightarrow 0^+.
 \end{equation}
 Finally, since we are assuming that $\Sigma_{\mathscr{S}}(\mathscr{G})$ is (bounded by) $\binom{n}{d}$, elementary combinatorics says that
 \begin{equation}\label{eqn:SigmaSubSOfGAsymptotics}
 \Sigma_{\mathscr{S}}(\mathscr{G})=O(n^d),\;  \log\Sigma_{\mathscr{S}}(\mathscr{G})=O(\log n),\; \text{as}\,\, n\rightarrow \infty.
 \end{equation}
  \begin{description}
 \item[(a)] \hfill \\
 By \eqref{eqn:NsubscriptnAsymptotics}, we have $N_{n}\left(\frac{\zeta}{3},\frac{\delta}{6} \right)=\tilde{O}
\left(\left(\frac{n}{\zeta}\right)^2\log\frac{1}{\delta}\right)$ as $n\rightarrow\infty$ and $\zeta,\delta\rightarrow 0^+$.
But $N(\cdots)$ the maximum of this function and the other functions listed in  part (a) of Theorem \ref{thm:nNodeCase}.
The last of these functions, once the logarithm is suitably expanded, amounts to a constant multiple of 
$m\left[ F(\Delta)\right]^{-1}\log\left(\frac{\Sigma_{\mathscr{S}}(\mathscr{G})}{\delta}\right)$.
and by \eqref{eqn:FasymptoticsInEspilon} and \eqref{eqn:SigmaSubSOfGAsymptotics}, 
we have
\[
m\left[ F(\Delta)\right]^{-1}\log\left(\frac{\Sigma_{\mathscr{S}}(\mathscr{G})}{\delta}\right)=\tilde{O}\left( \frac{\log(n)m}{\epsilon^2}\log\frac{1}{\delta}  \right),
\;\text{as}\, m,n\rightarrow\infty,\; \epsilon,\delta\rightarrow 0^+
\]
For the max of these two functions, we obtain the asymptotics of $N(\cdots)$ claimed in part (a).  It is not difficult
to go through the other functions in the maximum defining $N(\cdots)$ and verify that these two functions
dominate all the other functions asymptotically.  This completes the proof of part (a).
\item[(b)] \hfill \\
Using  \eqref{eqn:NsubscriptnAsymptotics}, \eqref{eqn:FasymptoticsInEspilon}, and \eqref{eqn:SigmaSubSOfGAsymptotics}, we can see that \eqref{eqn:asymptoticControllingCondition} is
\begin{equation}\label{eqn:partBDominatingCondition}
\tilde{O}\left(
\max
\left[
\frac{m}{\epsilon^2},
\left(\frac{n\hat{m}}{\epsilon^2}\right)^2 \log\frac{1}{\delta}
\right]
\right),\;\text{as}\, n,m,\hat{m}\rightarrow\infty,\; \epsilon,\delta\rightarrow 0^+,
\end{equation}
and that 
\eqref{eqn:secondAsymptoticControllingCondition} is
\[
\tilde{O}\left(
\frac{\hat{m}\log(n)}{\epsilon^2}\right),\;\text{as}\, n,\hat{m}\rightarrow\infty,\; \epsilon\rightarrow 0^+.
\]
But the latter function is dominated by \eqref{eqn:partBDominatingCondition}.  Furthermore, it is not difficult to verify that
\eqref{eqn:firstSuperfluousCondition} through \eqref{eqn:nNodeSampleComplexityWFunction}, the other
functions whose maximum defines $N^\mathrm{C}(\hat{m},m,n;\delta;\eta;\kappa,\Theta,\theta,L,\mu)$ are all
dominated by \eqref{eqn:partBDominatingCondition}.  So the asymptotics of $N^\mathrm{C}(\hat{m},m,n;\delta;\eta;\kappa,\Theta,\theta,L,\mu)$
are given by  \eqref{eqn:partBDominatingCondition}.
\item[(c)] \hfill \\
Using  \eqref{eqn:NsubscriptnAsymptotics} and \eqref{eqn:FasymptoticsInEspilon}, we can see that \eqref{eqn:ConditionControllingAsymptotics} is
\begin{equation}\label{eqn:partCDominatingCondition}
\tilde{O}\left(
\max
\left[
\frac{m}{\epsilon^2},
\left(\frac{n\hat{m}}{\epsilon}\right)^2 \log\frac{1}{\delta}
\right]
\right),\;\text{as}\, n,m,\hat{m}\rightarrow\infty,\; \epsilon,\delta\rightarrow 0^+.
\end{equation}
It is not difficult to see that the other functions whose maximum
is used to define $N^\mathrm{S}(\hat{m},m,n;\delta;\eta;\kappa,\Theta,\theta,L)$ all have asymptotics
which are dominated by \eqref{eqn:partBDominatingCondition}.  Thus, the asymptotics
of $N^\mathrm{S}(\hat{m},m,n;\delta;\eta;\kappa,\Theta,\theta,L)$ are determined by
\eqref{eqn:partCDominatingCondition}.
 \end{description} \end{proof}
 To obtain the asymptotic statement for $N(\epsilon,m,\hat{m}, n; \delta; \zeta)$ in the Introduction, 
 take the maximum of the functions in parts (a) and (c), setting the parameter $L=1$ throughout.  This is an allowable
 choice because for all $G'\in\mathcal{G}^d$, such that $\mathrm{Skel}(G')$ properly contains $\mathrm{Skel}(G)$ we have $L\geq 1$.
See Section \ref{sec:discussionLiterature} for further comments on the relation of our results in the literature.  

\section{Proof in the case of two binary random variables}
We begin by proving the theorem for the special case where $V$ has two nodes, 
because there are fewer technical complications and the essential ideas are still visible in this case.
The most significant reusable parts of the proof in the two-node case are Corollaries
\ref{cor:sparsityBoostLowerBoundIndependentNetwork} 
and \ref{cor:complexityPenaltyUpperBound}.
The reason is that these two results give lower and upper bounds, respectively, on the sparsity boost in the case when the two variables
being tested are independent, respectively dependent.  This will allow us, in the general case, to show the following: first, that the absence
of the sparsity boost from the objective function for a false network with an extra, false edge will cause that false network
to quickly (in terms of number of samples) lose out to the true network.  Second, that the presence of the extra sparsity
boost in a false network which is missing an edge in the true network, will have its positive influence on 
the objective function of the false network quickly die out.  Thus, these two propositions are key in showing
that the sparsity boosts, collectively, quickly winnow down the space of networks with large objective functions
to networks with the same skeleton as the true network.

In the case of two-node networks, there are only two distinct Bayesian network structures:
\begin{itemize}
\item $G_0$, the disconnected network on two nodes, with possible corresponding probability distributions $P_0\in \mathcal{P}_0$.
\item $G_1$, the fully connected structure on two nodes, with possible corresponding probability distributions $P_1\in \mathcal{P}_{\epsilon}.$
\end{itemize}
The inequality we are trying to prove in this case is equivalent to 
\begin{equation}\label{eqn:twoNodeCaseScriptSTwoConditions}
\mathcal{S}_\eta(G_0, G_1, \omega_N) := S_\eta(G_0,\omega_N) - S_\eta(G_1,\omega_N) \begin{cases} >0&\text{if}\; G=G_0\\ <0 &\text{if}\;G=G_1\end{cases}.
\end{equation}
\nomenclature{$\mathcal{S}_\eta(G_0, G_1, \omega_N)$}{$S_\eta(G_0,\omega_N) - S_\eta(G_1,\omega_N)$}
We calculate that
\[
 \begin{aligned}
  \mathcal{S}_\eta(G_0, G_1, \omega_N) &=\mathrm{LL}(G_0,\omega_N)-\mathrm{LL}(G_1,\omega_N)\\
  &-\psi_1(N)(|G_0|-|G_1|)+\psi_2(N)(0-\max_{S\in S_{A,B}(G)}\min_{s\in \mathrm{val}(S)}\ln\beta_N^{p^\eta}(\tau(p(\omega | s))))\\
  &= -NH(p(\omega_N)\| p(\omega_N)_0)-\psi_1(N)(2-3)+\psi_2(N)(0-\ln\beta_N^{p^\eta}(\tau(p(\omega)))) \\
  &=-N  \tau(\omega_N)+\kappa\log(N)-\ln\beta_N^{p^\eta}( \tau(\omega_N))\\
 \end{aligned}
\]
Note that in the two-node case, $V=\{A,B\}$, the dependence of the sparsity boost on the mapping $S_{A,B}(\cdot)$ in effect disappears, because
the only subset of $V-\{A,B\}$ is the empty set.  This accounts for the relative
simplicity of the objective function in the two-node case.

As a result of the simple form of the objective function in the two-node case, as reflected
in the above calculation, we can decompose the function $\mathcal{S}_\eta$ into the composition
\[
N \mapsto \tau(\omega_N) \mapsto \mathcal{S}_{N,\eta}(\tau(\omega_N)),
\]
where $\mathcal{S}_{\eta,N}:\,\mathbf{R}^+\rightarrow\mathbf{R}$ is defined by 
\begin{equation}\label{eqn:scriptSdefn}
\mathcal{S}_{\eta,N}(\gamma) := -N\gamma + \kappa\log(N) -\log\beta_N^{p^\eta}(\gamma).
\end{equation}
\nomenclature{$\mathcal{S}_{\eta,N}(\gamma)$}{$-N\gamma + \kappa\log(N) -\log\beta_N^{p^\eta}(\gamma)$}

The significance of this decomposition is that whereas the first function ${N\mapsto\tau(\omega_N)}$ is a random variable,
the second function $\gamma\mapsto S_{\eta,N}(\gamma)$ is a deterministic function of the test statistic $\gamma:=\tau(\omega_N)$.  
\subsection{Independent Network}
Suppose that the generating network is $(G_0,p_0)$ with $p_0\in \mathcal{P}_0$ a product distribution with arbitrary marginals.  Recall from the above derivation that 
in the two-node case $\mathcal{S}_{\eta}(G_0,G_1,\omega_N)$ has the following simple expression,
\[
\mathcal{S}_{\eta}(G_0,G_1,\omega_N)= \mathcal{S}_{N,\eta}(\tau(\omega_N)):=-N\tau(\omega_N)-\log\beta_N^{p^\eta}(\tau(\omega_N))+\kappa\log N.
\]
The main points of the proof will be to show that with ``high" probability (to be quantified below)
\begin{itemize}
\item The mutual information $\tau(\omega_N)$ converges to $0$ fast enough that for moderately large $N$ it is less than some multiplier $\lambda$ times $\eta$ (Proposition \ref{prop:MIestimateIndependentNetwork}).
\item The complexity penalty $-\log\beta_N^{p^\eta}(\tau(\omega_N))$ grows linearly with $N$ for moderately large $N$ with ``slope" $\Gamma>0$ depending
on $\eta$, but not on $p_0$ (Corollary \ref{cor:sparsityBoostLowerBoundIndependentNetworkVer2})
\end{itemize}
From these points, stated precisely and proved in Propositions \ref{prop:MIestimateIndependentNetwork} and Corollary \ref{cor:sparsityBoostLowerBoundIndependentNetwork} below, the finite sample complexity result will follow easily.

\begin{proposition}
\label{prop:MIestimateIndependentNetwork}
Let $\lambda\in(0,1)$ be a ``multiplier" for $\eta$.  Let $\eta>0$ and $p_0\in\mathcal{P}_0$.  Then
\[
\mathrm{Pr}_{\omega\sim p_0}
\left\{
\tau(\omega_N)>\lambda\eta
\right\}
\leq
\mathcal{F}_N(\lambda\eta).
\]
\end{proposition}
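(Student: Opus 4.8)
The plan is to recognize the left-hand side as a large-deviation event for the empirical mutual information $\tau(\omega_N)$ and to apply Lemma \ref{lem:empiricalEntropyError} (equivalently the concentration bound \eqref{eqn:LargeDeviationOfMIEstimate}) directly. First I would recall that $p_0\in\mathcal{P}_0$ is a product distribution, so its true mutual information is $\tau(p_0)=0$. Hence the event $\{\tau(\omega_N)>\lambda\eta\}$ is precisely the event that the empirical statistic $\tau(\omega_N)$ deviates from its true value $\tau(p_0)=0$ by more than $\lambda\eta$: since $\tau(\omega_N)\geq 0$ always, we have
\[
\{\tau(\omega_N)>\lambda\eta\}\subseteq\{\,|\tau(\omega_N)-\tau(p_0)|\geq\lambda\eta\,\}.
\]

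Next I would invoke the concentration estimate for the empirical mutual information around its mean. Writing $\tau(p)=H(p_A)+H(p_B)-H(p)$ as in \eqref{eqn:mutualInformationInTermsOfOrdEntropy}, the statistic $\tau(\omega_N)$ is a signed sum of eight terms of the form $\tilde p_N\log\tilde p_N$ (for binary marginals), so by the union bound together with Proposition \ref{prop:plogpNegativeDeviation} applied with tolerance $\lambda\eta/8$ at each of the eight terms, exactly as in the proof of Lemma \ref{lem:empiricalEntropyError}, one obtains
\[
\mathrm{Pr}_{\omega_N\sim p_0}\{\,|\tau(\omega_N)-\tau(p_0)|\geq\lambda\eta\,\}\leq\mathcal{F}_N(\lambda\eta).
\]
Strictly, Lemma \ref{lem:empiricalEntropyError} as stated assumes the sampling distribution is $p^\eta$ (uniform marginals), but its proof uses only that the sampling distribution has binary marginals and that $\tau$ decomposes into eight $p\log p$ terms; the bound $\mathcal{F}_N(\Delta)$ is $p$-independent (Proposition \ref{prop:plogpNegativeDeviation}). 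So the same argument applies verbatim with $p_0$ in place of $p^\eta$ and $\Delta=\lambda\eta$ in place of $\Delta=\gamma-\eta$. Combining the inclusion of events from the first paragraph with this bound yields the claim.

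The only subtlety — and the closest thing to an obstacle — is the bookkeeping about which sampling distribution the concentration bound is stated for: Lemma \ref{lem:empiricalEntropyError} is phrased for $p^\eta$, whereas here we sample from an arbitrary $p_0\in\mathcal{P}_0$. Since the bound $\mathcal{F}_N(\Delta)$ is manifestly independent of the underlying parameter $p$ (this independence is the whole point of Proposition \ref{prop:plogpNegativeDeviation}), this causes no real difficulty; I would simply remark that the proof of Lemma \ref{lem:empiricalEntropyError} goes through unchanged for any distribution with binary marginals, and in particular for $p_0$, with $\tau(p_0)=0$. Everything else is routine substitution.
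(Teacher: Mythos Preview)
Your proposal is correct and follows essentially the same route as the paper: use $\tau(p_0)=0$ to rewrite the event as a deviation of $\tau(\omega_N)$ from its true value, then apply Lemma~\ref{lem:empiricalEntropyError} with $\Delta=\lambda\eta$. Your extra paragraph noting that Lemma~\ref{lem:empiricalEntropyError} is literally stated for sampling from $p^\eta$ but that its proof (via the $p$-independent Proposition~\ref{prop:plogpNegativeDeviation}) goes through verbatim for any binary-marginal distribution is a worthwhile clarification that the paper's one-line proof leaves implicit.
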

\begin{proof}
Because $p_0\in \mathcal{P}_0$, so that $\tau(p_0)=0$, we have 
\[
\mathrm{Pr}_{\omega_N\sim p_0}
\left\{
\tau(\omega_N)>\lambda\eta
\right\}
=
\mathrm{Pr}_{\omega_N\sim p_0}
\left\{
|\tau(\omega_N)-\tau(p_0)|>\lambda\eta
\right\}\leq \mathcal{F}_N(\lambda\eta).
\]
The latter inequality follows from 
Lemma \ref{lem:empiricalEntropyError}.
\end{proof}

In order to show the linear growth of the complexity penalty in $N$, it suffices
to show the linear exponential decay in $N$ of $\beta_N^{p^\eta}(\omega_N)$.  Thus we start with the following.

\begin{proposition}\label{prop:betaUpperBoundIndependentNetwork}
Let $\eta>0$, and choose $\mu\in (0,1)$, so that $\eta(1-\mu)>0$.  Let $\Gamma>0$ small enough and let $N$ a positive integer large enough
that they satisfy
\begin{equation}\label{eqn:DeltaCondition}
\mathcal{F}_N(\mu\eta)\leq e^{-N\Gamma}.
\end{equation}
Then we have
\begin{equation}\label{eqn:NegExponentialUpperBoundOnBeta}
\beta_N^{p^\eta}(\eta(1-\mu))\leq e^{-N\Gamma}.
\end{equation}
\end{proposition}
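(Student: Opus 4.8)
The plan is to unwind the definition of the $\beta$-value, recognize the event defining it as a (one-sided) large-deviation event for the empirical mutual information $\tau(\omega_N)$ around its mean $\tau(p^\eta)=\eta$, and then quote the concentration bound of Lemma \ref{lem:empiricalEntropyError} followed by the standing hypothesis \eqref{eqn:DeltaCondition}.

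First I would recall that, by the definition of the $\beta$-value,
\[
\beta_N^{p^\eta}(\eta(1-\mu)) = \mathrm{Pr}_{\omega_N\sim p^\eta}\{\tau(\omega_N)\leq \eta(1-\mu)\}.
\]
Since $\tau(p^\eta)=\eta$ by the construction of the reference distribution $p^\eta$, the event $\{\tau(\omega_N)\leq \eta(1-\mu)\}$ is contained in $\{\eta-\tau(\omega_N)\geq \mu\eta\}$, which is in turn contained in the symmetric event $\{|\tau(\omega_N)-\tau(p^\eta)|\geq \mu\eta\}$. Hence
\[
\beta_N^{p^\eta}(\eta(1-\mu)) \leq \mathrm{Pr}_{\omega_N\sim p^\eta}\{|\tau(\omega_N)-\tau(p^\eta)|\geq \mu\eta\}.
\]

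Next I would apply Lemma \ref{lem:empiricalEntropyError}, taking the deviation parameter to be $\mu\eta$: formally one sets $\gamma:=\eta(1+\mu)>\eta$, so that $\Delta:=\gamma-\eta=\mu\eta>0$, and the hypothesis $p^\eta\in\mathcal{P}_{2,2}$ is exactly the binary-variable assumption in force throughout this section. The lemma then gives
\[
\mathrm{Pr}_{\omega_N\sim p^\eta}\{|\tau(\omega_N)-\tau(p^\eta)|\geq \mu\eta\} \leq \mathcal{F}_N(\mu\eta).
\]
Combining the two displayed inequalities and invoking the standing hypothesis \eqref{eqn:DeltaCondition}, namely $\mathcal{F}_N(\mu\eta)\leq e^{-N\Gamma}$, yields $\beta_N^{p^\eta}(\eta(1-\mu))\leq e^{-N\Gamma}$, which is \eqref{eqn:NegExponentialUpperBoundOnBeta}.

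The argument is essentially immediate once the correct concentration lemma is identified, since the substantive work — the $p$-independent large-deviation bound on $\tau$ — is already packaged in Lemma \ref{lem:empiricalEntropyError}, and the phrase ``$\Gamma$ small enough and $N$ large enough'' in the statement simply names the hypothesis \eqref{eqn:DeltaCondition} that we use at the last step. The only point requiring a moment's care is that Lemma \ref{lem:empiricalEntropyError} is phrased in terms of an ``upward'' threshold $\gamma>\eta$, whereas the $\beta$-value here involves a threshold $\eta(1-\mu)$ lying below $\eta$; this is harmless because the conclusion of that lemma is the two-sided bound on $|\tau(\omega_N)-\tau(p^\eta)|$, which dominates the one-sided downward event that actually arises. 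There is no genuine obstacle.
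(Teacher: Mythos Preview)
Your proposal is correct and follows essentially the same approach as the paper: unwind the definition of $\beta_N^{p^\eta}(\eta(1-\mu))$ as a one-sided tail probability, enlarge it to the two-sided event $\{|\tau(\omega_N)-\eta|\geq \mu\eta\}$, apply Lemma~\ref{lem:empiricalEntropyError} with deviation parameter $\mu\eta$ to bound this by $\mathcal{F}_N(\mu\eta)$, and finish with hypothesis~\eqref{eqn:DeltaCondition}. Your remark about the $\gamma>\eta$ phrasing of the lemma versus the downward threshold here is accurate and harmless, since only the two-sided conclusion of the lemma is used.
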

\begin{proof}
Using the definition of $\beta_N^{p^\eta}$ as the CDF of $\tau$, $\beta_N^{p^\eta}(\mu(1-\eta))$ is
\[
\begin{aligned}
\beta_N^{p^\eta}(\eta(1-\mu))&=\mathrm{Pr}_{Y_N\sim p^{\eta}}\left\{
\tau(Y_N)<\eta-\mu\eta
\right\}\\
&\leq \mathrm{Pr}_{Y_N\sim p^{\eta}}\left\{
|\tau(Y_N)-\eta|>\mu\eta
\right\}\\
&\leq \mathcal{F}_N(\mu\eta),
\end{aligned}
\]
where we have used Lemma \ref{lem:empiricalEntropyError} in the last inequality.  So
$\beta_N^{p^\eta}(\eta(1-\mu))\leq \mathcal{F}_N(\eta\mu)$, and using \eqref{eqn:DeltaCondition}
we obtain the conclusion \eqref{eqn:NegExponentialUpperBoundOnBeta}.
\end{proof}
From the upper bound on $\beta$ in Proposition \ref{prop:betaUpperBoundIndependentNetwork}, we readily
derive a (probable) lower bound on the sparsity boost for the independent network.
\begin{corollary}\label{cor:sparsityBoostLowerBoundIndependentNetwork}
Let $\Gamma, \mu\in(0,1)$ be as in Proposition \ref{prop:betaUpperBoundIndependentNetwork}, in particular
satisfying condition \eqref{eqn:DeltaCondition}. Then 
\[
\mathrm{Pr_{\omega_N\sim p_0}}\left\{
-\log\beta_N^{p^\eta}(\tau(\omega_N))<N\Gamma
\right\}\leq \mathcal{F}_N(\eta(1-\mu)).
\]
\end{corollary}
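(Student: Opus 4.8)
The plan is to deduce the corollary directly from Proposition \ref{prop:betaUpperBoundIndependentNetwork} together with Proposition \ref{prop:MIestimateIndependentNetwork}, using only the elementary fact that the $\beta$-value $\gamma\mapsto\beta_N^{p^\eta}(\gamma)$ is monotone nondecreasing in $\gamma$ (it is, by definition, the CDF of the random variable $\tau(\omega_N)$ under $\omega_N\sim p^\eta$).

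First I would rewrite the event whose probability must be bounded: $\{-\log\beta_N^{p^\eta}(\tau(\omega_N))<N\Gamma\}$ is exactly the event $\{\beta_N^{p^\eta}(\tau(\omega_N))>e^{-N\Gamma}\}$. Since $\Gamma,\mu\in(0,1)$ are assumed to satisfy \eqref{eqn:DeltaCondition}, Proposition \ref{prop:betaUpperBoundIndependentNetwork} gives $\beta_N^{p^\eta}(\eta(1-\mu))\leq e^{-N\Gamma}$. By monotonicity of $\beta_N^{p^\eta}$, whenever $\tau(\omega_N)\leq\eta(1-\mu)$ we have $\beta_N^{p^\eta}(\tau(\omega_N))\leq\beta_N^{p^\eta}(\eta(1-\mu))\leq e^{-N\Gamma}$; contrapositively, $\beta_N^{p^\eta}(\tau(\omega_N))>e^{-N\Gamma}$ forces $\tau(\omega_N)>\eta(1-\mu)$. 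Hence the event in question is contained in $\{\tau(\omega_N)>\eta(1-\mu)\}$.

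It then remains to bound $\mathrm{Pr}_{\omega_N\sim p_0}\{\tau(\omega_N)>\eta(1-\mu)\}$, and this is precisely Proposition \ref{prop:MIestimateIndependentNetwork} applied with the multiplier $\lambda:=1-\mu\in(0,1)$: since $p_0\in\mathcal{P}_0$ has $\tau(p_0)=0$, the large-deviation estimate of Lemma \ref{lem:empiricalEntropyError} yields $\mathrm{Pr}_{\omega_N\sim p_0}\{\tau(\omega_N)>\eta(1-\mu)\}\leq\mathcal{F}_N(\eta(1-\mu))$. Chaining the event inclusion from the previous paragraph with this probability bound gives the stated inequality.

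I do not expect a genuine obstacle here: the argument is a two-line chaining of results already in hand. The only points deserving a moment's care are the bookkeeping of strict versus non-strict inequalities in the event inclusion and checking that $1-\mu$ is an admissible multiplier for Proposition \ref{prop:MIestimateIndependentNetwork}; both are routine.
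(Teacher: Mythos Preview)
Your proposal is correct and follows essentially the same route as the paper: rewrite the event as $\beta_N^{p^\eta}(\tau(\omega_N))>e^{-N\Gamma}$, use Proposition~\ref{prop:betaUpperBoundIndependentNetwork} together with monotonicity of the CDF $\beta_N^{p^\eta}$ to contain it in $\{\tau(\omega_N)>\eta(1-\mu)\}$, and then apply the large-deviation bound (the paper cites \eqref{eqn:LargeDeviationOfMIEstimate} directly rather than Proposition~\ref{prop:MIestimateIndependentNetwork}, but these are the same step).
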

\begin{proof}
By assumption \eqref{eqn:DeltaCondition} and by Proposition \ref{prop:betaUpperBoundIndependentNetwork}, we have
\[
e^{-N\Gamma}\geq \beta_N^{p^\eta}(\eta(1-\mu)).
\]
So if $\tau(\omega_N)<\eta(1-\mu)$ then because $\beta_N^{p^\eta}$
is increasing, we also have also
\[
e^{-N\Gamma}\geq \beta_N^{p^\eta}(\tau(\omega_N)).
\]
Thus,
\[
\begin{aligned}
\mathrm{Pr_{\omega_N\sim p_0}}\left\{
-\log\beta_N^{p^\eta}(\tau(\omega_N))<N\Gamma
\right\}
&=\mathrm{Pr}_{\omega_N\sim p_0}\left\{
\beta_N^{p^\eta}(\tau(\omega_N))>e^{-N\Gamma}
\right\}\\
&\leq \mathrm{Pr}_{\omega_N\sim p_0}\left\{
\tau(\omega_N)>\eta-\eta\mu\right\}\\
&= \mathrm{Pr}_{\omega_N\sim p_0}\left\{
|\tau(\omega_N)-\tau(p_0)|>\eta-\eta\mu
\right\}\\
&\leq \mathcal{F}_N(\eta-\eta\mu),
\end{aligned}
\]
where we have used \eqref{eqn:LargeDeviationOfMIEstimate} to obtain the final inequality.
\end{proof}
For a given $N,\eta$ and chosen $\mu\in(0,1)$, set
\begin{equation}\label{eqn:GammaMaxDefinition}
\Gamma^{\max}(N,\mu,\eta)=\sup\left\{\Gamma\in(0,1)\,|\,
\mathcal{F}_N(\mu\eta)<e^{-N\Gamma}
\right\}.
\end{equation}
\nomenclature{$\Gamma^{\max}(N,\mu,\eta)$}{$\sup\left\{\Gamma\in(0,1)\,\lvert\,
\mathcal{F}_N(\mu\eta)<e^{-N\Gamma}
\right\}$, for $\mu\in(0,1)$}
By solving the inequality defining $\Gamma^{\max}(N,\mu,\eta)$ we obtain the formula,
\begin{equation}\label{eqn:GammaMaxFormula}
\Gamma^{\max}(N,\mu,\eta)=
\frac{-\log\mathcal{F}_N(\mu\eta)}{N}
=\frac{-\log[24\exp(-NF(\mu\eta))]}{N}
=-\frac{24}{N}+F(\mu\eta).
\end{equation}
The reason for defining this quantity is that $\Gamma^{\max}(N,\mu,\eta)$ is
the largest $\Gamma$ such we can apply Corollary \ref{cor:sparsityBoostLowerBoundIndependentNetwork}.  In terms of this
new quantity, we can state Corollary \ref{cor:sparsityBoostLowerBoundIndependentNetwork}
in a form that is more convenient for the applications.
\begin{corollary}\label{cor:sparsityBoostLowerBoundIndependentNetworkVer2}
Let $\mu\in(0,1)$, and let
\begin{equation}\label{eqn:GammaInInterval}
\Gamma\in(0,\Gamma^{\max}(N,\mu,\eta)),
\end{equation}
with $\Gamma^{\max}(N,\mu,\eta)$ as defined in \eqref{eqn:GammaMaxDefinition}, so that
in particular, $N,\Gamma,\mu,\eta$ satisfy \eqref{eqn:DeltaCondition}.  Then 
\[
\mathrm{Pr_{\omega_N\sim p_0}}\left\{
-\log\beta_N^{p^\eta}(\tau(\omega_N))\geq N\Gamma
\right\}\geq 1- \mathcal{F}_N(\eta(1-\mu)).
\]
\end{corollary}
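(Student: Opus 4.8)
The plan is to obtain this corollary as a direct reformulation of Corollary \ref{cor:sparsityBoostLowerBoundIndependentNetwork}, by unwinding the definition \eqref{eqn:GammaMaxDefinition} of $\Gamma^{\max}(N,\mu,\eta)$ and then passing to the complementary event. First I would observe that, since the map $\Gamma\mapsto e^{-N\Gamma}$ is strictly decreasing, the set
\[
\left\{\Gamma\in(0,1)\,\middle|\,\mathcal{F}_N(\mu\eta)<e^{-N\Gamma}\right\}
\]
whose supremum defines $\Gamma^{\max}(N,\mu,\eta)$ is an interval with left endpoint $0$; hence every $\Gamma$ with $0<\Gamma<\Gamma^{\max}(N,\mu,\eta)$ satisfies $\mathcal{F}_N(\mu\eta)<e^{-N\Gamma}$, and a fortiori $\mathcal{F}_N(\mu\eta)\leq e^{-N\Gamma}$, which is precisely condition \eqref{eqn:DeltaCondition}. (Equivalently, one may invoke the closed form \eqref{eqn:GammaMaxFormula} and simply check that the inequality $\Gamma<\Gamma^{\max}(N,\mu,\eta)$ rearranges into \eqref{eqn:DeltaCondition}.)

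With \eqref{eqn:DeltaCondition} verified for the chosen $\Gamma,\mu,\eta,N$, Corollary \ref{cor:sparsityBoostLowerBoundIndependentNetwork} applies verbatim and gives
\[
\mathrm{Pr}_{\omega_N\sim p_0}\left\{-\log\beta_N^{p^\eta}(\tau(\omega_N))<N\Gamma\right\}\leq \mathcal{F}_N(\eta(1-\mu)).
\]
The events $\{-\log\beta_N^{p^\eta}(\tau(\omega_N))<N\Gamma\}$ and $\{-\log\beta_N^{p^\eta}(\tau(\omega_N))\geq N\Gamma\}$ are complementary, so taking complements yields
\[
\mathrm{Pr}_{\omega_N\sim p_0}\left\{-\log\beta_N^{p^\eta}(\tau(\omega_N))\geq N\Gamma\right\}\geq 1-\mathcal{F}_N(\eta(1-\mu)),
\]
which is exactly the asserted bound.

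I do not expect any real obstacle: the mathematical substance is entirely contained in Corollary \ref{cor:sparsityBoostLowerBoundIndependentNetwork} together with the definition \eqref{eqn:GammaMaxDefinition}. The only point that warrants a moment's care is that the set defining $\Gamma^{\max}(N,\mu,\eta)$ uses a strict inequality whereas \eqref{eqn:DeltaCondition} is stated non-strictly; since strict implies non-strict this causes no difficulty, and in particular the open interval $(0,\Gamma^{\max}(N,\mu,\eta))$ in \eqref{eqn:GammaInInterval} is exactly the range of $\Gamma$ for which the hypothesis of Corollary \ref{cor:sparsityBoostLowerBoundIndependentNetwork} is available. The reason for recording this as a separate statement is purely for convenience in the finite-sample-complexity arguments that follow, where it is cleaner to quantify $\Gamma$ over an explicit interval than to re-derive condition \eqref{eqn:DeltaCondition} each time.
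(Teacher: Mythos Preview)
Your proposal is correct and matches the paper's intent: the paper presents this corollary explicitly as a restatement of Corollary~\ref{cor:sparsityBoostLowerBoundIndependentNetwork} in terms of the newly defined $\Gamma^{\max}(N,\mu,\eta)$, without giving a separate proof. Your verification that $\Gamma<\Gamma^{\max}(N,\mu,\eta)$ yields condition~\eqref{eqn:DeltaCondition}, followed by invoking Corollary~\ref{cor:sparsityBoostLowerBoundIndependentNetwork} and passing to the complementary event, is exactly the argument the paper leaves implicit.
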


Combining Corollary \ref{cor:sparsityBoostLowerBoundIndependentNetworkVer2} with Proposition \ref{prop:MIestimateIndependentNetwork}, we have
\begin{proposition} \label{prop:sampleComplexity2NodeIndependentCase} Let $\lambda,\mu\in(0,1)$.
Let 
\begin{equation}\label{eqn:GammaLessThanMax}
0<\Gamma<\Gamma^{\max}(N,\mu,\eta).
\end{equation}
Suppose that the generating network is $(G_0,p_0)$.  Then with probability at least
\begin{equation}\label{eqn:IndependentNetworkErrorProbability}
1-\mathcal{F}_N(\lambda\eta)-\mathcal{F}_N(\eta(1-\mu)),
\end{equation}
we have the lower bound.
\[
\mathcal{S}_{\eta}(G_0,G_1,\omega_N)\geq (\Gamma-\lambda\eta)N + \kappa\log N.
\]
Consequently, provided that
\begin{equation}\label{eqn:GammaLambdaEtaCondition}
\Gamma>\lambda\eta,
\end{equation}
then we have with probability at least \eqref{eqn:IndependentNetworkErrorProbability},
\[
\mathcal{S}_{\eta}(G_0,G_1,\omega_N)>0.
\]
\end{proposition}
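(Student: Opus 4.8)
The plan is to combine, via a union bound, the two probabilistic estimates already in hand: the concentration of the empirical mutual information near $0$ for a product distribution (Proposition~\ref{prop:MIestimateIndependentNetwork}) and the probable linear-in-$N$ lower bound on the sparsity boost (Corollary~\ref{cor:sparsityBoostLowerBoundIndependentNetworkVer2}).

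First I would introduce the two favorable events
\[
E_1 := \left\{ \tau(\omega_N) \leq \lambda\eta \right\}, \qquad
E_2 := \left\{ -\log\beta_N^{p^\eta}(\tau(\omega_N)) \geq N\Gamma \right\}.
\]
Proposition~\ref{prop:MIestimateIndependentNetwork} bounds $\mathrm{Pr}_{\omega_N\sim p_0}(E_1^c)\leq \mathcal{F}_N(\lambda\eta)$. Since the hypothesis $\Gamma\in(0,\Gamma^{\max}(N,\mu,\eta))$ is exactly what makes condition~\eqref{eqn:DeltaCondition} hold, Corollary~\ref{cor:sparsityBoostLowerBoundIndependentNetworkVer2} applies and gives $\mathrm{Pr}_{\omega_N\sim p_0}(E_2^c)\leq \mathcal{F}_N(\eta(1-\mu))$. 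The union bound then yields
\[
\mathrm{Pr}_{\omega_N\sim p_0}(E_1\cap E_2)\geq 1-\mathcal{F}_N(\lambda\eta)-\mathcal{F}_N(\eta(1-\mu)),
\]
which is precisely the probability in~\eqref{eqn:IndependentNetworkErrorProbability}.

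Next I would evaluate the decomposed objective $\mathcal{S}_{\eta}(G_0,G_1,\omega_N)=-N\tau(\omega_N)-\log\beta_N^{p^\eta}(\tau(\omega_N))+\kappa\log N$ on the event $E_1\cap E_2$: on $E_1$ the first summand is at least $-N\lambda\eta$, on $E_2$ the second summand is at least $N\Gamma$, and the last summand $\kappa\log N$ is nonnegative for $N\geq 1$, so
\[
\mathcal{S}_{\eta}(G_0,G_1,\omega_N)\geq (\Gamma-\lambda\eta)N+\kappa\log N,
\]
which is the first assertion. For the ``consequently'' clause, imposing $\Gamma>\lambda\eta$ makes the coefficient $\Gamma-\lambda\eta$ strictly positive, and $\kappa>0$, $N\geq 1$ give $\kappa\log N\geq 0$, so the right-hand side is strictly positive and hence $\mathcal{S}_{\eta}(G_0,G_1,\omega_N)>0$ on the same event.

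There is no genuine difficulty remaining; the substantive analysis was done in Proposition~\ref{prop:MIestimateIndependentNetwork} and Corollary~\ref{cor:sparsityBoostLowerBoundIndependentNetworkVer2}, and this step is essentially bookkeeping. The one point to be careful about is verifying that $0<\Gamma<\Gamma^{\max}(N,\mu,\eta)$ is the correct hypothesis under which Corollary~\ref{cor:sparsityBoostLowerBoundIndependentNetworkVer2} may be invoked (it supplies~\eqref{eqn:DeltaCondition}), and noting that the monotonicity of $\beta_N^{p^\eta}$ already used inside that corollary is why the event $\{\tau(\omega_N)\leq\lambda\eta\}$ is not needed a second time when lower-bounding the sparsity boost. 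It is also worth remarking in passing that the conclusion is informative only when $N$ is large enough that $\mathcal{F}_N(\lambda\eta)+\mathcal{F}_N(\eta(1-\mu))<1$.
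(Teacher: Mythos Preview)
Your proposal is correct and follows essentially the same approach as the paper: combine Proposition~\ref{prop:MIestimateIndependentNetwork} and Corollary~\ref{cor:sparsityBoostLowerBoundIndependentNetworkVer2} via a union bound, then read off the lower bound on $\mathcal{S}_{\eta}(G_0,G_1,\omega_N)$ from its decomposition on the intersection of the two favorable events. Your write-up is in fact more explicit than the paper's, which simply states that the result follows by combining those two inputs.
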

\textbf{This concludes the particular treatment of the case $G=G_0$.  Next we deal with the other case, when $G=G_1$.}
\subsection{Dependent network}\label{subsec:dependentNetwork}
In the case when $G=G_1$, our goal is to show that, with high probability, $\mathscr{S}_{\eta}(G_0,G_1,\omega_N) < 0$.

First, by definition, we have
\[
\mathscr{S}_{\eta}(G_0, G_1, \omega_N) = -\log\beta_N^{p^\eta}(\omega_N)-N\tau(\omega_N)+\kappa \log N.
\]
From Proposition \ref{prop:betaUpperProbableEstimate}, we derive the following:
\begin{corollary}\label{cor:complexityPenaltyUpperBound}
Let $\epsilon>\eta>0$ and let $\Delta:=\epsilon-\eta>0$.  Let $\Theta\in(0,1)$ with
the following conditions satisfied
\begin{equation}\label{eqn:conditionsForSparsityBoostUpperBound}
\tilde{\mathcal{F}}^{-1}_N(1-\Theta)<\Delta\;\text{and}\; N>24\log \frac{24}{1-\Theta}.
\end{equation}
Then,
\[
\mathrm{Pr}_{\omega\sim p_1^{\epsilon}}
\left\{
-\log \beta(\tau(\omega_N)) > \log \Theta^{-1}\right\}
<\mathcal{F}_N(\Delta-\tilde{\mathcal{F}}_N^{-1}(1-\Theta)).
\]
\end{corollary}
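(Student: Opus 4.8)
The plan is to obtain this corollary as an immediate specialization of Proposition \ref{prop:betaUpperProbableEstimate}, made with the substitution $\Gamma := \Theta$. The only real work is translating the event in the statement and checking that the hypotheses line up.

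First I would rewrite the event whose probability is to be bounded. Since $\Theta>0$ and $\beta_N^{p^\eta}(\tau(\omega_N))>0$, the map $x\mapsto -\log x$ is a strictly decreasing bijection of $(0,\infty)$ onto $\mathbf{R}$, and $\log\Theta^{-1}=-\log\Theta$, so
\[
-\log\beta_N^{p^\eta}(\tau(\omega_N))>\log\Theta^{-1}
\iff
\beta_N^{p^\eta}(\tau(\omega_N))<\Theta .
\]
Hence the left-hand side of the claimed inequality equals $\mathrm{Pr}_{\omega_N\sim p_1^\epsilon}\{\beta_N^{p^\eta}(\tau(\omega_N))<\Theta\}$. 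Next I would verify that the two hypotheses of Proposition \ref{prop:betaUpperProbableEstimate} hold with $\Gamma=\Theta$: its first condition $\tilde{\mathcal{F}}_N^{-1}(1-\Gamma)<\Delta$ is exactly the first half of \eqref{eqn:conditionsForSparsityBoostUpperBound}, and its second condition $N\geq 24\log\frac{24}{1-\Gamma}$ follows a fortiori from the strict inequality $N>24\log\frac{24}{1-\Theta}$ in \eqref{eqn:conditionsForSparsityBoostUpperBound}. Proposition \ref{prop:betaUpperProbableEstimate} then yields
\[
\mathrm{Pr}_{\omega_N\sim p_1^\epsilon}\left\{\beta_N^{p^\eta}(\tau(\omega_N))<\Theta\right\}\leq \mathcal{F}_N\!\left(\Delta-\tilde{\mathcal{F}}_N^{-1}(1-\Theta)\right),
\]
which, combined with the rewriting of the event above, gives the corollary.

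There is essentially no obstacle here; the only points that need care are the direction of the logarithm inequality and the bookkeeping that matches condition \eqref{eqn:conditionsForSparsityBoostUpperBound} to the hypotheses of Proposition \ref{prop:betaUpperProbableEstimate} under the identification $\Gamma=\Theta$. Regarding the strict inequality in the displayed conclusion versus the non-strict bound produced by Proposition \ref{prop:betaUpperProbableEstimate}: one may either read the corollary's bound as the non-strict one, or recover strictness by noting that $\mathcal{F}_N(\cdot)>0$ and that the chain of inequalities in the proof of Proposition \ref{prop:betaUpperProbableEstimate} already passes through the strict step $\tilde{\mathcal{F}}_N^{-1}(1-\Gamma)<\Delta$; this refinement is immaterial for the applications.
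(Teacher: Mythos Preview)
Your proposal is correct and follows exactly the paper's approach: rewrite the event $\{-\log\beta>\log\Theta^{-1}\}$ as $\{\beta<\Theta\}$ and apply Proposition~\ref{prop:betaUpperProbableEstimate} with $\Gamma=\Theta$. Your additional remarks on matching hypotheses and on strict versus non-strict inequality are more careful than the paper's own one-line proof, but the substance is identical.
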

\begin{proof}
Proposition \ref{prop:betaUpperProbableEstimate} implies that under the conditions 
\eqref{eqn:conditionsForSparsityBoostUpperBound}, with $\Gamma$ set equal to $\Theta$,
\[
\begin{aligned}
\mathrm{Pr}_{\omega\sim p_1^{\epsilon}}
\left\{
-\log \beta(\tau(\omega_N)) > \log \Theta^{-1}\right\}&=\mathrm{Pr}_{\omega\sim p_1^{\epsilon}}
\left\{
\beta(\tau(\omega_N)) < \Theta \right\}\\
&<\mathcal{F}_N(\Delta-\tilde{\mathcal{F}}_N^{-1}(1-\Theta)).
\end{aligned}
\]
\end{proof}
Consequently we have the following estimate of $\mathscr{S}_{\eta}(G_0, G_1, \omega_N)$.
\begin{proposition} \label{prop:SDependentCaseEstimate}
Let $\epsilon>\eta>0$ and let $\Delta:=\epsilon-\eta>0$ with the conditions 
\eqref{eqn:conditionsForSparsityBoostUpperBound} satisfied.
Then we have
\[
\mathcal{S}_{\eta}(G_0, G_1,\omega_N)\leq \log\Theta^{-1}-N\tau(\omega_N)+\kappa\log N,
\]
with probability at least $1-\mathcal{F}_N(\Delta-\tilde{\mathcal{F}}_N^{-1}(\Theta))$.
\end{proposition}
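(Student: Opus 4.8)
The plan is to read the claimed inequality off directly from the explicit two‑node formula for the score difference together with Corollary \ref{cor:complexityPenaltyUpperBound}. Recall from the opening of Section \ref{subsec:dependentNetwork} that, when $G=G_1$,
\[
\mathcal{S}_{\eta}(G_0, G_1, \omega_N) = -\log\beta_N^{p^\eta}(\tau(\omega_N)) - N\tau(\omega_N) + \kappa\log N,
\]
so the proposition is equivalent to the assertion that $-\log\beta_N^{p^\eta}(\tau(\omega_N)) \le \log\Theta^{-1}$ holds with the stated probability. First I would invoke Corollary \ref{cor:complexityPenaltyUpperBound}: its hypotheses are precisely the two conditions collected in \eqref{eqn:conditionsForSparsityBoostUpperBound} that are assumed here, and its conclusion is that, under $\omega_N\sim p_1^\epsilon$,
\[
\mathrm{Pr}\left\{ -\log\beta_N^{p^\eta}(\tau(\omega_N)) > \log\Theta^{-1} \right\} < \mathcal{F}_N\!\left(\Delta - \tilde{\mathcal{F}}_N^{-1}(1-\Theta)\right).
\]

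Second, I would pass to the complementary event $\{-\log\beta_N^{p^\eta}(\tau(\omega_N)) \le \log\Theta^{-1}\}$, which therefore has probability at least $1 - \mathcal{F}_N(\Delta - \tilde{\mathcal{F}}_N^{-1}(1-\Theta))$. On that event one simply adds the quantity $-N\tau(\omega_N) + \kappa\log N$ — which is already determined by $\omega_N$, hence may be carried along without any further probabilistic cost — to both sides of $-\log\beta_N^{p^\eta}(\tau(\omega_N)) \le \log\Theta^{-1}$, and uses the displayed formula for $\mathcal{S}_\eta$. This yields
\[
\mathcal{S}_{\eta}(G_0, G_1, \omega_N) \le \log\Theta^{-1} - N\tau(\omega_N) + \kappa\log N
\]
on that event, which is exactly the claim.

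There is essentially no analytic obstacle: the proposition is a one‑line consequence of Corollary \ref{cor:complexityPenaltyUpperBound}, whose real content — the large‑deviation estimate of Proposition \ref{prop:betaUpperProbableEstimate} and the Lambert‑$W$ bookkeeping of Lemma \ref{lem:GetBackRomanF} — has already been done. The only genuine care needed is bookkeeping: keeping the $\Theta$ that appears inside the logarithm distinct from the $1-\Theta$ that appears inside $\tilde{\mathcal{F}}_N^{-1}$, and checking that the implicit condition $\tilde{\mathcal{F}}_N^{-1}(1-\Theta)<\Delta$ and the lower bound $N>24\log\frac{24}{1-\Theta}$ are exactly what Corollary \ref{cor:complexityPenaltyUpperBound} requires (they are; by Lemma \ref{lem:GetBackRomanF} their conjunction is equivalent to the explicit bound $N>[F(\Delta)]^{-1}\log\frac{24}{1-\Theta}$ used in the later theorems). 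I would also flag, for consistency with Corollary \ref{cor:complexityPenaltyUpperBound}, that the failure probability in the statement is meant to be $1-\mathcal{F}_N\!\left(\Delta-\tilde{\mathcal{F}}_N^{-1}(1-\Theta)\right)$.
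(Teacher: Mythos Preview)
Your proof is correct and is exactly the argument the paper intends: the proposition is stated immediately after Corollary \ref{cor:complexityPenaltyUpperBound} and is meant to follow by substituting the corollary's bound on $-\log\beta_N^{p^\eta}(\tau(\omega_N))$ into the displayed formula $\mathcal{S}_{\eta}(G_0,G_1,\omega_N)=-\log\beta_N^{p^\eta}(\tau(\omega_N))-N\tau(\omega_N)+\kappa\log N$. Your observation that the failure probability should read $1-\mathcal{F}_N(\Delta-\tilde{\mathcal{F}}_N^{-1}(1-\Theta))$ rather than $1-\mathcal{F}_N(\Delta-\tilde{\mathcal{F}}_N^{-1}(\Theta))$ correctly identifies a typo in the statement.
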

Let $\lambda\in(0,1)$ be a ``multiplier", meaning that we are going to choose $N$ so that $\tau(\omega_N)>\lambda\epsilon$,
the $\lambda$-proportion of its expected value, with some (close to $1$) probability.  The tradeoff in the selection
of $\lambda$ is that the closer $\lambda$ is to $1$, the larger $N$ will have to be to achieve $\tau(\omega_N)>\lambda\epsilon$
with any specified probability.   

\begin{proposition}\label{prop:TauDependentCaseEstimate}
With $\lambda\in(0,1)$ a fixed multiplier, we have
\[
\mathrm{Pr}_{\omega_N\sim p_1^\epsilon}
\left\{
\tau(\omega_N)>\lambda\epsilon
\right\}
\leq
1-\mathcal{F}_N((1-\lambda)\epsilon).
\]
\end{proposition}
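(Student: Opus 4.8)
The plan is to derive this from the concentration-of-mutual-information estimate already available, Lemma~\ref{lem:empiricalEntropyError}, applied not to the uniform-marginal reference distribution $p^\eta$ but to the generating distribution $p_1^\epsilon$ itself. Since we are in the two-node binary setting, $p_1^\epsilon\in\mathcal{P}_{2,2}$, so the hypothesis of Lemma~\ref{lem:empiricalEntropyError} is satisfied; moreover the proof of that lemma produces a bound \emph{independent of the underlying Bernoulli parameters} (this is precisely the content of Proposition~\ref{prop:plogpNegativeDeviation}), so it carries over verbatim with $p_1^\epsilon$ in the role of $p^\eta$. Concretely, one writes $\tau=H(p_A)+H(p_B)-H(p)$ via \eqref{eqn:mutualInformationInTermsOfOrdEntropy}, expands each of the three entropies into $p\log p$ terms ($8$ terms in all when $k=l=2$), applies Proposition~\ref{prop:plogpNegativeDeviation} with per-term deviation $(1-\lambda)\epsilon/8$, and unions over the $8$ events. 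Using $\tau(p_1^\epsilon)=\epsilon$, this yields
\[
\mathrm{Pr}_{\omega_N\sim p_1^\epsilon}\bigl\{\,|\tau(\omega_N)-\epsilon|\ge (1-\lambda)\epsilon\,\bigr\}\ \le\ \mathcal{F}_N\bigl((1-\lambda)\epsilon\bigr).
\]

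Next I would relate the event $\{\tau(\omega_N)>\lambda\epsilon\}$ to this large-deviation event by complementation. If $\tau(\omega_N)\le\lambda\epsilon$ then $\epsilon-\tau(\omega_N)\ge(1-\lambda)\epsilon>0$, so $|\tau(\omega_N)-\epsilon|\ge(1-\lambda)\epsilon$; hence
\[
\bigl\{\tau(\omega_N)\le\lambda\epsilon\bigr\}\ \subseteq\ \bigl\{\,|\tau(\omega_N)-\epsilon|\ge(1-\lambda)\epsilon\,\bigr\},
\]
and the displayed estimate gives $\mathrm{Pr}_{\omega_N\sim p_1^\epsilon}\{\tau(\omega_N)\le\lambda\epsilon\}\le\mathcal{F}_N((1-\lambda)\epsilon)$. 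Complementing this bound then produces the asserted inequality $\mathrm{Pr}_{\omega_N\sim p_1^\epsilon}\{\tau(\omega_N)>\lambda\epsilon\}\le 1-\mathcal{F}_N((1-\lambda)\epsilon)$, which is exactly what is needed to combine (via a union bound) with Proposition~\ref{prop:SDependentCaseEstimate} and force $\mathcal{S}_\eta(G_0,G_1,\omega_N)<0$ with high probability when $G=G_1$.

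There is no substantive obstacle: the only step that requires care is confirming that Lemma~\ref{lem:empiricalEntropyError} does extend from $p^\eta$ to an arbitrary $p_1^\epsilon$ with binary marginals, and this is immediate once one notes that the bound of Proposition~\ref{prop:plogpNegativeDeviation} does not depend on the Bernoulli parameter $p$. The single genuinely restrictive ingredient to flag is the binary assumption: the constant $\tfrac{1}{8}$ and the prefactor $24$ appearing in $\mathcal{F}_N$ both trace back to there being exactly $8$ summands of the form $p\log p$ in $H(p_A)+H(p_B)-H(p)$ when $k=l=2$, so the statement stays confined to binary variables, consistent with the standing assumption of this section.
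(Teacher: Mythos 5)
Your argument is the same as the paper's: apply the concentration estimate (via Lemma~\ref{lem:empiricalEntropyError}, which you correctly observe extends from $p^\eta$ to any binary--binary $p_1^\epsilon$ because Proposition~\ref{prop:plogpNegativeDeviation} is parameter-independent), observe the inclusion $\{\tau(\omega_N)\le\lambda\epsilon\}\subseteq\{|\tau(\omega_N)-\epsilon|\ge(1-\lambda)\epsilon\}$, and complement. However, the complementation step is written with the inequality reversed. From $\Pr\{\tau(\omega_N)\le\lambda\epsilon\}\le\mathcal{F}_N((1-\lambda)\epsilon)$, taking complements gives
\[
\Pr\bigl\{\tau(\omega_N)>\lambda\epsilon\bigr\}\ \ge\ 1-\mathcal{F}_N\bigl((1-\lambda)\epsilon\bigr),
\]
with $\ge$, not $\le$. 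You write ``$\le$'' because that is what the proposition's display says, and the paper's own two-line chain carries the same reversed signs, but the stated direction is a typo. Note that $\mathcal{F}_N(\Delta)=24\exp(-NF(\Delta))$ exceeds $1$ whenever $NF(\Delta)<\log 24$, so the ``$\le 1-\mathcal{F}_N$'' form would then assert that a probability is negative. The $\ge$ direction is also what the argument needs downstream: as you point out, this bound feeds into the union bound with Proposition~\ref{prop:SDependentCaseEstimate} to conclude $\mathcal{S}_\eta(G_0,G_1,\omega_N)<0$ with probability at least $1-\mathcal{F}_N(\Delta-\tilde{\mathcal{F}}_N^{-1}(\Theta))-\mathcal{F}_N((1-\lambda)\epsilon)$, and that requires a lower bound on $\Pr\{\tau(\omega_N)>\lambda\epsilon\}$, not an upper bound. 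Your intermediate estimate $\Pr\{\tau(\omega_N)\le\lambda\epsilon\}\le\mathcal{F}_N((1-\lambda)\epsilon)$ is the substantive content and is correct; only the direction of the final displayed inequality (and of the proposition statement itself) needs to be flipped.
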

\begin{proof}
\[
\begin{aligned}
\mathrm{Pr}_{\omega_N\sim p_1^\epsilon}
\left\{
\tau(\omega_N)>\lambda\epsilon
\right\}
&=1-\mathrm{Pr}_{\omega_N\sim p_1^\epsilon}
\left\{
\tau(\omega_N)\leq\lambda\epsilon
\right\}\\
&\leq
1-\mathrm{Pr}_{\omega_N\sim p_1^\epsilon}
\left\{
|\tau(\omega_N)-\epsilon|\geq (1-\lambda)\epsilon
\right\}\\
&\leq
1-\mathcal{F}_N((1-\lambda)\epsilon),
\end{aligned}
\]
where we have used Lemma \ref{lem:empiricalEntropyError} in the last line, applied with $\eta:=\epsilon$, $\gamma:=\lambda\epsilon$ (so $\Delta=\epsilon(1-\lambda)$).
\end{proof}
By the union bound, Propositions \ref{prop:SDependentCaseEstimate} and \ref{prop:TauDependentCaseEstimate} under the conditions \eqref{eqn:conditionsForSparsityBoostUpperBound}
we have the following upper bound on $\mathcal{S}_{\eta}(G_0,G_1,\omega_N)$:
\begin{equation}\label{eqn:finalSEstimateDependentCase}
S_{\eta}(G_0,G_1,\omega_N)<\log\Theta^{-1}-N\lambda\epsilon + \kappa\log N,
\end{equation}
with probability
at least
\[
1-\mathcal{F}_N(\Delta-\tilde{\mathcal{F}}_N^{-1}(\Theta)) - \mathcal{F}_N\left((1-\lambda)\epsilon\right).
\]
Thus we obtain the following final sample complexity result for the case of $G=G_1$.
\begin{proposition}\label{prop:finiteSampleComplexityDependentCase}
Let $\lambda\in(0,1)$ be a fixed multiplier.  
Let $\epsilon>\eta$ and define $\Delta:=\epsilon-\eta$.  Let $\Theta\in(0,1)$ be such that 
the conditions \eqref{eqn:conditionsForSparsityBoostUpperBound} are satisfied.
If in addition
\begin{equation}\label{eqn:NconditionDependentCase}
N>\frac{\kappa}{\epsilon\lambda}\mathcal{W}\left(
\frac{\epsilon\lambda\Theta^{\frac{1}{\kappa}}}
{\kappa}
\right),
\end{equation}
we have
\[
\mathcal{S}_{\eta}(G_0,G_1,\omega_N)<0,\;\text{with probability}\; 1-\mathcal{F}_N(\Delta-\tilde{\mathcal{F}}_N^{-1}(\Theta)) - \mathcal{F}_N\left((1-\lambda)\epsilon\right).
\]
\end{proposition}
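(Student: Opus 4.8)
The plan is to combine the probabilistic upper bound already derived in \eqref{eqn:finalSEstimateDependentCase} with a purely deterministic analysis of when its right-hand side is negative. By the union bound applied to Propositions \ref{prop:SDependentCaseEstimate} and \ref{prop:TauDependentCaseEstimate} — the step carried out immediately above the statement — we already know that, under the conditions \eqref{eqn:conditionsForSparsityBoostUpperBound}, with probability at least $1-\mathcal{F}_N(\Delta-\tilde{\mathcal{F}}_N^{-1}(\Theta)) - \mathcal{F}_N\!\left((1-\lambda)\epsilon\right)$ one has
\[
\mathcal{S}_{\eta}(G_0,G_1,\omega_N) < \log\Theta^{-1} - N\lambda\epsilon + \kappa\log N.
\]
So the only thing left to show is that the extra hypothesis \eqref{eqn:NconditionDependentCase} forces the right-hand side $g(N) := \kappa\log N - \lambda\epsilon N + \log\Theta^{-1}$ to be strictly negative; the probability estimate is then inherited verbatim, and no further probabilistic input is needed.

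Second, I would reduce the transcendental inequality $g(N)<0$ to the stated Lambert-$W$ condition. Adding $\lambda\epsilon N$ to both sides and exponentiating gives $N^{\kappa}\Theta^{-1} < e^{\lambda\epsilon N}$, hence $N\,e^{-(\lambda\epsilon/\kappa)N} < \Theta^{1/\kappa}$ after taking $\kappa$-th roots. Multiplying by $-\lambda\epsilon/\kappa<0$ and writing $u := -(\lambda\epsilon/\kappa)N$, this becomes $u e^{u} > -\tfrac{\lambda\epsilon}{\kappa}\Theta^{1/\kappa}$. For the $N$ under consideration $u$ is very negative, in particular $u<-1$; on $(-\infty,-1]$ the map $u\mapsto ue^u$ is decreasing with range $[-1/e,0)$ and functional inverse $W_{-1}$, so the inequality is equivalent to $u < W_{-1}\!\left(-\tfrac{\lambda\epsilon}{\kappa}\Theta^{1/\kappa}\right)$, i.e.\ to
\[
N > \frac{\kappa}{\lambda\epsilon}\,\mathcal{W}\!\left(\frac{\lambda\epsilon\,\Theta^{1/\kappa}}{\kappa}\right),
\]
which is precisely \eqref{eqn:NconditionDependentCase}; combining with the displayed probabilistic bound completes the proof. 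For the direction actually needed (hypothesis implies $g(N)<0$) one only has to run this chain forward: from \eqref{eqn:NconditionDependentCase} one gets $u < W_{-1}(\cdot)\le -1$, so $u$ lies on the decreasing branch, whence $ue^u > -\tfrac{\lambda\epsilon}{\kappa}\Theta^{1/\kappa}$ and unwinding yields $g(N)<0$.

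The one point requiring care — and the main, if minor, obstacle — is the branch bookkeeping for the Lambert-$W$ function: one must verify that $u=-(\lambda\epsilon/\kappa)N\le-1$ for the relevant $N$ (so that $W_{-1}$ really is the correct inverse), and that the argument $\tfrac{\lambda\epsilon}{\kappa}\Theta^{1/\kappa}$ lies in the domain $(0,1/e)$ on which $\mathcal{W}$ is defined. The first is automatic once $N$ exceeds the Lambert-$W$ threshold, and the second holds in the asymptotic regime $\epsilon\to 0^+$ that we ultimately care about and can otherwise be arranged through the free parameters $\eta,\kappa,\lambda,\Theta$; I would record this exactly as a standing assumption, in parallel with the conditions \eqref{eqn:conditionsForSparsityBoostUpperBound} already imposed on $\Theta$.
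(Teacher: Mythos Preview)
Your proof is correct and follows essentially the same approach as the paper: both start from the probabilistic upper bound \eqref{eqn:finalSEstimateDependentCase} and then solve the deterministic inequality $\log\Theta^{-1}-N\lambda\epsilon+\kappa\log N<0$ for $N$ via the Lambert-$W$ function. The paper's proof is terser (it simply divides by $-\kappa$ and says ``solving this condition for $N$ we obtain \eqref{eqn:NconditionDependentCase}''), whereas you have carefully spelled out the substitution $u=-(\lambda\epsilon/\kappa)N$ and the branch bookkeeping for $W_{-1}$, which the paper leaves implicit.
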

\begin{proof}
Dividing \eqref{eqn:finalSEstimateDependentCase} by $-\kappa$, 
we have $\mathcal{S}_{\eta}<0$ if and only if
\[
\frac{\epsilon\lambda}{\kappa}N-\log N - \frac{\log\Theta^{-1}}{\kappa}>0.
\]
Solving this condition for $N$ we obtain \eqref{eqn:NconditionDependentCase}.
\end{proof}

\subsection{General Case}\label{subsec:GeneralCase}
Putting Propositions \ref{prop:sampleComplexity2NodeIndependentCase} and \ref{prop:finiteSampleComplexityDependentCase} together, we have the most
general form of the finite sample complexity result for $2$ variables.
\begin{proposition}\label{prop:mostGeneralSampleComplexity}
Let $\Theta,\lambda,\mu\in(0,1)$.  Let 
\begin{equation}\label{eqn:GammaInIntervalSecond}
\Gamma\in(\lambda\eta,\Gamma^{\max}(N,\mu,\eta)),
\end{equation}
where
\[
\Gamma^{\max}(N,\eta,\mu) = -\frac{24}{N} + F(\mu\eta).
\]
(Assume, for now, that the interval in \eqref{eqn:GammaInInterval} is nonempty.)
Let $N$ be a positive integer, such that the following conditions are satisfied,
\begin{equation}\label{eqn:NThetaCondition}
N>24\log \frac{24}{1-\Theta},
\end{equation}
\begin{equation}\label{eqn:NThetaDeltaCondition}
\tilde{\mathcal{F}}_N^{-1}(1-\Theta)<\Delta,
\end{equation}
\begin{equation}\label{eqn:NconditionDependentCaseRepeated}
N>\frac{\kappa}{\epsilon\lambda}\mathcal{W}\left(
\frac{\epsilon\lambda\Theta^{\frac{1}{\kappa}}}
{\kappa}
\right).
\end{equation}
Then with probability $1-\delta(\epsilon,\eta,\lambda,\mu,\Theta,\Delta,N)$, we have
\[
\mathcal{S}_{\eta}(G_0,G_1,\omega_N)\begin{cases}
>0&\text{if}\; G=G_0\\
<0&\text{if}\; G=G_1
\end{cases}.
\]
Here $\delta(\epsilon,\eta,\lambda,\mu,\Theta,\Delta,N)$ is defined as
\begin{equation}\label{eqn:rawDeltaDefinition}
\max\left(
\mathcal{F}_{N}(\lambda\eta)+\mathcal{F}_N(\eta(1-\mu)),\;
\mathcal{F}_N\left(\Delta - \mathcal{F}_N^{-1}(\Theta)\right)+
\mathcal{F}_N\left((1-\lambda)\epsilon\right)
\right)
\end{equation}
\end{proposition}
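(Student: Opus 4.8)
The plan is to prove the two halves of the dichotomy separately, invoking for each the finite-sample bound already in hand, and then to package the two conclusions into a single statement by noting that the generating network is \emph{either} $G_0$ \emph{or} $G_1$, so that a uniformly valid failure probability is just the larger of the two individual ones. Nothing new needs to be estimated: the content is the verification that the hypotheses of Propositions~\ref{prop:sampleComplexity2NodeIndependentCase} and~\ref{prop:finiteSampleComplexityDependentCase} are each implied by the hypotheses gathered here.

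For the case $G=G_0$, I would observe that $\lambda,\mu\in(0,1)$ together with \eqref{eqn:GammaInIntervalSecond} give in particular $\lambda\eta<\Gamma<\Gamma^{\max}(N,\mu,\eta)$, which is exactly the conjunction of \eqref{eqn:GammaLessThanMax} and \eqref{eqn:GammaLambdaEtaCondition}, and that $\Gamma<\Gamma^{\max}(N,\mu,\eta)$ is, by \eqref{eqn:GammaMaxFormula}, the same as the defining inequality \eqref{eqn:DeltaCondition}. Hence Proposition~\ref{prop:sampleComplexity2NodeIndependentCase} applies and yields $\mathcal{S}_\eta(G_0,G_1,\omega_N)>0$ with probability at least $1-\mathcal{F}_N(\lambda\eta)-\mathcal{F}_N(\eta(1-\mu))$. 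For the case $G=G_1$, I would note that conditions \eqref{eqn:NThetaCondition} and \eqref{eqn:NThetaDeltaCondition} are precisely the two collected in \eqref{eqn:conditionsForSparsityBoostUpperBound}, while \eqref{eqn:NconditionDependentCaseRepeated} is \eqref{eqn:NconditionDependentCase}; thus Proposition~\ref{prop:finiteSampleComplexityDependentCase} applies and gives $\mathcal{S}_\eta(G_0,G_1,\omega_N)<0$ with probability at least $1-\mathcal{F}_N\!\left(\Delta-\tilde{\mathcal{F}}_N^{-1}(\Theta)\right)-\mathcal{F}_N\!\left((1-\lambda)\epsilon\right)$.

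To finish, since exactly one of $G=G_0$, $G=G_1$ occurs, the probability that $\mathcal{S}_\eta(G_0,G_1,\omega_N)$ fails to have the sign prescribed by the true $G$ is at most the maximum of $\mathcal{F}_N(\lambda\eta)+\mathcal{F}_N(\eta(1-\mu))$ and $\mathcal{F}_N\!\left(\Delta-\tilde{\mathcal{F}}_N^{-1}(\Theta)\right)+\mathcal{F}_N\!\left((1-\lambda)\epsilon\right)$, which is exactly $\delta(\epsilon,\eta,\lambda,\mu,\Theta,\Delta,N)$ as defined in \eqref{eqn:rawDeltaDefinition}. The one place calling for real care is the parenthetical nonemptiness assumption on the interval in \eqref{eqn:GammaInIntervalSecond}: one must ensure $\lambda\eta<\Gamma^{\max}(N,\mu,\eta)$ is compatible with the lower bounds \eqref{eqn:NThetaCondition}--\eqref{eqn:NconditionDependentCaseRepeated}, so that an admissible $\Gamma$ exists at all, and this is where, in deriving the fully explicit thresholds of Theorem~\ref{thm:finiteSampleComplexity}, one rewrites the implicit constraints via Lemma~\ref{lem:GetBackRomanF} and \eqref{eqn:GammaMaxFormula} and adds a bound of the form $N>(\log 24)/(F(\mu\eta)-\lambda\eta)$, as in \eqref{eqn:muBalance}. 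So the anticipated main obstacle is bookkeeping — keeping straight which of $\Gamma,\Theta,\mu,\lambda$ plays which role in each constituent proposition, and which monotonicities of $\mathcal{F}_N$, $\tilde{\mathcal{F}}_N$ and their inverses are being invoked — rather than any genuinely new analytic step.
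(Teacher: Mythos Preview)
Your proposal is correct and follows essentially the same approach as the paper's proof: gather the hypotheses, verify they feed into Propositions~\ref{prop:sampleComplexity2NodeIndependentCase} and~\ref{prop:finiteSampleComplexityDependentCase} respectively, and then take the maximum of the two failure probabilities since $G=G_0$ and $G=G_1$ are exhaustive. Your additional commentary on the nonemptiness of the $\Gamma$-interval anticipates the subsequent Lemma~\ref{lem:lambdaLemmaTwoNode} but is not strictly needed for this proposition, where nonemptiness is simply assumed.
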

\begin{proof}
Gather together all the conditions of the Propositions: \eqref{eqn:NThetaCondition} and
\eqref{eqn:NThetaDeltaCondition} account for \eqref{eqn:conditionsForSparsityBoostUpperBound}.  
Condition
\eqref{eqn:GammaInInterval} is the combination of \eqref{eqn:GammaLessThanMax} and \eqref{eqn:GammaLambdaEtaCondition}.  Finally, \eqref{eqn:NconditionDependentCaseRepeated}
 comes from \eqref{eqn:NconditionDependentCase}.  Thus \eqref{eqn:GammaInInterval} through \eqref{eqn:NconditionDependentCaseRepeated} account for all the conditions in the Propositions.

Since $G=G_0$ or $G=G_1$ is an exhaustive set of conditions, the error probability is the max
of the error probabilities in these two cases.
\end{proof}
This most general formulation of the finite sample complexity is evidently
unsatisfactory as it stands for three reasons: it is not clear that the interval \eqref{eqn:GammaInInterval}
is nonempty, so that there is a possible choice of $\Gamma$.  Secondly, in
order to study the asymptotics of $N$, the number of samples, we need all the conditions
on $N$ to be expressed explicitly, with $``N>"$ on the left side and expressions involving
all the variables on the right side.  Third, since $\delta$ is a choice of the experimenter,
we should not have $\delta$ expressed in terms of the other parameters, but instead
we should have $N$ expressed in terms of the desired confidence $1-\delta$ and any other
parameters.

As for the first task, concerning the choice of $\Gamma$,
we have to show that there is a choice of $\lambda, N,\mu$
so that the interval \eqref{eqn:GammaInInterval} is nonempty, in other words so that
\begin{equation}\label{eqn:nonEmptyIntervalAsInequalityTwoNode}
\lambda\eta<\Gamma^{\max}(N,\mu,\eta).
\end{equation}  
\begin{lem}  \label{lem:lambdaLemmaTwoNode} The condition \eqref{eqn:nonEmptyIntervalAsInequalityTwoNode} is equivalent to the following 
two conditions
\begin{equation}\label{eqn:conditionBetweenMuAndLambdaTwoNode}
F(\mu\eta)>\lambda\eta,
\end{equation}
and
\begin{equation}\label{eqn:NConditionOnMuEtaLambdaTwoNode}
N>\frac{\log 24}{F(\mu\eta) - \lambda\eta}.
\end{equation}
\end{lem}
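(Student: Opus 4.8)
The plan is to unwind the definition of $\Gamma^{\max}(N,\mu,\eta)$ using the explicit formula \eqref{eqn:GammaMaxFormula} and then perform an elementary rearrangement, taking care that the single inequality \eqref{eqn:nonEmptyIntervalAsInequalityTwoNode} splits naturally into the two stated conditions precisely because dividing by $F(\mu\eta)-\lambda\eta$ is only legitimate once that quantity is known to be positive.

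First I would substitute $\Gamma^{\max}(N,\mu,\eta) = F(\mu\eta) - \tfrac{\log 24}{N}$, so that \eqref{eqn:nonEmptyIntervalAsInequalityTwoNode} reads $\lambda\eta < F(\mu\eta) - \tfrac{\log 24}{N}$, equivalently
\[
F(\mu\eta) - \lambda\eta > \frac{\log 24}{N}.
\]
Since $N>0$ and $\log 24>0$, the right-hand side is strictly positive, so this inequality forces $F(\mu\eta) - \lambda\eta > 0$, which is \eqref{eqn:conditionBetweenMuAndLambdaTwoNode}. Granting that, $F(\mu\eta)-\lambda\eta$ is a positive number by which we may divide, obtaining $N > \tfrac{\log 24}{F(\mu\eta)-\lambda\eta}$, which is \eqref{eqn:NConditionOnMuEtaLambdaTwoNode}. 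Conversely, assuming both \eqref{eqn:conditionBetweenMuAndLambdaTwoNode} and \eqref{eqn:NConditionOnMuEtaLambdaTwoNode}, multiplying the latter through by the positive quantity $F(\mu\eta)-\lambda\eta$ gives $N\bigl(F(\mu\eta)-\lambda\eta\bigr) > \log 24$, i.e. $F(\mu\eta)-\lambda\eta > \tfrac{\log 24}{N}$, which rearranges back to $\lambda\eta < \Gamma^{\max}(N,\mu,\eta)$. Every step is reversible, so the equivalence holds.

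There is no real obstacle here; the only subtlety worth flagging in the write-up is the logical structure just mentioned — one cannot clear the denominator $F(\mu\eta)-\lambda\eta$ without first extracting its positivity from the hypothesis, which is exactly the reason the lemma records \eqref{eqn:conditionBetweenMuAndLambdaTwoNode} as a separate condition rather than folding it into \eqref{eqn:NConditionOnMuEtaLambdaTwoNode}.
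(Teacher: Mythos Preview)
Your proof is correct and follows essentially the same approach as the paper: substitute the explicit formula for $\Gamma^{\max}(N,\mu,\eta)$, observe that the subtracted term $\tfrac{\log 24}{N}$ is positive so the inequality forces \eqref{eqn:conditionBetweenMuAndLambdaTwoNode}, and then solve for $N$ to obtain \eqref{eqn:NConditionOnMuEtaLambdaTwoNode}. If anything, your write-up is cleaner than the paper's, which contains a typo ($\log 8$ in place of $\log 24$) and is less explicit about the converse direction.
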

\begin{proof}
For the interval in \eqref{eqn:GammaInInterval} to be nonempty, it is equivalent to have
\begin{equation}\label{eqn:intervalNonemptyTwoNode}
\lambda\eta < -\frac{\log 8}{N} + F(\mu\eta).
\end{equation}
For a fixed choice of $\lambda, \mu$, there will be an $N$ satisfying \eqref{eqn:intervalNonemptyTwoNode}
if and only if \eqref{eqn:conditionBetweenMuAndLambdaTwoNode} holds, because the term
$-\frac{\log 8}{N}$ can only be negative.   If \eqref{eqn:conditionBetweenMuAndLambdaTwoNode} does hold,
then we can solve \eqref{eqn:intervalNonemptyTwoNode} to obtain \eqref{eqn:NConditionOnMuEtaLambdaTwoNode}. 
\end{proof}
The final point to address is how to make the error probability \linebreak $\delta(\epsilon,\eta,\lambda,\mu,\Theta,\Delta,N)$ less than a $\delta$ set by the experimenter.  There are many ways of going
about this.  We give just one that makes it possible to deduce the asymptotics.  Namely,
we assume that $\delta$ is given and we require each of the four terms in  $\delta(\epsilon,\eta,\lambda,\mu,\Theta,\Delta,N)$ to be less than $\frac{\delta}{2}$.  Further, we have to bound away
from zero, the
expression $\Delta - \tilde{\mathcal{F}}_N^{-1}(\Theta)$ appearing inside the $\mathcal{F}_N$
in the second term in the maximum.  In order to do this, we simply require that  $\tilde{\mathcal{F}}_N^{-1}(\Theta)<\frac{\Delta}{2}$, or equivalently, because $\tilde{\mathcal{F}}_N^{-1}$ is decreasing,
\begin{equation}\label{eqn:conditionWithNDeltaAndTheta}
\tilde{\mathcal{F}_N}\left(
\frac{\Delta}{2}
\right)<\Theta.
\end{equation}
Because $\mathcal{F}_N$ is decreasing, it will suffice to replace $\Delta - \tilde{\mathcal{F}}_N^{-1}(\Theta)$ inside $\mathcal{F}_N$ with its lower bound and require that
\[
\mathcal{F}_N\left(
\frac{\Delta}{2}
\right)<\frac{\delta}{2}.
\] 
Thus we arrive at the five conditions stated in the following Lemma.
\begin{lem}\label{lem:conditionsForDeltaInScriptFTerms}
Let $\delta>0$ be given.  In order to achieve $\delta(\epsilon,\eta,\lambda,\mu,\Theta,\Delta,N)<\delta$
it is sufficient to take $N$ large enough that the following four quantities are bounded by $\delta/2$
\begin{equation}\label{eqn:conditionsForDeltaInScriptFTerms}
\mathcal{F}_N(\lambda\eta),\,\mathcal{F}_N(\eta(1-\mu)),\, \mathcal{F}_N\left(\frac{\Delta}{2}\right),\,
\mathcal{F}_N((1-\lambda)\epsilon)<\frac{\delta}{2},
\end{equation}
and in addition take $N$ large enough so that the following holds,
\begin{equation}\label{eqn:scriptFNDeltaTheta}
\tilde{\mathcal{F}}_N\left(\frac{\Delta}{2}\right)<\Theta.
\end{equation}
\end{lem}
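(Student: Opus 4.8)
The plan is to unwind the definition \eqref{eqn:rawDeltaDefinition} of $\delta(\epsilon,\eta,\lambda,\mu,\Theta,\Delta,N)$, which is the maximum of the two two-term sums $\mathcal{F}_N(\lambda\eta)+\mathcal{F}_N(\eta(1-\mu))$ and $\mathcal{F}_N(\Delta-\tilde{\mathcal{F}}_N^{-1}(\Theta))+\mathcal{F}_N((1-\lambda)\epsilon)$, and to force it below $\delta$ by forcing each of the at most four summands below $\delta/2$. First I would note the trivial implication: if both $\mathcal{F}_N(\lambda\eta)<\delta/2$ and $\mathcal{F}_N(\eta(1-\mu))<\delta/2$ then the first sum is $<\delta$, and similarly $\mathcal{F}_N((1-\lambda)\epsilon)<\delta/2$ controls the second summand of the second sum; since the maximum of two quantities each $<\delta$ is itself $<\delta$, it remains only to handle the one remaining summand, $\mathcal{F}_N(\Delta-\tilde{\mathcal{F}}_N^{-1}(\Theta))$.

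The only subtlety --- and it is the single nontrivial step --- is that the argument $\Delta-\tilde{\mathcal{F}}_N^{-1}(\Theta)$ of this last $\mathcal{F}_N$ need not be positive, and if it is nonpositive or merely close to $0$ then, because $\mathcal{F}_N$ is non-increasing in its argument (immediate from $\mathcal{F}_N(\cdot)=24\exp(-NF(\cdot))$ together with the fact that $F=\min(1/24,\tilde{F})$ is non-decreasing, $\tilde{F}$ being increasing by Lemma \ref{eqn:FunctionsMonotonicity}), $\mathcal{F}_N$ of it is too large to be of any use. I would remedy this by imposing $\tilde{\mathcal{F}}_N^{-1}(\Theta)<\Delta/2$. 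Since $\tilde{\mathcal{F}}_N$ is decreasing on $\mathbf{R}^+$ by Lemma \ref{eqn:FunctionsMonotonicity} (so $\tilde{\mathcal{F}}_N^{-1}$ is decreasing by Lemma \ref{lem:functionalInversesDecreasing}), applying $\tilde{\mathcal{F}}_N$ to both sides of this inequality reverses it and turns it into the equivalent condition $\tilde{\mathcal{F}}_N(\Delta/2)<\Theta$, which is exactly \eqref{eqn:scriptFNDeltaTheta}. Under that condition $\Delta-\tilde{\mathcal{F}}_N^{-1}(\Theta)>\Delta/2>0$, so monotonicity of $\mathcal{F}_N$ gives $\mathcal{F}_N(\Delta-\tilde{\mathcal{F}}_N^{-1}(\Theta))\le\mathcal{F}_N(\Delta/2)$, and hence requiring $\mathcal{F}_N(\Delta/2)<\delta/2$ disposes of this summand too.

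Assembling, the five conditions $\mathcal{F}_N(\lambda\eta),\,\mathcal{F}_N(\eta(1-\mu)),\,\mathcal{F}_N(\Delta/2),\,\mathcal{F}_N((1-\lambda)\epsilon)<\delta/2$ together with $\tilde{\mathcal{F}}_N(\Delta/2)<\Theta$ imply $\delta(\epsilon,\eta,\lambda,\mu,\Theta,\Delta,N)<\delta$, which is the assertion. I do not expect any genuine obstacle beyond the positivity wrinkle just described; note also that, once $\epsilon,\eta,\lambda,\mu,\Theta$ are fixed, each of the five is genuinely a condition on $N$ alone, because $F(\cdot)>0$ makes both $\mathcal{F}_N(\cdot)$ and $\tilde{\mathcal{F}}_N(\cdot)$ tend to $0$ as $N\to\infty$, so some sufficiently large $N$ always satisfies all five --- which is precisely what makes this reduction the right bridge to the explicit sample-size formulas appearing in the Theorems.
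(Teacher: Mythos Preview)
Your proposal is correct and follows essentially the same approach as the paper: the paper's argument is the paragraph immediately preceding the lemma, which bounds each of the four summands by $\delta/2$ and handles the troublesome term $\mathcal{F}_N(\Delta-\tilde{\mathcal{F}}_N^{-1}(\Theta))$ by first imposing $\tilde{\mathcal{F}}_N^{-1}(\Theta)<\Delta/2$ (equivalently $\tilde{\mathcal{F}}_N(\Delta/2)<\Theta$) and then using monotonicity of $\mathcal{F}_N$ to replace the argument by its lower bound $\Delta/2$. Your writeup is in fact slightly more careful than the paper's, since you note that $\mathcal{F}_N$ is only non-increasing (not strictly decreasing) and you verify that all five conditions are genuinely achievable for large $N$.
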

The following simple Lemma allows us to transform these conditions into explicit lower bounds on $N$.
\begin{lem} \label{lem:scriptFConditionsIntoExplicitNConditions} Let $A,B>0$.  Then the condition
\[
\mathcal{F}_N(A)<B
\]
is equivalent to the condition
\[
\mathcal{G}_A(N)<B.
\]
Solving for $N$, we have that both the conditions are also equivalent to the condition
\[
N>\mathcal{G}_A^{-1}(B)=(\log 24 - \log B)[F(A)]^{-1}.
\]
\end{lem}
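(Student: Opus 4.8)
The plan is to observe that the first claimed equivalence is nothing more than the definition of $\mathcal{G}_A(N)$: by \eqref{eqn:ScriptGDefinition} we have $\mathcal{G}_A(N) = \mathcal{F}_N(A)$ identically, so $\mathcal{F}_N(A) < B$ and $\mathcal{G}_A(N) < B$ are literally the same inequality. Thus the only real content is the passage to the explicit lower bound on $N$.

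For that, I would proceed in one of two equivalent ways. The cleanest is to invoke that $\mathcal{G}_A(N)$ is strictly decreasing in $N$ (established immediately after \eqref{eqn:ScriptGDefinition} via $\mathcal{G}_\Delta'(N) = -F(\Delta)\,\mathcal{G}_\Delta(N) < 0$, using $F(A)>0$), hence it has a decreasing functional inverse $\mathcal{G}_A^{-1}$, and therefore $\mathcal{G}_A(N) < B$ is equivalent to $N > \mathcal{G}_A^{-1}(B)$. The formula $\mathcal{G}_A^{-1}(B) = (\log 24 - \log B)\,[F(A)]^{-1}$ is exactly \eqref{eqn:Ginverse} of Lemma \ref{lem:functionalInversesDecreasing}, applied with $\Delta = A$ and $\Gamma = B$. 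Alternatively, one can just solve in place: since $\mathcal{F}_N(A) = 24\exp(-NF(A))$, the inequality $24\exp(-NF(A)) < B$ is equivalent to $\exp(-NF(A)) < B/24$, i.e. $-NF(A) < \log B - \log 24$, i.e. $NF(A) > \log 24 - \log B$, and dividing by the positive quantity $F(A)$ gives $N > (\log 24 - \log B)[F(A)]^{-1}$.

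There is no real obstacle here; the statement is a bookkeeping consequence of the definitions of $\mathcal{F}_N$, $\mathcal{G}_A$, and the inverse formula already recorded. The only point that needs a word of care is that $F(A) > 0$ (so that dividing preserves the inequality direction and so that $\mathcal{G}_A$ is genuinely strictly decreasing), which is part of the positivity assertions collected after \eqref{eqn:scriptFtildedefn} together with the standing hypothesis $A > 0$. I would state that explicitly and then close.
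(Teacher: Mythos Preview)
Your proposal is correct and follows exactly the paper's approach: the paper's proof is the one-line ``Use \eqref{eqn:ScriptGDefinition} and then Lemma \ref{lem:functionalInversesDecreasing},'' which is precisely your first route (definition of $\mathcal{G}_A$ for the trivial equivalence, then the inverse formula \eqref{eqn:Ginverse}). Your alternative direct-solve is also fine and amounts to rederiving \eqref{eqn:Ginverse} on the spot.
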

\begin{proof}
Use \eqref{eqn:ScriptGDefinition} and then Lemma \ref{lem:functionalInversesDecreasing}.
\end{proof}
Applying Lemma \ref{lem:scriptFConditionsIntoExplicitNConditions} to \eqref{eqn:conditionsForDeltaInScriptFTerms} and \eqref{eqn:scriptFNDeltaTheta},
we have the following explicit bound for $N$, in terms of the given $\delta$,
for the error probability to be less than $\delta$.
\begin{lem}\label{lem:LemmaRelatingFAndDelta}
Assume that $N$ satisfies the conditions
\[
N>
\max\left(
\left[
F\left(\lambda\eta\right)
\right]^{-1},\,
\left[
F(\eta(1-\mu))
\right]^{-1},
\left[
F\left(\frac{\Delta}{2}\right)
\right]^{-1},
\left[
F(\epsilon(1-\lambda))
\right]^{-1}
\right)\log\frac{48}{\delta}
\]
and 
\[
N> \left[
\tilde{F}\left(\frac{\Delta}{2}\right)
\right]^{-1}
\log\frac{24}{\Theta}.
\]
Then $\delta(\epsilon,\eta,\lambda,\mu,\Theta,\Delta,N)<\delta$.
\end{lem}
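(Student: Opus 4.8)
The plan is to chain together the two reduction lemmas already in hand, Lemma~\ref{lem:conditionsForDeltaInScriptFTerms} and Lemma~\ref{lem:scriptFConditionsIntoExplicitNConditions}, converting the five sufficient conditions furnished by the former into the explicit lower bounds on $N$ asserted here. No new ideas are needed; the work is purely the ``solve for $N$'' bookkeeping already packaged in those lemmas.

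First I would invoke Lemma~\ref{lem:conditionsForDeltaInScriptFTerms}: to obtain $\delta(\epsilon,\eta,\lambda,\mu,\Theta,\Delta,N)<\delta$ it is enough that the four quantities $\mathcal{F}_N(\lambda\eta)$, $\mathcal{F}_N(\eta(1-\mu))$, $\mathcal{F}_N(\tfrac{\Delta}{2})$, $\mathcal{F}_N((1-\lambda)\epsilon)$ are each less than $\delta/2$, and in addition $\tilde{\mathcal{F}}_N(\tfrac{\Delta}{2})<\Theta$. For each of the first four, apply Lemma~\ref{lem:scriptFConditionsIntoExplicitNConditions} with $B=\delta/2$: the inequality $\mathcal{F}_N(A)<\delta/2$ is equivalent to
\[
N>\bigl(\log 24-\log\tfrac{\delta}{2}\bigr)\,[F(A)]^{-1}=[F(A)]^{-1}\log\tfrac{48}{\delta}.
\]
Running $A$ over $\lambda\eta$, $\eta(1-\mu)$, $\tfrac{\Delta}{2}$, and $\epsilon(1-\lambda)$, and requiring all four simultaneously, is exactly the requirement that $N$ exceed the maximum of these four values, i.e.\ the first displayed hypothesis of the lemma.

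For the remaining condition $\tilde{\mathcal{F}}_N(\tfrac{\Delta}{2})<\Theta$, Lemma~\ref{lem:scriptFConditionsIntoExplicitNConditions} does not literally apply, since it is phrased for $\mathcal{F}_N$ rather than $\tilde{\mathcal{F}}_N$, but the identical one-line manipulation works: by \eqref{eqn:scriptFtildedefn} we have $\tilde{\mathcal{F}}_N(\Delta)=24\exp(-N\tilde{F}(\Delta))$, and since $\tilde{F}(\tfrac{\Delta}{2})>0$ (Lemma~\ref{eqn:FunctionsMonotonicity}), the inequality $24\exp(-N\tilde{F}(\tfrac{\Delta}{2}))<\Theta$ is equivalent, after taking logarithms and dividing by $\tilde{F}(\tfrac{\Delta}{2})$, to $N>[\tilde{F}(\tfrac{\Delta}{2})]^{-1}\log\tfrac{24}{\Theta}$ --- the second displayed hypothesis. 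Since both displayed hypotheses are assumed, all five sufficient conditions of Lemma~\ref{lem:conditionsForDeltaInScriptFTerms} hold, whence $\delta(\epsilon,\eta,\lambda,\mu,\Theta,\Delta,N)<\delta$.

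The only point needing a modicum of care, and the closest thing here to an obstacle, is precisely that last transformation: because Lemma~\ref{lem:scriptFConditionsIntoExplicitNConditions} was recorded only for $\mathcal{F}_N$, the $\tilde{\mathcal{F}}_N$-condition must be handled by hand, using the positivity and monotonicity of $\tilde{F}$ established earlier. This is routine, so I expect the proof to be only a few lines.
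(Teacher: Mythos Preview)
Your proposal is correct and is precisely the approach the paper takes: the text immediately preceding the lemma states that it is obtained by ``applying Lemma~\ref{lem:scriptFConditionsIntoExplicitNConditions} to \eqref{eqn:conditionsForDeltaInScriptFTerms} and \eqref{eqn:scriptFNDeltaTheta},'' which is exactly your chain of Lemma~\ref{lem:conditionsForDeltaInScriptFTerms} followed by Lemma~\ref{lem:scriptFConditionsIntoExplicitNConditions}. Your observation that the $\tilde{\mathcal{F}}_N$-condition needs a one-line by-hand treatment (since Lemma~\ref{lem:scriptFConditionsIntoExplicitNConditions} is literally stated only for $\mathcal{F}_N$) is accurate and your handling of it is correct.
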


\begin{proof}[Completion of Proof of Theorem \ref{thm:finiteSampleComplexity}]
In the Theorem, we have set an arbitrary $\delta\in(0,1)$.  
We have assumed that $G=(G_0,p_0)$, $p_0\in\mathcal{P}_0$, i.e. $\tau(p_0)=0$,
or that $G=(G_1,p_1)$, $p_1\in \mathcal{P}_{\epsilon}$, that is $\tau(p_1)\geq \epsilon$.
Let $\eta\in(0,\epsilon)$, so that $\Delta:=\epsilon-\eta>0$. 

Since the conditions \eqref{eqn:muBalance} and \eqref{eqn:LambdaUpperCondition}
in the Theorem imply that
\[
N>\frac{\log 24}{F(\mu\eta)-\lambda\eta}>0,
\]
we can define
\begin{equation}\label{eqn:gammaMaxDefn}
\Gamma^{\max}(N,\mu,\eta):=-\frac{\log 24}{N} + F(\mu\eta).
\end{equation}
and then the interval $(\lambda\eta,\Gamma^{\max})\subseteq(0,1)$ is nonempty.
Choose $\Gamma\in(\lambda\eta,\Gamma^{\max})$.
Apply the most general form of the finite sample complexity, Proposition 
\ref{prop:mostGeneralSampleComplexity}.  Then use 
three lemmas in this section, 
Lemma \ref{lem:lambdaLemmaTwoNode}, Lemma \ref{lem:scriptFConditionsIntoExplicitNConditions},
and Lemma
\ref{lem:LemmaRelatingFAndDelta}, to obtain the conditions given in Theorem
\ref{thm:finiteSampleComplexity}.
\end{proof}

\section{Proof in the Case of $n$ binary random variables}
The proof in the case of $n$ nodes is, in most respects, an elaboration of 
the proof in the case of two nodes with some additional combinatorial arguments.
In order to explain this, we introduce some notation convenient
for the expressing the difference of objective functions, which is denoted,
for $G, G'\in\mathcal{G}$, by
\[
\mathcal{S}_{\eta}(G, G',\omega_N) = S_{\eta}(G,\omega_N) - S_{\eta}(G',\omega_N). 
\]
\nomenclature{$\mathcal{S}_{\eta}(G, G',\omega_N)$}{$S_{\eta}(G,\omega_N) - S_{\eta}(G',\omega_N)$}
Usually, we will think of the network $G$ in the first position of the difference as the ``true generating network"
so that we are trying to show the (probable) positivity of $\mathscr{S}_{\eta}(G, G',\omega_N)$.
By $\Delta(G,G')$ we denote the symmetric difference of the $\mathrm{Skel}(G)$
and $\mathrm{Skel}(G')$.  
\nomenclature{$\Delta(G,G')$}{Symmetric difference of the $\mathrm{Skel}(G)$
and $\mathrm{Skel}(G')$}
Further, we define a sign associated to any edge $(A,B)$ in this symmetric difference by
\[
\sigma((A,B),G,G')=\begin{cases}
+1 & \text{for}\;(A,B)\;\in G'-G\\
-1 & \text{for}\;(A,B)\;\in G-G',
\end{cases}
\]
\nomenclature{$S_{\eta}(G,G',\omega_N)$}{$S_{\eta}(G,\omega_N) - S_{\eta}(G',\omega_N)$}%
\nomenclature{$\sigma((A,B),G,G')$}{$\begin{cases}
+1 & \text{for}\;(A,B)\;\in G'-G\\
-1 & \text{for}\;(A,B)\;\in G-G',\\
0  & \;\text{otherwise}.
\end{cases}$}
\nomenclature{$\Delta(G,G')$}{Symmetric difference of $\mathrm{Skel}(G)$ and $\mathrm{Skel}(G')$}
(which can simply be thought of as $0$ for $(A,B)\notin \Delta(G,G')$).  We then define
the set
\[
S_{A,B}(G,G')=\begin{cases}
S_{A,B}(G) & \text{for}\;(A,B)\in G'-G\\
S_{A,B}(G') & \text{for}\;(A,B)\;\in G-G'.
\end{cases}
\]
\nomenclature{$S_{A,B}(G,G')$}{$\begin{cases}
S_{A,B}(G) & \text{for}\;(A,B)\in G'-G\\
S_{A,B}(G') & \text{for}\;(A,B)\;\in G-G'\\
\emptyset    & \text{otherwise}
\end{cases}$}

Then in this notation, we have
\begin{multline}\label{eqn:generalScoreDifference}
\mathcal{S}_{\eta,N}(G,G',\omega_N)=\mathrm{LL}(\omega_{N},G)-\mathrm{LL}(\omega_{N},G')+\psi_{1}(N)\left(|G'|-|G|\right)+\\
\psi_{2}(N)\left[\sum_{(A,B)\in\Delta(G,G')}\sigma((A,B),G,G')\max_{S\in S_{A,B}(G,G')}\min_{s\in\mathrm{val}(S)}-\ln\left[\beta_{N}^{p^{\eta}}\tau(p(\omega_N, A,B|s))\right]\right]+\\
\psi_{2}(N)\left[\sum_{(A,B)\not\in E(G)\cup E(G')}\left(\max_{S\in S_{A,B}(G)} - \max_{S\in S_{A,B}(G')}\right)\left(\min_{s\in\mathrm{val}(S)}-\ln\left[\beta_{N}^{p^{\eta}}\tau(p(\omega_N, A,B|s))\right]\right)\right].
\end{multline}

Corollaries \ref{cor:sparsityBoostLowerBoundIndependentNetwork} 
and \ref{cor:complexityPenaltyUpperBound} continue
to provide the probabilistic information we need in order to estimate the second line of this expression,
that is the part of $\mathcal{S}_{\eta,N}(G,G')$ 
coming from the sparsity boosts.  In contrast, the situation with the first line of this expression, in particular,
the difference of log-likelihood terms, is somewhat different in the $n$-node case.
Recall that in the two-node case, the difference
of the log-likelihood terms is known to be precisely $-N$ times the test statistic $\gamma$,
while in the $n$-node case, it is clear we cannot hope for such a precise formula,
and we must find estimates instead.  Fortunately, because of previous work on
the MDL objective function, many such estimates are known, at least under
various auxiliary assumptions.  The auxiliary assumption we will adopt 
is the one followed by H\"{o}ffgen in \cite{hoffgen1993learning}, namely, that each node has at most $d$ parents, and we
will follow his methods to derive more precise versions of the estimates appearing in H\"{o}ffgen's paper. 
\nomenclature{$d$}{The maximum in-degree, that is maximum number of parents of a node in $G$.}

\subsection{Comparison between Empirical and Idealized Log-Likelihood}
We fix $d<n$, the maximum in-degree, that is maximum number of parents of a node in $G$.  
\nomenclature{$d$}{bound on in-degree of a graph}
We let $\mathcal{G}=\mathcal{G}^{d}$ denote the space of Bayesian
network structures with bounded in-degree $d$.
\nomenclature{$\mathcal{G}^d$}{Space of BN structures with bounded in-degree $d$}
 The significance of the restriction on the space of possible structures to $\mathcal{G}^d$ is as follows: if we consider
all the log-likelihoods for $G\in \mathcal{G}^d$, there are super-exponentially
many values to be estimated, and a simple-minded application of bounds for
individual log-likelihoods followed by the union bound will give very poor simultaneous
bounds, leading to results for the sample complexity which are exponential in $n$. 
There is a well-known decomposition of the $\mathrm{LL}(\omega_{N},G)$,
for any Bayesian network $G$, in terms of conditional entropy terms:
\begin{equation}\label{eqn:LLrelativeentropyExpansionInText}
\mathrm{LL}(\omega_{N},G)=-N\sum_{i=1}^{n}H_{p_{\omega_{N}}}(X_{i}|\mathrm{Pa}_{G}(X_{i})),
\end{equation}
where $\mathrm{Pa}_{G}(X_{i})$ is the parent set of node $X_i$ in the DAG $G$.
We can use \eqref{eqn:LLrelativeentropyExpansionInText} to show that a  \textit{polynomial
number of distinct terms appear in all the  $\mathrm{LL}(\omega_{N},G)$ 
for $G\in\mathcal{G}^d$}, namely at most,
\begin{equation}\label{eqn:numRelEntAppearingInLogLikelihood}
n^{d+1}+n^d+\cdots + n + 1=\frac{n^{d+2}-1}{n-1},
\end{equation}
which is only $O(n^{d+1})$.   If we have an estimate which applies to all of these
$O(n^{d+1})$ terms individually, then applying the union bound will give a related estimate
of all the log-likelihood terms simultaneously.  This is the key to achieving a
finite sample complexity result with polynomial dependence of $N$ on $n$.  
We stress that the idea we have just sketched out is due to H\"{o}ffgen in \cite{hoffgen1993learning}.
We begin proving an effective version of the estimates in \cite{hoffgen1993learning}.

\begin{lem}  \label{lem:polynomialNumberOfEntropyEstimates} Consider the family of Bayesian networks $\mathcal{G}^d$ on $n$ fixed
vertices with bounded in-degree $d$.  Consider the total number $M$ of different expressions 
$\tilde{p}_N\log \tilde{p}_N$,
Here, $\tilde{p}_N$ is an empirical estimate in the joint contingency table of all the variables, appearing in all the expansions $\mathrm{LL}(\omega_{N},G)$ (as given in \eqref{eqn:LLrelativeentropyExpansionInText}).
Then 
\[
M\leq 2^{d+1}\left(  \binom{n}{1}+\cdots + \binom{n}{d+1}  \right)\leq 2^{d+1}\binom{n}{d+1}\frac{n-d}{n-(2d+1)}.
\]
\end{lem}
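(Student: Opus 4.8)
The plan is to avoid counting the super-exponentially many graphs $G\in\mathcal{G}^d$ directly, and instead to count the atomic marginal events that can possibly index a summand $\tilde p_N\log\tilde p_N$ of some $\mathrm{LL}(\omega_N,G)$. First I would use the elementary entropy identity $H_{p_{\omega_N}}(X_i\mid\mathrm{Pa}_G(X_i))=H_{p_{\omega_N}}(X_T)-H_{p_{\omega_N}}(X_{T'})$ with $T=\{i\}\cup\mathrm{Pa}_G(X_i)$ and $T'=\mathrm{Pa}_G(X_i)$ to rewrite the decomposition \eqref{eqn:LLrelativeentropyExpansionInText} of $\mathrm{LL}(\omega_N,G)$ as $-N$ times a signed sum of ordinary joint entropies $H_{p_{\omega_N}}(X_T)$ over subsets $T\subseteq V$. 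Since $G\in\mathcal{G}^d$, every such $T$ satisfies $|T|\le d+1$ (and $T'$ satisfies $|T'|\le d$); ranging over all $i$ and all $G\in\mathcal{G}^d$, the subsets that can appear are therefore among the subsets of $V$ of cardinality $1,2,\dots,d+1$ (the empty parent set contributes only the vacuous term $1\cdot\log 1=0$ and may be discarded).

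Next I would observe that, because every variable is binary, the restriction of $p_{\omega_N}$ to a subset $T$ with $|T|=j$ is a contingency table with $2^{j}$ entries, so the single joint entropy $H_{p_{\omega_N}}(X_T)=\sum_{x_T}p_{\omega_N}(X_T=x_T)\log p_{\omega_N}(X_T=x_T)$ is a sum of at most $2^{j}$ expressions of the form $\tilde p_N\log\tilde p_N$, one per atomic event $x_T\in\mathrm{Val}(X_T)$. Hence every distinct expression $\tilde p_N\log\tilde p_N$ occurring anywhere in the family $\{\mathrm{LL}(\omega_N,G):G\in\mathcal{G}^d\}$ is indexed by a pair $(T,x_T)$ with $1\le|T|\le d+1$, and there are at most $\sum_{j=1}^{d+1}\binom{n}{j}2^{j}$ such pairs. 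Bounding $2^{j}\le 2^{d+1}$ for $j\le d+1$ then yields $M\le 2^{d+1}\bigl(\binom{n}{1}+\cdots+\binom{n}{d+1}\bigr)$, which is the first asserted inequality.

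Finally, for the second inequality I would invoke the standard geometric-series estimate for a partial sum of binomial coefficients: for $j\le d+1$ the ratio $\binom{n}{j-1}/\binom{n}{j}=j/(n-j+1)$ is at most $r:=(d+1)/(n-d)$, and $r<1$ precisely when $n>2d+1$, so $\sum_{j=0}^{d+1}\binom{n}{j}\le\binom{n}{d+1}\sum_{i\ge 0}r^{i}=\binom{n}{d+1}\frac{n-d}{n-(2d+1)}$, and dropping the $j=0$ term only shrinks the left side. The only place that needs care is this last step, where one must keep the off-by-one indices straight so that $n-(d+1)+1=n-d$ and $n-2(d+1)+1=n-(2d+1)$ come out exactly as in the statement, and note that the bound presupposes the tacit hypothesis $n\ge 2d+2$ (implicit in $d<n$ together with $n-(2d+1)>0$). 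I do not anticipate any genuine obstacle; the entire content of the argument is the reorganization of the count by the index pair $(T,x_T)$ rather than by the graph $G$.
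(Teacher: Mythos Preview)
Your proposal is correct and follows essentially the same route as the paper: decompose each conditional entropy into a difference of ordinary entropies via $H(X_i\mid \mathrm{Pa}_G(X_i))=H(X_i,\mathrm{Pa}_G(X_i))-H(\mathrm{Pa}_G(X_i))$, observe that the subsets appearing have cardinality at most $d+1$, bound the number of $\tilde p_N\log\tilde p_N$ terms per entropy by $2^{d+1}$, and finish with the geometric-series bound on the partial binomial sum. You are in fact slightly more explicit than the paper, which simply cites the final binomial inequality rather than deriving it, and you correctly flag the tacit hypothesis $n>2d+1$ needed for the ratio $r=(d+1)/(n-d)$ to be strictly less than one.
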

\begin{proof}
When $\mathcal{Y},\mathcal{Z}$ are distinct
subsets of the variable set $\mathcal{X}$, and $P$ is a (joint) probability
distribution over $\mathcal{X}$, we use
\begin{equation}\label{eq:relativeEntInTermsOfOrdinaryInText}
H_P(\mathcal{Z}|\mathcal{Y}):=H_P(\mathcal{Z},\mathcal{Y}) - H_P(\mathcal{Y}),
\end{equation}
where $H_P(\mathcal{Z},\mathcal{Y})$ and $H_P(\mathcal{Y})$ are the (ordinary) entropies of the probability distributions
on the subsets $\mathcal{Z}\cup \mathcal{Y}$ and $\mathcal{Y}$, induced by $P$.
Each of the relative entropies appearing in \eqref{eqn:LLrelativeentropyExpansionInText}
can, by \eqref{eq:relativeEntInTermsOfOrdinaryInText}, be expressed in terms of ordinary entropies.
For example, in the case that $\mathrm{Pa}_G(x_i)=\{x_{j(1)},\ldots, x_{j(d)}\}$ (the largest
\nomenclature{$\mathrm{Pa}_G(x_i)$}{Parents of $x_i$ in the directed graph $G$}
possible parent set under our assumptions) and when $P=\tilde{P}$, an empirical distribution
\begin{equation}\label{eqn:conditionalOrdinaryEntropy}
H_{\tilde{P}}(x_{i}|\mathrm{Pa}_{G}(x_{i})):= H_{\tilde{P}}(x_i,x_{j(1)},\ldots, x_{j(d)}) - 
H_{\tilde{P}}(x_{j(1)},\ldots, x_{j(d)}).
\end{equation}
The number of entropy terms of the form $H_{\tilde{P}}(x_i,x_{j(1)},\ldots, x_{j(k)})$
is bounded by $\binom{n}{k+1}$, and $k$ can range from $1$ to $d$.
Each one of the entropy terms is the sum of at most $2^{d+1}$ terms
of the form $\tilde{p}\log \tilde{p}$ where $\tilde{p}$ is an empirical estimate
of $p$, derived from $N$ independent samples from a Bernoulli
random variable $X(p)$ of fixed but unknown parameter $p$.  So there
are at most a total of $M$ such terms $\tilde{p}_N\log \tilde{p}_N$ appearing in all the
expansions of log-likelihoods, with $M$ as defined in the Lemma.  The bound
on the sum of the first $d+1$ binomial coefficients is elementary and is stated and proved at \cite{LugoMathOverflow}.
\end{proof}
By computing \textit{all} of the conditional entropies in the expansion \eqref{eqn:LLrelativeentropyExpansionInText} with respect
to an arbitrary probability distribution $B$ in place of the empirical distribution $p_{\omega_N}$, we obtain
a generalization of the log-likelihood commonly called the
\textit{idealized} log-likelihood $\mathrm{LL}^{(I)}_N(B,G)$.  (For more context, see Appendix
\ref{app:relEntropyLogLikelihood}.)  Considering $\mathrm{LL}^{(I)}_N(P,G)$, we have a completely analogous statement
to Lemma \ref{lem:polynomialNumberOfEntropyEstimates} concerning the number of terms
of the form $p\log p$, where $p$ is the conditional probability of an atomic event obtained by appropriately
conditioning and marginalizing $P$.  Since terms $p\log p$
in the $\mathrm{LL}^{(I)}_N(P,G)$ correspond one-to-one with terms $\tilde{p}\log(\tilde{p})$ in $\mathrm{LL}(\omega_N,G)$,
this statement has the same proof.

We are able to prove the proposition that represents
the culmination of this section, concerning
the (empirical) log-likelihood as an ``estimate'' of the idealized log-likelihood
corresponding to the underlying distribution:
\begin{proposition}\label{prop:LLestimate}
Let $G\in \mathcal{G}^d$.  Let $\epsilon, \delta>0$.  Let  ${N>N_n(\epsilon,\delta)}$, defined
by
\[
N_n(\epsilon,\delta)=   \max\left[
\frac{3(1+\log 2)^24^{d+2}n^2}
{ \epsilon^2},24,
12\exp\left(\mathcal{W}\left(\frac{\epsilon}{n2^{d+2}}\right)\right)
\right]\cdot 
\log\frac{3\cdot 2^{d+1}\binom{n}{d+1}(n-d)}{\delta(n-(2d+1))}.
\]
Then we have
\[
\mathrm{Pr}_{\omega_N\sim P}\left\{
\left|
\mathrm{LL}(\omega_N,G) - \mathrm{LL}^{(I)}(P,G)\right|\leq N\cdot \epsilon
,\;
\forall G\in\mathcal{G}^d\right\}\geq 1-\delta.
\]
\end{proposition}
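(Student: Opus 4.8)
The plan is to reduce the uniform closeness bound on $\lvert\mathrm{LL}(\omega_N,G)-\mathrm{LL}^{(I)}(P,G)\rvert$ over all $G\in\mathcal{G}^d$ to a single union bound over a \emph{polynomially}-sized pool of elementary $\tilde p_N\log\tilde p_N$ estimates, each controlled by Proposition \ref{prop:plogp}. First I would expand both log-likelihoods via \eqref{eqn:LLrelativeentropyExpansionInText}, writing $\mathrm{LL}(\omega_N,G)=-N\sum_{i=1}^n H_{p_{\omega_N}}(X_i\mid\mathrm{Pa}_G(X_i))$ and, by the construction of the idealized log-likelihood, $\mathrm{LL}^{(I)}(P,G)=-N\sum_{i=1}^n H_P(X_i\mid\mathrm{Pa}_G(X_i))$. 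Then, using $H(\mathcal Z\mid\mathcal Y)=H(\mathcal Z,\mathcal Y)-H(\mathcal Y)$ together with the fact that an ordinary entropy of at most $d+1$ binary variables is a sum of at most $2^{d+1}$ terms of the form $-p\log p$, each conditional-entropy difference $H_{p_{\omega_N}}(X_i\mid\mathrm{Pa}_G(X_i))-H_P(X_i\mid\mathrm{Pa}_G(X_i))$ becomes a signed sum of paired differences $\tilde p_N\log\tilde p_N-p\log p$ (at most $2^{d+2}$ of them, one pair for each atomic event of the joint and of the parent marginal), where in each pair $\tilde p_N$ and $p$ are the empirical and true probabilities of the \emph{same} event. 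Summing over the $n$ nodes gives
\[
\bigl\lvert\mathrm{LL}(\omega_N,G)-\mathrm{LL}^{(I)}(P,G)\bigr\rvert\;\le\;N\sum_{t}\bigl\lvert\tilde p_N^{(t)}\log\tilde p_N^{(t)}-p^{(t)}\log p^{(t)}\bigr\rvert,
\]
the sum ranging over at most $n\,2^{d+2}$ terms $t$.

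The key structural step, due to H\"offgen and already recorded in Lemma \ref{lem:polynomialNumberOfEntropyEstimates}, is that across \emph{all} of $\mathcal{G}^d$ only $M\le 2^{d+1}\binom{n}{d+1}\tfrac{n-d}{n-(2d+1)}$ distinct elementary estimates $\tilde p_N\log\tilde p_N$ ever occur, and that each underlying $\tilde p_N$ is the empirical frequency, computed from the $N$ i.i.d.\ samples $\omega_N$, of a Bernoulli event (a joint assignment to at most $d+1$ of the binary coordinates), hence a genuine empirical estimate of a Bernoulli parameter in the sense of Proposition \ref{prop:plogp}. I would therefore invoke Proposition \ref{prop:plogp} for each of these $M$ estimates with accuracy $\epsilon':=\epsilon/(n\,2^{d+2})$ and confidence parameter $\delta':=\delta/M$, and take a union bound: with probability at least $1-M\delta'=1-\delta$, every one of the $M$ distinct terms satisfies $\lvert\tilde p_N\log\tilde p_N-p\log p\rvert\le\epsilon'$. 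On that event, the display above gives, simultaneously for every $G\in\mathcal{G}^d$,
\[
\bigl\lvert\mathrm{LL}(\omega_N,G)-\mathrm{LL}^{(I)}(P,G)\bigr\rvert\;\le\;N\cdot n\,2^{d+2}\cdot\epsilon'\;=\;N\epsilon,
\]
which is the claim.

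It then remains only to verify that the hypothesis $N>N_n(\epsilon,\delta)$ is exactly what Proposition \ref{prop:plogp} demands for the choices $\epsilon'=\epsilon/(n2^{d+2})$, $\delta'=\delta/M$. Substituting $\epsilon'$ into $N(\epsilon',\delta')=\max\bigl[3(1+\log2)^2/\epsilon'^2,\;24,\;12/\exp(W_{-1}(-\epsilon'))\bigr]\cdot\log(3/\delta')$ turns the first bracket entry into $3(1+\log2)^2\,4^{d+2}n^2/\epsilon^2$ and, using $\mathcal{W}(x)=-W_{-1}(-x)$, the third into $12\exp\bigl(\mathcal{W}(\epsilon/(n2^{d+2}))\bigr)$; and $\log(3/\delta')=\log(3M/\delta)\le\log\bigl(3\cdot2^{d+1}\binom{n}{d+1}(n-d)/(\delta(n-(2d+1)))\bigr)$ by the bound on $M$ from Lemma \ref{lem:polynomialNumberOfEntropyEstimates}. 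The product of these is precisely $N_n(\epsilon,\delta)$, so $N>N_n(\epsilon,\delta)$ implies $N>N(\epsilon',\delta')$ for all $M$ terms at once. The one genuinely delicate point is the H\"offgen-style counting: one must be careful that, although $\mathcal{G}^d$ is super-exponentially large, all the conditional-entropy terms it generates are drawn from a pool of only $O(n^{d+1})$ distinct empirical estimates, so that a single polynomial-size union bound suffices; everything after that is bookkeeping of the constants $n2^{d+2}$ and $M$. (I would not belabour the routine verifications that the per-$G$ term count is at most $n2^{d+2}$ and that each $\tilde p_N$ is a bona fide Bernoulli empirical average, so that Proposition \ref{prop:plogp} indeed applies.)
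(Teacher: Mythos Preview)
Your proposal is correct and follows essentially the same approach as the paper's proof: decompose each log-likelihood into $n\cdot 2^{d+2}$ terms of the form $p\log p$ via the conditional-entropy expansion and \eqref{eq:relativeEntInTermsOfOrdinaryInText}, invoke Proposition~\ref{prop:plogp} with $\epsilon'=\epsilon/(n2^{d+2})$ on each of the $M$ distinct such terms counted by Lemma~\ref{lem:polynomialNumberOfEntropyEstimates}, and take a union bound with $\delta'=\delta/M$. Your verification that the resulting sample-size requirement is exactly $N_n(\epsilon,\delta)$ is more explicit than the paper's, but the argument is the same.
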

\begin{proof}
Each conditional entropy in the expression \eqref{eqn:LLrelativeentropyExpansion}  of $\mathrm{LL}^{(I)}(P,G)$ (respectively in the expression \eqref{eqn:LLrelativeentropyExpansionIdealized} of $\mathrm{LL}(\omega_N,G)$) 
 has $n$ conditional entropies.  Each of the $n$ conditional entropies may be rewritten, by \eqref{eqn:conditionalOrdinaryEntropy},
 as the difference of two ordinary entropies, as we have written out above in \eqref{eqn:conditionalOrdinaryEntropy}.  Each of the ordinary
 entropies has at most $2^{d+1}$ terms terms of the form $p\log p$ for $p$ the probability of some
event (respectively, of the form $\tilde{p}_N\log\tilde{p}_N$ for $\tilde{p}_N$ the estimated
probability of the event).  Thus, there are a total of at most $n\cdot 2\cdot 2^{d+1}=n 2^{d+2}$ terms of the form   $p\log p$ for $p$ the probability of some
event (respectively, of the form $\tilde{p}_N\log\tilde{p}_N$ for $\tilde{p}_N$ the estimated
probability of the event) in the sum for $\mathrm{LL}^{(I)}(P,G)$ (resp. $\mathrm{LL}(\omega_N,G)$).
In order to achieve accuracy $\epsilon$ in the estimate $\mathrm{LL}(\omega_N,G)$ of 
 $\mathrm{LL}^{(I)}(P,G)$ it will suffice to have accuracy $\epsilon':=\frac{\epsilon}{n\cdot 2^{d+2}}$
 in the estimate of each quantity $p\log p$ in Proposition \ref{prop:plogp}.
 
 In order to have \textit{all} the estimates with confidence $1-\delta$, we must have each individually with confidence $1-\delta/ M$, where $M$ is the number of individual
 estimates.  Lemma \ref{lem:polynomialNumberOfEntropyEstimates} gives us an upper
 bound on the number of individual estimates $M$.  So we must
 take $\delta'=\delta/M$ in the function $N_n(\epsilon,\delta)$ of Proposition \ref{prop:plogp}.
 
 We obtain the function in the Proposition by substituting $\epsilon':=\frac{\epsilon}{n\cdot 2^{d+2}}$
 for $\epsilon$ and $\delta'=\delta/M$ ($M$ from Lemma \ref{lem:polynomialNumberOfEntropyEstimates})
 into the function $N$ of Proposition \ref{prop:plogp}.
\end{proof}

\subsection{Log-likelihood term estimates}  For $P$ any probability distribution and $\mathcal{G}$ any family of BN \textit{structures} (DAGs), and any 
$\zeta>0$ we make the definitions
\begin{itemize}
 \item \boldmath $\mathcal{B}(P)\;$\unboldmath is the set of Bayesian networks of the form $(G',p_{G',P})$ for $G'$ ranging over $\mathcal{G}$.
 \item \boldmath $\mathcal{B}_{\zeta}(P)\,$\unboldmath is the subset of $\mathcal{B}(P)$ consisting of those $(G',p_{G',P})$ for which \textbf{either}  ${H(P\| p_{G',P})>\zeta}$ \textbf{or} for which $p_{G',P}=P$ (so that $H(P\| p_{G',P})=0$).
\end{itemize}
\nomenclature{$\mathcal{B}(P)$}{ $\left\{ (G',p_{G',P})\;\lvert \; G'\in\mathcal{G}  \right\}$ }%
\nomenclature{$\mathcal{B}_{\zeta}(P)$}{ $\left\{(G',p_{G',p})\in \mathcal{B}(P)\;\lvert\; H(P\lvert\lvert p_{G',P})\in\left\{0 \right\}\cup(\zeta,\infty) \right\}$}
The motivation behind introducing this notation is that in the setup for the learning algorithm, we are given
a distribution $P$ (unknown to the experimenter), for which $G$ is a perfect map (the ``faithfulness
assumption'').  Conceptually (although this is not true in practice) we may imagine that the
learning algorithm consists of two stages, a structure-learning part that returns a DAG $G'$,
followed by a parameter learning algorithm returns $P'$, maximizing the log-likelihood of the data, $LL(\omega, G')$.
We wish to be able to refer concisely to all the BNs (structure-distribution pairs
$(G',P')$ such that $G'$ is an I-map of $P'$), that are \textit{eligible} to be learned by our entire learning algorithm,
assuming that the structure learning part may make an error, but that the parameter-learning part of the algorithm is
a perfect expectation-maximizer (it can look at infinite data and 
is completely accurate as a maximizer of $\mathrm{LL}(\omega_N, G')$).
The KL-divergence $H(P\| p_{G',P})$ is an appropriate measure of the distance of an element $G'\in\mathcal{G}$ from $(G,P)$.
Consequently, a positive $H(P\| p_{G',P})$ may be considered a measure of the error of an algorithm which returns $G'$ instead of $G$.  
The set of structures $(G',P')$, for which $P'$ is ``closest" to $P$, 
given the constraint that $P'$ factors according to $G'$, is precisely $\mathcal{B}(P)$.   

The precise statement we \textit{would like} to prove in a consistency proof of 
any score-based structure learning algorithm is that the score, considered as a function on all of $\mathcal{B}(P)$,
achieves its maximum at the true network $(G,P)$.
Generally, in order to obtain a finite sample complexity result, we cannot achieve this and instead have
to content ourselves with a statement that $G$ maximizes the score on the subset of $\mathcal{B}(P)$ 
\textit{excluding those $(G',P')\neq (G,P)$ which are $\zeta$-close to $(G,P)$}.  We denote by 
 $\mathcal{B}_{\zeta}(P)$ the subset of $\mathcal{B}(P)$
which excludes the structures in $\mathcal{B}(P)$ which are $\zeta$-close to---but \textit{not} equal to---$(G,P)$. 
So  $\mathcal{B}_{\zeta}(P)$ can be thought of as $\mathcal{B}(P)$, with the $\zeta$-ball around
$(G,P)$ excised, but with the point $(G,P)$ itself added back in.  Then
the consistency results we \textit{can} prove will say that the score, considered as a function
\textit{restricted to} $\mathcal{B}_{\zeta}(P)$, achieves its maximum at the true network $(G,P)$.

We recall the following relationship (e.g. p. 17 of \cite{Friedman:UAI96})
between the empirical log-likelihood of the data $\omega_N$ under an arbitrary
network $G'\in\mathcal{G}^d$ and the KL-divergence of $P({\omega_N})$
based at $p_{G',\omega_N}$:
\begin{equation}\label{eqn:empiricalLikelihoodDecompositionRecalled}
\mathrm{LL}(\omega_N,G')=NH(P(\omega_N))-NH( p(\omega_N) \| p_{G',\omega_N} ).
\end{equation}
We also have the corresponding relationship for the idealized log-likelihood of any distribution $P$:
\begin{equation}\label{eqn:idealLikelihoodDecompositionRecalled}
\mathrm{LL}^{(I)}_N(P,G') = NH(P)-NH( P \| p_{G',P} ).
\end{equation}
\nomenclature{$\mathrm{LL}^{(I)}_N(P,G')$}{$NH(P)-NH( P \lvert\rvert p_{G',P} )$}
It is clear that the condition that $P$ happens to
factorize according to $G'$ is equivalent to $p_{G',P} =P$, and also to $NH( P \| p_{G',P} )=0$ , so that 
by \eqref{eqn:idealLikelihoodDecompositionRecalled}, the $G'$ which are I-maps
for $P$ are simultaneously the set of distributions for which $NH( P \| p_{\bm{\cdot},P} )$ takes its
minimum value of $0$ and for which $\mathrm{LL}^{(I)}_N(P,\bm{\cdot})$ takes its maximum value of  $NH(P)$.
Similar statements follow from \eqref{eqn:empiricalLikelihoodDecompositionRecalled} in relation to  $\mathrm{LL}(\omega_N,\bm{\cdot})$
and $NH(p(\omega_N)\| p_{\bm{\cdot},\omega_N})$.

The expressions  \eqref{eqn:empiricalLikelihoodDecompositionRecalled} and \eqref{eqn:idealLikelihoodDecompositionRecalled},
together with the estimate of Proposition \ref{prop:LLestimate} allow us to say (with confidence $1-\delta$) that,
scoring only on the log-likelihood, the correct structure $(G,P)$ loses to any competing structure in $\mathcal{B}(P)$ by only a certain margin, 
and furthermore, actually beats any competing structure in $\mathcal{B}_{\zeta}(P)$ by a certain positive margin 
(both margins are actually linear in $N$).

\begin{proposition}\label{prop:empiricalLLestimate}
Let $\epsilon, \delta, \zeta>0$ be fixed, and let $d$ be a fixed in-degree.  Let $G\in \mathcal{G}^d$
be a fixed Bayesian network.  Let $N_n(\epsilon,\delta)$
denote the function of Proposition \ref{prop:LLestimate}.
\begin{itemize}
\item[(a)]  We have, for all $G'\in \mathcal{G}^d$,
\[
\mathrm{Pr}_{\omega_N\sim G}\left\{\mathrm{LL}(\omega_N,G)-\mathrm{LL}(\omega_N,G') > -N\epsilon\right\}\geq 1-\delta,
\]
for
\[
N>N_n\left(\frac{\epsilon}{2},\frac{\delta}{2}\right).
\]
\item[(b)] We have, for all $G'\in \mathcal{G}^d$, $(G',P')\in \mathcal{B}_{\zeta}(P)$ and $G\neq G'$, which is to say,
for the set of Bayesian networks $(G',p_{G',P})\in \mathcal{G}^d$, excluding only those networks such that $H( P \| p_{G',P})<\zeta$, 
\[
\mathrm{Pr}_{\omega_N\sim G}\left\{\mathrm{LL}(\omega_N,G)-\mathrm{LL}(\omega_N,G') > \frac{N\zeta}{3}\right\}\geq 1-\delta,
\]
for
\[ 
N>N_n\left(\frac{\zeta}{3},\frac{\delta}{2}\right).
\]
\end{itemize}
\end{proposition}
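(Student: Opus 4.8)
The plan is to obtain both parts as essentially immediate consequences of Proposition \ref{prop:LLestimate}, whose \emph{uniform} deviation bound over all of $\mathcal{G}^d$ already absorbs the union bound and the effective $p\log p$ estimates (Proposition \ref{prop:plogp} and Lemma \ref{lem:polynomialNumberOfEntropyEstimates}). The only genuinely new ingredient is an exact evaluation of the \emph{idealized} log-likelihood gap, via the decomposition \eqref{eqn:idealLikelihoodDecompositionRecalled} together with the standing faithfulness hypothesis that $G$ is a perfect map for $P$.

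First I would compute that gap. Since $G$ is a perfect map for $P$ we have $p_{G,P}=P$, so $H(P\|p_{G,P})=0$, and \eqref{eqn:idealLikelihoodDecompositionRecalled} gives $\mathrm{LL}^{(I)}_N(P,G)=NH(P)$. Applying \eqref{eqn:idealLikelihoodDecompositionRecalled} once more to an arbitrary $G'\in\mathcal{G}^d$ and subtracting,
\[
\mathrm{LL}^{(I)}_N(P,G)-\mathrm{LL}^{(I)}_N(P,G')=NH(P\|p_{G',P})\;\geq\;0.
\]

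For part (a), apply Proposition \ref{prop:LLestimate} with parameters $(\epsilon/2,\delta/2)$: on a good event of probability at least $1-\delta/2\geq 1-\delta$ one has $|\mathrm{LL}(\omega_N,G'')-\mathrm{LL}^{(I)}_N(P,G'')|<N\epsilon/2$ simultaneously for \emph{every} $G''\in\mathcal{G}^d$ --- in particular for $G''=G$ and $G''=G'$ at once, so no extra union bound over $G'$ is required. Adding and subtracting the idealized terms, $\mathrm{LL}(\omega_N,G)-\mathrm{LL}(\omega_N,G')$ equals $NH(P\|p_{G',P})$ plus two error terms each of absolute value below $N\epsilon/2$, hence it is $>NH(P\|p_{G',P})-N\epsilon\geq -N\epsilon$, which is the claim of (a). Part (b) is the identical argument run with parameters $(\zeta/3,\delta/2)$, so that each deviation is $<N\zeta/3$; for the competitors covered by (b) the idealized gap is $NH(P\|p_{G',P})>N\zeta$, and therefore
\[
\mathrm{LL}(\omega_N,G)-\mathrm{LL}(\omega_N,G')\;>\;N\zeta-2\cdot\frac{N\zeta}{3}\;=\;\frac{N\zeta}{3}.
\]

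The only point that deserves care --- Proposition \ref{prop:LLestimate} having already done all the probabilistic work --- is the reading of which competitors part (b) covers. The log-likelihood alone can separate $G$ only from competitors with a \emph{strictly positive} idealized advantage over $G$, i.e.\ from $(G',p_{G',P})$ with $H(P\|p_{G',P})>\zeta$, and this is exactly the set that the clause ``$(G',P')\in\mathcal{B}_\zeta(P)$ with $G'\neq G$'' is to be understood as picking out (``excluding only those networks with $H(P\|p_{G',P})<\zeta$''). Competitors $G'\neq G$ that are themselves I-maps of $P$ --- for which $p_{G',P}=P$ and the idealized gap vanishes --- are not eliminated here; they are handled later by the sparsity-boost terms, which penalize the superfluous edges, so the ``$p_{G',P}=P$'' alternative in the definition of $\mathcal{B}_\zeta(P)$ causes no conflict. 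One should also note the harmless passage from ``$\leq$'' in Proposition \ref{prop:LLestimate} to the strict ``$<$'' used above, which is legitimate because the bad events in Proposition \ref{prop:plogp} are written with ``$\geq$''.
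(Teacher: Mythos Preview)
Your argument is correct and follows essentially the same route as the paper: decompose the empirical log-likelihood difference into the idealized gap $NH(P\|p_{G',P})$ plus two deviation terms, invoke the uniform bound of Proposition~\ref{prop:LLestimate} (with parameters $(\epsilon/2,\delta/2)$ and $(\zeta/3,\delta/2)$ respectively) to control both deviations simultaneously, and then use $H(P\|p_{G',P})\geq 0$ for (a) and $H(P\|p_{G',P})>\zeta$ for (b). Your added remarks on the reading of $\mathcal{B}_\zeta(P)$ and the strict-versus-nonstrict inequality are accurate side observations not spelled out in the paper's proof.
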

\begin{proof}
The idea is to estimate the difference between the empirical log-likelihoods by the difference of the idealized counterparts.
With probability $1-\delta$, we have by Proposition \ref{prop:LLestimate},
\[
\begin{aligned}
\mathrm{LL}(\omega_N,G)-\mathrm{LL}(\omega_N,G') 
&=\quad\! \mathrm{LL}^{(I)}(P,G)-\mathrm{LL}^{(I)}(P,G')\\
&\quad+\mathrm{LL}(\omega_N,G) - \mathrm{LL}^{(I)}_N(P,G)\\
 &\quad+\mathrm{LL}^{(I)}_N(P,G') - \mathrm{LL}(\omega_N,G')\\
& \geq\quad\! NH(P)-NH(P\| p_{G,P})\\
&\quad  -NH(P)+NH(P\|p_{G',P}) \\
&\quad- \begin{cases}\epsilon N, & N>N_n\left(\frac{\epsilon}{2},\frac{\delta}{2}\right) \\
\frac{2\zeta}{3}N, & N> N_n\left( \frac{\zeta}{3},\frac{\delta}{2}\right).
\end{cases}
\end{aligned}
\]
Regarding the expression appearing \textit{on four lines after the last inequality}:
the quantity $NH(P)$ in the first line cancels with its opposite $-NH(P)$ in the second line.  
As discussed in the paragraph preceding the proposition, 
the function $G'\mapsto NH(P\| p_{G',P })$ takes its
minimum value of $0$ at $G'=G$, and so is positive at $G'\in \mathcal{G}^d$.  Thus, 
\[
\begin{aligned}
\mathrm{LL}(\omega_N,G)-\mathrm{LL}(\omega_N,G')
&\geq N\cdot \left(  H(P\|p_{G',P}) - \begin{cases}\epsilon, & N>N_n\left(\frac{\epsilon}{2},\frac{\delta}{2}\right) \\
\frac{2\zeta}{3}, & N> N_n\left( \frac{\zeta}{3},\frac{\delta}{2}\right)
\end{cases}   \right)\\
&\geq N\cdot\left( \begin{cases}-\epsilon, & (G',P')\in \mathcal{B}(P),\; N>N_n\left(\frac{\epsilon}{2},\frac{\delta}{2}\right) \\
\zeta-\frac{2\zeta}{3}, & (G',P')\in \mathcal{B}_{\zeta}(P),\; N> N_n\left( \frac{\zeta}{3},\frac{\delta}{2}\right)
\end{cases}  \right). 
\end{aligned}
\]
The key aspect of Proposition \ref{prop:LLestimate} which allows us to make these estimates is the quantification
over all $G'\in\mathcal{G}^d$, inside the estimate of Proposition \ref{prop:LLestimate}.
\end{proof}
In Proposition \ref{prop:empiricalLLestimate} we have summed up all we have to know about the log-likelihood
terms in order to proceed with the proof of the finite sample complexity in the $n$-node case.
\subsection{Competing Network is $\zeta$-Distant from Generating Network}\label{subsec:competingNetworkSparser}
When the competing network $G'\neq G$ belongs
to $\mathcal{B}_{\zeta}(P)$, we can use part (b) of Proposition \ref{prop:empiricalLLestimate} to show that the difference of log-likelihood terms \eqref{eqn:generalScoreDifference}
is positive.  As a result, we will be able to show that the difference function
$\mathcal{S}_{\eta,N}(G,G')$ is (with high probability) eventually positive,
without using the linear growth of any of the sparsity boost terms from $(A,B)\in G' \backslash G$.  Note that for $G\backslash G'\subseteq \Delta(G,G')$, $\sigma((A,B), G, G')$ is always $-1$ or $0$,  and $S_{A,B}(G, G') = S_{A,B}(G')$.   Thus,
ignoring the terms in the sum over $\Delta(G,G')$ from $G'\backslash G$ 
(which is always positive, since $\sigma((A,B), G, G')=1$ on $G'\backslash G$) we have  
\begin{multline}\label{eqn:generalScoreDifferenceZetaSep}
\mathcal{S}_{\eta,N}(G,G')\geq \mathrm{LL}(\omega_{N},G)-\mathrm{LL}(\omega_{N},G')+\psi_{1}(N)\left(|G'|-|G|\right)\\
-\psi_{2}(N)\left[\sum_{(A,B)\in G \backslash G'}\max_{S\in S_{A,B}(G')}\min_{s\in\mathrm{val}(S)}-\ln\left[\beta_{N}^{p^{\eta}}\tau(p(\omega_N, A,B|s))\right]\right]\\
+\psi_{2}(N)\left[\sum_{(A,B)\not\in E(G)\cup E(G')}\left(\max_{S\in S_{A,B}(G)} - \max_{S\in S_{A,B}(G')}\right)\left(\min_{s\in\mathrm{val}(S)}-\ln\left[\beta_{N}^{p^{\eta}}\tau(p(\omega_N, A,B|s))\right]\right)\right].
\end{multline}
We now explain why we have to use part (b) of Proposition \ref{prop:empiricalLLestimate} on the log-likelihood terms
in order to bound this difference from below: the other terms, namely the difference of complexity penalties and the opposites
of the sparsity boosts, are both negative.  In order to get general lower bounds on these negative terms, we have
to consider the case of $G'$ the empty network on $n$ nodes, in which case $|G'|=n\leq |G|$, and also
there are $|E(G)|$ terms in the sum of opposites of sparsity boosts.  Although, with the choice $\psi_1(N)$ logarithmic
and $\psi_2(N)$ constant, these terms are at most roughly logarithmic in $N$, we still need positivity of the difference of log-likelihoods
to achieve positivity of the entire difference.

Henceforth, we will make the following assumptions in the proof:
\begin{itemize}
\item[(a)]  We have $G$ is a perfect map for $P$, $G'\neq G$,
and $(G',p_{G',P})\in \mathcal{B}_{\zeta}(P)$, equivalently, $G'$ satisfies $H(P,p_{G',P})>\zeta$.
\item[(b)] The separating set $\mathscr{S}$ satisfies
\[
\max_{G\in\mathcal{G}}\sigma(G)<d,
\]  
\nomenclature{$\sigma(G)$}{Maximum of $\lvert S \rvert$ for $S\in S_{A,B}(G)$ and $(A,B)\in V^{\times 2}\backslash \Delta$.}
meaning that for all $G\in\mathcal{G}$ and for all $(A,B)\in V^{\times 2}\backslash \Delta$,
and for all $S\in S_{A,B}(G)$, $|S|<d$.
\item[(c)] The separating set $\mathscr{S}$ is bounded by a polynomial in $n$ of degree $d$,
\textit{i.e.},
\begin{equation}\label{eqn:SigmaScriptSBound}
\Sigma_{\mathcal{S}}(\mathcal{G})\leq
\binom{n}{d},
\end{equation}
\nomenclature{$\Sigma_{\mathcal{S}}(\mathcal{G})$}{Maximum over $(A,B)\in V^{\times 2}\backslash \Delta$ of
$\lvert \cup_{G\in\mathcal{G}}S_{A,B}(G)\rvert$ }
The condition \eqref{eqn:SigmaScriptSBound} means that
\[
\text{For all}\;
(A,B)\in V^{\times 2}\backslash \Delta,\; \left|\bigcup_{G\in\mathcal{G}}S_{A,B}(G)\right|
\leq \binom{n}{d}.
\]
\item[(d)]  We are using the following choices of weighting functions: 
\linebreak {$\psi_1(N):=\kappa\log N$}, $\psi_2(N):= 1$, $\kappa$ a constant 
(e.g., $\kappa = \frac{1}{2}$).
\item[(e)]  For $(A,B)\notin E(G)\cup E(G')$, we have $S_{A,B}(G')\subseteq S_{A,B}(G)$.
\end{itemize} 
We now incorporate these assumptions to estimate $\mathcal{S}_{\eta,N}(G,G')$.
Assumption (e) implies that the third line in \eqref{eqn:generalScoreDifferenceZetaSep},
namely the sum over $(A,B)\notin E(G)\cup E(G')$, is positive. 
The reason is that, when Assumption (e) is in force,
the maximum of the sparsity boosts over the \textit{larger}
separating set $S_{A,B}(G)$, is at least as large as the maximum
of the sparsity boosts over the possibly smaller subset $S_{A,B}(G')$.  For
the purposes of bounding $\mathcal{S}_{\eta,N}(G,G')$ from below, we can therefore ignore the third line of \eqref{eqn:generalScoreDifferenceZetaSep}.
Now, applying part (b) of Proposition \ref{prop:empiricalLLestimate} to the log-likelihood part
we obtain that, for each $\delta_1>0$, we have, with probability at least $1-\delta_1$, for all $N>N\left(\frac{\zeta}{3},\frac{\delta_1}{2}\right)$, where
this function is as defined as in part (b) of Proposition \ref{prop:empiricalLLestimate},
\begin{equation}\label{eqn:SSkeletonContainedCase}
\begin{aligned}
\mathcal{S}_{\eta,N}(G,G')&\geq  \frac{N\zeta}{3} - \kappa\log N(|G|-n)\\
& \quad - |E(G)|\max_{\stackrel{(A,B)\in G}{S\in S_{A,B}(G')}}\min_{s\in\mathrm{val}(S)}-\ln\left[\beta_{N}^{p^{\eta}}\tau(p(\omega_N, A,B|s))\right].
\end{aligned}
\end{equation}


We stress that the quantities $|G|$ and $|E(G)|$ are in general different: the former is the number of parameters in a probabilistic
model for which $G$ is a $P$-map, exponential in the degree, the latter the cardinality of the edge set of (the skeleton of) $G$.
\nomenclature{$\lvert G\rvert$}{Number of parameters in BN with DAG $G$}
\nomenclature{$\lvert E(G)\rvert$}{Number of edges in the DAG $G$}

Now we would like to apply Corollary \ref{cor:complexityPenaltyUpperBound} to bound each sparsity boost\linebreak ${-\ln\left[\beta_{N}^{p^{\eta}}\tau(p(\omega_N, A,B|s))\right]}$
from above by a universal constant $\ln \Theta^{-1}$, but the problem with applying the proposition directly is there is data fragmentation.
Of the $N$ points in the sample only $N/m$ (in the worst case) are expected to contribute to the test statistic $\tau(p(\omega_N, A,B|s))$.
Lemma \ref{lem:dataFragmentation}, below, quantifies by how much we have to increase $N$ to overcome this problem. 
Before stating and proving that lemma, we introduce some notation to help us keep track
of such things.
\begin{definition}\label{defn:dataFragmentation}
Let $N'\geq N>0$ be positive ingers.  Let $S\subset V$ and $s\in\mathrm{Val}(S)$.
Let $\omega_{N'}|_{S=s}$ denote the subsequence of $\omega_{N'}$ consisting
of those points which take the joint values $s$ in the variable $S$.  
\nomenclature{$\omega_{N'}\lvert_{S=s}$}{Subsequence of $\omega_{N'}$ consisting
of those points which take the joint values $s$ in the variable $S$}
When the number
of such points in the subsequence, $\mathrm{card}(\omega_{N'}|_{S=s})$, equals
$N$, we will sometimes use the notation
\[
Y_N := \omega_{N'}|_{S=s}.
\] 
\nomenclature{$Y_N$}{$\omega_{N'}\lvert_{S=s}$, having length $N\leq N'$.}
Note that this notation emphasizes the \textbf{number} of points $N$ in the subsequence,
at the cost of suppressing the conditioning assignment $s$ from the notation.
\end{definition}
As a result of the comment in the definition, the notation $Y_N$ is used, for the sake
of readability, in those portions of the argument where $s$ does not matter.
\begin{lem}\label{lem:dataFragmentation}  Let $\theta>0$.  
\begin{itemize}
\item[(a)]  For a fixed $S\in S_{A,B}(G)$ and $s\in \mathrm{Val}(S)$,
let 
\[
m=m_p(s):=\left[ p(S=s)\right]^{-1}.
\]
Then,
 \[\mathrm{Pr}_{\omega_{mN}\sim p}\left\{ \mathrm{card}\left(\omega_{mN}|_{S=s}\right) < (1-\theta)N \right\}\leq e^{-Nm\theta^2/3}.\]
\item[(b)]  For a fixed edge $(A,B)\in V^{\times 2}\backslash \Delta$,
in particular for $(A,B)\in G$, let
\[
m=m_p((A,B),G,\mathcal{S}):=
\max_{\stackrel{S\in S_{A,B}(G)}{s\in\mathrm{val}(S)}}
\left[
p(S=s)
\right]^{-1}.
\]
Then,
\[
\mathrm{Pr}_{\omega_{mN}\sim p}\left\{ \min_{\stackrel{S\in S_{A,B}(G)}{s\in\mathrm{val}(S)}}\mathrm{card}\left(\omega_{mN}|_{S=s}\right) < (1-\theta)N \right\}\leq|S_{A,B}(G)| 2^{
\max_{S\in S_{A,B}(G)}|S|} e^{-Nm\theta^2/3}.
\]
\item[(c)]  Recall the notations $S_{A,B}(\mathcal{G})$, $\sigma_{\mathcal{S}}(\mathcal{G})$
and $\Sigma_{\mathcal{S}}(\mathcal{G})$,
and $m(G,\mathcal{G},\mathcal{S})$ from Section \ref{subsec:separatingSets}.
In particular, set
\begin{equation}\label{eqn:nonHatmDefinition}
m=m_p(G,\mathcal{G},\mathcal{S}):=\max_{(A,B)\in G}m_p((A,B),\mathcal{G},\mathcal{S}),
\end{equation}
where
\[
m_p((A,B),\mathcal{G},\mathcal{S}):=\max_{\stackrel{S\in S_{A,B}(\mathcal{G})}{s\in\mathrm{val}(S)}}
\left[
p(S=s)
\right]^{-1}.
\]
Then
\[
\mathrm{Pr}_{\omega_{mN}\sim p}\left\{ \min_{(A,B)\in G}\min_{\stackrel{S\in S_{A,B}(\mathcal{G})}{s\in\mathrm{val}(S)}}\mathrm{card}\left(\omega_{mN}|_{S=s}\right) < 
(1-\theta)N \right\}\leq 
|E(G)|\cdot\Sigma_{\mathcal{S}}(\mathcal{G})\cdot 2^{\sigma(\mathcal{G})} e^{-Nm\theta^2/3}.
\]
\end{itemize}
\end{lem}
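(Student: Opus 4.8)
The plan is to deduce all three parts from the multiplicative Chernoff bound of Lemma~\ref{lem:ChernoffMultiplicative}, the point being that conditioning on $S=s$ retains only an expected $p(S=s)$-fraction of the sample, so one must quantify how inflating the sample size by the factor $m=[p(S=s)]^{-1}$ restores an effective sample of size about $N$. For (a), fix $S$ and $s\in\mathrm{Val}(S)$ and, for $1\le i\le mN$, let $Z_{i}$ be the indicator that the $i$-th point of $\omega_{mN}$ takes the joint value $s$ on the coordinates indexed by $S$. The $Z_{i}$ are i.i.d.\ Bernoulli with parameter $p(S=s)=m^{-1}$, and $\mathrm{card}(\omega_{mN}|_{S=s})=\sum_{i=1}^{mN}Z_{i}$ has mean $mN\cdot m^{-1}=N$. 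The event $\{\mathrm{card}(\omega_{mN}|_{S=s})<(1-\theta)N\}$ is exactly the event that the empirical average $\frac{1}{mN}\sum_{i}Z_{i}$ undershoots $m^{-1}$ by more than the factor $1-\theta$, so the second (lower-tail) inequality of Lemma~\ref{lem:ChernoffMultiplicative}, applied with sample size $mN$ and Bernoulli parameter $m^{-1}$, gives the exponential-in-$N$ bound of (a).

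For (b), observe that $m=m_{p}((A,B),G,\mathcal{S})$ is the maximum of $[p(S=s)]^{-1}$ over $S\in S_{A,B}(G)$ and $s\in\mathrm{Val}(S)$; hence for each such pair the sequence $\omega_{mN}$ contains at least as many points as part (a) would use for that pair, and since enlarging the draw count while holding the target $(1-\theta)N$ fixed only sharpens the lower-tail estimate, part (a) bounds $\mathrm{Pr}\{\mathrm{card}(\omega_{mN}|_{S=s})<(1-\theta)N\}$ by the same exponential for every $(S,s)$. The event in (b) is the union of these events over all pairs; since the variables are binary there are at most $|S_{A,B}(G)|$ sets $S$ and $2^{|S|}\le 2^{\max_{S\in S_{A,B}(G)}|S|}$ assignments $s$, so the union bound yields the claim. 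Part (c) is the same argument one level higher: now $m=m_{p}(G,\mathcal{G},\mathcal{S})$ dominates $[p(S=s)]^{-1}$ over all triples with $(A,B)\in E(G)$, $S\in S_{A,B}(\mathcal{G})$, $s\in\mathrm{Val}(S)$, so part (a) with this uniform $m$ controls each individual event, and the union bound over those triples — at most $|E(G)|$ edges, at most $\Sigma_{\mathcal{S}}(\mathcal{G})$ separating sets per edge, at most $2^{\sigma(\mathcal{G})}$ assignments per set — produces the stated bound.

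There is no genuinely hard step; the content is entirely bookkeeping. The one point that needs care is the direction of the monotonicity in $m$ invoked in (b) and (c): using the global maximum $m$ in the draw count $mN$ makes the expected count $mN\,p(S=s)$ at least $N$, so that $\{\mathrm{card}(\omega_{mN}|_{S=s})<(1-\theta)N\}\subseteq\{\mathrm{card}(\omega_{mN}|_{S=s})<(1-\theta)\cdot(\text{expected count})\}$ and the Chernoff estimate is not weakened; and one must use the binary-variable hypothesis to count $|\mathrm{Val}(S)|=2^{|S|}$ assignments, bounded by $2^{\sigma(\cdot)}$, in the union bounds.
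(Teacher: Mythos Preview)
Your proof is correct and follows essentially the same approach as the paper: apply the lower-tail multiplicative Chernoff bound (Lemma~\ref{lem:ChernoffMultiplicative}, part~2) for (a), and then use the union bound over pairs $(S,s)$, and over triples $((A,B),S,s)$, for (b) and (c) respectively. You have spelled out the monotonicity-in-$m$ argument that the paper leaves implicit, but the method is identical.
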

\begin{proof}  Part (a) is a straightforward application of the Multiplicative Chernoff bounds, second part.  Part (b) follows from the observation
that the value set $\mathrm{Val}(S)$ for $S\in \mathcal{S}_{A,B}(G)$ can  have
at most $2^{\max_{S\in S_{A,B}(G)}|S|}$ joint values, and from the union bound applied to the estimate in Part (a).  Part (c) similarly follows (a) and from the union bound, the fact
that $\sigma_{\mathcal{S}}(\mathcal{G})$ (resp., $\Sigma_{\mathcal{S}}(\mathcal{G})$) provides an upper bound for $\max_{S\in S_{A,B}(G')}|S|$ (resp., $|S_{A,B}(G')|$) where $G'$ ranges over $\mathcal{G}$ and $(A,B)\in E(G)$.
\end{proof}
\begin{corollary}\label{cor:dataFragmentation}
In accordance with Definition \ref{defn:dataFragmentation}, if $N'>m(1-\theta)^{-1}N$,
then with probability at least
\begin{equation}\label{eqn:dataNonSparseProbability}
1-|E(G)|\cdot\Sigma_{\mathcal{S}}(\mathcal{G})\cdot 2^{\sigma(\mathcal{G})} e^{-Nm\theta^2/3},
\end{equation}
 for all $(A,B)\in G$, for all $S\in S_{A,B}(\mathcal{G})$
and for all $s\in \mathrm{Val}(S)$, 
\[
\omega_{N'}|_{S=s} 
\;\text{can be written as}\; Y_N,
\] 
where $Y_N$ is a sequence of observations, all of which take the joint value $s$ at $S\subset V$.
\end{corollary}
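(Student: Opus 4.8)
The plan is to read this off directly from Lemma~\ref{lem:dataFragmentation}(c) by a change of variable in the sample size, followed by a monotonicity observation; no new estimate is needed, so the work is purely bookkeeping. First I would set $M := (1-\theta)^{-1}N$, so that the hypothesis $N' > m(1-\theta)^{-1}N$ becomes simply $N' > mM$. Lemma~\ref{lem:dataFragmentation}(c), applied with $M$ playing the role of the ``$N$'' appearing there and with sample size $mM$, then guarantees that with probability at least
\[
1 - |E(G)|\cdot\Sigma_{\mathcal{S}}(\mathcal{G})\cdot 2^{\sigma(\mathcal{G})}\, e^{-Mm\theta^2/3},
\]
one has $\mathrm{card}(\omega_{mM}|_{S=s}) \geq (1-\theta)M = N$ simultaneously for every $(A,B)\in G$, every $S\in S_{A,B}(\mathcal{G})$, and every $s\in\mathrm{Val}(S)$. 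Here I would remark that it is legitimate to use the single quantity $m = m_p(G,\mathcal{G},\mathcal{S})$ uniformly across all admissible triples, since by its definition as a maximum it dominates each $[p(S=s)]^{-1}$ — this is exactly the point already used inside the proof of Lemma~\ref{lem:dataFragmentation}(c).

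Next I would pass from $mM$ samples to $N'$ samples. Since counts of points satisfying $S=s$ can only increase as the observed sequence is lengthened, viewing $\omega_{N'}$ as an extension of a length-$\lceil mM\rceil$ prefix shows that the event $\{\,\mathrm{card}(\omega_{mM}|_{S=s}) \geq N \text{ for all admissible } (A,B),S,s\,\}$ is contained in $\{\,\mathrm{card}(\omega_{N'}|_{S=s}) \geq N \text{ for all admissible } (A,B),S,s\,\}$, so the latter has at least the same probability. On that event, for each admissible triple the subsequence $\omega_{N'}|_{S=s}$ contains at least $N$ observations all carrying the joint value $s$ on $S$; truncating to the first $N$ of them yields, in the notation of Definition~\ref{defn:dataFragmentation}, a sequence $Y_N$ of observations all taking the joint value $s$ at $S$, which is the assertion. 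Finally, because $(1-\theta)^{-1}>1$ we have $e^{-Mm\theta^2/3} = e^{-(1-\theta)^{-1}Nm\theta^2/3} \leq e^{-Nm\theta^2/3}$, so the probability displayed above is bounded below by the one in the statement.

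The only real care required — and the step I would treat most explicitly rather than the ``main obstacle'' in any substantive sense — is the integrality bookkeeping: $m(1-\theta)^{-1}N$ need not be an integer, and Lemma~\ref{lem:dataFragmentation}(c) is phrased for a sample size of the exact form $m\times(\text{integer})$. Both issues are harmless, since lengthening the sample sequence only increases each count $\mathrm{card}(\omega_{\cdot}|_{S=s})$ and only shrinks the Chernoff exponential, so one may round $M$ up to an integer and $mM$ up to $N'$ without weakening either conclusion. The corollary is thus a repackaging of Lemma~\ref{lem:dataFragmentation} into the ``$N' > m(1-\theta)^{-1}N$ effective-sample-size'' form that the $n$-node argument will invoke when bounding each sparsity-boost term in~\eqref{eqn:SSkeletonContainedCase}.
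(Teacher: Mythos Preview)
Your proof is correct and follows exactly the approach the paper intends: the corollary is stated in the paper without proof, as an immediate repackaging of Lemma~\ref{lem:dataFragmentation}(c), and your change of variable $M=(1-\theta)^{-1}N$ together with the monotonicity/integrality bookkeeping is precisely the routine verification that makes it rigorous. The only remark is that your final inequality $e^{-Mm\theta^2/3}\le e^{-Nm\theta^2/3}$ is what reconciles the exponent in the lemma with the one displayed in the corollary, and you handle it correctly.
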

\nomenclature{$N'$}{Integer larger than $N$, often related to $N$ by $N'=m(1-\theta)^{-1}N$}
\begin{definition}\label{defn:sHat}
Fix $p\in\mathcal{P}$.  For each $(A,B)\in G$, and for each $S\in S_{A,B}(\mathcal{G})$, let
\[
\hat{s}=\hat{s}_p((A,B),S)=\mathrm{argmax}_{s\in \mathrm{Val}(S)}\tau(p(A,B|s)).
\]
\end{definition}
The hypothesis on minimal edge strength in $G$ says precisely that for all 
$(A,B)\in G$ and for all $S\in S_{A,B}(\mathcal{G})$, we have
\[
\tau(p(A,B|\hat{s}))>\epsilon.
\]
Because Corollary \ref{cor:dataFragmentation} is quantified over \textit{all}
$(A,B)\in G$, over \text{all} $S\in S_{A,B}(\mathcal{G})$ and over \textit{all} $s\in\mathrm{Val}(S)$, we have that if $N'>m(1-\theta)^{-1}N$, then with probability \eqref{eqn:dataNonSparseProbability},
\[
\mathrm{card}\left(\omega_{{N'}}|_{S=\hat{s}}\right) > N,
\]
for all $(A,B)\in G$, $S\in S_{A,B}(\mathcal{G})$ .  That is to say 
with probability \eqref{eqn:dataNonSparseProbability}
that as $(A,B)$ varies over $E(G)$, all of the
\[
\omega_{N'}|_{S=\hat{s}_p((A,B),S)}\;\text{can be written, \textit{simultaneously}, in the form}\; Y_N.
\]
Now we are able to bound from above the sum of sparsity boosts.
\begin{lem} \label{lem:wholeSparsityBoostUpperEstimate} Let $N'$, $m$, $\theta$
be related by
\[
N'=m(1-\theta)^{-1}N.
\]  
Let
\begin{equation}\label{eqn:NcondWholeSparsityBoostUpperEstimate}
N'>m(1-\theta)^{-1}\left[
F(\Delta)
\right]^{-1}\log
\frac{24}{1-\Theta}.
\end{equation}
With probability at least
\[
1-|E(G)|\cdot\Sigma_{\mathcal{S}}(\mathcal{G})\cdot 2^{\sigma(\mathcal{G})} e^{-Nm\theta^2/3}
-|E(G)|\Sigma_{\mathcal{S}}(\mathcal{G})\mathcal{F}_N\left(
\Delta-\tilde{\mathcal{F}}^{-1}_N(1-\Theta)
\right),
\]
we have
\begin{equation}\label{eqn:singleSparsityBoost}
\max_{\stackrel{(A,B)\in G}{S\in S_{A,B}(G')}}\min_{s\in\mathrm{val}(S)}-\ln\left[\beta_{N'}^{p^{\eta}}\tau(p(\omega_{N'}, A,B|s))\right]\leq \log \Theta^{-1}.
\end{equation}
\end{lem}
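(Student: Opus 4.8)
The plan is to prove \eqref{eqn:singleSparsityBoost} by applying the dependent-case estimate behind Corollary~\ref{cor:complexityPenaltyUpperBound} separately to each true edge $(A,B)\in G$ and each eligible separating set $S\in S_{A,B}(G')$, with the loss of data caused by conditioning on $S=s$ absorbed by Corollary~\ref{cor:dataFragmentation}, and then to finish with a union bound over the $|E(G)|$ true edges and the at most $\Sigma_{\mathcal{S}}(\mathcal{G})$ separating sets attached to each of them. Set $N$ by $N'=m(1-\theta)^{-1}N$; then \eqref{eqn:NcondWholeSparsityBoostUpperEstimate} says exactly that $N>[F(\Delta)]^{-1}\log\frac{24}{1-\Theta}$, and since $[F(\Delta)]^{-1}=\max(24,[\tilde F(\Delta)]^{-1})$ this already gives $N>24\log\frac{24}{1-\Theta}$ and $N>[\tilde F(\Delta)]^{-1}\log\frac{24}{1-\Theta}$, the latter being equivalent to $\tilde{\mathcal{F}}_N^{-1}(1-\Theta)<\Delta$, so that $t:=\Delta-\tilde{\mathcal{F}}_N^{-1}(1-\Theta)>0$.

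First I would invoke Corollary~\ref{cor:dataFragmentation}: because $N'>m(1-\theta)^{-1}N$, with probability at least $1-|E(G)|\Sigma_{\mathcal{S}}(\mathcal{G})2^{\sigma(\mathcal{G})}e^{-Nm\theta^2/3}$ the subsequence $\omega_{N'}|_{S=s}$ contains at least $N$ points simultaneously for every $(A,B)\in G$, every $S\in S_{A,B}(\mathcal{G})\supseteq S_{A,B}(G')$, and every $s\in\mathrm{Val}(S)$; condition on this event. Then fix $(A,B)\in G$ and $S\in S_{A,B}(G')$ and let $\hat s=\hat s_p((A,B),S)$ be the maximizing assignment of Definition~\ref{defn:sHat}, so the minimal-edge-strength hypothesis gives $\tau(p(A,B|\hat s))\geq\epsilon$. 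On the good fragmentation event $Y:=\omega_{N'}|_{S=\hat s}$ is a sequence of $N''\geq N$ i.i.d.\ draws from $p_1^\epsilon:=p(A,B|\hat s)$ (whose marginals need not be uniform, but the concentration bound \eqref{eqn:LargeDeviationOfMIEstimate}, i.e.\ Lemma~\ref{lem:empiricalEntropyError}, is insensitive to this, exactly as in Proposition~\ref{prop:betaUpperProbableEstimate}), and $\tau(p(\omega_{N'},A,B|\hat s))=\tau(Y)$.

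The core estimate for this pair is then the following. Applying \eqref{eqn:LargeDeviationOfMIEstimate} with deviation $t$ and sample size $N''\geq N$, with probability at least $1-\mathcal{F}_{N''}(t)\geq1-\mathcal{F}_N(t)$ we have $\tau(Y)\geq\epsilon-t=\eta+\tilde{\mathcal{F}}_N^{-1}(1-\Theta)>\eta$. Feeding $\gamma:=\tau(Y)$ into Proposition~\ref{prop:mutualInformationLargeDeviation} (legitimate since $p^\eta\in\mathcal{P}_{2,2}$ and $\gamma>\eta$) and then using $N'\geq N$ and the trivial monotonicity of $\mathcal{F}_\bullet$ in its subscript gives $\beta_{N'}^{p^\eta}(\tau(Y))\geq1-\mathcal{F}_{N'}(\gamma-\eta)\geq1-\mathcal{F}_N(\gamma-\eta)$; and since $\gamma-\eta\geq\tilde{\mathcal{F}}_N^{-1}(1-\Theta)$ while $N\geq24\log\frac{24}{1-\Theta}$, Lemma~\ref{lem:FNDeltaLowerBoundInterpretation} forces $\mathcal{F}_N(\gamma-\eta)\leq1-\Theta$. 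Hence $\beta_{N'}^{p^\eta}(\tau(Y))\geq\Theta$, so $-\ln\beta_{N'}^{p^\eta}(\tau(p(\omega_{N'},A,B|\hat s)))\leq\log\Theta^{-1}$, and since $\hat s$ is one admissible value of $s$, the inner $\min_s$ for this $(A,B),S$ is at most $\log\Theta^{-1}$.

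Finally I would union-bound: the displayed bound fails for a given pair $(A,B)\in G$, $S\in S_{A,B}(G')$ with probability at most $\mathcal{F}_N(t)=\mathcal{F}_N(\Delta-\tilde{\mathcal{F}}_N^{-1}(1-\Theta))$, and there are at most $|E(G)|\cdot\Sigma_{\mathcal{S}}(\mathcal{G})$ such pairs (using $S_{A,B}(G')\subseteq S_{A,B}(\mathcal{G})$), so subtracting also the fragmentation failure probability from $1$ yields exactly the confidence stated in the lemma together with \eqref{eqn:singleSparsityBoost}. The step I expect to be the main obstacle is not any single estimate but the bookkeeping forced by the mismatch between the subscript $N'$ on $\beta$ (the full sample size) and the much smaller fragment size on which $\tau$ is actually computed: this is why we cannot quote Corollary~\ref{cor:complexityPenaltyUpperBound} verbatim and instead re-route the lower bound on $\beta_{N'}^{p^\eta}$ through Proposition~\ref{prop:mutualInformationLargeDeviation}, and also why the passage from $\tilde{\mathcal{F}}_N^{-1}(1-\Theta)$ back to a bound on $\mathcal{F}_N$ must go through Lemma~\ref{lem:FNDeltaLowerBoundInterpretation} rather than a naive comparison of $\mathcal{F}_N$ with $\tilde{\mathcal{F}}_N$.
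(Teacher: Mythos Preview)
Your proof is correct and follows the paper's strategy: invoke Corollary~\ref{cor:dataFragmentation} to guarantee enough data in every conditional stratum, pass to the maximizing assignment $\hat s$ from Definition~\ref{defn:sHat} so that minimal edge strength applies, upper-bound the sparsity boost by $\log\Theta^{-1}$ via the dependent-case concentration, and union-bound over the at most $|E(G)|\cdot\Sigma_{\mathcal{S}}(\mathcal{G})$ pairs $(A,B)\in G$, $S\in S_{A,B}(G')\subseteq S_{A,B}(\mathcal{G})$.

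The one substantive difference is the point you yourself flag: the paper simply writes
\[
-\ln\beta_{N'}^{p^\eta}\tau(p(\omega_{N'},A,B|\hat s))=-\ln\beta_{N}^{p^\eta}\tau(p(Y_N,A,B))
\]
and then cites Corollary~\ref{cor:betaUpperProbableEstimate}, silently replacing the subscript $N'$ on $\beta$ by the fragment size $N$. You instead keep the two sample sizes distinct, bound $\beta_{N'}^{p^\eta}(\gamma)\geq 1-\mathcal{F}_{N'}(\gamma-\eta)$ via Proposition~\ref{prop:mutualInformationLargeDeviation}, use the monotonicity $\mathcal{F}_{N'}\leq\mathcal{F}_N$ to reduce to $N$, and close with Lemma~\ref{lem:FNDeltaLowerBoundInterpretation}. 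This is more careful bookkeeping than the paper's exposition and patches what is at best an unjustified identification there; the resulting confidence bound is identical.
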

\begin{proof}
We prove the bound for arbitrary $(A,B)\in G$, arbitrary $S\in S_{A,B}(\mathcal{G})$,
and for the \text{one value} $\hat{s}\in\mathrm{Val}(S)$, where
$\hat{s}:=\hat{s}((A,B),S)$, at which the maximum
separation $\geq \epsilon$ from independence is achieved. 
This will suffice because of pattern of maxes and mins 
on the left-hand side of \eqref{eqn:singleSparsityBoost}, the quantity to be estimated.  By
the comments between Definition \ref{defn:sHat} and Lemma \ref{lem:wholeSparsityBoostUpperEstimate}, we have for each $(A,B)\in G$,
$S\in S_{A,B}(\mathcal{G})$ and $\hat{s}:=\hat{s}((A,B),S)$, for $N'>m(1-\theta)^{-1}N$,
\[
-\ln\left[\beta_{N'}^{p^{\eta}}\tau(p(\omega_{N'}, A,B|s))\right] = 
-\ln\left[\beta_{N}^{p^{\eta}}\tau(p(Y_N, A,B))\right],
\]
for $Y_N$ a sequence of observations of length $N$, with probability \eqref{eqn:dataNonSparseProbability}.

Now assume that $N$ satisfies \eqref{eqn:NcondWholeSparsityBoostUpperEstimate}.
By Corollary \ref{cor:betaUpperProbableEstimate} and the union bound, 
for fixed $(A,B)\in G$, the sparsity boosts is less than or equal to
\[
-\ln\left[\beta_{N}^{p^{\eta}}\tau(p(Y_N, A,B))\right]
\leq \log \Theta^{-1},
\]
with probability
\[
1
-\Sigma_{\mathcal{S}}(\mathcal{G})\mathcal{F}_N\left(
\Delta-\tilde{\mathcal{F}}^{-1}_N(1-\Theta)\right).
\]
One further use of the union bound completes the proof of the statement over 
all $(A,B)\in E(G)$.
\end{proof}
 
We can now estimate the entire difference of objective functions.
\begin{proposition}\label{eqn:FiniteSampleComplexityMostGeneralnNodeCase}
For $N'=m(1-\theta)^{-1}N$,
\begin{equation}\label{eqn:FirstTwoNPrimeConditions}
N' > \max\left[ m(1-\theta)^{-1}\left[
F(\Delta)
\right]^{-1}\log
\frac{24}{1-\Theta},\,
N_{n}\left(
\frac{\zeta}{3},
\frac{\delta_1}{2}
\right)
\right],
\end{equation}
we have with probability at least 
\begin{equation}\label{eqn:ConfidenceWithNImplicitlyInvolved}
1-\delta_1-
|E(G)|\cdot\Sigma_{\mathcal{S}}(\mathcal{G})\cdot 2^{\sigma(\mathcal{G})} e^{-Nm\theta^2/3}
-|E(G)|\Sigma_{\mathcal{S}}(\mathcal{G})\mathcal{F}_N\left(
\Delta-\tilde{\mathcal{F}}^{-1}_N(1-\Theta)
\right)
\end{equation}
the lower bound
\begin{equation}\label{eqn:SSkeletonContainedCaseAfterApplyingSparsityBoostBound}
S_{\zeta}(G,G',\omega_{N'})=\frac{N'\zeta}{3}-\kappa\log N' \left(
|G|-n
\right)-
|E(G)|\log\Theta^{-1}.
\end{equation}
Thus, if we assume, in addition, that
\begin{equation}\label{eqn:NprimescriptWCondition}
N' > \frac{3\kappa(|G|-n)}{\zeta}\mathcal{W}
\left(
\frac{\zeta\Theta^{\frac{|E(G)|}{\kappa(|G|-n)}}}{3\kappa(|G|-n)}
\right)
\end{equation}
then we have with probability at least \eqref{eqn:ConfidenceWithNImplicitlyInvolved},
\[
S_{\eta}(G,G',\omega_N')\geq 0.
\]
\end{proposition}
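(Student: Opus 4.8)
The plan is to assemble the lower bound \eqref{eqn:SSkeletonContainedCaseAfterApplyingSparsityBoostBound} term by term from the three groups appearing in \eqref{eqn:generalScoreDifferenceZetaSep}, and then to solve a transcendental inequality in $N'$ for the concluding positivity. First I would invoke Assumption (e): it makes the sum over $(A,B)\notin E(G)\cup E(G')$ (the third line of \eqref{eqn:generalScoreDifferenceZetaSep}) nonnegative, so discarding it only weakens the bound. Next, I would apply part (b) of Proposition \ref{prop:empiricalLLestimate} with error parameter $\delta_1$ to the log-likelihood difference, which yields $\mathrm{LL}(\omega_{N'},G)-\mathrm{LL}(\omega_{N'},G')\geq \tfrac{N'\zeta}{3}$ with probability at least $1-\delta_1$, valid once $N'>N_n\!\left(\tfrac{\zeta}{3},\tfrac{\delta_1}{2}\right)$ — this is exactly the second entry in the maximum of \eqref{eqn:FirstTwoNPrimeConditions}. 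For the complexity penalty, since $\psi_1(N')=\kappa\log N'$ and every $G'\in\mathcal{G}^d$ has at least $n$ parameters (the empty network on $n$ nodes being the minimizer), we get $\psi_1(N')(|G'|-|G|)\geq -\kappa\log N'\,(|G|-n)$, deterministically.

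The only substantive step is to bound from above the at most $|E(G)|$ sparsity-boost terms coming from $(A,B)\in G\backslash G'$, each of the form $\max_{S\in S_{A,B}(G')}\min_{s}-\ln\!\left[\beta_{N'}^{p^{\eta}}\tau(p(\omega_{N'},A,B|s))\right]$. Here I would apply Lemma \ref{lem:wholeSparsityBoostUpperEstimate} directly: under the first entry in the maximum of \eqref{eqn:FirstTwoNPrimeConditions}, namely $N'>m(1-\theta)^{-1}[F(\Delta)]^{-1}\log\tfrac{24}{1-\Theta}$, with probability at least $1-|E(G)|\,\Sigma_{\mathcal{S}}(\mathcal{G})\,2^{\sigma(\mathcal{G})}e^{-Nm\theta^2/3}-|E(G)|\,\Sigma_{\mathcal{S}}(\mathcal{G})\,\mathcal{F}_N\!\left(\Delta-\tilde{\mathcal{F}}_N^{-1}(1-\Theta)\right)$, every such boost is at most $\log\Theta^{-1}$, so their sum contributes at least $-|E(G)|\log\Theta^{-1}$ to $\mathcal{S}_{\eta,N}(G,G')$. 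The point that needs care — and which Lemma \ref{lem:wholeSparsityBoostUpperEstimate} together with Corollary \ref{cor:dataFragmentation} and Definition \ref{defn:dataFragmentation} has already absorbed — is the data fragmentation: conditioning on $S=s$ leaves only a $\approx 1/m$ fraction of the $N'$ samples feeding each test statistic, which is precisely why the proposition is stated with $N'=m(1-\theta)^{-1}N$ and why the bad-event probabilities in \eqref{eqn:ConfidenceWithNImplicitlyInvolved} are indexed by $N$ rather than $N'$. A union bound over the (at most three) failure events, combined with the three one-sided estimates just described, gives \eqref{eqn:SSkeletonContainedCaseAfterApplyingSparsityBoostBound} with the advertised confidence.

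For the concluding claim, it suffices that the right-hand side of \eqref{eqn:SSkeletonContainedCaseAfterApplyingSparsityBoostBound} be nonnegative, i.e. $\tfrac{N'\zeta}{3}\geq \kappa(|G|-n)\log N'+|E(G)|\log\Theta^{-1}$. Dividing through by $\kappa(|G|-n)$ and rearranging casts this as an inequality of the form $aN'-\log N'\geq b$ with $a,b>0$, whose threshold is expressed through the branch $W_{-1}$ of the Lambert $W$ function — exactly the inversion carried out in the proof of Proposition \ref{prop:finiteSampleComplexityDependentCase}. Performing that inversion reproduces the condition \eqref{eqn:NprimescriptWCondition}. I expect this last step to be entirely routine given the two-node template; the genuine obstacle in the whole argument is bookkeeping — making sure the implicit occurrences of $N$ inside \eqref{eqn:ConfidenceWithNImplicitlyInvolved} are the right ones and are controlled by the explicit lower bounds imposed on $N'=m(1-\theta)^{-1}N$ in \eqref{eqn:FirstTwoNPrimeConditions} and \eqref{eqn:NprimescriptWCondition} — rather than any new concentration estimate.
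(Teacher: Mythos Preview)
Your proposal is correct and follows essentially the same route as the paper's proof: the paper simply says the first statement follows from applying Lemma \ref{lem:wholeSparsityBoostUpperEstimate} to the already-packaged inequality \eqref{eqn:SSkeletonContainedCase} (which has already absorbed Assumption (e), the complexity-penalty bound $|G'|\geq n$, and Proposition \ref{prop:empiricalLLestimate}(b)), and that the second statement follows by solving \eqref{eqn:SSkeletonContainedCaseAfterApplyingSparsityBoostBound} for $N'$. You have unpacked these steps explicitly and correctly, including the $N$ versus $N'$ bookkeeping and the Lambert-$W$ inversion.
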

\begin{proof}
The first statement follows from Lemma \ref{lem:wholeSparsityBoostUpperEstimate}, applied
to \eqref{eqn:SSkeletonContainedCase}.  The second statement
follows from solving \eqref{eqn:SSkeletonContainedCaseAfterApplyingSparsityBoostBound}
for $N'$.
\end{proof}
Proposition \ref{eqn:FiniteSampleComplexityMostGeneralnNodeCase}
gives the most general form of the finite sample complexity.  The only
thing that remains is to explain how to choose the number of samples
$N'$ so that \eqref{eqn:ConfidenceWithNImplicitlyInvolved} is at least
$1-\delta$, for $\delta>0$ an arbitrary error probability set by the experimenter.
The following gives one way (of many possible ways) of dealing with the most
complex term in the error probability.
\begin{lem}\label{lem:mostComplexTermInErrorProbability}
Let $\Gamma,\,\Delta,\,\delta'>0$.  Suppose that $N$ is an integer satisfying
\begin{equation}\label{eqn:NErrorProbabilityLastCondition}
N>
\max\left(
\left[
\tilde{F}\left(
\frac{\Delta}{2}
\right)
\right]^{-1}
\log \frac{24}{\Gamma},\,
\left[
F\left(
\frac{\Delta}{2}
\right)
\right]^{-1}
\log \frac{24}{\delta'}
\right).
\end{equation}
Then $\mathcal{F}_N^{-1}(\Gamma)<\frac{\Delta}{2}$, and
\[
\mathcal{F}_N(\Delta-\tilde{\mathcal{F}}_N^{-1}(\Gamma))<\delta'
\]
\end{lem}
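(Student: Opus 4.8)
The plan is to prove the two conclusions in turn, each of which reduces, via the monotonicity facts already on record, to one of the two explicit lower bounds on $N$ in \eqref{eqn:NErrorProbabilityLastCondition}. The ingredients I would use are: $\tilde{\mathcal{F}}_N$ is decreasing and $\mathcal{F}_N(\cdot)$ is non-increasing in its argument (both from Lemma \ref{eqn:FunctionsMonotonicity}, using $F=\min(\tfrac{1}{24},\tilde F)$ and the fact that $\tilde F$ is increasing), that $\tilde{\mathcal{F}}_N^{-1}$ is decreasing (Lemma \ref{lem:functionalInversesDecreasing}), and the closed forms $\tilde{\mathcal{F}}_N(\Delta)=24\exp(-N\tilde F(\Delta))$ from \eqref{eqn:scriptFtildedefn} and $\mathcal{F}_N(\Delta)=24\exp(-NF(\Delta))$.

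First I would show $\tilde{\mathcal{F}}_N^{-1}(\Gamma)<\tfrac{\Delta}{2}$. Since $\tilde{\mathcal{F}}_N^{-1}$ is decreasing, applying $\tilde{\mathcal{F}}_N$ to both sides (which reverses the inequality) shows this is equivalent to $\tilde{\mathcal{F}}_N(\tfrac{\Delta}{2})<\Gamma$. Unwinding the definition, taking logarithms, and solving for $N$ (using $\tilde F(\tfrac{\Delta}{2})>0$) shows that $\tilde{\mathcal{F}}_N(\tfrac{\Delta}{2})<\Gamma$ is in turn equivalent to $N>[\tilde F(\tfrac{\Delta}{2})]^{-1}\log\tfrac{24}{\Gamma}$ — this is the same computation as in Lemma \ref{lem:scriptFConditionsIntoExplicitNConditions}, with $\tilde{\mathcal{F}}_N,\tilde F$ in place of $\mathcal{F}_N,F$. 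That bound is exactly the first term in the maximum in \eqref{eqn:NErrorProbabilityLastCondition}, so it holds by hypothesis.

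Next, having established $\tilde{\mathcal{F}}_N^{-1}(\Gamma)<\tfrac{\Delta}{2}$, I would observe that therefore $\Delta-\tilde{\mathcal{F}}_N^{-1}(\Gamma)>\Delta-\tfrac{\Delta}{2}=\tfrac{\Delta}{2}>0$, so the argument of $\mathcal{F}_N$ in the second conclusion is a positive number exceeding $\tfrac{\Delta}{2}$. Because $\mathcal{F}_N$ is non-increasing, $\mathcal{F}_N\bigl(\Delta-\tilde{\mathcal{F}}_N^{-1}(\Gamma)\bigr)\le \mathcal{F}_N(\tfrac{\Delta}{2})$, so it suffices to show $\mathcal{F}_N(\tfrac{\Delta}{2})<\delta'$. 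By Lemma \ref{lem:scriptFConditionsIntoExplicitNConditions} applied with $A=\tfrac{\Delta}{2}$ and $B=\delta'$, this is equivalent to $N>[F(\tfrac{\Delta}{2})]^{-1}\log\tfrac{24}{\delta'}$, the second term in the maximum in \eqref{eqn:NErrorProbabilityLastCondition}, which again holds by hypothesis.

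There is no genuine obstacle here: the whole argument is bookkeeping with the monotonicity lemmas and the explicit formulas for $\mathcal{F}_N$, $\tilde{\mathcal{F}}_N$, and $\tilde{\mathcal{F}}_N^{-1}$. The one point needing care — and the only place the two halves of the hypothesis interact — is that the positivity $\Delta-\tilde{\mathcal{F}}_N^{-1}(\Gamma)>0$ required to invoke monotonicity of $\mathcal{F}_N$ in the second step is precisely what the first step delivers, so the two steps must be performed in this order.
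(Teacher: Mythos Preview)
Your proposal is correct and follows essentially the same route as the paper: first convert the bound $N>[\tilde F(\tfrac{\Delta}{2})]^{-1}\log\tfrac{24}{\Gamma}$ into $\tilde{\mathcal{F}}_N^{-1}(\Gamma)<\tfrac{\Delta}{2}$ via the monotonicity of $\tilde{\mathcal{F}}_N$ (the paper phrases this through $\tilde{\mathcal{G}}_{\Delta/2}^{-1}$ and Lemma~\ref{lem:functionalInversesDecreasing}, you through Lemma~\ref{lem:scriptFConditionsIntoExplicitNConditions}, but the computation is identical), then use that $\Delta-\tilde{\mathcal{F}}_N^{-1}(\Gamma)>\tfrac{\Delta}{2}$ together with the non-increasing behavior of $\mathcal{F}_N$ to reduce the second claim to $\mathcal{F}_N(\tfrac{\Delta}{2})<\delta'$, which is the second half of the hypothesis. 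Your observation that $\mathcal{F}_N$ is only non-increasing (not strictly decreasing, since $F$ is eventually constant) is a nice point of care, and your remark that the two steps must be done in order is exactly right.
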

\begin{proof}
By Lemma \ref{lem:functionalInversesDecreasing}, 
\[
N>(\log 24 - \log\Gamma)\cdot \left[
\tilde{F}\left(
\frac{\Delta}{2}
\right)
\right]^{-1}
\]
is equivalent to $N>\tilde{\mathcal{G}}_{\frac{\Delta}{2}}^{-1}(\Gamma)$,
which is equivalent, by the definition of $\mathcal{G}$ to \linebreak ${\tilde{\mathcal{F}}_N^{-1}(\Gamma)<
\frac{\Delta}{2}}$.  
Then the quantity inside $\mathcal{F}_N$ is at least $\frac{\Delta}{2}$.  Further,
by the same reasoning, $\mathcal{F}_N\left( \frac{\Delta}{2} \right)<\delta'$
is equivalent to
\[
N>(\log 24-\log\delta' )\left[F\left(
\frac{\Delta}{2}
\right)
\right]^{-1}.
\]
So, if \eqref{eqn:NcondWholeSparsityBoostUpperEstimate} is satisfied then,
because $\tilde{\mathcal{F}}_N$ and $\mathcal{F}_N$ are decreasing
\[
\mathcal{F}_N(\Delta-\tilde{\mathcal{F}}_N^{-1}(\Gamma))\leq
\mathcal{F}_N\left(
\frac{\Delta}{2}
\right)<\delta'.
\]
\end{proof}
\begin{lem} \label{eqn:zetaDistantLastLemma} Let $\delta>0$ be given.  Set $\delta_1=\frac{\delta}{3}$.  Also, suppose that $N'$
satisfies \eqref{eqn:FirstTwoNPrimeConditions}, so that in particular
\[
N'>N_{n}\left(\frac{\zeta}{3},\frac{\delta}{6}  \right).
\]
Also suppose that $N$ is an integer and such that $N,N'$, satisfy $N'=Nm(1-\theta)^{-1}$,
and such that
\[
N>\frac{3}{m\theta^2}\log\frac{3|E(G)|\Sigma_{\mathcal{S}}(\mathcal{G})2^{\sigma(\mathcal{G})}}{\delta}.
\]
Finally, suppose that $N$ is greater than the quantity on the right side of \eqref{eqn:NErrorProbabilityLastCondition} with $\Gamma$ set at $1-\Theta$
and $\delta'$ set at $\frac{\delta}{3 |E(G)|\Sigma_{\mathcal{S}}(\mathcal{G})  }$, that is
\[
N>
\max\left(
\left[
\tilde{F}\left(
\frac{\Delta}{2}
\right)
\right]^{-1}
\log \frac{24}{1-\Theta},\,
\left[
F\left(
\frac{\Delta}{2}
\right)
\right]^{-1}
\log \frac{72 |E(G)|\Sigma_{\mathcal{S}}(\mathcal{G})  }{\delta}
\right).
\]
  Then the quantity in \eqref{eqn:ConfidenceWithNImplicitlyInvolved} given as a function of $N'$
(and $N$) is at least $1-\delta$.
\end{lem}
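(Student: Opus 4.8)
The plan is to decompose the error probability in \eqref{eqn:ConfidenceWithNImplicitlyInvolved} into its three ``bad event'' terms and bound each by $\delta/3$, using the hypothesis $\delta_1 = \delta/3$ to dispose of the first term immediately. The remaining two terms are a data-fragmentation term, of the form $|E(G)|\cdot\Sigma_{\mathcal{S}}(\mathcal{G})\cdot 2^{\sigma(\mathcal{G})} e^{-Nm\theta^2/3}$, and a ``small $\beta$-value'' term of the form $|E(G)|\Sigma_{\mathcal{S}}(\mathcal{G})\mathcal{F}_N(\Delta-\tilde{\mathcal{F}}_N^{-1}(1-\Theta))$. First I would handle the fragmentation term: requiring it to be at most $\delta/3$ is, after taking logarithms, exactly the stated condition $N>\frac{3}{m\theta^2}\log\frac{3|E(G)|\Sigma_{\mathcal{S}}(\mathcal{G})2^{\sigma(\mathcal{G})}}{\delta}$, so that step is a one-line computation.

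Next I would handle the $\mathcal{F}_N$ term. The natural tool is Lemma \ref{lem:mostComplexTermInErrorProbability}, applied with $\Gamma := 1-\Theta$ and $\delta' := \frac{\delta}{3|E(G)|\Sigma_{\mathcal{S}}(\mathcal{G})}$, since then the conclusion $\mathcal{F}_N(\Delta-\tilde{\mathcal{F}}_N^{-1}(1-\Theta))<\delta'$ multiplied by the prefactor $|E(G)|\Sigma_{\mathcal{S}}(\mathcal{G})$ gives exactly $\delta/3$. Substituting these choices of $\Gamma$ and $\delta'$ into \eqref{eqn:NErrorProbabilityLastCondition} yields precisely the displayed lower bound
\[
N>\max\left(\left[\tilde{F}\left(\tfrac{\Delta}{2}\right)\right]^{-1}\log\tfrac{24}{1-\Theta},\,\left[F\left(\tfrac{\Delta}{2}\right)\right]^{-1}\log\tfrac{72|E(G)|\Sigma_{\mathcal{S}}(\mathcal{G})}{\delta}\right),
\]
since $\frac{24}{\delta'} = \frac{72|E(G)|\Sigma_{\mathcal{S}}(\mathcal{G})}{\delta}$. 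Finally, I would note that the hypothesis $N'>N_n\left(\frac{\zeta}{3},\frac{\delta}{6}\right)$ is just the second half of \eqref{eqn:FirstTwoNPrimeConditions} after setting $\delta_1=\delta/3$ (so $\delta_1/2 = \delta/6$), confirming the log-likelihood event fails with probability at most $\delta_1 = \delta/3$ by Proposition \ref{prop:empiricalLLestimate}(b); this accounts for the ``$1-\delta_1$'' piece of \eqref{eqn:ConfidenceWithNImplicitlyInvolved}.

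Putting the three bounds together by the union bound gives that \eqref{eqn:ConfidenceWithNImplicitlyInvolved} is at least $1-\delta/3-\delta/3-\delta/3 = 1-\delta$, which is the claim. I do not expect any real obstacle here: the lemma is essentially bookkeeping, matching each stated condition on $N$ to one of the three summands in the error probability and invoking the already-established Lemmas \ref{lem:mostComplexTermInErrorProbability} and \ref{lem:dataFragmentation} (via Corollary \ref{cor:dataFragmentation}). The only point requiring a little care is keeping straight the relation $N' = Nm(1-\theta)^{-1}$ so that the conditions phrased in terms of $N'$ (the fragmentation-adjusted sample size) and those phrased in terms of $N$ (the effective per-conditioning-cell sample size) are not conflated; but since the statement already supplies both forms explicitly, this is merely a matter of citing the right inequality in the right place.
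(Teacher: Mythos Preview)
Your proposal is correct and follows exactly the approach the paper intends: the paper states this lemma without an explicit proof block, treating it as straightforward bookkeeping that matches each of the three subtracted terms in \eqref{eqn:ConfidenceWithNImplicitlyInvolved} to $\delta/3$ via the preceding Lemma \ref{lem:mostComplexTermInErrorProbability} and the data-fragmentation estimate. Your decomposition and the substitutions $\Gamma=1-\Theta$, $\delta'=\frac{\delta}{3|E(G)|\Sigma_{\mathcal{S}}(\mathcal{G})}$ are precisely what is needed.
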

\textbf{This completes the proof of the finite sample complexity result in the case when the competing network is $\zeta$-distant from the generating network.}

\subsection{Competing network is not Contained in Generating Network}
Now, assume that $\mathrm{Skel}(G')\not\subseteq\mathrm{Skel}(G)$.
In this case, there are not only terms (with negative sign) corresponding to edges \mbox{$(A,B)\in G \backslash G'$} but also terms with positive sign
corresponding to edges \mbox{$(A,B)\in G'\backslash G$}:
\begin{multline*}
\mathcal{S}_{\eta,N}(G,G')=\mathrm{LL}(\omega_{N},G)-\mathrm{LL}(\omega_{N},G')+\psi_{1}(N)\left(|G'|-|G|\right)\\
-\psi_{2}(N)\left[\sum_{(A,B)\in G \backslash G'}\max_{S\in S_{A,B}(G')}\min_{s\in\mathrm{val}(S)}-\ln\left[\beta_{N}^{p^{\eta}}\tau(p(\omega_N, A,B|s))\right]\right]\\
+\psi_{2}(N)\left[\sum_{(A,B)\in G' \backslash G}\max_{S\in S_{A,B}(G)}\min_{s\in\mathrm{val}(S)}-\ln\left[\beta_{N}^{p^{\eta}}\tau(p(\omega_N, A,B|s))\right]\right]\\
+\psi_{2}(N)\left[\sum_{(A,B)\not\in E(G)\cup E(G')}\left(\max_{S\in S_{A,B}(G)} - \max_{S\in S_{A,B}(G')}\right)\left(\min_{s\in\mathrm{val}(S)}-\ln\left[\beta_{N}^{p^{\eta}}\tau(p(\omega_N, A,B|s))\right]\right)\right]
\end{multline*}
The strategy will be different because we will prove (probable) asymptotically linear growth for the third line.  As a result,
we will need only to \textit{bound} the first two lines, rather than obtain a positivity result for them.  In particular,
it will not be necessary to use part (b) of Proposition \ref{prop:empiricalLLestimate}, and we will instead use part (a),
with no assumption that $G'$ is $\zeta$-separated from $G$ in KL-divergence. 

We continue to assume that conditions (a)--(e) from Section \ref{subsec:competingNetworkSparser} are still in
force. 

We can use the same result, Lemma \ref{lem:wholeSparsityBoostUpperEstimate},
as in Section \ref{subsec:competingNetworkSparser}
to bound the \textit{negated} sum of sparsity boosts, because this is exactly the same
sum as appeared in \linebreak $\mathcal{S}_{\eta}(G,G',\omega_N)$ for the case of $\mathrm{Skel}(G')
\subseteq \mathrm{Skel}(G)$.
Thus, by straightforward
applications of Proposition \ref{prop:empiricalLLestimate}(a)
and Lemma \ref{lem:wholeSparsityBoostUpperEstimate}, 
we get the following counterpart to Proposition \ref{eqn:FiniteSampleComplexityMostGeneralnNodeCase}.
\begin{proposition}\label{prop:InitialEstimateScoreDifferenceNotContainedCase}
Let $\epsilon, \delta_1>0$.  Let $\Theta\in(0,1)$.  Let $N'$ be a positive integer satisfying
\[
N' > \max\left[m(1-\theta)^{-1}\left[
F(\Delta)
\right]^{-1}\log\frac{24}{1-\Theta},\,
N_{n}\left(
\frac{\epsilon}{2},
\frac{\delta_1}{2}
\right)
\right],
\]
where $N_{n}$ is the function of Proposition \ref{prop:LLestimate}.  We have with probability
at least \eqref{eqn:ConfidenceWithNImplicitlyInvolved} that
\begin{multline}\label{eqn:skeletonNotContainedCase}
S_{\eta}(G,G',\omega_{N'})\geq -N'\epsilon-\kappa \log N'(|G|-n)
-|E(G)|\log\Theta^{-1}+\\
\psi_2(N')
\left[\sum_{(A,B)\in G' \backslash G}\max_{S\in S_{A,B}(G)}\min_{s\in\mathrm{val}(S)}-\ln\left[\beta_{N'}^{p^{\eta}}\tau(p(\omega_N', A,B|s))\right]\right].
\end{multline}
\end{proposition}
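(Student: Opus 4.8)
The plan is to start from the four-line decomposition of $\mathcal{S}_{\eta,N}(G,G')$ displayed immediately above, specialized to $N=N'$, and to bound its pieces separately while leaving the sum over $(A,B)\in G'\backslash G$ untouched, since that is the term whose eventual asymptotically linear growth will be exploited afterward. First I would dispose of the last line, the sum over $(A,B)\notin E(G)\cup E(G')$: by Assumption (e) we have $S_{A,B}(G')\subseteq S_{A,B}(G)$, so the inner maximum over $S\in S_{A,B}(G)$ of the nonnegative quantity $\min_{s}-\ln[\beta_{N'}^{p^{\eta}}\tau(p(\omega_{N'},A,B|s))]$ dominates the corresponding maximum over the smaller set $S_{A,B}(G')$; hence that whole line is $\geq 0$ and can be discarded. (Nonnegativity here is just $\beta_{N'}^{p^{\eta}}(\gamma)\in(0,1]$.)

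Next I would treat the first line. For the log-likelihood difference I would invoke Proposition \ref{prop:empiricalLLestimate}(a) with the given $\epsilon$ and $\delta_1$: once $N'>N_n(\epsilon/2,\delta_1/2)$ we have $\mathrm{LL}(\omega_{N'},G)-\mathrm{LL}(\omega_{N'},G')>-N'\epsilon$ with probability at least $1-\delta_1$. Note we do \emph{not} assume $(G',p_{G',P})$ lies outside the $\zeta$-ball around $(G,P)$, which is exactly why only the weak one-sided bound $-N'\epsilon$ is available here rather than the positive margin of part (b). The complexity-penalty term is bounded below deterministically: $\psi_1(N')(|G'|-|G|)=\kappa\log N'\,(|G'|-|G|)\geq-\kappa\log N'\,(|G|-n)$, using $|G'|\geq n$ for any Bayesian network on $n$ binary nodes together with $\log N'>0$.

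The only substantive step is the second line, the negated sum of sparsity boosts over $(A,B)\in G\backslash G'$. This is precisely the quantity governed by Lemma \ref{lem:wholeSparsityBoostUpperEstimate}: writing $N'=m(1-\theta)^{-1}N$ and using the first hypothesis $N'>m(1-\theta)^{-1}[F(\Delta)]^{-1}\log\frac{24}{1-\Theta}$, i.e. condition \eqref{eqn:NcondWholeSparsityBoostUpperEstimate}, that lemma provides the uniform bound $\max_{(A,B)\in G,\,S\in S_{A,B}(G')}\min_{s}-\ln[\beta_{N'}^{p^{\eta}}\tau(p(\omega_{N'},A,B|s))]\leq\log\Theta^{-1}$ with probability at least the quantity subtracted from $1-\delta_1$ in \eqref{eqn:ConfidenceWithNImplicitlyInvolved} (the data-fragmentation failure probability plus the $\mathcal{F}_N$ term). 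Since $G\backslash G'$ contains at most $|E(G)|$ edges and each summand is nonnegative, the second line is $\geq-|E(G)|\log\Theta^{-1}$. Assembling the lower bounds for lines one and two, dropping line four, keeping line three as $\psi_2(N')$ times the sum over $(A,B)\in G'\backslash G$, and taking a union bound over the three failure events — the log-likelihood estimate ($\leq\delta_1$), the fragmentation event, and the $\beta$-value event — yields the probability of \eqref{eqn:ConfidenceWithNImplicitlyInvolved} and the asserted inequality \eqref{eqn:skeletonNotContainedCase}. I expect the main bookkeeping hazard, rather than any genuine difficulty, to be keeping $N$ and $N'$ distinct: the log-likelihood estimate is applied to the full sample of size $N'$, while the sparsity-boost and fragmentation estimates operate at the reduced size $N=(1-\theta)m^{-1}N'$ of the conditioned subsequences, and the two conditions on $N'$ in the hypothesis are exactly what reconcile these.
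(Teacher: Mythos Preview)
Your proposal is correct and follows exactly the approach the paper indicates: the paper's entire argument is the single sentence ``by straightforward applications of Proposition~\ref{prop:empiricalLLestimate}(a) and Lemma~\ref{lem:wholeSparsityBoostUpperEstimate},'' and you have written out precisely those details, including the disposal of the fourth line via Assumption~(e), the deterministic bound $|G'|\geq n$ on the complexity penalty, and the union bound yielding \eqref{eqn:ConfidenceWithNImplicitlyInvolved}. Your remark about the $N$ versus $N'$ bookkeeping is apt and matches how the paper handles it.
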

We now begin to study, and bound from below, 
the second line in this lower bound \eqref{eqn:skeletonNotContainedCase} for
$S_{\eta}(G,G',\omega_{N'})$.  For $(A,B)\in V^{\times 2}\backslash \Delta$,
and $S\in S_{A,B}(\mathcal{G})$, recall the notation
\[
m_p(S)=\max_{s\in\mathrm{Val}(S)}m_p(s)=\max_{s\in\mathrm{Val}(S)}\left[p(S=s)\right]^{-1}.
\]
\begin{definition}\label{def:HatSDefn}
Let $(A,B)\in G' \backslash G$.  Because $(A,B)\not\in E(G)$ and $\mathcal{S}$
is a separating collection for $G$ in $\mathcal{G}$, by Definition \ref{defn:separatingSet},
there are some (at least one and possibly more than one) $S\in S_{A,B}(G)$
with the property that  $A\ci B|S$.  Define
\begin{equation}\label{eqn:HatSDefn}
\mbox{\boldmath $\hat{S}:=\hat{S}_p((A,B),G)$\unboldmath}\in S_{A,B}(G)\;\text{such that}\; 
A\ci B|\hat{S}\;\text{and}\;m_p(\hat{S})\;\text{minimal}.
\end{equation}
\end{definition}
\nomenclature{$\hat{S}:=\hat{S}_p((A,B),G)$}{Element of $S_{A,B}(G)$ 
such that $A\ci B\lvert \hat{S}\;\text{and}\;m_p(\hat{S})\;\text{minimal}$}
Note that there are potentially several $\hat{S}\in S_{A,B}(G)$ such that
$A\ci B|\hat{S}$, and the $m_p(\hat{S})$ are all equal
and minimal for $S\in S_{A,B}(G)$ with the property that ${A\ci B| S}$.
But because there are only at most
$|S_{A,B}(G)|\leq \Sigma_{\mathcal{S}}(G)$ such $S$, there are only
finitely many choices of $\hat{S}$.   In order to
make the notation concise but technically correct, we assume a fixed but arbitrary choice of 
$\hat{S}$ for each ${(A,B)\in G'\backslash G}$
which satisfies \eqref{eqn:HatSDefn}: the choice of $\hat{S}$ has no effect
on the argument, so long as one choice is fixed throughout the argument.

Since our purpose is to estimate \textit{from below} the maximum of the sparsity boosts over
$S\in S_{A,B}(G)$, we may replace the entire maximum over $S_{A,B}(G)$
in \eqref{eqn:skeletonNotContainedCase} with the expressions
being maximized evaluated at any particular $S$, for example, at $S=\hat{S}$.  

\begin{proposition}\label{prop:scriptLSumEstimate}  Let $\mathcal{L}$ be a subset of $G'\backslash G$
of cardinality $L$.  Let
\begin{equation}\label{eqn:maxOfTwoMs}
\hat{m}:=\hat{m}_p(G,\mathcal{S}):=\max_{(A,B)\in V^{\times 2}\backslash G} 
m_p(\hat{S}_p((A,B),G)).
\end{equation}
\nomenclature{$\hat{m}$}{$\hat{m}_p(G,\mathcal{S}):=\max_{(A,B)\in V^{\times 2}\backslash G} 
m_p(\hat{S}_p((A,B),G))$}%
Denote $\hat{m}_p(G,\mathcal{S})$ more concisely by $\hat{m}$. 
Let $N'$ and $N$ be positive integers such that $N'>(1-\theta)^{-1}\hat{m}N$.
Let $\mu\in(0,1)$.  Let $\Gamma$ satisfy $\Gamma\in (0,\Gamma^{\max}(N,\mu,\eta))$.
With probability at least 
\[
1-\Sigma_{\mathcal{S}}(G)L2^{\sigma(G)}e^{-N\hat{m}\theta^2/3} -  \sigma_{\mathcal{S}}(G)L\mathcal{F}_N(\eta(1-\mu)),
\]
we have for all $(A,B)\in\mathcal{L}$, with $\hat{S}=\hat{S}((A,B),G)$ defined as above
\[
\min_{s\in \mathrm{Val}(\hat{S})}-\ln\left[
\beta_{N'}^{p^\eta}\left(
\tau(p_{\omega_{N'}},A,B|s)\right)\right]
>N\Gamma.
\]
Consequently,
\[
\left[\sum_{(A,B)\in G' \backslash G}\max_{S\in S_{A,B}(G)}\min_{s\in\mathrm{val}(S)}-\ln\left[\beta_{N'}^{p^{\eta}}\tau(p(\omega_{N'}, A,B|s))\right]\right]>LN\Gamma.
\]
\end{proposition}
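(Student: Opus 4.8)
The plan is to reduce the claimed linear-in-$N$ lower bound on each individual sparsity boost to Corollary~\ref{cor:sparsityBoostLowerBoundIndependentNetworkVer2}, which already supplies exactly this growth whenever the pair of variables is sampled from a product distribution. The two points that require care are the data fragmentation caused by conditioning on an assignment $\hat S = s$, and the union bounds over the edges of $\mathcal L$ and over the assignments $s\in\mathrm{Val}(\hat S)$.

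First I would fix an edge $(A,B)\in\mathcal L\subseteq G'\backslash G$ and its associated minimal certificate $\hat S = \hat S_p((A,B),G)\in S_{A,B}(G)$ supplied by Definition~\ref{def:HatSDefn}. Since $A\ci B\mid\hat S$, for every $s\in\mathrm{Val}(\hat S)$ the conditional joint distribution $p(A,B\mid s)$ is a product distribution $p_0\in\mathcal P_0$ (with, in general, non-uniform marginals), and across the sample points with $\hat S = s$ the $(A,B)$-coordinates are i.i.d.\ draws from this $p_0$. Also $m_p(\hat S)\le \hat m$ by the definition \eqref{eqn:maxOfTwoMs} of $\hat m$; this is precisely why the inflation factor in the statement involves $\hat m$ rather than the possibly much larger $m_p((A,B),\mathcal G,\mathcal S)$.

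Next I would apply the data-fragmentation estimate, Lemma~\ref{lem:dataFragmentation} in the form of Corollary~\ref{cor:dataFragmentation}, specialized to the family $\{\hat S_p((A,B),G):(A,B)\in\mathcal L\}$ and with $\hat m$ in place of $m$: because $N' > (1-\theta)^{-1}\hat m N$, with probability at least $1-\Sigma_{\mathcal S}(G)\,L\,2^{\sigma(G)}e^{-N\hat m\theta^2/3}$ the subsequence $\omega_{N'}|_{\hat S=s}$ contains at least $N$ points, \emph{simultaneously} for all $(A,B)\in\mathcal L$ and all $s\in\mathrm{Val}(\hat S)$; on this event $\omega_{N'}|_{\hat S=s}$ may be written as $Y_N$, a run of $N$ i.i.d.\ draws from $p_0 = p(A,B\mid s)\in\mathcal P_0$, with $p(\omega_{N'},A,B\mid s) = p(Y_N,A,B)$. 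Since $\Gamma^{\max}(N,\mu,\eta)$ depends only on $N,\mu,\eta$ and not on the marginals of $p_0$, Corollary~\ref{cor:sparsityBoostLowerBoundIndependentNetworkVer2} applied to $p_0$ gives $\mathrm{Pr}_{Y_N\sim p_0}\{-\ln\beta_N^{p^\eta}(\tau(p(Y_N,A,B)))\ge N\Gamma\}\ge 1-\mathcal F_N(\eta(1-\mu))$ for each $(A,B)\in\mathcal L$ and $s\in\mathrm{Val}(\hat S)$. Union-bounding these independence-test events over the at most $L$ edges and their assignments, and intersecting with the fragmentation-good event, yields the confidence $1-\Sigma_{\mathcal S}(G)L2^{\sigma(G)}e^{-N\hat m\theta^2/3}-\sigma_{\mathcal S}(G)L\mathcal F_N(\eta(1-\mu))$ claimed in the proposition, on which $\min_{s\in\mathrm{Val}(\hat S)}-\ln[\beta_{N'}^{p^\eta}\tau(p(\omega_{N'},A,B\mid s))]\ge N\Gamma$; the strict inequality $>N\Gamma$ is then obtained by first picking $\Gamma'\in(\Gamma,\Gamma^{\max}(N,\mu,\eta))$ and running the argument with $\Gamma'$.

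Finally, the ``Consequently'' clause is purely arithmetic: for each $(A,B)\in\mathcal L$ the certificate $\hat S$ is one element of $S_{A,B}(G)$, so $\max_{S\in S_{A,B}(G)}\min_{s\in\mathrm{val}(S)}-\ln[\beta_{N'}^{p^\eta}\tau(p(\omega_{N'},A,B\mid s))]$ is at least its value at $\hat S$, hence $>N\Gamma$; and since every sparsity-boost summand $-\ln\beta\ge 0$ (because $\beta\le 1$), the sum over all of $G'\backslash G$ dominates the sum over $\mathcal L$, which exceeds $LN\Gamma$. The only genuinely delicate part is the fragmentation step: one must check that a single inflation factor $(1-\theta)^{-1}\hat m$ works uniformly over all the $\hat S$'s and all their assignments, and that the fragmentation-good event and the $L$ (resp.\ $L\cdot 2^{\sigma(G)}$) independence-test events are controlled on one and the same sample $\omega_{N'}$, which forces a particular order for the union bounds.
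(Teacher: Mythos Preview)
Your proposal is correct and follows essentially the same approach as the paper: the paper separates out the data-fragmentation step as its own Lemma~\ref{lem:scriptLestimate} (a specialization of Lemma~\ref{lem:dataFragmentation}(b) to the edges in $\mathcal L$ with the single set $\hat S$ and constant $\hat m$), then applies Corollary~\ref{cor:sparsityBoostLowerBoundIndependentNetworkVer2} and the same two union bounds you describe. Your remark on obtaining the strict inequality via an intermediate $\Gamma'\in(\Gamma,\Gamma^{\max})$ is a detail the paper glosses over, and your observation that the ``Consequently'' clause uses nonnegativity of all sparsity-boost terms to pass from $\mathcal L$ to all of $G'\backslash G$ is exactly right.
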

The main step in the proof is to prove a version of Lemma \ref{lem:dataFragmentation}(c)
which is appropriate for the case at hand.
\begin{lem} \label{lem:scriptLestimate} Let $\mathcal{L}\subseteq\left\{
(A,B)\in G'\backslash G
\right\}$, $L:=\mathrm{Card}(\mathcal{L})$ as above,
$\hat{m}$ as in \eqref{eqn:maxOfTwoMs}.  Then
\[
\mathrm{Pr}\left\{
\min_{(A,B)\in \mathcal{L}}
\min_{s\in\mathrm{Val}
\left(
\hat{S}((A,B),G)
\right)}
\mathrm{Card}\left(
\omega_{\hat{m}N}|_{\hat{S}=s}
\right)<(1-\theta)N
\right\}\leq 
L\Sigma_{\mathcal{S}}(G)2^{\sigma(G)}e^{-N\hat{m}\theta^2/3},
\]
where
\[
\Sigma_{\mathcal{S}}(G):=\max_{(A,B)\in G}S_{A,B}(G).
\]
\end{lem}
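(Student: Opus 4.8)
The plan is to reduce the claim to Part (a) of Lemma \ref{lem:dataFragmentation} together with a union bound, exactly as in the proof of Lemma \ref{lem:dataFragmentation}(c), but now with the maximum of $[p(S=s)]^{-1}$ taken over the restricted collection of separating sets $\hat{S}((A,B),G)$ rather than over all of $S_{A,B}(\mathcal{G})$. First I would fix $(A,B)\in\mathcal{L}$ and $S=\hat{S}((A,B),G)$, and for each $s\in\mathrm{Val}(S)$ set $m_p(s)=[p(S=s)]^{-1}$. By the definition of $\hat m$ in \eqref{eqn:maxOfTwoMs}, $m_p(\hat S((A,B),G))=\max_{s\in\mathrm{Val}(S)}m_p(s)\leq\hat m$ for every $(A,B)\in V^{\times 2}\backslash G\supseteq\mathcal{L}$. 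Then Lemma \ref{lem:dataFragmentation}(a) gives, for each single $s$,
\[
\mathrm{Pr}_{\omega_{\hat m N}\sim p}\left\{\mathrm{Card}\left(\omega_{\hat m N}|_{S=s}\right)<(1-\theta)N\right\}\leq e^{-N m_p(s)\theta^2/3}\leq e^{-N\hat m\theta^2/3},
\]
where in the last step I use that $m_p(s)$ need not equal $\hat m$, but since $\mathrm{Card}(\omega_{\hat mN}|_{S=s})$ has expectation $\hat m N\, p(S=s)=(\hat m/m_p(s))N\geq N$, the multiplicative Chernoff bound of Lemma \ref{lem:ChernoffMultiplicative}(2) applied with the ratio $\theta'=1-(1-\theta)N/(\hat m N p(S=s))$ still yields a bound at least as strong as $e^{-N\hat m\theta^2/3}$; this is the same slack already exploited in Lemma \ref{lem:dataFragmentation}. (Alternatively, one simply notes $m_p(s)\le \hat m$ gives an even smaller exponent, hence a larger but still valid bound.)

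Next I would apply the union bound twice. For fixed $(A,B)$, the value set $\mathrm{Val}(\hat S((A,B),G))$ has at most $2^{|\hat S|}\leq 2^{\sigma_{\mathscr{S}}(G)}$ elements, since $\hat S\in S_{A,B}(G)$ and $\sigma_{\mathscr{S}}(G)$ bounds the size of any separating set in the collection for $G$. Hence
\[
\mathrm{Pr}\left\{\min_{s\in\mathrm{Val}(\hat S((A,B),G))}\mathrm{Card}\left(\omega_{\hat m N}|_{\hat S=s}\right)<(1-\theta)N\right\}\leq 2^{\sigma(G)}e^{-N\hat m\theta^2/3}.
\]
Finally, since $|\mathcal{L}|=L$ and for each $(A,B)$ the chosen $\hat S((A,B),G)$ is one of at most $|S_{A,B}(G)|\leq\Sigma_{\mathscr{S}}(G)$ separating sets, a union bound over the $(A,B)\in\mathcal{L}$ (and, if one wishes to be fully careful about the arbitrary fixed choice of $\hat S$, over the at most $\Sigma_{\mathscr{S}}(G)$ eligible choices) gives the extra factor $L\,\Sigma_{\mathscr{S}}(G)$, yielding
\[
\mathrm{Pr}\left\{\min_{(A,B)\in\mathcal{L}}\min_{s\in\mathrm{Val}(\hat S((A,B),G))}\mathrm{Card}\left(\omega_{\hat m N}|_{\hat S=s}\right)<(1-\theta)N\right\}\leq L\,\Sigma_{\mathscr{S}}(G)\,2^{\sigma(G)}e^{-N\hat m\theta^2/3},
\]
which is exactly the asserted bound. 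The identification $\Sigma_{\mathscr{S}}(G)=\max_{(A,B)\in G}|S_{A,B}(G)|$ is just the definition recalled in the statement.

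The only genuinely delicate point — and the step I would be most careful about — is the passage from $e^{-N m_p(s)\theta^2/3}$ to $e^{-N\hat m\theta^2/3}$ when $m_p(s)<\hat m$. A literal application of Lemma \ref{lem:dataFragmentation}(a) with parameter $m=m_p(s)$ forces sample size $m_p(s)N$, not $\hat m N$, so one must instead invoke the multiplicative Chernoff bound directly with sample size $\hat m N$ and the deviation threshold $(1-\theta)N$, checking that the resulting exponent is bounded below by $N\hat m\theta^2/3$ (it is, since increasing the sample size while keeping the absolute threshold fixed only makes the lower tail probability smaller). Everything else is bookkeeping: counting the value sets via $\sigma_{\mathscr{S}}(G)$, counting the separating sets via $\Sigma_{\mathscr{S}}(G)$, and counting the edges via $L$, then multiplying. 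Once Lemma \ref{lem:scriptLestimate} is in hand, Proposition \ref{prop:scriptLSumEstimate} follows by combining it with Corollary \ref{cor:sparsityBoostLowerBoundIndependentNetworkVer2} applied to each $Y_N:=\omega_{N'}|_{\hat S=s}$ and a final union bound over $(A,B)\in\mathcal{L}$ and over $s\in\mathrm{Val}(\hat S)$, together with the observation that $A\ci B|\hat S$ makes each such restricted empirical distribution a sample from a product distribution, so Corollary \ref{cor:sparsityBoostLowerBoundIndependentNetworkVer2} applies with $p_0=p(A,B|s)\in\mathcal{P}_0$.
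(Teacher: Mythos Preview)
Your approach is exactly the paper's: adapt Lemma~\ref{lem:dataFragmentation}(b)/(c) by changing the range of $(A,B)$ from $E(G)$ to $\mathcal{L}$ and the separating sets from $S_{A,B}(\mathcal{G})$ to $S_{A,B}(G)$, then union-bound. The paper's proof is a two-sentence pointer to Lemma~\ref{lem:dataFragmentation}(c) with precisely those two substitutions, so on the level of strategy you are spot on, and your observation that the factor $\Sigma_{\mathscr{S}}(G)$ is not strictly forced (since $\hat S$ is a single fixed set per $(A,B)$) is also correct.

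However, the step you flag as ``the only genuinely delicate point'' is where your argument goes wrong, and in both branches. Your parenthetical alternative has the inequality backwards: $m_p(s)\le\hat m$ makes $e^{-Nm_p(s)\theta^2/3}$ \emph{larger} than $e^{-N\hat m\theta^2/3}$, so it does not imply the claimed bound. And your direct-Chernoff route does not give the stated exponent either: with $\hat m N$ samples, parameter $p(S=s)=1/m_p(s)$, and threshold $(1-\theta)N$, the multiplicative Chernoff bound yields
\[
\exp\!\Bigl(-\tfrac{1}{3}\cdot\tfrac{\hat m N}{m_p(s)}\cdot(\theta')^2\Bigr),\qquad \theta'=1-(1-\theta)\,\tfrac{m_p(s)}{\hat m},
\]
and at $m_p(s)=\hat m$ this is only $e^{-N\theta^2/3}$, not $e^{-N\hat m\theta^2/3}$. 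So the exponent you claim (``bounded below by $N\hat m\theta^2/3$'') is not what Chernoff delivers.

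This is not a flaw peculiar to your write-up: the same issue is already latent in the paper's Lemma~\ref{lem:dataFragmentation}(a), whose stated bound $e^{-Nm\theta^2/3}$ is stronger than the $e^{-N\theta^2/3}$ that a direct application of Lemma~\ref{lem:ChernoffMultiplicative}(2) actually yields. Since the present lemma's proof (both yours and the paper's) simply inherits that exponent, the factor $\hat m$ in the exponent of the final bound should be viewed with suspicion. Fortunately this does not affect any downstream asymptotics, since the relevant conditions (e.g.\ \eqref{eqn:nonAsymptoticControllingCondition}, \eqref{eqn:firstSuperfluousCondition}) are solved for $N$ and the extra $m$ or $\hat m$ only helps there; but you should not claim to have \emph{proved} the exponent $N\hat m\theta^2/3$ from Chernoff.
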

\begin{proof}
We begin with Lemma \ref{lem:dataFragmentation}(b), in particular the version with
\linebreak ${(A,B)\in V^{\times 2}\backslash \Delta}$.  Apply Lemma \ref{lem:dataFragmentation}(b)
to all the $(A,B)\in \mathcal{L}\subset V^{\times 2}\backslash \Delta$, simultaneously.
The event whose probability we are estimating in the present Lemma
differs from the event whose probability we have estimated in Lemma \ref{lem:dataFragmentation}(c)
in only two ways: first we have $(A,B)$ ranging over the set $\mathcal{L}$,
instead of $E(G)$.  Second, $\hat{S}$ belongs to $S_{A,B}(G)$, which is
a subset of the larger set $S_{A,B}(\mathcal{G})$.  The first difference
accounts for the appearance of $L$ instead of $|E(G)|$ in the estimate.
The second difference accounts for the appearance of $\Sigma_{\mathcal{S}}(G)$,
in place of $\Sigma_{\mathcal{S}}(\mathcal{G})$ in the error estimate.
\end{proof}
\begin{proof}[Completion of Proof of Proposition \ref{prop:scriptLSumEstimate}]
Take $N'>(1-\theta)^{-1}\hat{m}N$.
Let
\[
\omega_{N'}\sim p\; \text{with}\; \mathrm{Card}(\omega_{N'})=N'
\]
By Lemma \ref{lem:scriptLestimate}, with probability at least
\[
1-L\Sigma_{\mathcal{S}}(G)2^{\sigma(G)}e^{-N\hat{m}\theta^2/3},
\]
we have that for all $s\in\mathrm{Val}(\hat{S})$,
\[
\omega_{N'}|_{\hat{S}=s}=Y_N,\;\mathrm{Card}(Y_N)=N,
\]
for $Y_N$ a sequence of observations meeting the criteria that the observations take the joint value of $s$ at the variable-set $\hat{S}$.

Then we can apply Corollary \ref{cor:sparsityBoostLowerBoundIndependentNetworkVer2}
to each one of the terms in the sum.  Using Corollary
\ref{cor:sparsityBoostLowerBoundIndependentNetworkVer2} to estimate
and the union bound, we conclude that for $\omega_{N'}\sim p$, $\mathrm{Card}(\omega_{N'})=N'$, we have, by Lemma \ref{lem:scriptLestimate}, with probability
at least
\[
1-L\Sigma_{\mathcal{S}}(G)2^{\sigma(G)}e^{-N\hat{m}\theta^2/3},
\]
that $\omega_{N'}|_{\hat{S}=s}=Y_N$, $\mathrm{Card}(Y_N)=N$.
For
\[
\Gamma\in \left(0,\Gamma^{\max}(N,\mu,\eta)\right)=\left(0,\frac{-\log 8}{N}+F(\mu\eta)\right)
\]
we have
\[
\begin{aligned}
\mathrm{Pr}_{\omega_{N'}\sim G}\left\{
\min_{s\in \mathrm{Val}(\hat{S})}
-\ln\beta_N^{p^\eta}
\left(
\tau(\omega_{N'},A,B|s)
\right)>N\Gamma
\right\}&\geq
1-|\mathrm{Val}(\hat{S})|\mathcal{F}_{N}(\eta(1-\mu))\\
&
\geq 1-\sigma_{\mathcal{S}}(G)\mathcal{F}_N(\eta(1-\mu)).
\end{aligned}
\]
To complete the proof, we need a bound that holds for \textit{all} $(A,B)\in\mathcal{L}$
simultaneously, so
we apply the union bound one more time, at the cost of one more factor of $L$
appearing with the $\sigma_{\mathcal{S}}(G)\mathcal{F}_N(\eta(1-\mu))$
term in the error probability.
\end{proof}

Combining Propositions \ref{prop:InitialEstimateScoreDifferenceNotContainedCase} and \ref{prop:scriptLSumEstimate}, we obtain
\begin{proposition} \label{prop:mostGeneralFormDenserCase} Let $\Theta,\mu\in(0,1)$, 
$\hat{m}$ as in \eqref{eqn:maxOfTwoMs}.  Let $N'$ be a positive integer satisfying
\[
N' > \max\left[m(1-\theta)^{-1}\left[
F(\Delta)
\right]^{-1}\log\frac{24}{1-\Theta},\,
N_{n}\left(
\frac{\epsilon}{2},
\frac{\delta_1}{2}
\right)
\right]
\]
and let $N$ be a smaller positive integer and $\theta\in(0,1)$ such that
\begin{equation}\label{eqn:thetaNAndNPrimeRelation}
N'=\hat{m}(1-\theta)^{-1}N.
\end{equation}
Let $\Gamma\in(0,\Gamma^{\max}(N,\mu,\eta))$.  With probability at least
\begin{multline}\label{eqn:errorProbabilityNCaseSkeletonNotContained}
1-\delta_1-
|E(G)|\cdot\Sigma_{\mathcal{S}}(\mathcal{G})\cdot 2^{\sigma(G)} e^{-Nm\theta^2/3}
-|E(G)|\Sigma_{\mathcal{S}}(\mathcal{G})\mathcal{F}_N\left(
\Delta-\tilde{\mathcal{F}}^{-1}_N(1-\Theta)
\right)\\
-\Sigma_{\mathcal{S}}(G)L2^{\sigma(G)}e^{-N\hat{m}\theta^2/3} - \sigma_{\mathcal{S}}(G)L\mathcal{F}_N(\eta(1-\mu)),
\end{multline}
we have
\begin{equation}\label{eqn:SkeletonNotContainedPenultimateEstimate}
\mathcal{S}_{\eta}(G,G',\omega_{N'})\geq \left(\frac{L\Gamma(1-\theta)}{\hat{m}}-\epsilon
\right)N' - 
\kappa\left(
|G|-n
\right)\log N' 
-|E(G)|\log\Theta^{-1}.
\end{equation}
In this case, if we also have
\[
\Gamma > \frac{\epsilon \hat{m}}{L(1-\theta)}
\]
then $S_{\eta}(G,G',\omega_{N'})\geq 0$ for all sufficiently large $N'$, in particular, for
\begin{equation}\label{eqn:nNodeSampleComplexityWFunctionPreliminary}
N'>
\left(
\frac{\hat{m}\kappa (|G|-n)}
{L\Gamma(1-\theta)-\hat{m}\epsilon}
\right)
\mathcal{W}\left(
\frac{\left(L\Gamma(1-\theta)-\hat{m}\epsilon\right)\Theta^{\frac{|E(G)|}{(|G|-n)\kappa}}}
{\hat{m}\kappa(|G|-n)}
\right)
\end{equation}
\end{proposition}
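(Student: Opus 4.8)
The plan is to stitch together Propositions \ref{prop:InitialEstimateScoreDifferenceNotContainedCase} and \ref{prop:scriptLSumEstimate} and then solve a scalar inequality for $N'$ by Lambert-$W$ inversion. First I would invoke Proposition \ref{prop:InitialEstimateScoreDifferenceNotContainedCase}: for $N'$ above the threshold stated there, with probability at least \eqref{eqn:ConfidenceWithNImplicitlyInvolved} the difference $S_\eta(G,G',\omega_{N'})$ is bounded below by $-N'\epsilon-\kappa(|G|-n)\log N'-|E(G)|\log\Theta^{-1}$ plus the (retained) sum of sparsity boosts over $(A,B)\in G'\backslash G$ --- here the $-N'\epsilon$ comes from Proposition \ref{prop:empiricalLLestimate}(a), the logarithmic term from $\psi_1(N')=\kappa\log N'$ together with $|G'|\ge n$, and the $-|E(G)|\log\Theta^{-1}$ from capping each of the $|E(G)|$ negated sparsity boosts over $G\backslash G'$ by $\log\Theta^{-1}$ via Lemma \ref{lem:wholeSparsityBoostUpperEstimate}. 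The conceptual point separating this case from the $\zeta$-separated one is that we deliberately keep the $G'\backslash G$ sum rather than discard it: we do not need the log-likelihood difference to be positive, because that sum will be shown to grow linearly in $N'$ with a slope that outpaces $\epsilon$.

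Second, I would apply Proposition \ref{prop:scriptLSumEstimate} with $\mathcal{L}$ an $L$-element subset of $G'\backslash G$ and $\Gamma\in(0,\Gamma^{\max}(N,\mu,\eta))$: on an event of probability at least $1-\Sigma_{\mathcal{S}}(G)L2^{\sigma(G)}e^{-N\hat m\theta^2/3}-\sigma_{\mathcal{S}}(G)L\mathcal{F}_N(\eta(1-\mu))$, the sum over $G'\backslash G$ --- every term of which is nonnegative since $\beta_{N'}^{p^\eta}\le 1$ --- exceeds $LN\Gamma$. Using the relation \eqref{eqn:thetaNAndNPrimeRelation}, $N=(1-\theta)N'/\hat m$, this becomes $\tfrac{L\Gamma(1-\theta)}{\hat m}N'$; substituting into the first bound and taking a union bound over the two failure events yields the probability \eqref{eqn:errorProbabilityNCaseSkeletonNotContained} and the estimate \eqref{eqn:SkeletonNotContainedPenultimateEstimate}. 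I expect this to be the main place where care is needed: one must keep straight that the data-fragmentation arguments feeding Lemma \ref{lem:wholeSparsityBoostUpperEstimate} are scaled by the larger parameter $m$ while those feeding Proposition \ref{prop:scriptLSumEstimate} are scaled by $\hat m\le m$, so that fixing $N$ through the $\hat m$-relation is the conservative choice and still leaves Lemma \ref{lem:wholeSparsityBoostUpperEstimate} applicable (its hypothesis being only a lower bound on the number of surviving conditioned samples); one must also check that the earlier hypotheses on $N'$ and the nonemptiness of the interval from which $\Gamma$ is drawn are simultaneously met.

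Finally, assuming in addition $\Gamma>\tfrac{\epsilon\hat m}{L(1-\theta)}$, the coefficient $a:=\tfrac{L\Gamma(1-\theta)}{\hat m}-\epsilon=\tfrac{L\Gamma(1-\theta)-\hat m\epsilon}{\hat m}$ in \eqref{eqn:SkeletonNotContainedPenultimateEstimate} is positive, so it remains only to determine for which $N'$ one has $aN'-\kappa(|G|-n)\log N'-|E(G)|\log\Theta^{-1}\ge 0$. Dividing by $b:=\kappa(|G|-n)$ puts this as $\tfrac{a}{b}N'-\log N'\ge \tfrac{|E(G)|}{\kappa(|G|-n)}\log\Theta^{-1}$; writing $\log N'=\tfrac{a}{b}N'-d$ with $d:=\tfrac{|E(G)|}{\kappa(|G|-n)}\log\Theta^{-1}$ gives $-\tfrac{a}{b}N'e^{-\frac{a}{b}N'}=-\tfrac{a}{b}e^{-d}$, and selecting the branch $W_{-1}$ (to land on the larger of the two crossings, exactly as in Proposition \ref{prop:finiteSampleComplexityDependentCase}) yields $N'=\tfrac{b}{a}\mathcal{W}\!\left(\tfrac{a}{b}e^{-d}\right)$ as the threshold. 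Since $e^{-d}=\Theta^{|E(G)|/(\kappa(|G|-n))}$, unwinding $a$ and $b$ produces exactly \eqref{eqn:nNodeSampleComplexityWFunctionPreliminary}, completing the proof. The Lambert-$W$ manipulation is routine and already used elsewhere in this paper, so the only genuinely delicate point is the sample-size bookkeeping described in the previous paragraph.
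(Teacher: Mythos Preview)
Your proposal is correct and follows exactly the same approach as the paper: combine Propositions \ref{prop:InitialEstimateScoreDifferenceNotContainedCase} and \ref{prop:scriptLSumEstimate} via a union bound, convert the $LN\Gamma$ term to $\tfrac{L\Gamma(1-\theta)}{\hat m}N'$ using \eqref{eqn:thetaNAndNPrimeRelation}, and then invert the resulting scalar inequality with the Lambert-$W$ function. Your write-up is in fact more detailed than the paper's own proof, which merely points to the two propositions and the $N\leftrightarrow N'$ relation; in particular your remarks on the $m$ versus $\hat m$ bookkeeping and the explicit Lambert-$W$ computation are helpful elaborations of steps the paper leaves implicit.
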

\begin{proof}
The only relation that needs further explanation is where the coefficient
in \eqref{eqn:SkeletonNotContainedPenultimateEstimate} comes from.  From
Propositions \ref{prop:InitialEstimateScoreDifferenceNotContainedCase} and \ref{prop:scriptLSumEstimate} we have the following two linear terms
in the bound
\[
S_{\eta}(G,G',\omega_{N'})\geq L\Gamma N+\epsilon N' + \cdots
\]
where $N$ and $N'$ are as in \eqref{eqn:thetaNAndNPrimeRelation}.
Then because of \eqref{eqn:thetaNAndNPrimeRelation},
\[
S_{\eta}(G,G',\omega_{N'})\geq \left(\frac{L\Gamma(1-\theta)}{\hat{m}} - \epsilon \right)N + \cdots
\]
as claimed.
\end{proof}
This most general formulation of the finite sample complexity is evidently
unsatisfactory as it stands for three reasons: it is not clear under what conditions
we have the non emptiness of the interval,
\[
\left(
\frac{\epsilon \hat{m}}{L(1-\theta)},\,\Gamma^{\max}(N,\mu,\eta)
\right)\neq \emptyset,
\]
so that there is a choice of $\Gamma$ satisfying the hypotheses.  Secondly, in
order to study the asymptotics of $N'$, the number of samples, we need all the conditions
on $N'$ to be expressed explicitly, with $``N'>"$ on one side and expressions involving
all the variables on the other side.  Third, since the error probability is a choice of the experimenter,
we should not have $\delta$ expressed in terms of the other parameters, but instead
we should have $N'$ expressed in terms of the desired confidence $1-\delta$ and any other
parameters.

As for the first task, concerning the choice of $\Gamma$,
we have to show that there is a choice of $\lambda, N,\mu$
so that the interval of possible choices of $\Gamma$ is nonempty, in other words so that
\begin{equation}\label{eqn:nonEmptyIntervalAsInequality}
\frac{\epsilon \hat{m}}{L(1-\theta)}<\Gamma^{\max}(N,\mu,\eta).
\end{equation}  
\begin{lem}  \label{lem:lambdaLemma} The condition \eqref{eqn:nonEmptyIntervalAsInequality} is equivalent to the following 
two conditions
\begin{equation}\label{eqn:conditionBetweenMuAndLambda}
F(\mu\eta)>\frac{\epsilon \hat{m}}{L(1-\theta)},
\end{equation}
and
\begin{equation}\label{eqn:NConditionOnMuEtaLambda}
N>\frac{\log 24}{F(\mu\eta) - \frac{\epsilon \hat{m}}{L(1-\theta)}}.
\end{equation}
\end{lem}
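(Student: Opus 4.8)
The plan is to mimic the argument used for the two-node analogue, Lemma~\ref{lem:lambdaLemmaTwoNode}, simply substituting the explicit formula for $\Gamma^{\max}$. First I would recall from \eqref{eqn:GammaMaxFormula} that
\[
\Gamma^{\max}(N,\mu,\eta) = F(\mu\eta) - \frac{\log 24}{N},
\]
so that the non-emptiness condition \eqref{eqn:nonEmptyIntervalAsInequality} becomes
\[
\frac{\epsilon\hat{m}}{L(1-\theta)} < F(\mu\eta) - \frac{\log 24}{N},
\]
which, after moving terms around, is equivalent to
\[
\frac{\log 24}{N} < F(\mu\eta) - \frac{\epsilon\hat{m}}{L(1-\theta)}.
\]

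Next I would run the same sign analysis as in the two-node case. Since $N$ is a positive integer, the left-hand side $\log 24 / N$ is strictly positive, so the last displayed inequality can hold for \emph{some} $N$ only when its right-hand side is strictly positive, that is, only when $F(\mu\eta) > \frac{\epsilon\hat{m}}{L(1-\theta)}$; this is precisely \eqref{eqn:conditionBetweenMuAndLambda}. Conversely, once \eqref{eqn:conditionBetweenMuAndLambda} is assumed, the quantity $F(\mu\eta) - \frac{\epsilon\hat{m}}{L(1-\theta)}$ is a fixed positive real, and dividing through by it shows the inequality is equivalent to
\[
N > \frac{\log 24}{F(\mu\eta) - \frac{\epsilon\hat{m}}{L(1-\theta)}},
\]
which is \eqref{eqn:NConditionOnMuEtaLambda}. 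Combining both directions yields the equivalence of \eqref{eqn:nonEmptyIntervalAsInequality} with the conjunction of \eqref{eqn:conditionBetweenMuAndLambda} and \eqref{eqn:NConditionOnMuEtaLambda}.

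I do not expect any genuine obstacle: the whole thing is essentially a one-line rearrangement parallel to the proof of Lemma~\ref{lem:lambdaLemmaTwoNode}. The only point worth a sentence of care is that $F(\mu\eta)$ is a well-defined positive real — immediate from the positivity $F(\Delta)>0$ recorded above together with $\mu\eta>0$ — and that the standing assumptions $\theta,\mu\in(0,1)$ and $L\geq 1$ make $\epsilon\hat{m}/(L(1-\theta))$ a finite positive constant; with these in hand the division in the last step is legitimate and the sign argument goes through.
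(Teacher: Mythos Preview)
Your proposal is correct and follows essentially the same route as the paper: substitute the explicit formula $\Gamma^{\max}(N,\mu,\eta)=F(\mu\eta)-\frac{\log 24}{N}$ into \eqref{eqn:nonEmptyIntervalAsInequality}, observe that the term $\log 24/N$ is strictly positive to force \eqref{eqn:conditionBetweenMuAndLambda}, and then solve for $N$ to get \eqref{eqn:NConditionOnMuEtaLambda}. If anything, you are a touch more explicit than the paper about the converse direction of the equivalence.
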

\begin{proof}
For the interval in \eqref{eqn:GammaInInterval} to be nonempty, it is equivalent to have
\begin{equation}\label{eqn:intervalNonempty}
\frac{\epsilon \hat{m}}{L(1-\theta)}< -\frac{\log 24}{N} + F(\mu\eta).
\end{equation}
For a fixed choice of $\lambda, \mu$, there will be an $N$ satisfying \eqref{eqn:intervalNonempty}
if and only if \eqref{eqn:conditionBetweenMuAndLambda} holds, because the term
$-\frac{\log 24}{N}$ must be negative.   If \eqref{eqn:conditionBetweenMuAndLambda} does hold,
then we can solve \eqref{eqn:intervalNonempty} to obtain \eqref{eqn:NConditionOnMuEtaLambda}. 
\end{proof}
We can ensure that \eqref{eqn:conditionBetweenMuAndLambda}
is satisfied by taking
\begin{equation}\label{eqn:epsilonLowerBoundnNodeCase}
\epsilon = \frac{(1-\theta)LF(\mu\eta)}{4\hat{m}}.
\end{equation}
Then the condition \eqref{eqn:NConditionOnMuEtaLambda} is
satisfied as long as
\begin{equation}\label{eqn:NConditionToHaveNonemptyGammaInterval}
N>\frac{4\log 24}{3F(\mu\eta)}.
\end{equation}
The final task, namely, to write down explicit conditions on $N'$
so that the confidence \eqref{eqn:errorProbabilityNCaseSkeletonNotContained}
is at least $1-\delta$, for given $\delta>0$, can be accomplished in a number
of ways.  We choose a very simple way, which is still sufficient to extract
the asymptotic dependence of $N'$ on $\delta$, namely, we
set each of the $5$ subtracted terms in \eqref{eqn:errorProbabilityNCaseSkeletonNotContained} equal to $\frac{\delta}{5}$.
The following Lemma sums up the explicit conditions on $N$
which we obtain thereby.
\begin{lem}\label{eqn:DenserCaseDeltaConditions}
Let $\delta>0$ be given.  Let $\mu,\Theta\in(0,1)$ be parameters
as above.   Let $m$ be as in \eqref{eqn:maxOfTwoMs}.   Then if we take
\begin{equation}\label{eqn:delta1DefinitionDenserCase}
\delta_1=\frac{\delta}{5},
\end{equation}
and $N$ larger than all of the following quantities:
\begin{equation}\label{eqn:denserNetworkErrorProbabilitymthetaFirst}
\frac{3}{m\theta^2}\log\frac{5|E(G)|\Sigma_{\mathcal{S}}(\mathcal{G})2^{\sigma(G)}}{\delta},
\end{equation}
\begin{equation}\label{eqn:denserNetworkErrorProbabilityDelta}
\max\left(
\left[
\tilde{F}\left(\frac{\Delta}{2}
\right)
\right]^{-1}
\log\frac{24}{1-\Theta},\,
\left[
F\left(\frac{\Delta}{2}
\right)
\right]^{-1}\log\frac{120|E(G)|\Sigma_{\mathcal{S}}(\mathcal{G})}{\delta}
\right),
\end{equation}
\begin{equation}\label{eqn:denserNetworkErrorProbabilitymthetaSecond}
\frac{3}{\hat{m}\theta^2}\log\frac{5L\Sigma_{\mathcal{S}}(G)2^{\sigma(G)}}{\delta},
\end{equation}
\begin{equation}\label{eqn:denserNetworkErrorProbabilityEtaMu}
\log\left(
\frac{120 \sigma_{\mathcal{S}}(G)L}{\delta}\right)
\left[
F(\eta(1-\mu))
\right]^{-1}.
\end{equation}
Then \eqref{eqn:errorProbabilityNCaseSkeletonNotContained} is at least $1-\delta$.
\end{lem}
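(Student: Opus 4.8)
The idea is simply to split the budget $\delta$ into five equal pieces, one for each of the five subtracted terms in \eqref{eqn:errorProbabilityNCaseSkeletonNotContained}, and to translate ``this term is at most $\delta/5$'' into an explicit lower bound on $N$ in each case. Since $1-\delta_1 - (\text{four terms})$ will then be at least $1 - 5\cdot(\delta/5) = 1-\delta$, this yields the claim. The first term, $\delta_1$, is disposed of immediately by the choice $\delta_1 = \delta/5$ made in \eqref{eqn:delta1DefinitionDenserCase}. The remaining four terms come in two flavours: two are pure exponentials $C e^{-cN}$ (the data-fragmentation terms, with $c = m\theta^2/3$ and $c = \hat{m}\theta^2/3$ respectively), and two involve $\mathcal{F}_N$ evaluated at some argument.

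For the two pure-exponential terms I would just solve $C e^{-cN} \le \delta/5$, i.e.\ $N \ge \tfrac{1}{c}\log\tfrac{5C}{\delta}$. Taking $C = |E(G)|\Sigma_{\mathcal{S}}(\mathcal{G})2^{\sigma(G)}$, $c = m\theta^2/3$ produces exactly \eqref{eqn:denserNetworkErrorProbabilitymthetaFirst}, and taking $C = \Sigma_{\mathcal{S}}(G)L2^{\sigma(G)}$, $c = \hat{m}\theta^2/3$ produces \eqref{eqn:denserNetworkErrorProbabilitymthetaSecond}. For the term $\sigma_{\mathcal{S}}(G)L\,\mathcal{F}_N(\eta(1-\mu))$ I would apply Lemma \ref{lem:scriptFConditionsIntoExplicitNConditions} with $A = \eta(1-\mu)$ and $B = \tfrac{\delta}{5\sigma_{\mathcal{S}}(G)L}$: the condition $\mathcal{F}_N(\eta(1-\mu)) < B$ is there shown to be equivalent to $N > (\log 24 - \log B)[F(\eta(1-\mu))]^{-1} = [F(\eta(1-\mu))]^{-1}\log\tfrac{120\,\sigma_{\mathcal{S}}(G)L}{\delta}$, which is \eqref{eqn:denserNetworkErrorProbabilityEtaMu}.

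The one term that does not immediately reduce to $\mathcal{F}_N$ of a constant is $|E(G)|\Sigma_{\mathcal{S}}(\mathcal{G})\,\mathcal{F}_N\!\left(\Delta - \tilde{\mathcal{F}}_N^{-1}(1-\Theta)\right)$, since its argument itself depends on $N$ through $\tilde{\mathcal{F}}_N^{-1}(1-\Theta)$. This is precisely what Lemma \ref{lem:mostComplexTermInErrorProbability} is designed to handle: applying it with $\Gamma = 1-\Theta$ and $\delta' = \tfrac{\delta}{5|E(G)|\Sigma_{\mathcal{S}}(\mathcal{G})}$ shows that as soon as $N > \max\!\left(\left[\tilde{F}(\Delta/2)\right]^{-1}\log\tfrac{24}{1-\Theta},\, \left[F(\Delta/2)\right]^{-1}\log\tfrac{24}{\delta'}\right)$ one has both $\tilde{\mathcal{F}}_N^{-1}(1-\Theta) < \Delta/2$ and $\mathcal{F}_N(\Delta - \tilde{\mathcal{F}}_N^{-1}(1-\Theta)) < \delta'$, whence the whole term is $< \delta/5$; and since $\tfrac{24}{\delta'} = \tfrac{120|E(G)|\Sigma_{\mathcal{S}}(\mathcal{G})}{\delta}$, this is exactly condition \eqref{eqn:denserNetworkErrorProbabilityDelta}. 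Assembling the five lower bounds on $N$ and summing the five $\delta/5$ contributions finishes the proof.

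\textbf{Main obstacle.} Everything here is routine algebra on exponentials and logarithms except the handling of the $N$-dependent argument $\Delta - \tilde{\mathcal{F}}_N^{-1}(1-\Theta)$; that reduction — bounding the moving argument below by the fixed $\Delta/2$ and then bounding $\mathcal{F}_N(\Delta/2)$ — is the only nontrivial step, and it has already been isolated and proved in Lemma \ref{lem:mostComplexTermInErrorProbability}, so the remaining work is bookkeeping.
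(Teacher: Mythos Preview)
Your proposal is correct and follows essentially the same approach as the paper: split $\delta$ into five equal parts, one per subtracted term in \eqref{eqn:errorProbabilityNCaseSkeletonNotContained}, solve the two pure-exponential terms directly for $N$, handle the $\mathcal{F}_N(\eta(1-\mu))$ term via the equivalence in Lemma \ref{lem:scriptFConditionsIntoExplicitNConditions} (the paper routes this through $\mathcal{G}_{\eta(1-\mu)}$ and Lemma \ref{lem:functionalInversesDecreasing}, which amounts to the same thing), and reduce the $N$-dependent argument $\Delta - \tilde{\mathcal{F}}_N^{-1}(1-\Theta)$ to $\Delta/2$ via Lemma \ref{lem:mostComplexTermInErrorProbability} with exactly the same choice of $\Gamma$ and $\delta'$. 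Your identification of the last reduction as the only nontrivial step matches the paper's emphasis.
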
 
\begin{proof}
Each of the conditions \eqref{eqn:delta1DefinitionDenserCase}--\eqref{eqn:denserNetworkErrorProbabilityEtaMu} corresponds to one
of the subtracted terms in \eqref{eqn:errorProbabilityNCaseSkeletonNotContained}.
The derivation of \eqref{eqn:denserNetworkErrorProbabilitymthetaFirst}
and \eqref{eqn:denserNetworkErrorProbabilitymthetaSecond} requires
no further explanation, as they just result from setting the second
and fourth terms equal to $\frac{\delta}{5}$
and solving for $N'$.   In order to obtain \eqref{eqn:denserNetworkErrorProbabilityDelta},
we set the third term
\[
|E(G)|\Sigma_{\mathcal{S}}(\mathcal{G})\mathcal{F}_N\left(
\Delta-\tilde{\mathcal{F}}^{-1}_N(1-\Theta)
\right)<\frac{\delta}{5}.
\]
and apply Lemma \ref{lem:mostComplexTermInErrorProbability} with 
\[
\delta' = \frac{\delta}{5|E(G)|\Sigma_{ \mathcal{S}   } (\mathcal{G})  }.
\]
For the final condition, \eqref{eqn:denserNetworkErrorProbabilityEtaMu}, set
the final term of \eqref{eqn:errorProbabilityNCaseSkeletonNotContained} equal to $\delta/5$,
obtaining
\[
\mathcal{F}_N(\eta(1-\mu))\leq \frac{\delta}{5 \sigma_{\mathcal{S}}(G)L},
\]
which is equivalent to
\[
\mathcal{G}_{\eta(1-\mu)}(N)\leq \frac{\delta}{5 \sigma_{\mathcal{S}}(G)L}.
\]
Now use Lemma \ref{lem:functionalInversesDecreasing} to obtain that this condition is equivalent to 
\eqref{eqn:denserNetworkErrorProbabilityEtaMu}.
\end{proof}
\begin{proof}[Completion of Proof of Theorem \ref{thm:nNodeCase}]
For Part (a) start with condition \eqref{eqn:NprimescriptWCondition} and the conditions in Lemma
\ref{eqn:zetaDistantLastLemma}, changing variables from $N'$ to $N$ throughout using the relation
$N'=N(1-\theta)^{-1}$.

In order to obtain the conditions in Part (b), start with the conditions in Proposition \ref{prop:mostGeneralFormDenserCase}.  Use the choice of $\epsilon$ given in 
\eqref{eqn:epsilonLowerBoundnNodeCase} to obtain the form of the
condition \eqref{eqn:nNodeSampleComplexityWFunctionPreliminary} 
given in \eqref{eqn:nNodeSampleComplexityWFunction}.  
Since
\[
N>\frac{2\log 24}{F(\mu\eta)}
\]
by \eqref{eqn:gammaMaxDefn},
\[
\Gamma^{\max}(N,\mu,\eta)>\frac{F(\mu\eta)}{2}.
\]
Further, with the choice of $\epsilon$ in \eqref{eqn:epsilonLowerBoundnNodeCase} we also have
\[
\frac{\epsilon\hat{m}}{L(1-\theta)}=\frac{F(\mu\eta)}{4}<\frac{F(\mu\eta)}{2}.
\]
So we may take
\[
\Gamma=\frac{F(\mu\eta)}{2}.
\]
Thus
\[
L\Gamma(1-\theta)-\hat{m}\epsilon >\frac{LF(\mu\eta)}{2}(1-\theta)-\frac{LF(\mu\eta)}{4}(1-\theta)=
\frac{(1-\theta)LF(\mu\eta)}{4}.
\]
Thus \eqref{eqn:nNodeSampleComplexityWFunctionPreliminary} is bounded by \eqref{eqn:nNodeSampleComplexityWFunction}.
Finally, we obtain the remaining conditions by taking $N$ larger than the quantities in 
Lemma \ref{eqn:DenserCaseDeltaConditions}.   In order to obtain the theorem 
in the form stated, we change the notation for the number of samples back from $N'$
to $N$.
\end{proof}
\section{Refinements using Sanov's Theorem}
\subsection{Case of Two Nodes}\label{subsec:refinementTwoNodes}
We are improving the bound under the assumption that the underlying
network is independent, so throughout this section, we will work under the assumption
$G=G_0$.  We will find an $\eta_N^-\in (0,\eta)$ with the property that
\[
\mathcal{S}_{\eta,N}(\gamma)>0\;\text{for}\; [0,\eta_N^-],
\]
and we will then bound the error probability
\begin{equation}\label{eqn:errorProbabilitySanovAnalysis}
\mathrm{Pr}_{\omega_N\sim G}\left\{
\tau(\omega_N)\in(\eta_N^-,\infty)\;|\; G=G_0
\right\}.
\end{equation}
The first step is based on bounding $\beta_N^{p^\eta}(\gamma)$ for \textit{any} $\gamma\in (0,\eta)$
from above.  We will bound from above \textit{the probability of emission of a distribution in $A_{\gamma}^0$
by $p^\eta$} (which is to say, $\beta_N^{p^\eta}(\gamma)$) by the use of Sanov's Theorem.
According to Sanov's Theorem, the key statistic for estimating this probability
of emission is the KL-divergence $H(q^\gamma \| p^\eta)$ where $q^\gamma$
is defined as follows. 
\begin{definition} We let \boldmath$q^\gamma\,$\unboldmath denote the element of $\mathcal{P}$ which is the I-projection of  $p^\eta$ onto $A^0_\gamma$.
\end{definition}
\nomenclature{$q^\gamma$}{I-projection of  $p^\eta$ onto $A^0_\gamma$}
Note that even though $p^\eta$ has uniform marginals (by definition), $q^\gamma$
does not necessarily have uniform marginals.  
We now list the facts that are used to bound $H(q^\gamma \| p^\eta)$, and with it
$\beta_N^{p^\eta}(\gamma)$:
\begin{lem}\label{lem:LinfinityNormBoundedInTermsOf_t}
 For any $p(t)$ and $q(s)\in \mathcal{P}_{k,l}$, with $p\in \mathcal{P}_0$ and $q\in\mathcal{P}_0$ and  ${t\in [0,t_{\rm max}(p))}$,
 $s\in [0,t_{\rm max}(q))$, we have
 \[
  \|q(s)- p(t) \|_{\infty} \geq  (k+l+1)^{-1}\left|t-s\right|.
 \]
\end{lem}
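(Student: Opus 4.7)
The plan is to decompose $D := q(s) - p(t) = (q - p) + (s - t)E$, where $E_{ij} := (-1)^{i+j}$ on the perturbed block (and zero outside it), and to exploit the product structure $p_{ij} = a_i b_j$, $q_{ij} = c_i d_j$ of the base points to bound $\|q - p\|_\infty$ in terms of $\|D\|_\infty$. Once the product part is under control, any single entry of $D$ will let me read off $|s - t|$ up to a bounded error, which is the whole content of the inequality. Throughout, set $\epsilon := \|D\|_\infty$ and $x := s - t$; the goal is to prove $|x| \le (k+l+1)\epsilon$.

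The first step is to control the differences of the marginals. Because the perturbation matrix $E$ has vanishing row and column sums (by the alternation and the choice of the index ranges $0 \le i \le 2\lfloor k/2\rfloor - 1$, $0 \le j \le 2\lfloor l/2\rfloor - 1$), both $p(t)$ and $q(s)$ share the marginals of $p$ and $q$ respectively. Marginalizing $D$ therefore yields $\sum_j D_{ij} = c_i - a_i$ and $\sum_i D_{ij} = d_j - b_j$, and the triangle inequality gives $|c_i - a_i| \le l\epsilon$ for each $i$ and $|d_j - b_j| \le k\epsilon$ for each $j$.

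The second step uses the standard product rewrite
\[
q_{ij} - p_{ij} = c_i d_j - a_i b_j = (c_i - a_i)\, d_j + a_i\,(d_j - b_j),
\]
which, together with $a_i, d_j \in [0,1]$ and the marginal bounds just obtained, yields $|q_{ij} - p_{ij}| \le (k+l)\epsilon$ for every $(i,j)$. Choosing any $(i,j)$ inside the perturbed block and using $(-1)^{i+j} x = D_{ij} - (q_{ij} - p_{ij})$, the triangle inequality produces
\[
|x| \le |D_{ij}| + |q_{ij} - p_{ij}| \le \epsilon + (k+l)\epsilon = (k+l+1)\epsilon,
\]
which rearranges to the claim. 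There is no real obstacle; the only subtlety is verifying that the marginal-preservation of $tE$ survives when $k$ or $l$ is odd, but since $2\lfloor k/2\rfloor$ and $2\lfloor l/2\rfloor$ are both even, the alternating row- and column-sums of $E$ remain zero, and the argument is unaffected.
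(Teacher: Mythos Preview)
Your argument is correct and follows essentially the same route as the paper's proof: bound the marginal differences by summing rows and columns of $D$, use the product rewrite $c_id_j - a_ib_j = (c_i-a_i)d_j + a_i(d_j-b_j)$ to get $|q_{ij}-p_{ij}|\le(k+l)\epsilon$, and then read off $|t-s|$ from a single entry in the perturbed block. The paper carries this out only at the $(0,0)$ entry (writing $t = p(t)_{0,0} - p_{A,0}p_{B,0}$ and $s = q(s)_{0,0} - q_{A,0}q_{B,0}$), whereas you state it for a general $(i,j)$, but the logic is identical.
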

\begin{proof}
The proof is quite elementary: see Section \ref{sec:TechnicalLemmas} for details.
\end{proof}
Recall the well-known:
\begin{lem} \label{lem:Pinsker} \textbf{Pinsker's Inequality.}  Given any pair of distributions $q,p\in\mathcal{P}$,
\[
H(q\|p)\geq 2\| q - p \|_{\infty}^2.
\]
\end{lem}
\begin{proposition}\label{prop:SanovBetaBound}
Assume that $k=l=2$ and $\gamma\in (0,\eta)$, so that $t_{\gamma}^+< t_{\eta}^+$.  Then
\[
\log \beta_N^{p^\eta}(\gamma)\leq |X|\log (N+1)-\frac{2}{25}\left(t_{\eta}^+ - t_{\gamma}^+  \right)^2\cdot N.
\]
\end{proposition}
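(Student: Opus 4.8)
The plan is to apply Sanov's Theorem to the set $A_\gamma^0 = \{p\in\mathcal{P}\mid\tau(p)<\gamma\}$ with the ``true'' distribution being $p^\eta$. Sanov's Theorem (in the finite-alphabet form, e.g.\ Theorem 11.4.1 of \cite{cover2006elements}) gives
\[
\mathrm{Pr}_{\omega_N\sim p^\eta}\{p_{\omega_N}\in A_\gamma^0\}\leq (N+1)^{|X|}\exp\left(-N\cdot\inf_{q\in \overline{A_\gamma^0}}H(q\|p^\eta)\right),
\]
and by definition $\beta_N^{p^\eta}(\gamma)=\mathrm{Pr}_{\omega_N\sim p^\eta}\{\tau(p_{\omega_N})\leq\gamma\}$ is exactly this probability (up to the boundary, which is harmless since $H(\cdot\|p^\eta)$ is continuous). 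Taking logarithms, the claim reduces to the lower bound
\[
\inf_{q\in \overline{A_\gamma^0}}H(q\|p^\eta)\;\geq\;\frac{2}{25}\left(t_\eta^+ - t_\gamma^+\right)^2.
\]
The infimum on the left is attained at $q^\gamma$, the I-projection of $p^\eta$ onto $A_\gamma^0$, so I must show $H(q^\gamma\|p^\eta)\geq \frac{2}{25}(t_\eta^+-t_\gamma^+)^2$.

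First I would chain together the three lemmas provided. Pinsker's Inequality (Lemma \ref{lem:Pinsker}) gives $H(q^\gamma\|p^\eta)\geq 2\|q^\gamma-p^\eta\|_\infty^2$. Then I want to bound $\|q^\gamma-p^\eta\|_\infty$ from below using Lemma \ref{lem:LinfinityNormBoundedInTermsOf_t}. The subtlety is that Lemma \ref{lem:LinfinityNormBoundedInTermsOf_t} is stated for distributions $p(t),q(s)$ lying on the standard path based at a \emph{product} distribution, i.e.\ for $p,q\in\mathcal{P}_0$; it says $\|q(s)-p(t)\|_\infty\geq (k+l+1)^{-1}|t-s|$. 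For $k=l=2$ this constant is $(2+2+1)^{-1}=\tfrac15$. So the key geometric step is: $p^\eta$ lies on the standard path at parameter $t_\eta^+$ (based at $p^0$), and I need to know that $q^\gamma$ lies on the \emph{same} path at some parameter $s\leq t_\gamma^+$.

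The main obstacle — and the step needing the most care — is establishing that $q^\gamma$, the I-projection of $p^\eta$ onto $A_\gamma^0$, actually lies on the standard one-parameter path through $p^0$, and identifying its parameter. Since $p^\eta$ has uniform marginals and the I-projection onto the (convex in the right coordinates) region $A_\gamma^0$ should, by symmetry of $p^\eta$ under the relevant involution swapping the roles of the two diagonal/antidiagonal cells, also have uniform marginals and sit on that same path. Once that is granted, $q^\gamma\in\overline{A_\gamma^0}$ forces $\tau(q^\gamma)\leq\gamma$; since $\tau(p^0(t))$ is monotincreasing in $t$ for $t\geq 0$ and $\tau(p^0(t_\gamma^+))=\gamma$, we get the parameter $s$ of $q^\gamma$ satisfies $s\leq t_\gamma^+$, while $p^\eta$ sits at $t_\eta^+>t_\gamma^+\geq s$. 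Hence $|t_\eta^+-s|\geq t_\eta^+-t_\gamma^+$, and Lemma \ref{lem:LinfinityNormBoundedInTermsOf_t} yields $\|q^\gamma-p^\eta\|_\infty\geq \tfrac15(t_\eta^+-t_\gamma^+)$. Feeding this into Pinsker gives $H(q^\gamma\|p^\eta)\geq 2\cdot\tfrac1{25}(t_\eta^+-t_\gamma^+)^2 = \tfrac{2}{25}(t_\eta^+-t_\gamma^+)^2$, which is exactly what is needed. Combining with the Sanov bound above and taking $\log$ completes the proof; the $|X|\log(N+1)$ term comes directly from the $(N+1)^{|X|}$ polynomial prefactor in Sanov's Theorem.

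One should double-check the direction of the KL-divergence in Sanov (it is $H(\text{empirical}\|\text{true})=H(q\|p^\eta)$, which matches the definition of $q^\gamma$ as an I-projection of $p^\eta$), and that $A_\gamma^0$ is indeed an open set whose closure has the same infimum — both are standard. I would also remark that the restriction $\gamma<\eta$ is what makes $t_\gamma^+<t_\eta^+$ and hence the exponent genuinely negative; this is noted in the statement and needs the monotonicity of $t\mapsto\tau(p^0(t))$, which in the binary case is elementary.
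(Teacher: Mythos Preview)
Your overall strategy (Sanov $\to$ Pinsker $\to$ Lemma~\ref{lem:LinfinityNormBoundedInTermsOf_t}) matches the paper's exactly, but you manufacture an obstacle and then resolve it by an unproven claim.

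The gap is your assertion that the I-projection $q^\gamma$ has uniform marginals ``by symmetry of $p^\eta$ under the relevant involution.'' The paper explicitly warns, just before defining $q^\gamma$, that ``even though $p^\eta$ has uniform marginals (by definition), $q^\gamma$ does not necessarily have uniform marginals''; and whether $q^\gamma=p^\gamma$ is precisely Conjecture~\ref{conj:Iprojection} in the appendix, established only for $\gamma=0$. Your symmetry heuristic does not go through because $A_\gamma^0$ is not convex, so the standard uniqueness/symmetry arguments for I-projections onto convex sets do not apply.

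The obstacle is also unnecessary. Lemma~\ref{lem:LinfinityNormBoundedInTermsOf_t} does \emph{not} require $p(t)$ and $q(s)$ to share a base point; it allows arbitrary (possibly distinct) $p,q\in\mathcal{P}_0$. Any $q^\gamma\in\mathcal{P}$ automatically lies on the path based at the product distribution $q_0$ sharing its marginals, at some positive parameter; the paper's convention (see \S\ref{subsec:EdgeStrength}) is that $t_\gamma^+$ \emph{denotes} this parameter in the present context. With $p^\eta=p^0(t_\eta^+)$, Lemma~\ref{lem:LinfinityNormBoundedInTermsOf_t} gives $\|q^\gamma-p^\eta\|_\infty\geq\tfrac15\,|t_\eta^+-t_\gamma^+|$ directly, with no information about the marginals of $q^\gamma$ required. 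The remainder of your chain (Pinsker, then the Sanov prefactor $(N+1)^{|X|}$) is correct.
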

\begin{proof}
First use Sanov's Theorem, then Pinsker's Inequality (Lemma \ref{lem:Pinsker})
and finally Lemma \ref{lem:LinfinityNormBoundedInTermsOf_t}:
\begin{equation}
\begin{aligned}
\log \beta_N^{p^\eta}(\gamma)  &\leq |X|\log(N+1) - H(q^\gamma \|p^\eta)\cdot N\\
                                                   &\leq |X|\log(N+1) - 2\| q^\gamma  - p^\eta\|_{\infty}^2\cdot N\\
                                                   &\leq |X|\log(N+1) - 2\left(\frac{1}{5}\left|t_{\gamma}^+ - t_{\eta}^+\right|\right)^2\cdot N.
\end{aligned}
\end{equation}
The reason for the coefficient $\frac{2}{25}$ is that in the case $k=l=2$,
the constant $(k+l+1)^{-1}$ in Lemma \ref{lem:LinfinityNormBoundedInTermsOf_t}
is $\frac{1}{5}$.
\end{proof}
In order to study the dependence of $t_{\gamma}^+$ and $t_{\eta}^+$
in the special case $k=l=2$, we are going to use Taylor's Theorem
with Remainder, which allows us to write out a function
as a partial Taylor series (i.e. a polynomial) where the remaining
terms are given (estimated) using the mean value theorem.
We start with the following expression for the test statistic:
\[
\tau(p)=p_{0,0}\log\frac{p_{0,0}}{p_{A,0}p_{B,0}}+
p_{0,1}\log\frac{p_{0,1}}{p_{A,0}p_{B,1}}+
p_{1,0}\log\frac{p_{1,0}}{p_{A,1}p_{B,0}}+
p_{1,1}\log\frac{p_{1,1}}{p_{A,1}p_{B,1}}.
\]
Then for any $t\in \mathbf{R}$,
\begin{multline*}
\tau(p(t))=(p_{A,0}p_{B,0}+t)\log\frac{p_{A,0}p_{B,0}+t}{p_{A,0}p_{B,0}}+
(p_{A,0}p_{B,1}-t)\log\frac{p_{A,0}p_{B,1}-t}{p_{A,0}p_{B,1}}+\\
(p_{A,1}p_{B,0}-t)\log\frac{p_{A,0}p_{B,0}-t}{p_{A,1}p_{B,0}}+
(p_{A,1}p_{B,1}+t)\log\frac{p_{A,1}p_{B,1}+t}{p_{A,1}p_{B,1}}.
\end{multline*}
Using Taylor's Theorem to expand $\tau(p(t))$ for positive $t$
around the base point $0$ we obtain:
\[
\tau(p(t))=\tau(p(0)) + \left.\frac{\partial\tau(p(t))}{\partial t}\right|_{t=0}+
\left.\frac{1}{2}\frac{\partial^2\tau(p(t))}{\partial t^2}\right|_{t=s}t^2,\;
\text{for some}\; s\in (0,t).
\]
Since the value of this function and its first derivative are both zero at the base point $t=0$,
this reduces to 
\begin{equation}\label{eqn:SecondTaylorExpansion}
\begin{aligned}
\tau(p(t)) &= \frac{1}{2}\left.\frac{\partial^2\tau(p(t))}{\partial^2 t}\right|_{t=s}t^2,
                      \;\text{for some}\; s\in(0,t)\\
               &=\frac{1}{2}\left(
                \frac{1}{p_{A,0}p_{B,0}+s}+
                \frac{1}{p_{A,1}p_{B,0}-s}+
                \frac{1}{p_{A,0}p_{B,1}-s}+
                \frac{1}{p_{A,1}p_{B,1}+s}
                \right)t^2
\end{aligned}
\end{equation}
\begin{lem} \label{lem:uniformMarginalsUpperLowerBounds} 
For $p(t)\in\mathcal{P}_{2,2}$ \textbf{with uniform marginals}, that is where
$p(0)$ is the uniform distribution, we have $\tau(p(t))=\eta$ implies
\[
\frac{1}{2\sqrt{2}}\sqrt{\frac{\eta}{2\eta+1}} \leq t_{\eta}^+\leq \frac{1}{2\sqrt{2}}\sqrt{\eta}. 
\]
\end{lem}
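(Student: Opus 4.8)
The plan is to read off everything we need from the second-order Taylor expansion with remainder already recorded in \eqref{eqn:SecondTaylorExpansion}, specialized to the case of uniform marginals. When $p(0)$ is the uniform distribution in $\mathcal{P}_{2,2}$ we have $p_{A,0}=p_{A,1}=p_{B,0}=p_{B,1}=\tfrac12$, hence $p_{A,i}p_{B,j}=\tfrac14$ for all $i,j$. Substituting these four values into \eqref{eqn:SecondTaylorExpansion} collapses the sum of four reciprocals into
\[
\tau(p(t)) \;=\; \frac{1}{2}\left(\frac{2}{\tfrac14+s}+\frac{2}{\tfrac14-s}\right)t^2
\;=\;\frac{t^2}{2\left(\tfrac{1}{16}-s^2\right)}\qquad\text{for some } s\in(0,t).
\]
I would also note here that the admissible parameter range for the path based at the uniform distribution is $t\in(-\tfrac14,\tfrac14)$, so in particular $0<s<t<\tfrac14$ and the denominator $\tfrac{1}{16}-s^2$ is strictly positive and strictly less than $\tfrac1{16}$.

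Next I would apply this identity at $t=t_\eta^+$, where by definition $\tau(p(t_\eta^+))=\eta$, giving $\eta=(t_\eta^+)^2\big/\big(2(\tfrac1{16}-s^2)\big)$ for some $s\in(0,t_\eta^+)$. For the \emph{upper} bound on $t_\eta^+$ I would use $s\ge 0$, so $\tfrac1{16}-s^2\le\tfrac1{16}$, which forces $\eta\ge 8(t_\eta^+)^2$, i.e. $t_\eta^+\le\sqrt{\eta/8}=\frac{1}{2\sqrt2}\sqrt{\eta}$. For the \emph{lower} bound I would use $s<t_\eta^+$, hence $s^2<(t_\eta^+)^2$ and $\tfrac1{16}-s^2>\tfrac1{16}-(t_\eta^+)^2>0$ (the last positivity because $t_\eta^+<\tfrac14$); substituting this into the identity yields
\[
\eta \;<\; \frac{(t_\eta^+)^2}{2\left(\tfrac1{16}-(t_\eta^+)^2\right)},
\]
and rearranging gives $\tfrac{\eta}{8}<(t_\eta^+)^2(1+2\eta)$, i.e. $(t_\eta^+)^2>\tfrac{\eta}{8(2\eta+1)}$, which is exactly $t_\eta^+\ge\frac{1}{2\sqrt2}\sqrt{\tfrac{\eta}{2\eta+1}}$.

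There is no substantive obstacle here: the only thing to be careful about is the bookkeeping on the mean-value point $s$ (it lies strictly between $0$ and $t_\eta^+$, not between $0$ and $\tfrac14$ a priori) and the verification that $t_\eta^+<\tfrac14$ so that the lower-bound manipulation divides by a positive quantity. The latter is immediate from the domain of the path $p(t)$ for uniform marginals. Everything else is the elementary algebra sketched above.
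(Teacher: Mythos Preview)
Your proof is correct and follows essentially the same approach as the paper: both specialize the Taylor remainder formula \eqref{eqn:SecondTaylorExpansion} to uniform marginals and bound the second-derivative coefficient using $s\in(0,t_\eta^+)$. Your algebraic simplification to $\tau(p(t))=t^2/\bigl(2(\tfrac1{16}-s^2)\bigr)$ makes the monotonicity in $s$ transparent, so you avoid the paper's separate derivative computation showing $f(s)$ is increasing; otherwise the two arguments are identical.
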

\begin{proof}
By using the hypothesis that the marginals are uniform, \linebreak{$p_{A,0}=p_{B,0}=\frac{1}{2}$}, we have that \eqref{eqn:SecondTaylorExpansion} specializes to
\[
\tau(p(t)=-4\left(
\frac{1}{4s-1}-
\frac{1}{4s+1}
\right)t^2,\;\text{for some}\; s\in(0,t).
\]
Define
\[
f(s)=-4\left(
\frac{1}{4s-1}-
\frac{1}{4s+1}
\right),
\]
so that
\begin{equation}\label{eqn:tauInTermsOf_f}
\tau(p(t)) = f(s)\cdot t^2\;\text{for some}\; s\in(0,t).
\end{equation}
\noindent\textbf{Claim.}  The function $f(s)$ is increasing in $(0,t)$.
\begin{figure}\label{fig:fAndfPrime}
\centering
\hspace*{-1.3cm}\mbox{\subfigure{\includegraphics[width=3in]{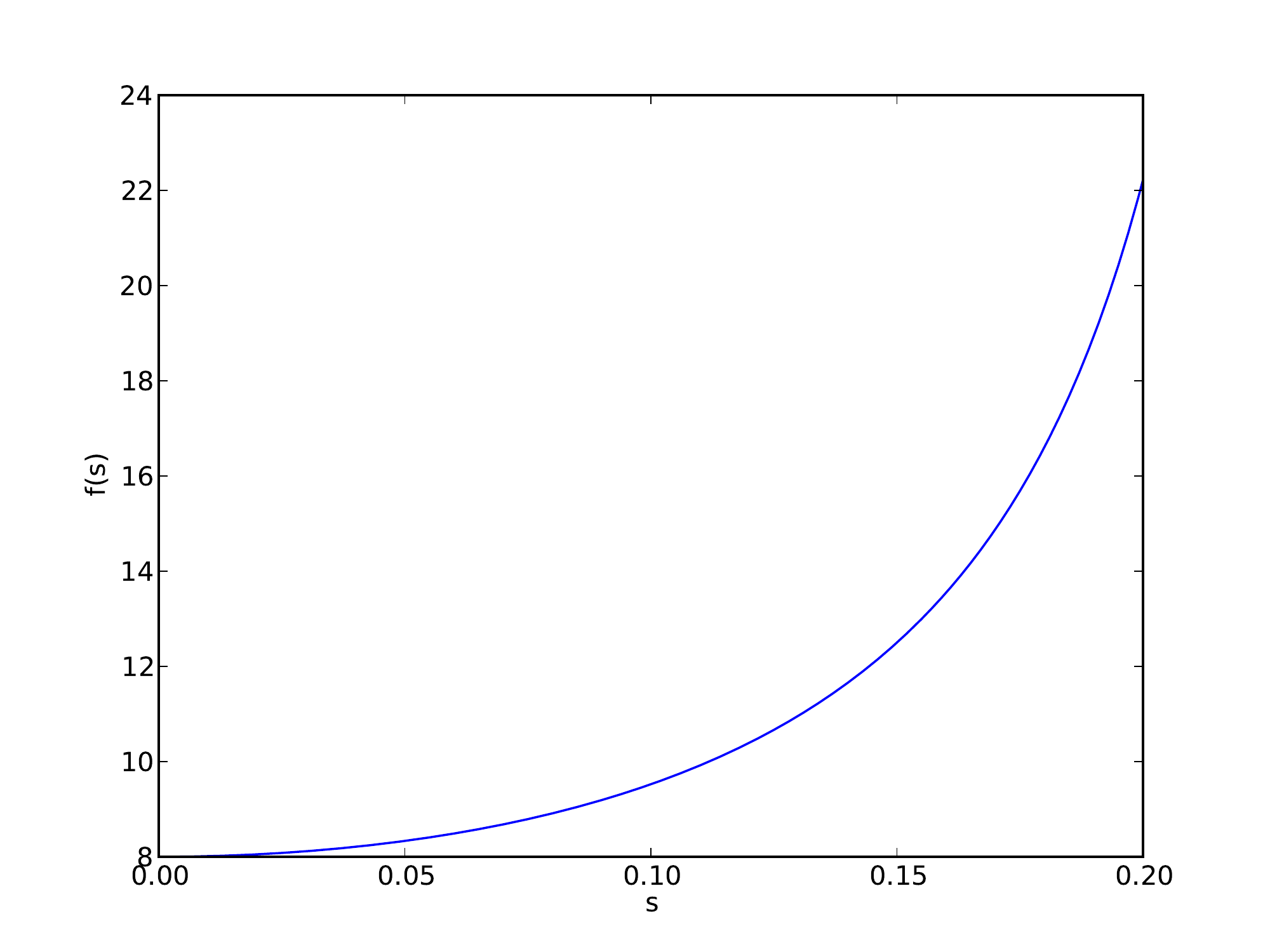}}\quad
\subfigure{\includegraphics[width=3in]{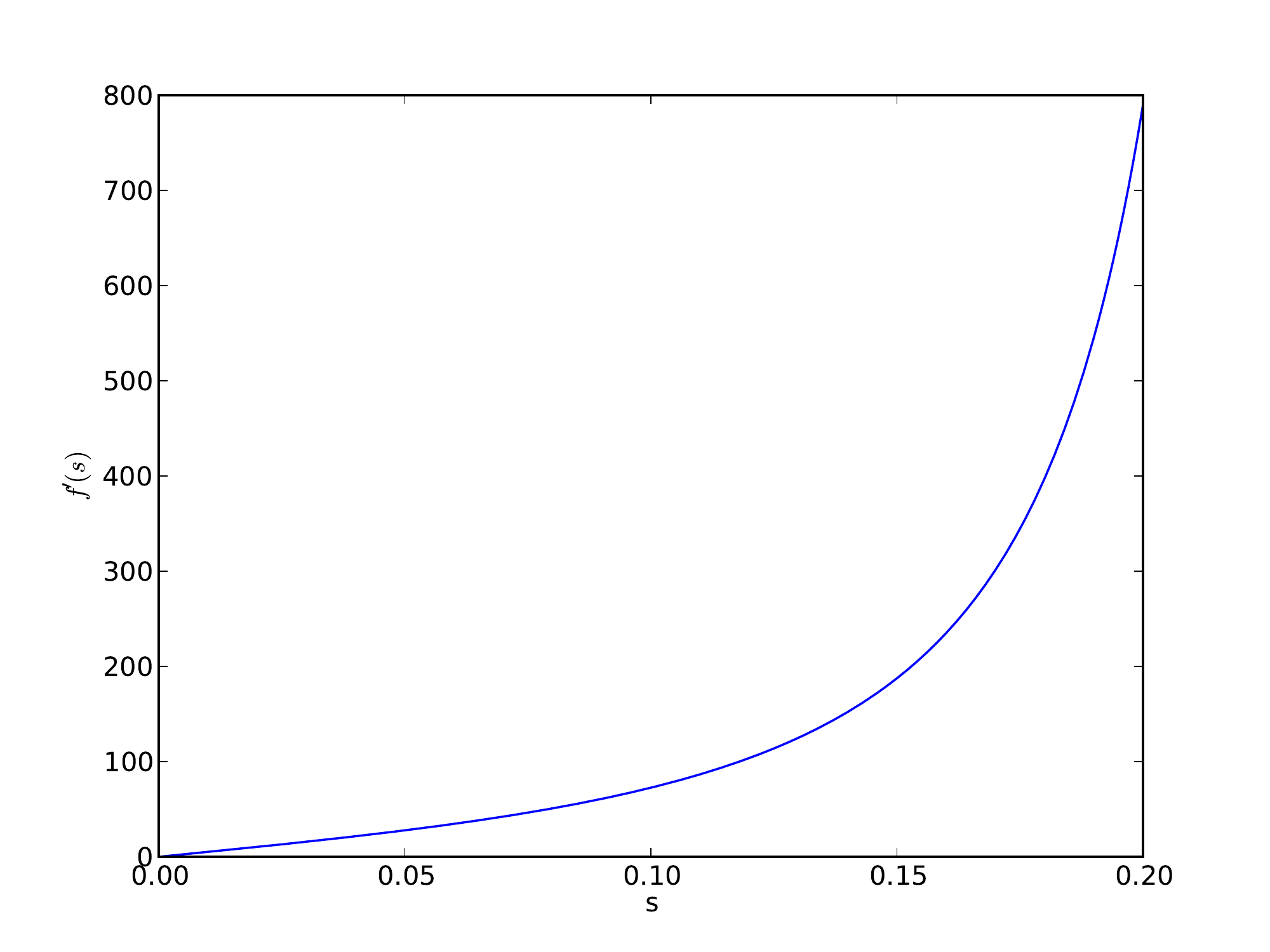} }}
\caption{Graphs of $f(s)$ and $f'(s)$ from the proof of Lemma \ref{lem:uniformMarginalsUpperLowerBounds},
showing the monotonicity of $f$ on the feasible interval.}
\end{figure}

\noindent In order to prove the claim we compute that
\begin{equation}\label{eqn:fderivativeWithRespTos}
\frac{\partial}{\partial s}f(s) = \frac{32}{(4s-1)^2} - \frac{32}{(4s+1)^2}.
\end{equation}
The derivative \eqref{eqn:fderivativeWithRespTos} has no zeros in $(0,t)$,
because we can easily solve to find that
\[
\frac{32}{(4s-1)^2} - \frac{32}{(4s+1)^2}=0 \Leftrightarrow s=0.
\]
Therefore, $f(s)$ has no critical points in $(0,t)$.  Further,
we can then show that the derivative \eqref{eqn:fderivativeWithRespTos} 
is positive for $s\in (0,t)$.  

Because of the Claim, and \eqref{eqn:tauInTermsOf_f}, we have
\[
f(0)t^2\leq \tau(p(t))\leq f(t)t^2,
\]
\[
8t^2\leq \tau(p(t))\leq -4\left(
                                         \frac{1}{4t-1}- \frac{1}{4t+1}
                                          \right)t^2,
\]
\[
8t^2\leq \eta\leq -4\left(
                                         \frac{1}{4t-1}- \frac{1}{4t+1}
                                          \right)t^2.
\]
Solving these inequalities for $t$ we obtain the bounds in Lemma \ref{lem:uniformMarginalsUpperLowerBounds}.
\end{proof}
Because $q^\gamma$ does not necessarily have uniform marginals, we need to derive
a more general, though slightly weaker, version of the upper bound on $t$ from Lemma \ref{lem:uniformMarginalsUpperLowerBounds}.
In order to do so, note that when $k=l=2$, for $i,j$ such that $0\leq i\leq k-1$ and $0\leq j\leq l-1$,
we have
\[
p_{A,i}p_{B,j}+(-1)^{i+j}s=p(s)_{i,j}
\] 
Substituting this equality in \eqref{eqn:SecondTaylorExpansion}, we may write the Taylor 
expansion in the second line of \eqref{eqn:SecondTaylorExpansion} as
\begin{equation}\label{eqn:secondTaylorAlternate}
\tau(p(t))=\frac{1}{2}\left(\sum_{i,j=0}^{1,1}p(s)_{i,j}^{-1}\right)t^2
\end{equation}
Note that, because the numbers $p(s)_{i,j}$ are the entries of the contingency
table of a probability distribution $p(s)\in\mathcal{P}_{2,2}$,
\[
\sum_{i,j=0}^{1,1}p(s)_{i,j}=1.
\]
and there are $k\cdot l$ of the numbers $p(s)_{i,j}$.  Therefore, at least one of the 
$p(s)_{i,j}$ is less than or equal to $(k\cdot l)^{-1}= \frac{1}{4}$.
Further, each of the terms in \eqref{eqn:secondTaylorAlternate} is positive.
So we have
\begin{equation}\label{eqn:secondTaylorBound}
\frac{1}{2}\left(\sum_{i,j=0}^{1,1}p(s)_{i,j}^{-1}\right)\geq \frac{1}{2}\cdot 4=2.
\end{equation}
\begin{lem}\label{lem:arbitraryMarginalsUpperBound}  For any $q(t)\in \mathcal{P}_{2,2}$, where $q(0)\in \mathcal{P}_0$ (product
distributions), we have $\tau(q(t))=\gamma$ implies
\begin{equation}\label{eqn:general_t_upperBound}
t\leq \sqrt{\frac{\gamma}{2}}.
\end{equation}
\end{lem}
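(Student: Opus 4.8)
The plan is to read off the bound directly from the second-order Taylor expansion with remainder already developed in \eqref{eqn:SecondTaylorExpansion}--\eqref{eqn:secondTaylorBound}, now applied to the path $q(t)$ rather than $p(t)$. First I would note that the derivation leading to \eqref{eqn:SecondTaylorExpansion} used only that the base point of the path lies in $\mathcal{P}_0$, so that both $\tau(q(0))=0$ and $\left.\frac{\partial}{\partial t}\tau(q(t))\right|_{t=0}=0$; hence Taylor's theorem with remainder gives, for some $s\in(0,t)$,
\[
\tau(q(t))=\frac{1}{2}\left.\frac{\partial^2\tau(q(t))}{\partial t^2}\right|_{t=s}t^2
=\frac{1}{2}\left(\sum_{i,j=0}^{1,1}q(s)_{i,j}^{-1}\right)t^2,
\]
exactly as in \eqref{eqn:secondTaylorAlternate}, where $q(s)_{i,j}=q_{A,i}q_{B,j}+(-1)^{i+j}s$ are the entries of the contingency table of the distribution $q(s)\in\mathcal{P}_{2,2}$.

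Next I would invoke the elementary estimate \eqref{eqn:secondTaylorBound}: since the four numbers $q(s)_{i,j}$ are nonnegative and sum to $1$, at least one of them is at most $\tfrac14$, so its reciprocal is at least $4$; as all four terms in the sum are positive (or $+\infty$, in the degenerate case that some entry vanishes, which only strengthens the inequality), we get $\tfrac12\sum_{i,j}q(s)_{i,j}^{-1}\ge 2$. Therefore
\[
\gamma=\tau(q(t))=\frac{1}{2}\left(\sum_{i,j=0}^{1,1}q(s)_{i,j}^{-1}\right)t^2\ge 2t^2,
\]
and solving for $t$ yields $t\le\sqrt{\gamma/2}$, which is \eqref{eqn:general_t_upperBound}.

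There is essentially no serious obstacle here; the only point requiring a word of care is that $q(s)$ for $s\in(0,t)$ must be a genuine point of $\mathcal{P}_{2,2}$ so that the expression $\sum q(s)_{i,j}^{-1}$ is meaningful—but this holds because $q(s)$ lies on the line segment between $q(0)\in\mathcal{P}_0$ and $q(t)$, both of which are probability distributions, and the path is affine in the parameter, so by convexity every intermediate $q(s)$ has nonnegative entries summing to $1$. (If an entry of some $q(s)$ is $0$, interpret the corresponding reciprocal as $+\infty$; the inequality $\tfrac12\sum q(s)_{i,j}^{-1}\ge 2$, and hence the conclusion, still holds.) This is why the argument, unlike Lemma \ref{lem:uniformMarginalsUpperLowerBounds}, gives only the upper bound and not a matching lower bound: the coefficient $\tfrac12\sum q(s)_{i,j}^{-1}$ can be arbitrarily large when the marginals of $q$ are far from uniform, so no lower bound on it independent of the marginals is available.
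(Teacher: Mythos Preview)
Your proposal is correct and follows essentially the same route as the paper: invoke the second-order Taylor expansion \eqref{eqn:secondTaylorAlternate} (valid because $q(0)\in\mathcal{P}_0$ makes the zeroth and first terms vanish), bound the second-derivative coefficient below by $2$ via \eqref{eqn:secondTaylorBound}, and solve $\gamma\ge 2t^2$ for $t$. Your added remarks on why $q(s)$ is a genuine probability distribution and why no lower bound is available are sound but go beyond what the paper records.
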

\begin{proof}
By \eqref{eqn:secondTaylorAlternate} and \eqref{eqn:secondTaylorBound},
\[
\tau(p(t)) \geq 2t^2.
\]
Thus,
\[
\gamma\geq 2t^2.
\]
Solving for $t$, we obtain \eqref{eqn:general_t_upperBound}.
\end{proof}
Using Lemma \ref{lem:uniformMarginalsUpperLowerBounds}
to bound $t_{\eta}^+$ from below and
and Lemma \ref{lem:arbitraryMarginalsUpperBound} to
bound $t_{\gamma}^+$ from above, in Proposition \ref{prop:SanovBetaBound},
we obtain the following upper bound on $\log\beta_N^{p^\eta}$.
\begin{lem}  \label{lem:SanovBoundOnSparsityBoost} Assume that $\gamma\leq \frac{1}{4}\frac{\eta}{2\eta+1}$.  Then
\[
\log\beta_N^{p^\eta}(\gamma)\leq |X|\log(N+1)-\frac{1}{25}\left(
\frac{1}{2}\sqrt{\frac{\eta}{2\eta+1}}-\sqrt{\gamma}
\right)^2\cdot N.
\]
\end{lem}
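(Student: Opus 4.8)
The plan is to combine Proposition \ref{prop:SanovBetaBound}, which already gives
\[
\log \beta_N^{p^\eta}(\gamma)\leq |X|\log (N+1)-\frac{2}{25}\left(t_{\eta}^+ - t_{\gamma}^+  \right)^2\cdot N,
\]
with the two bounds on the path-parameters $t_\eta^+$ and $t_\gamma^+$ established in Lemmas \ref{lem:uniformMarginalsUpperLowerBounds} and \ref{lem:arbitraryMarginalsUpperBound}. Recall that $p^\eta$ has uniform marginals by definition, so $t_\eta^+$ is the positive parameter with $\tau(p^0(t_\eta^+))=\eta$ and Lemma \ref{lem:uniformMarginalsUpperLowerBounds} applies to give the lower bound $t_\eta^+\geq \tfrac{1}{2\sqrt 2}\sqrt{\tfrac{\eta}{2\eta+1}}$. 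On the other hand $q^\gamma$, the I-projection of $p^\eta$ onto $A_\gamma^0$, need not have uniform marginals, so for $t_\gamma^+$ we must fall back on the weaker but marginal-free estimate of Lemma \ref{lem:arbitraryMarginalsUpperBound}, namely $t_\gamma^+\leq \sqrt{\gamma/2}$. (Strictly, $t_\gamma^+$ is the positive parameter attached to the distribution $q^\gamma$ with $\tau(q^\gamma)=\gamma$; this is exactly the setting of Lemma \ref{lem:arbitraryMarginalsUpperBound} with $q=q^\gamma$.)

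The key algebraic step is then to check that under the hypothesis $\gamma\leq \tfrac14\tfrac{\eta}{2\eta+1}$ we have $t_\eta^+ - t_\gamma^+ \geq 0$, so that squaring the difference of the two one-sided bounds is legitimate and order-preserving. Indeed, from the two displayed inequalities,
\[
t_\eta^+ - t_\gamma^+ \;\geq\; \frac{1}{2\sqrt 2}\sqrt{\frac{\eta}{2\eta+1}} - \sqrt{\frac{\gamma}{2}}
\;=\; \frac{1}{\sqrt 2}\left(\frac12\sqrt{\frac{\eta}{2\eta+1}} - \sqrt{\gamma}\right),
\]
and the hypothesis $\gamma\leq \tfrac14\tfrac{\eta}{2\eta+1}$ is precisely the statement that $\sqrt\gamma\leq \tfrac12\sqrt{\tfrac{\eta}{2\eta+1}}$, so the parenthesized quantity, and hence $t_\eta^+-t_\gamma^+$, is nonnegative. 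Squaring, $\left(t_\eta^+-t_\gamma^+\right)^2 \geq \tfrac12\left(\tfrac12\sqrt{\tfrac{\eta}{2\eta+1}}-\sqrt\gamma\right)^2$.

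Finally I would substitute this lower bound on $\left(t_\eta^+ - t_\gamma^+\right)^2$ into Proposition \ref{prop:SanovBetaBound}; the factor $\tfrac{2}{25}$ combines with the $\tfrac12$ coming from the squared difference to give the coefficient $\tfrac{1}{25}$ in the claimed bound, while the $|X|\log(N+1)$ term is carried over unchanged. I do not expect any genuine obstacle here: all the hard analytic work (Sanov's theorem, Pinsker, the $L^\infty$-to-parameter comparison, and the Taylor-remainder estimates on $t_\eta^+$, $t_\gamma^+$) is already packaged into the cited results, and what remains is the short monotonicity/substitution argument above. The one point requiring a modicum of care is the sign check ensuring $t_\eta^+\geq t_\gamma^+$, which is exactly why the hypothesis $\gamma\leq \tfrac14\tfrac{\eta}{2\eta+1}$ (rather than merely $\gamma<\eta$) is imposed; this also guarantees that the quantity $\tfrac12\sqrt{\tfrac{\eta}{2\eta+1}}-\sqrt\gamma$ appearing inside the square in the conclusion is itself nonnegative, so the bound is stated in a clean form.
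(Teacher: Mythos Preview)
Your proposal is correct and follows exactly the approach the paper uses: combine Proposition~\ref{prop:SanovBetaBound} with the lower bound on $t_\eta^+$ from Lemma~\ref{lem:uniformMarginalsUpperLowerBounds} and the upper bound on $t_\gamma^+$ from Lemma~\ref{lem:arbitraryMarginalsUpperBound}, then carry out the substitution. Your write-up is in fact more explicit than the paper's one-sentence justification, correctly isolating the role of the hypothesis $\gamma\le\tfrac14\tfrac{\eta}{2\eta+1}$ in ensuring the nonnegativity needed for the squaring step.
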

We can then use this bound on $\log \beta_N^{p^\eta}(\gamma)$ to bound
the difference of objective functions.
\begin{lem}  Assume that $\gamma\leq \frac{1}{4}\frac{\eta}{2\eta+1}$.  Define
\begin{equation}\label{eqn:kappaPrimeDefn}
\kappa' := \kappa-\frac{|X|\log\left(\frac{N+1}{N}\right)}{\log N}.
\end{equation}
Then
\begin{equation}\label{eqn:objectiveFunctionDifferenceSanov}
\mathcal{S}_{\eta,N}(\gamma)\geq \left(\frac{1}{25}\left(
\frac{1}{2}\sqrt{\frac{\eta}{2\eta+1}}-\sqrt{\gamma}
\right)^2-\gamma\right) N+(\kappa'-|X|)\log N.
\end{equation}
\end{lem}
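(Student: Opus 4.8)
The plan is to chain the definition of $\mathcal{S}_{\eta,N}$ with the Sanov-based bound just established, so that essentially no new analytic work is needed. Recall from \eqref{eqn:scriptSdefn} that $\mathcal{S}_{\eta,N}(\gamma) = -N\gamma + \kappa\log N - \log\beta_N^{p^\eta}(\gamma)$. The hypothesis $\gamma\le \tfrac14\tfrac{\eta}{2\eta+1}$ is precisely what is required to invoke Lemma \ref{lem:SanovBoundOnSparsityBoost}, which supplies the upper bound $\log\beta_N^{p^\eta}(\gamma)\le |X|\log(N+1) - \tfrac1{25}\bigl(\tfrac12\sqrt{\tfrac{\eta}{2\eta+1}}-\sqrt\gamma\bigr)^2 N$. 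Negating this yields the lower bound $-\log\beta_N^{p^\eta}(\gamma)\ge -|X|\log(N+1) + \tfrac1{25}\bigl(\tfrac12\sqrt{\tfrac{\eta}{2\eta+1}}-\sqrt\gamma\bigr)^2 N$ on the sparsity-boost term.

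Next I would substitute this into the displayed formula for $\mathcal{S}_{\eta,N}(\gamma)$ and group the two terms that are linear in $N$, namely $-N\gamma$ and $\tfrac{N}{25}\bigl(\cdots\bigr)^2$, to obtain
\[
\mathcal{S}_{\eta,N}(\gamma)\ \ge\ \left(\tfrac1{25}\bigl(\tfrac12\sqrt{\tfrac{\eta}{2\eta+1}}-\sqrt\gamma\bigr)^2 - \gamma\right)N \ +\ \kappa\log N \ -\ |X|\log(N+1).
\]
The only remaining step is to absorb the $\log(N+1)$ discrepancy into a $\log N$ term. Writing $\log(N+1) = \log N + \log\tfrac{N+1}{N}$ gives $\kappa\log N - |X|\log(N+1) = (\kappa-|X|)\log N - |X|\log\tfrac{N+1}{N}$, and by the definition \eqref{eqn:kappaPrimeDefn} of $\kappa'$ this is identically equal to $(\kappa'-|X|)\log N$. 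Plugging this in produces exactly the claimed inequality \eqref{eqn:objectiveFunctionDifferenceSanov}.

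I do not anticipate any genuine obstacle in this lemma: the real content has been pushed upstream into Lemma \ref{lem:SanovBoundOnSparsityBoost} (and, behind it, Lemmas \ref{lem:uniformMarginalsUpperLowerBounds} and \ref{lem:arbitraryMarginalsUpperBound}), so what remains is bookkeeping. The one point worth flagging is that $\kappa'$ is \emph{not} a constant — it depends on $N$ through $\log\tfrac{N+1}{N}$ — so the clean-looking right-hand side $(\kappa'-|X|)\log N$ is just shorthand for $\kappa\log N - |X|\log(N+1)$; any later step that solves for $N$ to force $\mathcal{S}_{\eta,N}(\gamma)>0$ must remember this, though since $\log\tfrac{N+1}{N}\to 0$ the quantity $\kappa'$ behaves asymptotically like the constant $\kappa$ and the $\mathcal{W}$-type inversions go through unchanged.
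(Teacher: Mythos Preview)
Your proof is correct and follows essentially the same route as the paper's: invoke Lemma \ref{lem:SanovBoundOnSparsityBoost} to bound $-\log\beta_N^{p^\eta}(\gamma)$, substitute into the definition \eqref{eqn:scriptSdefn} of $\mathcal{S}_{\eta,N}$, and then use the identity coming from \eqref{eqn:kappaPrimeDefn} to rewrite $\kappa\log N - |X|\log(N+1)$ as $(\kappa'-|X|)\log N$. Your closing caveat that $\kappa'$ depends on $N$ is also well-taken and mirrors how the paper handles this later when deriving explicit $N$-conditions.
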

\begin{proof}
By \eqref{eqn:scriptSdefn},
\[
\begin{aligned}
\mathcal{S}_{\eta,N}(\gamma) &= -N\gamma + \kappa\log(N) -\log\beta_N^{p^\eta}(\gamma)\\
 &\geq  -N\gamma + \kappa\log(N) -|X|\log(N+1)-\frac{1}{25}\left(
\frac{1}{2}\sqrt{\frac{\eta}{2\eta+1}}-\sqrt{\gamma}
\right)^2\cdot N\\
\end{aligned}
\]
By \eqref{eqn:kappaPrimeDefn},
\[
\kappa\log(N)=\kappa'\log(N)+|X|\log(N+1)-|X|\log N.
\]
Substituting this into the above expression and gathering together
the terms involving $N$ and $\log N$ yields the estimate \eqref{eqn:objectiveFunctionDifferenceSanov}.
\end{proof}
In order to find $\eta_N^-$, what remains is to find conditions
on $\gamma$ for the right-hand side of \eqref{eqn:objectiveFunctionDifferenceSanov}
to be positive.  Conveniently, \eqref{eqn:objectiveFunctionDifferenceSanov}
amounts to a quadratic polynomial in $\sqrt{\gamma}$ which
we can solve analytically.
\begin{proposition} \label{prop:etaNminus}Assume
$N>0$ is fixed.  Let $\kappa'$ be as defined in \eqref{eqn:kappaPrimeDefn}.
Assume that
\begin{equation}\label{eqn:conditionForRadicalRealWFunction}
N>\exp \mathcal{W}\left(
\frac{\eta}
{96(2\eta+1)(|X|-\kappa')}
\right).
\end{equation}
then if we define
\begin{equation}
\eta_N^- := \left(
\frac{-\sqrt{\frac{\eta}{2\eta+1}} + \sqrt{  25\cdot\frac{\eta}{2\eta+1} -2400(|X-\kappa'|) \frac{\log N}{N}  }}
         {48}
\right)^2.
\end{equation}
Then we have
\[
\mathcal{S}_{\eta,N}(\gamma)\geq 0,\;\text{for}\; \gamma\in(0,\eta_N^-).
\]
Further,
\[
\eta_N^-\rightarrow \frac{\eta}{144(2\eta+1)},\;\text{as}\; N\rightarrow\infty.
\]
\end{proposition}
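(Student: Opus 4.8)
The plan is to read the lower bound \eqref{eqn:objectiveFunctionDifferenceSanov} as a positive multiple of a quadratic polynomial in the single variable $u:=\sqrt{\gamma}$ and to solve that quadratic in closed form. Put $a:=\tfrac12\sqrt{\tfrac{\eta}{2\eta+1}}$ and $c:=\tfrac{(|X|-\kappa')\log N}{N}$. Then \eqref{eqn:objectiveFunctionDifferenceSanov} reads
\[
\mathcal{S}_{\eta,N}(\gamma)\ \geq\ N\Big[\tfrac{1}{25}(a-u)^2-u^2-c\Big],\qquad u=\sqrt{\gamma},
\]
and this is available whenever $\gamma\le\tfrac14\tfrac{\eta}{2\eta+1}$. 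Since $N>0$, it suffices to find the set of $u>0$ on which the bracket is nonnegative; multiplying by $-25$, this is exactly the condition $q(u)\le 0$, where
\[
q(u):=24u^2+2au-a^2+25c .
\]

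First I would solve $q(u)\le 0$. The parabola $q$ opens upward, so $q(u)\le 0$ holds precisely on the interval between its two roots, and its larger root is
\[
u_{+}=\frac{-2a+\sqrt{4a^2+96a^2-96\cdot 25c}}{48}=\frac{-2a+\sqrt{100a^2-2400c}}{48}.
\]
Substituting $2a=\sqrt{\tfrac{\eta}{2\eta+1}}$ and $100a^2=25\,\tfrac{\eta}{2\eta+1}$ and unwinding $c$ turns $u_{+}$ into exactly $\sqrt{\eta_N^{-}}$ as defined in the statement. Because the smaller root of $q$ is negative—its numerator $-2a-\sqrt{100a^2-2400c}$ is—and the larger root satisfies $u_{+}\ge 0$ once $N$ is past the threshold in the Proposition, $q\le 0$ throughout $[0,u_{+}]$, so $\mathcal{S}_{\eta,N}(\gamma)\ge 0$ for all $\gamma\in(0,\eta_N^{-})$. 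One also checks that $\eta_N^{-}\le\tfrac{\eta}{144(2\eta+1)}<\tfrac14\tfrac{\eta}{2\eta+1}$, which guarantees that the side condition built into \eqref{eqn:objectiveFunctionDifferenceSanov} is automatically satisfied on this whole range.

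Next I would identify \eqref{eqn:conditionForRadicalRealWFunction} as precisely the hypothesis that makes the radical real. The discriminant condition $100a^2-2400c\ge 0$ is equivalent to $\tfrac{\log N}{N}\le\tfrac{\eta}{96(2\eta+1)(|X|-\kappa')}$; and because $t\mapsto\tfrac{\log t}{t}$ is decreasing on $t>e$ and, by the defining property $\mathcal{W}=-W_{-1}(-\cdot)$ of the real non-principal Lambert branch, equals $c'$ at $t=e^{\mathcal{W}(c')}$, the inequality $\tfrac{\log N}{N}\le c'$ holds for every $N\ge e^{\mathcal{W}(c')}$. Taking $c'=\tfrac{\eta}{96(2\eta+1)(|X|-\kappa')}$ yields \eqref{eqn:conditionForRadicalRealWFunction}. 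Finally, the limit $\eta_N^{-}\to\tfrac{\eta}{144(2\eta+1)}$ is immediate: $\tfrac{\log N}{N}\to 0$, and by \eqref{eqn:kappaPrimeDefn} we have $\kappa'\to\kappa$ since $\log\tfrac{N+1}{N}\to 0$ while $\log N\to\infty$, so $c\to 0$; then $u_{+}\to\frac{-\sqrt{\eta/(2\eta+1)}+5\sqrt{\eta/(2\eta+1)}}{48}=\frac{\sqrt{\eta/(2\eta+1)}}{12}$, whence $\eta_N^{-}=u_{+}^2\to\tfrac{\eta}{144(2\eta+1)}$.

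I do not expect a real obstacle: the argument is an explicit root of a quadratic together with the Lambert-$W$ dictionary for inequalities of the form $\tfrac{\log N}{N}\le c'$ that the paper has already recorded. The only points needing a little care are the bookkeeping of the numerical constants $25,24,48,96,2400$ and confirming that the hypothesis $\gamma\le\tfrac14\tfrac{\eta}{2\eta+1}$ of \eqref{eqn:objectiveFunctionDifferenceSanov} is never violated on $(0,\eta_N^{-})$ — both routine.
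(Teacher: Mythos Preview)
Your proposal is correct and follows essentially the same approach as the paper: both reduce \eqref{eqn:objectiveFunctionDifferenceSanov} to a quadratic inequality in $\sqrt{\gamma}$, solve it explicitly, translate the discriminant condition into \eqref{eqn:conditionForRadicalRealWFunction} via the Lambert-$W$ dictionary, and read off the limit as $N\to\infty$. Your substitutions $a=\tfrac12 y$, $c=C$, $u=x$ are just a relabeling of the paper's variables, and you add the small but useful observation that $\eta_N^{-}\le \tfrac{\eta}{144(2\eta+1)}<\tfrac14\tfrac{\eta}{2\eta+1}$, which the paper leaves implicit.
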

\begin{proof}
By \eqref{eqn:objectiveFunctionDifferenceSanov}, we have $\mathcal{S}_{\eta,N}(\gamma)\geq 0$ provided that
\begin{equation}\label{eqn:quadraticBeforeSubstitution}
\frac{1}{25}\left(
\frac{1}{2}\sqrt{\frac{\eta}{2\eta+1}}-\sqrt{\gamma}
\right)^2-\gamma+(\kappa'-|X|)\frac{\log N}{N}\geq 0.
\end{equation}
In order to make the formulas easier to manipulate, we set, for the purposes
of this proof
\[
C:=(|X|-\kappa')\frac{\log N}{N},
\]
\[
y:=\sqrt{\frac{\eta}{2\eta+1}},
\]
and
\[
x:=\sqrt{\gamma}.
\]
We substitute these definitions into \eqref{eqn:quadraticBeforeSubstitution} to see that we are seeking $x$ for which
\[
\frac{1}{25}\left(\frac{1}{2}y-x\right)^2-x^2-C\geq 0.
\]
Expanding the squared difference and gathering terms according to the power of $x$,
we see that this is equivalent to
\[
-100x^2+4\left(\frac{1}{2}y-x\right)^2-100C\geq 0,
\]
that is,
\[
-96x^2-4yx+(y^2-100C)\geq 0
\]
Solving the corresponding quadratic equation, we obtain the solutions
\[
x=\frac{-4y\pm 4\sqrt{25y^2-2400C}}{192}.
\]
Since $x=\sqrt{\gamma}\geq0$, we are interested in the nonnegative solution
\begin{equation}\label{eqn:xSolutionInTermsOf_yC}
x=\frac{-4y + 4\sqrt{25y^2-2400C}}{192}.
\end{equation}
In order for \eqref{eqn:xSolutionInTermsOf_yC} to define a real number, we must have 
positivity of the expression inside the radical, which is equivalent to
\[
C<\frac{y^2}{96},
\]
which is equivalent to
\begin{equation}\label{eqn:conditionForRadicalReal}
\frac{\log N}{N} < \frac{\eta}{96(2\eta+1)(|X|-\kappa')}.
\end{equation}
Condition \eqref{eqn:conditionForRadicalReal} is satisfied
as long as \eqref{eqn:conditionForRadicalRealWFunction} in the Proposition is satisfied.
Returning to the condition \eqref{eqn:xSolutionInTermsOf_yC}, in terms
of $\gamma,\eta,N$, this condition says that
\[
\gamma \leq \left(
\frac{-\sqrt{\frac{\eta}{2\eta+1}} + \sqrt{  25\cdot\frac{\eta}{2\eta+1} -2400(|X-\kappa'|) \frac{\log N}{N}  }}
         {48}
\right)^2.
\]
Thus we can set $\eta_N^-$
equal to the right-side of this inequality.  As $N\rightarrow\infty$, the subtracted term inside the second radical in the numerator approaches $0$, so that the entire numerator approaches
$4\sqrt{\frac{\eta}{2\eta+1}}$.  Thus the entire right-hand side approaches
\[
\left(
\frac{\sqrt{\frac{\eta}{2\eta+1}}}{12}
\right)^2
=
\frac{\eta}{144(2\eta+1)}.
\]
\end{proof}
It remains to carry out the second part of the strategy
outlined at the beginning of Section \ref{subsec:refinementTwoNodes} which is to bound the error probability \eqref{eqn:errorProbabilitySanovAnalysis}.
Set $\delta$ equal to this error probability, which is to say
\[
\delta :=\mathrm{Pr}_{\omega_N\sim p_0}\left\{
\tau(\omega_N)>\eta_N^-
\right\}.
\]
Then Sanov's Theorem says that
\begin{equation}\label{eqn:SanovsEstimateOfLogErrorProbability}
\log\delta \leq|X|\log(N+1)-H(\mathcal{P}_{\eta_N^-} \|p_0  )N,
\end{equation}
where by definition
\begin{equation}\label{eqn:setKLDivergenceDefinition}
H(\mathcal{P}_{\eta_N^-} \|p_0  ):=\inf\left\{
H(q\| p_0)\;|\; q\in \mathcal{P}_{\eta_N^-}
\right\}.
\end{equation}
\nomenclature{$H(A \lvert \rvert p_0  )$}{$\inf\left\{
H(q\lvert \rvert p_0)\;\lvert \; q\in A
\right\}$}
More generally we can derive an estimate, for $\gamma>0$, of the minimum KL-divergence
of an element in the set $\mathcal{P}_{\gamma}$
based at $p_0\in\mathcal{P}_0$.
\begin{lem} \label{lem:IprojectionEstimateSanov} We have for any $\gamma>0$,
\[
H(\mathcal{P}_{\gamma}\|p_0)\geq \gamma.
\]
\end{lem}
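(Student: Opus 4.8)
The plan is to reduce the claim to the Pythagorean (orthogonality) decomposition of relative entropy relative to the $M$-projection onto $\mathcal{P}_0$. Fix an arbitrary product distribution $p_0\in\mathcal{P}_0$ and an arbitrary $q\in\mathcal{P}_\gamma$, and let $q_0:=p_0(q_A,q_B)$ denote the product of the marginals of $q$, i.e.\ the $M$-projection of $q$ onto $\mathcal{P}_0$. The first step is to establish the identity
\[
H(q\|p_0)=H(q\|q_0)+H(q_0\|p_0).
\]
This is a one-line computation: writing $p_0=p_0(p_{0,A},p_{0,B})$ and splitting $\log\frac{q_{i,j}}{p_{0,A,i}p_{0,B,j}}=\log\frac{q_{i,j}}{q_{A,i}q_{B,j}}+\log\frac{q_{A,i}q_{B,j}}{p_{0,A,i}p_{0,B,j}}$, summing against $q_{i,j}$, and using $\sum_j q_{i,j}=q_{A,i}$, $\sum_i q_{i,j}=q_{B,j}$ to recognize the second sum as $H(q_0\|p_0)$. (Equivalently, this is the standard orthogonality relation for $M$-projections onto the log-linear family $\mathcal{P}_0$.)

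Next, by the definition of the test statistic, $H(q\|q_0)=H(q\|\mathcal{P}_0)=\tau(q)$, since $q_0$ is exactly the distribution realizing the minimum in the definition of $\tau$. Relative entropy is nonnegative, so $H(q_0\|p_0)\ge0$, and the displayed identity gives $H(q\|p_0)\ge\tau(q)$. Because $q\in\mathcal{P}_\gamma$ means $\tau(q)\ge\gamma$, we obtain $H(q\|p_0)\ge\gamma$ for every $q\in\mathcal{P}_\gamma$. Taking the infimum over $q\in\mathcal{P}_\gamma$ then yields $H(\mathcal{P}_\gamma\|p_0)\ge\gamma$ in the sense of \eqref{eqn:setKLDivergenceDefinition}.

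There is no genuine obstacle here; the only point to be careful about is that $p_0$ is an arbitrary element of $\mathcal{P}_0$ rather than $q_0$ itself, which is precisely why the cross term $H(q_0\|p_0)$ appears and must be dropped by nonnegativity instead of vanishing. The argument goes through verbatim for arbitrary $k,l$, not just the binary case $k=l=2$.
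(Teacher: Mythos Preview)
Your proof is correct and follows essentially the same approach as the paper's: both reduce the claim to $H(q\|p_0)\ge H(q\|q_0)=\tau(q)\ge\gamma$ via the $M$-projection $q_0$ of $q$ onto $\mathcal{P}_0$. The only difference is that the paper simply invokes the defining minimality property of the $M$-projection to get $H(q\|p_0)\ge H(q\|q_0)$, whereas you establish the full Pythagorean identity $H(q\|p_0)=H(q\|q_0)+H(q_0\|p_0)$ explicitly and then drop the nonnegative cross term; this is a slightly stronger intermediate statement but the overall argument is the same.
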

\begin{proof}
For the purposes of this proof, for $q\in \mathcal{P}_{\gamma}$,
we let
\[
 q_0 \;\text{denote the M-projection of}\; q\;\text{onto}\; \mathcal{P}_0.
\]  
By definition of the M-projection,
\[
H(q\| p_0)\geq H(q\|q_0).
\]
But $H(q\| q_0)$ is the mutual information $\tau(q)$ of $q$, and $\mathcal{P}_{\gamma}$
is defined as the set of $q\in\mathcal{P}$ such that $\tau(q)\geq \gamma$,
so that $H(q\|p_0)\geq \gamma$ for all $q\in \mathcal{P}_{\gamma}$.
So the above says that $H(q\| p_0)\geq \gamma$ for all $q\in\mathcal{P}_0$.  By
\eqref{eqn:setKLDivergenceDefinition} this completes the proof.
\end{proof}
By applying Lemma \ref{lem:IprojectionEstimateSanov} with $\gamma=\eta_N^-$ to \eqref{eqn:SanovsEstimateOfLogErrorProbability},
we obtain
\begin{equation}\label{eqn:errorProbabilityInTermsOfGamma}
\log\delta \leq|X|\log(N+1)-\eta_N^-\cdot N,
\end{equation}
which allows us to prove the following estimate of the error probability.
\begin{proposition}\label{prop:SanovErrorEstimate}
Let $p_0\in \mathcal{P}_0$.  Let $\delta>0$ be given. Assume that
\begin{equation}\label{eqn:NconditionSanovProof}
N>\frac{|X|}{\eta_N^-}
\mathcal{W}\left(
\frac{\eta_N^-\delta^{\frac{|X|}{\eta_N^-}}}
{|X|\exp\left(\frac{\eta_N^-}{|X|}\right)}
\right)
-1.
\end{equation}
Then,
\[
\mathrm{Pr}_{\omega_N\sim p_0}\left\{
\tau(\omega_N)\geq \eta_N^-
\right\}<\delta.
\]
\begin{proof}  
By using \eqref{eqn:errorProbabilityInTermsOfGamma}, we see that we will
have the claimed estimate on the error probability if
\[
N-\frac{|X|}{\eta_N}\log(N+1)\geq \log\delta^{-1}.
\]
Solving this inequality for $N$, we obtain \eqref{eqn:NconditionSanovProof}.
\end{proof}
\end{proposition}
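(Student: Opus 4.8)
The plan is to read the target probability as a large–deviation event for the empirical distribution, estimate it with Sanov's Theorem, substitute in the exponential–rate lower bound already supplied by Lemma~\ref{lem:IprojectionEstimateSanov}, and then invert the resulting transcendental inequality in $N$ using the Lambert $W$ function.

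Concretely, the steps in order: (1) Observe that $\{\tau(\omega_N)\geq\eta_N^-\}$ is exactly the event $\{p_{\omega_N}\in\mathcal{P}_{\eta_N^-}\}$ and that $\omega_N$ is drawn i.i.d.\ from $p_0\in\mathcal{P}_0$; Sanov's Theorem for the closed set $\mathcal{P}_{\eta_N^-}$ over the $|X|$-point alphabet then bounds the probability by $(N+1)^{|X|}\exp(-N\,H(\mathcal{P}_{\eta_N^-}\|p_0))$, i.e.\ \eqref{eqn:SanovsEstimateOfLogErrorProbability}. (2) Apply Lemma~\ref{lem:IprojectionEstimateSanov} with $\gamma=\eta_N^-$ to replace the rate $H(\mathcal{P}_{\eta_N^-}\|p_0)$ by its lower bound $\eta_N^-$, obtaining \eqref{eqn:errorProbabilityInTermsOfGamma}, namely $\log(\text{error})\leq |X|\log(N+1)-\eta_N^-\,N$. (3) Demand that the right-hand side be at most $\log\delta$, which is the transcendental inequality $\eta_N^-\,N-|X|\log(N+1)\geq\log\delta^{-1}$; with the substitution $u=N+1$ this has the shape $c_1u-c_2\log u\geq c_3$, which one rearranges into $z\,e^{z}\geq(\text{const})$ with $z=-(\eta_N^-/|X|)(N+1)$, inverts using the branch $W_{-1}$ (on which $z e^{z}$ is strictly decreasing, so the inequality direction is preserved), and finally rewrites $-W_{-1}(-\cdot)=\mathcal{W}(\cdot)$ to land on the explicit bound \eqref{eqn:NconditionSanovProof}, the $-1$ coming from the $u=N+1$ shift.

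I expect the Lambert $W$ inversion to be the only step needing real care. One must check that the argument of $\mathcal{W}$ stays inside its domain $(0,1/e)$ — which holds for $\delta$ small and is consistent with the hypotheses of Proposition~\ref{prop:etaNminus} under which $\eta_N^-$ is defined and positive — and that $z=-(\eta_N^-/|X|)(N+1)$ lands on the branch $z<-1$ where $W_{-1}$ lives; the latter is automatic since any $N$ satisfying \eqref{eqn:NconditionSanovProof} already has $N+1>|X|/\eta_N^-$ because $\mathcal{W}\geq 1$. A minor point worth a remark is that $\eta_N^-$ itself depends on $N$, so \eqref{eqn:NconditionSanovProof} is a self-referential condition on $N$; but the algebra treats $\eta_N^-$ as a fixed positive number and its $N$-dependence is harmless (it converges as $N\to\infty$, by Proposition~\ref{prop:etaNminus}). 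Everything else — the Sanov bound, the rate estimate, and the elementary rearrangement — is routine.
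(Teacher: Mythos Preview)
Your proposal is correct and follows essentially the same route as the paper: the paper derives \eqref{eqn:errorProbabilityInTermsOfGamma} in the text immediately preceding the proposition via Sanov's Theorem and Lemma~\ref{lem:IprojectionEstimateSanov} (your steps (1) and (2)), and the proof itself merely cites that inequality and inverts it with the Lambert $W$ function (your step (3)). Your additional remarks on the domain of $\mathcal{W}$, the branch choice, and the self-referential dependence of $\eta_N^-$ on $N$ are more careful than what the paper spells out, but entirely in the same spirit.
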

\begin{proof}[Completion of Proof of Theorem \ref{thm:TwoNodeComparisonIndependentCase}]
Part (a) consists of repetition (of the $G$ independent case) 
from Theorem \ref{thm:finiteSampleComplexity}.  
For part (b), combine Propositions \ref{prop:etaNminus} and \ref{prop:SanovErrorEstimate}.
The only modification that has to be made is that in Proposition \ref{prop:etaNminus},
the parameter $\kappa'$ depends on $N$.  In order to make the conditions
on $N$ explicit, we have chosen a $\mu\in(0,1)$ such that
\[
\log N > \frac{|X|}{\kappa\mu}.
\]
Thus
\[
\frac{|X|\log\frac{N+1}{N}}{\log N}\leq \frac{|X|}{\log N} <\kappa\mu.
\]
Thus by \eqref{eqn:kappaPrimeDefn}, $\kappa'\geq \kappa(1-\mu)$.

For part (c), the only asymptotic statement we have not yet proved is for $N^{\rm S}$.
The expression responsible for the asymptotics is the final element of the 
maximum, namely the one involving $\eta_N^-$.  By Lemma \ref{lem:LamberWasymptoticsApplied}, below, we know that
\[
\frac{|X|}{\eta_N^-}
\mathcal{W}\left(
\frac{\eta_N^-\delta^{\frac{|X|}{\eta_N^-}}}
{|X|\exp\left(\frac{\eta_N^-}{|X|}\right)}
\right) = \tilde{O}\left(\frac{1}{\eta_N^-}\log\left(\frac{1}{\eta_N^-}\left(\frac{1}{\delta}\right)^{\frac{|X|}{\eta_N^-}}\right)\right)
\]
as $\eta_N^-$, $\delta\rightarrow0^+$.  (The $\eta_N^-$ in the exponent
in the denominator inside $\mathcal{W}$ just makes the exponentiated
factor approach $1$, and can be ignored).
Further 
\[
\frac{1}{\eta_N^-}\log\left(\frac{1}{\eta_N^-}\left(\frac{1}{\delta}\right)^{\frac{|X|}{\eta_N^-}}\right)=
\frac{1}{\eta_N^-}\left(\log\frac{1}{\eta_N^-} + \frac{|X|}{{\eta_N^-}}\log\frac{1}{\delta}\right).
\]
So that 
\[
\frac{|X|}{\eta_N^-}
\mathcal{W}\left(
\frac{\eta_N^-\delta^{\frac{|X|}{\eta_N^-}}}
{|X|\exp\left(\frac{\eta_N^-}{|X|}\right)}
\right) = \tilde{O}\left(\frac{1}{{\eta_N^-}^2}\log\frac{1}{\delta}\right).
\]
But as $N\rightarrow\infty$, $\eta_N^-$ converges to $\eta$, whose asymptotics
are the same as the asymptotics of $\epsilon$.
\end{proof}
\begin{proof}[Completion of Proof of Theorem \ref{thm:twoNodeCaseOptimalAsymptotics}]
For the case of $(G,P)=(G_1,p_1^{\epsilon})$ (dependent case) we use
Proposition \ref{prop:finiteSampleComplexityDependentCase} in Section \ref{subsec:dependentNetwork}.  Using
the calculations of Section \ref{subsec:GeneralCase},
we transform the conditions of Proposition \ref{prop:finiteSampleComplexityDependentCase}
into the form seen here.  For the case of $(G,P)=(G_0,p_0)$ (independent
case), we use the result of part (b) of Theorem \ref{thm:TwoNodeComparisonIndependentCase}.  The asymptotics
are still controlled by the same bounds as in the proof of Theorem 
\ref{thm:TwoNodeComparisonIndependentCase}.
\end{proof}

\subsection{Case of $n$ Nodes}  The case $\mathrm{Skel}(G')\not\subseteq
\mathrm{Skel}(G)$ is the only one where the estimates we want
to refine can arise, so throughout this subsection we are under the
assumption $\mathrm{Skel}(G')\not\subseteq
\mathrm{Skel}(G)$.

We prove a counterpart of Proposition 
\ref{prop:scriptLSumEstimate} which uses an estimate of $\beta_N^{p^\eta}(\tau(Y_N))$
derived from Lemma \ref{lem:SanovBoundOnSparsityBoost}, in place of Corollary \ref{cor:sparsityBoostLowerBoundIndependentNetworkVer2}. 
That is we replace a bound ultimately derived from Chernoff's Theorem
with one ultimately derived from Sanov's Theorem.
First we formulate Lemma \ref{lem:SanovBoundOnSparsityBoost} in terms
of the probability that drawing a sequence from an independent
joint distribution leads to a sparsity boost with specified linear growth.
\begin{lem} \label{lem:SanovAppliedToLSparsityBoosts} Let $p_0\in \mathcal{P}_0$.  Let $\gamma\leq \frac{1}{4}\frac{\eta}{2\eta+1}$.
Then 
\[
\mathrm{Pr}_{Y_N\sim p_0}\left\{
-\log\beta_N^{p^\eta}(\tau(Y_N))< \frac{1}{25}\left(\frac{1}{2}
\sqrt{\frac{\eta}{2\eta+1}} - \sqrt{\gamma}\right)^2N
-|X|\log(N+1) \right\}
\]
is no greater than $(N+1)^{|X|}e^{-\gamma N}$.
Let $L>0$.  Let $p_{i,0},\ldots,p_{L,0}\in  \mathcal{P}_0$ (not necessarily distinct,
but all product distributions) and suppose $Y_{i,N}$, $1\leq i\leq L$ are empirical sequences
with $Y_{i,N}$ consisting of $N$ samples drawn i.i.d. from $p_{i,0}$.
Then the probability is no greater than
\[
L\cdot (N+1)^{|X|}e^{-\gamma N}.
\]
that for \textbf{any} $i$, $1\leq i \leq L$, we have
\[
-\log\beta_N^{p^\eta}(\tau(Y_N))< \frac{1}{25}\left(\frac{1}{2}
\sqrt{\frac{\eta}{2\eta+1}} - \sqrt{\gamma}\right)^2N-|X|\log(N+1).
\]

\end{lem}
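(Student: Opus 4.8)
The plan is to convert the event controlling the sparsity boost into an event about the test statistic $\tau(Y_N)$ being atypically large for a product distribution, and then to bound the latter by Sanov's Theorem. First I would record that $\beta_N^{p^\eta}$ is non-decreasing, being the CDF of the random variable $\tau(p_{\omega_N})$ for $\omega_N\sim p^\eta$; this is the same monotonicity already invoked in Corollary~\ref{cor:sparsityBoostLowerBoundIndependentNetwork}. Hence on the event $\{\tau(Y_N)\leq\gamma\}$ we have $\beta_N^{p^\eta}(\tau(Y_N))\leq\beta_N^{p^\eta}(\gamma)$, so that $-\log\beta_N^{p^\eta}(\tau(Y_N))\geq-\log\beta_N^{p^\eta}(\gamma)$. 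Since the hypothesis $\gamma\leq\frac14\frac{\eta}{2\eta+1}$ is exactly the hypothesis of Lemma~\ref{lem:SanovBoundOnSparsityBoost}, that lemma gives
\[
-\log\beta_N^{p^\eta}(\gamma)\geq\frac{1}{25}\left(\frac{1}{2}\sqrt{\frac{\eta}{2\eta+1}}-\sqrt{\gamma}\right)^2 N-|X|\log(N+1).
\]
Therefore $\{\tau(Y_N)\leq\gamma\}$ is contained in the complement of the event displayed in the Lemma, and consequently that event is contained in $\{\tau(Y_N)>\gamma\}$.

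Next I would estimate $\mathrm{Pr}_{Y_N\sim p_0}\{\tau(Y_N)>\gamma\}$. Since $p_0\in\mathcal{P}_0$, we have $\tau(p_0)=0<\gamma$, and the set $\mathcal{P}_\gamma=\{q\in\mathcal{P}:\tau(q)\geq\gamma\}$ is closed, its complement $A_\gamma^0$ being open. Sanov's Theorem, in the same form used in Proposition~\ref{prop:SanovBetaBound}, gives
\[
\mathrm{Pr}_{Y_N\sim p_0}\{\tau(Y_N)\geq\gamma\}\leq(N+1)^{|X|}\exp\left(-H(\mathcal{P}_\gamma\|p_0)N\right),
\]
and Lemma~\ref{lem:IprojectionEstimateSanov} bounds $H(\mathcal{P}_\gamma\|p_0)\geq\gamma$. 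Combining, $\mathrm{Pr}_{Y_N\sim p_0}\{\tau(Y_N)>\gamma\}\leq\mathrm{Pr}_{Y_N\sim p_0}\{\tau(Y_N)\geq\gamma\}\leq(N+1)^{|X|}e^{-\gamma N}$, which together with the containment of events from the first paragraph proves the single-sequence statement.

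Finally, for the $L$-sequence assertion I would simply apply the union bound: each $Y_{i,N}$ is drawn from some $p_{i,0}\in\mathcal{P}_0$, so the single-sequence bound applies to each $i$ with the same right-hand side $(N+1)^{|X|}e^{-\gamma N}$ (this bound is independent of the particular product distribution, since the divergence lower bound $\gamma$ furnished by Lemma~\ref{lem:IprojectionEstimateSanov} does not depend on $p_{i,0}$), and summing over $i=1,\dots,L$ gives $L(N+1)^{|X|}e^{-\gamma N}$. There is no real obstacle here; the only points requiring a word of care are the monotonicity of $\beta_N^{p^\eta}$ (already established), the uniformity of the Sanov estimate over the choice of product distribution, and keeping the strict versus non-strict inequalities aligned when passing from $\{\tau(Y_N)>\gamma\}$ to the closed set $\mathcal{P}_\gamma$.
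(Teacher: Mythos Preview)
Your proof is correct and follows essentially the same approach as the paper's: reduce the sparsity-boost event to $\{\tau(Y_N)>\gamma\}$ via Lemma~\ref{lem:SanovBoundOnSparsityBoost}, bound that by Sanov's Theorem, and finish with a union bound. You are a bit more explicit than the paper in invoking the monotonicity of $\beta_N^{p^\eta}$ for the event containment and in citing Lemma~\ref{lem:IprojectionEstimateSanov} to justify the exponent $\gamma$ in the Sanov bound, but these are exactly the ingredients the paper's terse proof relies on.
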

\begin{proof}
By Sanov's Theorem, for every $\gamma>0$,
\begin{equation}\label{eqn:simpleSanov}
\mathrm{Pr}_{Y_N\sim p_0}\left\{\tau(Y_N)> \gamma\right\}\leq
(N+1)^{|X|}e^{-\gamma N}.
\end{equation}
Consider the events
\[
E_1:=\left\{\tau(Y_N)> \gamma\right\}
\]
and the event
\[
E_2:=\left\{
-\log\beta_N^{p^\eta}(\tau(Y_N))<
-|X|\log(N+1) + \frac{1}{25}\left(\frac{1}{2}
\sqrt{\frac{\eta}{2\eta+1}} - \sqrt{\gamma}\right)^2
\right\}
\]
By Lemma \ref{lem:SanovBoundOnSparsityBoost}, for $\gamma$
satisfying the hypothesis, $E_2\subseteq E_1$.
By this containment of events and \eqref{eqn:simpleSanov}
\[
\mathrm{Pr}_{Y_N\sim p_0}(E_2)\leq \mathrm{Pr}_{Y_N\sim p_0}(E_1)\leq (N+1)^{|X|}e^{-\gamma N}.
\]
By using the definition of the event $E_2$ in this chain of inequalities,
we obtain the first statement.
For the second statement, we apply the union bound to the first statement.
\end{proof}
\begin{proposition}\label{prop:scriptLSumEstimateSanov}  Let $\mathcal{L}$ be a subset of 
\[
\left\{
(A,B)\in G'\backslash G
\right\}
\]
of cardinality $L$.  
Let $\hat{m}$ be as in \eqref{eqn:maxOfTwoMs}.
Let $\theta\in(0,1)$ and 
let $N'$ and $N$ be positive integers such that $N'=(1-\theta)^{-1}\hat{m}N$.
Let $\gamma < \frac{1}{4}\frac{\eta}{2\eta+1}$.
With probability at least 
\[
1-\Sigma_{\mathcal{S}}(G)L2^{\sigma(G)}e^{-N\hat{m}\theta^2/3} - L(N+1)^{|X|}e^{-\gamma N},
\]
we have for all $(A,B)\in\mathcal{L}$, with $\hat{S}=\hat{S}((A,B),G)$ defined in
Definition \ref{def:HatSDefn},
\[
\min_{s\in \mathrm{Val}(\hat{S})}-\ln\left[
\beta_{N'}^{p^\eta}\left(
\tau(p_{\omega_{N'}},A,B|s)\right)\right]
>\frac{1-\theta}{\hat{m}}\frac{1}{25}\left(\frac{1}{2}
\sqrt{\frac{\eta}{2\eta+1}} - \sqrt{\gamma} \right)^2 N' - |X|\log(N+1).
\]
Consequently,
\begin{multline*}
\left[\sum_{(A,B)\in G' \backslash G}\max_{S\in S_{A,B}(G)}\min_{s\in\mathrm{val}(S)}-\ln\left[\beta_{N'}^{p^{\eta}}\tau(p(\omega_{N'}, A,B|s))\right]\right]>\\ \frac{1-\theta}{\hat{m}}\frac{L}{25}\left(\frac{1}{2}
\sqrt{\frac{\eta}{2\eta+1}} - \sqrt{\gamma} \right)^2 N'-L|X|\log(N+1).
\end{multline*}
\end{proposition}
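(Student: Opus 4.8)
The plan is to mirror the proof of Proposition~\ref{prop:scriptLSumEstimate}, replacing the Chernoff-based sparsity-boost lower bound (Corollary~\ref{cor:sparsityBoostLowerBoundIndependentNetworkVer2}) with the Sanov-based one (Lemma~\ref{lem:SanovAppliedToLSparsityBoosts}). First I would fix $\theta\in(0,1)$ and positive integers $N,N'$ related by $N'=(1-\theta)^{-1}\hat{m}N$, draw $\omega_{N'}\sim p$ with $\mathrm{Card}(\omega_{N'})=N'$, and run the data--defragmentation step: apply Lemma~\ref{lem:scriptLestimate} with ``$N$'' there taken to be $(1-\theta)^{-1}N$, using that $\hat{m}\geq m_p(\hat{S}((A,B),G))$ for every $(A,B)\in\mathcal{L}$ by~\eqref{eqn:maxOfTwoMs}. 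This shows that, off an event of probability at most $\Sigma_{\mathcal{S}}(G)L2^{\sigma(G)}e^{-N\hat{m}\theta^2/3}$ (relaxing the exponent via $(1-\theta)^{-1}\geq 1$), for every $(A,B)\in\mathcal{L}$ and every $s\in\mathrm{Val}(\hat{S}((A,B),G))$ the subsequence $\omega_{N'}|_{\hat{S}=s}$ contains at least $N$ points and hence may be written as a $Y_N$ with $\mathrm{Card}(Y_N)=N$. On this event $\tau(p(\omega_{N'},A,B|s))=\tau(p(Y_N,A,B))$, and since $\hat{S}=\hat{S}((A,B),G)$ is chosen so that $A\ci B|\hat{S}$ (Definition~\ref{def:HatSDefn}), each such $Y_N$ is an i.i.d.\ sample of size $N$ from the product distribution $p(A,B|\hat{S}=s)\in\mathcal{P}_0$, a table over a pair of binary variables and hence in $\mathcal{P}_{2,2}$.

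Next I would feed these $Y_N$ into Lemma~\ref{lem:SanovAppliedToLSparsityBoosts}. Since $\gamma<\tfrac14\tfrac{\eta}{2\eta+1}$, its hypothesis is met, and a union bound over the (at most $L$) edges $(A,B)\in\mathcal{L}$ and the (at most $2^{\sigma_{\mathcal{S}}(G)}$) assignments $s$---the $s$-multiplicity being absorbed exactly as in Proposition~\ref{prop:scriptLSumEstimate}---gives that, off an event of probability at most $L(N+1)^{|X|}e^{-\gamma N}$, we have $-\ln\beta_N^{p^\eta}(\tau(p(Y_N,A,B)))\geq\tfrac1{25}\bigl(\tfrac12\sqrt{\tfrac{\eta}{2\eta+1}}-\sqrt{\gamma}\bigr)^2 N-|X|\log(N+1)$ for every $(A,B)\in\mathcal{L}$ and every admissible $s$. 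Substituting $N=(1-\theta)\hat{m}^{-1}N'$ into the linear term and then taking the minimum over $s\in\mathrm{Val}(\hat{S})$ produces the per-edge inequality in the statement.

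Finally I would assemble the sum: for each $(A,B)\in\mathcal{L}$ the quantity $\max_{S\in S_{A,B}(G)}\min_{s\in\mathrm{val}(S)}-\ln[\beta_{N'}^{p^\eta}\tau(p(\omega_{N'},A,B|s))]$ is at least the same expression with $S$ specialized to $\hat{S}$, which we have just lower-bounded; and every summand of $\sum_{(A,B)\in G'\backslash G}$ is nonnegative (as $\beta\leq 1$), so discarding the terms outside $\mathcal{L}$ only decreases the sum. Adding the $L$ per-edge bounds yields $\tfrac{1-\theta}{\hat{m}}\tfrac{L}{25}\bigl(\tfrac12\sqrt{\tfrac{\eta}{2\eta+1}}-\sqrt{\gamma}\bigr)^2 N'-L|X|\log(N+1)$, and a last union bound between the defragmentation event and the Sanov event gives the stated confidence $1-\Sigma_{\mathcal{S}}(G)L2^{\sigma(G)}e^{-N\hat{m}\theta^2/3}-L(N+1)^{|X|}e^{-\gamma N}$.

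The work here is essentially bookkeeping rather than a new idea, so the hard part is only to keep the quantifiers straight: the defragmentation estimate must hold simultaneously over all $(A,B)\in\mathcal{L}$ and all conditioning assignments $s\in\mathrm{Val}(\hat{S})$ (otherwise the later $\min_s$ is not controlled), and one must check that the Sanov bound of Lemma~\ref{lem:SanovBoundOnSparsityBoost}---and hence Lemma~\ref{lem:SanovAppliedToLSparsityBoosts}---genuinely applies, i.e.\ that the conditional contingency tables $p(A,B|\hat{S}=s)$ are product distributions, which is exactly the defining property of $\hat{S}$. I also expect to double-check that the inflation $N'=(1-\theta)^{-1}\hat{m}N$ leaves enough room to absorb the worst-case fragmentation factor $\hat{m}$, which is immediate from the multiplicative Chernoff bound used in Lemma~\ref{lem:dataFragmentation}.
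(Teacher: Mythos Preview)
Your proposal is correct and follows essentially the same route as the paper: invoke Lemma~\ref{lem:scriptLestimate} for the data-defragmentation step, then feed the resulting $Y_N$ into the Sanov-based Lemma~\ref{lem:SanovAppliedToLSparsityBoosts} (using that $p(A,B|\hat{S}=s)\in\mathcal{P}_0$ by the defining property of $\hat{S}$), and finish with a union bound over $(A,B)\in\mathcal{L}$. Your treatment of the substitution $N=(1-\theta)\hat m^{-1}N'$ and of the nonnegativity of the discarded summands matches the paper's argument; if anything, you are slightly more explicit about the $s$-multiplicity than the paper's own proof.
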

\begin{proof}
Take $N'>(1-\theta)^{-1}\hat{m}N$.
Let
\[
\omega_{N'}\sim p\; \text{with}\; \mathrm{Card}(\omega_{N'})=N'.
\]
By Lemma \ref{lem:scriptLestimate}, with probability at least
\[
1-L\Sigma_{\mathcal{S}}(G)2^{\sigma(G)}e^{-N\hat{m}\theta^2/3},
\]
we have that for \textit{all} $s\in\mathrm{Val}(\hat{S})$,
\[
\omega_{N'}|_{\hat{S}=s}=Y_N,\;\mathrm{Card}(Y_N)=N,
\]
for $Y_N$ a sequence of observations meeting the criteria that the observations take the joint value of $\hat{S}$ is $s$.

Then we can apply Lemma  \ref{lem:SanovAppliedToLSparsityBoosts}
to each one of the terms in the sum.  Using Lemma
\ref{lem:SanovAppliedToLSparsityBoosts} to estimate
and the union bound, we conclude that for $\omega_{N'}|_{\hat{S}=s}=Y_N$, $\mathrm{Card}(Y_N)=N$, and
for $\gamma < \frac{1}{4}\frac{\eta}{2\eta+1}$,
\begin{multline*}
\mathrm{Pr}_{\omega_{N'}\sim G}\left\{
\min_{s\in \mathrm{Val}(\hat{S})}
-\ln\beta_{N'}^{p^\eta}
\left(
\tau(\omega_{N'},A,B|s)
\right)>
\frac{1-\theta}{\hat{m}}\frac{1}{25}\left(\frac{1}{2}
\sqrt{\frac{\eta}{2\eta+1}} - \sqrt{\gamma} \right)^2 N' - |X|\log(N+1)
\right\}\\ 
\geq
1-(N+1)^{|X|}e^{-\gamma N}\\
\end{multline*}
To complete the proof, we need a bound that holds for \textit{all} $(A,B)\in\mathcal{L}$
simultaneously, so
we apply the union bound one more time, at the cost of a factor of $L:=\mathrm{card}(\mathcal{L})$
multiplying the $(N+1)^{|X|}e^{-\gamma N}$
term in the error probability.
\end{proof}
This allows us to deduce the most general form of the finite sample
complexity for this case.
\begin{proposition} \label{prop:nNodeCaseSanovMostGeneral} 
Under the hypotheses of the preceding Proposition \ref{prop:scriptLSumEstimateSanov}, 
with probability at least
\begin{equation}\label{eqn:Prob_nNodeCaseSanovMostGeneral}
1-\delta_1 - |E(G)|\Sigma_{\mathcal{S}}(G)2^{\sigma(G)}e^{-N\hat{m}\theta^2/3}-
L(N+1)^{|X|}e^{-\gamma N},
\end{equation}
we have
\begin{multline*}
\mathcal{S}_{\eta}(G,G',\omega_{N'})\geq 
\left(\frac{1-\theta}{\hat{m}}\frac{L}{25}\left(\frac{1}{2}
\sqrt{\frac{\eta}{2\eta+1}} - \sqrt{\gamma} \right)^2-
\epsilon\right)N' \\-\left(\kappa(|G|-n)-L|X|\right)\log N'-|E(G)|\log\Theta^{-1}.
\end{multline*}
As a consequence, provided that $\epsilon,\gamma,\eta$ satisfy
\begin{equation}\label{eqn:etaGammaEpsilonCondition}
\frac{1-\theta}{\hat{m}}\frac{L}{25}\left(\frac{1}{2}
\sqrt{\frac{\eta}{2\eta+1}} - \sqrt{\gamma} \right)^2-
\epsilon>0,
\end{equation}
there is a function $N(\eta,\epsilon,\theta,L,\kappa,\Theta)$ such that
\[
\text{For all}\; N'>N(\eta,\epsilon,\theta,L,\kappa,\Theta),\;
\mathcal{S}_{\eta}(G,G',\omega_{N'})\geq 0,\;\text{with probability}\;
\eqref{eqn:Prob_nNodeCaseSanovMostGeneral}.
\]
\end{proposition}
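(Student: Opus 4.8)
\textbf{Proof proposal for Proposition \ref{prop:nNodeCaseSanovMostGeneral}.}

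The plan is to combine the three estimates that have already been prepared: the lower bound \eqref{eqn:skeletonNotContainedCase} on $\mathcal{S}_{\eta}(G,G',\omega_{N'})$ from Proposition \ref{prop:InitialEstimateScoreDifferenceNotContainedCase} (which handles the log-likelihood difference via part (a) of Proposition \ref{prop:empiricalLLestimate} and bounds the negated sum of sparsity boosts over $G\backslash G'$ by $|E(G)|\log\Theta^{-1}$ using Lemma \ref{lem:wholeSparsityBoostUpperEstimate}), and the new lower bound on $\sum_{(A,B)\in G'\backslash G}(\cdots)$ from Proposition \ref{prop:scriptLSumEstimateSanov}. First I would take the inequality of Proposition \ref{prop:InitialEstimateScoreDifferenceNotContainedCase},
\[
S_{\eta}(G,G',\omega_{N'})\geq -N'\epsilon-\kappa\log N'(|G|-n)-|E(G)|\log\Theta^{-1}+\psi_2(N')\Big[\sum_{(A,B)\in G'\backslash G}\max_{S}\min_{s}-\ln\beta_{N'}^{p^\eta}\tau(p(\omega_{N'},A,B|s))\Big],
\]
and substitute into the bracketed sum the lower bound from Proposition \ref{prop:scriptLSumEstimateSanov}, namely $\frac{1-\theta}{\hat m}\frac{L}{25}\big(\frac12\sqrt{\frac{\eta}{2\eta+1}}-\sqrt\gamma\big)^2 N' - L|X|\log(N+1)$. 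Recalling $\psi_2\equiv 1$ and $N'=(1-\theta)^{-1}\hat m N$, so $\log(N+1)\le\log N'$, this gives exactly the claimed lower bound, with the two error events combined by the union bound: the $\delta_1$ from Proposition \ref{prop:InitialEstimateScoreDifferenceNotContainedCase}, the data-fragmentation terms, and $L(N+1)^{|X|}e^{-\gamma N}$ replacing the Chernoff term. (The data-fragmentation factor in Proposition \ref{prop:scriptLSumEstimateSanov} uses $\Sigma_{\mathscr{S}}(G)$ and $2^{\sigma(G)}$; to present a single clean confidence \eqref{eqn:Prob_nNodeCaseSanovMostGeneral}, absorb $|E(G)|$ as an upper bound on the number of edges handled and note $L\le|E(G')|$, so the combined data-fragmentation probability is at most $|E(G)|\Sigma_{\mathscr{S}}(G)2^{\sigma(G)}e^{-N\hat m\theta^2/3}$ after re-indexing.)

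For the second statement, suppose \eqref{eqn:etaGammaEpsilonCondition} holds, so the coefficient of $N'$ is a positive constant, call it $c:=\frac{1-\theta}{\hat m}\frac{L}{25}\big(\frac12\sqrt{\frac{\eta}{2\eta+1}}-\sqrt\gamma\big)^2-\epsilon>0$. Then the lower bound has the form $cN'-\kappa'_0\log N'-C_0$ with $\kappa'_0:=\kappa(|G|-n)-L|X|$ and $C_0:=|E(G)|\log\Theta^{-1}$; dividing by $\kappa'_0$ (if positive; if $\kappa'_0\le 0$ the $\log N'$ term only helps and the argument is easier) and solving $\frac{c}{\kappa'_0}N'-\log N'-\frac{C_0}{\kappa'_0}>0$ for $N'$ yields, exactly as in the proof of Proposition \ref{prop:finiteSampleComplexityDependentCase}, a threshold expressed through the Lambert-$W$ function: $N'>\frac{\kappa'_0}{c}\mathcal{W}\big(\frac{c}{\kappa'_0}\Theta^{|E(G)|/\kappa'_0}\big)$. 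This is the function $N(\eta,\epsilon,\theta,L,\kappa,\Theta)$ in the statement; above that threshold $\mathcal{S}_{\eta}(G,G',\omega_{N'})\ge 0$ with probability at least \eqref{eqn:Prob_nNodeCaseSanovMostGeneral}.

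The main obstacle is bookkeeping rather than any genuinely new estimate: one must check that Proposition \ref{prop:scriptLSumEstimateSanov} and Proposition \ref{prop:InitialEstimateScoreDifferenceNotContainedCase} are invoked with \emph{consistent} relations among $N$, $N'$, $\hat m$, $\theta$ (note Proposition \ref{prop:InitialEstimateScoreDifferenceNotContainedCase} uses $N'=m(1-\theta)^{-1}N$ with the \emph{larger} $m$ of \eqref{eqn:nonHatmDefinition} inside the first argument of its maximum, while the sparsity-boost-growth part uses $N'=(1-\theta)^{-1}\hat m N$ with $\hat m$), so that the constraints in \eqref{eqn:FirstTwoNPrimeConditions}-type conditions and the definition $N'=(1-\theta)^{-1}\hat m N$ can be met simultaneously — this is why the hypotheses carry both $m$ and $\hat m$. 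I would handle this by taking $N'$ larger than the maximum of the two separate thresholds and observing that the intersection of the two favorable events still has probability at least \eqref{eqn:Prob_nNodeCaseSanovMostGeneral} by a final union bound. The only place requiring a little care is the sign of $\kappa'_0=\kappa(|G|-n)-L|X|$, which explains the parenthetical $\left(\kappa(|G|-n)-L|X|\right)$ grouping in the statement; a case split (positive versus non-positive) disposes of it, with the non-positive case being strictly easier.
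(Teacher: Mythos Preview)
Your proposal is correct and takes essentially the same approach as the paper, which is even terser: the paper's proof simply says the estimate ``follows immediately from Proposition~\ref{prop:scriptLSumEstimateSanov}, except that we replace the subtracted term $L|X|\log(N+1)$ with the larger subtracted term $L|X|\log(N')$,'' and that the second statement holds because the linear term dominates the logarithmic and constant terms for large $N'$. Your explicit combination with Proposition~\ref{prop:InitialEstimateScoreDifferenceNotContainedCase}, the Lambert-$W$ threshold, and your care over the $m$ versus $\hat m$ bookkeeping and the sign of $\kappa(|G|-n)-L|X|$ are all more thorough than the paper itself, which glosses over these points.
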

\begin{proof}
The estimate for $\mathcal{S}_{\eta}(G,G',\omega_{N'})$ 
follows immediately from Proposition \ref{prop:scriptLSumEstimateSanov},
except that we replace the subtracted term $L|X|\log(N+1)$
with the larger subtracted term $L|X|\log(N')$.  The second statement
follows from the first, because the linear term in the estimate
dominates the logarithmic and constant terms for large $N'$.
\end{proof}
This most general form of the finite sample complexity
has the usual three deficiencies: the statement does not make clear that the condition
\eqref{eqn:etaGammaEpsilonCondition}
can be satisfied; the error probability is stated in terms of functions
involving $N$ instead of being in terms of a confidence $1-\delta$
chosen by the experimenter; and the asymptotics of $N'$ are not
immediately clear because we have not derived an explicit expression
for $N'$ in terms of the parameters.

We address this by making a somewhat arbitrary, but adequate choice
of $\epsilon,\gamma$, in terms of $\theta$.  First, we know
that we must have
\[
\sqrt{\gamma}\leq \frac{1}{2}\sqrt{\frac{\eta}{2\eta+1}},
\]
in order for the estimate of Proposition \ref{prop:nNodeCaseSanovMostGeneral}
to hold.  In order to achieve a positive margin (which $\epsilon$ is going to decrease)
for the coefficient of $N'$ in the estimate, we take, $\gamma$ to be a quarter of
this upper bound,
\begin{equation}\label{eqn:gammaChoice}
\sqrt{\gamma} = \frac{1}{4}\sqrt{\frac{\eta}{2\eta+1}}
\end{equation}
Then the left-hand side of \eqref{eqn:etaGammaEpsilonCondition} is at least
\[
\frac{1-\theta}{\hat{m}}\frac{L}{400}\frac{\eta}{2\eta+1}-
\epsilon.
\]
So we choose $\epsilon$ so that subtracting $\epsilon$ only decreases the linear
coefficient by half:
\begin{equation}\label{eqn:EpsilonChoice}
\epsilon=\frac{1-\theta}{\hat{m}}\frac{L}{800}\frac{\eta}{2\eta+1}.
\end{equation}
As a result of the choices of $\gamma$, and $\epsilon$ in \eqref{eqn:gammaChoice}
and \eqref{eqn:EpsilonChoice}, we obtain that the left-hand side of 
\eqref{eqn:etaGammaEpsilonCondition} is 
\[
\frac{1-\theta}{\hat{m}}\frac{L}{800}\frac{\eta}{2\eta+1}>0,
\]
meaning that condition \eqref{eqn:etaGammaEpsilonCondition} is satisfied.
\begin{proof}[Completion of the proof of Theorem \ref{thm:nNodeCaseSanov}]
Each of the conditions preceding the bound corresponds to setting
one of the subtracted terms
of \eqref{eqn:Prob_nNodeCaseSanovMostGeneral} equal to $\frac{\delta}{3}$.
In particular, we obtain the third bound by setting $\gamma=\frac{1}{16}\frac{\eta}{2\eta+1}$
in accordance with \eqref{eqn:gammaChoice}.
For $N$ larger than the first three quantities
in the statement of the Theorem, with probability at least $1-\delta$, we have 
\[
\mathcal{S}_{\eta}(G,G',\omega_{N})\geq 
\frac{1-\theta}{\hat{m}}\frac{L}{800}\frac{\eta}{2\eta+1}N -\left(\kappa(|G|-n)-L|X|\right)\log N-|E(G)|\log\Theta^{-1}.
\]
Therefore, for $N$ larger than the last quantity in the 
statement of the Theorem, $\mathcal{S}_{\eta}(G,G',\omega_{N})\geq 0$
with probability $1-\delta$.
\end{proof}

\section{Proofs of Technical Lemmas}\label{sec:TechnicalLemmas}
We have derived our asymptotic statements in the finite-sample complexity theorems with the aid of the 
following elementary lemma in the theory of the Lambert-W function.
\begin{lem}\label{lem:LambertWasymptotics}
 With $\mathcal{W}(x):=-W_{-1}(-x)$, the function $\mathcal{W}$ has the following asymptotic behavior as $x\rightarrow 0^+$:
 \begin{equation}\label{eq:LambertWasymptotics}
  \mathcal{W}(x)=-\ln(x)+\ln(-\ln(x)) - O\left( \frac{\ln (-\ln (x) )}{\ln (x)} \right).
 \end{equation}
\end{lem}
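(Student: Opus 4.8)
The plan is to reduce the statement to the defining functional equation for the branch $W_{-1}$ and then bootstrap. By the characterization of $W_{-1}$ recalled just before the Lemma, for $x$ small and positive the quantity $w:=\mathcal{W}(x)=-W_{-1}(-x)$ is the unique real number $w\geq 1$ with $w\,e^{-w}=x$; equivalently, taking logarithms,
\[
w-\ln w=-\ln x.
\]
Writing $L:=-\ln x=\ln(1/x)$, which tends to $+\infty$ as $x\to 0^+$, this reads $w=L+\ln w$. First I would note that $w\geq L$ (since $\ln w\geq 0$ for $w\geq 1$), so $w\to\infty$ as $x\to 0^+$ and all the iterated logarithms appearing below make sense for $x$ sufficiently small.

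Next I would establish the two-sided estimate $w\asymp L$, more precisely $\ln w=\ln L+O(1)$. The lower bound $w\geq L$ is already in hand. For the upper bound, observe that if we had $\ln w\geq L$ then $w=L+\ln w\leq 2\ln w$, which is impossible once $w$ is large since $w/\ln w\to\infty$; hence $\ln w<L$ for $x$ small, so $w<2L$. Combining $L\leq w<2L$ gives $\ln L\leq \ln w\leq \ln L+\ln 2$, i.e.\ $\ln w=\ln L+O(1)$.

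With this in hand the claimed expansion follows from one further substitution of $w=L+\ln w$ into itself:
\[
w=L+\ln\!\bigl(L+\ln w\bigr)=L+\ln L+\ln\!\left(1+\frac{\ln w}{L}\right).
\]
Since $\ln w=\ln L+O(1)$, we have $\ln w/L=O(\ln L/L)\to 0$, and $\ln(1+u)=O(u)$ as $u\to 0^+$, so the last term is $O(\ln L/L)$. Substituting back $L=-\ln x$, $\ln L=\ln(-\ln x)$, and $\ln L/L=\ln(-\ln x)/(-\ln x)=-\ln(-\ln x)/\ln x$, we obtain
\[
\mathcal{W}(x)=w=-\ln x+\ln(-\ln x)-O\!\left(\frac{\ln(-\ln x)}{\ln x}\right),
\]
which is exactly \eqref{eq:LambertWasymptotics} (the sign inside the $O$-term being immaterial since $O(g)$ and $-O(g)$ denote the same class).

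The argument is entirely elementary and amounts to a single iteration; the only step requiring a little care is the upper bound $w<2L$ in the second paragraph, since a priori the relation $w=L+\ln w$ only yields $w\geq L$, and one must rule out $w$ growing faster than $L$ before the estimate $\ln w=\ln L+O(1)$ can be extracted and fed back in. One must also keep track of signs throughout, since $\ln x<0$ on $(0,1)$; this is why the error term is conveniently written with $\ln x$ (negative) rather than $-\ln x$ in the denominator.
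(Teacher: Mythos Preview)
Your proof is correct and follows essentially the same approach as the paper: both derive the functional identity $w=L+\ln w$ (the paper calls it the ``quasi-recursive identity''), establish the crude two-sided bound $L\leq w<2L$, and then substitute once more to extract the $\ln L$ term and the $O(\ln L/L)$ remainder. Your version is somewhat cleaner because you change variables to $w,L$ at the outset and carry out the final error estimate via $\ln(1+u)=O(u)$ explicitly, whereas the paper leaves that last substitution to the reader; but the method is the same.
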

\begin{proof} 
 Start with the inequality 
 \[
  -\frac{1}{e}\leq x\leq \frac{\exp(1/x)}{x}\;\text{for}\, -\frac{1}{e}< x < 0
 \]
Since $W_{-1}$ is the decreasing real branch of the Lambert-W function on $[-e^{-1},0]$, by applying $W_{-1}$ to these inequalities we obtain
\[
 1<-W_{-1}(x)\leq -\frac{1}{x}.
\]
Taking logs,
\[
 0 < \ln(-W_{-1}(x))\leq -\ln(-x).
\]
Further, by taking the natural log of the relation $W_{-1}(x)\exp(W_{-1}(x))=x$, we have the ``quasi-recursive'' identity for $-W_{-1}(x)$
\[
 -W_{-1}(x)=-\ln(x)+\ln(-W_{-1}(x)).
\]
Applying the previous two-sided inequalities to this we have
\[
 1< -W_{-1}(x) < -2\ln(-x).
\]
Taking the log again,
\[
 0 < \ln(-W_{-1}(x)) < 2\ln (-\ln (-x)).
\]
Applying this inequality to the right side of the ``quasi-recursive'' identity for $-W_{-1}(x)$ yields the asymptotic expression
\[
 W_{-1}(x)=\ln(-x)+O(\ln(-\ln(-x))) \;\text{as}\; x\rightarrow 0^-.
\]
Finally, we obtain the claimed asymptotic by substituting this identity again into the quasi-recursive identity.
\end{proof}

We recall a form of the big-O notation appropriate to multivariable
situations, which may be less familiar to some readers than the single-variable form:
if $\mathbf{x}$ is a vector of variables $x_I$, then we say that
a function $f(\mathbf{x})=O(g(\mathbf{x}))$ as $\mathbf{x}\rightarrow 0^+$
if there are constants $C, M$, such that provided all $x_i>0$ and $x_i^{-1}>M$,
we have $f(\mathbf{x})\leq Cg(\mathbf{x})$.  An analogous definition can be made for $x\rightarrow\infty$.
The following Lemma is an elaboration of Lemma \ref{lem:LambertWasymptotics} which we can apply directly in the proofs of our finite sample complexity theorems:
\begin{lem}
 \label{lem:LamberWasymptoticsApplied}
  With $\mathcal{W}(x):=-W_{-1}(-x)$, we have the following asymptotic behavior:.
  \begin{itemize}
 \item[(a)]
 $\mathcal{W}(x)=O(\ln x^{-1})$ as $x\rightarrow 0^+$.
 \item[(b)]
 Let $x_i$ $i=1,\ldots k$ be variables and denote the vector $\left(x_i \right)_{i}$ by  $\mathbf{x}$. Let $e_i, f_i\leq 0$ be non-positive 
 integer exponents and $C_1, C_2$ real constants. We have
 \[
  C_1\prod_i x_i^{e_i}\mathcal{W}\left(  
  C_2 \prod_i x_i^{-f_i}
  \right) = 
  O\left(
  \prod_i x_i^{e_i}\sum_i \ln \frac{1}{x_i}
  \right)\;\text{as}\; \mathbf{x}\rightarrow \mathbf{0}^+.
 \]
\item[(c)]  Let the $x_i$ and $\mathbf{x}$, as well as the
$e_i, f_i\leq 0$ and $C_{1,j}, C_{2,j}$ be as in part (b).  Let $\lambda_j$ be functions such that
\[
 \lambda_j= C_{1,j}\prod_i x_i^{e_{i,j}} \mathcal{W}\left(  
C_{2,j}  
  \prod_i x_i^{-f_{i,j}}
  \right) 
\]
Let $\lambda = \max(\lambda_j)$.  Then we have 
\[
\lambda =O\left(
  \prod_i x_i^{\max_j(e_{i,j})}\prod_i \ln \frac{1}{x_i}
  \right)\;\text{as}\; \mathbf{x}\rightarrow \mathbf{0}^+.
\]
  \end{itemize}
\end{lem}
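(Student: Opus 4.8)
The plan is to deduce all three parts of Lemma \ref{lem:LamberWasymptoticsApplied} from the single-variable asymptotic expansion \eqref{eq:LambertWasymptotics} already proved in Lemma \ref{lem:LambertWasymptotics}; no new analysis is required, only careful bookkeeping of the signs of the exponents and of the fact that the variables tend to $0$ rather than to $\infty$. For part (a), I would simply note that in \eqref{eq:LambertWasymptotics} the leading term is $-\ln x$, which tends to $+\infty$ as $x\to 0^+$, while both remaining terms $\ln(-\ln x)$ and $\ln(-\ln x)/\ln x$ are $o(-\ln x)$; hence $\mathcal{W}(x)=(1+o(1))(-\ln x)$, and in particular $\mathcal{W}(x)=O(\ln x^{-1})$.

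For part (b), I would set $y:=C_2\prod_i x_i^{-f_i}$. If all $f_i=0$ then $y$ is the constant $C_2$, $\mathcal{W}(y)$ is a constant, and the left-hand side is $O(\prod_i x_i^{e_i})$, which is absorbed into the asserted bound since $\sum_i\ln(1/x_i)\to\infty$. Otherwise, each $-f_i\ge 0$ forces $y\to 0^+$ as $\mathbf{x}\to\mathbf{0}^+$, so part (a) applies and $\mathcal{W}(y)=O(\ln(1/y))$. Expanding, $\ln(1/y)=-\ln C_2+\sum_i(-f_i)\ln(1/x_i)$; since the $-f_i$ are fixed nonnegative integers and the additive constant $-\ln C_2$ is dominated by $\sum_i\ln(1/x_i)$, this is $O(\sum_i\ln(1/x_i))$. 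Multiplying through by the positive quantity $C_1\prod_i x_i^{e_i}$ gives the claim.

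For part (c), I would apply part (b) to each of the finitely many functions $\lambda_j$ to get $\lambda_j=O(\prod_i x_i^{e_{i,j}}\sum_i\ln(1/x_i))$, and then bound the finite maximum by the finite sum, so that $\lambda=\max_j\lambda_j=O((\max_j\prod_i x_i^{e_{i,j}})\cdot\sum_i\ln(1/x_i))$. The remaining step is to collapse the right-hand side into a single-monomial form: for $x_i\in(0,1)$ the map $t\mapsto x_i^t$ is decreasing, so each $\prod_i x_i^{e_{i,j}}$, hence their maximum, is bounded above by one monomial in the $x_i$ whose exponents are the coordinatewise extreme exponents occurring among the $\lambda_j$; and the finitely many logarithmic factors $\sum_i\ln(1/x_i)$ may be replaced by $\prod_i\ln(1/x_i)$ once every $\ln(1/x_i)\ge 2$, which holds for $\mathbf{x}$ small, via the elementary inequality $\sum_i a_i\le\prod_i a_i$ for $a_i\ge 2$.

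Since Lemma \ref{lem:LambertWasymptotics} already carries the entire analytic burden, there is no genuinely hard step here. The two points that warrant a little care are checking that the argument $y$ of $\mathcal{W}$ in part (b) really does tend to $0^+$ (this is exactly where the hypothesis $f_i\le 0$ enters, together with the separate treatment of the degenerate case in which all $f_i$ vanish), and keeping the chain of monomial inequalities in part (c) oriented correctly in view of the variables decreasing to $0$ rather than increasing.
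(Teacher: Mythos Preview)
Your proposal is correct and follows essentially the same route as the paper: part (a) is read off directly from the asymptotic expansion of Lemma~\ref{lem:LambertWasymptotics}, part (b) is obtained by substituting the monomial argument into that expansion and expanding the logarithm, and part (c) follows from (b) by bounding a finite maximum and passing from a sum to a product of logarithms. If anything, your write-up is more explicit than the paper's very terse proof---you separate out the degenerate case in which all $f_i$ vanish, and you justify the sum-to-product step via $\sum_i a_i\le\prod_i a_i$ for $a_i\ge 2$, whereas the paper only remarks that the replacement is legitimate because all $\ln(1/x_i)$ are eventually positive.
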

\begin{proof}
Part (a) follows immediately from Lemma \ref{lem:LambertWasymptotics}.  Part (b) follows from the observation that for fixed $M>0$, if we set $M' = M^{1/\sum_i e_i}$, then if all $x_i <1/ M'$, the monomial $\prod_i x_i^{e_i}< M$.  Note that we can replace
the sum on the right-hand side of (b) with a product simply because
for $x_i^{-1}$ large all the terms $\ln\frac{1}{x_i}>0$.   With this observation, it is not difficult to prove part (b) from part (c) using (for example) induction on the number of functions $\lambda_j$.    
\end{proof}
An almost identical Lemma can be formulated for the case of $x\rightarrow\infty$, whose statement and proof we leave to the reader.

\begin{proof}[Proof of Lemma \ref{lem:LinfinityNormBoundedInTermsOf_t}]
First, use
\[
\begin{aligned}
 q_{A,0} &= \sum_{j=0}^{l-1} q_{0,j}(s)\quad &q_{B,0} = \sum_{i=0}^{k-1} q_{i,0}(s)  \\
 p_{A,0} &= \sum_{j=0}^{l-1} p_{0,j}(t),\quad &p_{B,0} = \sum_{i=0}^{k-1} p_{i,0}(s).
\end{aligned}
\]
Thus,
\[
\begin{aligned}
 \left|q_{A,0}-p_{A,0} \right|&\leq l\|p(t)-q(s) \|_{\infty}\\
 \left|q_{B,0}-p_{B,0} \right|&\leq k\|p(t)-q(s) \|_{\infty}.
\end{aligned}
 \]
So we compute that
\begin{equation}\label{eqn:productDistParamsDiffEst}
\begin{aligned}
 | q_{A,0}q_{B,0}-p_{A,0}p_{B,0}| &= | q_{A,0}q_{B,0}-q_{A,0}p_{B,0}+q_{A,0}p_{B,0}-p_{A,0}p_{B,0}| \\
                                  &\leq q_{A,0} |q_{B,0}-p_{B,0}| + p_{B,0}|q_{A,0}-p_{A,0}| \\
                                  &\leq (k+l)\|p(t)-q(s) \|_{\infty}
\end{aligned}
\end{equation}
 Next note that
\[
\begin{aligned}
 t&=p(t)_{0,0}-p_{0,0}(0)=p(t)_{0,0}-p_{A,0}p_{B,0}\\
 s&=q(s)_{0,0}-q_{0,0}(0)=q(s)_{0,0}-q_{A,0}q_{B,0}.
 \end{aligned}
\]
Thus
\[
\begin{aligned}
 |t-s|&\leq \left| p(t)_{0,0}-q(t)_{0,0} \right| + \left|q_{A,0}q_{B,0}-p_{A,0}p_{B,0}\right|\\
 \\ &\leq \|p(t)-q(s) \|_{\infty} + (k+1) \|p(t)-q(s) \|_{\infty}
      & \leq (k+l+1) \|p(t)-q(s) \|_{\infty},\\
 \end{aligned}
\]
where we have obtained the next-to-last inequality by using definition of the norm $\|p(t)-q(s) \|_{\infty}$ and \eqref{eqn:productDistParamsDiffEst}.
\end{proof}

\begin{proof}[Proof of Lemma \ref{lem:ChernoffApplication}]
We divide the event $\left\{\left|  1-\frac{\tilde{p}_N}{p} \right|\right\}$
 into two cases, $\tilde{p}_N>p$ and $\tilde{p}_N<p$, handle
 both with the appropriate form of the Chernoff inequality, and then 
 apply the union bound.  First, if $\tilde{p}_N>p$, then
 \[
 \left|  1-\frac{\tilde{p}_N}{p} \right| = \frac{\tilde{p}_N}{p}  - 1
 \]
 By the multiplicative Chernoff bound, first part
 \[
 \mathrm{Pr}\left\{ \frac{\tilde{p}_N}{p}  - 1\geq \epsilon \right\}\leq e^{-Np\epsilon^2/3}.
 \]
 Second, if $\tilde{p}_N<p$, then 
 \[
 \left|  1-\frac{\tilde{p}_N}{p} \right| = 1-\frac{\tilde{p}_N}{p}  
 \] 
 By the multiplicative Chernoff bound, second part
\[
 \mathrm{Pr}\left\{ 1-\frac{\tilde{p}_N}{p} \geq \epsilon \right\}\leq e^{-Np\epsilon^2/2}.
\] 
 Thus
 \[
 \mathrm{Pr}\left\{\left|  1-\frac{\tilde{p}_N}{p} \right|\geq \epsilon \right\}\leq  e^{-Np\epsilon^2/3}
 +e^{-Np\epsilon^2/2} \leq  2e^{-Np\epsilon^2/3}.
\] 
\end{proof}

\chapter{Computation of $\beta$-values}
\label{sec:computationOfBeta}
We now explain the main computational part of this work, which is the computer calculation and tabulation of  (estimates for) error probabilities $\beta^{p^{\eta}}$.  We describe
in concrete terms not only the theory behind this computation but also how it is implemented in Python/Numpy/Scipy code in the packages betaMonteCarlo.src 
and betaMonteCarlo.test.  We have made the source of these packages publicly available at \cite{Brenner:2013:Online}.

\textbf{Notation.}  We recall the following notation from earlier in the paper or from the standard theory of types.  For $N>0$  and an alphabet of symbols $X$, we let 
$\mathcal{T}_N$ denote the \textit{type classes} of size $N$ over $X$.
\nomenclature{$\mathcal{T}_N$}{Type classes of size $N$ over $X$; set of outcomes of $N$ independent draws with replacement
from $X$, considered without respect to order in the sequence of draws.}
\nomenclature{$T$}{Element of $\mathcal{T}_N$: $\lvert X\rvert $-vector whose entries are nonnegative integers summing to $N$}
  That is, $\mathcal{T}_N$ 
is the set of outcomes of $N$ independent draws with replacement of elements from $X$, considered without 
respect to order in the sequence of draws.  An element $T\in\mathcal{T}_N$ can 
be identified with an $|X|$-vector whose entries are nonnegative integers summing to $N$.  The origin of the terminology is the
fact that the map associating each empirical sequence $\omega_N\in X^N$ with
the type class, by taking frequency counts, is a partitioning of $X^N$
into disjoint equivalence classes.  Thus, we can write $\omega_N\in T$
to indicate that $\omega_N$ belongs to the type class $T$.

Denote by $p_T$ 
\nomenclature{$p_T$}{The probability distribution in $\mathcal{P}=\mathcal{P}_{\lvert X\rvert }$ associated to type $T$}
the probability distribution in $\mathcal{P}=\mathcal{P}_{|X|}$ associated to type $T$, defined as the distribution whose parameters are the entries of $T$ divided by $N$.
For $p\in\mathcal{P}$, $\gamma>0$, $\beta^p_N(\gamma)$ is defined as in Section \ref{sec:preliminaries}, namely,
\begin{equation}\label{eqn:defOfBeta}
\beta^p_N(\gamma) = \mathrm{Pr}_{\omega_N\sim p}\left\{\tau(P_{\omega_N}) < \gamma\right\}.
\end{equation}
For a set $A\subset\mathcal{P}$, $\mathbf{1}_{A}$ denotes the characteristic (indicator) function of $A$.  
\nomenclature{$\mathbf{1}_A$}{Characteristic function of a subset $A\subset \mathcal{P}$}
\nomenclature{$\mathbf{1}_A(T)$}{Value $\mathbf{1}_{A}(p_T)$}
We use $\mathbf{1}_A(T)$
to denote the value $\mathbf{1}_A(p_T)$. Note that we clearly have
\begin{equation}\label{eqn:MIDependsOnlyOnTypeClass}
\tau(p_{\omega_N})=\tau(p_T)\;\text{for}\; \omega_N\in T.
\end{equation}
Along the same lines, we can expand the emission probability of $T\in\mathcal{T}_N$, as
\begin{equation}\label{eqn:decompositionOfTypeEmissionProbability}
\mathrm{Pr}_p(T)=\sum_{\omega_N\in T}\mathrm{Pr}_p(\omega_N).
\end{equation}
\nomenclature{$\mathrm{Pr}_p(T)$}{$\sum_{\omega_N\in T}\mathrm{Pr}_p(\omega_N)$}
For an in-depth treatment of the Theory of Types see, e.g., Chapter 11 in \cite{cover2006elements}).

In this chapter we will denote $p^\eta$ whenever possible simply by $p$.  This not only  reduces
notational clutter, but also reminds the reader that the majority of the arguments apply to any $p\in \mathcal{P}$, not only to distributions with uniform marginals.

\section{Overview of Methods}
In our computation of $\beta_N^p(\gamma)$, we carry out two computationally distinct tasks:
 \begin{enumerate}
 \item Prior to the running of the structure learning algorithm, compute (offline) 
 estimates of $\beta^p_N(\gamma)$ for a reasonably large number (on the order of $10^3$) of pairs of $N$, 
 $\gamma$, and we store the results in a data structure/file.
 \item In the deployed version of the learning algorithm, we call a function that first looks up values of $\beta$ in the tables produced in the first step, 
 in a neighborhood of the requested $N$ and $\gamma$,
 and then we use linear interpolation to return a result approximating $\beta^p_N(\gamma)$.
 \end{enumerate}
 Each of these components raises its own issues, which are actually intertwined.  For the offline computation, the main issues are how to compute $\beta$
 in a way that trades off precision of the result with speed/tractibility in a quantifiable manner.  For the lookup/interpolation step, the main issue is precisely how
 to carry out the linear interpolation, that is on which statistics derived from $N,\gamma, \eta$, to linearly interpolate.  
 The answers to these questions determine the makeup of the grid of $N$, $\gamma$
 for which the computations in the first step should be carried out and stored. 
 
Because of the functioning of each step in the overall algorithm, each step has different practical constraints.
The first step can be as slow as necessary to achieve the desired accuracy, but the second step must be
 very fast so as not to impede the learning algorithm.  These consideration must be balanced against one another.  On the one hand, by making the table in the first step
 larger, we make it easier to obtain accurate estimates of $\beta^p_N(\gamma)$ for the full range of $N,\gamma$.  On the other hand, we do not want to make the table too large, because we want the table to be able to fit into memory (RAM) in order to make the second step as fast as possible.  We will find these issues are much easier to address after we have explored the results of calculating $\beta$'s for strategically chosen pairs of $N,\gamma$. 
 
 In more detail, for the first task, our computation of an individual $\beta_N^p(\gamma)$, we follow, up to a certain point, well known and accepted patterns
 in numerical methods.  First (Section \ref{sec:exactComputation}), we give
 an algorithm for the exact computation of $\beta_N^p(\gamma)$,
 using its definition as the cdf of a certain discrete random variable.
 Second, (Section \ref{subsec:summationIntegrationReplacement}), we replace the original definition of $\beta$
 as combinatorial object (sum of many different, small probabilities) with the integral of a continuous function in $\mathbf{R}^3$, approximating the original $\beta$.
 Having made this replacement, we can introduce the standard method of integration of a continuous function over a domain in $\mathbf{R}^n$
 by ``Monte Carlo" sampling, and specify (Section \ref{subsec:montecarlo}) a particular scheme for determining a stopping criterion for the Monte Carlo integration.
 At this point, the problem becomes how to choose a probability distribution for the Monte Carlo sampling which reduces
 the variance of the approximations and allows the stopping criterion to be satisfied in as few iterations as possible.  We explain
 (Section \ref{subsec:importanceSampling}) how we can achieve this goal via the strategy of \textit{importance sampling}, which uses the information we already have concerning the integrand to construct the sampling distribution.  In order to carry out the importance sampling strategy, we have to use some features
 which are particular to the situation at hand.  Namely, in this situation, the selection of a point from $\mathbf{R}^3$ according to the
``importance-sampling" probability distribution is equivalent to picking a distribution $p_0(t)\in \mathcal{P}_{2,2}$ according to a weighting
scheme, with some regions of the space $\mathcal{P}_{2,2}$ weighted much more heavily than the rest, in a way based on the integrand.  We break down this selection into two parts: first, the selection of the marginals $p_{0,A}$, $p_{0,B}$ and second, the selection of the $t$-parameter.  We explain how our knowledge of the integrand guides each part, covering the weighting scheme for the marginals in Section \ref{subsec:samplingmarginals} and our weighting scheme for the $t$-parameter in Section \ref{subsec:fixedmarginals}.
 
 \section{Exact Computation of the Entire CDF}
 \label{sec:exactComputation}
We develop a method of computing the \textit{function} 
$\beta_N^p(\cdot)$ exactly.  The reason this is feasible
is that $\beta_N^p$ is the CDF of a certain discrete random variable,
namely the random variable which maps $\omega_N$,
in the finite outcome space $X^N$, to the mutual information $\tau(\omega_N)$.
As a result, $\beta_N^p(\cdot)$ has a very simple form as a function:
it is an increasing ``step function", with a certain finite number of 
jump discontinuities on a fixed interval between $0$ and the maximum
value of $\tau(p)$ for $p\in P_{k,l}$.  This allows us to store $\beta_N^p(\cdot)$
concisely in a data structure that keeps track only of the discontinuity
points $\gamma$ and the values $\beta_N^p(\gamma)$ of the CDF
at those discontinuity points.  For this purpose, a dictionary (Python for ``hash map")
is a suitable data structure.  It is convenient to define a Python class
called CDF, such that every CDF object has such a dictionary as its main
data structure, and also several methods as part of its API, as we will describe below.

\subsection{Serial Computation}
\label{subsec:serialComputation}
By \eqref{eqn:MIDependsOnlyOnTypeClass} and \eqref{eqn:decompositionOfTypeEmissionProbability}
we can express $\beta^p_N(\gamma)$, 
 defined as in \eqref{eqn:defOfBeta}, as follows, 
\begin{equation}\label{eqn:TypesDefnOfBeta}
\begin{aligned}
\beta^p_N(\gamma) =& \sum_{\omega_N\in X^N}\mathrm{Pr}_p(\omega_N)
\cdot \mathbf{1}_{[\tau(p_{\omega_N})<\gamma]}\\
=& \sum_{T\in \mathcal{T}_N} \mathrm{Pr}_p(T)\cdot \mathbf{1}_{A^0_{\gamma}}(p_T).\\
\end{aligned}
\end{equation}
This equation allows us to develop a practical method for computing \textit{the entire} function $\beta^p_N(\cdot)$ exactly.  

Any such method must have two main components: first, a method for \textit{iterating over} all the
types of a fixed size $N$ and length $n=|X|$; second, a method for \text{processing} a given type.
We first describe our method for iterating
over the types, which is one of only one of many possible methods.  The advantage of our method
is that it makes it easy to implement a parallelized version of the algorithm, as we will see in
Section \ref{subsec:parallelComputation} below.  Then we will explain our method of processing
each type, which in this case amounts to accounting for the type in the CDF.  The advantage
of separating out the processing method from the iteration method is that it allows
us to ``plug in" various other processing methods, as we will do in Section \ref{subsec:parallelComputation} below.

In order to describe the iteration method, consider the list of \textit{prefixes} of $T\in\mathcal{T}_N$.
For example, the list of prefixes of the type $(0120)\in T_3$ of length $4$ 
consists of the $5$ prefixes $[(),\, (0), \,(01), \,(012),\, (0120) ]$.  We can
think of \textit{all} the prefixes of \textit{all} elements of $T_N$ as being arranged in a rooted tree structure of height $n+1$,
where the root of the tree is the empty prefix, the prefixes of length $m$ are located at level $m+1$,
and the parent of a prefix of length $m+1$ is its prefix of length $m$.  
Further, the elements of $\mathcal{T}_N$ reside in the leaf level of the tree.  With this
definition of the tree, the path from the root down to $T\in T_N$ in level $n+1$ (e.g., $(0120)$) passes
through the $n+1$ prefixes of the type (e.g., $[(),\, (0), \,(01), \,(012),\, (0120) ]$).  In Figure
\ref{fig:typePrefixTree} we
have shown the entire tree of prefixes for the case $T_3$ and $n=|X|=4$.
\vspace*{3cm}
\begin{sidewaysfigure}
\begin{flushleft}
\vspace*{9cm}
\includegraphics[scale = 0.8, trim = 0mm 0mm 0mm 0mm, clip]{./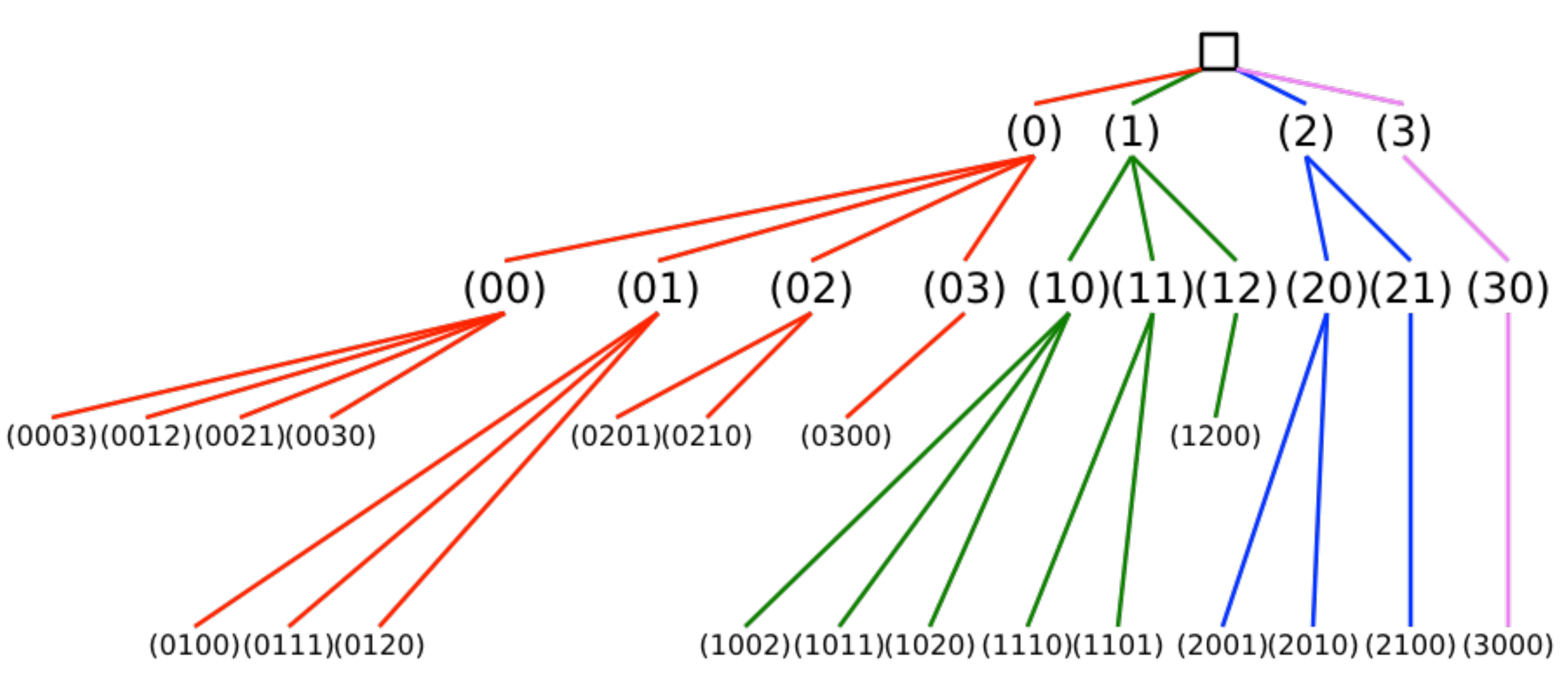}\\
\caption{Complete tree of Type Prefixes for $|X|=4$
and $N=3$, Types appearing as leaves.}
\end{flushleft}
\label{fig:typePrefixTree}
\end{sidewaysfigure}

Once we have set up the tree, we can specify a method of iterating over $\mathcal{T}_N$
by giving any method of tree traversal.  For reasons which will be seen below in Section 
\ref{subsec:parallelComputation} we use depth-first traversal (DFT).  We present
pseudocode for the TypePrefix class and recursive DFT processing of the tree nodes: 
\begin{lstlisting}[language=Python]
class TypePrefix

    #Constructor
    def typePrefix(N, data = [], n = 4):
        New typePrefix with 
            size := N
            length := n, 
            data := data, where len(data) <= n sum(data) <= N
            processed 	:= False
            
    #Methods 
    def boolean hasChildren():
         return len(data) < n

    def generator<typePrefix> childrenGenerator():
        r := N - sum(data)
        if len(data) < n - 1:
            return generator(typePrefix(N,data+[i],n) for i in range(r+1))
        else:
            return generator( typePrefix(N, data+[r],n) for i in range(1))
            
    def void DFSProcess(processMethod):
        if hasChildren():
            for child in childrenGenerator():
                if not child.processed:
                    child.DFSProcess(processMethod)
        else:
            processMethod(data)
        processed := True
\end{lstlisting}
Because only the leaves of the tree are true elements of $\mathcal{T}_N$,
it is only at the leaf level that any actual processing is done: for internal
nodes, which correspond to proper prefixes of types, processing the node
\textit{means} processing the node's children.  In our implementation,
we have the flexibility to pass in to the method \textbf{DFSProcess}
any method \textbf{processMethod} which takes a \textbf{TypePrefix} as input.

It will be easier to explain the \textbf{processMethod} we use in the context
of the API of the \textbf{CDF} object.  A minimal implementation of the \text{CDF}
object must have the following methods:
\begin{itemize}
\item \textbf{setReferenceDistribution(eta)} Set the underlying distribution to
be $p^\eta$ with reference to which the emission probabilities of types are calculated.
\item \textbf{accountForEvent(probability,value)}  Recall that the CDF object
stores a representation of a locally constant, monotone increasing function,
with finitely many discontinuities.  This method modifies the CDF function
by adding the step function $\textbf{probability}\times \mathbf{1}_{\left\{\geq \textbf{value}\right\}}$
to the CDF function.
\item \textbf{assignCumulativeProbability(value)} returns the value of the CDF
function at \textbf{value}.      
\end{itemize}
The implementation of these methods will depend on the data structures
used internally by the \textbf{CDF} object to store the CDF function.  We
have used a sorted list of discontinuity values and a dictionary of
pairs (discontinuity, CDF(discontinuity)), because this is the easiest
to implement, but more space/time efficient implementations are certainly possible.

In this framework, we can use \eqref{eqn:TypesDefnOfBeta} to see how
to define the \textbf{processMethod} which will give us the exact CDF:
\begin{lstlisting}[language=Python]
def accountForType(T):  #T a type class
    p := referenceDistribution
    p_T := empirical distribution of T
    accountForEvent(tau(p_T), Pr_p(T))  #Pr_p(T) emission probability of T
\end{lstlisting}
In order to compute the entire exact CDF, we make calls to the driver method
\textbf{accountForTypesWithPrefix} of the \textbf{CDF} class:
\begin{lstlisting}[language=Python]
   def accountForTypesWithPrefix(aTypePrefix):
        aTypePrefix.DFSProcess(accountForType)
                        
    def accountForAllTypes(self):
        rootPrefix = typePrefix(N, [ ], n)
        accountForTypesWithPrefix(rootPrefix)
\end{lstlisting}
The method \textbf{accountForTypesWithPrefix} will be useful
in the parallelization, Section \ref{subsec:parallelComputation}.

\begin{figure}\label{fig:runningTime}
\centering
\hspace*{-1.3cm}\mbox{\subfigure{\includegraphics[width=3in]{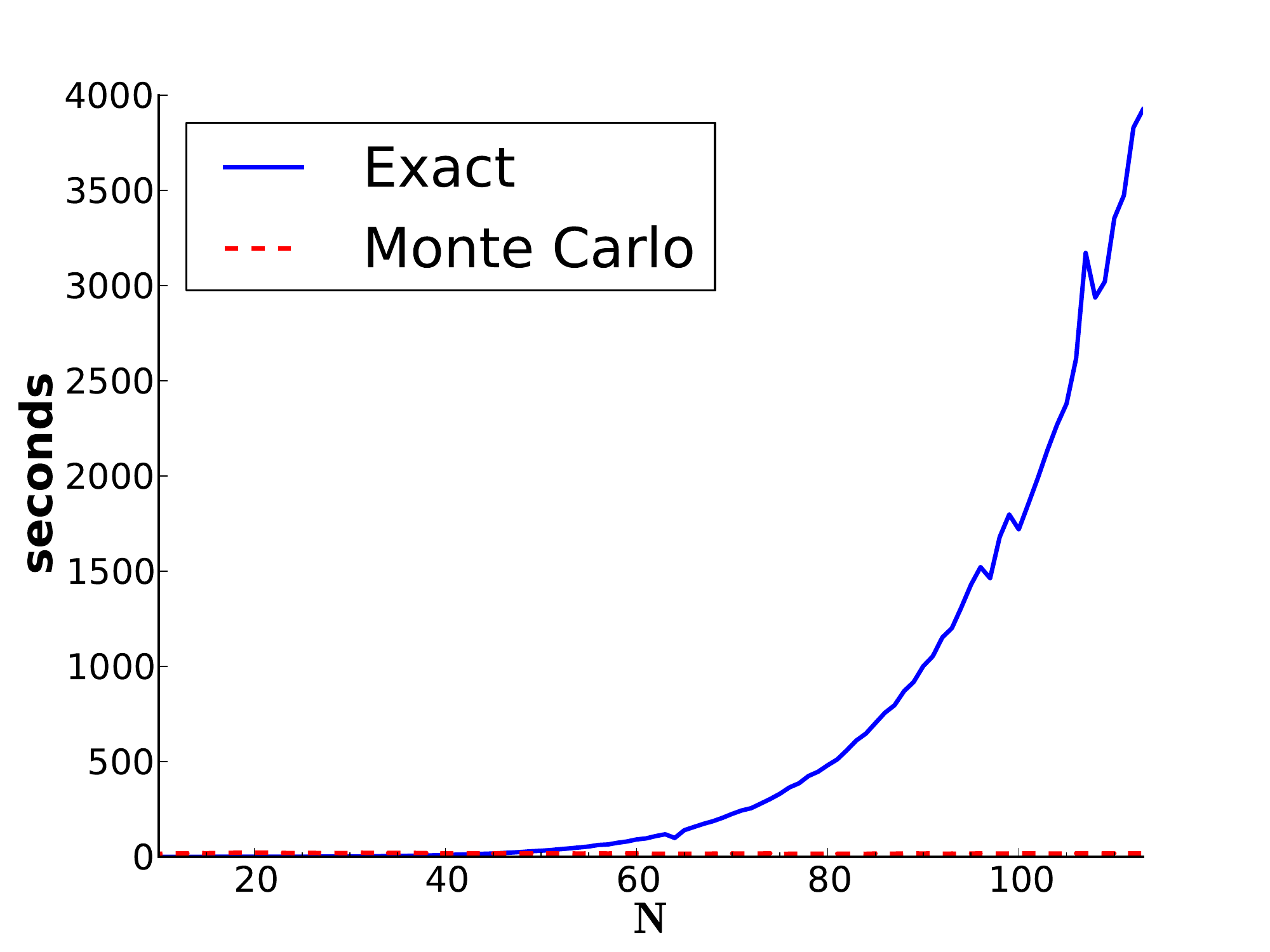}}\quad
\subfigure{\includegraphics[width=3in]{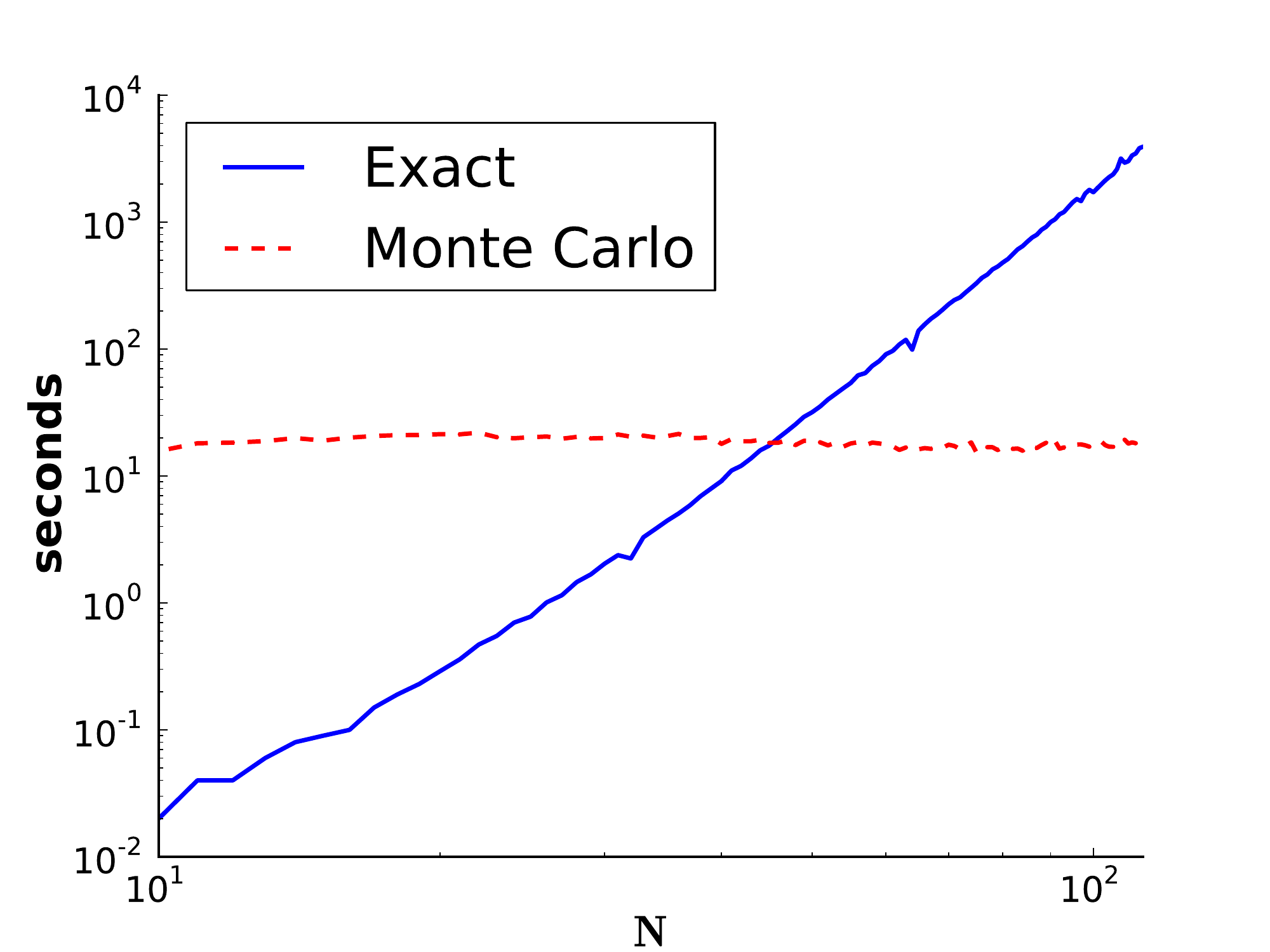} }}
\caption{Running time of exact calculations (serial implementation) and Monte Carlo estimations
of $\beta_N^{p^{\eta}}$ for a pair of binary marginal random variables, i.e.,
$n=|X|=4$, as a function of $N$.  Log-log plot shows the approximately quartic dependence of running
time on $N$.}
\end{figure}
Since the number of types $|\mathcal{T}_N|$ is exactly $\binom{N+n-1}{n-1}$, the running 
time of the exact calculation algorithm is $\Omega(N^{n-1})$.
In our experiments, Figure \ref{fig:runningTime}, the algorithm for the case
of $n=|X|=4$, appears to run in at least quartic time.  It might
be possible to wring out some improvement in the running
time by further optimizations to the serial algorithm, but it is much
easier to achieve a speed-up through a simple parallelization of the algorithm.

 \subsection{Parallel Computation}
  \label{subsec:parallelComputation}
The idea of the parallel computation is roughly to break down the task of
computing the CDF by ``branch" of the tree of type prefixes, where
a branch is a subtree having a single type prefix as common ancestor.  For each
branch, we form a separate process.  
The processes are distributed out to different cores and each process returns
a \textbf{CDF} object accounting for the types in that branch.  When all the processes
have returned their \textbf{CDF} objects, we merge (add) the CDFs into a single CDF,
accounting for all the types.  Although the computation is not embarrassingly
parallel, the parallelization scheme we have implemented agrees closely
with the MapReduce paradigm \cite{dean2008mapreduce}.

In order to carry out the parallelization, we need a few auxiliary methods:
\begin{itemize}
\item   \textbf{typePrefix.childrenListLastEntry\_k\_Mod\_m(k,m)}: return a list of the children
of a typePrefix whose last (right-most) entry is congruent to $k$ modulo $m$.
\item  \textbf{accountForListOfTypePrefixes(typePrefixes[ ], N, n, refDistribution)}.  Returns
a new CDF object accounting for all the types which are descended from \textbf{any} of the 
prefixes contained in the given list of typePrefixes.
\item  \textbf{CDF.mergeInto(anotherCDF)}  Merges anotherCDF into the given CDF object,
producing a CDF whose function is the sum of the functions of the given CDF and the other CDF.  
\end{itemize}
The first two methods are simple generalizations of the procedures \textbf{childrenGenerator}
and \textbf{accountForTypesWithPrefix}, for which we have already given pseudocode above.
The implementation of \textbf{CDF.mergeInto(anotherCDF)} is dependent
on the representation of the CDF function internal to the \textbf{CDF} object, so we will
not give pseudocode for it here.  

All that remains is to give pseudocode for a ``driver" function that carries
out the parallelization strategy:
\begin{lstlisting}[language=Python]
def void accountForAllTypesParallelized(m):  #m: modulus
    rootPrefix = typePrefix(N, [], n) 
    rootsForParallelJobs = [rootPrefix.childrenListLastEntry_k_Mod_m(k,m) for k in range(modulus)]
    #Mapper step: distribute to collection of nodes
    ListOfCDFs = [accountForListOfTypePrefixes(aRootList,N, n,refDistribution) 
                                        for aRootList in rootsForParallelJobs]
    #Reducer step
    for CDF in ListOfCDFs:
            mergeInto(CDF)
\end{lstlisting} 
Because the mapper step is embarrassingly parallelizable, we can
use the \textit{Parallel...delayed} idiom of the Python joblib
library to parallelize the ``mapper" step and to farm out each one 
of the \textbf{accountForListOfTypePrefixes}
jobs to one of the available cores of the processor.  With this very 
simple implementation, we were able to calculate the entire
CDF for the case $N=200$ on an eight-core processor in roughly 2 hours.  If we wished
instead to use a cluster of distributed processors, we might use a MapReduce
implementation such as Hadoop, and then the call to \textbf{accountForListOfTypePrefixes}
would constitute the action of each of the Mappers.  The loop
closing the pseudocode would constitute the action of the Reducer.

The colors of the branches in Figure \ref{fig:betaDepOnKLDivergence}
illustrate this parallelization procedure in the toy-example case of $N=3$, $n=4$.
Each of the colors indicates those types which fall under a given element of
the list \textbf{rootsForParallelJobs}, provided we take the modulus $m\geq 4$.
For this very small example, there is a severe imbalance between
the number of types in each branch, with the branch under the prefix $(0)$
containing ten times as many types as the branch under the prefix $(3)$.
In the case when both $N$ and the modulus $m$ are significantly larger than the number of cores,
which is normally the case, this problem essentially
resolves itself because, whenever a node happens to be assigned a branch with very few
types, the node will finish its computation early and will then be free to be
assigned another branch to work on.

 \section{Monte Carlo Integration}
 \label{sec:monteCarloIntegration}
 \subsection{Replacing Summation with Integration}\label{subsec:summationIntegrationReplacement}
 We will  begin by addressing the question of how to estimate $\beta^p_N(\gamma)$, in a way that is scalable to several thousand pairs of $N,\gamma$, with some $N$ in the 
 range of $10^4$ to $10^5$.  
 In order to obtain a scalable method of computing $\beta$, we would like to find a way to apply
standard techniques of numerical integration such as quadrature (e.g., Riemann sums) and Monte Carlo Integration, because
one can run such methods iteratively with a stopping criteria and obtain estimates that are (provably) good enough
for certain purposes.   Our first significant step is to replace the sum in
 \eqref{eqn:TypesDefnOfBeta}
 with an integral approximating the sum.  Naturally, the region of integration for this integral is $A^0_{\gamma}$.  The choice of integrand
 is based on the following basic estimate in information theory.
 \begin{lem}\label{lem:RobbinsApproximation}
Let $T\in\mathscr{T}_N$ have the property that $(T_N)_i\neq 0$ for $i=1,\ldots,|X|$; equivalently, the
 associated probability distribution $p_T\in\mathcal{P}$ assigns nonzero probability to each atomic event $\{\omega\}$,  
 for $\omega\in X$.  For some $0\leq \vartheta(T)\leq 1$, we have
\begin{equation}\label{eqn:probOfTypeExactExpression}
 \mathrm{Pr}_{p}(T) = \exp(-NH( p_T \|p) -\vartheta(T)|X|/12)\cdot (2\pi N)^{-(|X|-1)/2}\left(\prod_{i,j}(p_T)_{i,j}\right)^{-\frac{1}{2}}.
\end{equation}
Consequently, we have the upper bound
\begin{equation}\label{eqn:probOfTypeUpperEstimate}
 \mathrm{Pr}_{p}(T) \leq  \exp(-NH( p_T \|p))\cdot (2\pi N)^{-(|X|-1)/2}\left(\prod_{i,j}(p_T)_{i,j}\right)^{-\frac{1}{2}},
 \end{equation}
 \end{lem}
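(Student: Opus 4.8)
The plan is to recognize $\mathrm{Pr}_p(T)$ as an exact multinomial probability and then apply Robbins' sharpening of Stirling's formula to each factorial occurring in it. Writing $T=(T_\omega)_{\omega\in X}$ with $\sum_{\omega}T_\omega=N$ and $(p_T)_\omega=T_\omega/N$, equations \eqref{eqn:MIDependsOnlyOnTypeClass} and \eqref{eqn:decompositionOfTypeEmissionProbability} give
\[
\mathrm{Pr}_p(T)=\frac{N!}{\prod_{\omega}T_\omega!}\prod_{\omega}p_\omega^{T_\omega}.
\]
Here $p=p^\eta$ has full support, so every factor is finite. Robbins' inequalities state that for each positive integer $n$ one has $n!=\sqrt{2\pi n}\,(n/e)^n e^{\lambda_n}$ with $\tfrac{1}{12n+1}<\lambda_n<\tfrac{1}{12n}$; the hypothesis $T_\omega\neq 0$ for all $\omega$ is precisely what makes this applicable to each factorial in the denominator.

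First I would substitute these expressions for $N!$ and for each $T_\omega!$ and then regroup the result into three pieces. Using $\sum_\omega T_\omega=N$, the factors $e^{-N}$ in numerator and denominator cancel. Using $T_\omega=N(p_T)_\omega$, the purely algebraic block $\dfrac{N^N}{\prod_\omega T_\omega^{T_\omega}}\prod_\omega p_\omega^{T_\omega}$ collapses to $\exp\!\big(-N\sum_\omega (p_T)_\omega\log\frac{(p_T)_\omega}{p_\omega}\big)=\exp(-NH(p_T\|p))$, while the square-root prefactors combine to
\[
\frac{(2\pi N)^{1/2}}{(2\pi)^{|X|/2}\prod_\omega T_\omega^{1/2}}=(2\pi N)^{-(|X|-1)/2}\Big(\textstyle\prod_{i,j}(p_T)_{i,j}\Big)^{-1/2}.
\]
This already yields the claimed identity up to the leftover exponential factor $e^{\lambda_N-\sum_\omega\lambda_{T_\omega}}$.

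What remains — and the only point requiring an actual estimate — is to show $\lambda_N-\sum_\omega\lambda_{T_\omega}\in[-|X|/12,\,0]$, so that it may be written as $-\vartheta(T)|X|/12$ with $0\le\vartheta(T)\le 1$ (namely $\vartheta(T)=\frac{12}{|X|}\big(\sum_\omega\lambda_{T_\omega}-\lambda_N\big)$). The lower bound is immediate: $\lambda_N>0$ and $\sum_\omega\lambda_{T_\omega}<\sum_\omega\frac{1}{12T_\omega}\le\frac{|X|}{12}$ since each $T_\omega\ge 1$. For the upper bound I would invoke the AM–HM (equivalently Cauchy–Schwarz) inequality $\sum_\omega\frac{1}{12T_\omega+1}\ge\frac{|X|^2}{\sum_\omega(12T_\omega+1)}=\frac{|X|^2}{12N+|X|}$, together with $\frac{|X|^2}{12N+|X|}\ge\frac{1}{12N}$ (which reduces to $12N(|X|^2-1)\ge|X|$, true for $|X|\ge 2$, $N\ge 1$); hence $\sum_\omega\lambda_{T_\omega}>\frac{1}{12N}>\lambda_N$. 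This establishes \eqref{eqn:probOfTypeExactExpression}, and \eqref{eqn:probOfTypeUpperEstimate} follows at once by discarding the factor $e^{-\vartheta(T)|X|/12}\le 1$. The main (and genuinely minor) obstacle is just keeping the bookkeeping of the three groupings straight in the second step; there is no deeper difficulty.
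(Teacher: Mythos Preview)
Your proof is correct and follows essentially the same route as the paper: apply Robbins' sharpening of Stirling's formula to the factorials in the multinomial expression for $\mathrm{Pr}_p(T)$, regroup into the KL-divergence term, the $(2\pi N)^{-(|X|-1)/2}\prod (p_T)_{i,j}^{-1/2}$ prefactor, and the residual exponential. The paper's own proof is terser---it factors through the identity $\mathrm{Pr}_p(T)=|T|\exp(-N(H(p_T)+H(p_T\|p)))$ and then cites Csisz\'ar--K\"orner for the Robbins-based estimate of $|T|$---whereas you work directly from the multinomial formula and actually verify the containment $\lambda_N-\sum_\omega\lambda_{T_\omega}\in[-|X|/12,0]$ via AM--HM. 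Your version is therefore more self-contained, but the underlying argument is the same.
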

\begin{proof}
It is well known (and follows immediately from Theorem 11.1.2 of \cite{cover2006elements}) that  
\[
 \mathrm{Pr}_{p}(T) = |T|\exp(-N( H(p_T) + H(p_T \| p)) ),
\]
where $|T|$ is the cardinality of the type class $T$.
From Robbins' sharpening of Stirling's formula, it is possible 
to find an estimate for $\log|T|$ (see p. 39 of \cite{csiszar2011information} for details).  Exponentiating
Robbins' formulation we obtain:
\[
|T|= \exp(NH( p_T ) -\vartheta(T)|X|/12)\cdot (2\pi N)^{-(|X|-1)/2}\left(\prod_{i,j}(p_T)_{i,j}\right)^{-\frac{1}{2}},
\]
for some $\vartheta(T)\in [0,1]$.
We use this expression, under the assumption that  $(T_N)_i\neq 0$ for $i=1,\ldots,|X|$, or equivalently, that $p_T(a)>0$ for every
$a\in X$, to obtain \eqref{eqn:probOfTypeExactExpression}.  (Note that the definition of $\log$ used in  \cite{csiszar2011information} is $\log_2$, i.e., number of bits, which
explains the appearance of the $\ln e$ which does not appear in our formula).   As an immediate consequence of the nonpositivity of $-\vartheta(T)|X|/12$, we
then obtain \eqref{eqn:probOfTypeUpperEstimate}.
\end{proof}
We use the right-hand side of \eqref{eqn:probOfTypeUpperEstimate} to define a function $I$ ``extending"
$\mathrm{Pr}_p(\cdot)$ to \textit{all} of $\mathcal{P}$.
 Namely, we set
\[
B:=\{p\in \mathcal{P}|\; p(\{\omega\})>0 \;\forall\;\omega\in X\}.                                                                                                            
\]
\nomenclature{$B$}{$\{p\in \mathcal{P}\lvert \; p(\{\omega\})>0 \;\forall\;\omega\in X\}$}
For $q\notin B$, $\prod_{i,j}(q)_{i,j}=0$, so the factor
on the right-hand side of \eqref{eqn:probOfTypeUpperEstimate} 
which is the reciprocal of the square root of this product
is  $\infty$.  Thus, in our definition of $I$, we have to account
specially for points $q\in \mathcal{P}-B$.
For any $q\in\mathcal{P}$, we set
\begin{equation}\label{eqn:Idefinition}
 I(q):=
 \begin{cases}
 0                                                                                                      &          \text{if}\; q\in \mathcal{P}- B,\, q\notin p_{\mathcal{T}_N}\\
 \mathrm{Pr}_p(q)                                                                            &           \text{if}\; q\in \mathcal{P}- B,\, q\in p_{\mathcal{T}_N}\\
 \exp(-NH( q || p))\cdot (2\pi N)^{-(|X|-1)/2}                                        
 \left(\prod_{i,j}(q)_{i,j}\right)^{-\frac{1}{2}}   &        \text{otherwise, i.e., if}\; q\in B.
\end{cases}
\end{equation}
\nomenclature{$I(q)$}{$\begin{cases}
 0                                                                                                      &          \text{if}\; q\in \mathcal{P}- B,\, q\notin p_{\mathcal{T}_N}\\
 \mathrm{Pr}_p(q)                                                                            &           \text{if}\; q\in \mathcal{P}- B,\, q\in p_{\mathcal{T}_N}\\
 \exp(-NH( q \lvert \rvert p))\cdot (2\pi N)^{-(\lvert X\vert -1)/2}                                        
 \left(\prod_{i,j}(q)_{i,j}\right)^{-\frac{1}{2}}   &        \text{otherwise, i.e., if}\; q\in B.
\end{cases}$}
By Lemma \ref{lem:RobbinsApproximation}, $I$ has the following property:
\begin{equation}\label{eqn:Iupperbound}
I\geq \mathrm{Pr}_p\;\text{where they are both defined, namely on}\; p_{\mathcal{T}_N}.
\end{equation}
Also, since the interior $\mathrm{Int}(\mathcal{P})\subseteq B$, and since
the third line of \eqref{eqn:Idefinition} is a continuous function of $q$,
we have
\begin{equation}\label{eqn:Icontinuity}
I\;\text{is continuous on}\; \mathrm{Int}(\mathcal{P}).
\end{equation}
The practical advantage of $I(\cdot)$ over $ \mathrm{Pr}_{p}(\cdot)$ is that $I$ is defined on \textit{every} point of $\mathrm{Int}(\mathcal{P})$,
not just the discrete set of points $p_{T_N}$ corresponding to $\mathcal{T}_N$.  Attempts to extend the emission probability
``directly" (or using some sort of interpolation) to the whole probability simplex entail complicated and slow and error-prone rounding and truncation procedures.  
Since we are concerned with obtaining a numerical approximation to quantities based on $\mathrm{Pr}_{p}(\cdot)$, we derive
a considerable advantage to use a continuous approximation such as $I(\cdot)$, instead of $\mathrm{Pr}_{p}(\cdot)$ itself.

Because of the property \eqref{eqn:Iupperbound}, we have
\begin{equation}\label{eqn:betaBoundedByISum}
\beta_N^{p}(\gamma)=\sum_{T\in \mathcal{T}_N} \mathrm{Pr}_{p}(T) \mathbf{1}_{A_0^{\gamma}}(p_T)\leq 
\sum_{T\in \mathcal{T}_N} I(p_T) \mathbf{1}_{A_0^{\gamma}}(p_T),
 \end{equation}
where $\mathbf{1}_{A_0^\gamma}$ is the characteristic function
of the set $A_0^\gamma=\{p\in \mathcal{P}\;|\; \tau(p)\leq \gamma\}$.
By Lemma \ref{lem:RobbinsApproximation}, we only have
that $I$ provides an upper bound for $\mathrm{Pr}_p$, but
we have no information on how tight this upper bound is.  In order
to establish that the bound is tight for large $N$, we have
carried out both summations in \eqref{eqn:betaBoundedByISum},
in the script \textbf{scripts.robbinsEstimationAccuracy.py},
and reported the results in Figure \ref{fig:exactVersusRobbins}.  The CDFs
graphed in Figure \ref{fig:exactVersusRobbins}
show that the bound in \eqref{eqn:betaBoundedByISum} becomes
tight for large $N$, particularly for $\gamma <\eta$, which
is the region on which the approximation has to be most
accurate for the purposes of our algorithm.  So we may write
\[
\beta_N^{p}(\gamma)\approx 
\sum_{T\in \mathcal{T}_N} I(p_T) \mathbf{1}_{A_0^{\gamma}}(p_T)\;\text{
for large $N$}.
\]
\begin{figure}\label{fig:exactVersusRobbins}
\hspace*{-1.3cm}\mbox{\subfigure{\includegraphics[width=3in]{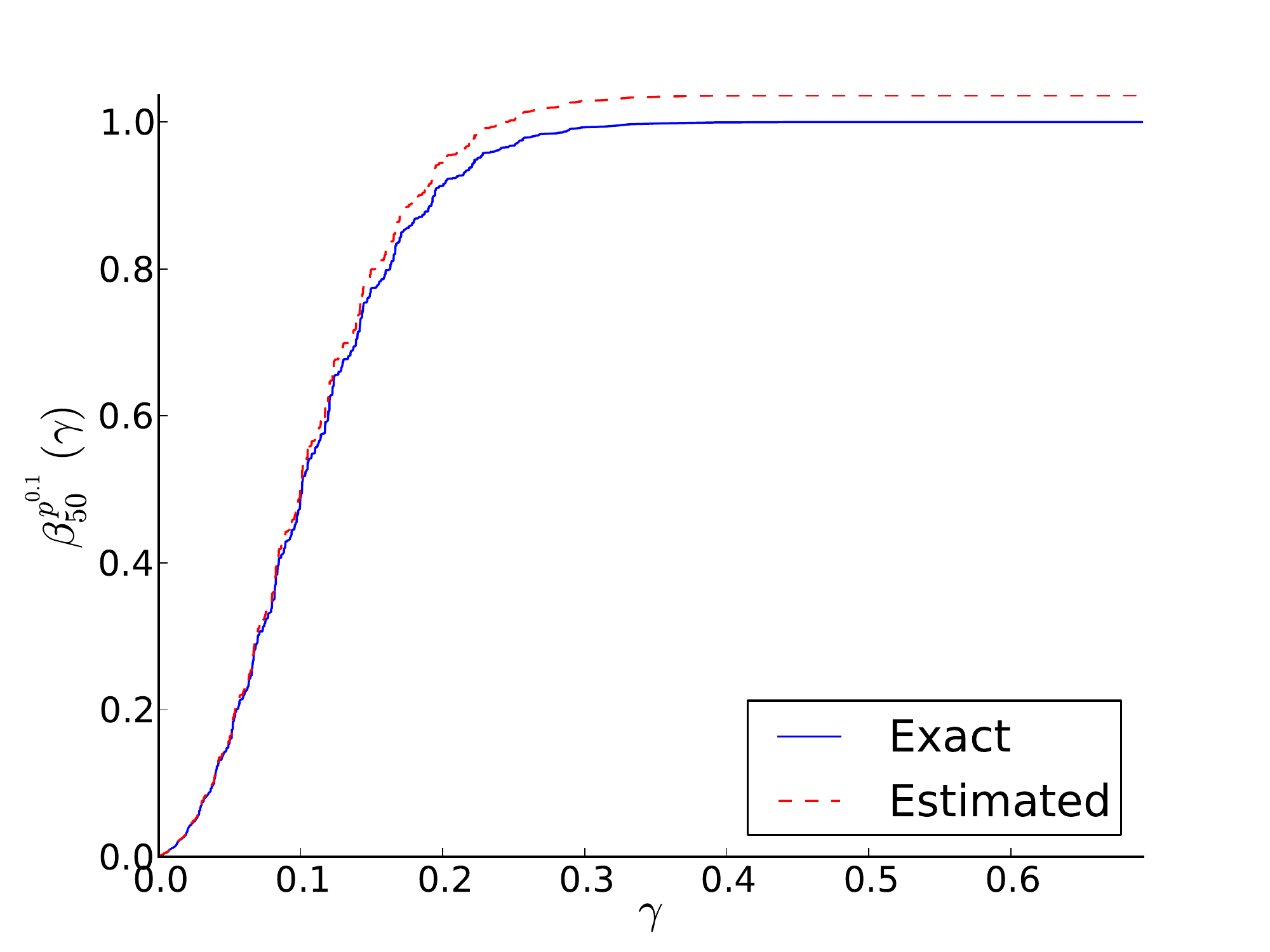}}\quad
\subfigure{\includegraphics[width=3in]{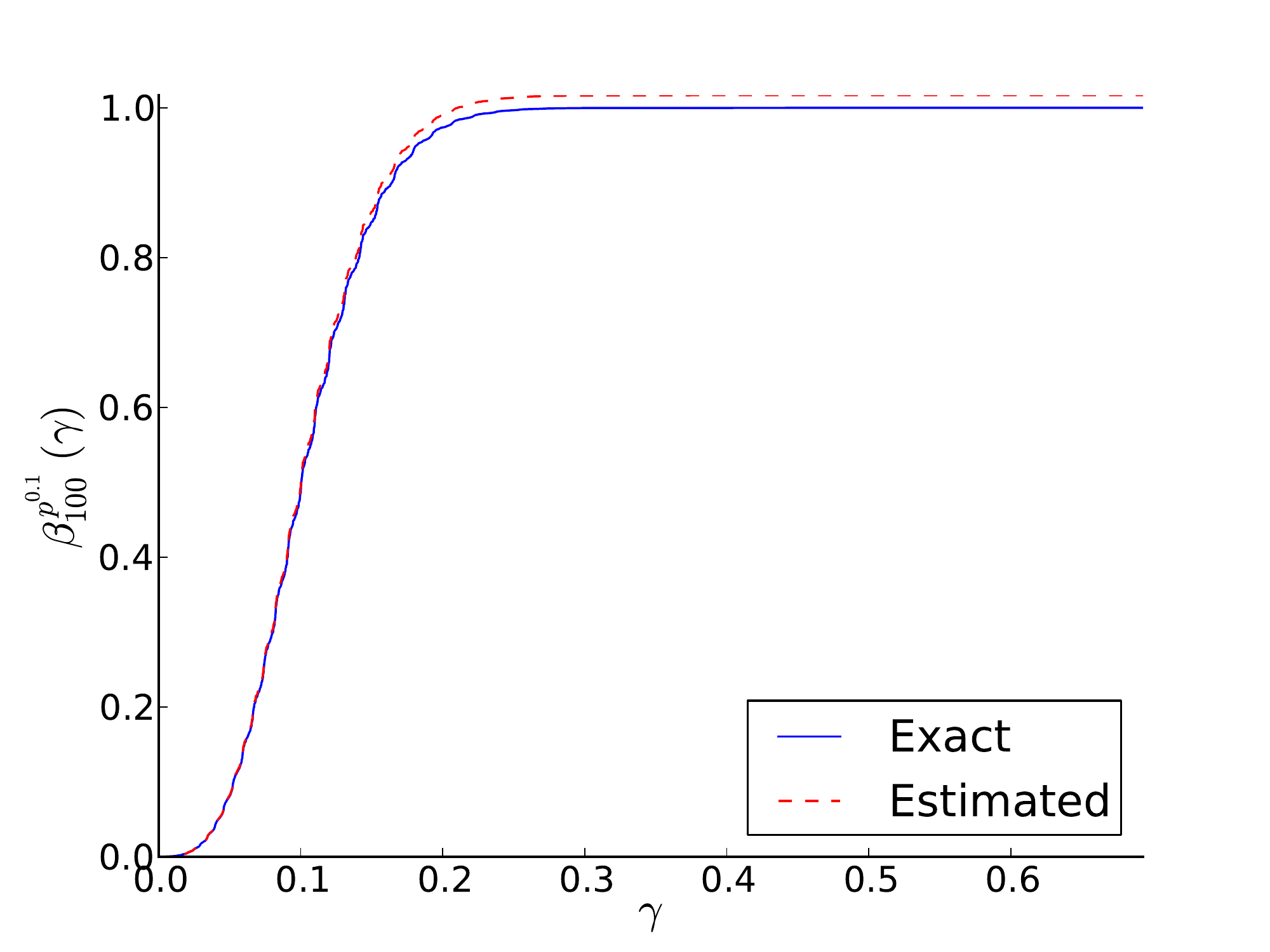} }}
\hspace*{-1.3cm}\mbox{\subfigure{\includegraphics[width=3in]{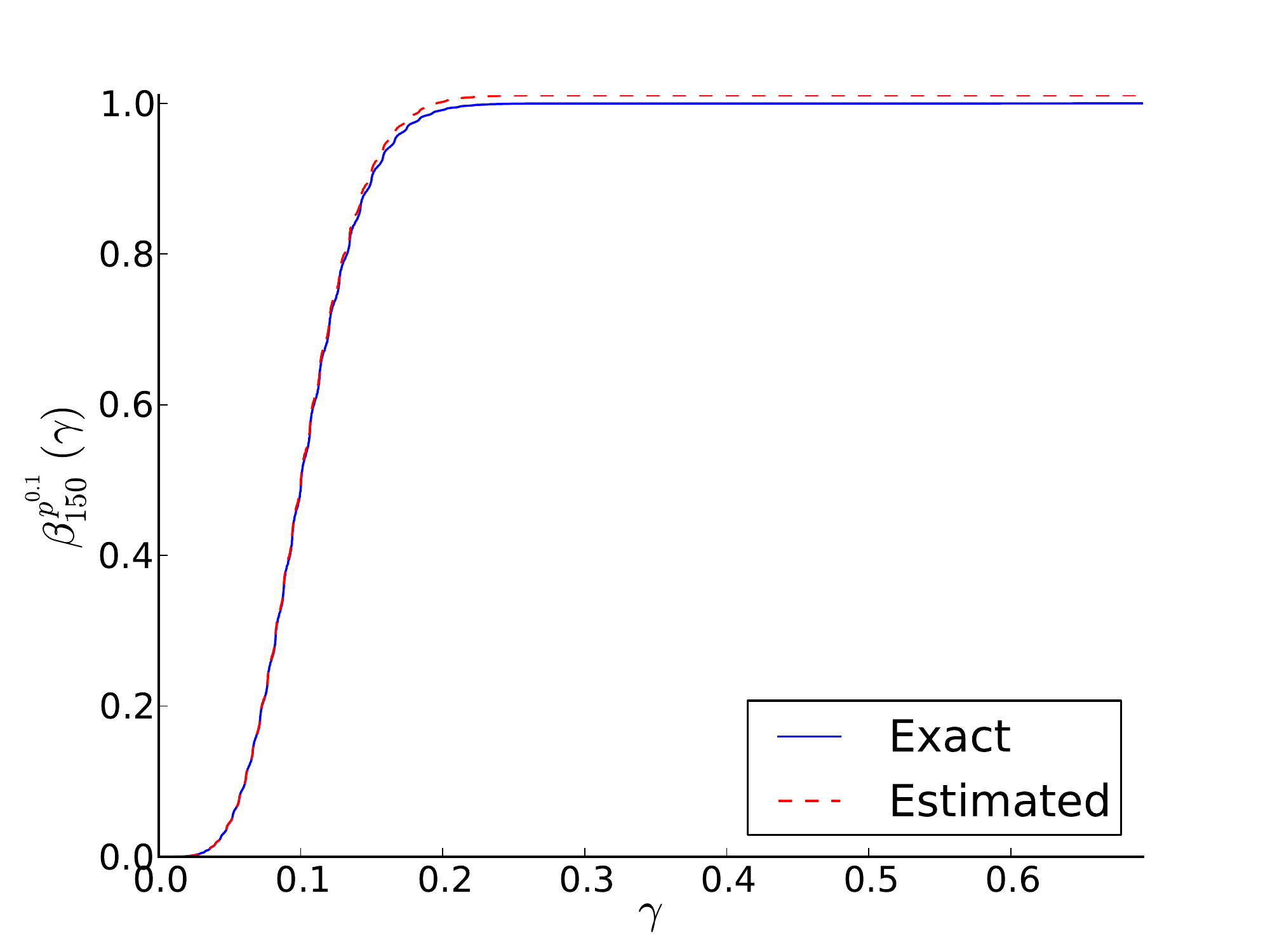}}\quad
\subfigure{\includegraphics[width=3in]{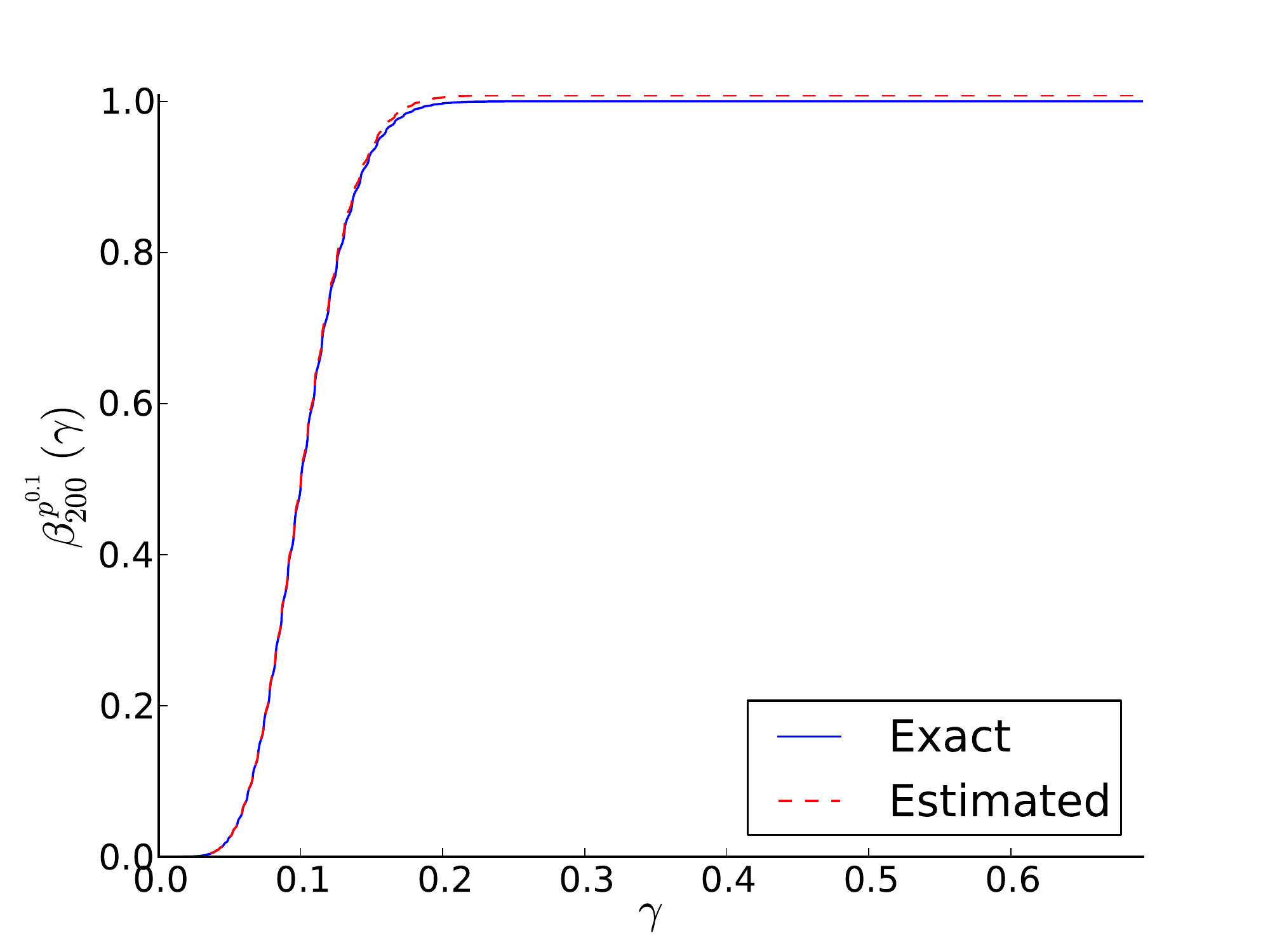} }}
\caption{Values of $\beta_N^{p^{0.1}}$ calculated by summation
over all type classes, using the exact probability function $\mathrm{Pr}_{p^{0.1}}$
and also using the estimating function $I$ evaluated on 
the points $p_{\mathcal{T}_N}$.  Results, shown for $N=50,100,150,200$,
essentially show the L1-error of $I$ as an approximation of $\mathrm{Pr}_{p^{0.1}}$, on $A_{0}^\gamma$, for all feasible $\gamma$ simultaneously.}
\end{figure}

Now extend $I$ from $\mathcal{P}$ to a function $\tilde{I}$ on all of 
$\mathbf{R}^n$ by
 \[
 \tilde{I}(q)=\begin{cases}
            I(q)& q\in \mathcal{P}\\
               0&\;\text{otherwise}.\end{cases}
\] 
\nomenclature{ $\tilde{I}(q)$}{$\begin{cases}
            I(q)& q\in \mathcal{P}\\
               0&\;\text{otherwise}.\end{cases}$}
 The properties \eqref{eqn:Iupperbound} and \eqref{eqn:Icontinuity}
 hold for $\tilde{I}$ since they hold for $I$.
Let $A\subseteq \mathcal{P}$ and let $\mathrm{d}q$ denote the ordinary Lebesgue measure.  Then by \eqref{eqn:Iupperbound} and the properties
of Riemann sums, we have 
\[
 N^{-\dim\mathcal{P}}\sum_{T\in \mathcal{T}_N} \tilde{I}(p_T) \mathbf{1}_{A}(p_T) \rightarrow \int_{A}  \tilde{I}(q)\mathrm{d}q  \;
 \text{as}\; N\rightarrow\infty.
\]
Since $\mathrm{dim}(\mathcal{P})=|X|-1$, we have
\[
\sum_{T\in \mathcal{T}_N} \tilde{I}(p_T) \mathbf{1}_{A}(p_T) \approx
N^{|X|-1}
\int_{A}  \tilde{I}(q)\mathrm{d}q  \;
 \text{for large}\; N.
\]
Putting the previous two approximations together, we have
\begin{equation}\label{eqn:betaapproxbyintegral}
  \beta^p_N(\gamma)\approx N^{|X|-1}\int_{A_\gamma^0}\tilde{I}(q)\mathrm{d}q 
   \;
 \text{for large}\; N.
\end{equation}

More explicitly,
\[
\begin{aligned}
 \beta^p_N(\gamma)\approx N^{|X|-1}\int_{A_\gamma^0}\tilde{I}(q)\,\mathrm{d}q &=  N^{|X|-1} \cdot \int_{A_0^\gamma} \exp(-NH( q \|p))\cdot(2\pi N)^{-(|X|-1)/2}                                        
 \left(\prod_{i,j}(q)_{i,j}\right)^{-\frac{1}{2}}\,\mathrm{d}q\\
&= (N/2\pi)^{(|X|-1)/2}\int_{A_\gamma^0}\exp(-NH( q \|p))\left(\prod_{i,j}(q)_{i,j}\right)^{-\frac{1}{2}}\,\mathrm{d}q\\
  &= (N/2\pi)^{(|X|-1)/2}\int_{\mathcal{P}}\exp(-NH( q \|p))\left(\prod_{i,j}(q)_{i,j}\right)^{-\frac{1}{2}}\,\mathbf{1}_{A_\gamma^0}\,\mathrm{d}q.
\end{aligned}
\]
The numerical evaluation of this integral breaks down into two distinct tasks.
\begin{itemize}
\item[(1)] Coding the integrand as a function 
\item[(2)] Writing a function that integrates such a function over $\mathcal{P}$, using Monte Carlo integration based on importance sampling with a standard stopping criterion. 
\end{itemize}
For task (1),  we form an object of type \textbf{emissionProbabilityCalculator}, parameterized by the variables ($k$, $l$, $\eta$, $N$), which remain fixed throughout the computation.  In our software, the
method  \textbf{RobbinsEstimateOfEmissionProbabilityTimesCharFunctionOfTauMinusGamma} takes the marginals and $t$-parameter of $p\in\mathcal{P}$ and returns
the value of the integrand at $p$. 

Task (2), the integration, is made up of two subtasks.  The first is implementing a general procedure for MonteCarlo integration.  The procedure iterates until the measured variance is ``small".  By ``small" we mean small enough that we, for
a fixed percentage accuracy, say \text{PrecPerc}, close to $0$ and a confidence, say \textbf{Conf}, close to $1$, we conclude that \textit{the result is
 \textbf{PrecPerc} percent accurate, with confidence \textbf{Conf}}.
For the first subtask we implement a standard scheme following Section 4.5 of \cite{bucklew:rare}, as decribed in Section \ref{subsec:montecarlo} below.  The second subtask is choosing a probability distribution on the space $\mathcal{P}$ 
for the MonteCarlo integration, based on our prior knowledge of the integrand, that reduces the variance and induces convergence in as few Monte Carlo iterations as possible.
This part, which is explained in Section 
\ref{subsec:importanceSampling} and is by its nature particular to the situation of the particular integrand at hand, is essentially our novel contribution.

\subsection{Monte Carlo Integration with stopping criterion}\label{subsec:montecarlo}
We now give a high-level view in pseudocode of the \textbf{MonteCarloIntegrate} procedure.  As input the procedure takes the following parameters, to be specified below in more detail relevant to our situation:
\begin{itemize}\item \textbf{f}, the integrand, a function on $\mathcal{P}$.
In the application $f$ is $\tilde{I}\cdot\mathbf{1}_{A_0^\gamma}$.
\item\textbf{pdf}, a nonnegative function on $\mathcal{P}$ representing a probability distribution 
absolutely continuous with respect to Lebesgue measure on $\mathcal{P}$.
\item \textbf{MaxIt}, the  maximum number of iterations, a positive integer.
\item\textbf{freqRec},
a frequency of recording intermediate results, a positive  integer.
\item\textbf{freqCrit}, a frequency of testing for the stopping criterion, a positive integer
\item\textbf{PrecPerc},  the desired precision percent of the result, a small integer, typically  $10$.
\item \textbf{Conf}, the desired confidence level in the precision of the result, a float near $1$, typically $0.95$.
\end{itemize}
 As output, the procedure has the lists \textbf{itVals} and \textbf{rhoHatVals} such that for each index \textbf{i}$>0$, the following hold:
 \begin{itemize} \item \textbf{rhoHatVals[i]} is the Monte Carlo estimate of the integral at iteration \textbf{itVals[i]} and 
 \item \textbf{rhoHatVals[-1]}, the last value Monte Carlo estimate in \textbf{rhoHatVals}, 
 is within \textbf{PrecPerc} percent of the true value of the integral with probability \textbf{Conf}.
 \end{itemize}
 The latter is a heuristic statement rather than a guarantee: it must be verified empirically (see below).
 The framework procedure has the following pseudocode:
\begin{lstlisting}[language=Python]
def MonteCarloIntegrate(f, pdf, MaxIt, freqRec, freqStop, PrecPerc, Conf):
	t_Conf := two-sided quantile of the standard Gaussian random variable at Conf
	K := (t_Conf*100/PrecPerc)^2
	
	rhoHatVals := [] 
	FHatValues := []
	itVals := []
	s :=0
	F_s := 0 

	Iterate until stopping criterion or MaxIt iterations has been reached:
	
		Draw distribution q according to pdf
		s_summand := f(q)/pdf(q)
		F_s_summand := s_summand^2
		s += s_summand
		F_s += F_s_summand
		Once every freqRec iterations:
			I := s/(Number of iteration)
			rhoHatVals.append(I)
			itVals.append(Number of iteration)
		Once every freqStop iterations:
			F := F_s/(Number of Iterations)
			if Number of  Iterations >= K*(F/I^2 - 1 ):
				Invoke Stopping Criterion and break from loop
	
	return itVals, rhoHatVals
     \end{lstlisting}

A few comments on the procedure \textbf{MonteCarloIntegrate} are in order: the two-sided quantile of the standard Gaussian random variable at \textbf{Conf} 
is defined as the unique value $t_y$ satisfying $P(|Z|\leq t_y)=\textbf{Conf}$ for $Z$ the standard Gaussian random variable.  In Python/Scipy this can be computed
easily by
\begin{lstlisting}[language=Python]
from scipy import stats
t_y = stats.norm.isf((1-y)/2.0)
     \end{lstlisting}
and in our application $y$ is \textbf{Conf}.  For example, for \textbf{Conf} at $0.90$, we have $t_{\textbf{Conf}} \approx  1.644$, for \textbf{Conf} at $0.95$, we have $t_{\textbf{Conf}} \approx  1.959$,
and for \textbf{Conf} at $0.99$ we have $t_{\textbf{Conf}}\approx 2.57582$.

The stopping criterion we use
is based on a heuristic rather than a theorem.
To explain this heuristic, let $\rho$
be the \textbf{true value of an integral} which we are computing
by the Monte Carlo method, and let $\hat{\rho}$
be \textbf{the empirical estimate for \boldmath $\rho$ \unboldmath} achieved after a certain number of iterations.
Thus, the random variable $\hat{\rho}-\rho$
is \textbf{the empirical error in the Monte Carlo estimate of the integral}.
\nomenclature{$\hat{\rho}-\rho$}{Random variable quantifying the empirical error in the Monte Carlo
estimate of an integral after $i$ iterations}%
For our stopping criterion, we would like to
prove that if a certain condition is satisfied, then we have
\begin{equation}\label{eqn:stoppingCriterionMeaningOf}
\mathrm{Pr}\left( \left|
\hat{\rho}-\rho
\right|\leq \frac{\textbf{PrecPerc}}{100}
\right)\geq \textbf{Conf}.
\end{equation}
Under the assumption that $\hat{\rho}-\rho$
is a Gaussian r.v.$\!$ centered at zero and with 
standard deviation $\sqrt{\mathrm{Var}(\hat{\rho})}$,
it is not difficult to prove the relation (see, e.g., (4.6) in \cite{bucklew:rare})
\[
\mathrm{Var}(\hat{\rho})=\frac{F-\rho^2}{\textbf{Number of iteration}},
\]
where $F$ is the quantity calculated in the procedure \textbf{MonteCarloIntegrate}.  
Using some simple algebraic manipulations (see p. 71 of \cite{bucklew:rare})
we can obtain from this relation that \eqref{eqn:stoppingCriterionMeaningOf}
is satisfied provided that the condition 
\begin{lstlisting}[language=Python]
Number of  Iterations >= K*(F/I^2 - 1 )
\end{lstlisting}
on line 24 of the pseudocode for \textbf{MonteCarloIntegrate} is satisfied.

As for the assumption that $\hat{\rho}-\rho$ is has a Gaussian distribution, we do not have any
a priori reason to believe this assumption is satisfied in our situation. We nevertheless
use the stopping criterion outlined above, because it is simple 
to implement.    The justification for using the stopping
criterion we have adopted in
\textbf{MonteCarloIntegrate} is an empirical one, namely that, for large $N$ at least, by using this stopping criterion with \textbf{PrecPerc} equal to $10$
and \textbf{Conf} equal to $0.95$, we do achieve
estimates to the true values of $\beta$ that are well within the
$10\%$ error margin in 95\% of cases.  We have recorded
the empirical results supporting these assertions in Figure
\ref{fig:stoppingCriterion}.  These empirical results do not validate the assumption on $\hat{\rho}-\rho$ per se,
but they do suggest that a Gaussian distribution models $\hat{\rho}-\rho$ well enough in practice that our
entire procedure produces estimates $\hat{\rho}$ that are within \textbf{PrecPerc} percent of $\rho$ at least \textbf{Conf} of the time.  
\begin{figure}\label{fig:stoppingCriterion}
\hspace*{-1.3cm}\mbox{\subfigure{\includegraphics[width=3in]{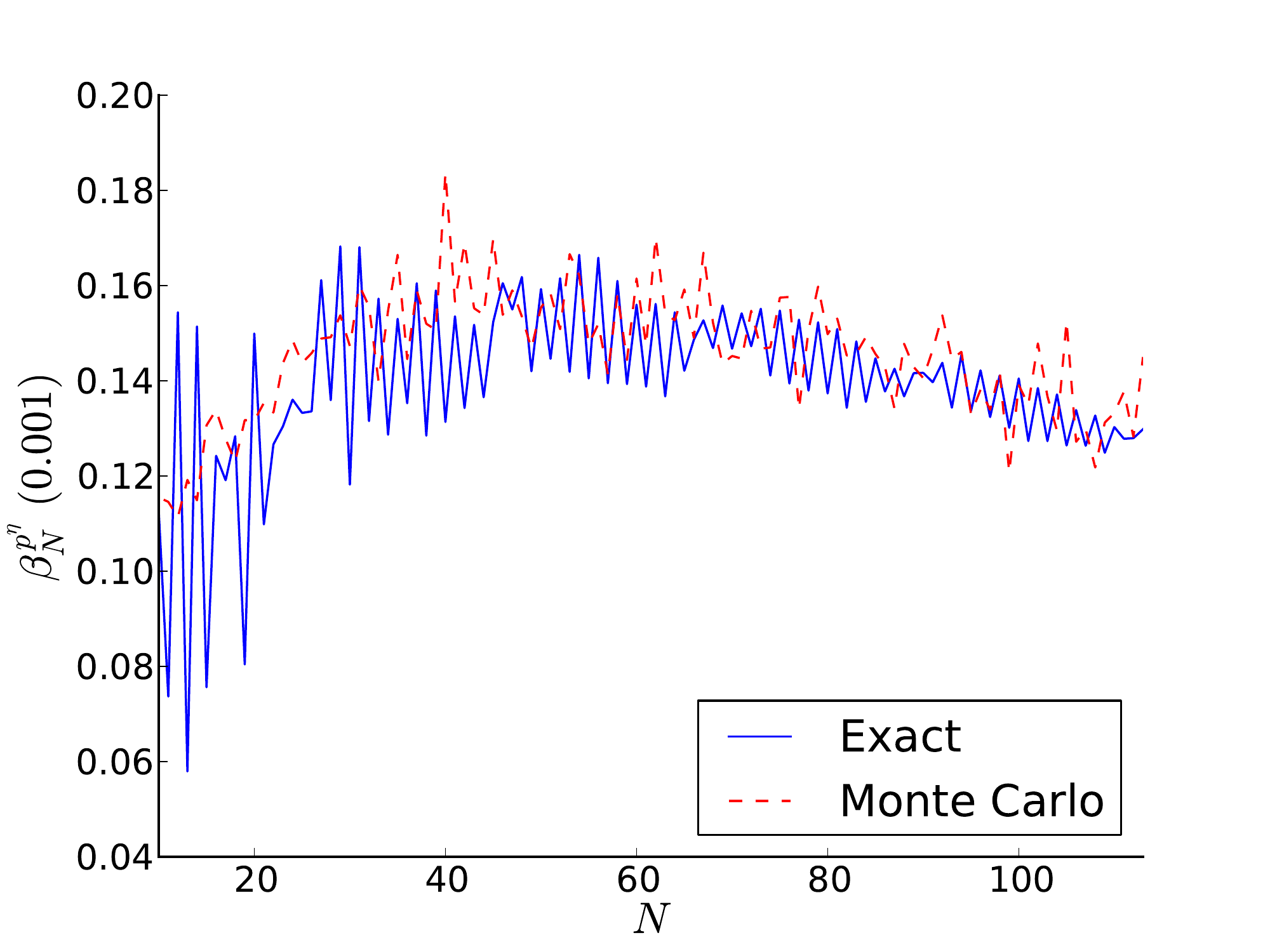}}\quad
\subfigure{\includegraphics[width=3in]{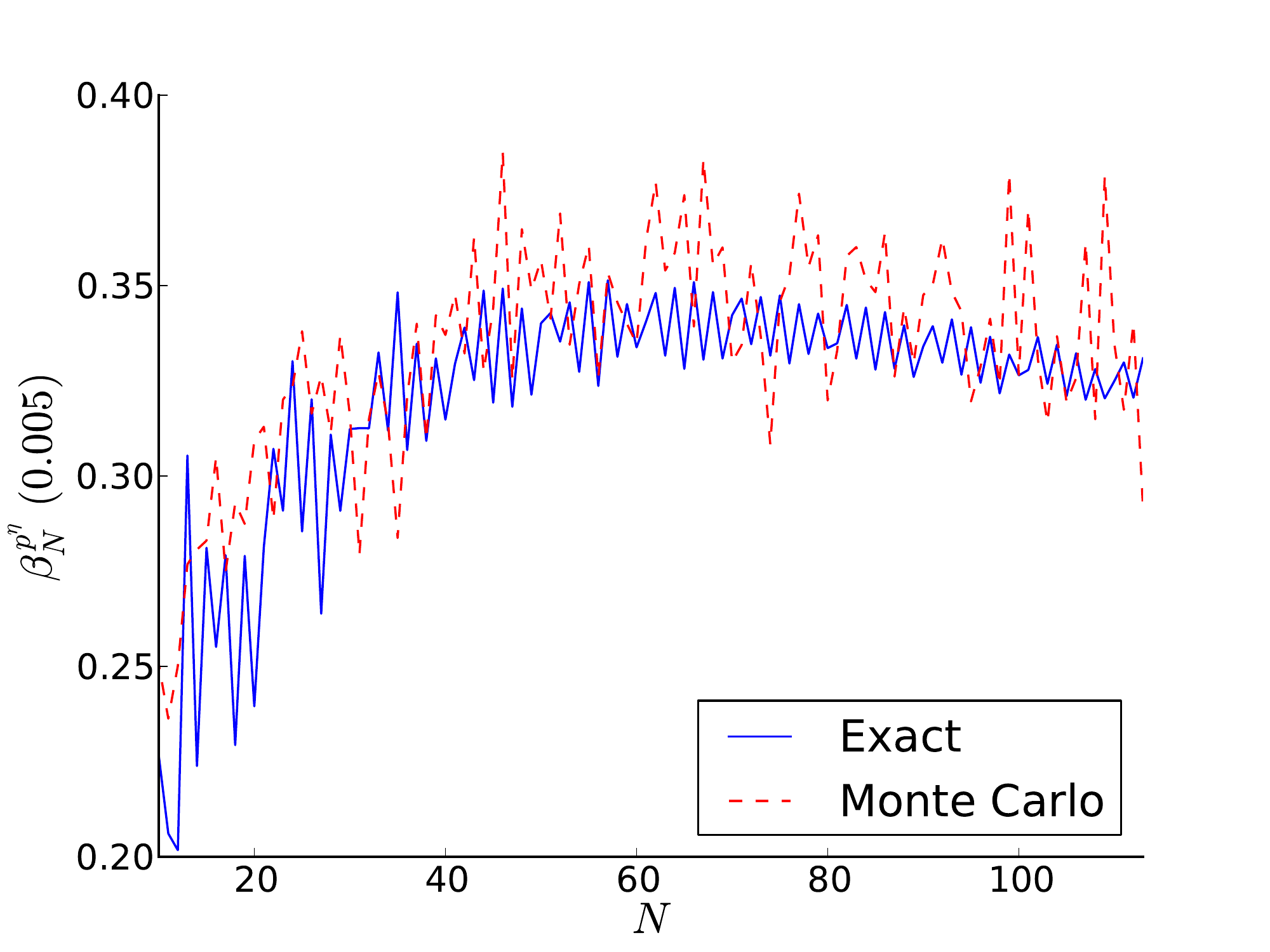} }}
\hspace*{-1.3cm}\mbox{\subfigure{\includegraphics[width=3in]{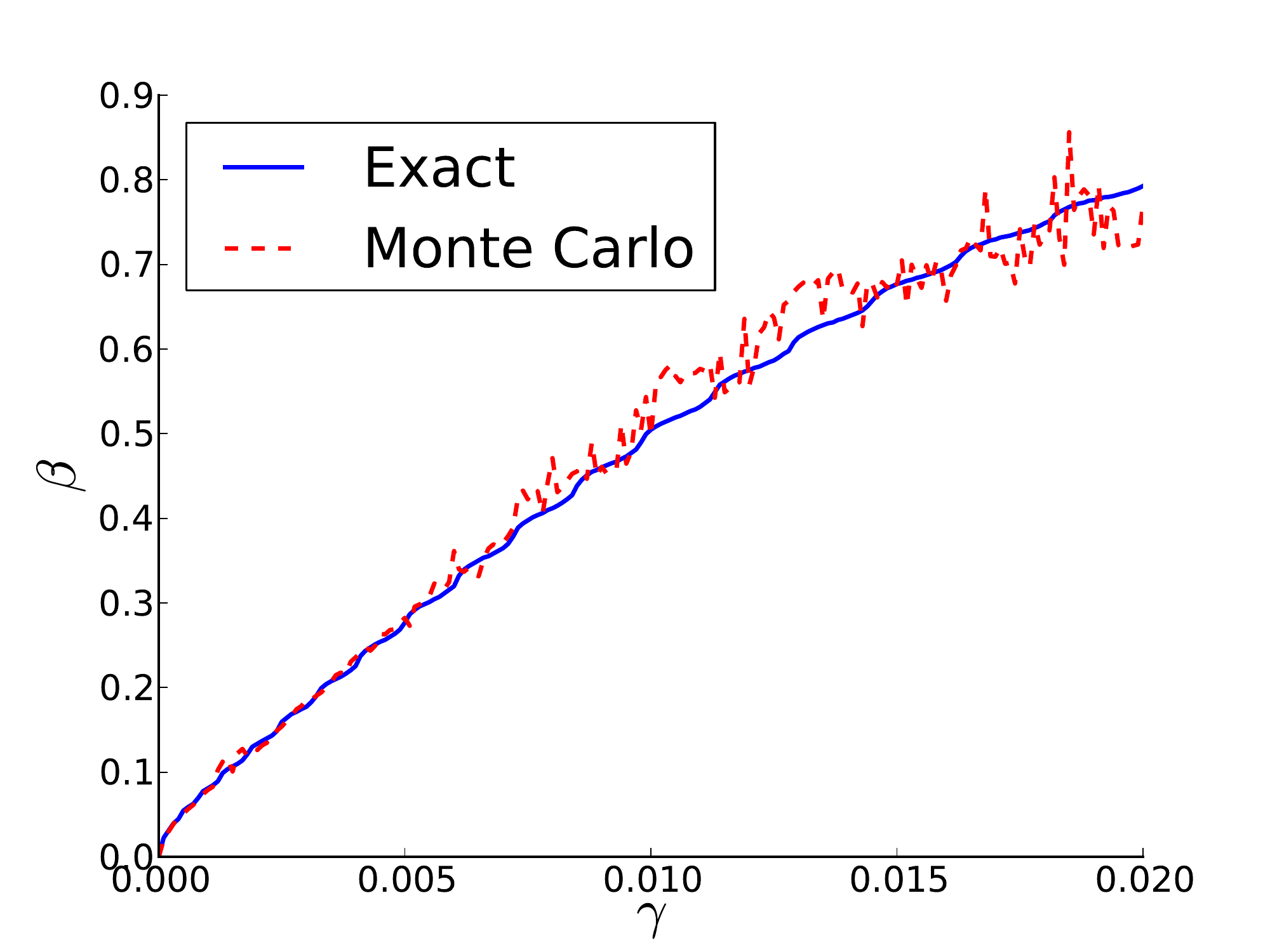}}\quad
\subfigure{\includegraphics[width=3in]{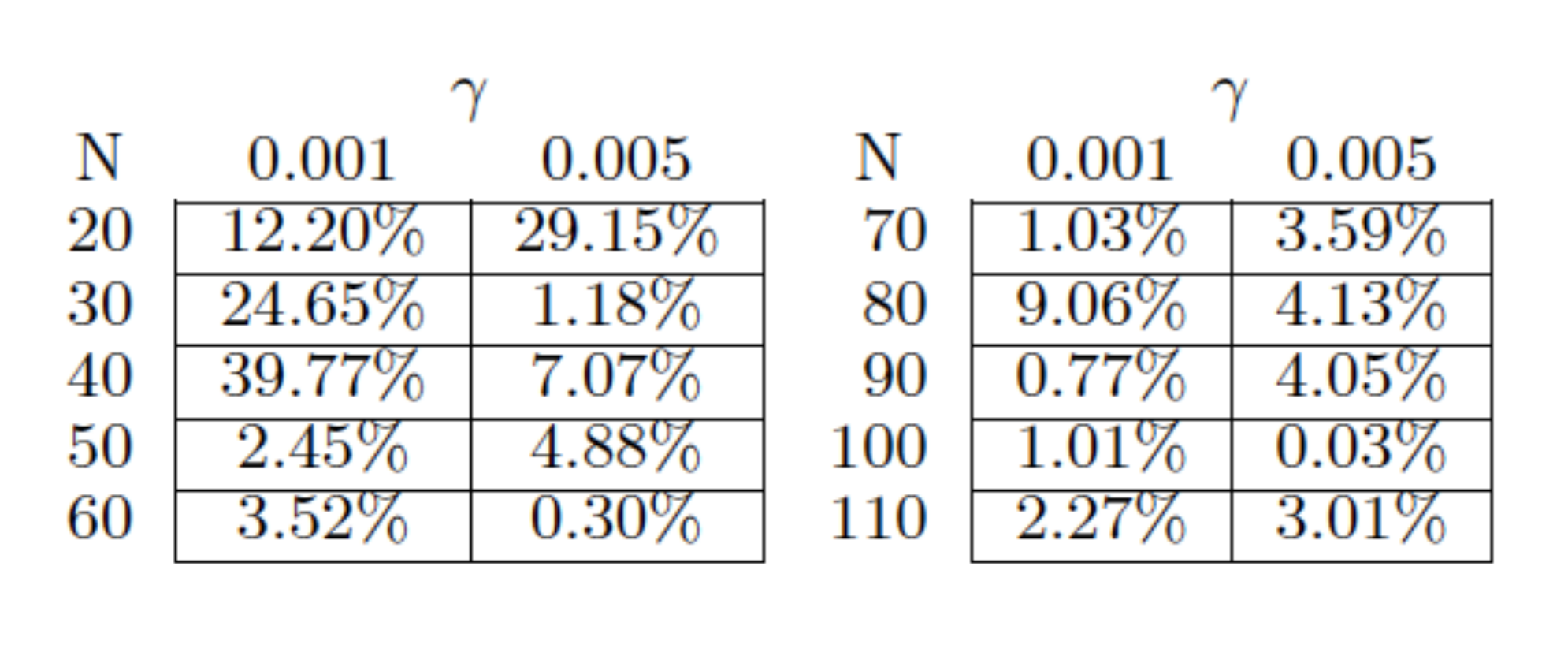} }}
\caption{Illustrations of the effectiveness of Monte Carlo
method with stopping criterion at producing estimates
$\hat{\beta}_N^{p^{0.01}}(\gamma)$
of $\beta_N^{p^{0.01}}(\gamma)$ within $10\%$ of the true value.
Top row: improvement in the estimates of $\beta_N^{p^{0.01}}(0.001)$
and $\beta_N^{p^{0.01}}(0.005)$ as $N$ grows from $10$ to $110$.
Bottom left: for $N$ fixed at $200$ improvement in accuracy
of estimated $\beta_{200}^{p^{0.01}}(\gamma)$ as $\gamma\rightarrow 0^+$.
Bottom right: Multiplicative error $|\beta_N^{p^\eta} - \hat{\beta}_N^{p^\eta}|/
\beta_N^{p^\eta}$ as $N$ varies from $20$ to $110$.
}
\end{figure}

In the procedure, it may make sense to use the uniform distribution as the sampling distribution if, for example, we know nothing about the integral besides its integrability.  In the case of 
the integrand $\tilde{I}$, because only a very small part $\mathcal{P}$ lies in $A_{\gamma}$, especially for small $\gamma$, it is an extremely rare
event that a point chosen by the uniform distribution contributes to $s$
anything at all, much less, anything significant.
As a result, for a random, very small proportion of the iterations, 
the integral estimate \textbf{rhoHatVals} will record a (relatively speaking) very large jump.
Further, even less frequently will a point of $\mathcal{P}$ chosen randomly according to the uniform distribution have test statistic
$\tau$ less than $\gamma$ and simultaneously lie close to the I-projection
of $p^\eta$ onto $A_{\gamma}$.  For $N$ large, such
points account for most of the mass of the integrand (as can be seen qualitatively from Sanov's Theorem), so such a small proportion of the
iterations should lead to even more significant jumps in \textbf{rhoHatVals}.  
The result is that it will take very many iterations for the variance of the estimates to descrease enough for the stopping criterion to be satisfied.

These problems with uniform sampling affect most markedly
the cases of small $\gamma$ and large $N$.  From the point of view of the application
of the algorithm these cases are of the most interest to us, so we must
address them.  Fortunately, in the above Monte Carlo algorithm, we can use a 
nonuniform sampling distribution \textbf{pdf} tailored to reduce the variance of the estimates of the integral of $f$.
Such a variance reduction method is called an ``importance sampling" scheme.

\subsection{Importance Sampling Overview}\label{subsec:importanceSampling}
What now remains to explain is the part of the calculation that is the most delicate and potentially open to further refinement: the choice of the measure for the Monte Carlo sampling.
In order to explain this, we first specialize to the case of $k=l=2$, for which the map $F$ from the marginal-$t$ coordinates to the contingency-table coordinates
is defined explicitly by
\[
 F: \left(
 p_A, p_B, t
 \right)
 \mapsto 
 \begin{bmatrix}
 p_Ap_B+t & (1-p_A)p_B-t\\
 p_A(1-p_B)-t & *
\end{bmatrix}=
\begin{bmatrix}
 p_{1,1} & p_{1,2}\\
 p_{2,1} & *
\end{bmatrix}.
\]
The map $F$ is bijective from $\mathbf{R}^3$ to the copy of $\mathbf{R}^3$ 
embedded in the upper-left corner of the space of $2$-by-$2$ contingency tables.  
\nomenclature{$F$}{Unimodular change-of-coordinates mapping on $\mathcal{P}$ 
used in the definition of importance sampling scheme.}
(The lower right coordinate is a function  of the first three coordinates, namely $*=1-p_{1,1}-p_{1,2}-p_{2,1}$).  The inverse of the bijection $F$ is given explicitly
by
\[
 F^{-1}: \begin{bmatrix}
 p_{1,1} & p_{1,2}\\
 p_{2,1} & *
\end{bmatrix}\mapsto
\begin{pmatrix}p_{1,1}+p_{2,1}\\ p_{1,1}+p_{1,2}\\ p_{1,1} - (p_{1,1}+p_{2,1})(p_{1,1}+p_{1,2})
\end{pmatrix}.
\]
The mapping $F$ in effect provides a change of coordinates map on $\mathcal{P}\subset\mathbf{R}^3$.
On the one hand, it is more convenient to think of the domain $\mathcal{P}$ 
as defined in terms of equations in the coordinates $p_{1,1},\, p_{1,2},\, p_{2,1}$ 
in $\mathbf{R}^3$, since in these coordinates $\mathcal{P}$ is simply
the unit cube.  On the other hand, it is more convenient to think to define the pdf of the
sampling distribution in terms of the coordinates $p_{A}$, $p_{B}$, $t$,
because of our knowledge of the integrand $\tilde{I}$ and the way we can use
the theory of Large Deviations to extract information about $\tilde{I}$.  We will
explain the procedure in detail in Sections \ref{subsec:samplingmarginals} and \ref{subsec:fixedmarginals} below.  
First, we validate this procedure through the following lemma.
\begin{lem}\label{lem:unimodularityLemma}
 The Jacobian $\mathcal{J}F$ of the bijective mapping $F$ is unimodular.  Consequently, the Jacobian
 $\mathcal{J}F^{-1}$ of the inverse mapping $F^{-1}$ is unimodular.
\end{lem}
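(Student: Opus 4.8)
The Jacobian $\mathcal{J}F$ of the bijective map $F$ is unimodular; hence so is $\mathcal{J}F^{-1}$.

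The plan is to compute the Jacobian matrix of $F$ directly from the explicit formula
\[
F:\;(p_A,p_B,t)\;\mapsto\;\bigl(p_Ap_B+t,\;(1-p_A)p_B-t,\;p_A(1-p_B)-t\bigr),
\]
treating the three listed contingency-table entries $p_{1,1},p_{1,2},p_{2,1}$ as the output coordinates (the fourth entry is determined by the first three and plays no role in the Jacobian). First I would differentiate each component with respect to $p_A$, $p_B$, and $t$ in turn, obtaining the $3\times 3$ matrix
\[
\mathcal{J}F=\begin{bmatrix}
p_B & p_A & 1\\
-p_B & 1-p_A & -1\\
1-p_B & -p_A & -1
\end{bmatrix}.
\]
Then the whole content of the lemma is the claim that $\det(\mathcal{J}F)=\pm 1$ identically in $(p_A,p_B,t)$.

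The key step is the determinant computation. I would expand along the last column, or alternatively perform the row operations $R_2\mapsto R_2+R_1$ and $R_3\mapsto R_3+R_1$ to clear the bottom of the third column, which immediately produces a matrix whose third column is $(1,0,0)^{T}$; expanding along that column reduces the $3\times 3$ determinant to a $2\times 2$ minor that can be evaluated by inspection. One checks that the $t$-dependence cancels (as it must, since $t$ enters only additively and identically in all three components, contributing a rank-one perturbation that is annihilated once two rows are added), and that the surviving $2\times 2$ determinant in $p_A,p_B$ collapses to a constant of modulus $1$. The final statement about $\mathcal{J}F^{-1}$ is then immediate: $\mathcal{J}F^{-1}=(\mathcal{J}F)^{-1}$ evaluated at the corresponding point, and $\det\bigl((\mathcal{J}F)^{-1}\bigr)=1/\det(\mathcal{J}F)=\pm 1$.

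I do not anticipate a genuine obstacle here; the only thing to be careful about is bookkeeping — choosing a consistent convention for which three of the four table entries serve as coordinates and in which order, so that the sign of the determinant (which is irrelevant for unimodularity but should be stated correctly if mentioned) and the cancellation of the $t$-terms are transparent. The payoff, which motivates stating the lemma, is that unimodularity of $\mathcal{J}F$ means Lebesgue measure $\mathrm{d}q$ on $\mathcal{P}$ pulls back to Lebesgue measure $\mathrm{d}p_A\,\mathrm{d}p_B\,\mathrm{d}t$ in the marginal–$t$ coordinates with no extra weighting factor, so that the importance-sampling density can be designed and evaluated in the $(p_A,p_B,t)$ coordinates of Sections \ref{subsec:samplingmarginals} and \ref{subsec:fixedmarginals} while the Monte Carlo estimator is still unbiased for the integral written in the $(p_{1,1},p_{1,2},p_{2,1})$ coordinates.
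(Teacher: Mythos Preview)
Your proposal is correct and follows essentially the same approach as the paper: compute the $3\times 3$ Jacobian of $F$ directly from the explicit formula, observe it is constant in $t$, and verify the determinant is a unit. The paper writes the matrix with a different row/column convention and asserts $\det\mathcal{J}F\equiv 1$, whereas your (standard) convention gives $-1$; since you already hedge with $\pm 1$ and correctly note the sign is irrelevant for unimodularity, there is nothing to fix.
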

\begin{proof}
 It is a straightforward calculation, from the partial deriviates of $F$ taken in the order $\frac{\partial}{\partial p_A}$, $\frac{\partial}{\partial p_B}$
 $\frac{\partial}{\partial t}$ that
 \[
  \mathcal{J}F(p_A, p_B, t) = 
\left(\begin{array}{ccc}
p_{B} & 1-p_{B} & -p_{B}\\
p_{A} & -p_{A} & 1-p_{A}\\
1 & -1 & -1
\end{array}\right).
\]
(The Jacobian $\mathcal{J}F(p_A, p_B, t)$ turns out to be constant in the variable $t$.) 
From this one computes that $\det\mathcal{J}F(p_A, p_B, t)\equiv 1$, as claimed.
Therefore,
\[
\det\mathcal{J}F^{-1}\left( p_{1,1},p_{1,2},p_{2,1} \right)=\det\mathcal{J}F^{-1}\left(F(p_A,p_B,t)\right)=\left(\det\mathcal{J}F(p_A,p_B,t)\right)^{-1}\equiv 1,
\]
also, as claimed.
\end{proof}
\nomenclature{$\mathcal{J}F$}{Jacobian matrix of mapping $F$.}
The significance of Lemma \ref{lem:unimodularityLemma} is as follows: if we denote the Lebesgue measure of $\mathbf{R}^3$ with respect to the coordinates $\mathrm{d}p_{1,1}\mathrm{d}p_{1,2}\mathrm{d}p_{2,1}$ and
define $\mu$ to be a measure with density function $\Phi$ with respect to the $(p_A,p_B,t)$ coordinates, i.e. so that
\[
\mathrm{d}\mu:=\Phi\,\mathrm{d}p_A\mathrm{d}p_B\mathrm{d}t,
\]
then the following holds for all measurable $A\subseteq \mathbf{R}^3$:
\begin{equation}\label{eqn:unimodularChangeOfCoordinates}
\mu(A):= \int_{A}\mathrm{d}\mu=\int_{A} \Phi \mathrm{d}p_A\mathrm{d}p_B\mathrm{d}t = 
\int_{F(A)} \Phi\cdot \det\mathcal{J}F^{-1} \mathrm{d}p_{1,1}\mathrm{d}p_{1,2}\mathrm{d}p_{2,1}=
 \int_{F(A)} \Phi\mathrm{d}q.
\end{equation}
In \eqref{eqn:unimodularChangeOfCoordinates}, we have, as above, used the notation $\mathrm{d}q$ for the Lebesgue measure on $\mathbf{R}^3$ with respect to the coordinates $p_{i,j}$.

Now we will define $\Phi$ (thus defining $\mu$) as a product,
\begin{equation}\label{eqn:RNderivativeDefinition}
 \Phi(p_A,p_B,t):=\varphi_N(p_A)\varphi_N(p_B)\psi_{N,\gamma}(t),
\end{equation}
where each of the functions $\varphi_N$, $\psi_{N,\gamma}$ is a Gaussian pdf on $\mathbf{R}$ with total integral $1$ and scale depending on $N$.  In the
case of $\psi_{N,\gamma}$ both the scale and the center (i.e. location of the mode) of the Gaussian depend on $N$ and on $\gamma$.  

The main reason for adopting such a simple formula for $\Phi(p_A,p_B,t)$ 
is expediency, but we also have
to justify the choice of $\Phi$ in \eqref{eqn:RNderivativeDefinition} by showing that
it really leads to a bona fide probability measure $\mu$ on $\mathbf{R}^3$.
Fortunately, this easy easy to accomplish using \eqref{eqn:unimodularChangeOfCoordinates} and Fubini's Theorem,
and we obtain the following result.
\begin{lem}
With $\Phi$ defined by \eqref{eqn:RNderivativeDefinition} and the $\varphi_N$
and $\psi_{N,\gamma}$ Gaussian pdf's on $\mathbf{R}$ with total
integral $1$, we have
\[
\int_{\mathbf{R}^3}\Phi\left(
F^{-1}(p_{1,1},p_{1,2},p_{2,1})\right)\,
\mathrm{d}q=1
,
\]
where $\mathrm{d}q$ is the Lebesgue measure on $\mathbf{R}^3$
with respect to the coordinates $(p_{1,1},p_{1,2},p_{2,1})$.
That is, the measure $\Phi \circ F^{-1}\mathrm{d}q$ is a probability measure on $\mathbf{R}^3$.
\end{lem}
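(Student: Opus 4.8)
The plan is to reduce the claim to the elementary fact that a product of three one-dimensional probability densities integrates to $1$, using the measure-preserving property of $F$ that was established in Lemma \ref{lem:unimodularityLemma}. So the proof is essentially a change of variables followed by Tonelli's theorem; there is no real analytic content beyond bookkeeping.

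First I would apply the change-of-variables identity \eqref{eqn:unimodularChangeOfCoordinates} with $A=\mathbf{R}^3$. Since $F$ is a bijection of $\mathbf{R}^3$ onto $\mathbf{R}^3$ (in the $(p_{1,1},p_{1,2},p_{2,1})$-coordinates) with $\det\mathcal{J}F^{-1}\equiv 1$ by Lemma \ref{lem:unimodularityLemma}, we get
\[
\int_{\mathbf{R}^3}\Phi\!\left(F^{-1}(p_{1,1},p_{1,2},p_{2,1})\right)\mathrm{d}q=\int_{\mathbf{R}^3}\Phi(p_A,p_B,t)\,\mathrm{d}p_A\,\mathrm{d}p_B\,\mathrm{d}t.
\]
Next I would substitute the product form \eqref{eqn:RNderivativeDefinition}, namely $\Phi(p_A,p_B,t)=\varphi_N(p_A)\varphi_N(p_B)\psi_{N,\gamma}(t)$. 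Each factor is nonnegative and measurable, so Tonelli's theorem lets us split the triple integral:
\[
\int_{\mathbf{R}^3}\Phi\,\mathrm{d}p_A\,\mathrm{d}p_B\,\mathrm{d}t=\left(\int_{\mathbf{R}}\varphi_N(p_A)\,\mathrm{d}p_A\right)\left(\int_{\mathbf{R}}\varphi_N(p_B)\,\mathrm{d}p_B\right)\left(\int_{\mathbf{R}}\psi_{N,\gamma}(t)\,\mathrm{d}t\right).
\]
By hypothesis $\varphi_N$ and $\psi_{N,\gamma}$ are Gaussian pdf's of total integral $1$, so each of the three factors equals $1$, and the whole expression equals $1$. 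Since $\Phi\circ F^{-1}$ is nonnegative (a product of Gaussian densities precomposed with a map), it follows that $\Phi\circ F^{-1}\,\mathrm{d}q$ is a probability measure on $\mathbf{R}^3$.

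The one point deserving care — the main (mild) obstacle — is to make sure the change of variables really covers all of $\mathbf{R}^3$ and not just the probability simplex $\mathcal{P}$: one must use the observation recorded just before Lemma \ref{lem:unimodularityLemma} that $F$ restricts to a bijection from $\mathbf{R}^3$ onto the $3$-dimensional coordinate subspace spanned by $(p_{1,1},p_{1,2},p_{2,1})$, so that \eqref{eqn:unimodularChangeOfCoordinates} applied with $A=\mathbf{R}^3$ is legitimate and no boundary or domain issues arise. Everything else — the vanishing of the Jacobian factor and the factorization of the integral — is routine.
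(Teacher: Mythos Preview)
Your proof is correct and matches the paper's own approach: the paper simply states that the result follows from \eqref{eqn:unimodularChangeOfCoordinates} and Fubini's Theorem, which is exactly what you carry out (with Tonelli in place of Fubini, appropriately since the integrand is nonnegative). Your attention to the fact that $F$ is a bijection on all of $\mathbf{R}^3$ rather than just on $\mathcal{P}$ is the only subtlety, and you handle it correctly.
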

As a result we are justified in using $\mu$ with density function \eqref{eqn:RNderivativeDefinition} as the parameter
\textbf{pdf} in the function \textbf{MonteCarloIntegrate} procedure listed above.
\subsection{Sampling the Marginals}\label{subsec:samplingmarginals}
In constructing $\varphi_N$, according to the general principles of variance reduction in Monte Carlo integration, we wish to choose $\varphi_N$
so that the following proportionality relationship holds:
\begin{equation}\label{eqn:fiberAverage}
 \varphi_N(p_A)
 \propto
 \int_{\mathbf{R}\times\mathbf{R}}\left(\tilde{I}\circ F\right)\cdot \left(\mathbf{1}_{A_{\gamma}}\circ F\right)
 \left(
 p_A,p_B,t
 \right)\,\mathrm{d}p_B\,\mathrm{d}t.
\end{equation}
Recall that $\tilde{I}(q)$ approximates the probability of emission of $q$ by $p^{\eta}$ 
Numerical
experimentation shows that, to first approximation, the variation in the magnitude of \eqref{eqn:fiberAverage} 
is primarily due to the variation in the factor $\exp(-NH(q\| p^\eta))$ inside $\tilde{I}$,
as opposed to the factor $\mathbf{1}_{A_{\gamma}}$.  Therefore, 
we replace \eqref{eqn:fiberAverage} with the simpler condition 
\begin{equation}\label{eqn:reducedFiberAverage}
 \varphi_N(p_A)
 \propto
 \int_{\mathbf{R}\times\mathbf{R}}\left(\tilde{I}\circ F\right)\left(
 p_A,p_B,t
 \right)\,\mathrm{d}p_B\,\mathrm{d}t.
\end{equation}
By standard information theory the right-hand side of \eqref{eqn:reducedFiberAverage}, 
as a function of $p_A$, closely correllates with
\[
 \mathrm{Pr}_{\omega_N\sim p^{\eta}}\left\{ p(\omega_N)_A = p_A\right\}.
\]  
By the central limit theorem, the most likely values of $p(\omega_N)_A$ are concentrated
around $p^\eta_A=\frac{1}{k}$.  Thus, we should choose the mode
of $\varphi_N$ for all $N$ to be located at $\frac{1}{k}=0.5$.

Now that we have determined the mode of the Gaussian $\varphi_N$ it remains to determine how to
scale $\varphi_N$.  Recall that Chernoff's inequality implies that
\[
 \mathrm{Pr}_{\omega_N\sim p^{\eta}}\left\{ \left| p(\omega_N)_A - \frac{1}{2} \right| > t\right\}\leq e^{-Nt^2}
\]
We derive a heuristic from this inequality which says that $\varphi_N$ should
be defined so that there is a ``suitable" sequence $t_N$ (to
be specified precisely below) so that
\begin{equation}\label{eqn:ChernoffAsEqualityNoncentral}
\int_{\left\{x \,| \,|x-1/2| > t_N \right\}}\varphi_N(x) \mathrm{d}x= e^{-N{t_N}^2}.
\end{equation}
Since $\varphi_N$ is a pdf on $\mathbf{R}$, we may write \eqref{eqn:ChernoffAsEqualityNoncentral} in the more convenient form
\begin{equation}\label{eqn:ChernoffAsEquality}
\int_{\frac{1}{2}-t_N}^{\frac{1}{2}+t_N}\varphi_N(x)\,\mathrm{d}x = 
1-e^{-N{t_N}^2}.
\end{equation}
Since \eqref{eqn:ChernoffAsEqualityNoncentral} and \eqref{eqn:ChernoffAsEquality} can hold true
for one sequence of $\left\{t_N\right\}$,
we will choose $\left\{t_N\right\}$ so that the right-hand side of 
\eqref{eqn:ChernoffAsEqualityNoncentral} and \eqref{eqn:ChernoffAsEquality}
are both fixed at a certain (convenient) constant value.  Namely,
we will choose a constant \textbf{CentralProbability} in $(0,1)$, and
set
\begin{equation}\label{eqn:ChernoffRadiusDefinition}
t_N=\sqrt{\frac{-\log(1-\textbf{CentralProbability})}{N}},
\end{equation}
so that \eqref{eqn:ChernoffAsEquality} becomes
\begin{equation}\label{eqn:ChernoffAsEqualityCentral}
\int_{\frac{1}{2}-t_N}^{\frac{1}{2}+t_N}\varphi_N(x)\,\mathrm{d}x =
\textbf{CentralProbability}.
\end{equation}
The most convenient value to take for \textbf{CentralProbability}
is
\[
\label{eqn:CentralProbabilityConstantDefinition}
\textbf{CentralProbability}:=0.6826894921,
\] the proportion
of the mass lying within one standard deviation of the mean of a Gaussian, because then \eqref{eqn:ChernoffAsEqualityCentral}
simply says that $t_N$ is the standard deviation (scale) of the
distribution with pdf $\varphi_N$.  

In our source module \textbf{informationTheory}, we implement
the above calculation of $\varphi_N$ by defining a function,
called \textbf{ChernoffRadius}.  Then $t$, the scale of $\varphi_N$,
is computed with a call to the function as follows:
\begin{lstlisting}[language=Python]
 t = src.informationTheory.ChernoffRadius(CentralProbability, N)
\end{lstlisting}
where \textbf{CentralProbability} is defined by \eqref{eqn:CentralProbabilityConstantDefinition}.

\subsection{Sampling from a family of fixed marginals}\label{subsec:fixedmarginals}
For the description of $\psi_{N,\gamma}$, we focus on the most important case: that is when
 when $\gamma < \eta$ and the marginals match the marginals of $p^\eta$.  
In this case, that means the case when the marginals $(p_A,p_B)=\left( \frac{1}{k},\frac{1}{l}\right)=
\left( \frac{1}{2},\frac{1}{2} \right):=p_0$.  We define the following auxiliary notations:
\[
 t_{\gamma}^+ = \max_{t>0}\{ \tau(p_0(t))\leq \gamma\},
\]
\[
 t_{\gamma}^- = \min_{t<0}\{ \tau(p_0(t)) \leq \gamma\},
\]
\nomenclature{$t_{\gamma}^+$}{$\max_{t>0}\{ \tau(p_0(t))\leq \gamma\}$}
\nomenclature{$t_{\gamma}^-$}{$ \min_{t<0}\{ \tau(p_0(t)) \leq \gamma\}$}
and 
\begin{equation}\label{eqn:lenOfRelevantSegment}
 \ell_{\gamma}^0 = t_{\gamma}^+ - t_{\gamma}^-.
\end{equation}
\nomenclature{ $\ell_{\gamma}^0$}{$t_{\gamma}^+ - t_{\gamma}^-$}
 We therefore center the the Gaussian $\psi_{N,\gamma}$ at $t_{\gamma}^+$, the maximum of the integrand within $A_{\gamma}^0$.
 
The scale of $\psi_{N,\gamma}$ is $t_{\gamma}^+ - t_N$, where $t_N$
is a sequence of points belonging to $\left[ t_{\gamma}^-,t_{\gamma}^+ \right]$ whose construction we now explain as follows: restrict the integrand $\tilde{I}$ to the set
\[
\left\{ (p_{A},p_B,t)\in \mathcal{P}\;|\; (p_A,p_B)=\left( \frac{1}{2},\frac{1}{2} \right),\; t\in \left[ t_{\gamma}^-, t_{\gamma}^+ \right] \right\}.
\]
The integrand  can be written as a function of one variable
$t\in [t_{\gamma}^-,t_{\gamma}^+]$, parameterized by $N\in\mathbf{N}$,
\[
 f_N(t):= \tilde{I}(p_0(t))= \exp(-NH( p_0(t) \|p^\eta))\left(\prod_{i,j}(p_0(t))_{i,j}\right)^{-\frac{1}{2}}.
\]
It is not difficult to see that the family $\left\{ f_N\right\}_{N\in \mathbf{N}}$
of functions from $(t_{\gamma}^-,t_{\gamma}^+)$
to the reals has the following properties:
\begin{itemize}
\item Each $f_N$ is defined on the same interval $\left[t_{\gamma}^-,t_{\gamma}^+\right]$.
\item For each $N\in\mathbf{N}$, $f_N(t_{\gamma}^+)>0$.
\item For all but finitely (small) $N\in\mathbf{N}$, $f_N$ is increasing as a function of $t$ on $\left[ t_{\gamma}^-,t_{\gamma}^+\right]$.
\item Let $\rho$ be a fixed ratio $\rho<1$.  Then by the preceding two properties, for all but finitely many $N\in\mathbf{N}$,  
\begin{equation}\label{eqn:rhodefinition}
 \begin{aligned}
 \text{\bf{Either}}\;\rho\cdot f_n(t_{\gamma}^+) < f_N(t_{\gamma}^-)\;\text{\bf{or} there exists a unique}\\
 t_N\in [t_{\gamma}^-,t_{\gamma}^+]\;\text{such that}\; f_N(t_N)/f_n(t_{\gamma}^+)=\rho.
 \end{aligned}
\end{equation}
\end{itemize}
Following the last property, for all but finitely many $N$, we can define 
$t_N$ as the unique value in
$[t_{\gamma}^-,t_{\gamma}^+]$ 
such that \mbox{$f_N(t_N)/f_N(t_{\gamma}^+)=\rho$}, if such a $t_N$ exists,
and otherwise set $t_N=t_{\gamma}^-$. 
\nomenclature{$t_N$}
{the unique value in
$[t_{\gamma}^-,t_{\gamma}^+]$ 
such that \mbox{$f_N(t_N)/f_N(t_{\gamma}^+)=\rho$}, if such a $t_N$ exists,
and otherwise set $t_N=t_{\gamma}^-$.}
For the remaining finitely many $N$ (those for which $f_N$
is not increasing on $[t_{\gamma}^-,t_{\gamma}^+]$),
we can define $t_N$ to be the supremum of those 
$t_N\in [t_{\gamma}^-,t_{\gamma}^+]$ for which \mbox{$f_N(t_N)/f_N(t_{\gamma}^+)=\rho$}, if such a $t_N$ exists,
and otherwise set $t_N=t_{\gamma}^-$.
 
Therefore, in order to specify $t_N$ completely, it remains to define the fixed ratio $\rho$.
Let $Z(z)$ be the pdf of the standard Gaussian random variable, where $z$
measured is in units of $\sigma$, and set 
\[
 \rho := Z(1)/Z(0)\approx 0.60653\ldots
\]
\nomenclature{$\rho$}{$Z(1)/Z(0)\approx 0.60653\ldots$}
\nomenclature{$Z(z)$}{pdf of the standard Gaussian random variable, $z$
measured in units of $\sigma$}
It follows from the definition of $t_N$ in terms of $\rho$ in the previous paragraph that, for all but finitely many $N$, $\psi_{N,\gamma}$ has the properties that
\[
\mathrm{argmax}_{[t_{\gamma}^-,t_{\gamma}^+]} \psi_{N,\gamma}(t) = \mathrm{argmax}f_N(t)=t_{\gamma}^+,\quad\text{and}\quad \frac{\psi_{N,\gamma}(t_N)}{\psi_{N,\gamma}(t_{\gamma}^+)} = \frac{f_N(t_N)}{f_N(t_{\gamma}^+)} =\rho.
\]
Thus, $\psi_{N,\gamma}$ realizes a Gaussian that approximates the shape of the integrand $\tilde{I}$ on the path $p^0(t)$, $t\in\left[ t_{\gamma}^-, 
t_{\gamma}^+\right]$ of uniform marginals. 

This explains the choice of $\psi_{N,\gamma}$ on the fiber of uniform marginals.  For the other fibers 
$\{p(t) \}$ (with $p\in \mathcal{P}_0$), we again pick the Gaussian $\psi_{N,\gamma}$
to have its maximum at the point $p_0\left(t_{\gamma}^+\right)$ at which the path $\{p(t)\}$ 
meets the boundary of $A_{0}^\gamma$,
 because at least on the fibers close to the fiber
containing $p^0$, $p\left(t_{\gamma}^+\right)$ is still the maximum point for the integrand.  It is too time consuming though to recompute $t_N$ separately
for each fiber.  Although it does not make sense to reuse $t_N$ from the uniform-marginals path we can extract
a statistic from $t_N$ which we call the \textbf{scaleRatio} and use that instead.  Namely, the \textbf{scaleRatio} is defined as the ratio of the scale 
from the previous calculation to the length of the entire path:
\[
\textbf{scaleRatio}(N) := \frac{t_{\gamma}^+ - t_N}{\ell_{\gamma}^0}.
\]
\nomenclature{$\textbf{scaleRatio}(N)$}
{$\frac{t_{\gamma}^+ - t_N}{\ell_{\gamma}^0}$}
For the general path of fixed, but not necessarily uniform, marginals, we define $\ell_{\gamma}$ in an analogous manner, as the length of the segment
of the path contained in $A_{\gamma}^0$.  We then set $\psi_{t,\gamma}$ to be the Gaussian centered at $t_{\gamma}^+$ and with scale equal to $\mathbf{scaleRatio}\cdot \ell_{\gamma}$ (where both $t_{\gamma}^+$ and $\ell_{\gamma}$ are calculated with reference to the path in question, not the $p^0$-based path).
\nomenclature{$\psi_{t,\gamma}$}
{Gaussian centered at $t_{\gamma}^+$ and with scale equal to $\mathbf{scaleRatio}\cdot \ell_{\gamma}$}
\section{Interpolation}
In this section, we address the following problem: on the one hand, while any table of precomputed values $\beta^{p^\eta}(\gamma, N)$ must be finite, indeed must fit in RAM
to be practical.  On the other hand, in the implementation of the algorithm to learn structures from data, we will inevitably have to evaluate $\beta^{p^\eta}$ at a continuum of $\gamma$
values and for arbitrary integer values $N$.  This reality means that to put the algorithm into practice, we have to carry out some sort of interpolation.
For reasons of speed, linear interpolation is the most feasible method, but a few moments thought will suffice
to show that the dependence of $\beta^{p^\eta}$ on $\gamma,N$ cannot be close to linear.
Fortunately, through a combination of heuristic reasoning inspired by the Theory of Large Deviations and exploratory data analysis,
we can discover the following: the dependence of $\log\beta$ on $N$ \textit{is} roughly
linear.  Likewise, the relationship of $\log\beta$  to a certain fairly simple statistic ($\mathrm{KL}(p^\gamma \| p^\eta)$) derived from $\gamma, \eta$ \textit{is} roughly
linear. The emphasis in these statements is on the range of values that we most care about, namely ``moderately" small $\gamma$ and moderately large $N$.  Our tasks
in this Section are as follows: first explain our choice of the statistics; second, demonstrate the claimed
approximate linear dependence of $\log\beta^{p^\eta}$ on these statistics;
and, third, explain how the choice of interpolating statistics affects the choices  $\beta^{p^\eta}_N(\gamma)$ we choose to compute and store in the table.
\subsection{Statistics for linear interpolation}\label{subsec:statsLinearInterpolation}
The relevant statistic for linear interpolation of $\log\beta$ in $\gamma$
is the KL-divergence $\mathrm{KL}(p^\gamma \| p^\eta)=H(p^\gamma,p^\eta)$. 
(Recall that $p^\gamma$, like $p^\eta$, by definition has uniform marginals.)
 The reason for adopting this statistic
as the basis for the linear interpolation is ultimately empirical, as will be illustrated below.  However, we also give an intuitive
reason for why it would make sense to expect a linear relationship between $\log\beta$ and $\mathrm{KL}(p^\gamma \| p^\eta)$.  Namely,
Conjecture \ref{conj:Iprojection} says that in the cases that are of greatest interest to us $p^\gamma$ is the I-projection of $p^\eta$ 
onto $A_{\gamma}^0$.  One form of Sanov's Theorem, applied to the closed set $A_{\gamma}^0$, yields
\[
\lim_{N\rightarrow \infty} \frac{1}{N}\log\beta^{p^\eta}_N(\gamma) = -\mathrm{KL}(p^\gamma \| p^\eta).
\]
In fact, fixing several (reasonably small) values of $N$, namely $N=90,\, N=200,\, N=900,\, N=9000$ and graphing $\beta$ 
versus $\mathrm{KL}(p^\gamma \| p^\eta)$, we clearly see the linear relationship emerging, especially as $N$ grows (Figure \ref{fig:betaDepOnKLDivergence}).
\begin{figure}[htb]
 \centering
 \hspace*{-5em}
\includegraphics[scale = 0.8, trim = 0mm 0mm 0mm 0mm, clip]{./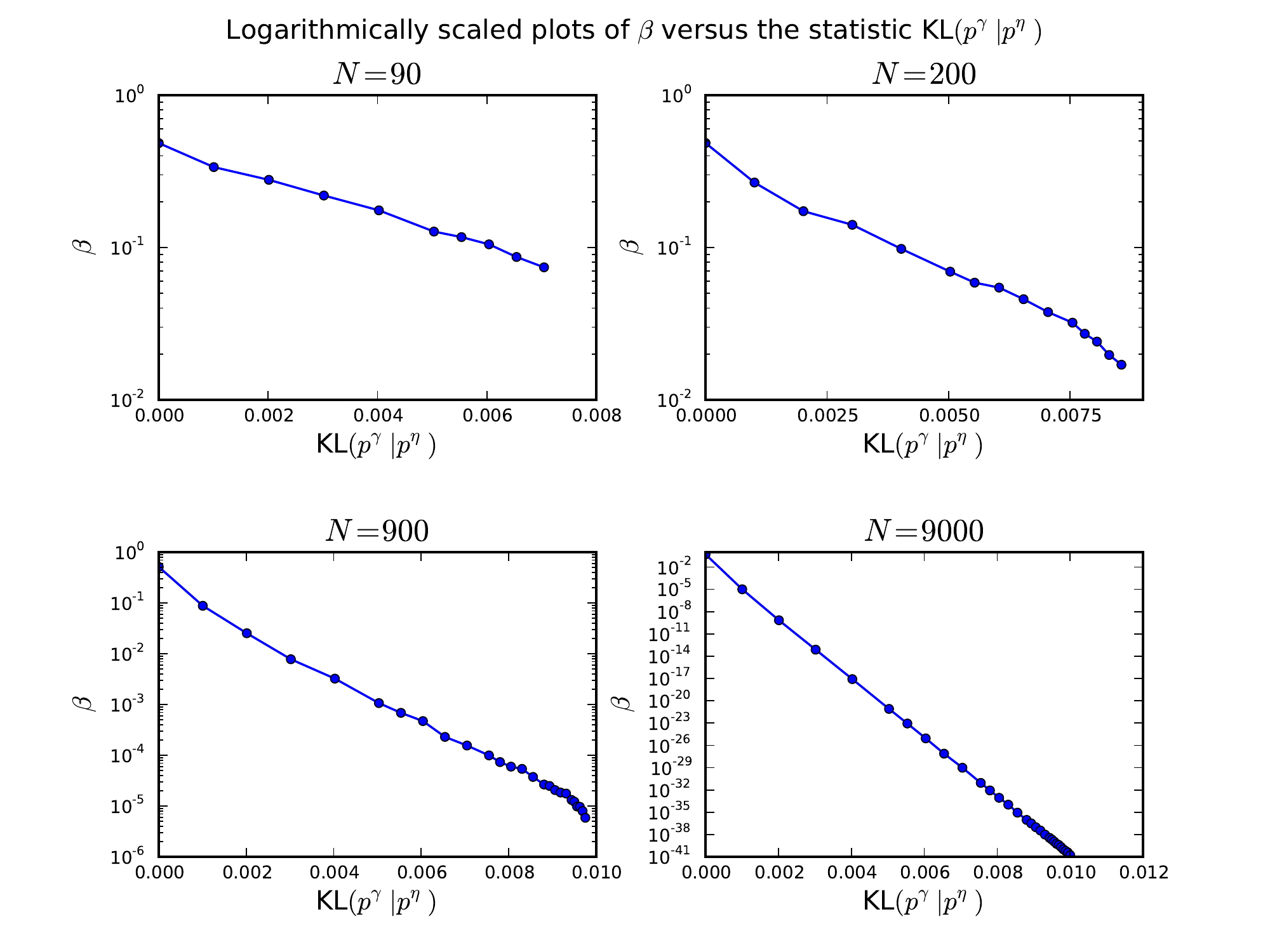}\\
\caption{Approximate linear exponential decay of $\beta$ in the $KL(p^\gamma \|p^\eta)$,
$\eta=0.01$, as $\gamma$ varies, in the top row,
from approximately $1\times 10^{-3}$ and in the bottom row,
from approximately $0$ up to to $\eta=0.01$.}
\label{fig:betaDepOnKLDivergence} 
\end{figure}

\begin{figure}[htb]
 \centering
 \hspace*{-5em}
\includegraphics[scale = 0.8]{./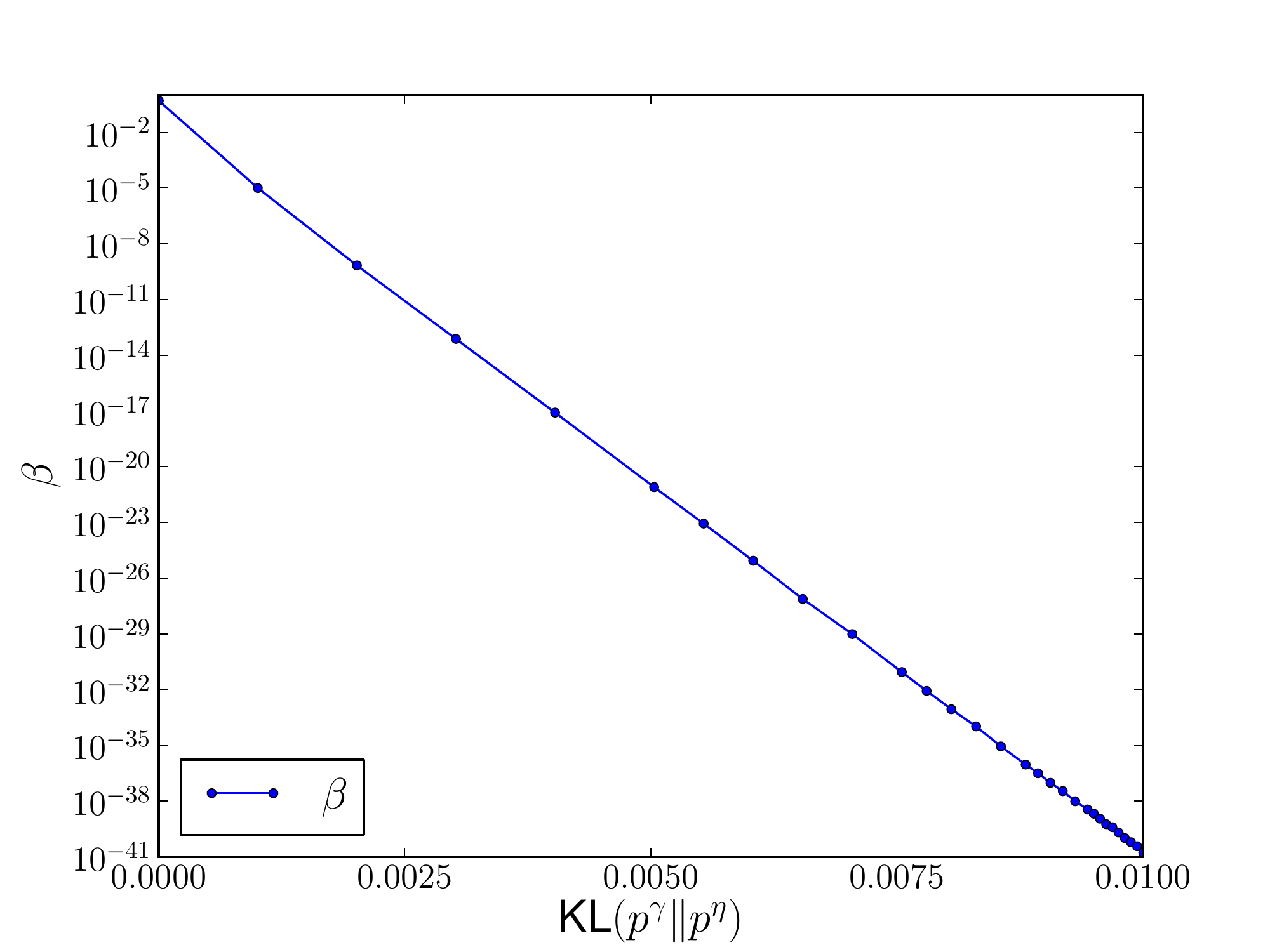}
\caption{Same data as in Figure \ref{fig:betaDepOnKLDivergence}, but plotted with respect
to the $\gamma$ instead of the statistic $KL(p^\gamma \|p^\eta)$.  This illustrates why we interpolate
with respect to the statistic $KL(p^\gamma \|p^\eta)$ instead of $\gamma$: the growth
of $\beta(\gamma)$ with respect to $\gamma$ itself is \textbf{not} linear exponential.}
\label{fig:betaDepOnGamma} 
\end{figure}
Note that, because we are only presenting data for the case $\gamma < \eta$, which is the case which matters
most for the structure learning algorithm, a larger KL-divergence on the $x$-axis means a smaller $\gamma$.
The reason that, in the data we present, the maximum of  $\mathrm{KL}(p^\gamma \| p^\eta)$ grows from $0.007$ to $0.01$
as $N$ grows will be explained in Section \ref{subsec:tableBuilding}, below.

The linear relationship between $N$ and $\log\beta^{p^\eta}_N$ is empirically demonstrated
if we fix $\gamma$ and graph the values
of $\log\beta^{p^\eta}_N$ for fixed $\gamma$ and $N$ varying
in the range from $N=5$ to $N=9000$ (Figure \ref{fig:betaDepOnN}).  It is clear that (for the values of $\gamma$ we care about, a sample of which are shown here),
the linear relationship is sufficiently strong once $N$ reaches the thousands range.
\begin{figure}[htb]
 \centering
 \hspace*{-5em}
\includegraphics[scale = 0.8, trim = 0mm 0mm 0mm 0mm, clip]{./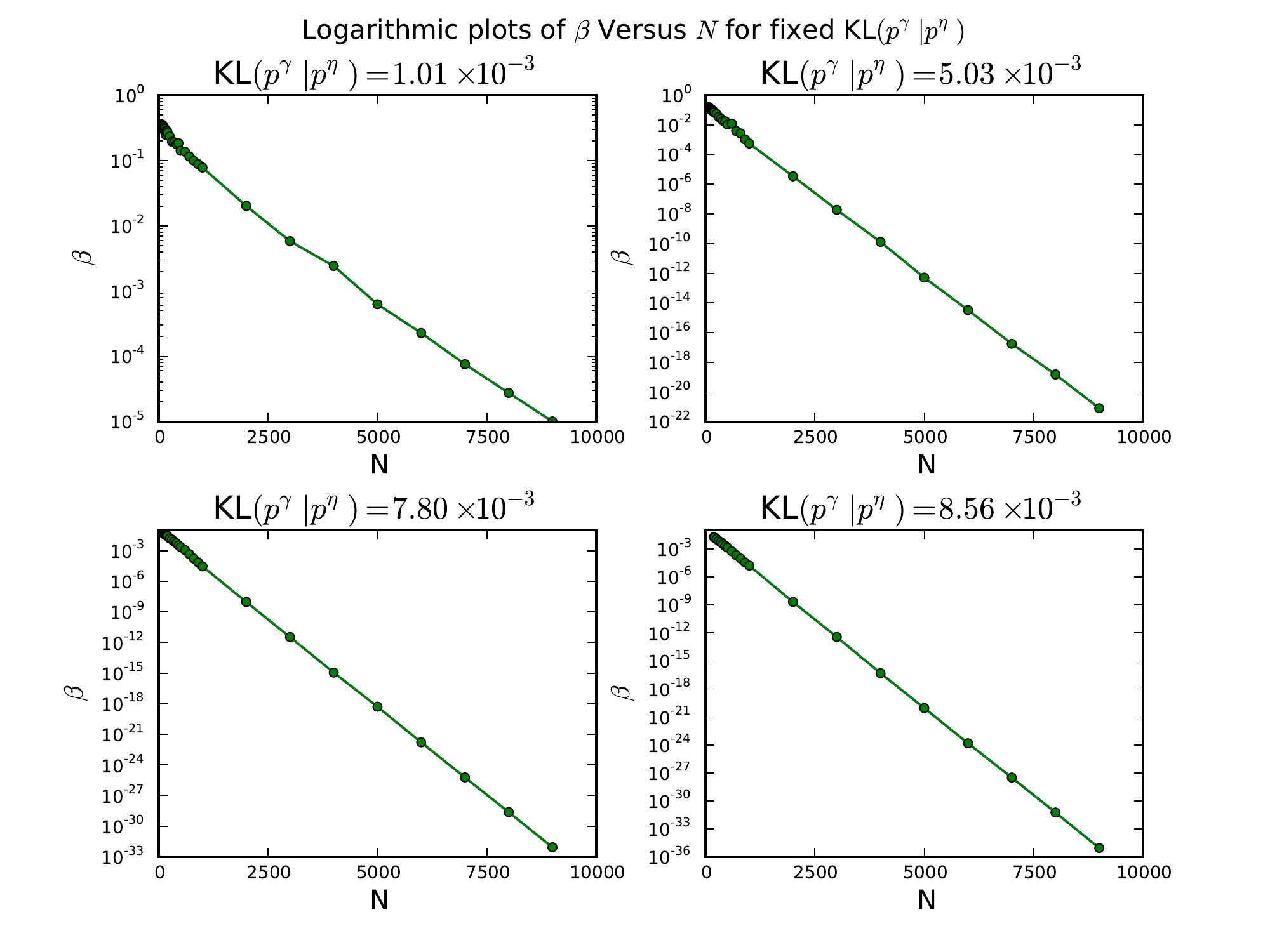}\\
\caption{Approximate linear exponential decay of $\beta$ as a function of $N$, in thousands.  Values
of $\gamma$ in first row, from left to right are $4.67\times 10^{-3}$, $8.61\times 10^{-4}$, in the second
row, from left to right are $1.43\times10^{-4}$ and $6.09\times 10^{-5}$.  Plot is shown for $\eta=0.01$.}
\label{fig:betaDepOnN} 
\end{figure}
When $N$ is in the range of hundreds, as shown in Figure \ref{fig:betaDepOnSmallN}, the linear relationship between $\log\beta^{p^\eta}_N$ and $N$ is already strong enough except in the case of the largest $\gamma$ considered.  Because the sparsity boost in the algorithm makes the least difference
when $\gamma$ is large (equivalently, $\gamma$ is close to  $\eta$, equivalently, the KL-divergence considered is small), we find
it safe to adopt the linear interpolation method in this range of $N$. 
\begin{figure}[htb]
 \centering
 \hspace*{-7em}
\includegraphics[scale = 0.8, trim = 0mm 0mm 0mm 0mm, clip]{./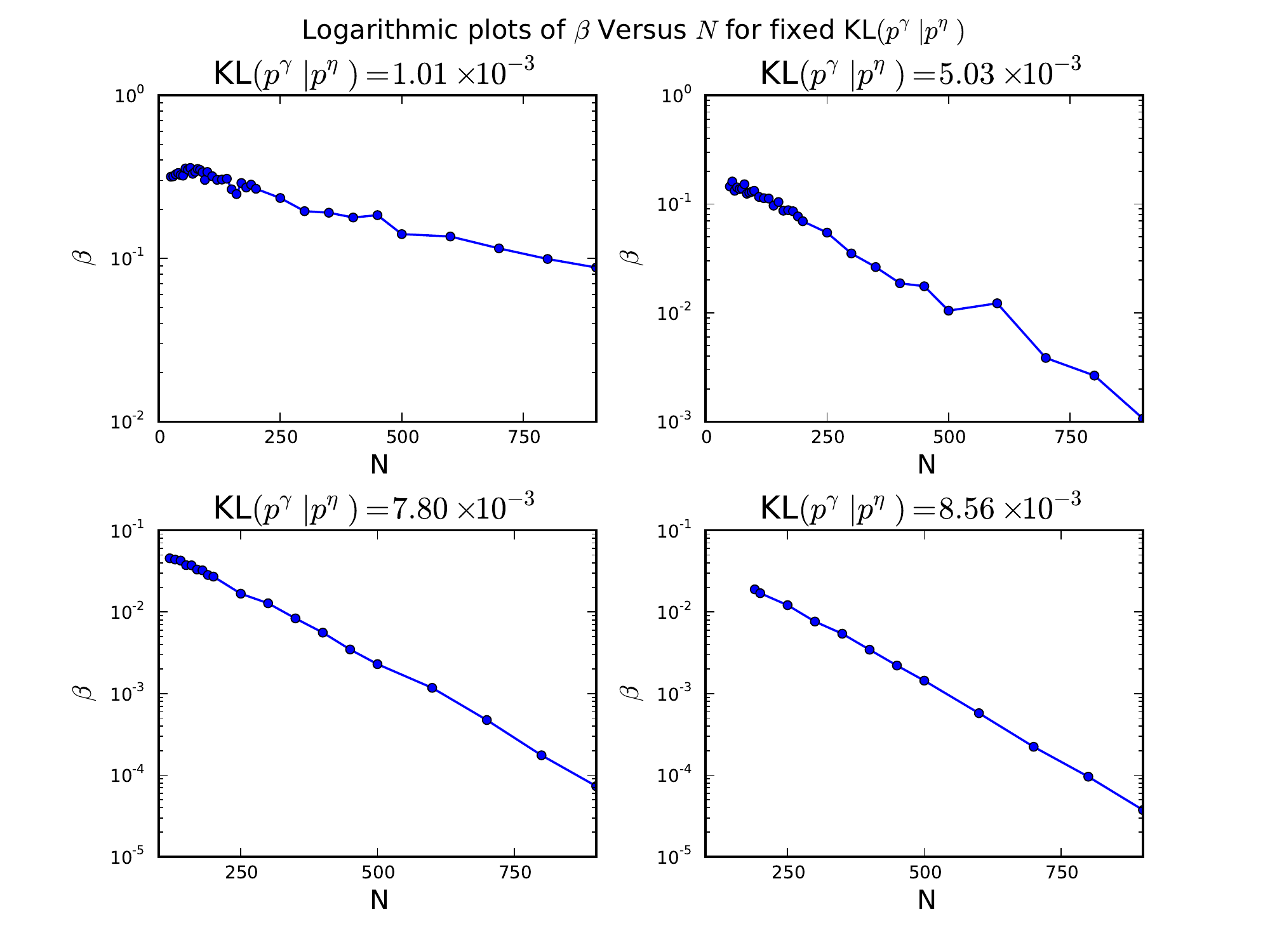}\\
\caption{Approximate linear exponential decay of $\beta$ as a function of $N$ in hundreds.  Results shown
for same values of $\gamma$, $\eta$ as in Figure \ref{fig:betaDepOnN}.}
\label{fig:betaDepOnSmallN} 
\end{figure}
Finally, in the case of $N$ in the range less than $200$, see Figure \ref{fig:betaDepOnVerySmallN}, we see that the linear relationship breaks down,
and indeed any definite trend in $\beta^{p^\eta}_N$ is difficult to discern. 
(The estimates of $\beta_N^{p^\eta}$ also become more noisy for $N$
in this range.)  In this range we can exactly calculate $\beta^{p^\eta}_N$, so one approach would be to simply
use the exact values for each $N$.  Doing this ends up adding unnecessary complexity to the algorithm, so we keep using estimated values and
linear interpolation.  We compensate by decreasing the interval between $N$ whose $\log\beta_N^{p^\eta}\gamma$
we are recording in the table from $N=1000$ in the largest ranges,
to $N=100$ in the intermediate ranges, and finally to $N=10$ or $N=5$ in the smallest
range $N<100$.  With these choices of $N$ intervals, the linear interpolation procedure yields a value for $\log\beta^{p^\eta}_N(\gamma)$ which is close in magnitude to the ones for very nearby $N$ actually recorded in the table.
\begin{figure}[htb]
 \centering
 \hspace*{-3em}
\includegraphics[scale = 0.8, trim = 0mm 0mm 0mm 0mm, clip]{./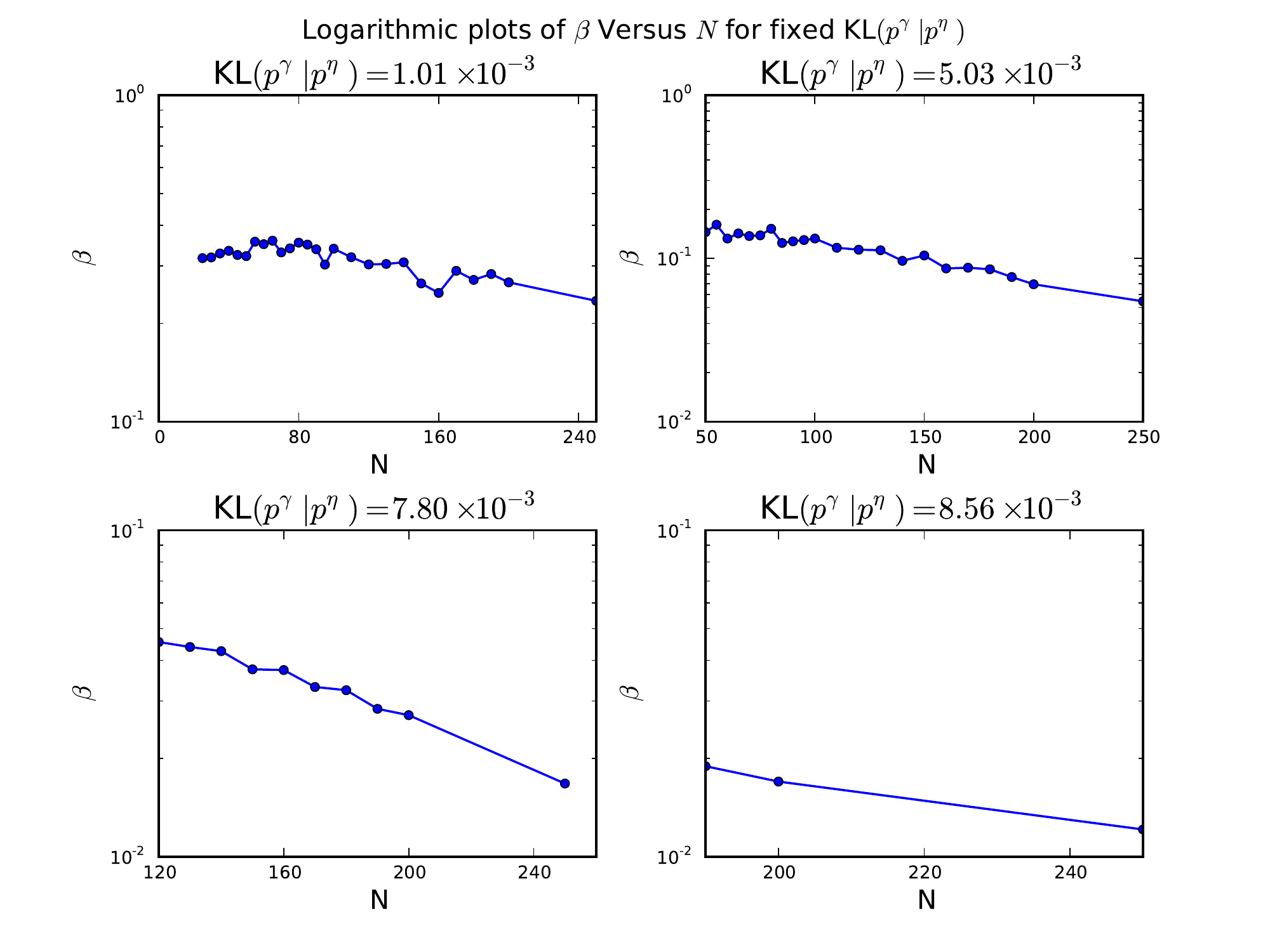}\\
\caption{Values of $\beta$ for small $N$ in intervals of $5$ or $10$.  Results shown
for same values of $\gamma$, $\eta$ as in Figure \ref{fig:betaDepOnN}.}
\label{fig:betaDepOnVerySmallN} 
\end{figure}
Note that for the smaller values of $\gamma$ considered in the ranges of smaller $N$, we haven't recorded enough
data to graph.  This is reflected in Figure \ref{fig:betaDepOnVerySmallN} by 
the increasing value of $N$ at the left edge of the graph as we move from upper left to lower right.
The reason for this will be explained in Section \ref{subsec:tableBuilding}, below.

\subsection{Building the table}\label{subsec:tableBuilding}
We now come to the final topic, namely choosing the two-dimensional array of $(N,\gamma)$-pairs, for which we actually compute
and tabulate $\beta_N^{p^\eta}(\gamma)$.  We have saved this topic for last because the decisions we make in building the table are informed by all the previous
considerations coming both from the structure-learning algorithm itself and from the empirical results of Section \ref{subsec:statsLinearInterpolation} on the numerical
properties of $\log\beta^{p^\eta}_N(\gamma)$.  The three main topics to be covered are as follows: first, the choice of $N$, second the choice of $\gamma$
for each $N$, and third the choice of $\gamma_0(N)$, which is the smallest \textit{meaningful} test statistic, as  function of $N$. We will explain below the precise meaning in this context of ``smallest meaningful".

\textbf{Choice of $N$ for the table.} The principal determining factor is that, as illustrated in Figures \ref{fig:betaDepOnN} through \ref{fig:betaDepOnVerySmallN},
the linear dependence of $\log\beta_N$ on $N$ becomes more reliable for large $N$, allowing us to make the interval $\Delta N$ between $N$
in the table much larger as $N$ grows.  In practice, it seems to make sense, in our experience, to adopt a scheme such as
\begin{itemize}
\item $\Delta N = 5$ for $N<100$,
\item $\Delta N = 10$ for $<100<N<200$,
\item $\Delta N = 50$ for $<200<N<500$,
\item $\Delta N = 100$ for $500<N<1000$,
\item $\Delta N = 1000$ for $1000<N<10000$,
\end{itemize}
which is what we have adopted for the tables we built for our experiments.  

\textbf{Choice of $\gamma$ in the table for each $N$.} For the case of $\gamma < \eta$, we refer back to Figure \ref{fig:betaDepOnKLDivergence}, which illustrates that $\log\beta_N^{p^\eta}(\gamma)$
has a roughly linear dependence on $KL(p^\gamma | p^\eta)$.  
It will be convenient to re-parametrize the path-segment from $p^0$ to $p^\eta$
using the unit interval.  In order the accomplish this we first parametrize the $t$-interval from $t=0$ to $t_{\eta}^+$ 
(recall that $t_{\eta}^+$ denotes the unique positive real value satisfying
the condition $\tau(p^0(t_{\eta}^+))=\eta$ where $p^0$ is the uniform distribution) using the statistic
\[
\zeta(t) = \frac{\mathrm{KL}(p^0(t)\| p^\eta)}{\mathrm{KL}(p^0 \| p^\eta)}.
\]
Note that
\[
\zeta(0) = \frac{\mathrm{KL}(p^0\| p^\eta)}{\mathrm{KL}(p^0 \| p^\eta)}=1,\; \zeta(t_{\eta}^+)=
\frac{\mathrm{KL}(p^\eta\| p^\eta)}{\mathrm{KL}(p^0 \| p^\eta)}=0.
\]
As a result, $\zeta$ is a smooth decreasing function which parameterizes the unit interval
with $[0,t_{\eta}^+]$, in \textit{reverse} order.  Thus, the functional inverse $\zeta^{-1}$
parameterizes $[0,t_{\eta}^+]$ with the unit interval in \textit{reverse} direction:
\[
\zeta^{-1}: [0,1]\rightarrow [0,t_{\eta}^+],\;\text{with}\; \zeta^{-1}(0)=t_{\eta}^+\;\text{and}\; \zeta^{-1}(1)=0.
\]
So if we \textit{compose} $\zeta^{-1}$
with $p^0(\cdot)$, we obtain that $p^0(\cdot)\circ \zeta^{-1}$
is a parameterization of the path-segment from $p^\eta=p^0(t_{\eta}^+)$ to $p^0=p^0(0)$ with the unit interval,
satisfying
\[
p^0(\cdot)\circ \zeta^{-1}: [0,1]\rightarrow [p^\eta,p^0],\;\text{with} \;p^0(\zeta^{-1}(0))=p^{\eta}\;\text{and}\; p^0(\zeta^{-1}(1))=p^0.
\] 
Once we have parameterized the path from $p^{\eta}$ to $p^0$
by the unit interval, the best way of thinking of our choice of $\gamma$ for the table is to think of our marking off a certain set of points on the unit interval, which we call the
the ``normalized KL-divergence list".  Since we care more about getting accurate values for $\beta(\gamma)$ for $\gamma$  close to $0$, that is for points of the
unit interval close to $1$, the basic idea is to mark off the points of the ``normalized KL-divergence list" so that they become more concentrated towards $1$.  Many schemes
could produce this effect, but the scheme we have chosen is as follows.  We create an object of type \textbf{betaTable},
and first invoke the method \textbf{generate\_N\_List} to generate the list of $N$ values and store it as a data member.  When
we invoke on this same \textbf{betaTable} object the method \textbf{generateNormalizedKL\_divergenceList} with signature
\begin{lstlisting}[language=Python]
generateNormalizedKL_divergenceList(self, stepsize, levelRatio, numLevels)
\end{lstlisting}
The method produces a list of rational numbers separated by 
\begin{itemize}
\item
\textbf{stepsize} units from $0$ to $0.5$, 
\item
\textbf{stepsize}/\textbf{levelRatio} units from $0.5$ to $0.75$, 
\item
\textbf{stepsize}/\textbf{levelRatio}${}^2$,
from $0.75$ to $0.875$ and so on \mbox{\textbf{numLevels}} times, so that the ``ticks" closest to $1$ are separated by only
\item \mbox{\textbf{stepsize}/\textbf{levelRatio}${}^{\textbf{numLevels}}$} units.
\end{itemize}
To clarify, here is an example of a test client invoking the method \textbf{generateNormalizedKL\_divergenceList}
on an object of type \textbf{betaTable} class, with the output below the horizontal rule:
\begin{lstlisting}[language=Python]
import src.betaTable as bt
class prettyfloat(float):
            def __repr__(self):
                return "%0.4f" % self
eta=0.01
newBetaTable = bt.betaTable(eta)
newBetaTable.generateNormalizedKL_divergenceList(0.1, 2, 4)
print map(prettyfloat, newBetaTable.normalizedKL_divergenceList)
print len(newBetaTable.normalizedKL_divergenceList)
-------------------------------------------------------------------
[0.0000, 0.1000, 0.2000, 0.3000, 0.4000, 0.5000,
0.5500, 0.6000, 0.6500, 0.7000, 0.7500, 
0.7750, 0.8000, 0.8250, 0.8500, 0.8750, 0.8875, 0.9000,
0.9125, 0.9250, 0.9375, 0.9500, 0.9625, 0.9750, 0.9875]
25
\end{lstlisting} 
Because the interpolation of $\log\beta_N^{p^\eta}$ is actually performed on the statistic 
$\mathrm{KL}(p^\gamma \| p^\eta)$, 
\nomenclature{$\mathrm{KL}(p \lvert\rvert q)$}{$H(p\lvert\rvert q)$}
not $\gamma$, in practice the table is
stored in memory as two separate tables of $\log\beta$'s associated to pairs $(N,\mathrm{KL}(p^\gamma \| p^\eta))$.  The first table contains values for $\gamma < \eta$,
and the second table contains values $\gamma > \eta$.  In practice, as soon as a $\gamma = \tau(p(\omega_N,i,j))$ is encountered in the evaluation
of the objective function, $\gamma$ is converted to a KL-divergence and sent to the appropriate table for the interpolation.

We have already explained how the $\gamma$ for $\gamma < \eta$ are selected, and it remains to explain how the $\gamma$ for $\gamma > \eta$
are chosen for tabulation.  For the case of $\gamma >\eta$, accuracy is far less crucial because in this case, the sparsity boost
in the objective function is small.  Therefore, we simply pick $10$ evenly spaced points (on the $\gamma>\eta$ side of the path) 
between $0$ and $\mathrm{KL}\left(p^0\left(\frac{1}{4} \right) | p^\eta\right)$ at which
to evaluate and store the values of $\beta$.

Finally, we discuss how to set the ``threshold" value $\gamma_0(N)$.  In the course of the algorithm, $\gamma_0(N)$
plays the following role.  If we calculate a test statistic $\gamma=\tau(p(\omega_N),i,j)<\gamma_0(N)$,
we will replace $\gamma$ with $\gamma_0(N)$ before computing $\beta$.  In other words, in practice, the sparsity boost for
$\gamma:=\tau(\omega, A,B| s)$ is
\[
-\log \beta^{p^\eta}_N(\max(\gamma,\gamma_0(N))).
\]
It is easiest to explain our reason for introducing such a $\gamma_0(N)$ in the case when $N$, the number of data points, is very small.  For example, when $N=4$, 
there are only $35$ contingency tables that the data could give rise to, and $25$ of them correspond to distributions in $\mathcal{P}_0$.  Thus, without
such a $\gamma_0(4)$, there is a very high chance of the data sampled from \textit{any} distribution being assigned the largest possible sparsity boost, namely $-\log\beta^{p^\eta}_N(0)$.
We have to find the smallest $\gamma_0$ such that the difference between $\beta_N(\gamma_0)$ and $\beta_N(\gamma')$ for $\gamma' < \gamma_0$ is meaningful, that is, is not a matter of arithmetic coincidence.
In order to find a reasonable value for $\gamma_0$, we return to the definition of $\beta_N(\gamma)$,
\[
\beta_N^{p^\eta}(\gamma) := \sum_{T\in \mathcal{T}_N}\mathrm{Pr}_{p^\eta}(T)\cdot \mathrm{1}_{A_\gamma^0}(p_T).
\]
The main contribution to the sum comes from $T$ such that $p_T$ belongs to a
small number of paths (given by fixing the marginals) of probability distributions, namely those paths passing through the submanifold 
$\mathcal{P}_0$
at points close to $p^0$, the uniform distribution.  The reason that only a few paths make a significant
contribution to the probability is that the probability factor drops off sharply as a function of the marginals, for $T$
such that the marginals of $p_T$ are far from uniform.  
The locations of the points in the lattice $p_{\mathcal{T}_N}$ have nothing to do with $\gamma$: they are determined by arithmetic.  Thus, in the case that the length of the segment, $\ell_{\gamma}$, is small in comparison to the spacing of the points of
the lattice, whether
a path contains any points of $p_{\mathcal{T}_N}$, and therefore makes a contribution to the sum is a matter of arithmetic
coincidence.   Therefore, we adopt a heuristic to avoid this circumstance, and choose $\gamma_0$ so that
\begin{equation}\label{eqn:gammaZeroDefn}
\gamma_0\;\text{satisfies}\;
\ell_{\gamma_0} = 1/N.
\end{equation}
Here $\ell_{\gamma_0}$ is as defined in \eqref{eqn:lenOfRelevantSegment}.

Let $\gamma_0$ be defined by \eqref{eqn:gammaZeroDefn}.  
Then $\gamma_0$ is the threshold such that for all $\gamma>\gamma_0$
 we expect to have at least one point of $p_{\mathcal{T}_N}$ fall on 
the segment of $p^0(t)$ between $t_{\gamma}^-$
and $t_{\gamma}^+$, that is the portion of that path within the set $A_{\gamma^0}$.  
The point of the foregoing discussion is that if we obtain, empirically, a $\gamma$
value which is smaller than $\gamma_0=\gamma_0(N)$, then the empirical
sparsity boost $-\ln\beta_N(\gamma)$ will be ``huge" simply because the path
of distributions with uniform marginals contains no element of $p_{\mathcal{T}_N}$.
We prevent large sparsity boosts from showing up by chance 
for small $N$ by putting a lower threshold of $\gamma_0$
on the empirical mutual information.

We implement the computation of $\gamma_0(N)$,
defined by \eqref{eqn:gammaZeroDefn}, with the method \textbf{generate\_N\_toMinimumGammaDict}
of the class \textbf{betaTable}.  The results are plotted in Figure \ref{fig:depdendenceGammaZeroOnNloglog}
to show the clear power law dependence of $\gamma_0(N)$ on $N$.  

\begin{figure}[htb]
 \centering
 \hspace*{-2cm}
\mbox{\includegraphics[scale = 0.8, trim = 0mm 0mm 0mm 0mm, clip]{./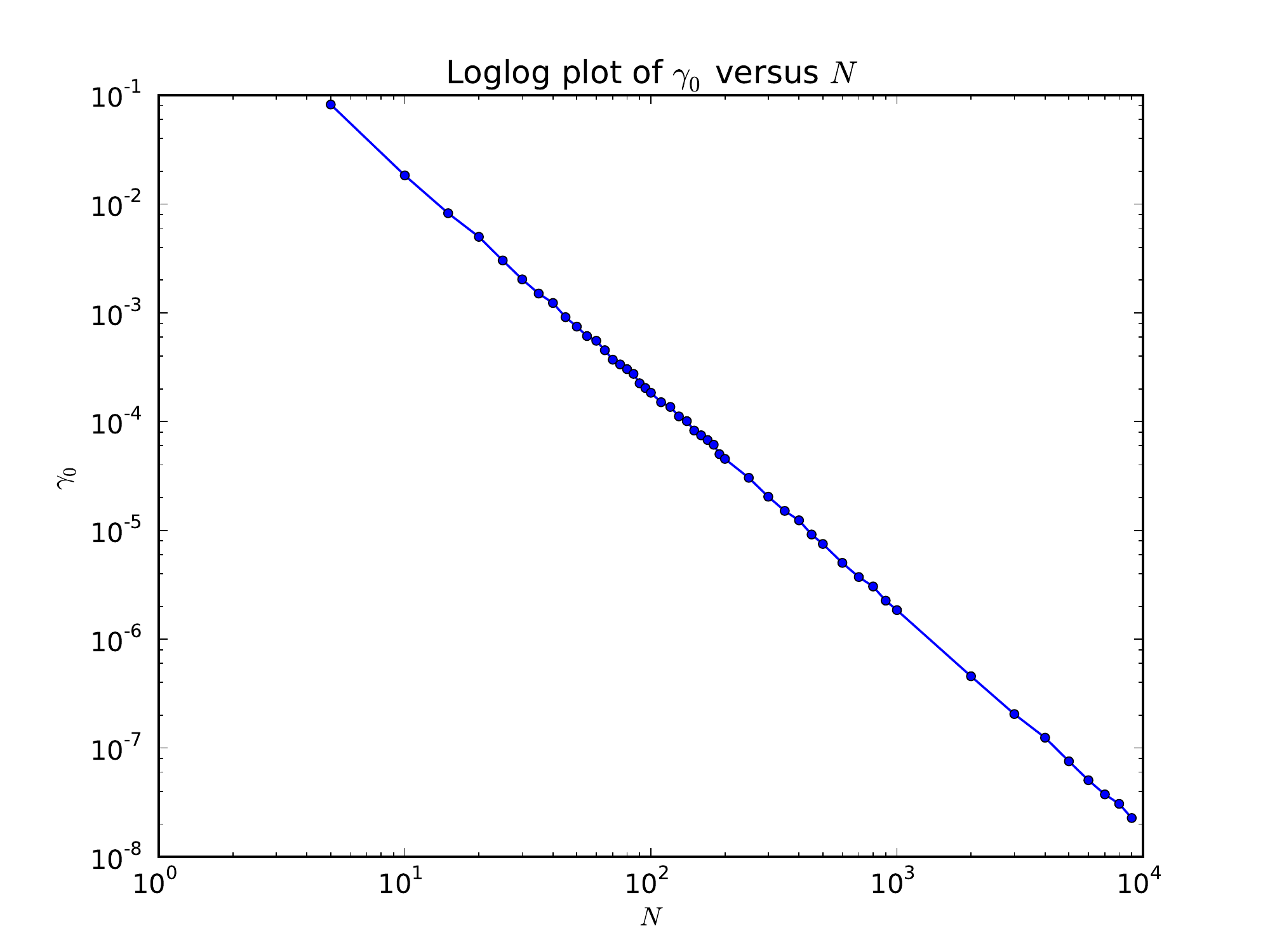}}\\
\caption{Power law dependence of $\gamma_0(N)$ on $N$}
\label{fig:depdendenceGammaZeroOnNloglog} 
\end{figure}

Bayesian-minded readers will doubtless observe that the problem that thresholding by $\gamma_0(N)$
solves is an artifact of our making a prior distribution concentrated entirely at the point $p^\eta$.
An alternative to compensating for the choice of $p^\eta$ in this way would be to make
the marginals of $p^\eta$ ``responsive" to the marginals of the empirical distribution,
in particular for $p^\eta$ to be a distribution over distributions, instead of a point in $\mathcal{P}$.
We discuss this and other directions for future research in Chapter \ref{chap:Discussion}.

To summarize, the algorithm for the interpolation (or extrapolation) is as follows:
\begin{lstlisting}[language=Python]
def float table.interpolateAndReturnLogBeta(N, gamma):
    #table: data structure {(N,gamma,logBeta)}
    if N < smallest tabulated N:
        return 0
    if N > largest tabulated N:
        return extrapolateAndReturnLogBeta(N,gamma)
    #gamma_0(N) smallest tabulated for N 'near' given N 
    if gamma < gamma_0(N):
         gamma = gamma_0(N)
    return table.LinearInterpolate(N,gamma)

def float table.extrapolatedValue(N,gamma):
    Extract LargestN and SecondLargestN from table
    valLargest = interpolateAndReturnLogBeta( LargestN, gamma)
    valSecondLargest = interpolateAndReturnLogBeta(SecondLargestN, gamma)
    slope = (valLargestN - valSecondLargest)/( LargestN - SecondLargestN)
    return valLargest + slope*(N-largestN)
\end{lstlisting}
In practice, for the linear interpolation itself, we do not implement the function \textbf{table.LinearInterpolate(N,gamma)}
but instead call \textbf{scipy.interpolate.griddata} with the default method, ``linear".  Internally,
this ``tesselates the input point set to 2-dimensional simplices, using Qhull, and interpolates linearly on each simplex",
which allows the interpolation to take account of all the neighboring points of the grid in a mathematically
rigorous manner. 
\chapter{Discussion}\label{chap:Discussion}
We close by giving some further discussion of the relation of our results
to those found in the literature, and outlining some directions for future research.

\section{Relation to previous literature}\label{sec:discussionLiterature}
Readers who are familiar with the results of \cite{Friedman:UAI96} will note
that we have an exponent of $2$ on $\zeta$ (corresponding to their quantity $\epsilon$),
where \cite{Friedman:UAI96} have an exponent of $\frac{4}{3}$.
We believe it may be possible to improve our exponent to their value
through further analysis.  Our exponent on $\zeta$ matches
that obtained by \cite{hoffgen1993learning}, since we have used
the methods of \cite{hoeffding1965asymptotically} in estimating
the log-likelihood terms.  Second, our exponent on $m$ is $4$,
where \cite{Friedman:UAI96} have $2$.  In our case, the higher exponent
results from the analysis of the sparsity boost terms.  There is a trade-off
in terms of finite sample complexity in that, although using the sparsity
boost terms gives worse finite sample complexity in terms of $m$, using the sparsity
boost does allow the objective function to rule out \textit{all} false
structures that have even one false edge in the skeleton.  The
sample complexity results of \cite{Friedman:UAI96}, for the usual BIC
score, without the sparsity boost, make no such claim, ruling out only
false structures in $\mathcal{B}_{\zeta}$.  Finally, we point out that
because we have used the methods of \cite{hoffgen1993learning},
we are, unlike \cite{Friedman:UAI96}, obtaining a result that depends polynomially on $n=|V|$, although unlike \cite{Friedman:UAI96}, we are restricting ourselves
to an constant bound $d$ for the in-degree of all graphs to be considered.

The approaches in the literature which are most closely related to our approach 
are the ``hybrid" approaches, which use both constraint-
and score-based approaches and attempt, through heuristic methods, to get the best of both worlds.  Since the constraint-based
approaches usually rely on independence tests, the resemblance to our method, at least at a philosophical level, is clear.
The leading representatives of this approach are Fast's ``Greedy Relaxation Algorithm" (\textsc{Relax}) described in \cite{fast2010learning} and 
Tsamardinos et al.'s Max-Min Hill-Climbing (MMHC) described in \cite{mmhc}.  The \textsc{Relax} algorithm 
starts by running constraint identification to learn constraints followed by edge orientation
to produce an initial model. After the first model has been identified, \textsc{Relax} uses a local greedy search over possible relaxations of the constraints,
at each step choosing the single constraint which, if relaxed,
leads to the largest improvement in the score.  The MMHC algorithm, in contrast, runs a constraint identification algorithm
to learn an undirected skeleton only. Then MMHC builds up the final, directed network starting
from the empty network, and then adding, deleting, and reversing edges, guided by a greedy optimization of the scoring metric.
Note that both these approaches, though hybrid in a sense, still separate out the constraint- and score-based
approaches into two distinct steps, rather than incorporating the constraint (conditional independence tests)
as a term directly into the score itself.  An a priori advantage of our approach is that it removes the needs for
heuristics from these hybrid approaches, and lends itself more to self-contained finite-sample complexity results such as
the ones we have derived in this paper.  This advantage we believe could be valuable in and of itself in making the subject
more accessible to practitioners outside the specialty of Bayesian networks.  Implicit in the experiments envisioned in the ``Directions
for future research" section below is to compare the experimental results obtained with our method to the published results of \textsc{Relax}
and MMHC algorithms.

\section{Directions for future research}\label{sec:DirectionsForFutureResearch}
As is well known, there is a finite sample complexity
result for the BIC score \cite{Friedman:UAI96}. Intuitively we expect
that under our scoring function, as compared to the BIC score, the
true network will win out more quickly and by a larger margin over
its competitors, because while the complexity penalty scales the same
for every parameter in the model, the sparsity boost for different
nonexistent edges differs. The boost increases to infinity precisely
for those nonexistent edges for which accumulating evidence from the
hypothesis tests points to independence. This will become important
in the second step of the algorithm, the optimization.

In order to optimize our score, virtually any optimization procedure
can be used, but we propose using linear programming methods, similar
to \cite{JaaSonGloMei_aistats10,Cussens11}. In preliminary experiments, \cite{brennerSontagUAI},
we have found that as
the amount of data increases LP relaxations become tight, thus providing
the optimal solution to this maximization problem, with little computational
overhead. This also suggests that local search-based methods would
also be able to find the best-scoring Bayesian network in the large-sample
limit.

A key issue in the optimization is finding a good way of pruning a
set of possible parents of each node. Even under the typical assumption
that the size of all parent sets is bounded by a constant $d$, there
are still far too many candidate parent sets for each variable. For
the BIC score, there are criteria saying when one may exclude from
consideration certain possible parent sets of a variable, and also,
more significantly, all supersets of the excluded parent set (Section
2 of \cite{Campos:ICML09}). The criterion significantly reduces the
number of parent sets whose information (contribution to the score
function) has to be cached, but we would still like to cut down on
the number further. In this regard, the heuristic of \cite{mmhc}
for generating a list of candidate parent sets for each variable is
of particular interest. We envision using a similar heuristic from
the point of view of ordering the possible parent sets, but using
our objective function instead of statistical independence tests.
For more details, see our forthcoming paper \cite{brennerSontagUAI}.

Another issue to be explored as a future line of investigation
is the choice of $p^\eta$ in $\beta^{p^\eta}$ within
the objective function.  The overall motivation for $\beta^{p^\eta}$ is to capture the probability of Type II
error of a statistical test with independent distributions $\mathcal{P}_0$ as null hypothesis $H_0$
and all distributions $\mathcal{P}_{\eta}$ as alternative hypothesis $H_1$.
The currently implemented choice of $p^{\eta}$ is a distribution with uniform marginals.
This corresponds philosophically to replacing the more natural composite alternative hypothesis $H_1=\mathcal{P}_{\eta}$
with the more restrictive simple alternative hypothesis $H_1=p^\eta$.
Alternatively, fixing   $p^{\eta}$ this way, so that it is completely insensitive to the marginals
of the empirical distribution $p(\omega_N)$, is somewhat like adopting a strong Bayesian prior belief
that the marginals of the underlying distribution are at or near uniform.  It must be said that
overall the choice of uniform marginals for $p^\eta$ represents an expedient choice, 
providing an objective function that is manageable
to implement and compute, yet still has a reasonable sample complexity.  The more philosophically grounded 
method of choosing $H_1$, taking account of every possible distribution in $\mathcal{P}_\eta$,
would likely lead to an objective function that is prohibitively expensive to compute in practice.  A compromise approach,
which might yield the best results in the long run, is to use a simple alternative $p^{\eta}$
whose marginals can vary, approximating without necessarily matching those of $p(\omega_N)$.  This is one concrete possibility for
future investigations to explore.  More generally, one can contemplate incorporating various
sorts of statistically derived probabilities into the objective function in order to arrive at a good sparsity boost,
and it may very well turn out that some of them will be even more effective for certain purposes
than the one considered here.

This leads to the broader point that objective functions, and the optimization of them
over discrete spaces of structures, are ubiquitous throughout probabilistic
graphical models and fields of machine learning.
The present investigations may suggest a new paradigm for incorporating information from ``classical'' hypothesis
tests into the objective functions used in machine learning.  These investigations, therefore, should be of broader interest to the machine learning
community, far beyond researchers in the field of Bayesian networks.

\begin{appendix}
\chapter{Investigation of the $I$-projection}\label{A}
In the first paragraph of Section \ref{subsec:statsLinearInterpolation}, 
we expressed our interest in characterizing more precisely the 
$I$-projection $q^\gamma$ of $p^\eta$ onto $A^0_\gamma$ (at least for the case $k=l=2$)
and promised to discuss this issue further in the appendices.  While, according to general results,
for example Theorem 8.6, p. 277, in \cite{koller}, the distribution $p^\gamma$ is
the \textit{unique} $M$-projection of $p^\eta$ onto $A_0^\gamma$, there do exist $\eta$ such that $p^\eta$
has two $I$-projections onto $\mathcal{P}_0$, with equal KL-divergence from $p^\eta$  (for example the two I-projections 
of $p^\eta$ with $\eta=0.2$ are two product distributions with equal marginals $(0.25,0.75)$ and $(0.75,0.25)$, respectively).
This illustrates the general principle that it is difficult to compute or characterize the $I$-projection of an arbitrary
distribution.  We do not strictly need the results of this appendices for any of the proofs of the main text, but
a better understanding of the $I$-projection would considerably
simplify the discussion of Chater \ref{chap:finitesamplecomplexity} by allowing us to conclude that the marginals of $q^{\gamma}$ are
\textit{always} (at least in many cases of interest) close to the marginals of $p^{\eta}$ and therefore entirely dispense
with one branch of the ``dichotomy" discussed in that section.  This would lead to better constants in the finite sample
complexity result and a simpler characterization of $\eta_N^-$.

This serves to justify our interest in the following conjecture, which characterizes the $I$-projection for those $p^\eta$ such that
$\eta$ is sufficiently close to $0$.

\begin{conj}\label{conj:Iprojection}
For all $p^\eta$ with $t_{\eta}^+ < \frac{1-e^{-1}}{4(1+e^{-1})}\approx 0.115529289315002$ (which is to say all $\eta$ less than approximately $0.11094054602671935$)
we have that the $I$-projection of $p^\eta$ onto $A_0^\gamma$ is the distribution $p^\gamma$ with uniform marginals
on the boundary component of $A_0^\gamma$ closer to $p^\eta$.
\end{conj}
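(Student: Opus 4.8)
The plan is to reduce the computation of $q^\gamma=\mathrm{argmin}_{q}H(q\|p^\eta)$ over $A_0^\gamma$ to a one‑dimensional problem on each fiber of fixed marginals, and then to a single explicit inequality in the two marginal parameters. First I would note that $H(\cdot\|p^\eta)$ is strictly convex on $\mathcal{P}$ and vanishes only at $p^\eta$, while $p^\eta\notin\overline{A_0^\gamma}=\{p:\tau(p)\le\gamma\}$ since $\tau(p^\eta)=\eta>\gamma$; hence an interior minimizer over the compact set $\overline{A_0^\gamma}$ would be an unconstrained minimizer of $H(\cdot\|p^\eta)$, i.e.\ equal to $p^\eta$ — impossible. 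So $q^\gamma$ lies on $\{\tau=\gamma\}$. For $k=l=2$ every distribution lies on a unique fiber $\{q_0(a,b)(t)\}_t$ through $\mathcal{P}_0$ determined by its marginals $(a,b):=(q_{A,0},q_{B,0})$, and because $\tau$ is strictly increasing in $|t|$ along a fiber (via $\tfrac{\partial}{\partial t}\tau(q_0(a,b)(t))=\log\frac{q_{00}(t)q_{11}(t)}{q_{01}(t)q_{10}(t)}$), the set $\{\tau=\gamma\}$ meets each feasible fiber in exactly the two points $q_0(a,b)(t_\gamma^{\pm}(a,b))$.

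Next, combining $\sum_{ij}q_{ij}\log q_{ij}=\tau(q)-H(q_{A,0})-H(q_{B,0})$ with the explicit form $p^\eta_{00}=p^\eta_{11}=\tfrac14+t_\eta^+$, $p^\eta_{01}=p^\eta_{10}=\tfrac14-t_\eta^+$, I would derive the identity
\[
H(q\|p^\eta)=\tau(q)-H(q_{A,0})-H(q_{B,0})-\log\!\Big(\tfrac14-t_\eta^+\Big)-D(q)\,L_\eta ,
\]
where $D(q):=q_{00}+q_{11}$ is the diagonal mass and $L_\eta:=\log\frac{1/4+t_\eta^+}{1/4-t_\eta^+}>0$. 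Applied at $p^\gamma=q_0(\tfrac12,\tfrac12)(t_\gamma^+)$ this gives $H(p^\gamma\|p^\eta)=\gamma-2\log2-\log(\tfrac14-t_\eta^+)-\big(\tfrac12+2t_\gamma^+\big)L_\eta$, while along an arbitrary fiber $D(q)=D_0(a,b)+2t$ with $D_0(a,b)=ab+(1-a)(1-b)=\tfrac12+\tfrac12(1-2a)(1-2b)$. Since $t_\gamma^+(a,b)>0>t_\gamma^-(a,b)$, the ``$+$'' boundary point has strictly larger $D$ and hence strictly smaller $H(\cdot\|p^\eta)$, so the projection is attained among the points $q_0(a,b)(t_\gamma^+(a,b))$ (``$-$''‑only or partially feasible fibers are handled by the same, easier, estimate). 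Thus the conjecture is equivalent to: for every $q$ on $\{\tau=\gamma\}$,
\[
2\log 2-H(q_{A,0})-H(q_{B,0})\;\ge\;L_\eta\big[D(q)-D(p^\gamma)\big].
\]

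The remaining work — and the main obstacle — is this inequality. I would first establish the sub‑claim $t_\gamma^+(a,b)\le t_\gamma^+(\tfrac12,\tfrac12)$ for all feasible $(a,b)$: morally, the mutual information grows at least as fast with the path parameter along the uniform‑marginal path as along any product‑based path. This follows from $\sum_{ij}q_0(a,b)(s)_{ij}^{-1}\ge\sum_{ij}p^0(s)_{ij}^{-1}$ on the relevant range of $s$ — the convexity bound $\sum x_i^{-1}\ge 16/\sum x_i=16$ gives the $s=0$ case, and $\tfrac{\partial}{\partial t}\tau(q_0(a,b)(t))=\int_0^t\sum_{ij}q_0(a,b)(\rho)_{ij}^{-1}\,d\rho$ propagates it, the delicate regime being marginals very close to uniform, where one must expand $\sum q_0(a,b)(\rho)_{ij}^{-1}-\sum p^0(\rho)_{ij}^{-1}$ carefully. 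Granting this, $D(q)-D(p^\gamma)\le D_0(a,b)-\tfrac12=\tfrac12(1-2a)(1-2b)$, which is nonpositive unless the marginals are ``aligned'' (both $<\tfrac12$ or both $>\tfrac12$); the nonpositive case is immediate, and the aligned case reduces to $[\log2-H(a)]+[\log2-H(b)]\ge\tfrac{L_\eta}{2}(1-2a)(1-2b)$. Here the hypothesis $t_\eta^+<\frac{1-e^{-1}}{4(1+e^{-1})}$ enters in its equivalent form $L_\eta<1$ (indeed $L_\eta=1\iff\frac{1/4+t_\eta^+}{1/4-t_\eta^+}=e\iff t_\eta^+=\frac{e-1}{4(e+1)}=\frac{1-e^{-1}}{4(1+e^{-1})}$): one checks that the binary‑entropy deficit satisfies $\log2-H(a)\ge\tfrac12(1-2a)^2$ on $(0,1)$ (with $\log2-H(a)=2(a-\tfrac12)^2+O((a-\tfrac12)^4)$ near $\tfrac12$), whence $[\log2-H(a)]+[\log2-H(b)]\ge\tfrac12(1-2a)^2+\tfrac12(1-2b)^2\ge(1-2a)(1-2b)$ in the aligned case, and $L_\eta<1$ closes the bound. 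The genuinely delicate point is near the uniform fiber, where one must balance the second‑order behaviour of $t_\gamma^+(a,b)$ (obtained by implicit differentiation of $\tau(q_0(a,b)(t_\gamma^+))=\gamma$) against that of $H(a)+H(b)$, and confirm that $L_\eta<1$ is enough globally and not merely locally. Once the displayed inequality is in hand, $q^\gamma=q_0(\tfrac12,\tfrac12)(t_\gamma^+)=p^\gamma$, which has uniform marginals and lies on the boundary component of $A_0^\gamma$ nearer $p^\eta$, as claimed.
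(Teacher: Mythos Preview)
First, note that the paper does \emph{not} prove this statement: it is stated as a Conjecture, and the paper establishes only the special case $\gamma=0$ (Proposition~\ref{prop:Iprojection}), supplementing with numerical evidence for $\gamma>0$. Your proposal therefore attempts strictly more than the paper does.

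The approaches are genuinely different. For $\gamma=0$ the paper reduces by symmetry to the one--parameter family $p_0(x)=p_0(x,x)$, differentiates $H(p_0(x)\|p^\eta)$ explicitly, and after the substitution $y=\tfrac{1-x}{x}$, $z=\tfrac{1/4-t_\eta^+}{1/4+t_\eta^+}$ solves the critical--point equation $\log y+y\log z-\log z=0$ via the Lambert $W$ function, showing the only critical point in $(0,1]$ is $y=1$ (i.e.\ $x=\tfrac12$) precisely when $z>e^{-1}$, equivalently $t_\eta^+<\tfrac{1-e^{-1}}{4(1+e^{-1})}$. Your route is instead an information identity plus a Pinsker bound: you show that on $\{\tau=\gamma\}$ the comparison $H(q\|p^\eta)\ge H(p^\gamma\|p^\eta)$ is equivalent to $2\log2-H(q_{A,0})-H(q_{B,0})\ge L_\eta[D(q)-D(p^\gamma)]$, and then try to close this with $\log2-H(a)\ge\tfrac12(1-2a)^2$ and AM--GM.

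There is a real gap. Your argument rests on the sub--claim $t_\gamma^+(a,b)\le t_\gamma^+(\tfrac12,\tfrac12)$, which you do not prove: the $s=0$ step is just the AM--HM inequality, but the claimed monotone comparison $\sum_{ij}q_0(a,b)(s)_{ij}^{-1}\ge\sum_{ij}p^0(s)_{ij}^{-1}$ for all $s$ in the relevant range is neither established nor standard, and you yourself flag the regime of near--uniform marginals as ``delicate.'' Without this, you cannot pass from $D(q)-D(p^\gamma)$ to $\tfrac12(1-2a)(1-2b)$ when $\gamma>0$, and the second--order balancing you defer is precisely where the content lies. A secondary but telling symptom: even at $\gamma=0$ (where the sub--claim is vacuous), your Pinsker\,+\,AM--GM chain yields the inequality for all $L_\eta<2$, strictly weaker than the paper's critical--point analysis, which pins the threshold for the appearance of non--trivial critical points exactly at $L_\eta=1$. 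Your method may well extend to $\gamma>0$, but as written it is a sketch with the hard monotonicity lemma missing; the paper, for its part, does not claim to have this lemma either, which is why the general statement remains a conjecture there.
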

According to the conjecture, we may assume without loss of generality (\textit{e.g.}, by decreasing $\eta$ to $0.1109$ if necessary) that
\begin{equation}\label{eq:identifyClosestElementOfAGamma}
q^\gamma=p^\gamma
 \end{equation}
\begin{proposition}\label{prop:Iprojection}
Conjecture \ref{conj:Iprojection} is true for the special case of $\gamma=0$.
\end{proposition}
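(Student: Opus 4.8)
The plan is to prove Conjecture \ref{conj:Iprojection} in the special case $\gamma = 0$, i.e. to show that the $I$-projection of $p^\eta$ onto $A_0^0 = \mathcal{P}_0$ (the set of product distributions, since $\tau(p) < 0$ is impossible and $\tau(p)=0$ exactly on $\mathcal{P}_0$) is $p^0$, the uniform distribution, provided $t_\eta^+$ is small enough, specifically $t_\eta^+ < \frac{1-e^{-1}}{4(1+e^{-1})}$. Recall that $p^\eta = p^0(t_\eta^+)$ has uniform marginals by definition, so geometrically $p^\eta$ sits on the path based at $p^0$ through the center of the simplex $\mathcal{P}_{2,2}$, and the claim is that among all product distributions, the one closest (in the sense of $H(q \| p^\eta)$, which is what $I$-projection minimizes) is the one with uniform marginals, namely $p^0$ itself. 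So the quantity to study is $f(x,y) := H(p_0(x,y) \| p^\eta)$ as a function of the marginal parameters $x = p_{A,0}$, $y = p_{B,0}$, and we want to show it has a strict global minimum on $[0,1]^2$ at $(x,y) = (1/2,1/2)$.

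First I would write out $f(x,y)$ explicitly. Since $p^\eta$ has the contingency table with entries $\tfrac14 + (-1)^{i+j} t_\eta^+$ and $p_0(x,y)$ has entries $x y$, $x(1-y)$, $(1-x)y$, $(1-x)(1-y)$ in the obvious pattern, $f$ is an explicit sum of four terms of the form (entry of $p_0(x,y)$) times $\log$ of the ratio of that entry to the corresponding entry of $p^\eta$. Then I would compute the gradient $\nabla f$ and verify that $(1/2,1/2)$ is a critical point — this should follow from the symmetry of $p^\eta$ under simultaneously swapping the two values of $X_A$ and of $X_B$ (the table is invariant under $(i,j)\mapsto(1-i,1-j)$), which forces $\partial f/\partial x = \partial f/\partial y = 0$ at the center. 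The heart of the argument is then the second-order / global analysis: I would either (a) compute the Hessian of $f$ and show it is positive definite on all of the relevant region, which would give that $(1/2,1/2)$ is the unique critical point and hence the global minimum since $f \to +\infty$ near the walls where $p^\eta$ is bounded away from the walls — wait, one must be careful, $f$ does \emph{not} blow up at the walls of $\mathcal{P}_0$ because $p^0(x,y)$ going to the boundary makes some entries $xy \to 0$ and $0\log 0 = 0$; rather $f$ stays bounded — so instead I would (b) reduce to showing $f$ is strictly convex (or at least has no other critical points and the boundary values exceed $f(1/2,1/2)$), where the bound $t_\eta^+ < \frac{1-e^{-1}}{4(1+e^{-1})}$ is exactly what is needed to control the Hessian. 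The constant $\frac{1-e^{-1}}{4(1+e^{-1})}$ strongly suggests that the analysis passes through the inequality $\log x \le x - 1$ or its consequences (the same tangency inequality \eqref{eqn:tangencyInequality} used elsewhere in the paper), applied to the ratios of entries, with $e^{-1}$ entering because that is where $x\log x$ attains its minimum — so I would look for a way to bound $f(x,y) - f(1/2,1/2)$ below by a positive-definite quadratic form, using $\log$-vs-linear estimates term by term, and the hypothesis on $t_\eta^+$ guarantees that the "bad" cross terms cannot overwhelm the positive diagonal contribution.

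The main obstacle I anticipate is the global (rather than merely local) nature of the minimization: showing $(1/2,1/2)$ beats \emph{every} other product distribution, not just nearby ones. The function $f$ is not obviously convex on all of $[0,1]^2$ — indeed the remark right after the conjecture, that for $\eta = 0.2$ the distribution $p^{0.2}$ has \emph{two} distinct $I$-projections onto $\mathcal{P}_0$ with non-uniform marginals $(0.25,0.75)$ and $(0.75,0.25)$, shows that for larger $\eta$ the minimum genuinely migrates away from the center and splits, so any proof must crucially exploit the smallness hypothesis and cannot be a soft symmetry argument alone. I would therefore expect the bulk of the work to be a careful case analysis or a clean convexity estimate valid precisely on the regime $t_\eta^+ < \frac{1-e^{-1}}{4(1+e^{-1})}$, perhaps by restricting first to the diagonal $x = y$ (by symmetry the minimum, if unique, lies there, though one must rule out off-diagonal minima separately) and then handling the transverse direction. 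A secondary, more routine obstacle is bookkeeping: keeping the four logarithmic terms and their derivatives organized, which is why I would introduce compact notation for the entries of $p^\eta$ (e.g. $a = \tfrac14 + t_\eta^+$, $b = \tfrac14 - t_\eta^+$) before differentiating.
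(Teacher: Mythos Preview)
Your high-level outline is right --- reduce to the diagonal by symmetry and show $x=1/2$ is the unique minimizer --- but your proposed execution via Hessian positivity or termwise $\log$-versus-linear bounds is not how the paper proceeds, and your guess about the origin of the constant $\tfrac{1-e^{-1}}{4(1+e^{-1})}$ is off.

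The paper first invokes the full symmetry of $p^\eta$ to assert that any $I$-projection onto $\mathcal{P}_0$ has \emph{equal} marginals, reducing at once to the one-variable function $x\mapsto H(p_0(x,x)\|p^\eta)$ on $[\tfrac12,1]$ (your worry about off-diagonal minima is legitimate, but the paper passes over it in one sentence). It then computes $\partial_x H$ explicitly and, after the change of variables $y=(1-x)/x$, $z=(\tfrac14-t_\eta^+)/(\tfrac14+t_\eta^+)$ (both taking values in the unit interval), rewrites the first-order condition as $\log y + y\log z - \log z = 0$, equivalently $(y\log z)e^{y\log z}=z\log z$. This is solved \emph{exactly} by the Lambert $W$ function: $y=W_j(z\log z)/\log z$ for the two real branches $j\in\{0,-1\}$. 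Since $g(z)=z\log z$ is two-to-one on $(0,1)$ with its minimum at $z=e^{-1}$, an elementary analysis shows that the nontrivial root lies in $(0,1]$ only when $z\le e^{-1}$; for $z>e^{-1}$ the only admissible critical point is the trivial one $y=1$, i.e., $x=\tfrac12$. Translating $z=e^{-1}$ back through the substitution gives exactly $t_\eta^+ = \tfrac{1-e^{-1}}{4(1+e^{-1})}$ --- so the threshold is the turning point of $z\log z$ after the change of variables, not a consequence of the tangency inequality \eqref{eqn:tangencyInequality}.

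In short, the paper obtains a sharp, exact count of critical points rather than a convexity bound. Your Hessian route might be pushable, but it would likely be messier and would not pin down the threshold so cleanly; the Lambert-$W$ solution of the one-variable first-order condition is the missing ingredient.
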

\begin{proof}
 In that special case we are considering the $I$-projection of the 
distribution $p^\eta$ onto the space of product distributions $\mathcal{P}_0$.  By symmetry of $p^\eta$ we can see that any $I$-projection
of $p^\eta$ onto $\mathcal{P}_0$ is a product distribution $p_0(x)$ of two marginal distributions with identical parameters $(x,1-x)$ with $x\in [0,1]$.
Further if one of the $I$-projections is the product distribution $p_0(x)$ of $(x,1-x)$ with itself, then the the other $I$-projection
is the product distribution $p_0(1-x)$ of $(1-x,x)$ with itself (including the ``degenerate" case of $x=\frac{1}{2}$ when these two product distributions are coincidentally equal to one another ).  As a result, we may assume without generality that $x\in [\frac{1}{2},1]$.  Now, we set $t=t^\eta$ and write the KL-divergence $H(p_0(x)\|p^{\eta})$ in terms of the variables
$x,t$ by setting $H(x,t):=H(p_0(x) \| p^\eta )$.  Let $t$ be fixed (because $t=t_{\eta}^+$ and $\eta$ is fixed).  The parameter $x$ can correspond to an $I$-projection $p(x)$
of $p^\eta$ if and only if $\frac{\partial}{\partial x}H(x,t)$ vanishes.  We can compute the partial derivative explicitly and obtain
\[
 \frac{\partial}{\partial x}H(x,t)=(x-1)\log\left( \frac{1-x}{x} \frac{\frac{1}{4}-t}{\frac{1}{4}+t} \right)-x\log\left(
 \frac{1-x}{x}\frac{\frac{1}{4}+t}{\frac{1}{4}-t}
 \right)
\]
Make the change of variable
\begin{equation}
\label{eqn:Iprojectionchangeofvariables}
 y:=\frac{1-x}{x},\; z :=  \frac{\frac{1}{4}-t}{\frac{1}{4}+t}.
\end{equation}
The bounds new variables
\[
 0\leq y \leq 1,\; 0 \leq z \leq 1,
\]
say that the new variables lie in the unit square.
Regrouping terms we obtain that the condition that the partial derivative vanishes is
\begin{equation}
\label{eqn:yzeqn}
 \log y + y\log z - \log z = 0.
\end{equation}
For each $z$ in the unit interval equality \eqref{eqn:yzeqn} has two real solutions $y$, namely,
\begin{equation}
\label{eqn:yW}
 y=W_{0}(z\log z)/(\log z),\quad\text{and}\; y=W_{-1}(z\log z)/(\log z),
\end{equation}
where $W_{0}$ and $W_{-1}$ are, principal real and unique non-principal real branches of the Lambert-W function respectively.
We have to detemine \textit{which of these solutions yields a  $y$ in the unit interval}.

The function $g(z)=z\log z$ maps the unit interval $(0,1)$ onto the interval $(0,-e^{-1})$ in a two-to-one fashion:
namely, $g(z)$ decreases on the interval $(0,e^{-1})$ from the value $0$ (which is how we define $0\log 0$ by convention) to the value $-e^{-1}$.
At that point, the graph of $g(z)$ has a critical point (minimum) and thereafter increases on the interval $(e^{-1},1)$
from the value $-e^{-1}$ to the value $0$.  

The two points $z,z'$ on opposite sides of the critical point $e^{-1}$ which
map to the same value $g(z)=g(z')$ satisfy
\begin{equation}\label{eqn:zzprime}
 \frac{\log z}{\log z'} = \frac{z'}{z}.
\end{equation}
Since $g(z)=e^{\log z}\log z=(\log z)e^{\log z}$, the two compositions $W_0(g(z))$  and $W_{-1}(g(z))$ must equal $\log z$
and $\log z'$.  Thus we obtain the two solutions to \eqref{eqn:yzeqn}:
\[
 y=\frac{\log(z)}{\log(z)}=1,\quad y=\frac{\log(z')}{\log(z)}=\frac{z}{z'}.
\]
where the latter equality follows from \eqref{eqn:zzprime}.  In order for $y\leq 1$ to be satisfied, 
we must have $z\leq z'$.  By the above description of $g(z)$, $z$ must lie in the interval $[0,e^{-1}]$.
According to the change ofvariable \eqref{eqn:Iprojectionchangeofvariables} the point $z=e^{-1}$ corresponds uniquely to
\[
 t_0=\frac{1-e^{-1}}{4(1+e^{-1})}\approx 0.115529289315002.
\]

The trivial solution $y=1$ corresponds to the critical point at $x=\frac{1}{2}$, and this is a critical point for every choice of $t$.
In addition, we have shown that there is one additional critical point $x$, located in $(\frac{1}{2},1)$, if and only if $t>t_0\approx 0.115529289\ldots$, 
so by symmetry there is also a critical point at $1-x$.  Since for $t<t_0$, the critical point at $x=\frac{1}{2}$ is the only critical point,
it is easy to see that this unique critical point is a minimum, so the $I$-projection is the uniform distribution as claimed.
\end{proof}
\begin{rem}
In the above proof, it is possible to verify that the non-trivial solution comes from the principal branch $W_0$ of the Lambert-W function in \eqref{eqn:yW},
while the non-trivial non-solution, that is the $y>1$ values in the $z$-interval $[e^{-1},1]$ comes from the non-principal branch $W_{-1}$.
\end{rem}
\begin{figure}[htb]
 \centering
 \hspace*{-3em}
\includegraphics[scale = 0.7, trim = 0mm 0mm 0mm 0mm, clip]{./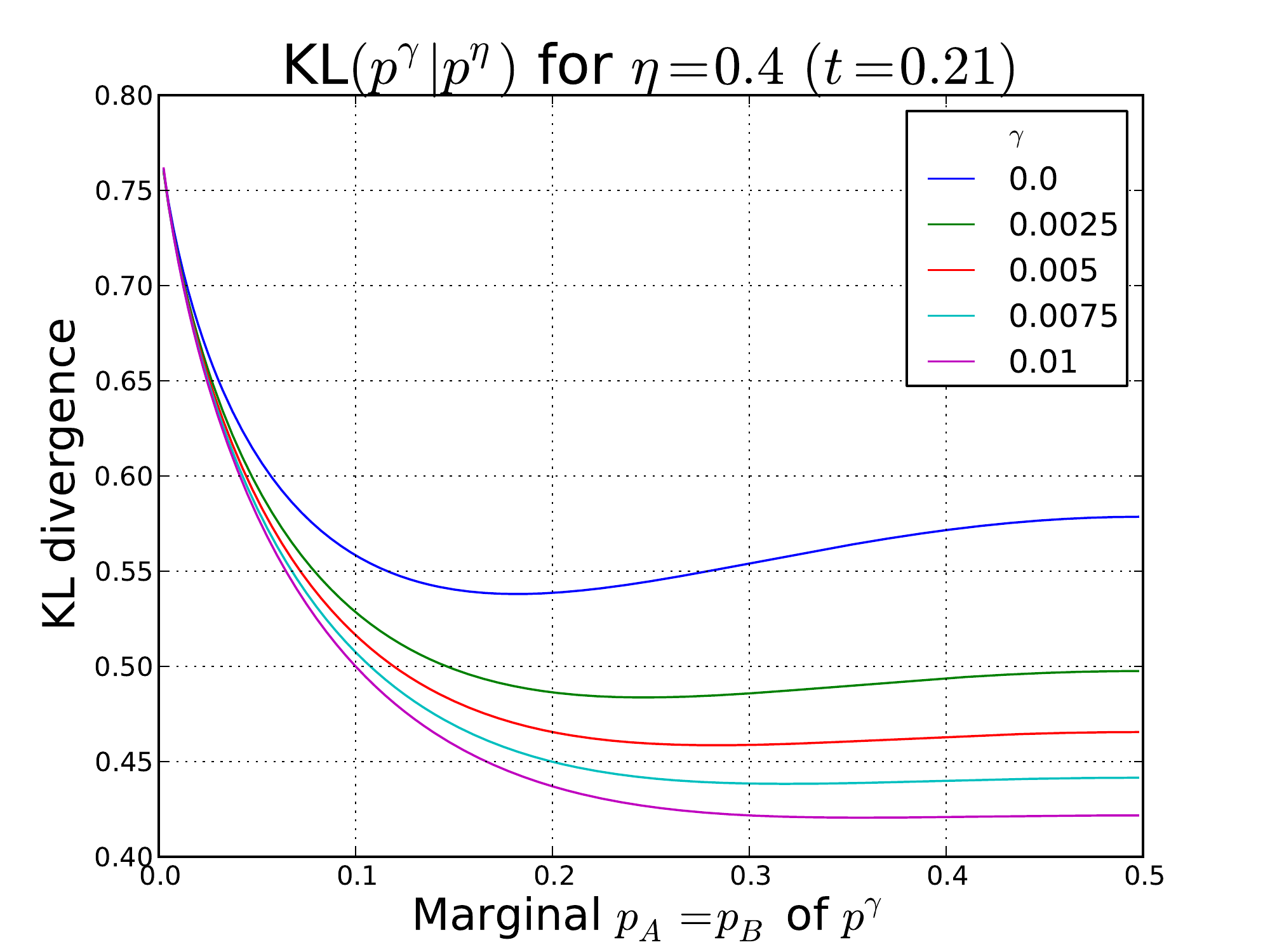}\\
\caption{Dependence on $\gamma$ of I-projection of $p^\eta$ onto $A^{\gamma}_0$}
\label{fig:IProjection} 
\end{figure}
We conclude by giving some empirical evidence for the truth of Conjecture \ref{conj:Iprojection}.  In Figure
\ref{fig:IProjection}, the marginal $p_A=p_B$ of the I-projection
of $p^\gamma$ onto $A_{0}^\gamma$ (i.e., the marginal called $x$ in the above proof) corresponds
to the minimum point of the KL-divergence curve.  We have considered the case
of $\eta=0.4$, which is very nearly the case considered in \textbf{Example 8.14}, p. 275 of \cite{koller}.  In
that example, they consider only the case $\gamma=0$ and state only that there are two distinct minima
of the function $\mathrm{KL}(p(x),p^\eta)$.  Here we can see from the shape of the curve the qualitative
reason for this behavior: the KL-divergence of $p^\gamma$ from $p^\eta$
always decreases very quickly as $p_A$ moves towards the central value of $0.5$
away from the wall, but when $p^\eta$ is sufficiently far from the uniform distribution
$\left(\frac{1}{2},\frac{1}{2}\right)$, the KL divergence levels off when $p_A$
moves close enough to $1/2$, that is to say when $p^\gamma$ moves close enough to the uniform.
When in this case $\gamma$ is sufficiently small, this behavior is pronounced
enough to lead to an increase in $\mathrm{KL}(p(x),p^\eta)$ after $x$ gets close enough to $0.5$.
Thus, for example, in the cases of $\gamma=0.0$ and $\gamma=0.0025$, we can see a relative
minimum when $x$ is roughly $0.15$ and roughly $0.25$, respectively.  By the time $\gamma$ is $0.005$,
we no longer see this behavior occurring strongly enough to lead to a relative minimum.

This then is the basis for believing that Conjecture \ref{conj:Iprojection} is true for all $t_{\eta}^+$ smaller than the value
stated in the Conjecture: the failure of the statement in the conjecture is associated with the existence of a 
relative minimum of the function $\mathrm{KL}(p(x),p^\eta)$ for some $x\in (0,0.5)$.  The combination
of empirical experiments (a sample of which is shown in Figure \ref{fig:IProjection}) and heuristic
reasoning shows that the relatve minimum has the ``best chance" of appearing for $\gamma=0$,
and Proposition \ref{prop:Iprojection} shows that no such relative minimum exists for $\gamma=0$
for $t_{\eta}^+$ in the stated range.  Thus, we expect that no relative minima exist for any $\gamma>0$
 for $t_{\eta}^+$ in the stated range.

\chapter{Background on Entropy and Log Likelihood}\label{app:relEntropyLogLikelihood}
We now recall some well-known notions related to the log-likelihood term and estimates of this term.
Experts in the field will probably find it possible to skip the entirety of this section with the exception
of the characterization of the error-tolerance $\zeta$ as a KL-divergence at the very end of the section.
We first recall the notation for conditional entropy.  When $\mathcal{Y},\mathcal{Z}$ are distinct
subsets of the variable set $\mathcal{X}$, and $P$ is a (joint) probability
distribution over $\mathcal{X}$, we use
\begin{equation}\label{eq:relativeEntInTermsOfOrdinary}
H_P(\mathcal{Z}\lvert \mathcal{Y}):=H_P(\mathcal{Z},\mathcal{Y}) - H_P(\mathcal{Y})
\end{equation}
where $H_P(\mathcal{Z},\mathcal{Y})$ and $H_P(\mathcal{Y})$ are the (ordinary) entropies of the probability distributions
on the subsets $\mathcal{Z}\cup \mathcal{Y}$ and $\mathcal{Y}$, induced by $P$.
\nomenclature{$H_P(\mathcal{Z}\lvert \mathcal{Y})$}{$H_P(\mathcal{Z},\mathcal{Y}) - H_P(\mathcal{Y})$}

There is a well-known decomposition of the $\mathrm{LL}(\omega_{N},G)$,
for any Bayesian network $G$, in terms of conditional entropy terms:
\begin{equation}\label{eqn:LLrelativeentropyExpansion}
\mathrm{LL}(\omega_{N},G)=-N\sum_{i=1}^{n}H_{p_{\omega_{N}}}(X_{i}|\mathrm{Pa}_{G}(X_{i})),
\end{equation}
where $\mathrm{Pa}_{G}(X_{i})$ is the parent set of node $X_i$ in the DAG $G$.
\nomenclature{$\mathrm{LL}(\omega_{N},G)$}{Log likelihood of the data $\omega_N$ under the network $(G,P)$ for optimal distribution $P$ which is an I-map for $G$}

The expansion \eqref{eqn:LLrelativeentropyExpansion} allows us to define
a more general version of the log-likelihood for \textit{any} distribution $B$
in place of the empirical distribution $\hat{p}(\omega_N)$.  Namely, we may use $B$ to evaluate
each of the conditional entropy terms,
\begin{equation}\label{eqn:LLrelativeentropyExpansionIdealized}
\mathrm{LL}^{(I)}_N(B,G) = -N\sum_{i=1}^n H_B(X_i \lvert  \mathrm{Pa}_G(X_i)).
\end{equation}
This expression is known as an \textit{idealized} log-likelihood, because for many values of $N$
it may not be possible to realize $B$ as an empirical distribution $\hat{p}(\omega_N)$.
\nomenclature{$\mathrm{LL}^{(I)}_N(B,G)$}{Idealized log-likelihood for arbitrary probability distribution; $$-N\sum_{i=1}^n H_B(X_i \lvert \mathrm{Pa}_G(X_i))$$}

It is well-known that a probability distribution $B$ is an $I$-map
for $G$ (that is to say, $B$ satisfies the independence relationships expressed by the DAG $G$),
if and only if $B$ factors as follows:
\begin{equation}\label{eqn:LLrelativeentropyExpansionParents}
 B(X)=\prod_{i=1}^n B_i(X_i|\mathrm{Pa}_G X_i)
\end{equation}
for some collection of CPD's $B_i(X_i|\mathrm{Pa}_G X_i)$.  It is well-known from the study of BN learning
as optimization that $LL(\omega_N,G)$ is the maximum
of $LL(\omega_N,p)$ among all the distributions $p$ factoring according to $G$.
Following e.g. p. 5 of \cite{Friedman:UAI96}) we know that this maximum is achieved by
$p=p_{G,\omega_N}$, where
\[
p_{G,\omega_N}(\mathcal{X})=\prod_{i=1}^n \hat{P}_{\omega_N,i}(X_i|\mathrm{Pa}_G x_i)
\]
and where the parameters of the factor CPD's are equal to the parameters
induced by the appropriately marginalized frequency counts of $\omega_N$. 
\nomenclature{$p_{G,\omega_N}$}
{Argmax of $p\mapsto LL(\omega_N,p)$ among distributions $p$
which factor according to $G$; M-projection of $p_{\omega}$ onto the set distributions having $G$ as an I-map.}

Because \eqref{eqn:LLrelativeentropyExpansionIdealized} extends the notion,
\eqref{eqn:LLrelativeentropyExpansion}, of log-likelihood of an empirical sequence
under a network $G$ to \textit{idealized} log-likelihood of any probability distribution $B$ under $G$,
we can extend this concept naturally and ask for probability distribution $p_{G,B}$
which maximizes the ``likelihood of $B$'' under $G$ subject to the condition that $G$ is an I-map for $P_{G,B}$.

The key to defining $p_{G,B}$  is to use the following relationship (see, e.g., p. 17 of \cite{Friedman:UAI96})
between the log-likelihood of the data $\omega_N$ under an arbitrary
network $G\in\mathcal{G}$ and the KL-divergence of $\hat{p}_{\omega_N}$
based at $p_{G,\omega_N}$:
\[
\mathrm{LL}(\omega_N,G)=NH(\hat{p}(\omega_N))-NH( \hat{p}(\omega_N) \| p_{G,\omega_N} ).
\]
Since the first term (involving entropy of $\omega_N$) is fixed, and the second term has a minus sign,
it is immediately clear that $p_{\omega_N,G}$ can be characterized as the distribution
\textit{minimizing} $H( \hat{p}(\omega_N) \| p_{G,\omega_N})$ (among those which factorize according to $G$). 
More generally, the same computation yields, for any probability distribution over $\mathcal{X}$,
\begin{equation}\label{eqn:idealLikelihoodDecomposition}
\mathrm{LL}^{(I)}_N(B,G) = NH(B)-NH( B \| p_{G,B} ),
\end{equation}
where the distribution $p_{G,B}$ is defined by
\[
p_{G,B}(X)=\prod_{i=1}^n p_{B}(X_i|\mathrm{Pa}_G X_i),
\]
and where the factor distributions in the product on the right-hand side
are the conditional joint factors induced by the restriction
and marginalization of $B$. 
That is to say, among all $p$ which factor according to $G$, $p_{G,B}$ minimizes the KL-divergence $H( B | p_{G,B} )$
and also simultaneously is the distribution
for which $ NH(B)-NH( B \| p )$ achieves its maximum value of $\mathrm{LL}^{(I)}_N(B,G)$.%
\nomenclature{$p_{G,B}$}
{Argmin of $p\mapsto H(B \lvert p)$ among distributions $p$
which factor according to $G$; M-projection of $B$ onto the set distributions having $G$ is an I-map.}

For $B$ any probability distribution and $\mathcal{G}$ any family of BN's, and any $\zeta>0$ we make the definitions
\begin{itemize}
 \item \boldmath $\mathcal{B}\;$\unboldmath is the set of Bayesian networks of the form $(G,P_{G,B})$ for all $G\in \mathcal{G}$.
 \item \boldmath $\mathcal{B}_{\zeta}\,$\unboldmath is the subset of $\mathcal{B}$ consisting of those $(G,P_{G,B})$ for which
 $P_{G,B}=B$ or for which $H(B\| P_{G,B})>\zeta$.
\end{itemize}
It is clear that if $B$ (or $P({\omega_N})$ for that matter) happens to
factorize according to $G$, then $P_{G,B} = B$ (resp. $P_{\omega_N,G}=P(\omega_N)$).
This explains why we have included the $(G,P_{G,B})$ for which
$P_{G,B}=B$ in the definition of $\mathcal{B}_{\zeta}$: we are mostly interested in applying the definition
when $B$ is the probability distribution for an unknown underlying Bayesian network
from which a sequence $\omega_N$ of samples is drawn.  Then our finite-sample complexity
theorem will give conditions under which the scoring function assigns its highest score to
the true network $(G,B)$, among all networks in  $\mathcal{B}_{\zeta}$, where $\zeta$ is an error
tolerance that is or may be under the control of the experimenter (learner).

\textbf{In order to understand the proofs of our main results, the reader has to keep in mind
only the following fact concerning $\mathcal{B}$ and $\mathcal{B}_{\zeta}$}
\begin{quotation}  The complement $\mathcal{B}\backslash \mathcal{B}_{\zeta}$ consists
of those BN's $(G,P)$ such that $H(B\| P)<\zeta$ and $P$ is the M-projection of $B$ onto the set of networks which
are I-maps for $B$.
\end{quotation}

\begin{rem}  The quantity $\zeta$, not our $\epsilon$, corresponds to the $\epsilon$
of \cite{Friedman:UAI96}.  Our $\epsilon$ is more closely (though somewhat more general) than the minimium ``information content''
considered in \cite{DBLP:conf/uai/ZukMD06}.
\end{rem}

 \begin{proposition}\label{prop:plogpPositiveDeviation}
Let $\epsilon\geq 0$, $p\in [0,1]$.  Denote by $\tilde{p}_N\in(0,1)$ the empirical average of a sequence drawn from the Bernoulli distribution $X(p)$.  
\boldmath \textbf{Assume that $\tilde{p}_N > p$.}\unboldmath.  Then we have 
 \[
  \mathrm{Pr}\left\{ 
\left|
\tilde{p}_N\log\tilde{p}_N-p\log p
\right|\geq \epsilon
 \right\}\leq 5\mathrm{exp}\left(
 \frac{-N\epsilon^2}
 {18}
 \right)
 \]
\end{proposition}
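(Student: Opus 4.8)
\textbf{Proof proposal for Proposition \ref{prop:plogpPositiveDeviation}.}

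The plan is to mimic, in a simplified form, the argument of Proposition \ref{prop:plogpNegativeDeviation}, exploiting the one-sided hypothesis $\tilde{p}_N > p$ to avoid the delicate case analysis near $0$. Set $f(x) = x\log x$, so that $f'(x) = 1 + \log x$ and $f''(x) = 1/x > 0$ on $(0,1)$. Since $\tilde{p}_N > p$, the interval with endpoints $p$ and $\tilde{p}_N$ lies inside $(p,1)$, and by convexity of $f$ (equivalently monotonicity of $f'$) the maximum of $|f'|$ on that interval is attained at one of the endpoints. First I would use the elementary inequality \eqref{eqn:tangencyInequality}, $\log x \leq x - 1$, together with the Mean Value Theorem to control $|f(\tilde{p}_N) - f(p)|$ in terms of $|\tilde{p}_N - p|$ by a bound that is \emph{independent of $p$}: the key observation is that for $x \in (p,1)$ one has $|f'(x)| = |1 + \log x|$, and when $x > p$ the factor $1+\log x$ can be bounded using $\log x \leq x - 1 \leq 0$ so that $|1 + \log x| \leq 1 + |\log x|$, and the troublesome behaviour $f'(x) \to -\infty$ only occurs as $x \to 0^+$, which the hypothesis $\tilde{p}_N > p$ partially shields us from — but not entirely, since $p$ itself can be near $0$.

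The cleaner route, and the one I would actually pursue, is to bound $|f(\tilde p_N) - f(p)|$ directly. Write $f(\tilde p_N) - f(p) = \tilde p_N \log \tilde p_N - p \log p$. Since $\tilde p_N > p$, add and subtract $\tilde p_N \log p$ to get $f(\tilde p_N) - f(p) = \tilde p_N \log(\tilde p_N / p) + (\tilde p_N - p)\log p$. The first term is nonnegative; using $\log(\tilde p_N/p) \leq \tilde p_N/p - 1$ gives $\tilde p_N \log(\tilde p_N/p) \leq \tilde p_N(\tilde p_N/p - 1) = (\tilde p_N/p)(\tilde p_N - p)$. For the second term, $|(\tilde p_N - p)\log p|$ — here $\log p$ can be large, but it is multiplied by $\tilde p_N - p$, and one can absorb $|p\log p| \leq e^{-1}$ after dividing appropriately; more precisely I expect the bound $|f(\tilde p_N) - f(p)| \leq C \cdot |\tilde p_N - p| / p \cdot (\text{bounded factor})$ to emerge, leading to an implication of the form $|f(\tilde p_N) - f(p)| \geq \epsilon \Rightarrow |1 - \tilde p_N/p| \geq \epsilon/C$ for a universal constant $C$ (here $C$ should work out to give the $18$ in the denominator — roughly $C = 3$ so that $\epsilon^2/(3C^2) = \epsilon^2/27$, but the exact constant chase determines whether one lands on $18$). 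Then I would invoke Lemma \ref{lem:ChernoffApplication}, the two-sided multiplicative Chernoff bound, which gives $\mathrm{Pr}\{|1 - \tilde p_N/p| \geq \epsilon/C\} \leq 2\exp(-Np(\epsilon/C)^2/3)$, and crucially the $p$ in the exponent only helps (makes the bound smaller) when $\tilde p_N > p$ forces the deviation; but since we also need a $p$-independent bound, I would handle the regime of very small $p$ separately, noting that if $p$ is below a threshold then $|f(p)|$ is already tiny and $f(\tilde p_N) > f(p)$ combined with $f(\tilde p_N) < 0$ constrains things — this is exactly the trick used via $W_{-1}$ in Proposition \ref{prop:plogpNegativeDeviation}, but here it should be simpler and should not require the Lambert-$W$ function at all, which is presumably why the bound is stated in the clean closed form $5\exp(-N\epsilon^2/18)$.

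The main obstacle I anticipate is the bookkeeping that produces the clean constants $5$ and $18$: one must split into sub-cases (e.g. $\tilde p_N \geq 2p$ versus $p < \tilde p_N < 2p$, or thresholding $p$ against some constant) and then union-bound, choosing the split points so that each piece yields an exponent no worse than $-N\epsilon^2/18$ and the total prefactor is at most $5$. A secondary subtlety is that $\epsilon$ is unrestricted ($\epsilon \geq 0$ rather than $\epsilon \leq$ some bound), so one must check the bound is not vacuous but merely weak for large $\epsilon$ — which is automatic since $|f(\tilde p_N) - f(p)| \leq e^{-1} < 1$ always, making the probability zero once $\epsilon \geq 1$, comfortably consistent with the stated estimate. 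Once the $p$-independent implication $\{|f(\tilde p_N) - f(p)| \geq \epsilon\} \subseteq \{|1 - \tilde p_N/p| \geq c\epsilon\}$ is established on each sub-case with the right $c$, the Chernoff application and union bound are routine, and I would close by remarking that this one-sided estimate is not used in the main text and is recorded only for context, as noted before Proposition \ref{prop:plogpNegativeDeviation}.
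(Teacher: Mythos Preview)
Your overall plan---decompose $f(\tilde p_N)-f(p)$, feed each piece into the multiplicative Chernoff bound, and union-bound---is the paper's plan too, but the specific decomposition you propose is not the one the paper uses, and the difference is exactly what resolves the obstacle you correctly identify (the stray $p$ in the Chernoff exponent). You split into two terms, $\tilde p_N\log(\tilde p_N/p)$ and $(\tilde p_N-p)\log p$, and then anticipate needing a further case analysis on the size of $p$ or of $\tilde p_N/p$ to kill the $p$-dependence. The paper instead performs one more algebraic split and avoids any case analysis: after writing
\[
|f(\tilde p_N)-f(p)|\le |p\log p|\,\Bigl|1-\tfrac{\tilde p_N}{p}\Bigr| \;+\; \tilde p_N\Bigl|\log\tfrac{\tilde p_N}{p}\Bigr|,
\]
it further splits the second summand as $(\tilde p_N-p)\bigl|\log(\tilde p_N/p)\bigr| + p\bigl|\log(\tilde p_N/p)\bigr|$, giving three events $E_1,E_2,E_3$, one of which must exceed $\epsilon/3$.

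The point of the extra split is that in each of the three resulting Chernoff applications the $p$-dependence either cancels or helps. For $E_1$ the exponent is $-N\epsilon^2/(27\,p|\log p|^2)$, and one uses the elementary fact that $p|\log p|^2\le 4e^{-2}$ on $(0,1)$. For $E_2$ one first proves $(\tilde p_N-p)\bigl|\log(\tilde p_N/p)\bigr|\le p\,(1-\tilde p_N/p)^2$ via \eqref{eqn:tangencyInequality} applied in both directions, so the event becomes $|1-\tilde p_N/p|\ge\sqrt{\epsilon/(3p)}$ and Chernoff yields $2\exp(-N\epsilon/9)$, the $p$ cancelling exactly. For $E_3$---the only place the one-sided hypothesis $\tilde p_N>p$ is actually invoked---one has $p\log(\tilde p_N/p)\le \tilde p_N-p$, so the event becomes $\tilde p_N/p-1\ge \epsilon/(3p)$ and Chernoff part (a) gives $\exp(-N\epsilon^2/(18p))\le\exp(-N\epsilon^2/18)$, the $p$ now in the denominator. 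The constants then drop out: $5=2+2+1$ from the three Chernoff prefactors, and $18$ from $E_3$, which carries the worst exponent for small $\epsilon$.

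Your single-implication target $\{|f(\tilde p_N)-f(p)|\ge\epsilon\}\subseteq\{|1-\tilde p_N/p|\ge c\epsilon\}$, applied to the unsplit term $\tilde p_N\log(\tilde p_N/p)$, leaves a bare factor of $p$ in the Chernoff exponent (try it: $\tilde p_N\log(\tilde p_N/p)\ge\epsilon/2$ and $\tilde p_N\le 1$ give only $\log(\tilde p_N/p)\ge\epsilon/2$, hence $\tilde p_N/p-1\ge\epsilon/2$, hence $\exp(-Np\epsilon^2/8)$). The paper's further split into $E_2$ and $E_3$ is precisely the device that sidesteps this, and it does so without any thresholding on $p$ and without the Lambert-$W$ function---so your instinct that the one-sided bound should be cleaner than Proposition~\ref{prop:plogpNegativeDeviation} is vindicated, but by a different mechanism than the one you outline.
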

\begin{proof}
By elementary estimates,
\begin{align*}
|p\log p-\tilde{p}_{N}\log\tilde{p}_{N}| & =|p\log p-\tilde{p}_{N}\log p+\tilde{p}_{N}\log p-\tilde{p}_{N}\log\tilde{p}_{N}|\\
 & \leq|p\log p-\tilde{p}_{N}\log p|+|\tilde{p}_{N}\log p-\tilde{p}_{N}\log\tilde{p}_{N}|\\
 & =|p\log p|\left|1-\frac{\tilde{p}_{N}}{p}\right|+\tilde{p}_{N}|\log p-\log\tilde{p}_{N}|\\
 & \leq|p\log p|\left|\frac{\tilde{p}_{N}}{p}-1\right|+|\tilde{p}_{N}-p|\left|\log\frac{\tilde{p}_{N}}{p}\right|+p\left|\log\frac{\tilde{p}_{N}}{p}\right|.
\end{align*}
We now define each of the three events
\[
E_1 = \left\{ |p\log p|\left|\frac{\tilde{p}_{N}}{p}-1\right|\geq \frac{\epsilon}{3} \right\},
\]

\[
E_2= \left\{   |\tilde{p}_{N}-p|\left|\log\frac{\tilde{p}_{N}}{p}\right| \geq \frac{\epsilon}{3}  \right\},
\]

\[
E_3= \left\{ p\left|\log\frac{\tilde{p}_{N}}{p}\right| \geq \frac{\epsilon}{3} \right\},
\]
and then use the union bound.

By applying Lemma \ref{lem:ChernoffApplication} (stated below in Section \ref{sec:TechnicalLemmas}) directly to $E_1$, we obtain 
\[
\mathrm{Pr}E_1\leq 2\mathrm{exp}\left( \frac{-N\epsilon^2}{27p|\log p|^2} \right).
\]
Since $p\in (0,1)$, we have, first that $p^{-1}\geq 1$ and second that $\frac{1}{p|\log p|^2}$
has its minimum (in the unit interval) of $\frac{e^2}{4}$ at $e^{-2}$.  Therefore,
\begin{equation}\label{eqn:E1estimate}
\mathrm{Pr}E_1\leq 2\mathrm{exp}\left( \frac{-Ne^2\epsilon^2}{4\cdot 27} \right).
\end{equation}
In order to estimate the probability of $E_2$, we first prove the estimate
\begin{equation}\label{eqn:E2estimatePreliminary}
 |\tilde{p}_{N}-p|\left|\log\frac{\tilde{p}_{N}}{p}\right|\leq p\left|
1-\frac{\tilde{p}_N}{p}
\right|^2.
\end{equation}
There are two cases: $\tilde{p}_N>p$ and $p>\tilde{p}_N$.  In case $\tilde{p}_N>p$, 
\[
  |\tilde{p}_{N}-p|\left|\log\frac{\tilde{p}_{N}}{p}\right|=(\tilde{p}_{N}-p)\log\frac{\tilde{p}_{N}}{p}\leq p\left(\frac{\tilde{p}_{N}}{p}-1\right)^2=
  p\left|
1-\frac{\tilde{p}_N}{p}
\right|^2.
\]
In case $p>\tilde{p}_N$, 
\[
\begin{aligned}
  |\tilde{p}_{N}-p|\left|\log\frac{\tilde{p}_{N}}{p}\right|&=(-(\tilde{p}_{N}-p))\left(-\log\frac{\tilde{p}_{N}}{p}\right)\\
  &=(\tilde{p}_{N}-p)\left(\log\frac{\tilde{p}_{N}}{p}\right)\leq p\left(\frac{\tilde{p}_N}{p}-1\right)^2= p\left|
1-\frac{\tilde{p}_N}{p}
\right|^2.
\end{aligned}
\]
Because of \eqref{eqn:E2estimatePreliminary},
\[
\mathrm{Pr}E_2 \leq \mathrm{Pr}\left\{
p\left|
1-\frac{\tilde{p}_N}{p}
\right|^2\geq \frac{\epsilon}{3}\right\}=\mathrm{Pr}\left\{
\left|
1-\frac{\tilde{p}_N}{p}
\right|\geq \sqrt{\frac{\epsilon}{3p}}
\right\},
\]
so that by Lemma \ref{lem:ChernoffApplication}, 
\begin{equation}\label{eqn:E2estimate}
\mathrm{Pr}E_2\leq 2\exp\left(\frac{-N\epsilon}{9}\right)
\end{equation}
In order to estimate $\mathrm{Pr}E_3$, we first use the assumption $\tilde{p}>p$ to eliminate the absolute value signs around the logarithm, then use \eqref{eqn:tangencyInequality}:
\[
\mathrm{Pr}E_3=\mathrm{Pr}\left\{\log\frac{\tilde{p}_{N}}{p} \geq \frac{\epsilon}{3p} \right\}\leq
\mathrm{Pr}\left\{\frac{\tilde{p}_{N}}{p}-1\geq \frac{\epsilon}{3p} \right\}=\mathrm{Pr}\left\{  \frac{\tilde{p}_{N}}{p}  \geq 1+ \frac{\epsilon}{3p} \right\}
\]
So by Lemma \ref{lem:ChernoffMultiplicative}(a), 
\begin{equation}
\label{eqn:E3estimate}
\mathrm{Pr}E_3\leq \mathrm{exp}\left( \frac{-N\epsilon^2}{18p} \right)\leq \mathrm{exp}\left( \frac{-N\epsilon^2}{18} \right),
\end{equation}
where again we have used the condition that $p^{-1}>1$.
Among the three terms on the right side of \eqref{eqn:E1estimate}, \eqref{eqn:E2estimate}, \eqref{eqn:E3estimate}, the one with the smallest coefficient of $-N$ inside the exponent
 is \eqref{eqn:E3estimate}, so that, the other terms can be estimated by $2$ times this term.  We conclude from these three just cited estimates and the union bound that
 \[
 \mathrm{Pr}|\tilde{p}_N\log \tilde{p}- p\log p|\leq 5\mathrm{exp}\left(
 \frac{-N\epsilon^2}
 {18}
 \right)
 \]
\end{proof}

\end{appendix}
\bibliographystyle{alpha}   
\bibliography{structureLearningThesis}  
\end{document}